\newcommand{\ie}{\emph{i.e.}}
\newcommand{\hatj}{{\hat{\jmath}}}
\newcommand{\hati}{{\hat{\imath}}}
\newcommand{\eg}{\emph{e.g.}}
\newcommand{\as}{~~\text{a.s.}}
\newcommand{\fro}{\text{F}}
\newcommand{\x}{{\mathbf x}}
\newcommand{\hatx}{{\mathbf {\hat x}}}
\newcommand{\s}{{\mathbf s}}
\renewcommand{\c}{{\mathbf c}}
\renewcommand{\b}{{\mathbf b}}
\newcommand{\e}{{\mathbf e}}
\newcommand{\z}{{\mathbf z}}
\renewcommand{\r}{{\mathbf r}}
\newcommand{\y}{{\mathbf y}}
\renewcommand{\d}{{\mathbf d}}
\newcommand{\q}{{\mathbf q}}
\newcommand{\X}{{\mathbf X}}
\newcommand{\Y}{{\mathbf Y}}
\newcommand{\C}{{\mathbf C}}
\newcommand{\Q}{{\mathbf Q}}
\newcommand{\D}{{\mathbf D}}
\newcommand{\R}{{\mathbf R}}
\newcommand{\W}{{\mathbf W}}
\newcommand{\w}{{\mathbf w}}
\newcommand{\CC}{{\mathcal C}}
\renewcommand{\SS}{{\mathbb S}}
\newcommand{\WW}{{\mathcal W}}
\renewcommand{\AA}{{\mathcal A}}
\newcommand{\PP}{{\mathcal P}}
\newcommand{\PPP}{{\mathbb P}}
\newcommand{\KK}{{\mathcal K}}
\newcommand{\NN}{{\mathcal N}}
\newcommand{\MM}{{\mathcal M}}
\newcommand{\GG}{{\mathcal G}}
\newcommand{\LL}{{\mathcal L}}
\newcommand{\U}{{\mathbf U}}
\renewcommand{\u}{{\mathbf u}}
\renewcommand{\v}{{\mathbf v}}
\newcommand{\V}{{\mathbf V}}
\newcommand{\st}{~~\text{s.t.}~~}
\renewcommand{\S}{{\mathbf S}}
\newcommand{\SSS}{{\mathcal S}}
\newcommand{\EE}{{\mathbb E}}
\newcommand{\A}{{\mathbf A}}
\newcommand{\B}{{\mathbf B}}
\newcommand{\I}{{\mathbf I}}
\newcommand{\M}{{\mathbf M}}
\newcommand{\alphab}{{\boldsymbol{\alpha}}}
\newcommand{\varepsilonb}{{\boldsymbol{\varepsilon}}}
\newcommand{\Gammab}{{\boldsymbol{\Gamma}}}
\newcommand{\gammab}{{\boldsymbol{\gamma}}}
\newcommand{\hatalphab}{{\boldsymbol{\hat{\alpha}}}}
\newcommand{\kappab}{{\boldsymbol{\kappa}}}
\newcommand{\etab}{{\boldsymbol{\eta}}}
\newcommand{\betab}{{\boldsymbol{\beta}}}
\newcommand{\hatbetab}{{\boldsymbol{\hat{\beta}}}}
\newcommand{\thetab}{{\boldsymbol{\theta}}}
\newcommand{\Sigmab}{{\boldsymbol{\Sigma}}}
\newcommand{\mub}{{\boldsymbol{\mu}}}
\newcommand{\Real}{{\mathbb R}}
\newcommand{\ones}{{\mathbf 1}}
\newcommand\defin{\triangleq}
\newcommand\diag{\text{diag}}
\newcommand\rank{\text{rank}}
\tikzstyle{varempty}=[circle,thick,draw=black!75,fill=blue!0,minimum size=10mm]
\tikzstyle{varfull}=[circle,thick,draw=black!75,fill=blue!20,minimum size=10mm]
\tikzstyle{arrow}=[->,very thick]
\tikzstyle{group}=[rectangle,thick,draw=red,rounded corners]
\tikzstyle{varempty2}=[rectangle,thick,draw=black,minimum size=4mm]
\tikzstyle{varfull2}=[rectangle,thick,draw=black,fill=black!60,minimum size=4mm]
\tikzstyle{sep}=[rectangle,draw=black,line width=2pt,minimum width=3cm,scale=0.2]
\def\distnode{1.5cm}
\def\distnodeb{0.4cm}
\DeclareMathOperator{\sign}{sign}
\definecolor{darkgreen}{rgb}{0,0.3,0.0}
\DeclareMathOperator*{\argmin}{arg\,min}
\DeclareMathOperator*{\argmax}{arg\,max}
\DeclareMathOperator{\trace}{trace}
\DeclareMathOperator{\mat}{Mat}
\renewcommand\complement{\textrm{c}}
\title{Sparse Modeling for Image and \\ Vision Processing}
\author{
Julien Mairal \\
Inria\footnote{LEAR team, laboratoire Jean Kuntzmann, CNRS, Univ. Grenoble Alpes, France.} \\
julien.mairal@inria.fr
\and
Francis Bach \\
Inria\footnote{SIERRA team, d\'epartement d'informatique de l'Ecole Normale Sup\'erieure, ENS/CNRS/Inria UMR 8548, France.} \\
francis.bach@inria.fr
\and
Jean Ponce \\
Ecole Normale Sup\'erieure\footnote{WILLOW team, d\'epartement d'informatique de l'Ecole Normale Sup\'erieure, ENS/CNRS/Inria UMR 8548, France.}  \\
jean.ponce@ens.fr
}
\begin{document}

\copyrightowner{J.~Mairal, F.~Bach and J.~Ponce}

\frontmatter  

\maketitle

\tableofcontents

\mainmatter

\begin{abstract}
In recent years, a large amount of multi-disciplinary research has been
conducted on sparse models and their applications. In statistics and machine
learning, the sparsity principle is used to perform model selection---that is,
automatically selecting a simple model among a large collection of them. In
signal processing, sparse coding consists of representing data with linear
combinations of a few dictionary elements. Subsequently, the corresponding tools
have been widely adopted by several scientific communities such as
neuroscience, bioinformatics, or computer vision. The goal of
this monograph is to offer a self-contained view of sparse modeling for visual
recognition and image processing. More specifically, we focus on applications
where the dictionary is learned and adapted to data, yielding a compact
representation that has been successful in various contexts.

\end{abstract}

\chapter{A Short Introduction to Parsimony}
   In its most general definition, the principle of sparsity, or parsimony, consists
of representing some phenomenon with as few variables as possible. It appears
to be central to many research fields and is often considered to be inspired
from an early doctrine formulated by the philosopher and theologian William of
Ockham in the 14th century, which essentially favors simple theories over more
complex ones. Of course, the link between Ockham and the tools presented in
this monograph is rather thin, and more modern views seem to
appear later in the beginning of the 20th century. Discussing the scientific
method, \citet{wrinch1921} introduce indeed a simplicity principle related to
parsimony as follows:
\begin{quote}
  \emph{The existence of simple laws is, then, apparently, to be regarded as a quality
of nature; and accordingly we may infer that it is justifiable to prefer a
simple law to a more complex one that fits our observations slightly better.}
\end{quote}
Remarkably,~\citet{wrinch1921} further discuss statistical modeling of physical
observations and relate the concept of ``simplicity'' to the number of learning
parameters; as a matter of fact, this concept is relatively close to the
contemporary view of parsimony.  

Subsequently, numerous tools have been developed by statisticians to build models of physical phenomena with good
predictive power. Models are usually learned from observed data, and their
generalization performance is evaluated on test data. Among a collection of
plausible models, the simplest one is often preferred, and the number of
underlying parameters is used as a criterion to perform model
selection~\citep[][]{mallows1964,mallows1966,akaike,hocking1976,barron,rissanen,schwarz,tibshirani}.

In signal processing, similar problems as in statistics arise, but a different
terminology is used. Observations, or data vectors, are called ``signals'', and
data modeling appears to be a crucial step for performing various
operations such as restoration, compression, or for solving inverse problems.
Here also, the sparsity principle plays an important role and has been
successful~\citep{mallat,pati1993,donoho1994,cotter,chen,donoho2006,candes2006}.
Each signal is approximated by a sparse linear combination of prototypes called
dictionary elements, resulting in simple and compact models. 

However, statistics and signal processing remain two distinct fields with
different objectives and methodology; specifically, signals often come from the
same data source, \eg, natural images, whereas problems considered in
statistics are unrelated to each other in general. 
Then, a long series of works has been devoted to
finding appropriate dictionaries for signal classes of interest, leading to various
sorts of
wavelets~\citep{freeman1991,simoncelli1992,donoho1998,candes2002,do2005,lepennec2005,mallat2008}.
Even though statistics and signal processing have devised most of the
methodology of sparse modeling, the parsimony principle was also discovered
independently in other fields. To some extent, it appears indeed in the work
of~\citet{markowitz} about portfolio selection in finance, and also in
geophysics~\citep{claerbout1973,taylor1979}.

In neuroscience,~\citet{field1996,olshausen1997} proposed a significantly
different approach to sparse modeling than previously established practices.
Whereas classical techniques in signal processing were using fixed
off-the-shelf dictionaries, the method of~\citet{field1996,olshausen1997}
consists of learning it from training data. In a pioneer exploratory experiment,
they demonstrated that dictionary learning could easily discover underlying
structures in natural image patches; later, their approach found numerous
applications in many fields, notably in image and audio
processing~\citep{lewicki2002,elad2006,mairal2009,yang2010} and computer
vision \citep{raina2007,yang2009,zeiler2011,mairal2012,song2012,castrodad2012,elhamifar2012,pokrass2013}.

The goal of this monograph is to present basic tools of sparse modeling
and their applications to visual recognition and image processing.
We aim at offering a self-contained view combining pluri-disciplinary 
methodology, practical advice, and a large review of the literature.
Most of the figures in the paper are produced with the software
SPAMS\footnote{available here
\url{http://spams-devel.gforge.inria.fr/}.},
and the corresponding Matlab code will be provided on the first author's webpage.

The monograph is organized as follows: the current introductory section is
divided into several parts providing a simple historical view of sparse
estimation.  In Section~\ref{sec:early}, we start with early concepts of parsimony
in statistics and information theory from the 70's and 80's. We present the use of
sparse estimation within the wavelet framework in Section~\ref{sec:wavelets},
which was essentially developed in the 90's. Section~\ref{sec:l1} introduces
the era of ``modern parsimony''---that is, the~$\ell_1$-norm and its variants,
which have been heavily used during the last two decades.
Section~\ref{sec:sparsecoding} is devoted to the dictionary learning
formulation originally introduced by~\citet{field1996,olshausen1997}, which is
a key component of most applications presented later in this monograph. In
Sections~\ref{sec:compressedsensing} and~\ref{subsec:theory}, we conclude our
introductory tour with some theoretical aspects, such as the concept of
``compressed sensing'' and sparse recovery that has attracted a large attention
in recent years.

With all these parsimonious tools in hand, we discuss  the use of sparse coding and related sparse
matrix factorization techniques for discovering the underlying structure of
natural image patches in Section~\ref{chapter:patches} . Even though the task here is subjective and exploratory,
it is the first successful instance of dictionary learning; the insight gained
from these early experiments forms the basis of concrete applications presented
in subsequent sections.

Section~\ref{chapter:image} covers numerous applications of sparse models of
natural image patches in image processing, such as image denoising,
super-resolution, inpainting, or demosaicking. This section is concluded with
other related patch-modeling approaches.

Section~\ref{chapter:vision} presents recent success of sparse models for
visual recognition, such as codebook learning of visual descriptors, face
recognition, or more low-level tasks such as edge detection and classification
of textures and digits. We conclude the section with other computer vision 
applications such as visual tracking and data visualization.

Section~\ref{chapter:optim} is devoted to optimization algorithms. It presents
in a concise way efficient algorithms for solving sparse decomposition and
dictionary learning problems.

We see our monograph as a good complement of other books and monographs about
sparse estimation, which offer different perspectives, such
as~\citet{mallat2008,elad2010} in signal and image processing,
or~\citet{bach2012} in optimization and machine learning.
We also put the emphasis on the structure of natural image patches learned with
dictionary learning, and thus present an alternative view to the book
of~\citet{hyvarinen2009}, which is focused on independent component analysis.

\paragraph{Notation.}
In this monograph, vectors are denoted by bold lower-case letters and matrices by upper-case ones.
For instance, we consider in the rest of this paragraph a vector $\x$
in~$\Real^n$ and a matrix~$\X$ in~$\Real^{m \times n}$. The columns 
of~$\X$ are represented by indexed vectors~$\x_1,\ldots,\x_n$ such that we
can write~$\X=[\x_1,\ldots,\x_n]$. The~$i$-th entry of~$\x$ is denoted
by~$\x[i]$, and the $i$-th entry of the~$j$-th column of~$\X$ is represented
by~$\X[i,j]$.  For any subset $g$ of~$\{1,\ldots,n\}$, we denote by~$\x[g]$ the
vector in~$\Real^{|g|}$ that records the entries of~$\x$ corresponding to indices in~$g$.
For~$q \geq 1$, we define the~$\ell_q$-norm of~$\x$ as $\|\x\|_q \defin \left( \sum_{i=1}^n |\x[i]|^q \right)^{1/q}$,
and the~$\ell_\infty$-norm as~$\|\x\|_\infty \defin \lim_{q \to +\infty} \|\x\|_q = \max_{i=1,\ldots,n} |\x[i]|$.
For~$q < 1$, we define the~$\ell_q$-penalty as $\|\x\|_q \defin \sum_{i=1}^n |\x[i]|^q$, which, with an abuse of
terminology, is often referred to as~$\ell_q$-norm. The~$\ell_0$-penalty simply counts the number of non-zero entries
in a vector:~$\|\x\|_0 \defin \sharp\{ i \st  \x[i] \neq 0 \}$.
For a matrix~$\X$, we define the Frobenius norm~$\|\X\|_\fro=\left(\sum_{i=1}^m \sum_{j=1}^n \X[i,j]^2 \right)^{1/2}$.
When dealing with a random variable $X$ defined on a probability space, we denote its expectation by~$\EE[X]$,
assuming that there is no measurability or integrability issue.

   \section{Early concepts of parsimony in statistics}\label{sec:early}
   A large number of statistical procedures can be formulated as maximum
likelihood estimation. Given a statistical model with parameters~$\thetab$, it
consists of minimizing with respect to~$\thetab$ an objective function
representing the negative log-likelihood of observed data.
Assuming for instance that we observe independent samples~$\z_1,\ldots,\z_n$ of
the (unknown) data distribution, we need to solve
\begin{equation}
   \min_{ \thetab \in \Real^p} \left[ \LL(\thetab) \defin - \sum_{i=1}^n \log
   P_{\thetab}(\z_i) \right],\label{eq:mle}
\end{equation}
where~$P$ is some probability distribution parameterized by~$\thetab$. 

Simple methods such as ordinary least squares can be written as~(\ref{eq:mle}).
Consider for instance data points~$\z_i$ that are pairs~$(y_i,\x_i)$, with~$y_i$
is an observation in~$\Real$ and~$\x_i$ is a vector in~$\Real^p$, and assume that there exists a linear
relation~$y_i = \x_i^\top \theta + \varepsilon_i$, where~$\varepsilon_i$ is an
approximation error for observation~$i$.  Under a model where
the~$\varepsilon_i$'s are independent and identically normally distributed with
zero-mean, Eq.~(\ref{eq:mle}) is equivalent to a least square problem:\footnote{Note that the Gaussian noise assumption is not necessary to justify the ordinary least square formulation. It is only sufficient to interpret it as maximum likelihood estimation. In fact, as long as the conditional expectation~$\EE[y|\x]$ is linear, the ordinary least square estimator is statistically consistent under mild assumptions.}
\begin{displaymath}
   \min_{\thetab \in \Real^p} \sum_{i=1}^n \frac{1}{2}\left(y_i - \x_i^\top \thetab\right)^2.
\end{displaymath}
To prevent overfitting and to improve the interpretability of the learned
model, it was suggested in early work that a solution involving only a few
model variables could be more appropriate than an exact solution
of~(\ref{eq:mle}); in other words, a sparse solution involving only---let
us say---$k$ variables might be desirable in some situations.  Unfortunately,
such a strategy yields two difficulties: first, it is not clear a priori how to
choose $k$; second, finding the best subset of~$k$ variables is NP-hard in
general~\citep{natarajan}. The first issue was addressed with several
criterions for controlling the trade-off between the sparsity of the
solution~$\thetab$ and the adequacy of the fit to training data.  For the
second issue, approximate computational techniques have been proposed.

\paragraph{Mallows's $C_p$, AIC, and BIC.} For the ordinary least squares problem,
\citet{mallows1964,mallows1966} introduced the~$C_p$-statistics, later
generalized by~\citet{akaike} with the Akaike information
criterion (AIC), and then by~\citet{schwarz} with the Bayesian information
criterion (BIC). Using $C_p$, AIC, or BIC is equivalent to solving the
penalized $\ell_0$-maximum likelihood estimation problem
\begin{equation}
   \min_{ \thetab \in \Real^p} \LL(\thetab) + \lambda\|\thetab\|_0, \label{eq:l0}
\end{equation}
where~$\lambda$ depends on the chosen criterion~\citep[see][]{hastie2009},
and~$\|\thetab\|_0$ is the~$\ell_0$-penalty. Similar formulations have also
been derived by using the minimum description length (MDL) principle for model
selection~\citep{rissanen,barron}. As shown by~\citet{natarajan}, the
problem~(\ref{eq:l0}) is NP-hard, and approximate algorithms are necessary
unless~$p$ is very small, \eg, $p < 30$.

\paragraph{Forward selection and best subset selection for least squares.}
To obtain an approximate solution of~(\ref{eq:l0}), a classical approach is the
forward selection technique, which is a greedy algorithm that solves a sequence
of maximum likelihood estimation problems computed on a subset of variables.
After every iteration, a new variable is added to the subset according to the
chosen sparsity criterion in a greedy manner. Some variants allow backward
steps---that is, a variable can possibly exit the active subset after an
iteration. The algorithm is presented in more details in
Section~\ref{sec:optiml0} and seems to be due
to~\citet{efroymson1960}, according to~\citet{hocking1976}. Other approaches
considered in the 70's include also the~\emph{leaps and bounds} technique
of~\citet{furnival1974}, a branch-and-bound algorithm providing the exact
solution of~(\ref{eq:l0}) with exponential worst-case complexity.

   \section{Wavelets in signal processing}\label{sec:wavelets}
   In signal processing, similar problems as in statistics have been studied in
the context of wavelets. In a nutshell, a wavelet basis represents a set of
functions~$\phi_1,\phi_2,\ldots$ that are essentially dilated and shifted
versions of each other. Unlike Fourier basis, wavelets have the interesting
properties to be localized both in the space and frequency domains, and to be
suitable to multi-resolution analysis of
signals~\citep{mallat1989}. 

The concept of parsimony is central to wavelets. When a signal~$f$ is
``smooth'' in a particular sense~\citep[see][]{mallat2008}, it can be well
approximated by a linear combination of a few wavelets. Specifically, $f$ is
close to an expansion $\sum_{i} \alpha_i \phi_i$ where only a few coefficients
$\alpha_i$ are non-zero, and the resulting compact representation has effective
applications in estimation and compression. The wavelet theory is well
developed for continuous signals, \eg, $f$ is chosen in the Hilbert
space~$L^2(\Real)$, but also for discrete signals~$f$ in~$\Real^m$, making it
suitable to modern digital image processing.

Since the first wavelet was introduced by~\citet{haar1910}, much research has
been devoted to designing a wavelet set that is adapted to particular
signals such as natural images. After a long quest for finding good
orthogonal basis such as the one proposed by \citet{daubechies1988}, a series
of works has focused on wavelet sets whose elements are not linearly
independent. It resulted a large number of variants, such as steerable
wavelets~\citep{simoncelli1992}, curvelets~\citep{candes2002},
contourlets~\citep{do2005}, or bandlets~\citep{lepennec2005}.
For the purpose of our monograph, one concept related to sparse estimation is
particularly important; it is called \emph{wavelet thresholding}.

\paragraph{Sparse estimation and wavelet thresholding.}
Let us consider a discrete signal represented by a vector~$\x$ in~$\Real^p$ and
an orthogonal wavelet basis set~$\D=[\d_1,\ldots,\d_p]$---that is, satisfying $\D^\top\D=\I$ where~$\I$ is the identity matrix. Approximating~$\x$
by a sparse linear combination of wavelet elements can be formulated as
finding a sparse vector~$\alphab$ in~$\Real^p$, say with~$k$ non-zero coefficients, that minimizes
\begin{equation}
   \min_{\alphab \in \Real^p} \frac{1}{2}\|\x-\D\alphab\|_2^2  \st \|\alphab\|_0 \leq k. \label{eq:hardthresholding}
\end{equation}
The sparse decomposition problem~(\ref{eq:hardthresholding}) is
an instance of the best subset selection formulation presented in
Section~\ref{sec:early} where~$\alphab$ represents model parameters,
demonstrating that similar topics arise in statistics and signal processing.
However, whereas~(\ref{eq:hardthresholding}) is NP-hard for general
matrices~$\D$~\citep{natarajan}, we have assumed~$\D$ to be orthogonal in the
context of wavelets. As such,~(\ref{eq:hardthresholding}) is equivalent to
\begin{displaymath}
   \min_{\alphab \in \Real^p} \frac{1}{2}\left\|\D^\top\x-\alphab\right\|_2^2  \st \|\alphab\|_0 \leq k,
\end{displaymath}
and admits a closed form. Let us indeed define the vector $\betab \defin
\D^\top\x$ in~$\Real^p$, corresponding to the exact non-sparse decomposition
of~$\x$ onto~$\D$---that is, we have~$\x = \D\betab$ since~$\D$ is orthogonal.
To obtain the best~$k$-sparse approximation, we denote by~$\mu$ the $k$-th largest value among the set $\{|\betab[1]|,\ldots,|\betab[p]|\}$, 
and the solution~$\alphab^\text{ht}$ of~(\ref{eq:hardthresholding}) is obtained by applying to~$\betab$ an operator called ``hard-thresholding'' and defined as
\begin{equation}
   \alphab^\text{ht}[i] =  {\ones}_{|\betab[i]|\geq \mu}\betab[i] = \left\{ \begin{array}{lr}   
      \betab[i]  & \text{if}~~ |\betab[i]| \geq \mu, \\ 
      0  & \text{otherwise},
      \end{array}  \right. \label{eq:hardthrs}
\end{equation}
where~${\ones}_{|\betab[i]|\geq \mu}$ is the indicator function, which is equal to~$1$ if $|\betab[i]|\geq \mu$ and $0$ otherwise.
In other words, the hard-thresholding operator simply sets to zero coefficients
from~$\betab$ whose magnitude is below the threshold~$\mu$.  The corresponding
procedure, called ``wavelet thresholding'', is simple and effective for image
denoising, even though it does not perform as well as recent state-of-the-art 
techniques presented in Section~\ref{chapter:image}. When an image~$\x$ is
noisy, \eg, corrupted by white Gaussian noise, and~$\mu$ is well chosen, the
estimate~$\D\alphab^\text{ht}$ is a good estimate of the clean original image.
The terminology ``hard'' is defined in contrast to an important variant called
the ``soft-thresholding operator'', which was introduced by~\citet{donoho1994} in the context of wavelets:\footnote{Note that the soft-thresholding operator appears in fact earlier in the statistics literature~\citep[see][]{efron1971,bickel1984}, but it was used there for a different purpose.}
\begin{equation}
   \alphab^{\text{st}}[i] \defin  \text{sign}(\betab[i])\max(|\betab[i]|-\lambda,0) = \left\{ \begin{array}{ll}   
      \betab[i]-\lambda  & \text{if}~~ \betab[i] \geq \lambda, \\ 
      \betab[i]+\lambda  & \text{if}~~ \betab[i] \leq -\lambda, \\ 
      0  & \text{otherwise},
      \end{array}  \right. \label{eq:softth}
\end{equation}
where~$\lambda$ is a parameter playing the same role as~$\mu$ in~(\ref{eq:hardthrs}).
Not only does the operator set small coefficients of~$\betab$ to zero,
but it also reduces the magnitude of the non-zero ones.  Both operators are
illustrated and compared to each other in Figure~\ref{fig:soft}. Interestingly, whereas~$\alphab^\text{ht}$ is the
solution of~(\ref{eq:hardthresholding}) when~$\mu$ corresponds to the entry
of~$\betab=\D^\top \x$ with $k$-th largest magnitude, $\alphab^{\text{st}}$ is in fact the solution
of the following sparse reconstruction problem with the orthogonal matrix~$\D$:
\begin{equation}
   \min_{\alphab \in \Real^p} \frac{1}{2}\|\x-\D\alphab\|_2^2  + \lambda\|\alphab\|_1. \label{eq:softhresholding}
\end{equation}
This formulation will be the topic of the next section for general non-orthogonal
matrices.  Similar to statistics where choosing the parameter~$k$ of the
best subset selection was difficult, automatically selecting the best
thresholds~$\mu$ or~$\lambda$ has been a major research
topic~\citep[see, \eg][]{donoho1994,donoho,chang2000,chang2000b}.

\begin{figure}
\centering
\subfigure[Soft-thresholding operator, \newline $\alpha^{\text{st}}=\text{sign}(\beta)\max(|\beta|-\lambda,0)$.\label{subfig:soft}]{ 
   \includegraphics[page=1]{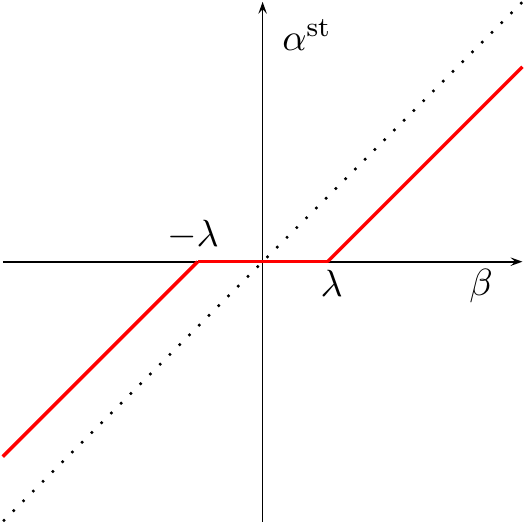}
} \hfill
\subfigure[Hard-thresholding operator \newline $\alpha^\text{ht}={\ones}_{|\beta|\geq \mu}\beta$.\label{subfig:hard}]{ 
   \includegraphics[page=2]{review_sparse_arxiv2-pics.pdf}
} 
\caption{Soft- and hard-thresholding operators, which are commonly used for signal estimation with orthogonal wavelet basis.} \label{fig:soft}
\end{figure}

\paragraph{Structured group thresholding.}
Wavelets coefficients have a particular structure since the basis
elements~$\d_i$ are dilated and shifted versions of each other.  It
is for instance possible to define neighborhood relationships for wavelets
whose spatial supports are close to each other, or hierarchical relationships
between wavelets with same or similar localization but with different scales.
For one-dimensional signals, we present in Figure~\ref{fig:treewavelets} a
typical organization of wavelet coefficients on a tree with arrows representing
such relations.
For two-dimensional images, the structure is slightly more involved and the
coefficients are usually organized as a collection of quadtrees~\citep[see][for
more details]{mallat2008}; we present such a configuration in
Figure~\ref{fig:waveletsfig}.

\begin{figure}
\begin{center}
 \begin{tikzpicture}[node distance=\distnode,>=stealth',bend angle=45,auto]
 \begin{scope}
     \node [varfull]    (v1)                       {$\alpha_1$};
     \node [varfull]    (v2)  [below of=v1,xshift=-3cm] {$\alpha_2$};
     \node [varfull]    (v3)  [below of=v1,xshift=3cm] {$\alpha_3$};
     \node [varempty]    (v4)  [below of=v2,xshift=-1.5cm] {$\alpha_4$};
     \node [varfull]    (v5)  [below of=v2,xshift=1.5cm] {$\alpha_5$};
     \node [varfull]    (v6)  [below of=v3,xshift=-1.5cm] {$\alpha_6$};
     \node [varempty]    (v7)  [below of=v3,xshift=1.5cm] {$\alpha_7$};
     \node [varempty]    (v8)  [below of=v4,xshift=-0.7cm] {$\alpha_8$};
     \node [varempty]    (v9)  [below of=v4,xshift=0.7cm] {$\alpha_9$};
     \node [varfull]    (v10)  [below of=v5,xshift=-0.7cm] {$\alpha_{10}$};
     \node [varfull]    (v11)  [below of=v5,xshift=0.7cm] {$\alpha_{11}$};
     \node [varfull]    (v12)  [below of=v6,xshift=-0.7cm] {$\alpha_{12}$};
     \node [varempty]    (v13)  [below of=v6,xshift=0.7cm] {$\alpha_{13}$};
     \node [varempty]    (v14)  [below of=v7,xshift=-0.7cm] {$\alpha_{14}$};
     \node [varempty]    (v15)  [below of=v7,xshift=0.7cm] {$\alpha_{15}$};
     \draw [arrow] (v1) -- (v2);
     \draw [arrow] (v1) -- (v3);
     \draw [arrow] (v2) -- (v4);
     \draw [arrow] (v2) -- (v5);
     \draw [arrow] (v3) -- (v6);
     \draw [arrow] (v3) -- (v7);
     \draw [arrow] (v4) -- (v8);
     \draw [arrow] (v4) -- (v9);
     \draw [arrow] (v5) -- (v10);
     \draw [arrow] (v5) -- (v11);
     \draw [arrow] (v6) -- (v12);
     \draw [arrow] (v6) -- (v13);
     \draw [arrow] (v7) -- (v14);
     \draw [arrow] (v7) -- (v15);
  \end{scope}
  \end{tikzpicture}
  \end{center}
  \caption{Illustration of a wavelet tree with four scales for one-dimensional
     signals. Nodes represent wavelet coefficients and their depth in the tree
     correspond to the scale parameter of the wavelet.
     We also illustrate the zero-tree coding scheme~\citep{shapiro1993} in this figure.
     Empty nodes correspond to zero coefficient: according to the zero-tree
     coding scheme, their descendants in the tree are also zero.
  }\label{fig:treewavelets}
\end{figure}
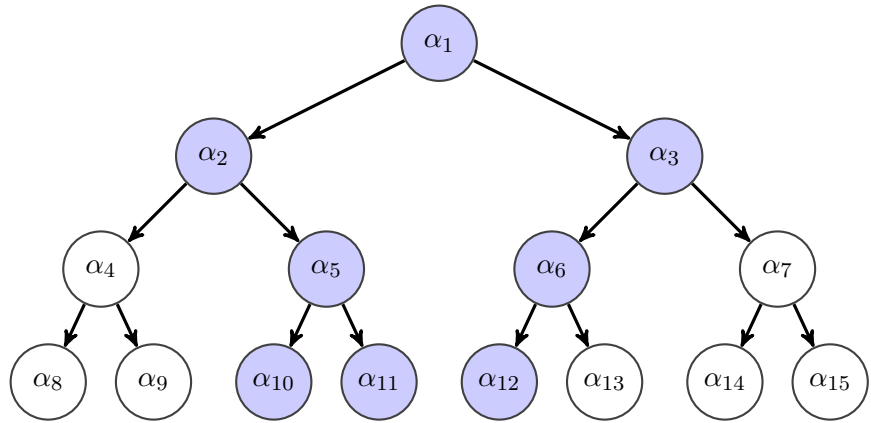

\begin{figure}
   \centering
   \includegraphics[width=0.9\linewidth]{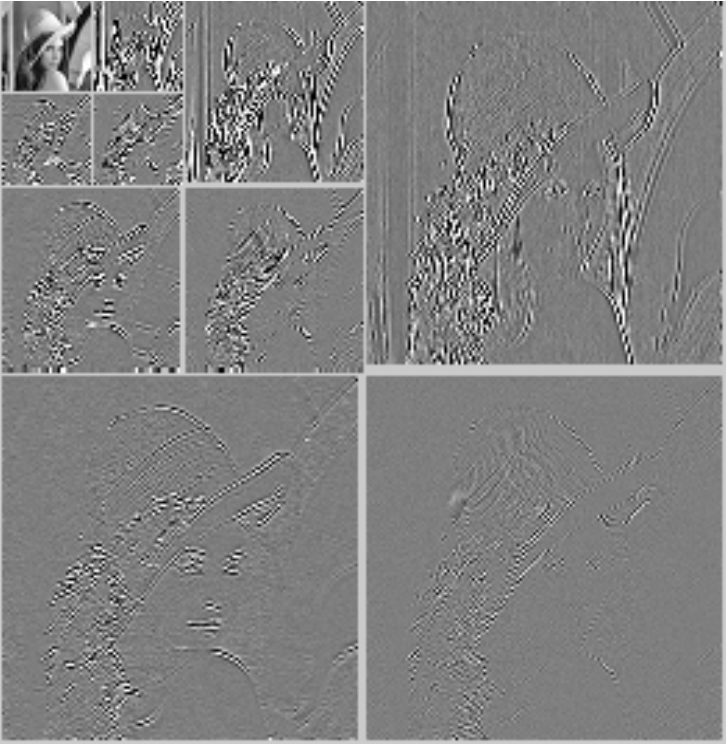}
   \caption{Wavelet coefficients displayed for the image~\textsf{lena} using
      the orthogonal basis of~\citet{daubechies1988}. A few coefficients
      representing a low-resolution version of the image are displayed on the top-left
      corner. Wavelets corresponding to this low-resolution image
      are obtained by filtering the original image with shifted versions of a low-pass filter called ``scaling
      function'' or ``father wavelet''. The rest of the coefficients are
      organized into three quadtrees (on the right, on the left, and on the
      diagonal). Each quadtree is obtained by filtering the original image
      with a wavelet at three different scales and at different positions.
      The value zero is represented by the grey color; negative values appear
      in black, and positive values in white.  The wavelet decomposition and
      this figure have been produced with the software package
      \textsf{matlabPyrTools} developed by Eero Simoncelli and available here:
   \url{http://www.cns.nyu.edu/~lcv/software.php}.}\label{fig:waveletsfig}
\end{figure}

A natural idea has inspired the recent concept of~\emph{group sparsity}
that will be presented in the next section; it consists in exploiting the wavelet
structure to improve thresholding estimators. Specifically, it is possible to
use neighborhood relations between wavelet basis elements to define groups of
coefficients that form a partition~$\GG$ of~$\{1,\ldots,p\}$, and use a
group-thresholding operator~\citep{hall1999,cai1999} defined for every
group~$g$ in~$\GG$ as
\begin{equation}
   \alphab^{\text{gt}}[g] \defin  \left\{ \begin{array}{ll}   
         \left(1-\frac{\lambda}{\left\|\betab[g]\right\|_2}\right)\betab[g]  & \text{if}~~ \left\|\betab[g]\right\|_2 \geq \lambda, \\ 
      0  & \text{otherwise},
      \end{array}  \right. \label{eq:groupthresholding}
\end{equation}
where~$\betab[g]$ is the vector of size~$|g|$ recording the entries of~$\betab$ whose indices are in~$g$.
By using such an estimator, groups of neighbor coefficients are set to zero
together when their joint~$\ell_2$-norm falls below the threshold~$\lambda$.
Interestingly, even though the next interpretation does not appear in early
work about group-thresholding~\citep{hall1999,cai1999}, it is possible to
view~$\alphab^\text{gt}$ with~$\betab=\D^\top \x$ as the solution of the following penalized problem
\begin{equation}
   \min_{ \alphab \in \Real^p} \frac{1}{2}\|\x-\D\alphab\|_2^2 + \lambda \sum_{g \in \GG} \|\alphab[g]\|_2, \label{eq:grouplassoa}
\end{equation}
where the closed-form solution~(\ref{eq:groupthresholding}) holds because~$\D$
is orthogonal~\citep[see][]{bach2012}. Such a formulation will be studied in the
next section for general matrices.

Finally, other ideas for exploiting both structure and wavelet parsimony have been proposed. One is a
coding scheme called ``zero-tree'' wavelet coding~\citep{shapiro1993}, which  uses
the tree structure of wavelets to force all descendants of zero
coefficients to be zero as well. Equivalently, a coefficient can be non-zero
only if its parent in the tree is non-zero, as illustrated in
Figure~\ref{fig:treewavelets}. This idea has been revisited later in a more general
context by~\citet{zhao}.  Other complex models have been used as well for
modeling interactions between coefficients: we can mention the application of hidden Markov models (HMM)
to wavelets by~\citet{crouse1998} and the Gaussian scale mixture model
of~\citet{portilla2003}.

   \section{Modern parsimony: the $\ell_1$-norm and other variants}\label{sec:l1}
   The era of ``modern'' parsimony corresponds probably to the use of convex
optimization techniques for solving feature selection or sparse decomposition
problems.  Even though the~$\ell_1$-norm was introduced for that purpose in
geophysics~\citep{claerbout1973,taylor1979}, it was popularized in statistics
with the Lasso estimator of~\citet{tibshirani} and independently in signal
processing with the basis pursuit formulation of~\citet{chen}. Given
observations~$\x$ in~$\Real^n$ and a matrix of predictors~$\D$ in~$\Real^{n
\times p}$, the Lasso consists of learning a linear model~$\x \approx
\D\alphab$ by solving the following quadratic program:
\begin{equation}
   \min_{\alphab \in \Real^p} \frac{1}{2}\|\x-\D\alphab\|_2^2  \st \|\alphab\|_1 \leq \mu. \label{eq:lasso}
\end{equation}
As detailed in the sequel, the~$\ell_1$-norm encourages the solution~$\alphab$ to
be sparse and the parameter~$\mu$ is used to control the trade-off between data
fitting and the sparsity of~$\alphab$. In practice, reducing the value of~$\mu$
leads indeed to sparser solution in general, \ie, with more zeroes, even
though there is no formal relation between the sparsity of~$\alphab$ and
its~$\ell_1$-norm for general matrices~$\D$.

The basis pursuit denoising formulation of~\citet{chen} is relatively similar
but the~$\ell_1$-norm is used as a penalty instead of a constraint.
It can be written as
\begin{equation}
   \min_{\alphab \in \Real^p} \frac{1}{2}\|\x-\D\alphab\|_2^2  +\lambda\|\alphab\|_1, \label{eq:lasso2}
\end{equation}
which is essentially equivalent to~(\ref{eq:lasso}) from a convex optimization
perspective, and in fact~(\ref{eq:lasso2}) is also often called ``Lasso'' in
the literature.
Given some data~$\x$, matrix~$\D$, and parameter~$\mu > 0$, we indeed know
from Lagrange multiplier theory~\citep[see, \eg,][]{borwein,boyd.convex}
that for all solution~$\alphab^\star$ of~(\ref{eq:lasso}), there exists a
parameter~$\lambda \geq 0$ such that~$\alphab^\star$ is also a solution
of~(\ref{eq:lasso2}).  We note, however, that there is no direct mapping
between~$\lambda$ and~$\mu$,
and thus the choice of formulation~(\ref{eq:lasso}) or~(\ref{eq:lasso2}) should
be made according to how easy it is to select the parameters~$\lambda$ or~$\mu$.
For instance, one may prefer~(\ref{eq:lasso}) when a priori information
about the~$\ell_1$-norm of the solution is available.
In Figure~\ref{fig:path}, we illustrate the effect of changing the value of the
regularization parameter~$\lambda$ on the solution of~(\ref{eq:lasso2}) for
two datasets. When~$\lambda=0$, the solution is dense; in general,
increasing~$\lambda$ sets more and more variables to zero. However, the
relation between~$\lambda$ and the sparsity of the solution is not exactly
monotonic. In a few cases, increasing~$\lambda$ yields a denser solution.

Another ``equivalent'' formulation consists of finding a sparse decomposition
under a reconstruction constraint:
\begin{equation}
   \min_{\alphab \in \Real^p} \|\alphab\|_1 \st \|\x-\D\alphab\|_2^2   \leq \varepsilon. \label{eq:lasso4}
\end{equation} This formulation can be useful when we have a priori knowledge about
the noise level and the parameter~$\varepsilon$ is easy to choose.  The link
between~(\ref{eq:lasso2}) and~(\ref{eq:lasso4}) is similar to the link
between~(\ref{eq:lasso2}) and~(\ref{eq:lasso}).

\begin{figure}
\centering
\subfigure[Path for dataset 1]{
\includegraphics[width=0.47\linewidth]{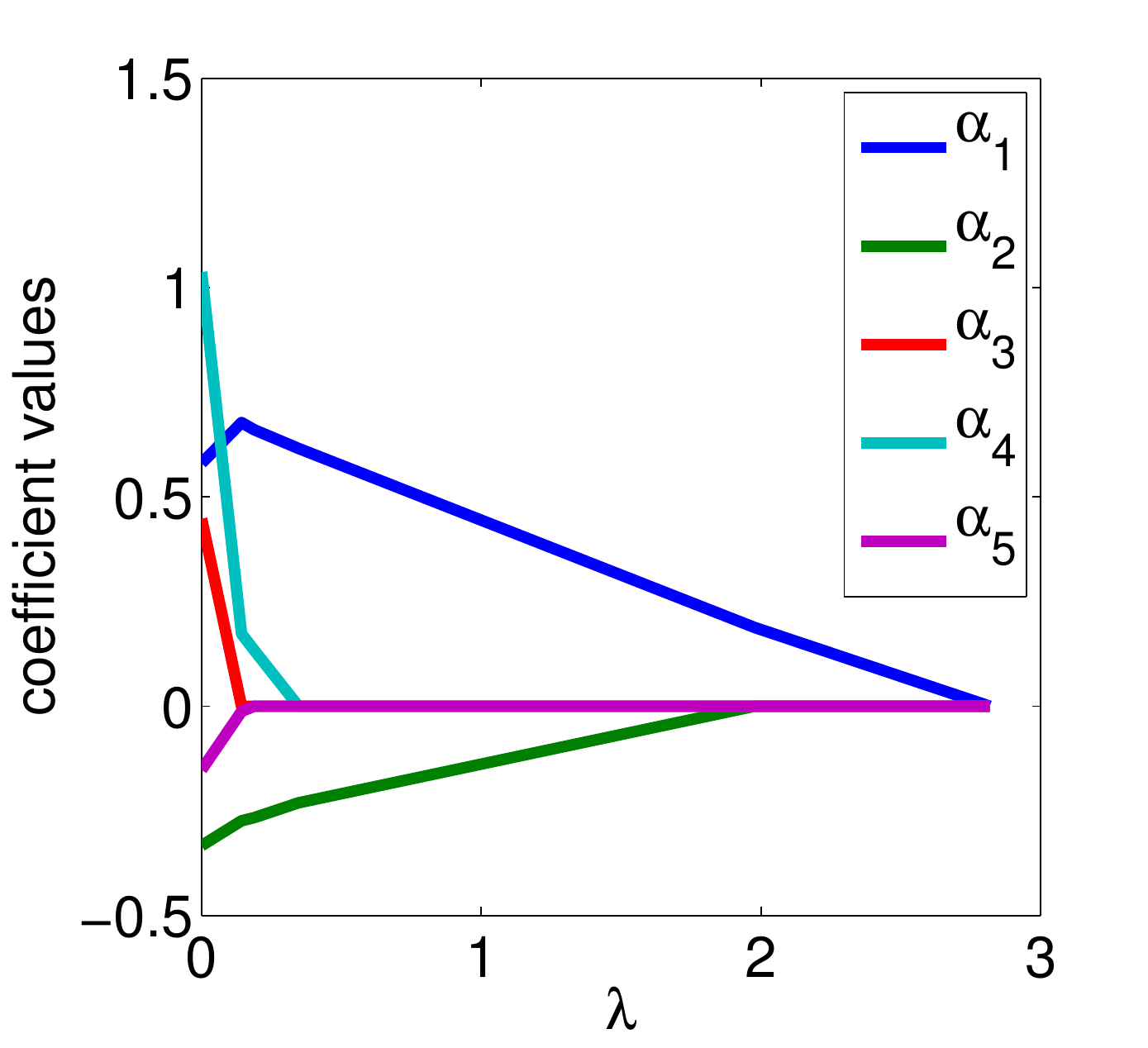}} \hfill
\subfigure[Path for dataset 2]{
\includegraphics[width=0.47\linewidth]{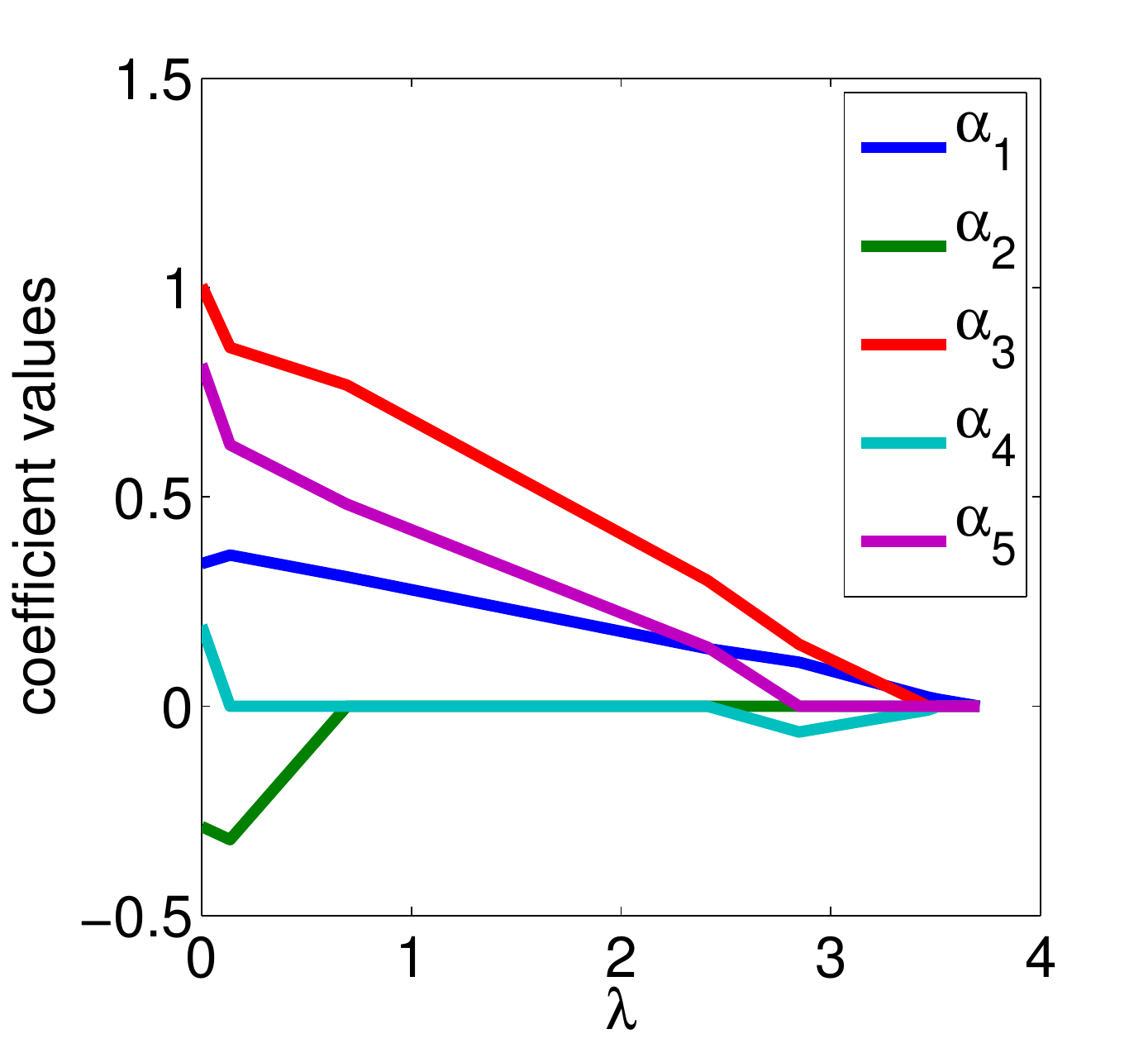}}
\caption{Two examples of regularization paths for the Lasso/Basis Pursuit. The curves represent the values of the $p=5$ entries of the solutions of~(\ref{eq:lasso2}) when varying the parameter~$\lambda$ for two datasets. On the left, the relation between~$\lambda$ and the sparsity of the solution is monotonic; On the right, this is not the case. Note that the paths are piecewise linear, see Section~\ref{sec:optiml1} for more details.} \label{fig:path}
\end{figure}

For noiseless problems, \citet{chen} have also introduced a formulation simply
called ``basis pursuit'' (without the terminology ``denoising''), defined as
\begin{equation}
   \min_{\alphab \in \Real^p} \|\alphab\|_1 \st \x=\D\alphab, \label{eq:lasso3}
\end{equation}
which is related to~(\ref{eq:lasso2}) in the sense that the set of solutions
of~(\ref{eq:lasso2}) converges to the solutions of~(\ref{eq:lasso3})
when~$\lambda$ converges to~$0^+$, whenever the linear system $\x=\D\alphab$ is
feasible.  These four formulations~(\ref{eq:lasso}-\ref{eq:lasso3}) have
gained a large success beyond the statistics and signal processing communities.
More generally, the~$\ell_1$-norm has been used as a regularization function
beyond the least-square context, leading to problems of the form
\begin{equation}
   \min_{\alphab \in \Real^p} f(\alphab) + \lambda\|\alphab\|_1, \label{eq:generall1}
\end{equation}
where~$f: \Real^p \to \Real$ is a loss function.  In the rest of this section,
we will present several variants of the~$\ell_1$-norm, but before that, we will
try to understand why such a penalty is appropriate for sparse estimation.

\paragraph{Why does the~$\ell_1$-norm induce sparsity?}
Even though we have claimed that there is no rigorous relation between the
sparsity of~$\alphab$ and its~$\ell_1$-norm in general, intuition about the
sparsity-inducing effect of the~$\ell_1$-norm may be obtained from
several viewpoints.

\subparagraph{Analytical point of view.}
In the previous section about wavelets, we have seen that when~$\D$ is orthogonal, the
$\ell_1$-decomposition problem~(\ref{eq:lasso2}) admits an analytic closed
form solution~(\ref{eq:softth}) obtained by soft-thresholding. As a result, whenever the
magnitude of the inner product $\d_i^\top \x$ is smaller than~$\lambda$ for
an index~$i$, the corresponding variable~$\alphab^\star[i]$ is equal to zero.
Thus, the number of zeroes of the solution~$\alphab^\star$ monotonically
increases with~$\lambda$.  

For non-orthogonal matrices~$\D$, such a monotonic relation does not formally
hold anymore; in practice, the sparsity-inducing property of
the~$\ell_1$-penalty remains effective, as illustrated in
Figure~\ref{fig:path}. Some intuition about this fact can be gained by studying optimality
conditions for the general~$\ell_1$-regularized problem~(\ref{eq:generall1})
where~$f$ is a differentiable function. The following lemma details these conditions.
\begin{lemma}[Optimality conditions for~$\ell_1$-regularized problems]~\label{lemma:opt}\\
A vector $\alphab^\star$ in~$\Real^p$ is a solution of~(\ref{eq:generall1}) if and only if
\begin{equation}
\forall i=1,\ldots,p~   \left\{ \begin{array}{rcll}
         -\nabla f(\alphab^\star)[i] &= &\lambda \sign(\alphab^\star[i]) & \text{if}~\alphab^\star[i] \neq 0, \\
         |\nabla f(\alphab^\star)[i]| &\leq&  \lambda  & \text{otherwise}. \\
   \end{array} \right. \label{eq:kktlasso}
\end{equation}
\end{lemma}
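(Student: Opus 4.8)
The plan is to treat~(\ref{eq:generall1}) as a convex optimization problem and apply first-order optimality conditions expressed through subdifferentials. Write $g(\alphab) \defin f(\alphab) + \lambda\|\alphab\|_1$; this is convex and finite on all of $\Real^p$ as soon as $f$ is convex, which is implicitly assumed here for the loss functions of interest (without convexity, the displayed conditions remain necessary for a local minimum but cease to be sufficient). Since $g$ is convex and finite everywhere, $\alphab^\star$ is a global solution of~(\ref{eq:generall1}) if and only if $\mathbf 0 \in \partial g(\alphab^\star)$, so the whole argument reduces to computing $\partial g(\alphab^\star)$ and then rewriting the inclusion $\mathbf 0 \in \partial g(\alphab^\star)$ coordinate by coordinate.

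First I would apply the sum rule for subdifferentials: because $f$ is differentiable, its subdifferential is the singleton $\{\nabla f(\alphab^\star)\}$, and because both $f$ and $\|\cdot\|_1$ are finite on all of $\Real^p$ there is no constraint-qualification obstruction, so $\partial g(\alphab^\star) = \nabla f(\alphab^\star) + \lambda\,\partial\|\cdot\|_1(\alphab^\star)$. Next I would compute $\partial\|\cdot\|_1(\alphab^\star)$: the map $\alphab \mapsto \|\alphab\|_1 = \sum_{i=1}^p |\alphab[i]|$ is separable, so its subdifferential is the Cartesian product of the subdifferentials of the scalar maps $t \mapsto |t|$, and the latter equals $\{\sign(t)\}$ when $t \neq 0$ and the interval $[-1,1]$ when $t = 0$. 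Hence a vector $\z$ belongs to $\partial\|\cdot\|_1(\alphab^\star)$ if and only if $\z[i] = \sign(\alphab^\star[i])$ for every $i$ with $\alphab^\star[i] \neq 0$, and $|\z[i]| \leq 1$ for every $i$ with $\alphab^\star[i] = 0$.

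Combining these two facts, $\mathbf 0 \in \partial g(\alphab^\star)$ holds if and only if there is such a $\z$ with $\nabla f(\alphab^\star) + \lambda\z = \mathbf 0$, that is, $-\nabla f(\alphab^\star)[i] = \lambda\z[i]$ for all $i$. On a coordinate with $\alphab^\star[i] \neq 0$ this reads $-\nabla f(\alphab^\star)[i] = \lambda\sign(\alphab^\star[i])$; on a coordinate with $\alphab^\star[i] = 0$ it reads $-\nabla f(\alphab^\star)[i] = \lambda\z[i]$ for some $\z[i] \in [-1,1]$, which is exactly $|\nabla f(\alphab^\star)[i]| \leq \lambda$. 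This is precisely~(\ref{eq:kktlasso}), and the lemma follows. If one prefers to avoid subdifferential calculus, the same conclusion is reachable through one-sided directional derivatives: $g$ being convex and finite, $\alphab^\star$ is a minimizer iff $g'(\alphab^\star;\d) \geq 0$ for every $\d \in \Real^p$, where $g'(\alphab^\star;\d) = \nabla f(\alphab^\star)^\top\d + \lambda\!\sum_{i:\,\alphab^\star[i]\neq 0}\!\sign(\alphab^\star[i])\,\d[i] + \lambda\!\sum_{i:\,\alphab^\star[i]=0}\!|\d[i]|$; choosing $\d = \pm\e_i$ forces the two cases of~(\ref{eq:kktlasso}), and conversely, under~(\ref{eq:kktlasso}), the bound $|\nabla f(\alphab^\star)[i]\,\d[i]| \leq \lambda|\d[i]|$ on the inactive coordinates makes $g'(\alphab^\star;\d) \geq 0$ for every $\d$.

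The only genuinely delicate point, in either route, is the sufficiency direction: it really does use convexity of $f$, and this should be made explicit in the statement or flagged near it. The additivity of the subdifferentials and the separable-product formula for $\partial\|\cdot\|_1$ are routine facts of convex analysis that I would simply cite to the references already used in the monograph (\eg,~\citet{borwein}).
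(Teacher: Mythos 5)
Your proposal is correct, and it in fact contains both available routes. Your primary argument---computing $\partial\bigl(f+\lambda\|\cdot\|_1\bigr)(\alphab^\star)$ via the sum rule and the separable formula for $\partial\|\cdot\|_1$, then unpacking $\mathbf 0 \in \partial g(\alphab^\star)$ coordinatewise---is precisely the ``classical subdifferential proof'' that the paper explicitly declines to spell out and instead delegates to a reference such as \citet{bach2012}. The paper's own proof is the alternative you sketch in your final display: it takes the one-sided directional derivative of $g$ at $\alphab^\star$, observes that nonnegativity of $\nabla g(\alphab^\star,\kappab)$ for all $\kappab$ is the optimality criterion for a convex function, and then reduces the quantifier over all directions to the $2p$ directions $\kappab=\pm\e_i$, which is legitimate because the smooth part of the derivative is linear in $\kappab$ and the $\ell_1$ part is separable. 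What the subdifferential route buys is modularity (each ingredient is a citable fact of convex calculus); what the directional-derivative route buys is self-containedness, avoiding any appeal to the sum rule or to constraint qualifications, which is exactly why the monograph prefers it in an introductory section. Your remark that the sufficiency direction genuinely requires convexity of $f$ is well taken and applies equally to the paper's argument: the equivalence ``$\alphab^\star$ is a minimizer iff all directional derivatives at $\alphab^\star$ are nonnegative'' is itself a convexity statement, so the lemma tacitly assumes $f$ convex even though the surrounding text only says ``differentiable.''
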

\begin{proof}
A proof using the classical concept of subdifferential from convex optimization
can be found in~\citep[see,\eg,][]{bach2012}. Here, we provide instead an elementary proof
using the simpler concept of directional derivative for nonsmooth functions,
defined as, when the limit exists,
\begin{displaymath}
   \nabla g(\alphab,\kappab) \defin \lim_{t \to 0^+} \frac{g(\alphab + t\kappab) - g(\alphab)}{t},
\end{displaymath}
for a function~$g: \Real^p \to \Real$ at a point~$\alphab$ in~$\Real^p$ and a
direction~$\kappab$ in~$\Real^p$. For convex functions~$g$, directional
derivatives always exist and a classical optimality condition
for~$\alphab^\star$ to be a minimum of~$g$ is to have $\nabla
g(\alphab^\star,\kappab)$ non-negative for all directions~$\kappab$~\citep{borwein}.
Intuitively, this means that one cannot find any direction~$\kappab$ such that
an infinitesimal move along~$\kappab$ from~$\alphab^\star$ decreases the
value of the objective. When~$g$ is differentiable, the condition is equivalent
to the classical optimality condition~$\nabla g(\alphab^\star)=0$.

We can now apply the directional derivative condition to the function $g: \alphab \mapsto f(\alphab) + \lambda\|\alphab\|_1$, which is equivalent to
\begin{equation}
   \forall \kappab \in \Real^p,~~~  \nabla f(\alphab^\star)^\top \kappab  + \lambda \sum_{i=1}^p \left\{ \begin{array}{lr}
         \sign(\alphab^\star[i])\kappab[i] & \text{if}~\alphab^\star[i] \neq 0, \\
         |\kappab[i]|                       & \text{otherwise}
   \end{array} \right\} \geq 0. \label{eq:kktlasso2}
\end{equation}
It is then easy to show that~(\ref{eq:kktlasso2}) holds for all~$\kappab$ if
and only the inequality holds for the specific values $\kappab=\e_i$
and~$\kappab=-\e_i$ for all~$i$, where $\e_i$ is the vector in~$\Real^p$ with
zeroes everywhere except for the~$i$-th entry that is equal to one.
This immediately provides an equivalence between~(\ref{eq:kktlasso2}) and~(\ref{eq:kktlasso}).
\end{proof}
Lemma~\ref{lemma:opt} is interesting from a computational point of view (see
Section~\ref{sec:optiml1}), but it also tells us that when~$\lambda \geq
\|\nabla f(0)\|_\infty$, the conditions~(\ref{eq:kktlasso}) are satisfied
for~$\alphab^\star=0$, the sparsest solution possible.

\subparagraph{Physical point of view.}
In image processing or computer vision, the word ``energy'' often
denotes the objective function of a minimization problem; it is indeed common in
physics to have complex systems that stabilize at a configuration
of minimum potential energy. The negative of the energy's gradient represents a force,
a terminology we will borrow in this paragraph. Consider for instance a
one-dimensional $\ell_1$-regularized estimation problem
\begin{equation}
   \min_{\alpha \in \Real} \frac{1}{2}(\beta-\alpha)^2 + \lambda|\alpha|,\label{eq:forcel1}
\end{equation}
where~$\beta$ is a positive constant.  Whenever~$\alpha$ is non-zero, the $\ell_1$-penalty is
differentiable with derivative~$\lambda\sign(\alpha)$.  When interpreting this
objective as an energy minimization problem, the~$\ell_1$-penalty can be seen
as applying \emph{a force driving~$\alpha$ towards the origin with constant
intensity~$\lambda$}.  Consider now instead the squared $\ell_2$-penalty, also
called regularization of \citet{tikhonov1963}, or ridge regression
regularization~\citep{hoerl1970}:
\begin{equation}
   \min_{\alpha \in \Real} \frac{1}{2}(\beta-\alpha)^2 + \frac{\lambda}{2}\alpha^2. \label{eq:forcel2}
\end{equation}
The derivative of the quadratic energy $({\lambda}/{2})\alpha^2$ is 
$\lambda \alpha$. It can be interpreted as \emph{a force that also points
to the origin but with linear intensity $\lambda|\alpha|$}.  Therefore, the force
corresponding to the ridge regularization can be arbitrarily strong
when~$\alpha$ is large, but if fades away when~$\alpha$ gets close to zero. 
As a result, the squared $\ell_2$-regularization does not have a
sparsity-inducing effect.  From an analytical point of view, we have
seen  that the solution of~(\ref{eq:forcel1}) is zero when~$|\beta|$ is smaller
than~$\lambda$.  In contrast, the solution of~(\ref{eq:forcel2}) admits a
closed form~$\alpha^\star = \beta/(1+\lambda)$. And thus, regardless of the
parameter~$\lambda$, the solution is never zero.

We present a physical example illustrating this phenomenon in
Figure~\ref{fig:ressort}. We use springs whose potential energy
is known to be quadratic, and objects with a gravitational potential energy
that is approximately linear on the Earth's surface.

\begin{figure}[hbtp]
\centering
\subfigure[Small regularization]{
   \begin{tikzpicture}[scale=0.75]

      \draw[line width=0.3mm,arrows=<->] (2.5,0) -- (2.5,6);
      \draw[snake=zigzag,segment amplitude=2mm,segment length=2mm] (0,6) -- (0,4);
      \draw[snake=zigzag,segment amplitude=2mm,segment length=4mm] (0,0) -- (0,4);
      \fill[fill=red] (0,6) circle (5pt);
      \fill[fill=blue] (0,4) circle (4pt);
      \draw[line width=0.3mm,arrows=<->] (0.5,0) -- (0.5,4);

      \draw[snake=zigzag,segment amplitude=2mm,segment length=3mm] (5.5,6) -- (5.5,3);
      \fill[fill=red] (5.5,6) circle (5pt);
      \fill[fill=blue] (4.5,2.9) rectangle (6.5,3.1);
      \draw[line width=0.3mm,arrows=<->] (4,0) -- (4,3);

      \path (-0.5,5) node[left] {$E_1=\frac{k_1}{2}(\beta-\alpha)^2$} 
            (-0.5,2) node[left] {$E_2=\frac{k_2}{2}\alpha^2$}
            (0.5,2) node[right] {$\alpha$}
            (2.5,2) node[right] {$\beta$}
            (4,1.5) node[left] {$\alpha$}
            (6,4.5) node[right] {$E_1=\frac{k_1}{2}(\beta-\alpha)^2$}
            (6,2.5) node[right] {$E_2=m g |\alpha|$, $\alpha \geq 0$};
      \draw[line width=1mm] (-2.5,0) -- (8,0);
      \fill[fill=red] (0,0) circle (5pt);
   \end{tikzpicture}\label{intro:fig:ressort:b}
} \hfill
\subfigure[High regularization]{
   \begin{tikzpicture}[scale=0.75]
      \draw[snake=zigzag,segment amplitude=2mm,segment length=4mm] (0,6) -- (0,2);
      \draw[snake=zigzag,segment amplitude=2mm,segment length=2mm] (0,0) -- (0,2);
      \fill[fill=red] (0,6) circle (5pt);
      \fill[fill=blue] (0,2) circle (4pt);
      \draw[line width=0.3mm,arrows=<->] (0.5,0) -- (0.5,2);

      \draw[snake=zigzag,segment amplitude=2mm,segment length=6mm] (5.5,6) -- (5.5,0.1);
      \fill[fill=red] (5.5,6) circle (5pt);
      \fill[fill=blue] (4.5,0) rectangle (6.5,0.2);

      \path (-0.5,5) node[left] {$E_1=\frac{k_1}{2}(\beta-\alpha)^2$} 
            (-0.5,1) node[left] {$E_2=\frac{k_2}{2}\alpha^2$}
            (0.5,1) node[right] {$\alpha$}
            (5,0.5) node[left] {\color{red} {$\alpha^\star=0$}}
            (6,4.5) node[right] {$E_1=\frac{k_1}{2}(\beta-\alpha)^2$}
            (6,0.5) node[right] {$E_2=m g |\alpha|, \alpha \geq 0$};
      \draw[line width=1mm] (-2.5,0) -- (8,0);
      \fill[fill=red] (0,0) circle (5pt);
   \end{tikzpicture} \label{intro:fig:ressort:c}
} \caption{A physical system illustrating the sparsity-inducing effect of the
   $\ell_1$-norm (on the right) in contrast to the Tikhonov-ridge regularization
   (on the left). Three springs are represented in each figure, two on the left, one
   on the right. Red points are fixed and cannot move. On the left, two
   springs are linked to each other by a blue point whose position can vary. On
   the right, a blue object of mass~$m$ is attached to the spring. Right and
   left configurations define two different dynamical systems with
   energies~$E_1+E_2$; on the left, $E_1$ and~$E_2$ are elastic potential energies;
   on the right, $E_1$ is the same as on the left, whereas~$E_2$ is a
   gravitational potential energy, where~$g$ is the gravitational constant on the Earth's surface.
   Both system can evolve according to their initial positions, and stabilize
   for the value of $\alpha^\star$ that minimizes the energy $E_1+E_2$, assuming that some energy can be dissipated by friction forces. 
   On the left, it is possible to show that~$\alpha^\star=\beta k_1/(k_1+k_2)$ and thus, the solution~$\alpha^\star$ is
   never equal to zero, regardless of the strength~$k_2$ of the bottom spring.
   On the right, the solution is obtained by soft-thresholding: $\alpha^\star=\max(\beta-mg/k_1,0)$. As shown on 
   Figure~\ref{intro:fig:ressort:c}, when the mass~$m$ is large enough, the blue object touches the ground and~$\alpha^\star=0$.
Figure adapted from~\citep{mairal_thesis}.
}\label{fig:ressort}
\end{figure}
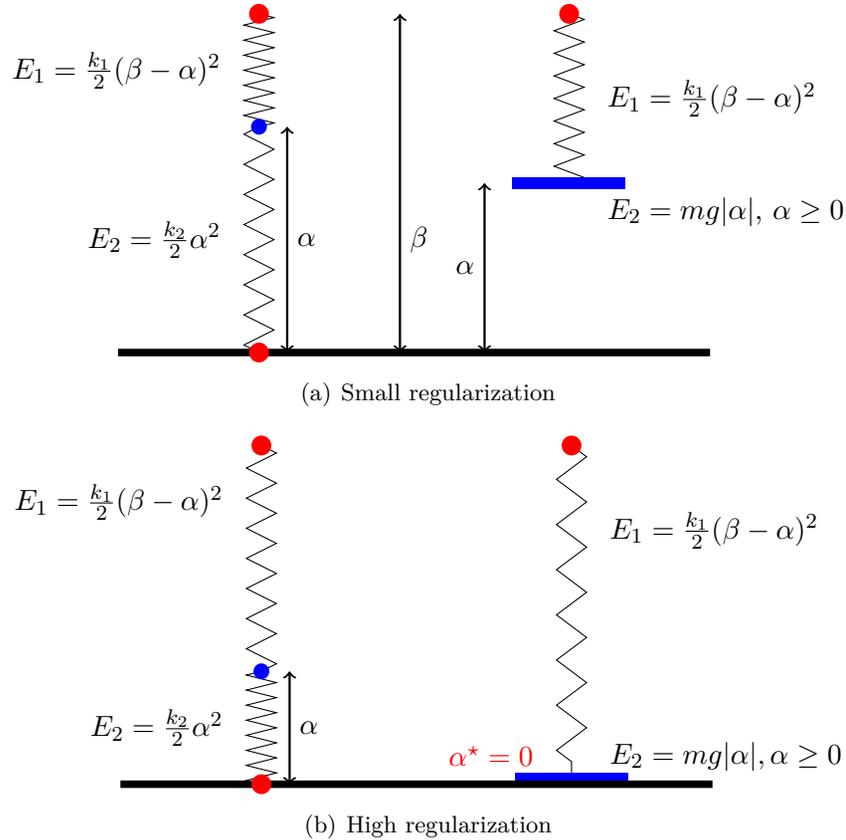

\subparagraph{Geometrical point of view.} 
The sparsity-inducing effect of the~$\ell_1$-norm can also be interpreted by
studying the geometry of the~$\ell_1$-ball $\{ \alphab \in \Real^p :
\|\alphab\|_1 \leq \mu\}$.
More precisely, understanding the effect of the Euclidean projection onto this
set is important: in simple cases where the design matrix~$\D$ is orthogonal,
the solution of~(\ref{eq:lasso}) can indeed be obtained by the 
projection
\begin{equation}
   \min_{\alphab \in \Real^p} \frac{1}{2}\|\betab- \alphab\|_2^2 \st \|\alphab\|_1 \leq \mu,\label{eq:projl1}
\end{equation}
where~$\betab = \D^\top \x$. When~$\D$ is not orthogonal, a classical algorithm
for solving~(\ref{eq:lasso}) is the projected gradient
method~(see Section~\ref{sec:optiml1}), which performs a sequence of 
projections~(\ref{eq:projl1}) for different values of~$\betab$.
Note that how to solve~(\ref{eq:projl1}) efficiently is well studied; it can be
achieved in $O(p)$ operations with a divide-and-conquer
strategy~\citep{brucker1984,duchi2008efficient}.

In Figure~\ref{fig:l1l2}, we illustrate the effect of the~$\ell_1$-norm
projection and compare it to the case of the~$\ell_2$-norm. The
corners of the~$\ell_1$-ball are on the main axes and correspond to sparse
solutions.  Two of them are represented by red and green dots, with respective
coordinates $(\mu,0)$ and~$(0,\mu)$. Most strikingly, a large part of the space in
the figure, represented by red and green regions, ends up on these corners
after projection. In contrast, the set of points that is projected onto the blue
dot, is simply the blue line. The blue dot corresponds in fact to a dense
solution with coordinates~$(\mu/2,\mu/2)$.  Therefore, the figure 
illustrates that the~$\ell_1$-ball in two dimensions encourages
solutions to be on its corners. In the case of the~$\ell_2$-norm, the ball is
isotropic, and treats every direction equally.  In Figure~\ref{fig:l1l2b}, we
represent these two balls in three dimensions, where we can make similar observations.

More formally, we can mathematically characterize our remarks 
about Figure~\ref{fig:l1l2}. Consider a point~$\y$ in~$\Real^p$ on the surface of
the~$\ell_1$-ball of radius~$\mu=1$, and define the set $\NN \defin \{ \z \in \Real^p :
\pi(\z)=\y\}$, where~$\pi$ is the projection operator onto the~$\ell_1$-ball.
Examples of pairs~$(\y,\NN)$ have been presented in Figure~\ref{fig:l1l2};
for instance, when~$\y$ is the red or green dot,~$\NN$ is respectively the red
or green region. It is particularly informative to study how~$\NN$ varies
with~$\y$, which is the focus of the next proposition.
\begin{proposition}[Characterization of the set~$\NN$]~\\
   For a non-zero vector~$\y$ in~$\Real^p$, the set~$\NN$ defined in the
   previous paragraph can be written as~$\NN = \y + \KK$, where
   $\KK$ is a polyhedral cone of dimension~$p-\|\y\|_0+1$.
\end{proposition}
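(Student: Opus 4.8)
The plan is to describe the Euclidean projection onto the $\ell_1$-ball explicitly and read off the fiber $\NN = \pi^{-1}(\y)$ from the well-known soft-thresholding formula for that projection. Recall that for $\z \in \Real^p$, the projection $\pi(\z)$ onto $\{\alphab : \|\alphab\|_1 \leq 1\}$ is either $\z$ itself (if $\|\z\|_1 \leq 1$) or, when $\|\z\|_1 > 1$, it is obtained by soft-thresholding: $\pi(\z)[i] = \sign(\z[i])\max(|\z[i]| - \tau, 0)$, where $\tau = \tau(\z) > 0$ is the unique threshold making the result have $\ell_1$-norm exactly $1$. Since $\y$ lies on the surface of the ball, $\NN$ consists precisely of those $\z$ with $\|\z\|_1 > 1$ (or $\|\z\|_1 = 1$, giving $\z = \y$) and soft-threshold image equal to $\y$.

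First I would fix $\y \neq 0$ on the sphere $\|\y\|_1 = 1$, let $S = \{ i : \y[i] \neq 0\}$ be its support (so $|S| = \|\y\|_0$), and decompose the condition $\pi(\z) = \y$ coordinatewise. For $i \in S$ we need $\sign(\z[i]) = \sign(\y[i])$ and $|\z[i]| - \tau = |\y[i]|$, i.e. $\z[i] = \y[i] + \sign(\y[i])\,\tau$; in particular $|\z[i]| > \tau$ is automatic once $\tau > 0$. For $i \notin S$ we need $\max(|\z[i]| - \tau, 0) = 0$, i.e. $|\z[i]| \leq \tau$. Finally, the defining equation for the threshold, $\sum_i \max(|\z[i]| - \tau, 0) = 1$, is already guaranteed by the $i \in S$ constraints together with $\|\y\|_1 = 1$, provided the $i \notin S$ coordinates contribute nothing — which is exactly the constraint $|\z[i]| \leq \tau$ we just wrote down. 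So, parametrizing by the free scalar $t = \tau \geq 0$ and the free vector $\w = \z[S^\complement]$, we get
\begin{displaymath}
   \NN = \left\{ \y + t\,\sign(\y) + \w \ :\ t \geq 0,\ \w \in \Real^{S^\complement},\ \|\w\|_\infty \leq t \right\},
\end{displaymath}
where $\sign(\y)$ denotes the vector that is $\sign(\y[i])$ on $S$ and $0$ off $S$, and $\w$ is viewed as a vector in $\Real^p$ supported on $S^\complement$. (The endpoint $t = 0$ forces $\w = 0$ and recovers $\z = \y$, consistently.)

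It remains to identify $\KK \defin \NN - \y = \{ t\,\sign(\y) + \w : t \geq 0,\ \|\w\|_\infty \leq t\}$ as a polyhedral cone of the asserted dimension. It is a cone because it is closed under multiplication by nonnegative scalars (scaling $t$ and $\w$ together), and it is polyhedral since it is cut out by the finitely many linear inequalities $t \geq 0$ and $-t \leq \w[i] \leq t$ for $i \in S^\complement$ after eliminating $t$ via the coordinate along $\sign(\y)$; concretely, writing a generic element as $(t, \w)$ in the $1 + |S^\complement|$ coordinates $(\sign(\y)\text{-component},\ S^\complement\text{-components})$, $\KK$ is the intersection of $\{t \geq 0\}$ with $2|S^\complement|$ halfspaces. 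For the dimension: $\KK$ has nonempty interior in the subspace spanned by $\sign(\y)$ and the standard basis vectors $\e_i$, $i \in S^\complement$ — e.g. the point $t = 1$, $\w = 0$ has a full neighborhood (within that subspace) inside $\KK$, since small perturbations keep $\|\w\|_\infty < t$. That subspace has dimension $1 + |S^\complement| = 1 + (p - \|\y\|_0) = p - \|\y\|_0 + 1$, so $\dim \KK = p - \|\y\|_0 + 1$.

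The main obstacle — really the only nontrivial point — is verifying that the threshold $\tau(\z)$ for every $\z \in \NN$ is exactly the parameter $t$ appearing above, i.e. that the soft-thresholding characterization does not secretly impose extra conditions. This needs the uniqueness of the threshold solving $\sum_i \max(|\z[i]| - \tau,0) = 1$ (standard: the left side is continuous, nonincreasing, piecewise linear in $\tau$, equal to $\|\z\|_1 > 1$ at $\tau = 0$ and to $0$ for large $\tau$), plus the check that for $\z = \y + t\sign(\y) + \w$ with $\|\w\|_\infty \leq t$, plugging $\tau = t$ indeed gives $\sum_{i \in S}|\y[i]| + \sum_{i \in S^\complement}\max(|\w[i]| - t, 0) = \|\y\|_1 + 0 = 1$, so $\tau(\z) = t$ by uniqueness. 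Once that is pinned down, the coordinatewise bookkeeping above is routine and the proposition follows.
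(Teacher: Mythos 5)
Your proof is correct, and it takes a genuinely different route from the paper's. The paper identifies $\NN-\y$ abstractly as the normal cone to the $\ell_1$-ball at $\y$ (via a classical projection theorem), then computes that cone using the dual-norm identity $\max_{\|\x\|_1\leq 1}\d^\top\x=\|\d\|_\infty$ and H\"older's inequality, arriving at $\KK=\{\d \st \d[i]=\sign(\y[i])\|\d\|_\infty~\text{for all}~i~\text{with}~\y[i]\neq 0\}$. You instead invert the projection directly from the soft-thresholding formula, obtaining the explicit parametrization $\KK=\{t\,\sign(\y)+\w \st t\geq 0,\ \w~\text{supported off the support of}~\y,\ \|\w\|_\infty\leq t\}$; one checks easily that the two descriptions coincide. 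What your approach buys: it is self-contained modulo the (standard, and cited elsewhere in the paper) thresholding formula for the $\ell_1$-projection, it makes the dimension count and the polyhedral structure immediate ($\KK$ is the cone over a hypercube of dimension $p-\|\y\|_0$), and it correctly exposes the extreme rays as $\sign(\y)+\sum_k\epsilon_k\e_{i_k}$ with $\epsilon_k\in\{-1,+1\}$ ranging over all sign patterns on the zero coordinates --- note that the paper's closing list of generators, $\text{cone}(\s,\s\pm\e_{i_1},\ldots,\s\pm\e_{i_l})$, generates a strictly smaller cone when $l\geq 2$ (e.g.\ for $\y=(1,0,0)$ the vector $(1,1,1)$ lies in $\KK$ but not in that cone), though the paper flags this part as optional and it does not affect the proposition. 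What the paper's approach buys is brevity given the convex-analysis toolkit and a template that generalizes to projections onto arbitrary norm balls. The only points worth tightening in your write-up are the uniqueness of the threshold $\tau(\z)$ (which you do address) and the remark that polyhedrality in $\Real^p$ also uses the linear equalities $\sign(\y[i])\d[i]=\sign(\y[j])\d[j]$ cutting out the span, but these are routine.
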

\begin{proof}
A classical theorem~\citep[see][Proposition B.11]{bertsekas} allows
us to rewrite~$\NN$ as
\begin{displaymath}
      \NN = \{ \z \in \Real^p ~:~ \forall~\|\x\|_1 \leq 1,~~(\z-\y)^\top(\x-\y) \leq 0\}
          = \y + \KK,
\end{displaymath}
where~$\y+\KK$ denotes the Minkowski sum $\{\y+\z : \z \in \KK\}$ between the set~$\{\y\}$ and the cone~$\KK$ defined as
\begin{displaymath}
   \KK \defin \{ \d \in \Real^p ~:~ \forall~\|\x\|_1 \leq 1,~~\d^\top(\x-\y) \leq 0\}.
\end{displaymath}
Note that in the optimization literature,~$\KK$ is often called the ``normal cone'' to the unit~$\ell_1$-ball at the point~$\y$~\citep{borwein}.
Equivalently, we have
\begin{equation}
   \begin{split}
      \KK &= \{ \d \in \Real^p ~:~ \max_{\|\x\|_1 \leq 1} \d^\top\x \leq \d^\top\y\} \\
          &= \{ \d \in \Real^p ~:~ \|\d\|_\infty \leq \d^\top\y\},
   \end{split} \label{eq:coneK}
\end{equation}
where we have used the fact that quantity $\max_{\|\x\|_1 \leq 1} \d^\top\x$,
called the dual-norm of the~$\ell_1$-norm, is equal to
$\|\d\|_\infty$~\citep[see][]{bach2012}. Note now that according to H\"older's
inequality, we also have~$\d^\top\y \leq \|\d\|_\infty\|\y\|_1 \leq
\|\d\|_\infty$ in Eq.~(\ref{eq:coneK}).  Therefore, the inequalities are in
fact equalities. It is then easy to characterize vectors~$\d$ such 
that $\d^\top\y = \|\d\|_\infty\|\y\|_1$ and it is possible to show that~$\KK$
is simply the set of vectors~$\d$ satisfying~$\d[i] = \sign(\y[i])\|\d\|_\infty$ for
all~$i$ such that~$\y[i] \neq 0$. 

This would be sufficient to conclude the proposition, but it is also possible
to pursue the analysis and exactly characterize~$\KK$ by finding a set of generators.\footnote{A collection of vectors~$\z_1,\z_2,\ldots,\z_l$ are called
generators for a cone~$\KK$ when~$\KK$ consists of all positive combinations of
the vectors~$\z_i$. In other words, $\KK = \{ \sum_{i=1}^l \alpha_i \z_i :
\alpha_i \geq 0  \}$. In that case, we use the
notation~$\KK=\text{cone}(\z_1,\ldots,\z_l)$.\label{footnote:cone}} Let us
define the vector~$\s$ in~$\{-1,0,+1\}^p$ that carries the sparsity pattern
of~$\y$, more precisely, with~$\s[i]=\sign(\y[i])$ for all~$i$ such
that~$\y[i] \neq 0$, and $\s[i]=0$ otherwise. Let us also define the set of
indices~$\{i_1,\ldots,i_l\}$ corresponding to the~$l$ zero entries of~$\y$,
and~$\e_i$ in~$\Real^p$ the binary vector whose entries are all zero but
the~$i$-th one that is equal to $1$.
Then, after a short calculation, we can geometrically characterize 
the polyhedral cone~$\KK$:
\begin{displaymath}
   \KK = \text{cone}\left(\s,\s - \e_{i_1},\s + \e_{i_1},\s - \e_{i_2},\s + \e_{i_2}, \ldots, \s -\e_{i_{l}}, \s +\e_{i_{l}}\right),
\end{displaymath}
where the notation ``cone'' is defined in footnote~\ref{footnote:cone}. 
\end{proof}
It is now easy to see that the set~$\KK$ ``grows'' with the number~$l$ of zero
entries in~$\y$, and that~$\KK$ lives in a subspace of dimension~$l+1$ for all
non-zero vector~$\y$.  For example, when~$l=0$---that is,~$\y$ is a dense
vector (\eg, the blue point in Figure~\ref{subfig:l1}),~$\KK$ is simply a
half-line. 

To conclude, the geometrical intuition to gain from this section is that
\emph{the Euclidean projection onto a convex set encourages solutions on
singular points, such as edges or corners for polytopes}. Such a principle
indeed applies beyond the~$\ell_1$-norm. For instance, we illustrate the
regularization effect of the~$\ell_\infty$-norm in Figure~\ref{fig:linf},
whose corners coordinates have same magnitude.

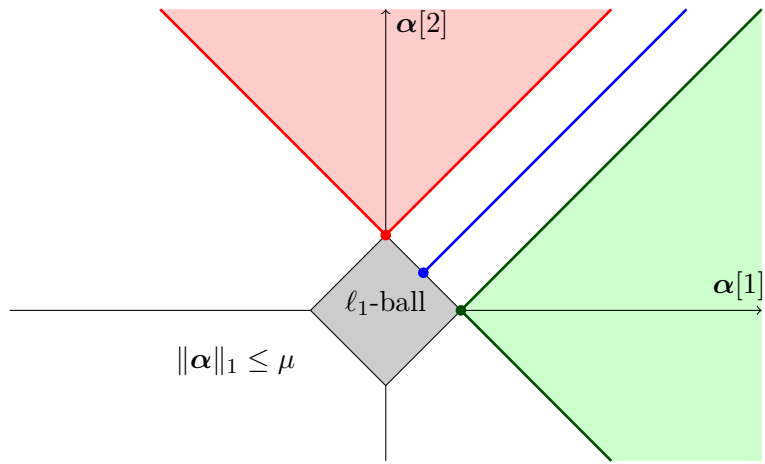
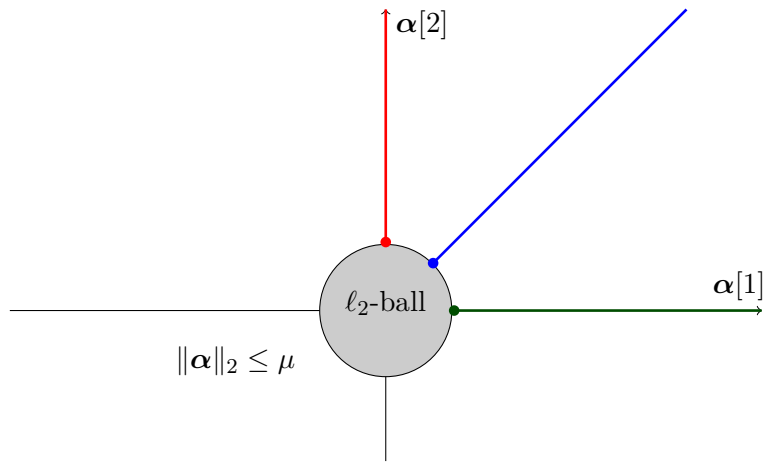
\begin{figure}
   \centering
   \subfigure[Effect of the Euclidean projection onto the~$\ell_1$-ball.]{
      \begin{tikzpicture}
         \fill[fill=green!20] (1,0) -- (5,4) -- (5,-2) --(3,-2);
         \fill[fill=red!20] (0,1) -- (3,4) -- (-3,4) --(0,1);
         \draw[arrows=-,line width=1pt,color=red](0,1)--(3,4);
         \draw[arrows=-,line width=1pt,color=red](0,1)--(-3,4);
         \draw[arrows=->,line width=.4pt](0,-2)--(0,4);
         \draw[arrows=->,line width=.4pt](-5,0)--(5,0);
         \fill[fill=black!20, draw=black] (-1,0) -- (0,1) -- (1,0) --(0,-1)--(-1,0);
         \fill[fill=red] (0,1) circle (2pt);
         \fill[fill=blue] (0.5,0.5) circle (2pt);
         \fill[fill=darkgreen] (1.0,0.0) circle (2pt);
         \draw[arrows=-,line width=1pt,color=darkgreen](1,0)--(5,4);
         \draw[arrows=-,line width=1pt,color=darkgreen](1,0)--(3,-2);
         \draw[arrows=-,line width=1pt,color=blue](0.5,0.5)--(4,4);
         \path (0,-0.2) node[above] {$\ell_1$-ball};
         \path (-2,-1) node[above] {$\|\alphab\|_1 \leq \mu$};
         \path (0,3.8) node[right] {$\alphab[2]$}
         (4.7,0) node[above] {$\alphab[1]$};
      \end{tikzpicture}\label{subfig:l1}
   }
   \subfigure[Effect of the Euclidean projection onto the~$\ell_2$-ball.]{
      \begin{tikzpicture}
         \draw[arrows=->,line width=.4pt](0,-2)--(0,4);
         \draw[arrows=->,line width=.4pt](-5,0)--(5,0);
         \draw[arrows=-,line width=1pt,color=red](0,0.91)--(0,4);
        \fill[fill=black!20,draw=black] (0,0) circle (25pt);
         \path (0,3.8) node[right] {$\alphab[2]$}
         (4.7,0) node[above] {$\alphab[1]$};
         \fill[fill=red] (0,0.91) circle (2pt);
         \fill[fill=blue] (0.63,0.63) circle (2pt);
         \fill[fill=darkgreen] (0.91,0.0) circle (2pt);
         \draw[arrows=-,line width=1pt,color=darkgreen](0.91,0)--(5,0);
         \draw[arrows=-,line width=1pt,color=blue](0.63,0.63)--(4,4);
         \path (0,-0.2) node[above] {$\ell_2$-ball};
         \path (-2,-1) node[above] {$\|\alphab\|_2 \leq \mu$};
      \end{tikzpicture}\label{subfig:l2}
   } \caption{Illustration in two dimensions of the projection operator onto
      the~$\ell_1$-ball in Figure~\subref{subfig:l1} and~$\ell_2$-ball in
   Figure~\subref{subfig:l2}. The balls are represented in gray. All points from
   the red regions are projected onto the point of coordinates~$(0,\mu)$ denoted
   by a red dot. Similarly, the green and blue regions are projected onto the green
   and blue dots, respectively.  For the~$\ell_1$-norm, a large part of the
   figure is filled by the red and green regions, whose points are projected to
   a sparse solution corresponding to a corner of the ball. For
   the~$\ell_2$-norm, this is not the case: any non-sparse point---say, for
   instance on the blue line---is projected onto a non-sparse solution.
}
   \label{fig:l1l2}
\end{figure}

\begin{figure}
    \centering
    \subfigure[$\ell_2$-ball in 3D]{\includegraphics[width=0.45\linewidth]{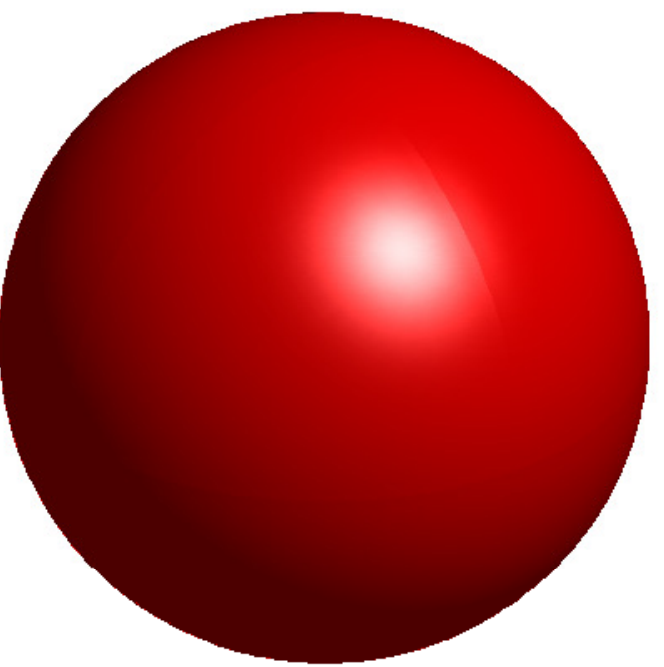}} \hfill
    \subfigure[$\ell_1$-ball in 3D]{\includegraphics[width=0.45\linewidth]{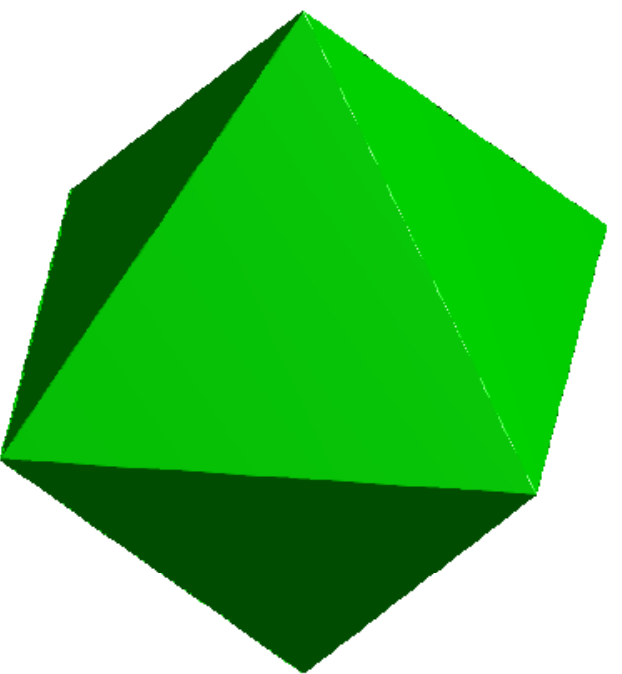}} 
    \caption{Representation in three dimensions of the~$\ell_1$- and~$\ell_2$-balls. Figure borrowed from~\citet{bach2012}, produced by Guillaume Obozinski.}
    \label{fig:l1l2b}
\end{figure}

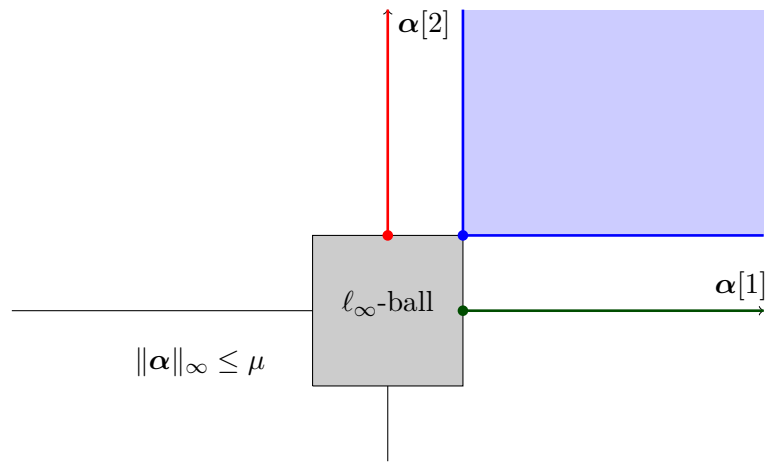
\begin{figure}
   \centering
      \begin{tikzpicture}
         \fill[fill=blue!20] (1,1) -- (1,4) -- (5,4) -- (5,1) --(1,1);
         \draw[arrows=-,line width=1pt,color=blue](1,1)--(1,4);
         \draw[arrows=-,line width=1pt,color=blue](1,1)--(5,1);
         \draw[arrows=->,line width=.4pt](0,-2)--(0,4);
         \draw[arrows=->,line width=.4pt](-5,0)--(5,0);
         \fill[fill=black!20, draw=black] (-1,-1) -- (-1,1) -- (1,1) --(1,-1)--(-1,-1);
         \path (0,3.8) node[right] {$\alphab[2]$}
         (4.7,0) node[above] {$\alphab[1]$};
         \fill[fill=red] (0,1) circle (2pt);
         \fill[fill=blue] (1,1) circle (2pt);
         \fill[fill=darkgreen] (1,0.0) circle (2pt);
         \draw[arrows=-,line width=1pt,color=darkgreen](1,0)--(5,0);
         \draw[arrows=-,line width=1pt,color=red](0,1)--(0,4);
         \path (0,-0.2) node[above] {$\ell_\infty$-ball};
         \path (-2.5,-1) node[above] {$\|\alphab\|_\infty \leq \mu$};
      \end{tikzpicture}
      \caption{Similar illustration as Figure~\ref{fig:l1l2} for the~$\ell_\infty$-norm. The regularization effect encourages solution to be on the corners of the ball, corresponding to points with the same magnitude $|\alphab[1]|=|\alphab[2]|=\mu$.}
         \label{fig:linf}
\end{figure}

\paragraph{Non-convex regularization.}
Even though it is well established that the~$\ell_1$-norm encourages sparse
solutions, it remains only a convex proxy of the~$\ell_0$-penalty. Both in
statistics and signal processing, other sparsity-inducing regularization
functions have been proposed, in particular continuous relaxations of~$\ell_0$
that are non-convex \citep{frank1993,fan2001,ingrid2010iteratively,gasso2009}.
These functions are using a non-decreasing concave function~$\varphi: \Real^+ \mapsto \Real$, 
and the sparsity-inducing penalty is defined as
\begin{displaymath}
   \psi(\alphab) \defin \sum_{i=1}^p \varphi\left(|\alphab[i]|\right).
\end{displaymath}
For example, the~$\ell_q$-penalty uses~$\varphi: x \mapsto
x^q$~\citep{frank1993}, or an approximation~$\varphi: x \mapsto (x+\varepsilon)^q$; the
reweighted-$\ell_1$ algorithm of~\citet{fazel2002,fazel2003,candes4} implicitly uses~$\varphi: x
\mapsto \log(x+\varepsilon)$. 
These penalties typically lead to intractable estimation problems, but
approximate solutions can be obtained with continuous optimization
techniques (see Section~\ref{sec:reweighted}). 

The sparsity-inducing effect
of the penalties~$\psi$ is known to be stronger than~$\ell_1$. As shown in
Figure~\ref{subfig:nonconvexa}, the magnitude of the derivative of~$\varphi$ grows
when one approaches zero because of its concavity. Thus, in the one-dimensional case,
~$\psi$ can be interpreted as \emph{a force driving~$\alpha$ towards the origin
with increasing intensity when~$\alpha$ gets closer to zero.}
In terms of geometry, we also display the~$\ell_q$-ball in
Figure~\ref{subfig:nonconvexb}, with the same red, blue, and green dots as in
Figure~\ref{fig:l1l2}. The part of the space that is projected onto the corners of
the~$\ell_q$-ball is larger than that for~$\ell_1$.
Interestingly, the geometrical structure of the red and green regions are also
more complex. Their combinatorial nature makes the projection problem onto
the~$\ell_q$-ball more involved when~$q < 1$.


\begin{figure}
   \centering
   \subfigure[Illustration of a non-convex sparsity-inducing penalty.]{
      \includegraphics[page=3]{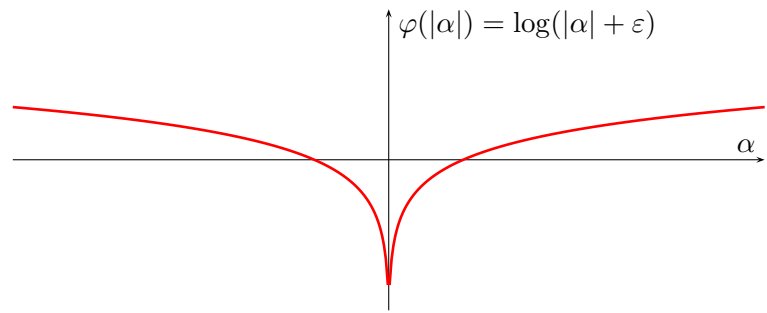}
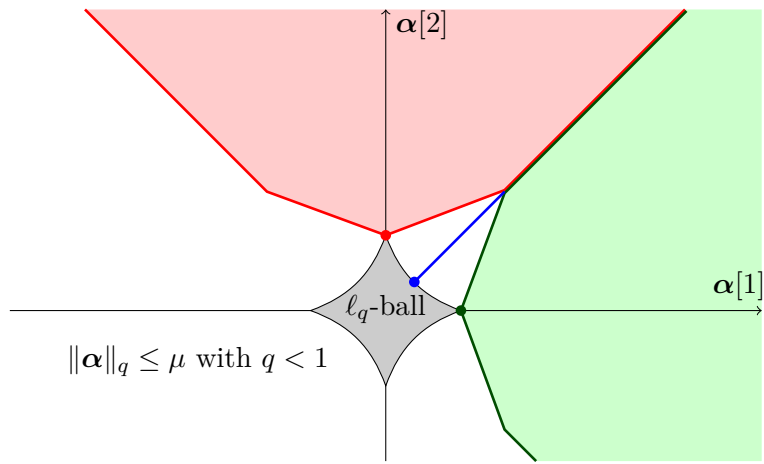
      \label{subfig:nonconvexa}
   }
   \subfigure[$\ell_q$-ball with~$q < 1$.]{\begin{tikzpicture}
       \fill[fill=red!20] (0,1) -- (1.58,1.58) -- (4,4) -- (-4,4) -- (-1.58,1.58);
       \fill[fill=green!20] (1,0) -- (1.58,1.58) -- (4,4) -- (5,4)  -- (5,-2) -- (2,-2) -- (1.58,-1.58);
       \draw[arrows=-,line width=1pt,color=red](0,1)-- (-1.58,1.58) -- (-4,4);
         \draw[arrows=->,line width=.4pt](0,-2)--(0,4);
         \draw[arrows=->,line width=.4pt](-5,0)--(5,0);
         \fill[fill=black!20, draw=black] (-1,0) to [out=20,in=250] (0,1) to [out=290,in=160] (1,0) to [out=200,in=70] (0,-1) to [out=110,in=340](-1,0);
         \path (0,3.8) node[right] {$\alphab[2]$}
         (4.7,0) node[above] {$\alphab[1]$};
         \fill[fill=red] (0,1) circle (2pt);
         \fill[fill=blue] (0.38,0.38) circle (2pt);
         \fill[fill=darkgreen] (1,0) circle (2pt);
         \draw[arrows=-,line width=1pt,color=red](0,1) -- (1.58,1.60) -- (3.98,4);
         \draw[arrows=-,line width=1pt,color=darkgreen](1,0)--(1.58,1.56) -- (4,3.98);
         \draw[arrows=-,line width=1pt,color=darkgreen](1,0)--(1.58,-1.58) -- (2,-2);
         \draw[arrows=-,line width=1pt,color=blue](0.38,0.38)--(1.58,1.58);
         \path (0,-0.3) node[above] {$\ell_q$-ball};
         \path (-2.5,-1) node[above] {$\|\alphab\|_q \leq \mu$ with $q < 1$};
   \end{tikzpicture}\label{subfig:nonconvexb}}
   \caption{Illustration of the sparsity-inducing effect of a non-convex penalty. In~\subref{subfig:nonconvexa}, we plot the non-convex penalty~$\alpha \mapsto \log(|\alpha|+\varepsilon)$, and in~\subref{subfig:nonconvexb}, we present a similar figure as~\ref{fig:l1l2} for the~$\ell_q$-penalty, when choosing~$q < 1$.} \label{fig:nonconvex}
\end{figure}

\paragraph{The elastic-net.} To cope with some instability issues of the
estimators obtained with the~$\ell_1$-regularization,~\citet{zou} have proposed to
combine the~$\ell_1$- and~$\ell_2$-norms with a penalty called elastic-net:
\begin{displaymath}
   \psi(\alphab) \defin \|\alphab\|_1 + \gamma \|\alphab\|_2^2. 
\end{displaymath}
The effect of this penalty is illustrated in Figure~\ref{fig:elasticnet}.
Compared to Figure~\ref{fig:l1l2}, we observe that the red and green regions
are smaller for the elastic-net penalty than for~$\ell_1$.  The
sparsity-inducing effect is thus less aggressive than the one obtained
with~$\ell_1$. 

\begin{figure}
   \centering
   \begin{tikzpicture}
      \fill[fill=green!20] (1,0) -- (5,1.87) -- (5,-1.87);
      \fill[fill=red!20] (0,1) -- (1.40,4) -- (-1.40,4) --(0,1);
      \draw[arrows=-,line width=1pt,color=red](0,1)--(1.40,4);
      \draw[arrows=-,line width=1pt,color=red](0,1)--(-1.40,4);
      \draw[arrows=->,line width=.4pt](0,-2)--(0,4);
      \draw[arrows=->,line width=.4pt](-5,0)--(5,0);
      \fill[fill=black!20, draw=black] (-1,0) to [out=65,in=205] (0,1) to [out=-25,in=115] (1,0) to [out=245,in=25] (0,-1) to [out=155,in=-65](-1,0);
      \fill[fill=red] (0,1) circle (2pt);
      \fill[fill=blue] (0.6,0.6) circle (2pt);
      \fill[fill=darkgreen] (1.0,0.0) circle (2pt);
      \draw[arrows=-,line width=1pt,color=darkgreen](1,0)--(5,1.87);
      \draw[arrows=-,line width=1pt,color=darkgreen](1,0)--(5,-1.87);
      \draw[arrows=-,line width=1pt,color=blue](0.6,0.6)--(4,4);
      \path (0,-0.1) node[above] {elastic-net};
      \path (0,-0.05) node[below] {ball};
      \path (-2.5,-1.3) node[above] {$(1-\gamma)\|\alphab\|_1+\gamma\|\alphab\|_2^2 \leq \mu$};
      \path (0,3.8) node[right] {$\alphab[2]$}
      (4.7,0) node[above] {$\alphab[1]$};
   \end{tikzpicture}
   \caption{Similar figure as~\ref{fig:l1l2} for the elastic-net penalty.}\label{fig:elasticnet}
\end{figure}
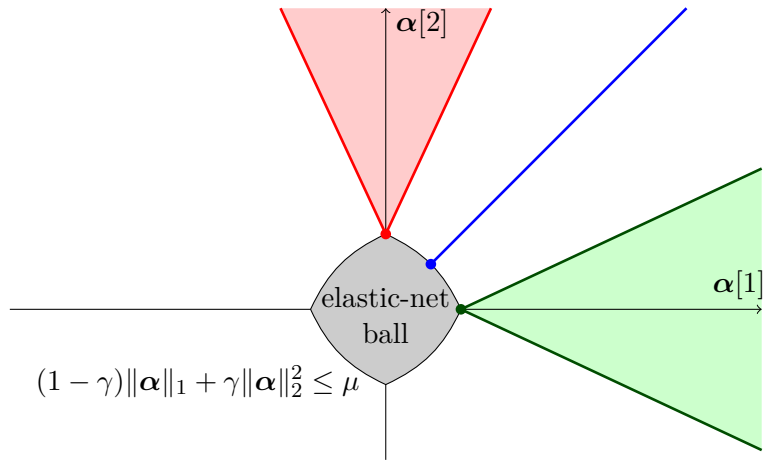

\paragraph{Total variation.} The anisotropic total variation penalty~\citep{rudin} for one dimensional 
signals is simply the~$\ell_1$-norm of finite differences
\begin{displaymath}
   \psi(\alphab) \defin \sum_{i=1}^{p-1} |\alphab[i+1]-\alphab[i]|,
\end{displaymath}
which encourages piecewise constant signals. It is also known in statistics under
the name of ``fused Lasso''~\citep{tibshirani2005sparsity}.  The penalty can
easily be extended to two-dimensional signals, and has been widely used for
regularizing inverse problems in image processing~\citep{chambolle2005total}.

\paragraph{Group sparsity.} 
In some cases, variables are organized into predefined groups forming a
partition~$\GG$ of~$\{1,\ldots,p\}$, and one is looking for a
solution~$\alphab^\star$ such that \emph{variables belonging to the same group
of~$\GG$ are set to zero together.} For example, such groups have appeared in
Section~\ref{sec:wavelets} about wavelets, where~$\GG$ could be defined
according to neighborhood relationships of wavelet coefficients.
Then, when it is known beforehand that a problem solution only requires a few groups
of variables to explain the data, a regularization function automatically 
selecting the relevant groups has been shown to improve the prediction performance
or to provide a solution with a better interpretation~\citep{turlach,yuan,obozinski-joint,huang2}.
The group sparsity principle is illustrated in Figure~\ref{subfig:groupspars}.

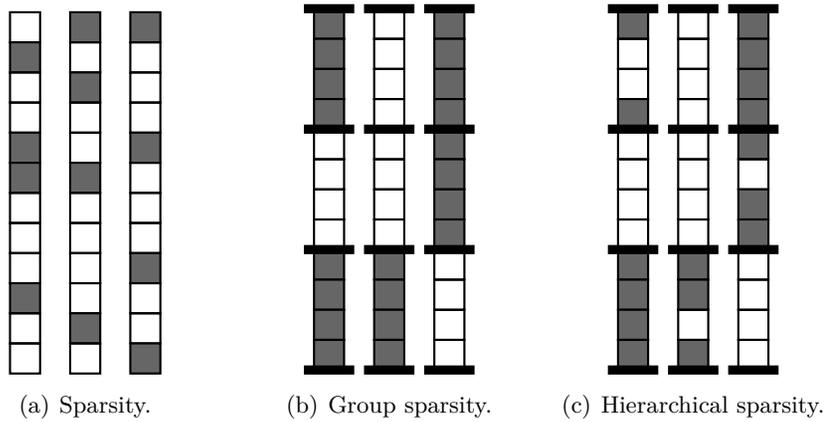
\begin{figure}[btp]
   \subfigure[Sparsity.]{
   \begin{tikzpicture}[node distance=\distnodeb]
      \node (s)  { };
      \node[varempty2] [right of=s,xshift=0.4cm] (a)  { };
      \node[varfull2] [below of=a] (b) { };
      \node[varempty2] [below of=b] (c) { };
      \node[varempty2] [below of=c] (d) { };
      \node[varfull2] [below of=d] (e) { };
      \node[varfull2] [below of=e] (f) { };
      \node[varempty2] [below of=f] (g) { };
      \node[varempty2] [below of=g] (h) { };
      \node[varempty2] [below of=h] (i) { };
      \node[varfull2] [below of=i] (j) { };
      \node[varempty2] [below of=j] (k) { };
      \node[varempty2] [below of=k] (l) { };
      \node[varfull2] [right of=a,xshift=\distnodeb] (a1)  { };
      \node[varempty2] [below of=a1] (b1) { };
      \node[varfull2] [below of=b1] (c1) { };
      \node[varempty2] [below of=c1] (d1) { };
      \node[varempty2] [below of=d1] (e1) { };
      \node[varfull2] [below of=e1] (f1) { };
      \node[varempty2] [below of=f1] (g1) { };
      \node[varempty2] [below of=g1] (h1) { };
      \node[varempty2] [below of=h1] (i1) { };
      \node[varempty2] [below of=i1] (j1) { };
      \node[varfull2] [below of=j1] (k1) { };
      \node[varempty2] [below of=k1] (l1) { };
      \node[varfull2] [right of=a1,xshift=\distnodeb] (a2)  { };
      \node[varempty2] [below of=a2] (b2) { };
      \node[varempty2] [below of=b2] (c2) { };
      \node[varempty2] [below of=c2] (d2) { };
      \node[varfull2] [below of=d2] (e2) { };
      \node[varempty2] [below of=e2] (f2) { };
      \node[varempty2] [below of=f2] (g2) { };
      \node[varempty2] [below of=g2] (h2) { };
      \node[varfull2] [below of=h2] (i2) { };
      \node[varempty2] [below of=i2] (j2) { };
      \node[varempty2] [below of=j2] (k2) { };
      \node[varfull2] [below of=k2] (l2) { };
      \node (t) [right of=l2,xshift=0.4cm] { };
   \end{tikzpicture}\label{subfig:spars}
}  \hfill
\subfigure[Group sparsity.]{
   \begin{tikzpicture}[node distance=\distnodeb]
      \node (s)  { };
      \node[varfull2] [right of=s,xshift=0.4cm] (a)  { };
      \node[varfull2] [below of=a] (b) { };
      \node[varfull2] [below of=b] (c) { };
      \node[varfull2] [below of=c] (d) { };
      \node[varempty2] [below of=d] (e) { };
      \node[varempty2] [below of=e] (f) { };
      \node[varempty2] [below of=f] (g) { };
      \node[varempty2] [below of=g] (h) { };
      \node[varfull2] [below of=h] (i) { };
      \node[varfull2] [below of=i] (j) { };
      \node[varfull2] [below of=j] (k) { };
      \node[varfull2] [below of=k] (l) { };
      \node[varempty2] [right of=a,xshift=\distnodeb] (a1)  { };
      \node[varempty2] [below of=a1] (b1) { };
      \node[varempty2] [below of=b1] (c1) { };
      \node[varempty2] [below of=c1] (d1) { };
      \node[varempty2] [below of=d1] (e1) { };
      \node[varempty2] [below of=e1] (f1) { };
      \node[varempty2] [below of=f1] (g1) { };
      \node[varempty2] [below of=g1] (h1) { };
      \node[varfull2] [below of=h1] (i1) { };
      \node[varfull2] [below of=i1] (j1) { };
      \node[varfull2] [below of=j1] (k1) { };
      \node[varfull2] [below of=k1] (l1) { };
      \node[varfull2] [right of=a1,xshift=\distnodeb] (a2)  { };
      \node[varfull2] [below of=a2] (b2) { };
      \node[varfull2] [below of=b2] (c2) { };
      \node[varfull2] [below of=c2] (d2) { };
      \node[varfull2] [below of=d2] (e2) { };
      \node[varfull2] [below of=e2] (f2) { };
      \node[varfull2] [below of=f2] (g2) { };
      \node[varfull2] [below of=g2] (h2) { };
      \node[varempty2] [below of=h2] (i2) { };
      \node[varempty2] [below of=i2] (j2) { };
      \node[varempty2] [below of=j2] (k2) { };
      \node[varempty2] [below of=k2] (l2) { };
      \node[sep] [below of=d,yshift=-0.6cm] { };
      \node[sep] [below of=h,yshift=-0.6cm] { };
      \node[sep] [below of=d2,yshift=-0.6cm] { };
      \node[sep] [below of=h2,yshift=-0.6cm] { };
      \node[sep] [below of=l2,yshift=-0.6cm] { };
      \node[sep] [below of=l,yshift=-0.6cm] { };
      \node[sep] [above of=a,yshift=0.6cm] { };
      \node[sep] [above of=a2,yshift=0.6cm] { };
      \node[sep] [below of=d1,yshift=-0.6cm] { };
      \node[sep] [below of=h1,yshift=-0.6cm] { };
      \node[sep] [below of=l1,yshift=-0.6cm] { };
      \node[sep] [above of=a1,yshift=0.6cm] { };
      \node (t) [right of=l2,xshift=0.4cm] { };
   \end{tikzpicture}\label{subfig:groupspars}
}   \hfill
\subfigure[Hierarchical sparsity.]{
   \begin{tikzpicture}[node distance=\distnodeb]
      \node (s)  { };
      \node[varfull2] [right of=s,xshift=0.4cm] (a)  { };
      \node[varempty2] [below of=a] (b) { };
      \node[varempty2] [below of=b] (c) { };
      \node[varfull2] [below of=c] (d) { };
      \node[varempty2] [below of=d] (e) { };
      \node[varempty2] [below of=e] (f) { };
      \node[varempty2] [below of=f] (g) { };
      \node[varempty2] [below of=g] (h) { };
      \node[varfull2] [below of=h] (i) { };
      \node[varfull2] [below of=i] (j) { };
      \node[varfull2] [below of=j] (k) { };
      \node[varfull2] [below of=k] (l) { };
      \node[varempty2] [right of=a,xshift=\distnodeb] (a1)  { };
      \node[varempty2] [below of=a1] (b1) { };
      \node[varempty2] [below of=b1] (c1) { };
      \node[varempty2] [below of=c1] (d1) { };
      \node[varempty2] [below of=d1] (e1) { };
      \node[varempty2] [below of=e1] (f1) { };
      \node[varempty2] [below of=f1] (g1) { };
      \node[varempty2] [below of=g1] (h1) { };
      \node[varfull2] [below of=h1] (i1) { };
      \node[varfull2] [below of=i1] (j1) { };
      \node[varempty2] [below of=j1] (k1) { };
      \node[varfull2] [below of=k1] (l1) { };
      \node[varfull2] [right of=a1,xshift=\distnodeb] (a2)  { };
      \node[varfull2] [below of=a2] (b2) { };
      \node[varfull2] [below of=b2] (c2) { };
      \node[varfull2] [below of=c2] (d2) { };
      \node[varfull2] [below of=d2] (e2) { };
      \node[varempty2] [below of=e2] (f2) { };
      \node[varfull2] [below of=f2] (g2) { };
      \node[varfull2] [below of=g2] (h2) { };
      \node[varempty2] [below of=h2] (i2) { };
      \node[varempty2] [below of=i2] (j2) { };
      \node[varempty2] [below of=j2] (k2) { };
      \node[varempty2] [below of=k2] (l2) { };
      \node[sep] [below of=d,yshift=-0.6cm] { };
      \node[sep] [below of=h,yshift=-0.6cm] { };
      \node[sep] [below of=d2,yshift=-0.6cm] { };
      \node[sep] [below of=h2,yshift=-0.6cm] { };
      \node[sep] [below of=l2,yshift=-0.6cm] { };
      \node[sep] [below of=l,yshift=-0.6cm] { };
      \node[sep] [above of=a,yshift=0.6cm] { };
      \node[sep] [above of=a2,yshift=0.6cm] { };
      \node[sep] [below of=d1,yshift=-0.6cm] { };
      \node[sep] [below of=h1,yshift=-0.6cm] { };
      \node[sep] [below of=l1,yshift=-0.6cm] { };
      \node[sep] [above of=a1,yshift=0.6cm] { };
      \node (t) [right of=l2,xshift=0.4cm] { };
   \end{tikzpicture}\label{subfig:hierspars}
} 
\caption{Illustration of the sparsity, group sparsity, and hierarchical
sparsity principles. Each column represents the sparsity pattern of a vector
with~$12$ variables. Non-zero coefficients are represented by gray squares. On
the left~\subref{subfig:spars}, the vectors are obtained with a simple
sparsity-inducing penalty, such as the~$\ell_1$-norm, and the non-zero
variables are scattered. In the middle figure~\subref{subfig:groupspars}, a
group sparsity-inducing penalty with three groups of variables is used. The
sparsity patterns respect the group structure. On the
right~\subref{subfig:hierspars}, we present the results obtained with a
hierarchical penalty consisting of the Group Lasso plus the~$\ell_1$-norm. The
group structure is globally respected, but some variables within a group can be
discarded.}\label{fig:groups}
\end{figure}

An appropriate regularization function to obtain a group-sparsity effect is
known as ``Group-Lasso'' penalty and is defined as
\begin{equation}
   \psi(\alphab) = \sum_{g \in \GG} \|\alphab[g]\|_q, \label{eq:grouplasso}
\end{equation}
where~$\|.\|_q$ is either the~$\ell_2$ or~$\ell_\infty$-norm.  To the best of
our knowledge, such a penalty appears in the early work
of~\citet{grandvalet1999} and~\citet{bakin1999} for~$q=2$, and~\citet{turlach}
for~$q=\infty$. It has been popularized later by~\citet{yuan}.

The function~$\psi$ in~(\ref{eq:grouplasso}) is a norm, thus convex, and can be
interpreted as the~$\ell_1$-norm of the vector~$[\|\alphab[g]\|_q]_{g \in \GG}$
of size~$|\GG|$. Consequently, the sparsity-inducing effect of the~$\ell_1$-norm
is applied at the group level. The penalty is highly related to the
group-thresholding approach for wavelets, since the group-thresholding
estimator~(\ref{eq:groupthresholding}) is linked to~$\psi$ through
Eq.~(\ref{eq:grouplassoa}).

In Figure~\ref{subfig:norma}, we visualize the unit ball of a Group-Lasso norm
obtained when~$\GG$ contains two groups $\GG=\{\{ 1,2\},\{3\}\}$.  The ball has
two singularities: the top and bottom corners, corresponding to solutions where
variables~$1$ and~$2$ are simultaneously set to zero, and the middle circle,
corresponding to solutions where variable~$3$ only is set to zero.
As expected, the geometry of the ball induces the group-sparsity
effect.

\paragraph{Structured sparsity.}
Group-sparsity is a first step towards the more general idea that a regularization
function can encourage sparse solutions with a particular structure. This
notion is called \emph{structured sparsity} and has been introduced under a
large number of different point of
views~\citep{zhao,jacob2009,jenatton,baraniuk,huang3}. To some extent, it
follows the concept of group-thresholding introduced in the wavelet
literature, which we have presented in Section~\ref{sec:wavelets}.
In this paragraph, we briefly review some of these works, but for a more
detailed review, we refer the reader to~\citep{bach2012b}.

Some penalties are non-convex. For instance, \citet{huang3}
and~\citet{baraniuk} propose two different combinatorial approaches based on a 
predefined set~$\GG$ of possibly overlapping groups of variables.
These penalties encourage solutions whose support is in the \emph{union of a few
number groups}, but they lead to NP-hard optimization problems.
Other penalties are convex. In particular, \citet{jacob2009} introduce a sparsity-inducing
norm that is exactly a convex relaxation of the penalty of~\citet{huang3},
even though these two approaches were independently developed at the same time. 
As a result, the convex penalty of~\citet{jacob2009} encourages a similar
structure as the one of~\citet{huang3}.

By following a different direction, the Group-Lasso
penalty~(\ref{eq:grouplasso}) has been considered when the groups are allowed
to overlap~\citep{zhao,jenatton}. As a consequence, variables belonging to the same groups are encouraged to be set to zero
together. It was proposed for hierarchical structures
by \citet{zhao} with the following rule:
whenever two groups~$g$ and~$h$ are in~$\GG$,
they should be either disjoint, or one should be included in another. 
Examples of such hierarchical group structures are given in
Figures~\ref{fig:groups} and~\ref{fig:treesparse}.
The effect of the penalty is to encourage sparsity patterns that are rooted subtrees.
Equivalently, a variable can be non-zero only if its parent in the tree is
non-zero, which is the main property of the zero-tree coding scheme introduced
in the wavelet literature~\citep{shapiro1993}, and already illustrated in
Figure~\ref{fig:treewavelets}.

Finally,~\citet{jenatton} has extended the hierarchical penalty of~\citet{zhao}
to more general group structures, for example when variable are organized on a
two-dimensional grid, encouraging neighbor variables to be simultaneously set
to zero.  We conclude this brief presentation of structured sparsity with
Figure~\ref{fig:structured}, where we present the unit balls of some
sparsity-inducing norms. Each of them exhibits singularities and
encourages particular sparsity patterns.

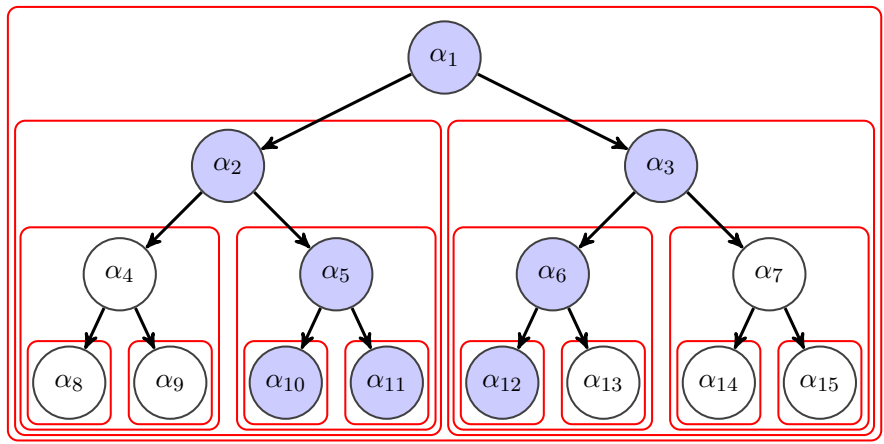
\begin{figure}[h]
\begin{center}
   \scalebox{0.96}{
 \begin{tikzpicture}[node distance=\distnode,>=stealth',bend angle=45,auto]
 \begin{scope}
     \node [varfull]    (v1)                       {$\alpha_1$};
     \node [varfull]    (v2)  [below of=v1,xshift=-3cm] {$\alpha_2$};
     \node [varfull]    (v3)  [below of=v1,xshift=3cm] {$\alpha_3$};
     \node [varempty]    (v4)  [below of=v2,xshift=-1.5cm] {$\alpha_4$};
     \node [varfull]    (v5)  [below of=v2,xshift=1.5cm] {$\alpha_5$};
     \node [varfull]    (v6)  [below of=v3,xshift=-1.5cm] {$\alpha_6$};
     \node [varempty]    (v7)  [below of=v3,xshift=1.5cm] {$\alpha_7$};
     \node [varempty]    (v8)  [below of=v4,xshift=-0.7cm] {$\alpha_8$};
     \node [varempty]    (v9)  [below of=v4,xshift=0.7cm] {$\alpha_9$};
     \node [varfull]    (v10)  [below of=v5,xshift=-0.7cm] {$\alpha_{10}$};
     \node [varfull]    (v11)  [below of=v5,xshift=0.7cm] {$\alpha_{11}$};
     \node [varfull]    (v12)  [below of=v6,xshift=-0.7cm] {$\alpha_{12}$};
     \node [varempty]    (v13)  [below of=v6,xshift=0.7cm] {$\alpha_{13}$};
     \node [varempty]    (v14)  [below of=v7,xshift=-0.7cm] {$\alpha_{14}$};
     \node [varempty]    (v15)  [below of=v7,xshift=0.7cm] {$\alpha_{15}$};
     \node [group] [left of=v15,xshift=\distnode,minimum size=1.15cm]  { };
     \node [group] [left of=v14,xshift=\distnode,minimum size=1.15cm]  { };
     \node [group] [left of=v13,xshift=\distnode,minimum size=1.15cm]  { };
     \node [group] [left of=v12,xshift=\distnode,minimum size=1.15cm]  { };
     \node [group] [left of=v11,xshift=\distnode,minimum size=1.15cm]  { };
     \node [group] [left of=v10,xshift=\distnode,minimum size=1.15cm]  { };
     \node [group] [left of=v9,xshift=\distnode,minimum size=1.15cm]  { };
     \node [group] [left of=v8,xshift=\distnode,minimum size=1.15cm]  { };
     \node [group] [below of=v7,yshift=0.75cm,minimum height=2.8cm,minimum width=2.75cm]  { };
     \node [group] [below of=v6,yshift=0.75cm,minimum height=2.8cm,minimum width=2.75cm]  { };
     \node [group] [below of=v5,yshift=0.75cm,minimum height=2.8cm,minimum width=2.75cm]  { };
     \node [group] [below of=v4,yshift=0.75cm,minimum height=2.8cm,minimum width=2.75cm]  { };
     \node [group] [below of=v2,yshift=-0.05cm,minimum height=4.35cm,minimum width=5.9cm]  { };
     \node [group] [below of=v3,yshift=-0.05cm,minimum height=4.35cm,minimum width=5.9cm]  { };
     \node [group] [below of=v1,yshift=-0.8cm,minimum height=6cm,minimum width=12.1cm]  { };
     \draw [arrow] (v1) -- (v2);
     \draw [arrow] (v1) -- (v3);
     \draw [arrow] (v2) -- (v4);
     \draw [arrow] (v2) -- (v5);
     \draw [arrow] (v3) -- (v6);
     \draw [arrow] (v3) -- (v7);
     \draw [arrow] (v4) -- (v8);
     \draw [arrow] (v4) -- (v9);
     \draw [arrow] (v5) -- (v10);
     \draw [arrow] (v5) -- (v11);
     \draw [arrow] (v6) -- (v12);
     \draw [arrow] (v6) -- (v13);
     \draw [arrow] (v7) -- (v14);
     \draw [arrow] (v7) -- (v15);
  \end{scope}
  \end{tikzpicture}
  }
  \end{center}
  \caption{Illustration of the hierarchical sparsity of~\citet{zhao}, which
     generalizes the zero-tree coding scheme of~\citet{shapiro1993}.  The
     groups of variables correspond to the red rectangles. The empty nodes
     represent variable that are set to zero. They are contained in three groups:~$\{4,8,9\},\{13\},\{7,14,15\}$.
  }\label{fig:treesparse}
\end{figure}

\begin{figure}
    \centering
    \subfigure[Group-Lasso penalty \newline $\psi(\alphab)=\|\alphab{[1,2]}\|_2+|{\alphab}{[3]}|$.]{\includegraphics[width=0.45\linewidth]{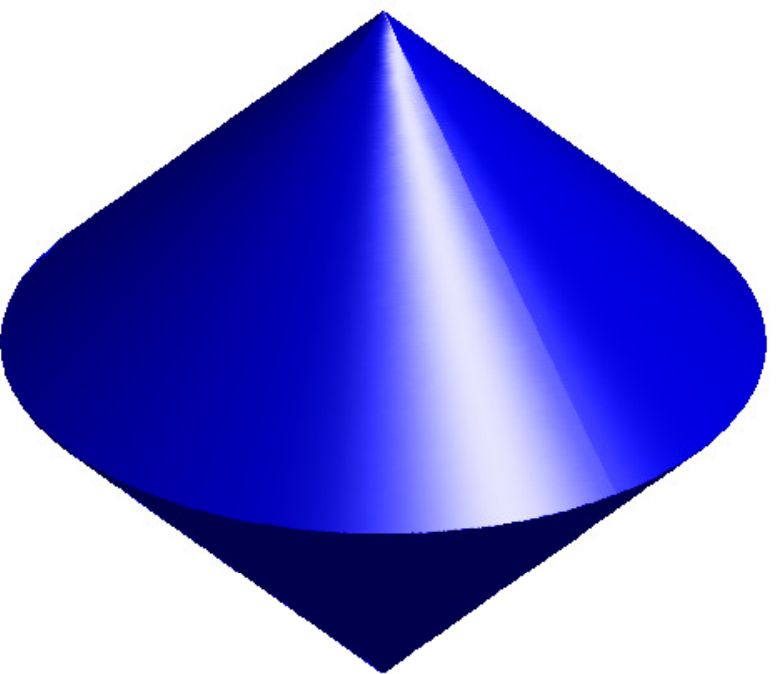}\label{subfig:norma}} \hfill
    \subfigure[Hierarchical penalty \newline $\psi(\alphab)=\|\alphab\|_2+|{\alphab}{[1]}|+|{\alphab}{[2]}|$.]{\includegraphics[width=0.40\linewidth]{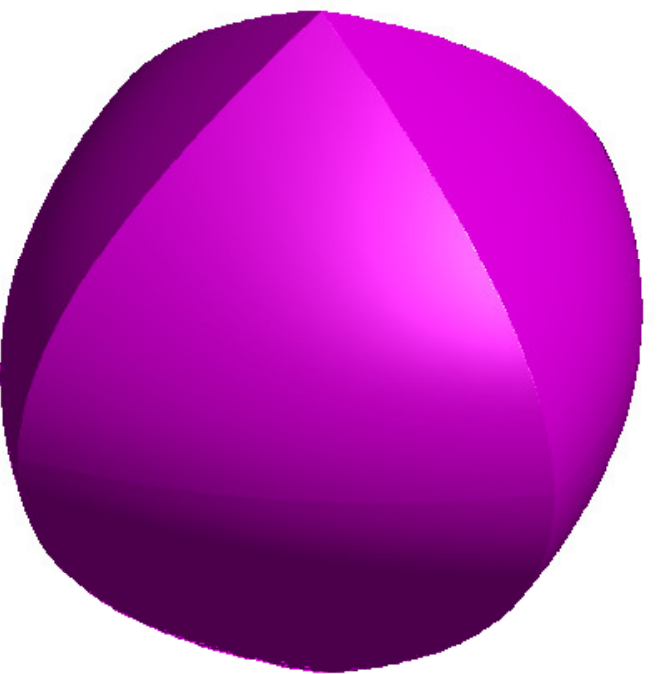}\label{subfig:normb}} \\
    \subfigure[Structured sparse penalty.]{\includegraphics[width=0.38\linewidth]{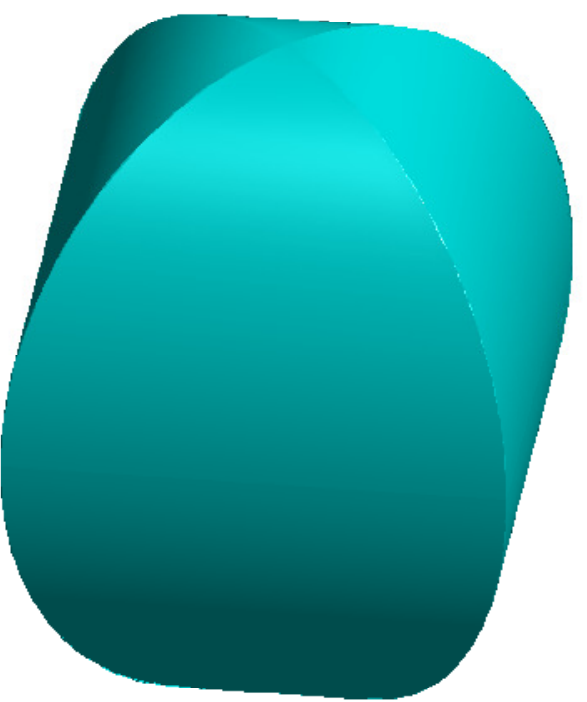}\label{subfig:normc}} \hfill
    \subfigure[Structured sparse penalty.]{\includegraphics[width=0.44\linewidth]{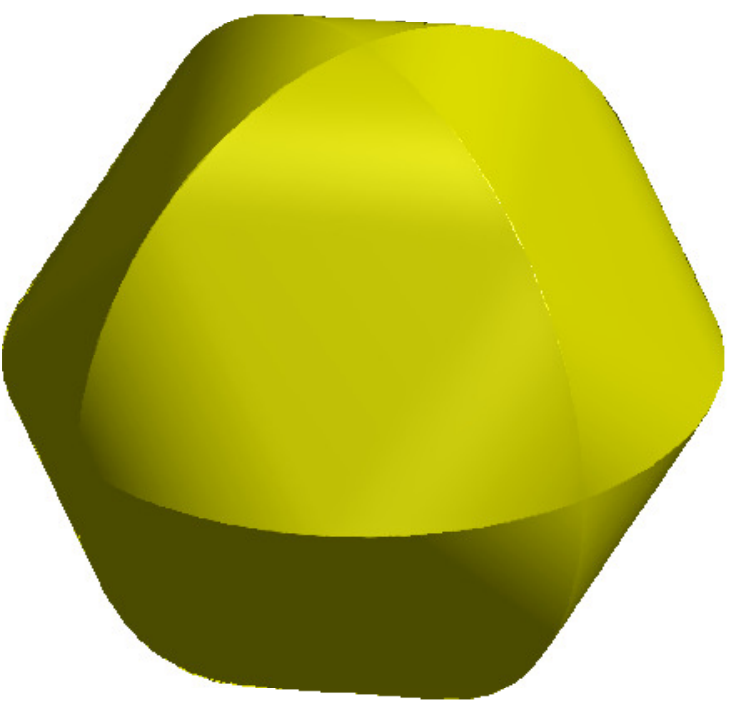}\label{subfig:normd}} 
    \caption{Visualization in three dimensions of unit balls corresponding to
    various sparsity-inducing norms. \subref{subfig:norma}: Group Lasso
 penalty; \subref{subfig:normb}: hierarchical penalty of~\citet{zhao};
 \subref{subfig:normc} and~\subref{subfig:normd}: examples of structured
 sparsity-inducing penalties of~\citet{jacob2009}. Figure borrowed
 from~\citet{bach2012}, produced by Guillaume Obozinski.}
    \label{fig:structured}
\end{figure}

\paragraph{Spectral sparsity.}
Another form of parsimony has been devised in the spectral
domain~\citep{fazel2001,Srebro2005}.  For estimation problems where model
parameters are matrices, the rank has been used as a natural regularization
function.  The rank of a matrix is equal to the number of non-zero singular
values, and thus, it can be interpreted as the~$\ell_0$-penalty of the matrix
spectrum.  Unfortunately, due to the combinatorial nature of~$\ell_0$, the rank
penalization typically leads to intractable optimization problems.

A natural convex relaxation has been introduced in the control theory
literature by~\citet{fazel2001} and consists of computing the~$\ell_1$-norm of
the spectrum---that is, simply the sum of the singular values.  The resulting
penalty appears under different names, the most common ones being the trace,
nuclear, or Schatten norm. It is defined for a matrix~$\A$ in~$\Real^{p \times
k}$ with~$k \geq p$ as
\begin{displaymath}
   \|\A\|_* \defin \sum_{i=1}^p s_i(\A),
\end{displaymath}
where~$s_i(\A)$ is the~$i$-th singular value of~$\A$.
Traditional applications of the trace norm in machine learning are matrix
completion or collaborative filtering~\citep{pontil,jake}.
These problems have become popular with the need of scalable recommender
systems for video streaming providers. The goal is to infer movie preferences
for each customer, based on their partial movie ratings. Typically, the matrix
is of size $p \times k$, where $p$ is the number of movies and~$k$ is the
number of users. Each user gives a score for a few movies, corresponding
to some entries of the matrix, and the recommender system tries to infer
the missing values. Similar techniques have also recently been used in other
fields, such as in genomics to infer missing genetic
information~\citep{chi2013}.

   \section{Dictionary learning}\label{sec:sparsecoding}
   We have previously presented various formulations where a signal~$\x$
in~$\Real^m$ is approximated by a sparse linear combination of a few columns of
a matrix~$\D$ in~$\Real^{m \times p}$.  In the context of signal and image processing,
this matrix is often called \emph{dictionary} and its columns~\emph{atoms}.  As
seen in Section~\ref{sec:wavelets}, a large amount of work has been devoted in
the wavelet literature for designing a good dictionary adapted to natural
images.

In neuroscience,~\citet{field1996,olshausen1997} have proposed a significantly
different approach to sparse modeling consisting of adapting the dictionary to
training data.  Because the size of natural images is too large for learning a
full matrix~$\D$, they have chosen to learn the dictionary on natural image
patches, \eg, of size $m=16\times 16$ pixels, and have demonstrated that their
method could automatically discover interpretable structures. We discuss this
topic in more details in Section~\ref{chapter:patches}. 

The motivation of~\citet{field1996,olshausen1997} was to show that the
structure of natural images is related to classical theories of the mammalian
visual cortex. Later, dictionary learning found numerous applications in 
image restoration, and was shown to significantly
outperform off-the-shelf bases for signal reconstruction~\citep[see,
\eg,][]{elad2006,mairal2008,mairal2009,protter2009,yang2010}.

Concretely, given a dataset of $n$~training signals~$\X=[\x_1,\ldots,\x_n]$,
dictionary learning can be formulated as the following minimization problem
\begin{equation}
   \min_{\D \in \CC, \A \in \Real^{p \times n}} \sum_{i=1}^n
   \frac{1}{2}\|\x_i-\D\alphab_i\|_2^2 + \lambda \psi(\alphab_i),\label{eq:dictlearning}
\end{equation}
where~$\A = [\alphab_1,\ldots,\alphab_n]$ carries the decomposition coefficients of the signals~$\x_1,\ldots,\x_n$,
~$\psi$ is sparsity-inducing regularization function, and~$\CC$ is typically chosen
as the following set:
\begin{displaymath}
   \CC  \defin  \{ \D \in \Real^{m \times p} : \forall j~~ \|\d_j\|_2 \leq 1 \}.
\end{displaymath}
To be more precise, \citet{field1996} proposed
several choices for~$\psi$; their experiments were for instance conducted with the
$\ell_1$-norm, or with the smooth function $\psi(\alphab)\defin \sum_{j=1}^p
\log(\varepsilon + \alphab[j]^2)$, which has an approximate sparsity-inducing
effect.  The constraint $\D \in \CC$ was also not explicitly modeled in the
original dictionary learning formulation; instead, the algorithm
of~\citet{field1996} includes a mechanism to control and rescale the~$\ell_2$-norm of the
dictionary elements.
Indeed, without such a mechanism, the norm of~$\D$ would arbitrarily go to
infinity, leading to small values for the coefficients~$\alphab_i$ and making
the penalty~$\psi$ ineffective.

The number of samples~$n$ is typically large, whereas the signal dimension~$m$
is small.  The number of dictionary elements~$p$ is often chosen larger
than~$m$---in that case, the dictionary is said to be
\emph{overcomplete}---even though a choice~$p < m$ often leads to reasonable
results in many applications. For instance, a typical setting would be to have
$m=10 \times 10$ pixels for natural image patches, a dictionary of size~$p=256$,
and more than~$100\,000$ training patches.

A large part of this monograph is related to dictionary learning and thus 
we only briefly discuss this matter in this introduction. Section~\ref{chapter:patches}
is indeed devoted to unsupervised learning techniques for natural image
patches, including dictionary learning; Sections~\ref{chapter:image}
and~\ref{chapter:vision} present a large number of applications in image
processing and computer vision; how to solve~(\ref{eq:dictlearning}) is
explained in Section~\ref{sec:optimdict} about optimization.

\paragraph{Matrix factorization point of view.}
An equivalent representation of~(\ref{eq:dictlearning}) is the following
regularized matrix factorization problem
\begin{equation}
   \min_{\D \in \CC, \A \in \Real^{p \times n}} \frac{1}{2}\|\X-\D\A\|_{\fro}^2 + \lambda \Psi(\A),\label{eq:dictlearning2}
\end{equation}
where~$\Psi(\A) = \sum_{i=1}^n \psi(\alphab_i)$. Even though
reformulating~(\ref{eq:dictlearning}) as~(\ref{eq:dictlearning2}) is simply a
matter of using different notation, seeing dictionary learning as a matrix factorization
problem opens up interesting perspectives. In particular, it makes obvious 
some links with other unsupervised learning approaches such as non-negative
matrix factorization~\citep{paatero1994}, clustering techniques such as K-means, and
others~\citep[see][]{mairal2010}. These links will be further developed in
Section~\ref{chapter:patches}.

\paragraph{Risk minimization point of view.}
Dictionary learning can also be seen from a machine learning point of view.
Indeed, dictionary learning can be written as
\begin{equation*}
   \min_{\D \in \CC} \left\{ f_n(\D) \defin \frac{1}{n}\sum_{i=1}^n L(\x_i,\D) \right\},\label{eq:dictlearn3}
\end{equation*}
where~$L : \Real^m \times \Real^{m \times p}$ is a loss function defined as
\begin{displaymath}
   L(\x,\D) \defin \min_{\alphab \in \Real^p} \frac{1}{2}\|\x-\D\alphab\|_2^2 + \lambda\psi(\alphab).
\end{displaymath}
The quantity $L(\x,\D)$ should be small if $\D$ is ``good'' at representing the
signal $\x$ in a sparse fashion, and large otherwise. Then, $f_n(\D)$ is called
the \emph{empirical cost}. 

However, as pointed out by \citet{bottou2008tradeoffs}, one is usually not interested in the
exact minimization of the empirical cost $f_n(\D)$ for a fixed~$n$, which may
lead to overfitting on the training data, but instead in the minimization of
the {\em expected cost}, which measures the quality of the dictionary on new
unseen data:
\begin{equation*}
   f(\D) \defin \EE_{\x}[L(\x,\D)] = \lim_{n \to \infty} f_n(\D) \as,\label{eq:expect}
\end{equation*}
where the expectation is taken relative to
the (unknown) probability distribution of the
data.\footnote{We use ``a.s.'' to denote almost sure convergence.}

The expected risk minimization formulation is interesting since it paves the
way to stochastic optimization techniques when a large amount of data is available \citep{mairal2010} and to theoretical
analysis \citep{maurer2010dimensional,vainsencher2011,gribonval2013}, which are developed in
Sections~\ref{sec:optimdict} and~\ref{subsec:theory}, respectively.

\paragraph{Constrained variants.}
Following the original formulation of \citet{field1996,olshausen1997}, we have
chosen to present dictionary learning where the regularization function is used
as a penalty, even though it can also be used as a constraint as in~(\ref{eq:lasso}). Then, natural variants of~(\ref{eq:dictlearning}) are
\begin{equation}
   \min_{\D \in \CC, \A \in \Real^{p \times n}} \sum_{i=1}^n \frac{1}{2}\|\x_i-\D\alphab_i\|_2^2  \st \psi(\alphab_i) \leq \mu.\label{eq:dictlearning4}
\end{equation}
or 
\begin{equation}
   \min_{\D \in \CC, \A \in \Real^{p \times n}} \sum_{i=1}^n \psi(\alphab_i)  \st \|\x_i-\D\alphab_i\|_2^2 \leq \varepsilon.\label{eq:dictlearning3}
\end{equation}
Note that~(\ref{eq:dictlearning4}) and~(\ref{eq:dictlearning3}) are not
equivalent to~(\ref{eq:dictlearning}). For instance, problem~(\ref{eq:dictlearning4}) can be reformulated using a Lagrangian function~\citep{boyd.convex} as
\begin{displaymath}
   \min_{\D \in \CC}  \sum_{i=1}^n \left(\max_{\lambda_i \geq 0} \min_{\alphab_i \in \Real^p} \frac{1}{2}\|\x_i-\D\alphab_i\|_2^2  +\lambda_i (\psi(\alphab_i)-\mu)\right),
\end{displaymath}
where the optimal~$\lambda_i$'s are not necessarily equal to each other, and
their relation with the constraint parameter~$\mu$ is
unknown in advance. A similar discussion can be conducted
for~(\ref{eq:dictlearning3}) and it is thus important in practice to choose one of the
formulations~(\ref{eq:dictlearning}), (\ref{eq:dictlearning4}), or
(\ref{eq:dictlearning3}); the best one depends on the problem at hand and there is no general rule for preferring one instead of another.

   \section{Compressed sensing and sparse recovery}\label{sec:compressedsensing}
   Finally, we conclude our historical tour of parsimony with recent theoretical
results obtained in signal processing and statistics. We focus on methods based
on the $\ell_1$-norm, \ie, the basis pursuit formulation
of~(\ref{eq:lasso})---more results on structured sparsity-inducing norms
are presented by~\citet{bach2012b}.

Most analyses rely on particular assumptions regarding the problem. We start this section
with a cautionary note from \citet{hocking1976}:
\begin{quote}
   \emph{The problem of selecting a subset of independent or predictor variables is usually described in an idealized setting. That is, it is assumed that (a) the analyst has data on a large number of potential variables which include all relevant variables and appropriate functions of them plus, possibly, some other extraneous variables and variable functions and (b) the analyst has available ``good'' data on which to base the eventual conclusions. In practice, the lack of satisfaction of these assumptions may make a detailed subset selection analysis a meaningless exercise.}
\end{quote}
In this section, we present such theoretical results where the assumptions are often not met in practice, but also results that  either (1) can have an impact on the practice of sparse recovery or (2) do not need strong assumptions.

\paragraph{From support recovery to signal denoising.}
Given a signal $\x$ in $\Real^m$ and a dictionary $\D$ in $\Real^{m \times p}$ with $\ell_2$-normalized columns, throughout this section, we assume that~$\x$ is generated as  $\x = \D \alphab^\star + \boldsymbol{\varepsilon}$ with a sparse vector $\alphab^\star$ in $\Real^p$ and an additive noise $\boldsymbol{\varepsilon}$ in $\Real^m$. For simplicity, we consider $\alphab^\star$ and $\D$ as being deterministic while the noise is random, independent and identically distributed, with zero mean and finite variance $\sigma^2$.

The different formulations presented earlier in Section~\ref{sec:l1}, for
instance basis pursuit, provide estimators $\hatalphab$ of the ``true''
vector~$\alphab^\star$. Then, the three following goals have been studied in
sparse recovery, typically in decreasing order of hardness:
\begin{itemize}
\item \textbf{support recovery and sign consistency}: we want the support of $\hatalphab$ (\ie, the set of non-zero elements) to be the same or to be close to the one of $\alphab^\star$. The problem is often called ``model selection'' in statistics and ``support recovery'' in signal processing; it is often refined to the estimation of the full sign pattern---that is, among the non-zero elements, we also want the correct sign to be estimated.
\item \textbf{code estimation}: the distance $\| \hatalphab -
   \alphab^\star\|_2$ should be small. In statistical terms, this correspond to
   the ``estimation'' of $\alphab^\star$. 
\item \textbf{signal denoising}: regardless of code estimation, we simply want
   the distance $\| \D \hatalphab - \D \alphab^\star\|_2$ to be small; the goal
   is not to obtain exactly $\alphab^\star$, but simply to obtain a good
   denoised version of the signal $\x  = \D \alphab^\star + \boldsymbol{\varepsilon}$. 
\end{itemize}

A good code estimation performance does imply a good denoising performance but
the converse is not true in general. In most analyses, support recovery is
harder than code estimation. As detailed below, the sufficient conditions for
good support recovery lead indeed to good estimation.

\paragraph{High-dimensional phenomenon.} 
Without sparsity assumptions, even the simplest denoising task can only
achieve denoising errors of the order 
$\frac{1}{n}\| \D \hatalphab - \D \alphab^\star\|_2^2 \approx \frac{ \sigma^2 p}{m}$, which is attained for ordinary least-squares, and is the best possible~\citep{tsybakov2003optimal}.
Thus, in order to have at least a good denoising performance (prediction
performance in statistics), either the noise~$\sigma$ is small, or the signal
dimension $m$ (the number of samples) is much larger than the
number of atoms $p$ (the number of variables to select from).

When making the assumption that the true code~$\alphab^\star$ is sparse  
with at most $k$ non zeros, smaller denoising errors can be obtained.
In that case, it is possible indeed to replace the scaling $\frac{ \sigma^2 p}{m}$ 
by $\frac{ \sigma^2 k \log p}{m}$. Thus, even when $p$ is much larger
than $m$, as long as $\log p$ is much smaller than $m$, we may have good
prediction performance. However, this high-dimensional phenomenon
currently\footnote{Note that recent research suggests that this
fast rate of $\frac{ \sigma^2 k \log p}{m}$ cannot be achieved by
polynomial-time algorithms~\citep{zhang2014lower}.}  comes at a price: (1)
either an exhaustive search over the subsets of size $k$ needs to be
performed~\citep{massart-concentration,bunea2007aggregation,raskutti2011minimax}
or (2) some assumptions have to be made regarding the dictionary $\D$, which
we now describe.

\paragraph{Sufficient conditions for high-dimensional fast rates.}
Most sufficient conditions have the same flavor. A dictionary behaves well
if the off-diagonal elements of $\D^\top \D$ are small, in other words, if
there is little correlation between atoms. However, the notion of coherence
(the maximal possible correlation between two atoms) was the first to
emerge~\citep[see, \eg,][]{elad2002generalized,gribonval2003sparse}, but it is
not sufficient to obtain a high-dimensional phenomenon.

In the noiseless setting, \citet{candes2005decoding} and \citet{candes2006}
introduced the \emph{restricted isometry property} (RIP), which states that all
submatrices of size $k \times k$ of $\D^\top \D$ should be close to isometries,
that is, should have all of their eigenvalues sufficiently close to one. With
such an assumption, the Lasso behaves well: it recovers the true support and estimates the code $\alphab^\star$ and the signal~$\D\alphab^\star$ with an error of order $\frac{ \sigma^2 k \log p}{m}$.

The main advantage of the RIP assumption is that one may exhibit dictionaries
for which it is satisfied, usually obtained by normalizing a matrix $\D$
obtained from independent Gaussian entries, which may satisfy the condition that $ (k
\log p ) / m$  remains small. Thus, the sufficient conditions are not
vacuous. However, the RIP assumption has two main drawbacks: first,
it cannot be checked on a given dictionary $\D$ without
checking all $O(p^k)$ submatrices of size $k$; second, it may be weakened if
the goal is support recovery or simply estimation performance (code recovery).

There is therefore a need for sufficient conditions that can be checked in
polynomial time while ensuring sparse recovery. However, none currently exists
with the same scalings between $k$, $p$ and $m$ \citep[see,
\eg,][]{juditsky2011verifiable,d2011testing}. When refining to support
recovery,~\citet{fuchs,tropp,martin} provide sufficient and necessary conditions
of a similar flavor than requiring that all submatrices of size $k$ are
sufficiently close to orthogonal. For the tightest conditions, see, \eg,~\citet{buhlmann2011statistics}. Note that these conditions are also
typically sufficient for algorithms that are not based explicitly on convex
optimization~\citep{tropp}.

Finally, it is important to note that (a) most of the theoretical results advocate a value for the regularization parameter $\lambda$ proportional to
$\sigma \sqrt{ m \log p}$, which unfortunately depends on the noise level $\sigma$ (which is typically unknown in practice), and that (b) for orthogonal dictionaries, all of these assumptions are met; however, this imposes $p=m$.

\paragraph{Compressed sensing vs. statistics.}
Our earlier quote from \citet{hocking1976} applies to sparse estimation as used
in statistics for least-squares regression, where the dictionary $\D$ is simply
the input data and $\x$ the output data. In most situations, there are some
variables, represented by columns of $\D$, that are heavily correlated.
Therefore, in most practical situations, the assumptions do not apply. However,
it does not mean that the high-dimensional phenomenon does not apply in a
weaker sense (see the next paragraph for slow rates); moreover it is important to
remark that there are other scenarios, beyond statistical variable selection,
where the dictionary $\D$ may be chosen. 

In particular, in signal processing, the dictionary $\D$ may be seen as
\emph{measurements}---that is, we want to encode $\alphab^\star$ in $\Real^p$
using~$m$ linear measurements $ \D \alphab^\star$ in $\Real^m$ for $m$
much larger than $p$. What the result of \citet{candes2005decoding} alluded to
earlier shows is that for random measurements, one can recover a
$k$-sparse $\alphab^\star$ from (a potentially noisy version of) $\D \alphab^\star$,
with overwhelming probability, as long as $ (k \log p ) / m$  remains small.
This is  the core idea behind compressive sensing. See more details
from~\citet{donoho2006,candes2008introduction}.

\paragraph{High-dimensional slow rates.} 
While sufficient conditions presented earlier are often not met beyond random
dictionaries, for the basis pursuit/Lasso formulation from~(\ref{eq:lasso}), the
high-dimensional phenomenon may still be observed, but only for the
denoising situation and with a weaker result. Namely, as shown
by~\citet{greenshtein2006best} and~\citet[][Corollary
6.1]{buhlmann2011statistics},  \emph{without assumptions regarding
correlations}, we have $\frac{1}{n}\| \D \hatalphab - \D \alphab^\ast\|_2^2
\approx \sqrt{ \frac{  \sigma^2 k^2 \log p }{m} }$. 
Note that this slower rate does not readily extend to non-convex formulations.

\paragraph{Impact on dictionary learning.}
The dictionary learning framework which we describe in this monograph relies on
sparse estimation, that is, given the dictionary $\D$, the estimation of the
code $\alphab$ may be analyzed using the tools we have presented in this
section. However, the dictionaries that are learned do not exhibit low
correlations between atoms and thus theoretical results do not apply (see
dedicated results in the next section). However, they suggest that (a) the
codes $\alphab$ may not be unique in general and caution has to be observed
when representing a signal $\x$ by its code $\alphab$, (b) methods based on
$\ell_1$-penalization are more robust as they still provably perform denoising
in presence of strong correlations and (c) incoherence promoting may be used in order
to obtain better-behaved dictionaries~\citep[see, e.g.,][]{ramirez2009sparse}.

   \section{Theoretical results about dictionary learning}\label{subsec:theory}
   Dictionary learning, as formulated in Eq.~(\ref{eq:dictlearning}), may be seen
from several perspectives, mainly as an unsupervised learning or a matrix
factorization problem. While the supervised learning problem from the previous
section (sparse estimation of a single signal given the dictionary) comes with
many theoretical analyses, there are still few theoretical results of the same
kind for dictionary learning. In this section, we present some
of them. For simplicity, we assume that we penalize with the $\ell_1$-norm and
consider the minimization of \begin{equation}
   \min_{\D \in \CC, \A \in \Real^{p \times n}} \sum_{i=1}^n
   \frac{1}{2}\|\x_i-\D\alphab_i\|_2^2 + \lambda \| \alphab_i \|_1,\label{eq:dictlearningth}
\end{equation}
where~$\A = [\alphab_1,\ldots,\alphab_n]$ carries the decomposition coefficients of the signals~$\x_1,\ldots,\x_n$,
 and~$\CC$ is   chosen
as the following set:
\begin{displaymath}
   \CC  \defin  \{ \D \in \Real^{m \times p} : \forall j~~ \|\d_j\|_2 \leq 1 \}.
\end{displaymath}

\paragraph{Non-convex optimization problem.}

After imposing parsimony through the $\ell_1$-norm, given $\D$ the objective
function is convex in $\alphab$, given $\alphab$ the objective and constraints
are convex in $\D$. However, the objective function is not jointly convex,
which is typical of unsupervised learning formulations. Hence, we consider an
optimization problem for which it is not possible in general to guarantee that
we are going to obtain the global minimum; the same applies to EM-based
approaches~\citep{dempster1977} or K-means~\citep[see, \eg,][]{bishop2006pattern}.

\paragraph{Symmetries.}
Worse, the problem in Equation~(\ref{eq:dictlearningth}) exhibits several
symmetries and admits multiple global optima, and the descent methods that are
described in Section~\ref{chapter:optim} will also have the same invariance
property.  For example, the columns of~$\D$ and rows of~$\A$ can be submitted
to~$p!$ arbitrary (but consistent) permutations. There are also sign
ambiguities: in fact, if~$(\D,\A)$ is solution
of~(\ref{eq:dictlearningth}), so is $(\D\diag(\varepsilonb),\diag(\varepsilonb)\A)$, where $\varepsilonb$ is a vector in~$\{-1,+1\}^p$ that
carries a sign pattern. Therefore, for every one of the $p!$ possible atom
orders, the dictionary learning problem admits $2^p$ equivalent solutions.  In
other words, for a solution $(\D,\A)$, the pair
$(\D\Gammab,\Gammab^{-1}\A)$ is also solution, where $\Gammab$ is a {\em generalized
permutation} formed by the product of a diagonal matrix with $+1$ and~$-1$'s on its
diagonal with a permutation matrix (in particular, $\Gammab$ is thus
orthogonal). 

The fact that there are no other transformations $\Gammab$ such that
$(\D\Gammab,\Gammab^{-1}\A)$ is also solution of Eq.~(\ref{eq:dictlearningth})
for {\em all} solutions $(\D,\A)$ of this problem follows from a general
property of isometries of the $\ell_q$ norm for finite values of $q$ such that
$q\ge 1$ and $q\neq 2$~\citep{LiSo94}.

\paragraph{A manifold interpretation of sparse coding with projective geometry.}
The interpretation of sparse coding as a {\em locally} linear representation of
a non-linear ``manifold'' is problematic because certain signals/features are
best thought of as ``points'' in some space rather than vectors. For example,
what does it mean to ``add'' two natural image patches? The simplest point
structure that one can think of is affine or projective, and we show below that
sparse coding indeed admits a natural interpretation in this setting, at least
for normalized signals.

Indeed, Let us restrict our attention from now on to unit-norm signals,
as is customary in image processing after the usual centering and
normalization steps, which will be studied in
Section~\ref{subsec:preprocess}.\footnote{Note that the fact that the
individual signals are centered does not imply that the dictionary elements
are.} Note that the dictionary elements $\d_j$ in a solution
$\D=[\d_1,\ldots,\d_p]$ of Eq.~(\ref{eq:dictlearningth}) also have unit norm by
construction.

Let us now consider the ``half sphere''
\begin{equation}
   \SS^{m-1}_+ \defin \left\{ \d \in \SS^{m-1} : \text{the first non-zero coefficient of~$\d$ is positive}\right\}, \label{eq:halfsphere}
 \end{equation}
where $\SS^{m-1}$ is the unit sphere of dimension $m-1$ formed
by the unit vectors of $\Real^m$.\footnote{Similarly, one may define the set~$\SS^{m-1}_-$ by replacing ``positive'' by ``non-negative'' in~(\ref{eq:halfsphere}). The two sets~$\SS^{m-1}_+$ and~$\SS^{m-1}_-$ form a partition of~$\SS^{m-1}$ with equal volume, and indeed, each one geometrically corresponds to a half sphere.}
A direct consequence of the sign ambiguities of dictionary
learning discussed in the previous paragraph is that, for any solution
$(\D,\A)$ of Eq.~(\ref{eq:dictlearningth}), there is an equivalent solution
$(\D',\A')$ with all columns of $\D'$ in $\SS^{m-1}_+$.
Indeed, suppose some column $\d_j$ is not in
$\SS^{m-1}_+$, and let $\d_j[i]$ be its first non-zero coefficient (which is
necessarily negative since~$\d_j \notin \SS^{m-1}_+$).  We can replace $\d_j$ by $-\d_j$ and the corresponding
row of the matrix $\A$ by its opposite to construct an equivalent minimum of
the dictionary learning problem in $\SS^{m-1}_+\times\Real^{p\times n}$.

Likewise, we can restrict the signals $\x_i$ to lie in
$\SS^{m-1}_+$ since replacing~$\x_i$ by its opposite for a given
dictionary simply amounts to replacing the code $\alphab_i$ by
its opposite. Note that this identifies a patch with its ``negative'',
but remember that the sign of its code elements is not uniquely
defined in the first place in conventional dictionary learning
settings (it is uniquely defined if we insist that the dictionary
elements belong to $\SS^{m-1}_+$).
This allows us to identify both the dictionary elements and the signals
with points in the projective space $\PPP^{m-1}=P(\Real^{m})$. Any~$k$ 
independent column vectors of $\D$ define a $(k-1)$-dimensional
projective subspace of $\PPP^{m-1}$ (see Figure~\ref{fig:proj}).

\begin{figure}[hbtp]
\centerline{%
\includegraphics[width=0.8\textwidth]{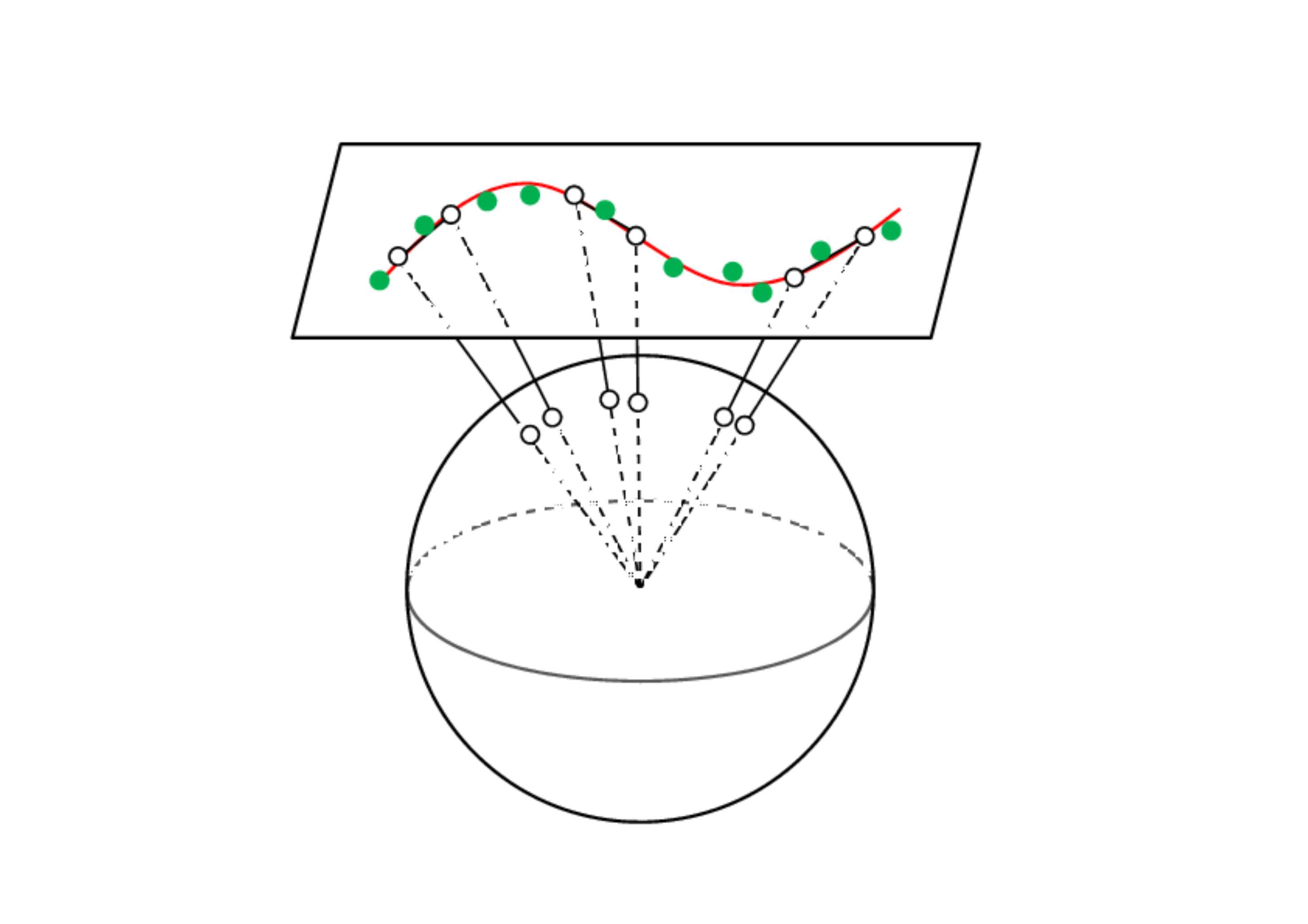}}
\caption{\small An illustration of the projective interpretation
of sparse coding.}
\label{fig:proj}
\end{figure}
 
In particular, if the data signals are assumed to be sampled from a
 ``noisy manifold'' of dimension $k-1$ embedded in $\PPP^{m-1}$, an
approximation of some sample $\x$ by a sparse linear combination
of $k$ elements of $\D$ can be thought of as lying in (or near)
the $k-1$ dimensional ``tangent plane'' there.

\paragraph{Consistency results.}
Given the dictionary learning problem from a finite number of signals, there
are several interesting theoretical questions to be answered. The first natural
question is to understand the properties of the cost function that is minimized
when the number of signals tends to infinity, and in particular how it
converges to the expectation under the signal generating
distribution~\citep{vainsencher2011,maurer2010dimensional}. Then, given
the non-convexity of the optimization problems, local consistency results may
be obtained, by showing that the cost function which is minimized has a local
minimum around the pairs $(\D^\star,\A^\star)$ that has generated the data. Given
RIP-based assumptions on the dictionary $\D^\star$ and number of non zero elements in the columns of
$\A^\star$, and the noise level, \citet{gribonval2014sparse} show that the cost
function defined in Eq.~(\ref{eq:dictlearningth}) has a local minimum around
$(\D^\star,\A^\star)$  with high probability, as long as the number of signals
$n$ is greater than a constant times $m p^3$.
In the noiseless case, earlier results have been also obtained~\citet{gribonval2010dictionary,geng2011local},
and recently it has been shown that under additional
assumptions, a good initializer could be found so that the previous type of
local consistency results can be applied~\citep{agarwal2013learning}. 

Finally, recent algorithms have emerged in the theoretical science community,
which are not explicitly based on optimization~\citep[see,
\eg,][]{spielman2013exact,recht2012factoring,arora2013new}. These come with
global convergence guarantees (with additional assumptions regarding the
signals), but their empirical performance on concrete signal and image processing
problems have not yet been demonstrated.

\chapter{Discovering the Structure of Natural Images} \label{chapter:patches}
   Dictionary learning was first introduced by \citet{field1996,olshausen1997} as
an unsupervised learning technique for discovering and visualizing the
underlying structure of natural image patches. The results were found
impressive by the scientific community, and dictionary learning gained
early success before finding numerous applications in image
processing~\citep{elad2006,mairal2008,mairal2009,yang2010}. Without making any a priori
assumption about the data except a parsimony principle, the
method is able to produce dictionary elements that resemble Gabor
wavelets---that is, spatially localized oriented basis
functions~\citep{gabor1946,daugman1985}, as illustrated for example in
Figure~\ref{fig:gabors}.

Surprisingly, the goal of automatically learning local structures in natural
images was neither originally achieved in image processing, nor in
computer vision. The original motivation
of~\citet{field1996,olshausen1997} was in fact to establish a relation between
the statistical structure of natural images and the properties of neurons from
area V1 of the mammalian visual cortex.\footnote{Neuroscientists have identified several areas
in the human visual cortex. Area V1 is considered to be at the earliest
stage of visual processing.} The emergence of Gabor-like patterns from natural
images was
considered a significant result by neuroscientists. To better understand this
fact, we will briefly say a few words about the importance of Gabor models
within experimental studies of the visual cortex.

Since the pioneer work of \citet{hubel1968}, it is known that some visual
neurons are responding to particular image features, such as oriented edges.
Specifically, displaying oriented bars at a particular location in a visual
field's subject may elicit neuronal activity in some cells. Later,
\citet{daugman1985} demonstrated that fitting a linear model to neuronal
responses given a visual stimuli may produce filters that can be
well approximated by a two-dimensional Gabor function.
Models based on such filters have been subsequently widely used, and are still
present in state-of-the-art predictive models of the neuronal activity in
V1~\citep{kay2008,nishimoto2011}. By showing that Gabor-like visual patterns
could be automatically obtained from the statistics of natural images,
\citet{field1996,olshausen1997} provided support for the classical Gabor
model for V1 cells, even though such an intriguing phenomenon does not
constitute a strong evidence that the model matches real biological processes.
It is indeed commonly admitted that little is known about the early visual
cortex~\citep{olshausen2005,carandini2005} despite intensive studies and
impressive results achieved by predictive models.

In this chapter, we show how to use dictionary learning for discovering latent
structures in natural images, and we discuss other unsupervised learning
techniques. We present them under the unified point of view of \emph{matrix
factorization}, which makes explicit links between different learning methods.
Given a training set $\X=[\x_1,\ldots,\x_n]$ of signals---here, natural image
patches---represented by vectors~$\x_i$ in~$\Real^m$, we wish to find an
approximation~$\D\alphab_i=\sum_{j=1}^p \alphab_i[j]\d_j$ for each signal~$\x_i$, where
$\D=[\d_1,\ldots,\d_p]$ is a matrix whose columns are called ``dictionary elements'', and the
vectors~$\alphab_i$ are decomposition coefficients. In other words, we wish to
find a factorization
\begin{displaymath}
    \X \approx \D \A,
\end{displaymath}
where the matrix~$\A=[\alphab_1,\ldots,\alphab_n]$ in~$\Real^{p \times n}$
carries the coefficients~$\alphab_i$. Many unsupervised learning techniques can be
cast as matrix factorization problems; this includes dictionary learning,
principal component analysis (PCA), clustering or vector
quantization~\citep[see][]{nasrabadi1988,gersho1992}, non-negative matrix
factorization (NMF)~\citep{paatero1994,lee1999}, archetypal
analysis~\citep{cutler1994}, or independent component analysis
(ICA)~\citep{herault1985,bell1995,bell1997,hyvarinen2004}.  These methods
essentially differ in a priori assumptions that are made on~$\D$ and~$\A$
(sparse, low-rank, orthogonal, structured), and in the way the quality of the
approximation $\X \approx \D \A$ is measured. We will see in the rest of this
chapter how these criterions influence the results obtained on natural image
patches.

\begin{figure}
   \centering
   \subfigure[2D Gabor filter.]{\includegraphics[width=0.32\linewidth]{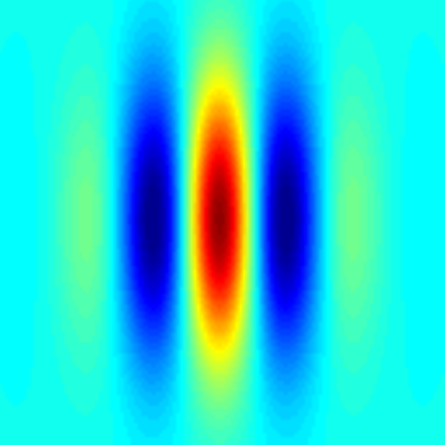}\label{subfig:gabora}} \hfill
   \subfigure[With shifted phase.]{\includegraphics[width=0.32\linewidth]{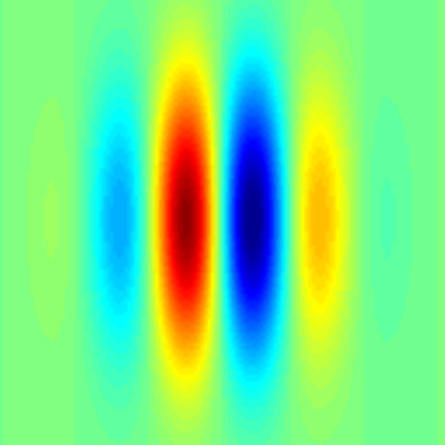}\label{subfig:gaborb}} \hfill
   \subfigure[With rotation.]{\includegraphics[width=0.32\linewidth]{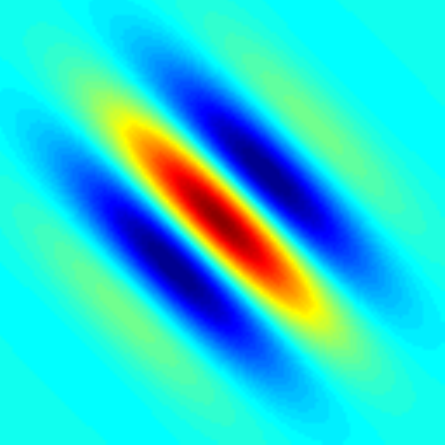}\label{subfig:gaborc}}  
   \caption{Three examples of two-dimensional Gabor filters. Red represents
      positive values and blue negative ones. The first two filters 
      correspond to the function $\displaystyle g(x,y)=
      e^{-x^2/(2\sigma_x^2)-y^2/(2\sigma_y^2)}\cos(\omega x + \varphi)$, with
      $\varphi=0$ for Figure~\subref{subfig:gabora} and $\varphi=\pi/2$ for
      Figure~\subref{subfig:gaborb}. Figure~\subref{subfig:gaborc} was obtained
      by rotating Figure~\subref{subfig:gabora}---that is, replacing $x$ and
      $y$ in the function~$g$ by~$x' = \cos(\theta)x + \sin(\theta)y$
      and~$y'=\cos(\theta)x - \sin(\theta)y$. The figure is best seen in color on a computer screen.
   }\label{fig:gabors}
\end{figure}

   \section{Pre-processing}\label{subsec:preprocess}
   When manipulating data, it is often important to choose an appropriate
pre-processing scheme. There are several reasons for changing the way
data looks like before using an unsupervised learning algorithm. For
example, one may wish to reduce the amount of noise, make the data invariant to
some transformation, or remove a confounding (unwanted) factor. Unfortunately,
the literature is not clear about which pre-processing step should be applied
given a specific problem at hand. In this section, we describe common practices
when dealing with natural image patches, such as centering, contrast
normalization, or whitening.  We study the effect of these procedures on the
image itself, and provide visual interpretation that should help the practitioner
make his choice.

\paragraph{Centering.}
The most basic pre-processing scheme consists of removing the mean intensity
from every patch. Such an approach is called ``centering the data'' in statistics
and machine learning. Borrowing some terminology from electronics, it is also called ``removing the DC
component'' in the signal processing literature. Formally, it means applying
the following update to every signal~$\x_i$:
\begin{displaymath}
   \x_i  \leftarrow \x_i  -  \left(\frac{1}{m} \sum_{j=1}^m \x_i[j]\right)\ones_m, 
\end{displaymath}
where~$\ones_m$ is the vector of size $m$ whose entries are all ones.  It can
also be interpreted as performing a left multiplication on the matrix~$\X$:
\begin{displaymath}
   \X  \leftarrow \left(\I - \frac{1}{m}\ones_m \ones_m^\top\right) \X.
\end{displaymath}
When analyzing the structure of natural patches, one is often interested in 
retrieving geometrical visual patterns such as edges, which is a concept related
to intensity variations and thus invariant to the mean intensity.
Therefore, centering can be safely used; it makes the data invariant to the mean
intensity, and the learned structures are expected to have zero mean as well. 
The effect of such a pre-processing scheme can be visualized
in Figure~\ref{subfig:man_center} for the image~\textsf{man}. We extract all
overlapping patches from the image, remove their average pixel value, and recombine the
centered patches into a new image, which is displayed in
Figure~\ref{subfig:man_center}.  Because of the overlap, every pixel is
contained in several patches; the pixel values in the reconstructed image are
thus obtained by averaging their counterparts from the centered patches. The
procedure is in fact equivalent to applying a high-pass filter to the original
image. As a result, the geometrical structures are still present in the
centered image, but some low frequencies have disappeared.

In practice, centering the data has been found useful in concrete applications,
such as image denoising with dictionary learning \citep{elad2006}.
First removing the mean component before learning the patches structure, and
then adding the mean back after processing, leads to substantially better
results than working with the raw patches directly. 

\paragraph{Variance - contrast normalization.}
After centering, another common practice is to normalize the signals to
have unit $\ell_2$-norm, or equivalently unit variance. In computer vision,
the terminology of ``contrast normalization'' is also often used. One 
motivation for such a pre-processing step is to make different regions of an
image more homogeneous and to gain invariance to illumination changes.
Whereas this is probably not useful for image reconstruction problems, it is a
key component of many visual recognition
architectures~\citep[][]{pinto2008,jarrett2009}. 
However, contrast normalization should be applied with care: the naive update $\x_i \leftarrow \x_i
/ \|\x_i\|_2$ is likely to provide poor results for patches from uniform areas---\eg, from the sky in natural scenes.  After centering, such patches do not
contain any information about the scene anymore but carry residual noise,
which will be greatly amplified by the naive $\ell_2$-normalization.
A simple way of fixing that issue consists in providing a different treatment for patches
with a small norm---say, smaller than a parameter~$\eta$---and use the following update instead:
\begin{displaymath}
   \x_i \leftarrow  \frac{1}{\max(\|\x_i\|_2,\eta)}\x_i.
\end{displaymath}
We present in Figure~\ref{subfig:man_norm} the effect of contrast normalization
where~$\eta$ is chosen to be~$0.2$ times the mean value of $\|\x_i\|_2$ in the
image, following the same patch recombination scheme as in the previous
paragraph.  Compared to Figure~\ref{subfig:man_norm}, areas with small
intensity variations have been amplified, making the image more homogeneous.
Contrast normalization can also be applied after a whitening step, which we now
present.

\paragraph{Whitening and dimensionality reduction.}
Centering consists of removing from data the first-order statistics for
every patch. Some early work dealing with natural images patches
goes a step further by removing global second-order statistics,
a procedure called
\emph{whitening} \citep{field1996,olshausen1997,bell1997,hyvarinen2009}.

Let us consider a random variable~$\x$ representing centered patches uniformly
sampled at random from natural images.  Even though the centering step is
performed at the patch level, the pixel values across centered natural images
patches have zero mean~$\mub \defin \EE[\x]=0$. The different pixels of a patch
are indeed identically distributed, and thus the vector~$\mub$ is 
constant; it is then necessarily equal to the constant zero vector since it has zero mean after
centering. Therefore, the covariance matrix of~$\x$ is the quantity $\Sigmab \defin
\EE[\x\x^\top]$. Whitening consists of finding a data transformation such
that~$\Sigmab$ becomes close to the identity matrix. As a result, the pixel
values within a whitened patch are uncorrelated.

In practice, whitening is performed by computing the eigenvalue decomposition of
the sample covariance matrix~$(1/n)\sum_{i=1}^n \x_i \x_i^\top = \U \S^2
\U^\top$, where $\U$ is an orthogonal matrix in~$\Real^{m \times m}$, and $\S$
in~$\Real^{m \times m}$ is diagonal with non-negative entries. Since the sample
covariance matrix is positive semi-definite, the eigenvalues are
non-negative and~$\S=\diag(s_1,\ldots,s_m)$ carries in fact the \emph{singular
values} of the matrix~$(1/\sqrt{n})\X$.
Then, whitening consists of applying the following update to every patch~$\x_i$:
\begin{displaymath}
   \x_i \leftarrow \U \S^\dagger \U^\top \x_i,
\end{displaymath}
where $\S^\dagger=\diag(s_1^\dagger,\ldots,s_m^\dagger)$ with $s_j^\dagger = 1/s_j$
if~$|s_j| > \varepsilon$ and~$0$ otherwise, where~$\varepsilon$ is a small threshold
to prevent arbitrarily large entries in~$\S^\dagger$. It is then easy to show
that the sample covariance matrix after whitening is diagonal with
$k=\rank(\Sigmab)$ entries equal to~$1$. Choosing~$\varepsilon$
can be interpreted as controlling the dimensionality reduction effect of the
whitening procedure.  According to~\citet{hyvarinen2009}, such a step is
important for independent component analysis.  The experiments presented in
this chapter do not include dimensionality reduction for dictionary learning
with grayscale image patches---that is, we choose~$\varepsilon=0$, since the
resulting whitened images do not seem to suffer from any visual artifact. The
case of color patches is slightly different and is discussed in the next
paragraph.

Similarly as for centering and contrast normalization, we visualize the effect
of whitening on Figure~\ref{subfig:man_white}. Removing the spatial correlation
essentially results in sharper edges. As a matter of fact, whitening can be
shown to amplify high frequencies and reduce low ones~\citep{hyvarinen2009}.
The reason is related to the nature of the principal components of natural images
that form the columns of the matrix~$\U$, which will be discussed in
Section~\ref{subsec:pca}.  In practice, whitening is often not used in image
restoration applications based on dictionary learning since it modifies the
nature of the noise.

\begin{figure}
   \subfigure[Without pre-processing.]{\includegraphics[width=0.49\linewidth]{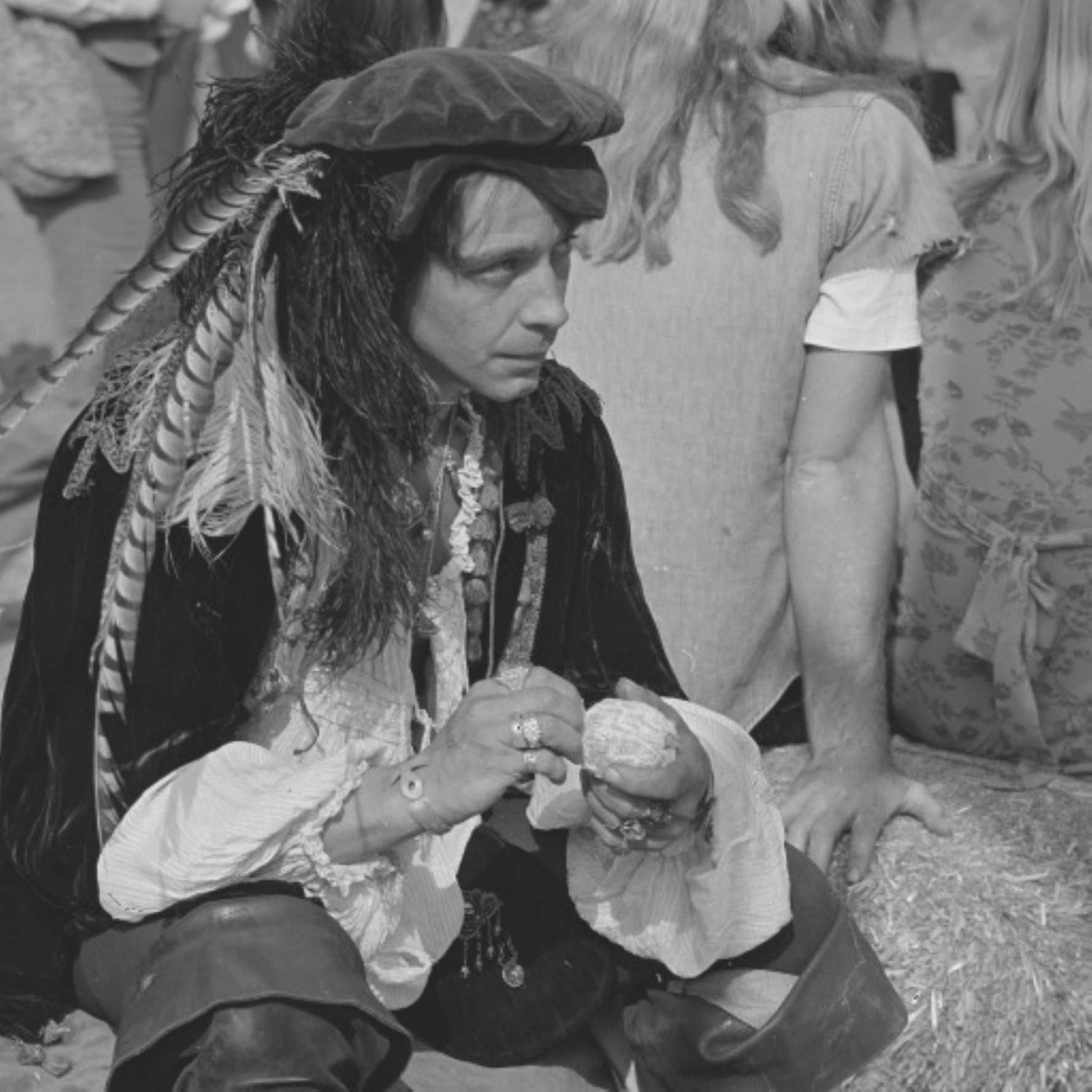}} \hfill
   \subfigure[After centering.]{\includegraphics[width=0.49\linewidth]{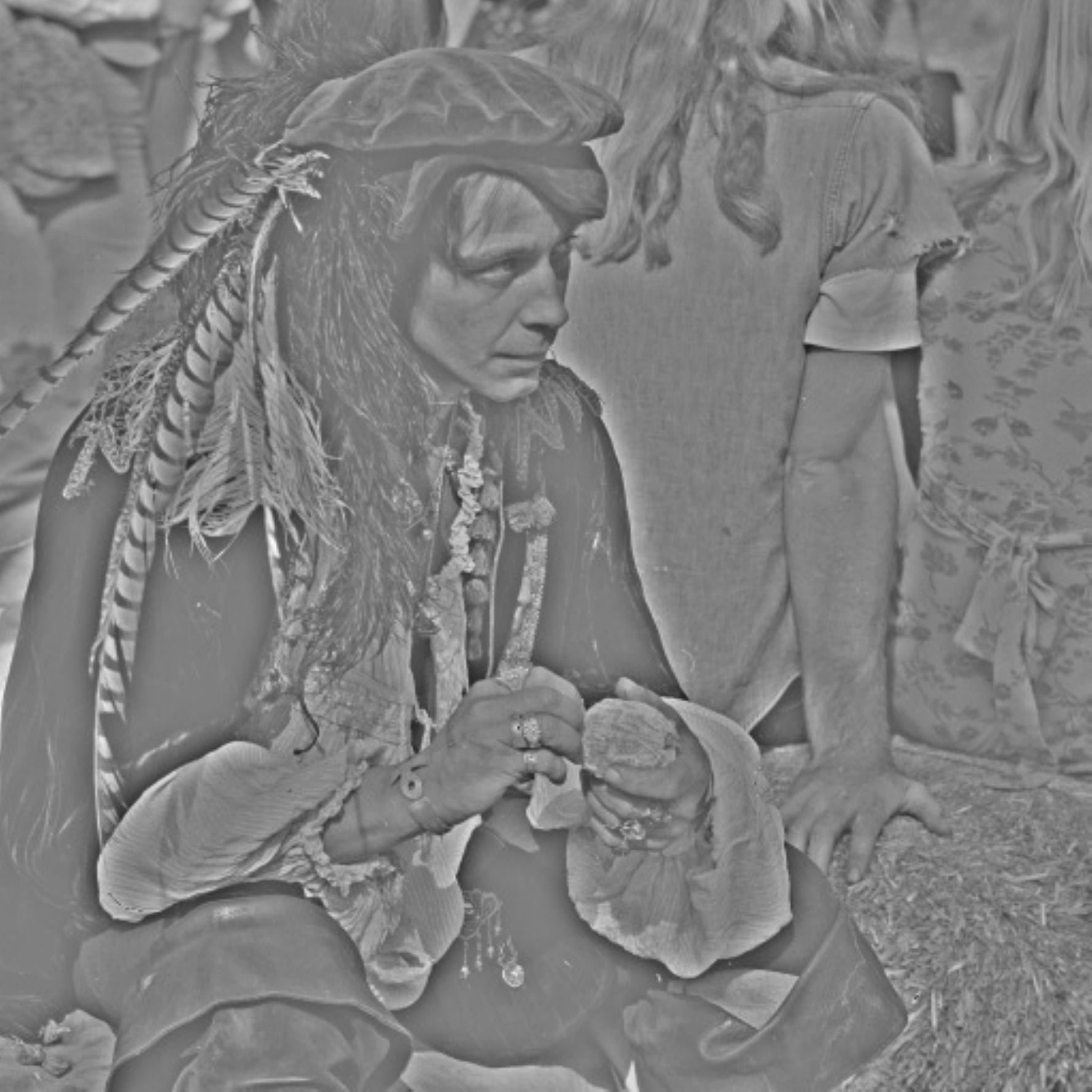}\label{subfig:man_center}} \\
   \subfigure[After centering and $\ell_2$-normalization.]{\includegraphics[width=0.49\linewidth]{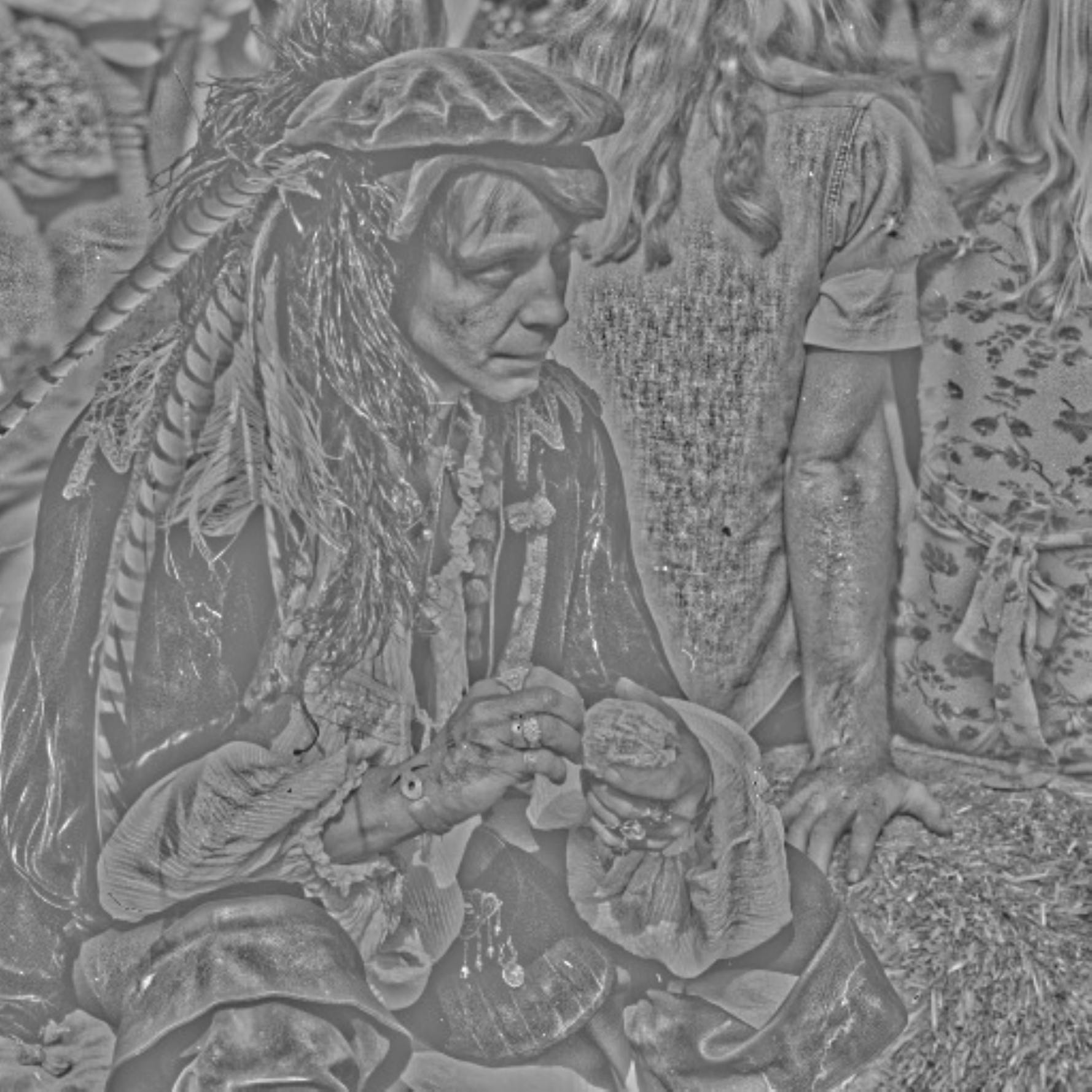}\label{subfig:man_norm}} \hfill
   \subfigure[After whitening.]{\includegraphics[width=0.49\linewidth]{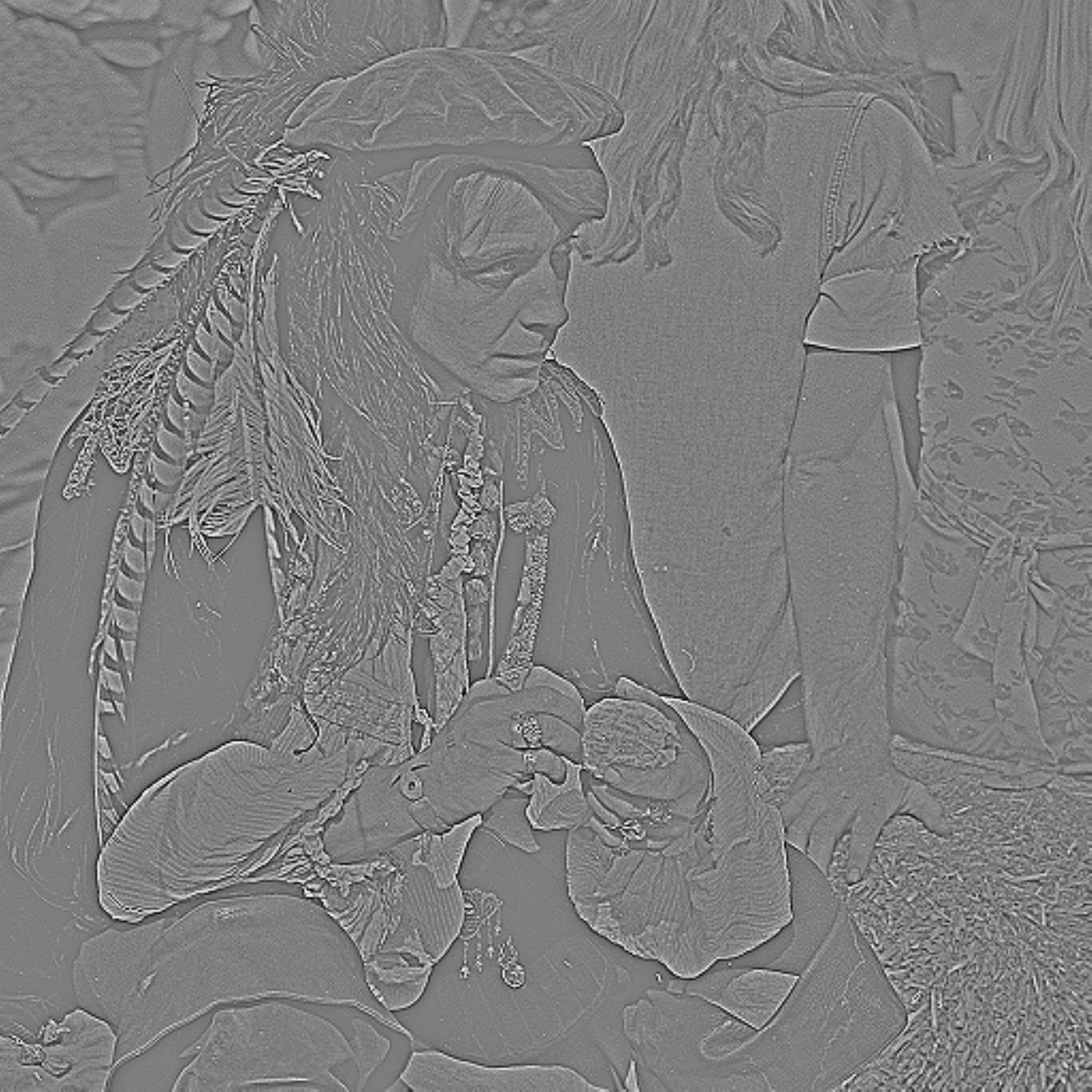}\label{subfig:man_white}} 
   \caption{The effect of various pre-processing procedures on the image
   \textsf{man}. For the figures~\subref{subfig:man_center},
\subref{subfig:man_norm} and~\subref{subfig:man_white}, the value~$0$ is
represented by a gray pixel. Since they contain negative values, pre-processed
images are shifted and rescaled for visualization purposes.}
\label{fig:preprocessing}
\end{figure}

\paragraph{Pre-processing color patches.}
Finally, we conclude this section on pre-processing by discussing the treatment
of color image patches. We assume that images are encoded in the RGB space,
and that patches are obtained by concatenating information from the three
color channels. In other words, $l \times l$ image patches are represented by
vectors~$\x_i$ of size $m = 3 \times l \times l$. 

The main question about color processing is probably whether or not the RGB
color space should be used~\citep{pratt1971,faugeras1979,sharma1997}. 
The choice of the three channels R, G, and B directly originates from our first
understanding of the nature of color. After discovering the existence of the
color spectrum,~\citet{newton1675} already suggests that the rays of light 
\emph{``impinging''} on the retina will
\begin{quote}
   \emph{``affect the sense with various colours, according
to their bigness and mixture; the biggest with the strongest colours, reds and yellow;
the least with the weakest, blue and violets; the middle with green''.}
\end{quote}
Building upon Newton's findings, it appears that~\citet{young1845} was the
first to introduce the concept of trichromatic vision. According to~\citet{maxwell1860}:
\begin{quote}
\emph{``Young appears to have originated the theory, that the three elements of colour are
determined as much by the constitution of the sense of sight as by anything external to us.
He conceives that three different sensations may be excited by light\ldots He conjectures
that these primary sensations correspond to red, green, and violet.''}
\end{quote}
Finally,~\citet{helmholtz1852}, \citet{grassmann1854}, and~\citet{maxwell1860}
proposed rigorous rules of color compositions, paving the way to the modern
treatment of color in vision processing~\citep[see][Chapter 3]{forsyth2012}.
More than a century after these early discoveries, the existence  in the eye of
three types of biological photoreceptors---respectively corresponding to red,
green, and blue wavelengths---is established~\citep[][]{nathans1986}, and the
RGB color space is widely used in electronic devices.

The question of changing the color space can be rephrased as follows: should we
apply a linear or non-linear transformation~$f$ to each RGB pixel value:
$(u,v,w)=f(r,g,b)$ and work in the resulting space? One of the early motivation
for studying the representation of color was to reduce the
bandwidth required by RGB over communication channels. RGB components are
indeed highly correlated; thus, removing the redundant information between the
different channels allows more efficient coding for transmitting
information~\citep{pratt1971}. Another motivation is that the Euclidean distance on
RGB is known to be a poor estimation of the perceptual distance by humans.
Therefore, a large variety of color spaces have been designed, such as CIELab,
CIEXYZ, YIQ, or YCrBr~\citep[see][]{sharma1997}, which are partially improving
upon RGB regarding the decorrelation of the transformed channels and human
perception. However, it remains unclear whether these spaces should be used or
not in practice, and the optimal choice depends on the task and algorithm. 
For instance, for color image denoising, \citet{takeda2007} and \citet{dabov2007} choose
the YCrBr space, whereas~\citet{buades2005,mairal2008}
and~\citet{chatterjee2012} successfully use RGB. Changing the color space
modifies indeed the noise structure as a side effect, which may be problematic
for some methods.

The other pre-processing steps that we have presented for grayscale images can
also be applied to color image patches. The first difference with the
monochromatic setting lies in the centering scheme when working with RGB. 
One motivation for centering grayscale image patches is to learn geometric
structures that are invariant to the mean intensity. 
For color images, the quantity~$\frac{1}{m} \sum_{j=1}^m \x_i[j]$ unfortunately
lacks of interpretation, and removing it from a patch does not seem to achieve
any interesting property. Instead, we may be looking for geometrical structures in images
that are invariant to color patterns, and thus, one may remove the mean color from the patch.
In other words, one should \emph{center every R,G,B channel independently}. As a result,
the centering scheme becomes also invariant to any linear transformation of the
color space, which can be a desirable property.
The effect of such a pre-processing can be visualized in
Figure~\ref{subfig:kodim_center} for the image \textsf{kodim07} from the Kodak PhotoCD dataset.\footnote{available here: \url{http://r0k.us/graphics/kodak/}.} 
Note that centered images have negative values and thus images are shifted and rescaled 
for visualization purposes.  Regions with relatively uniform colors appear
grayish, and thus mostly contain geometrical content; some other areas
contain transitions between a color---say, represented by RGB
values~$(r,g,b)$---and its opponent one---represented by~$(-r,-g,-b)$.  The
colored halos around the pink flowers and the green leaves are thus
due to this color/opponent color transitions. 

In the experiments of this paper about dictionary learning, whitening comprises
a light dimensionality reduction step, where we threshold to zero the smallest
singular values of~$\X$ (the entries on the diagonal of~$\S$). More precisely,
we keep the $k$ largest singular values $s_{1},\ldots,s_k$ such that
$\sum_{i=1}^k s_i^2 \geq 0.995 \sum_{i=1}^m s_i^2$; in other words, we keep
$99.5\%$ of the explained variance of the data. Contrast normalization can then
be applied without particular attention to the nature of the patch. The effects
of whitening and contrast normalization are displayed on
Figures~\ref{subfig:kodim_norm} and~\ref{subfig:kodim_white}, respectively.
Similar conclusions can be drawn for color image patches as in the
monochromatic case.

\begin{figure}
   \subfigure[Without pre-processing.]{\includegraphics[width=0.49\linewidth]{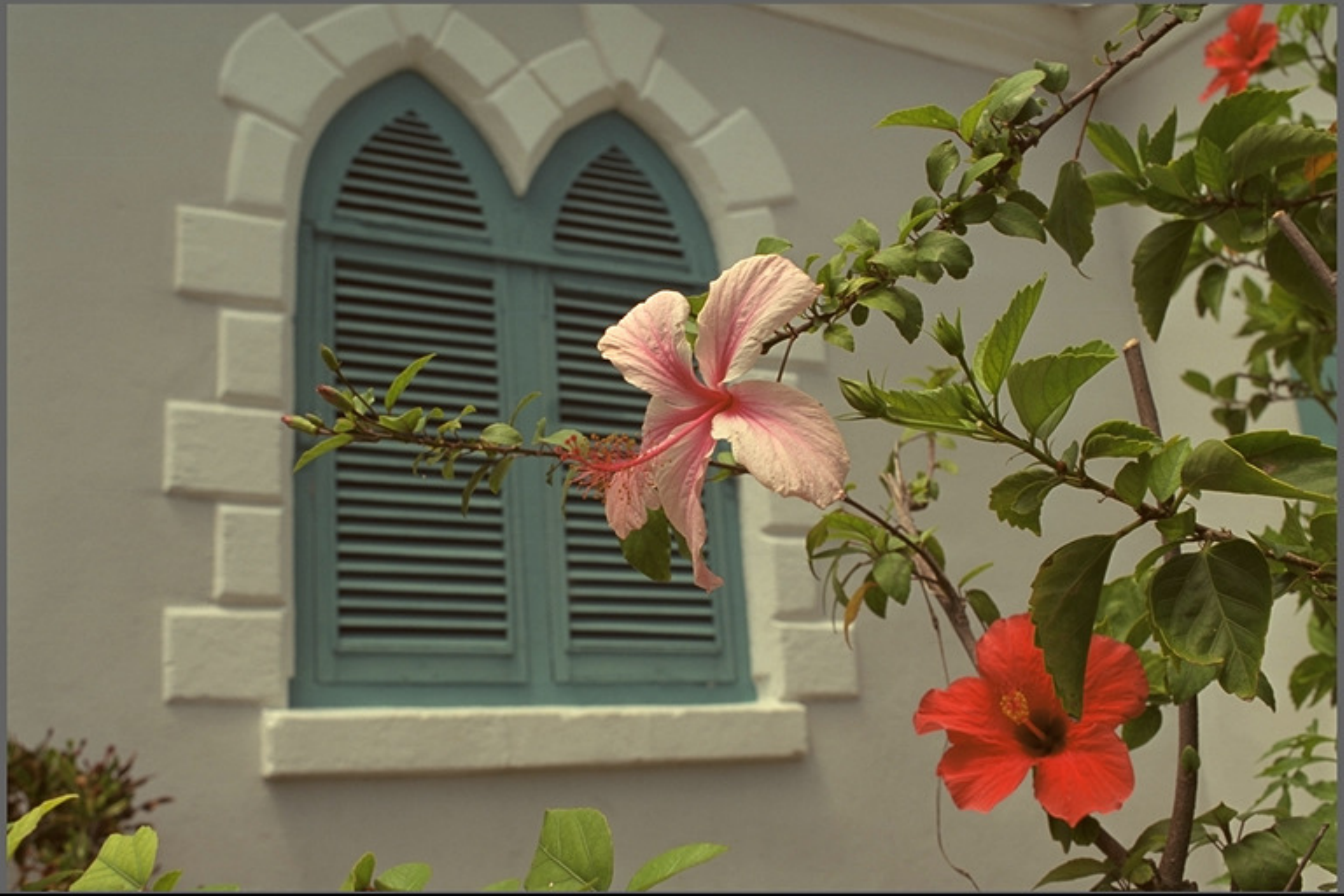}} \hfill
   \subfigure[After centering.]{\includegraphics[width=0.49\linewidth]{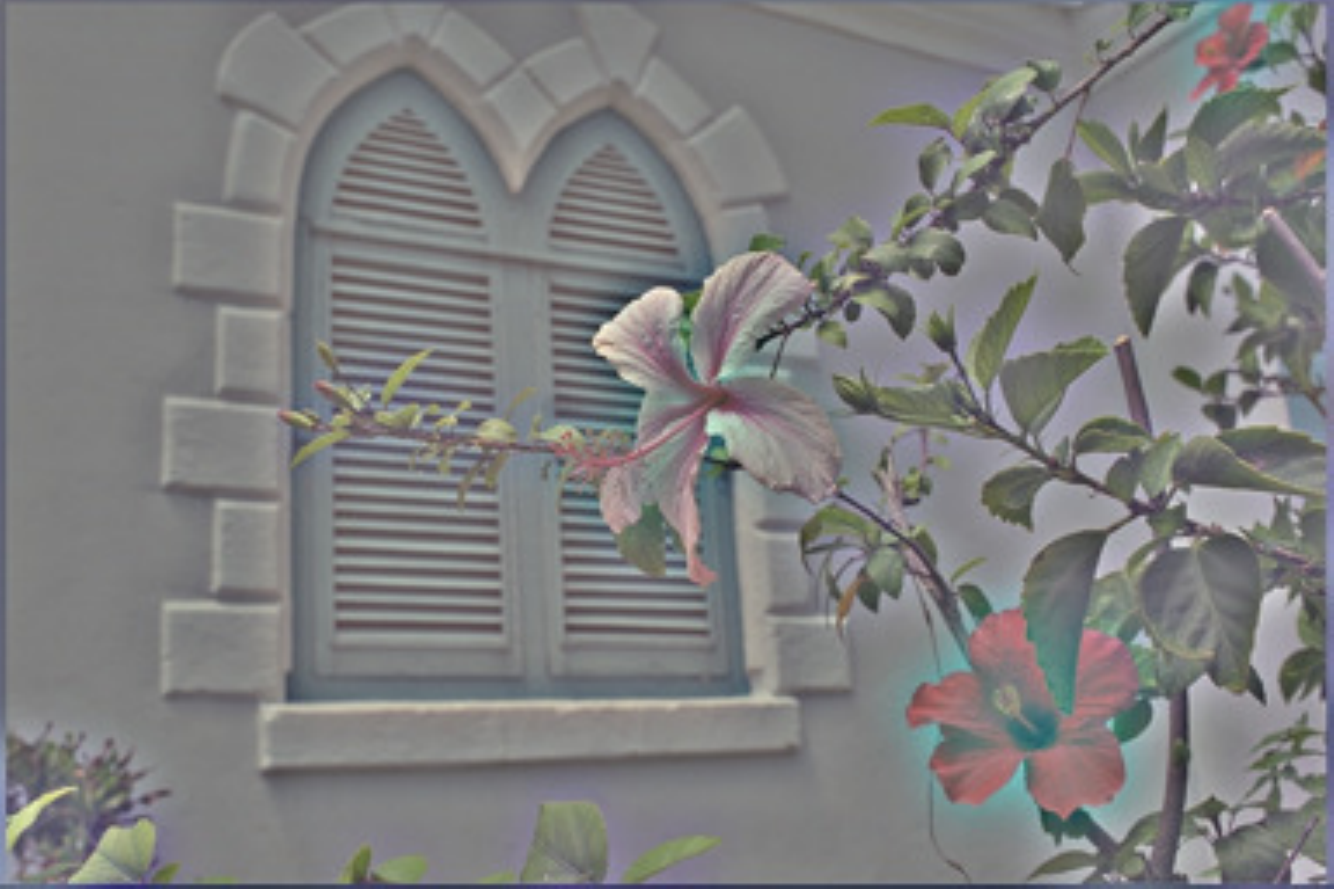}\label{subfig:kodim_center}} \\
   \subfigure[After centering and $\ell_2$-normalization.]{\includegraphics[width=0.49\linewidth]{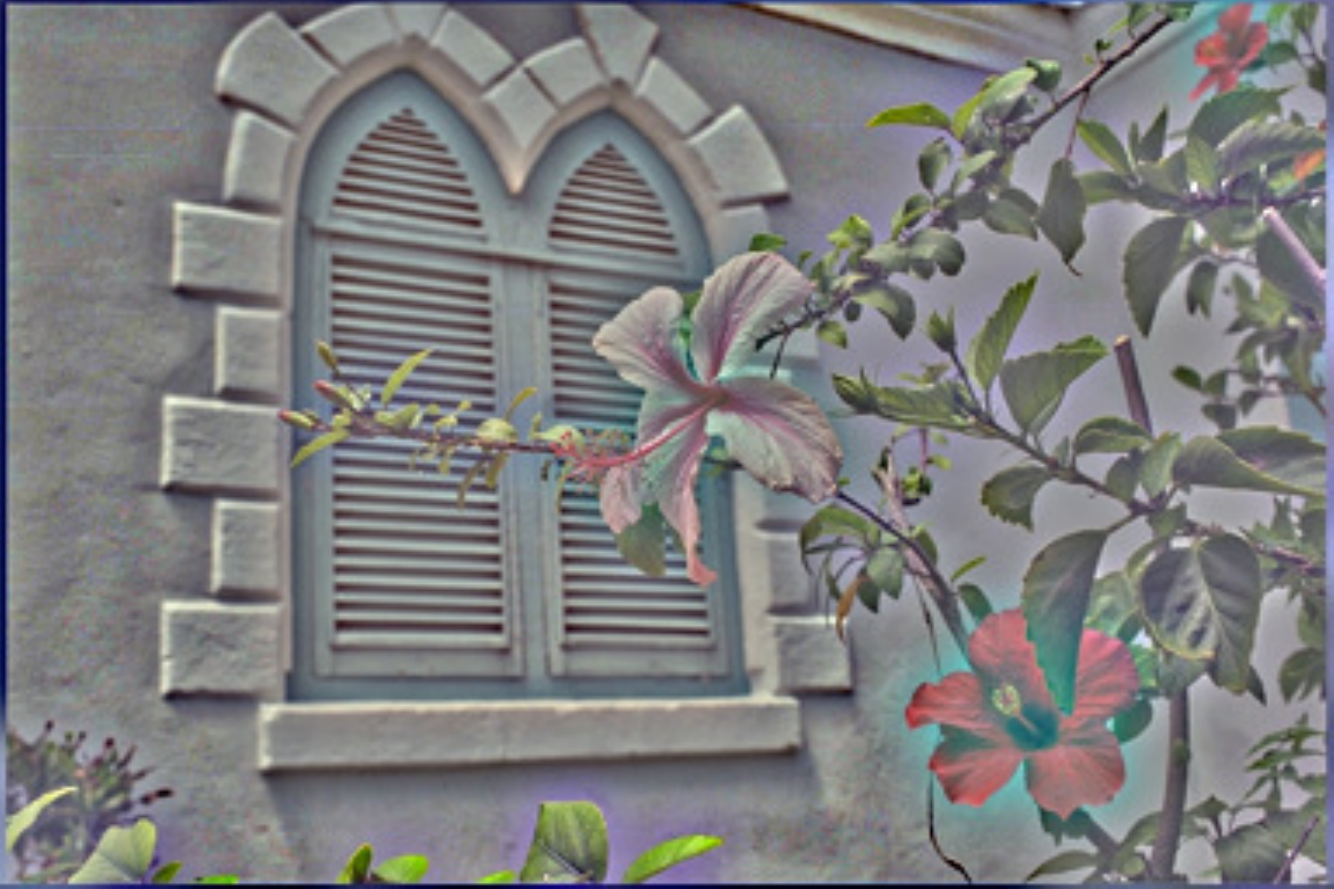}\label{subfig:kodim_norm}} \hfill
   \subfigure[After whitening.]{\includegraphics[width=0.49\linewidth]{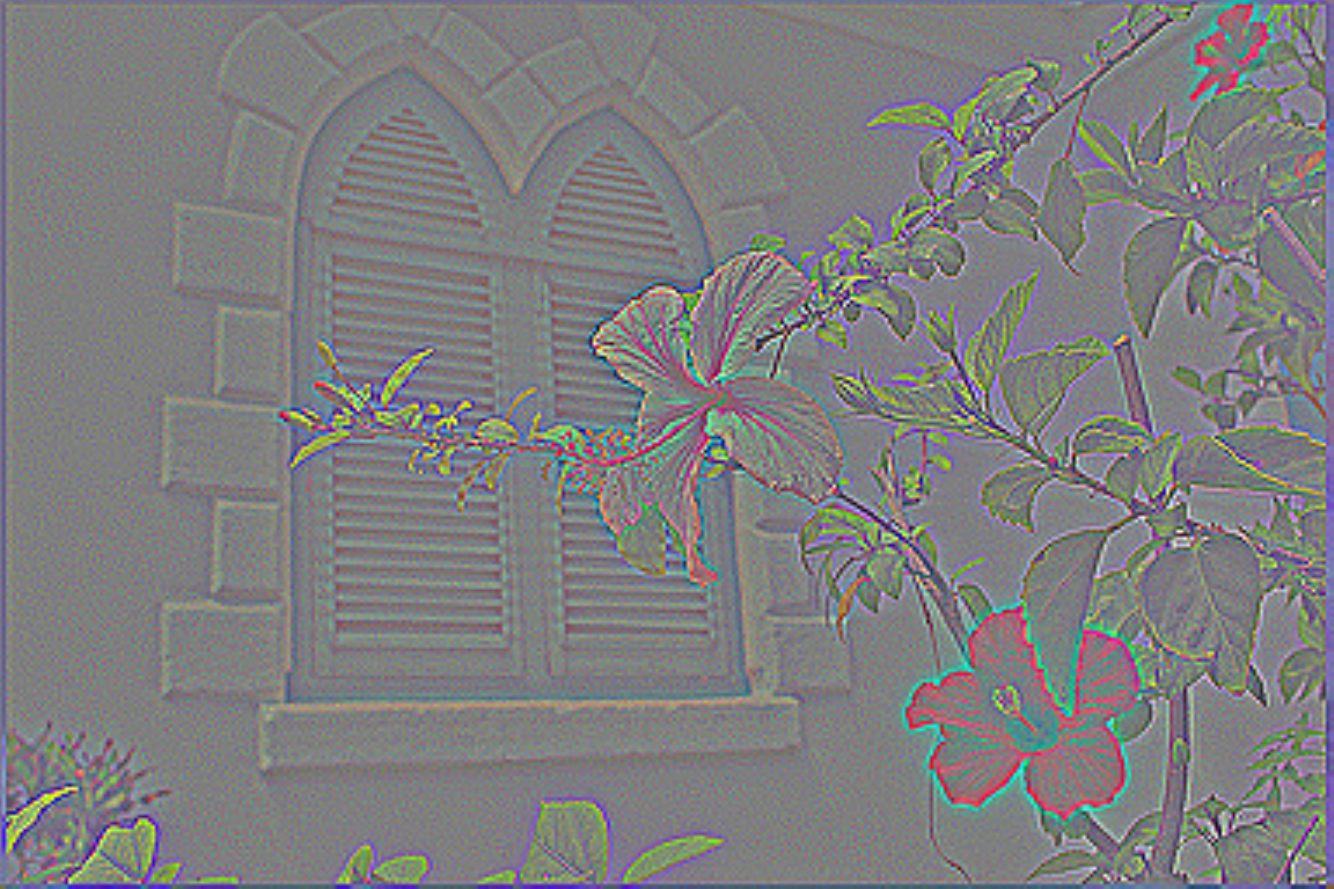}\label{subfig:kodim_white}} 
   \caption{The effect of various pre-processing procedures on the image
   \textsf{kodim07}. Pre-processed images are shifted and rescaled for
visualization purposes, since they contain negative values. The figure is best seen in color
on a computer screen.} \label{fig:preprocessingcolor}
\end{figure}

   \section{Principal component analysis}\label{subsec:pca}
   Principal component analysis (PCA), also known as the Karhunen-Lo\`eve or
Hotelling transform~\citep{hotelling1933}, is probably the most widely used
unsupervised data analysis technique. Even though it is often presented as an
iterative process finding orthogonal directions maximizing variance in the
data, it can be cast as a low-rank matrix factorization problem:
\begin{displaymath}
   \min_{\U \in \Real^{m \times k}, \V \in \Real^{n \times k}} \left\|\X -
   \U\V^\top\right\|_{\text{F}}^2  \st \U^\top \U = \I_k,
\end{displaymath}
where~$k$ is the number of principal components we wish to obtain, and~$\I_k$
is the identity matrix in~$\Real^{k \times k}$. We also assume that the rows of the matrix~$\X$
have zero mean. As a consequence of the theorem of~\citet{eckart1936}, the
matrix~$\U$ contains the principal components of~$\X$ corresponding to the~$k$
largest singular values.  In Figure~\ref{fig:pca}, we visualize the principal
components of~$n=400\,000$ natural image patches of size~$16 \times 16$ pixels,
ordered by largest to smallest singular value (form left to right, then top to
bottom). The components resemble Fourier basis---that is, product of sinusoids
with different frequencies and phases, similarly to the discrete cosine transform
(DCT) dictionary~\citep{ahmed1974} presented in Figure~\ref{fig:dct}.

However, there is a good reason for obtaining sinusoids that is simply related
to a property of translation invariance, \emph{meaning that
the patterns observed in Figure~\ref{fig:pca} are unrelated to the
underlying structure of natural images.} 
This fact is well known and has been pointed earlier by others in the
literature \citep[see, \eg,][]{bossomaier1986,field1987,simoncelli2001,hyvarinen2009}.
Second-order statistics of natural images patches are commonly assumed
to be invariant by translation, such that the correlation of pixel values at
locations~$z_1=(k_1,l_1)$ and~$z_2=(k_2,l_2)$ only depends on the
displacement $z_1-z_2$ \citep{simoncelli2001}. This is particularly true for
patches that are extracted from larger images at any arbitrary position, and
whose distribution is exactly translation invariant.  Such signals are
called ``stationary'' and their principal components (equivalently the
eigenvectors of the covariance matrix) are often considered to be well
approximated by the Discrete Fourier Transform (DFT), as noted
by~\citet{pearl1973}, leading to sinusoidal principal components.

Nevertheless, the relation between principal components
and sinusoids only holds rigorously in an asymptotic regime. For instance, consider 
the case of an infinite one-dimensional signal with covariance
$\Sigma[k,l]=\sigma(k-l)$ for positions~$k$ and~$l$, where~$\sigma$ is an even
function.  Then, for all frequency~$\omega$ and phase~$\varphi$,
\begin{displaymath}
   \sum_{l} \Sigma(k,l) e^{i (\omega l + \varphi)} =    \sum_{l} \sigma(l-k) e^{i (\omega l + \varphi)} = \left( \sum_{l'} \sigma(l') e^{i \omega l'}  \right) e^{i (\omega k + \varphi)},
\end{displaymath}
where~$i$ denotes the imaginary unit, and the sums are over all integers. Since
the function~$\sigma$ is even, the infinite sum $\left( \sum_{l'} \sigma(l')
e^{i \omega l'}  \right)$ is real, and the signals $[\sin(\omega k +
\varphi)]_{k \in {\mathbb Z}}$ are all eigenvectors of the covariance
operator~$\Sigma$. Equivalently, they are principal components of the input
data. The case of two-dimensional signals can be treated similarly but it
involves heavier notation. Drawing conclusions about the eigenvectors of a
finite covariance matrix computed on natural image patches is nevertheless subject
to discussion, and understanding the quality of the approximation
of the principal components by the DFT in the finite regime is
non-trivial~\citep{pearl1973}.

From an experimental point of view, the fact that the structure of natural
images has nothing to do with the patterns displayed in Figure~\ref{fig:pca} is
easy to confirm. In Figure~\ref{fig:pca_tiger}, we present principal
components computed on all overlapping patches from the image~\textsf{tiger}.
Even though this image is sketched by hand---and thus, is not natural---we
recover sine waves as in Figure~\ref{fig:pca}.

\begin{figure}
   \centering
   \subfigure[DCT Dictionary.]{\includegraphics[width=0.49\linewidth]{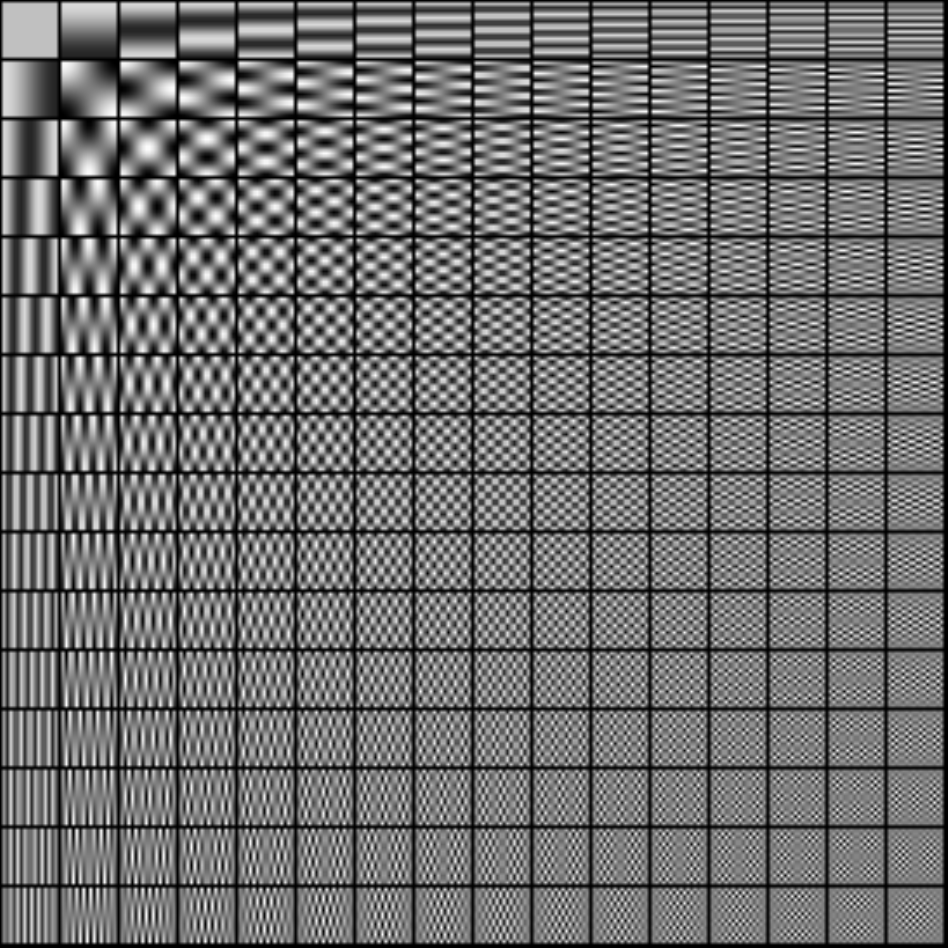}\label{fig:dct}} \hfill
   \subfigure[Principal
   components.]{\includegraphics[width=0.49\linewidth]{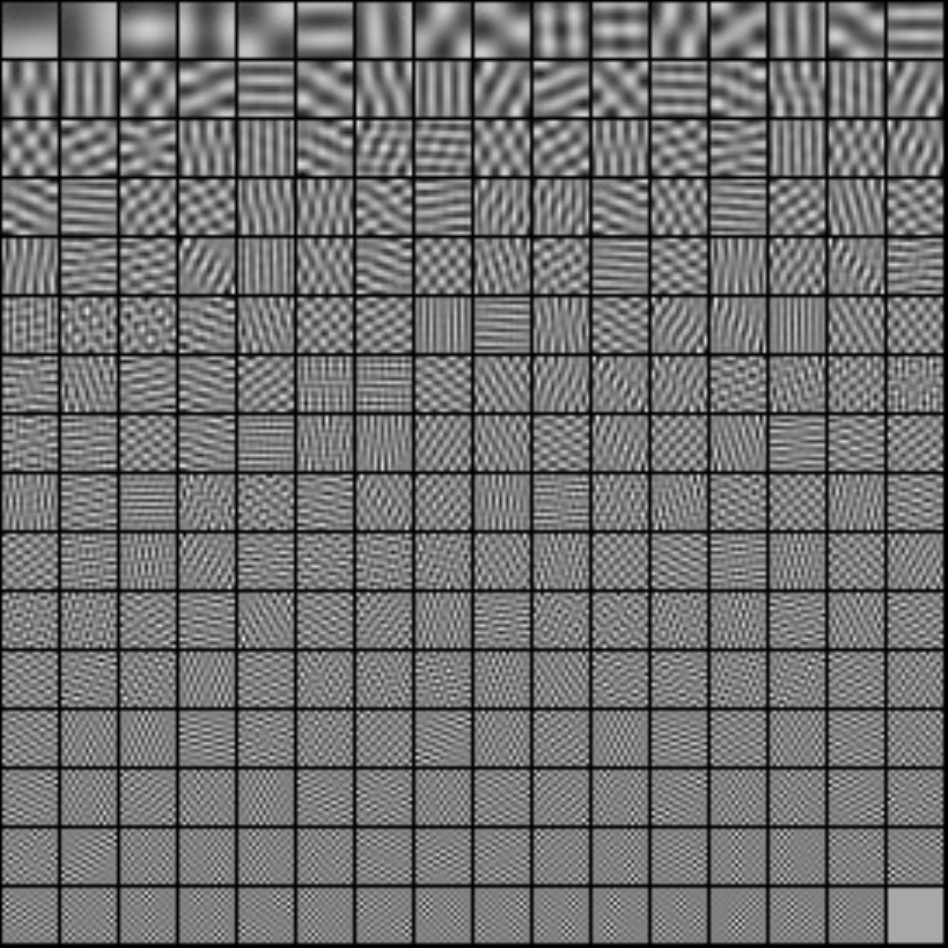}\label{fig:pca}}
   \caption{On the right, we visualize the principal components of $400\,000$
   randomly sampled natural image patches of size~$16 \times 16$, ordered by
   decreasing variance, from top to bottom and left to right. On the left, we
   display a discrete cosine transform (DCT) dictionary. Principal components
   resemble DCT dictionary elements.
} 
\end{figure}

\begin{figure}
   \centering
   \hfill \subfigure[Original Image.]{\includegraphics[width=0.28\linewidth]{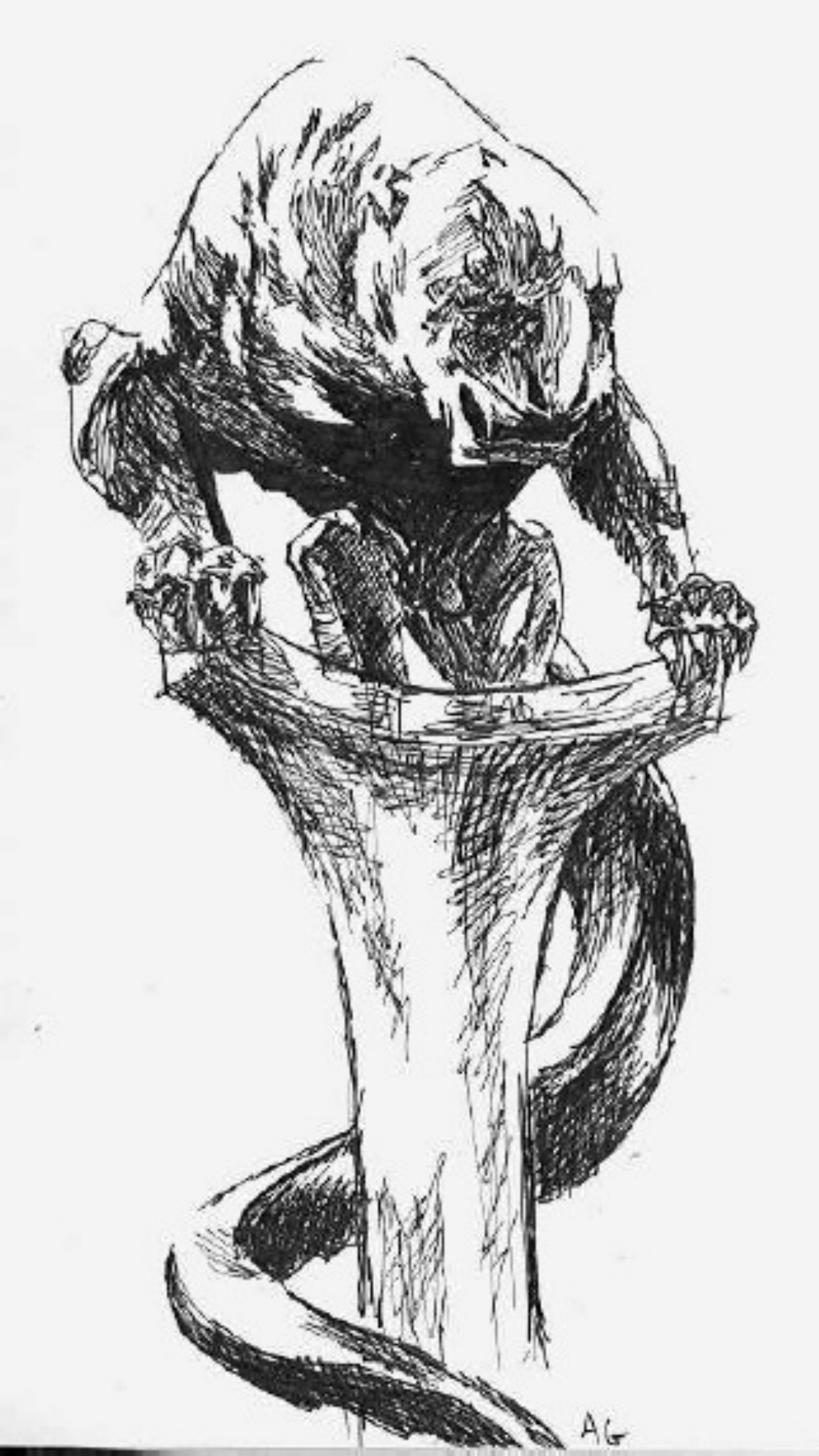}} \hfill
   \subfigure[Principal components.]{\includegraphics[width=0.49\linewidth]{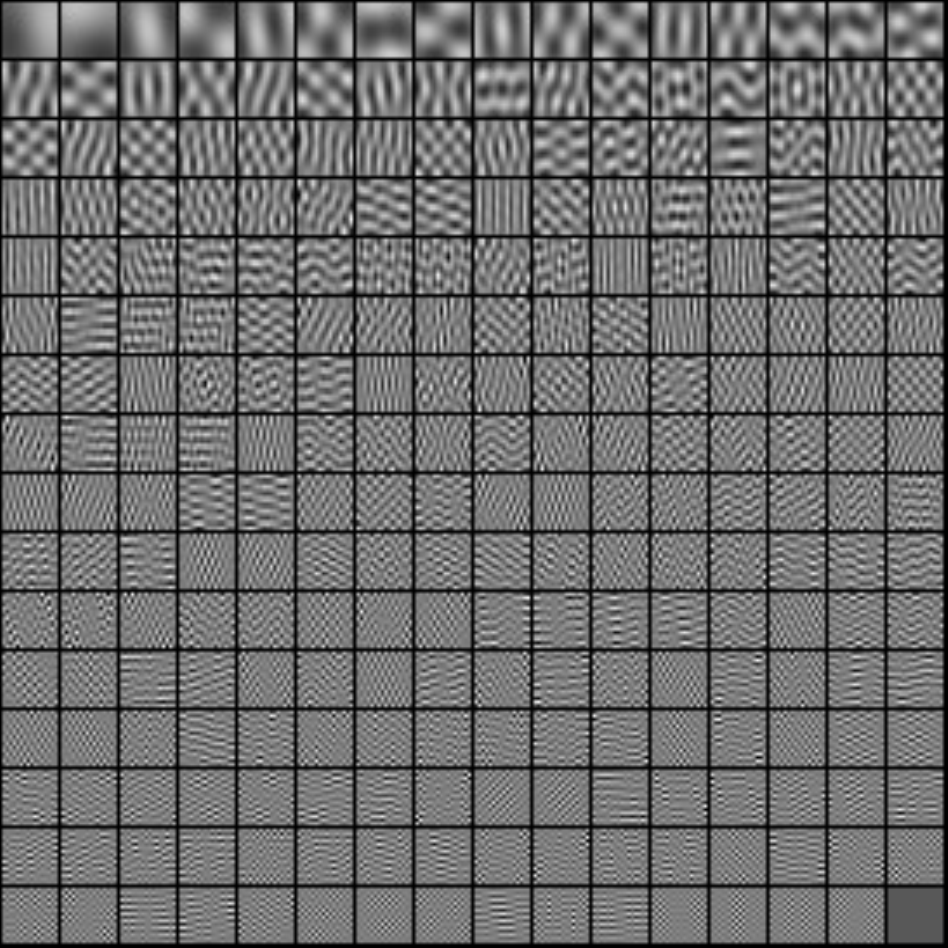}} 
   \caption{Visualization of the principal components of all overlapping patches from the image~\textsf{tiger}. Even though the image is not natural, its principal components are similar to the ones of Figure~\ref{fig:pca}.} \label{fig:pca_tiger}
\end{figure}

   \section{Clustering or vector quantization}\label{subsec:kmeans}
   Clustering techniques have been used for a long time on natural image patches
for compression and communication purposes under the name of ``vector
quantization''~\citep[][]{nasrabadi1988,gersho1992}.  The goal is to find $p$
clusters in the data, by minimizing the following objective:
\begin{equation}
   \min_{\substack{\D \in \Real^{m \times p} \\ \forall i,~l_i \in \{1,\ldots,p\}}}  ~~\sum_{i=1}^n \|\x_i - \d_{l_i}\|_2^2, \label{eq:kmeans}
\end{equation}
where the columns of~$\D=[\d_1,\ldots,\d_p]$ are called ``centroids'' and~$l_i$
is the index of the cluster associated to the data point~$\x_i$.  The algorithm
K-means~\citep[see][]{hastie2009} approximately optimizes the non-convex
objective~(\ref{eq:kmeans}) by alternatively performing exact minimization with
respect to the labels~$l_i$ with~$\D$ fixed, and with respect to~$\D$ with the
labels fixed.

To make the link between clustering and matrix factorization, it is also possible to
reformulate~(\ref{eq:kmeans}) as follows
\begin{displaymath}
   \min_{\substack{\D \in \Real^{m \times p} \\ \A \in \{0,1\}^{p \times n}}}
   ~~\frac{1}{n}\sum_{i=1}^n \frac{1}{2}\|\x_i - \D \alphab_i \|_2^2 \st \forall i,~\sum_{j=1}^p
   \alphab_i[j]=1, 
\end{displaymath}
where~$\A=[\alphab_1,\ldots,\alphab_n]$ carries binary vectors that sum to one,
and clustering can be subsequently seen as a matrix factorization problem:
\begin{displaymath}
   \min_{\substack{\D \in \Real^{m \times p} \\ \A \in \{0,1\}^{p \times n}}}
   ~~\frac{1}{2n} \|\X - \D \A \|_{\text{F}}^2 \st \forall i,~\sum_{j=1}^p
   \alphab_i[j]=1.
\end{displaymath}
Then, the algorithm K-means is performing alternate minimization between~$\A$
and~$\D$, each step decreasing the value of the objective.

We visualize clustering results in Figure~\ref{fig:kmeans} after applying $250$
iterations of the K-means algorithm on~$n=400\,000$ natural image patches
and choosing~$p=256$ centroids. Without whitening, K-means produces mostly
low-frequency patterns, which is not surprising since most of the energy
($\ell_2$-norm) of images is concentrated in low frequencies. It is indeed
known that the spectral power of natural images obtained in the Fourier domain
typically decreases according to the power law $1/f^2$ for a
frequency~$f$~\citep[see][]{simoncelli2001}.  After whitening, high-frequency
``Gabor-like'' patterns and checkerboard patterns emerge from data. It is thus
interesting to see that with an appropriate pre-processing procedure, simple
unsupervised learning techniques such as K-means are able to discover Gabor
features with different orientations, frequencies, and positions within
patches. We note that the localized checkerboard patterns might be an artifact
of the whitening procedure without dimensionality reduction, which amplifies
such patterns appearing among the last principal components presented in
Figure~\ref{fig:pca}. 

\begin{figure}[hbtp]
   \centering
   \subfigure[With centering.]{\includegraphics[width=0.49\linewidth]{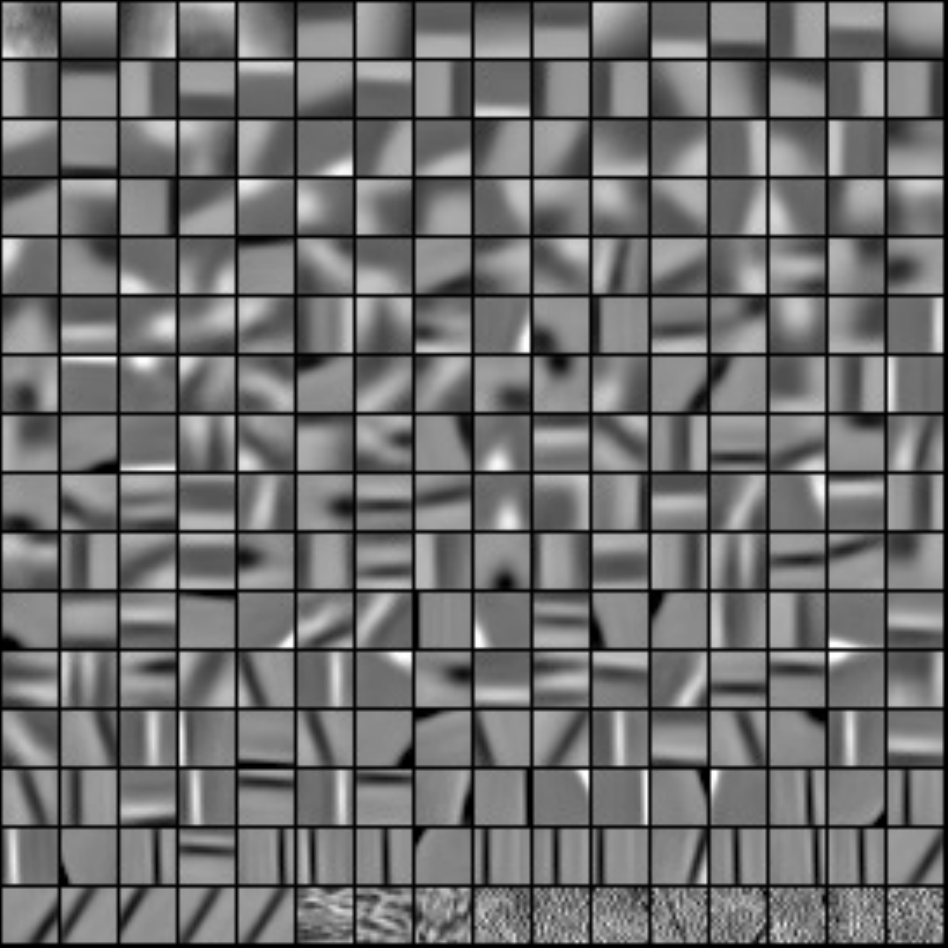}} \hfill
   \subfigure[With whitening.]{\includegraphics[width=0.49\linewidth]{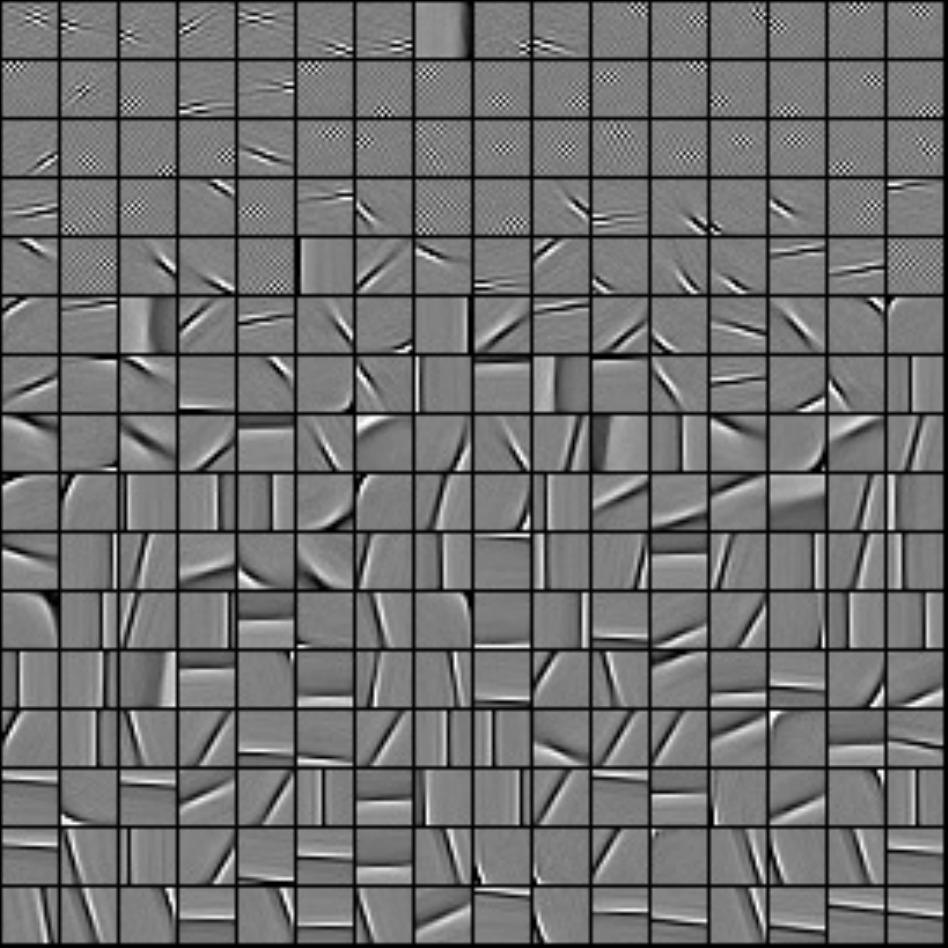}}
   \caption{Visualization of $p=256$ centroids computed with the algorithm
      K-means on~$n=400\,000$ image patches of size~$m=16 \times 16$ pixels. We
      compare the results with centering (left), and whitening (right). In both
      cases, we also apply a contrast normalization step. Centroids are ordered
   from most to least used (from left to right, then top to bottom).}
   \label{fig:kmeans} 
\end{figure}

   \section{Dictionary learning}\label{subsec:dict}
   We are now interested in the dictionary learning formulation originally
introduced by~\citet{field1996,olshausen1997} and its application to natural
image patches. We consider the corresponding matrix factorization
formulation~(\ref{eq:dictlearning2}), which we recall here
\begin{equation}
   \min_{\D \in \CC, \A \in \Real^{p \times n}} \frac{1}{2}\|\X-\D\A\|_{\fro}^2 + \lambda \Psi(\A),\label{eq:natstat_dict}
\end{equation}
where~$\Psi(\A) = \sum_{i=1}^n \psi(\alphab_i)$ and~$\CC$ is the set of matrix
whose columns  have a Euclidean norm smaller than one.

In Figure~\ref{fig:visudict}, we present visual results obtained with such a
formulation where~$\psi$ is the~$\ell_1$-norm. As in the previous section about
clustering, we use~$n=400\,000$ natural image patches, either gray, or
extracted from color images in RGB.  We center the patches, according to the
procedure presented in Section~\ref{subsec:preprocess}, and rescale the
matrix~$\X$ such that its columns have unit norm on average. Then, we learn a
dictionary~$\D$ with~$p=256$ dictionary elements and set the regularization
parameter~$\lambda$ to~$0.1$.  We use the online dictionary learning algorithm
of~\citet{mairal2010} available in the software SPAMS, making~$10$ passes over
the data. The dictionaries obtained for gray and RGB patches and with two
different preprocessing steps are displayed in
Figure~\ref{fig:visudict}.
Note that other dictionary learning approaches can be used for this task and
providing similar results. This is for instance the case of the K-SVD algorithm
of~\citet{aharon2006}, which we present in Section~\ref{sec:optimdict}.

When processing grayscale image patches, a simple centering step produces dictionary
elements with both low-frequency elements, and high-frequency, localized,
Gabor-like patterns, as shown in Figure~\ref{subfig:centergray}. The results after
a whitening step are displayed in Figure~\ref{subfig:whitegray}.
Whitening corresponds to applying a high-pass filter to the original images,
and we obtain exclusively Gabor-like features as in the early work
of~\citet{field1996,olshausen1997}.  The case of color image patches presented
in Figures~\ref{subfig:centercolor} and~\ref{subfig:whitecolor} is also
interesting. After centering, about $230$ dictionary elements out of~$256$ are
grayscale, or are almost grayscale, which is consistent with the greyish
appearance of images after centering (see Section~\ref{subsec:preprocess}).
An intriguing phenomenon concerns the remaining colored dictionary elements.
Those exhibit low-frequency patterns with two opponent colors, blue and yellow
for~$10$ of them, green and red/purple for~$16$ of them. Interestingly, such 
opponent color patterns are typical characteristics of neurons' receptive fields described
in the neuroscience literature~\citep{livingstone1984,ts1988}.

\begin{figure}
   \centering
   \subfigure[With centering - gray.]{\includegraphics[width=0.49\linewidth]{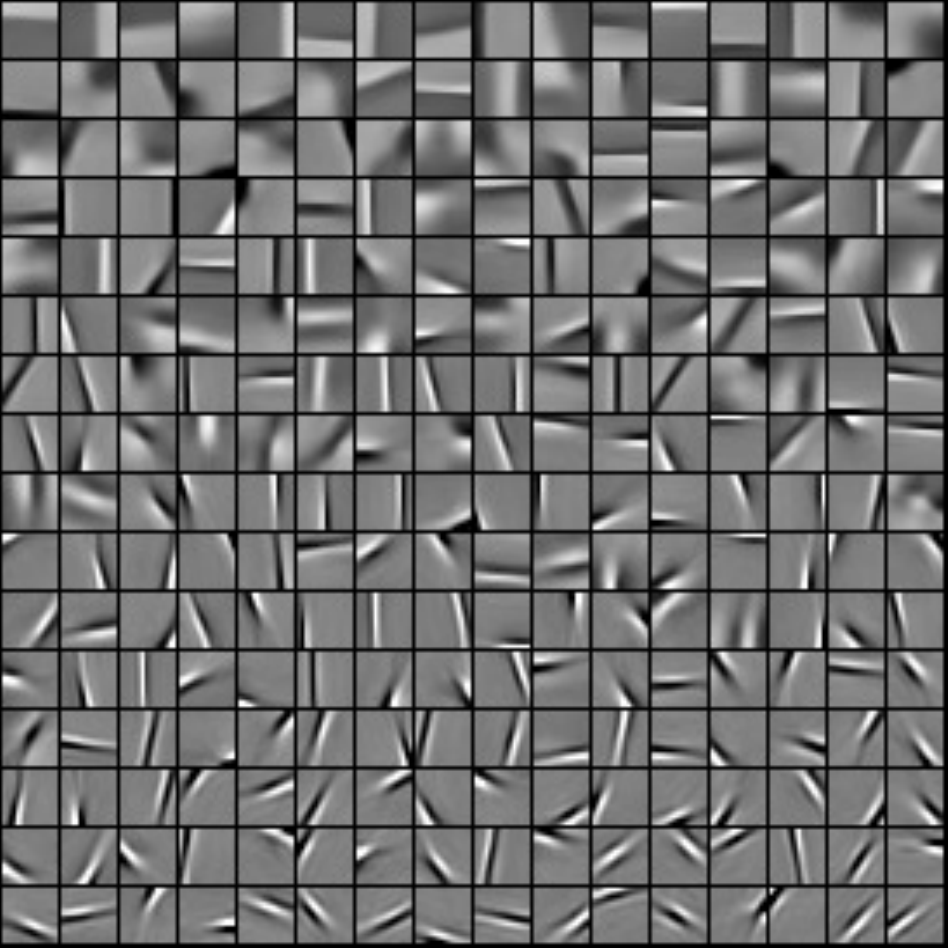}\label{subfig:centergray}} \hfill
   \subfigure[With centering - RGB.]{\includegraphics[width=0.49\linewidth]{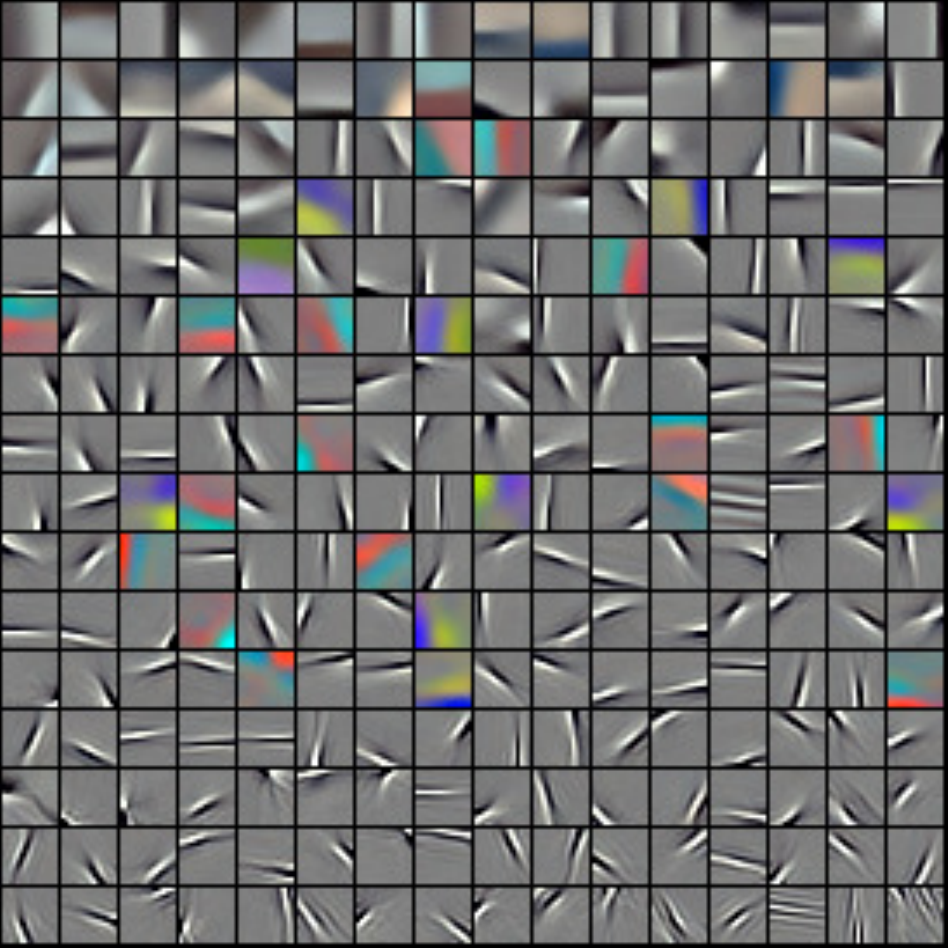}\label{subfig:centercolor}} \\
   \subfigure[With whitening - gray.]{\includegraphics[width=0.49\linewidth]{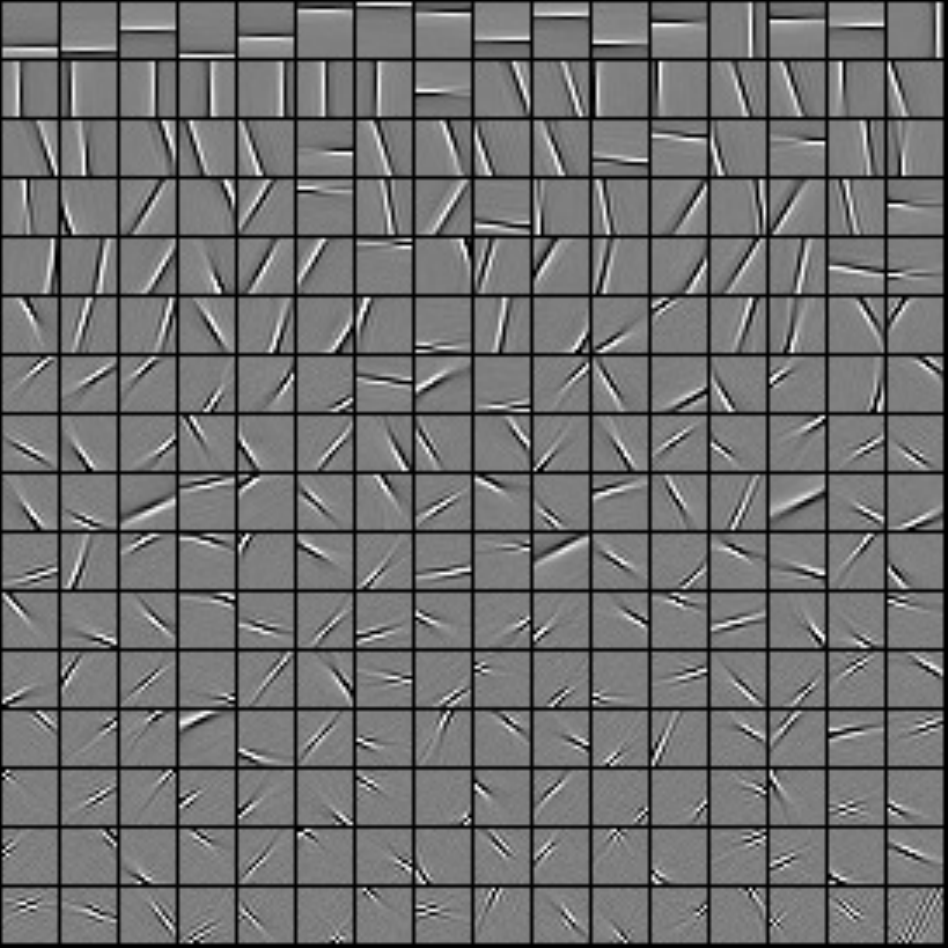}\label{subfig:whitegray}} \hfill
   \subfigure[With whitening - RGB.]{\includegraphics[width=0.49\linewidth]{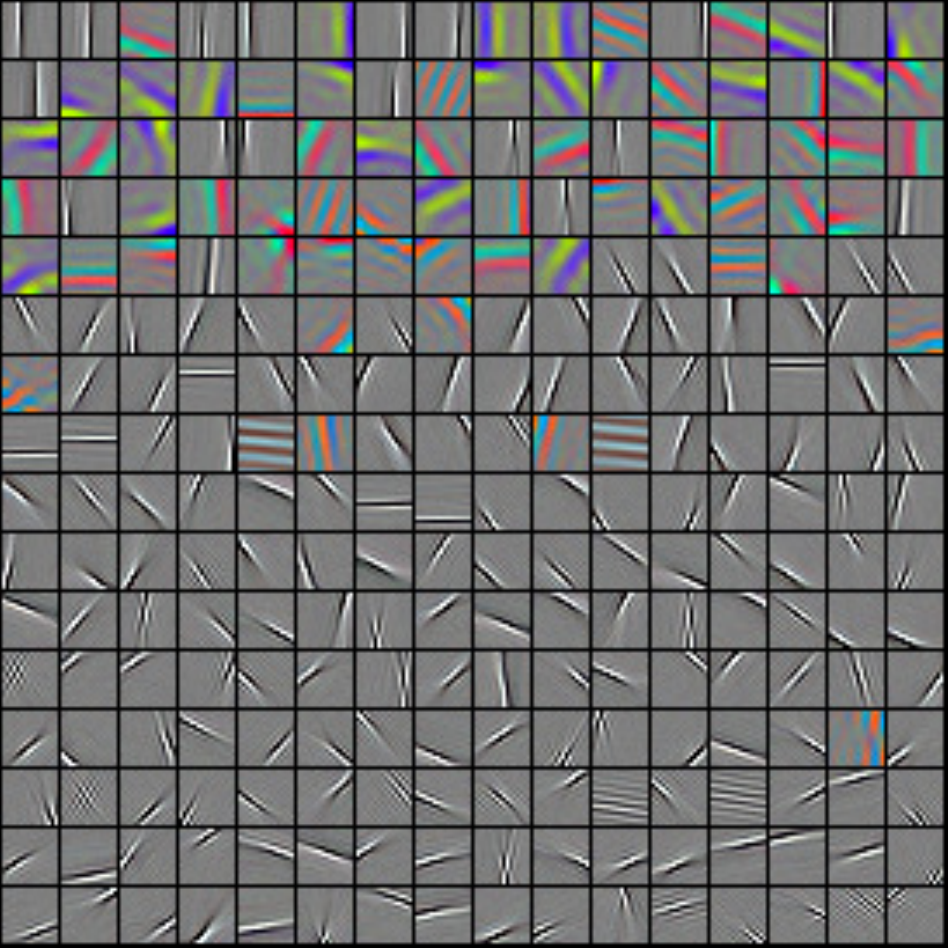}\label{subfig:whitecolor}} \\
   \caption{Dictionaries obtained with the formulation~(\ref{eq:natstat_dict}),
      with two different pre-processing procedures, and with gray or RGB
      patches.  The dictionary elements are ordered from most to least used
   (from left to right, then top to bottom). Best seen in color on a computer screen.}
   \label{fig:visudict} 
\end{figure}

   \section{Structured dictionary learning}\label{subsec:struct_dict}
   Because the~$\ell_1$-norm cannot model interactions between dictionary
elements, it is natural to consider extensions of dictionary learning where a
particular structure is taken into account. The concept of \emph{structured
sparsity} presented in Section~\ref{sec:l1} is a natural tool for this; it
consists of replacing the~$\ell_1$-norm by a more complex sparsity-inducing
penalty.

\paragraph{Hierarchical dictionary learning.} A first extension has been
investigated by~\citet{Jenatton2010a,Jenatton2010b}, when a pre-defined
hierarchical structure is assumed to exist among the~$p$ dictionary elements.
The formulation~(\ref{eq:natstat_dict}) is considered where the function~$\Psi$
is the hierarchical Group-Lasso penalty of~\citet{zhao}. A tree is given, \eg,
is defined by the user, and one dictionary element is associated to every node
of the tree. The penalty constructs a group structure~$\GG$ of subsets
of~$\{1,\ldots,p\}$, each group containing one node and all its descendants in the tree.
An example is illustrated in Figure~\ref{fig:treesparse}. The resulting penalty~$\Psi$
is then defined as
\begin{equation}
   \Psi(\A) = \sum_{i=1}^n \sum_{g \in \GG} \|\alphab_i[g]\|_q, \label{eq:dicttree}
\end{equation}
where $\|.\|_q$ can either represent the~$\ell_\infty$- or the~$\ell_2$-norm. As explained
in Section~\ref{sec:l1}, an effect of the penalty is that a dictionary
element can be used in the decomposition of a patch only if its parent in the
tree is also used. We present some visual results for~$q=\infty$ in
Figure~\ref{fig:visudicttree}, after centering the
natural image patches, for some manually tuned regularization parameter~$\lambda$,
and two different tree structures.  Dictionary elements naturally organize themselves in
the tree, often with low frequencies near the root of the tree, and high
frequencies near the leaves. We also observe strong correlations between each
parent node and their children in the tree, where children often look like
their parent, but with higher frequencies and with minor variations.
\begin{figure}
   \centering
   \subfigure[Tree structure~$1$.]{\includegraphics[width=0.49\linewidth]{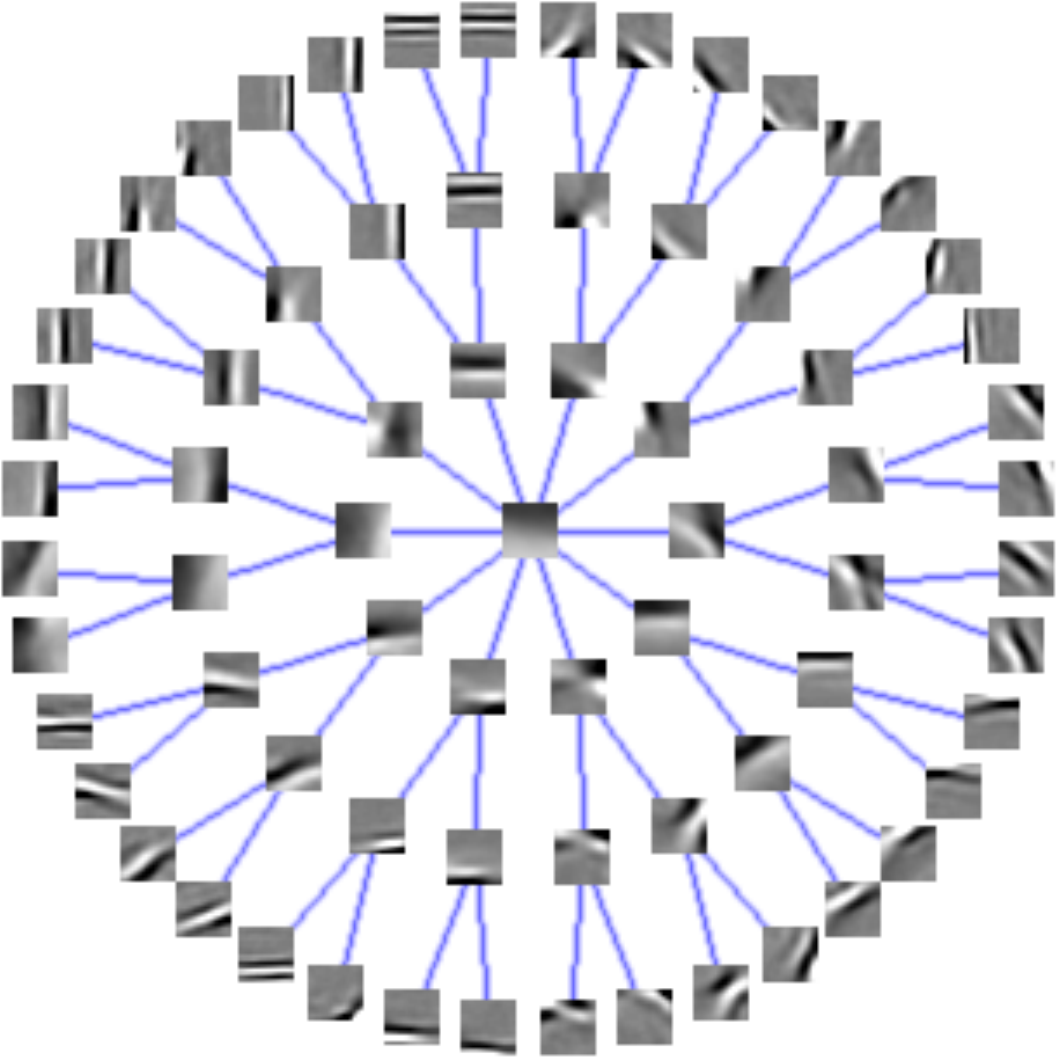}\label{subfig:tree2}} \hfill
   \subfigure[Tree structure~$2$.]{\includegraphics[width=0.49\linewidth]{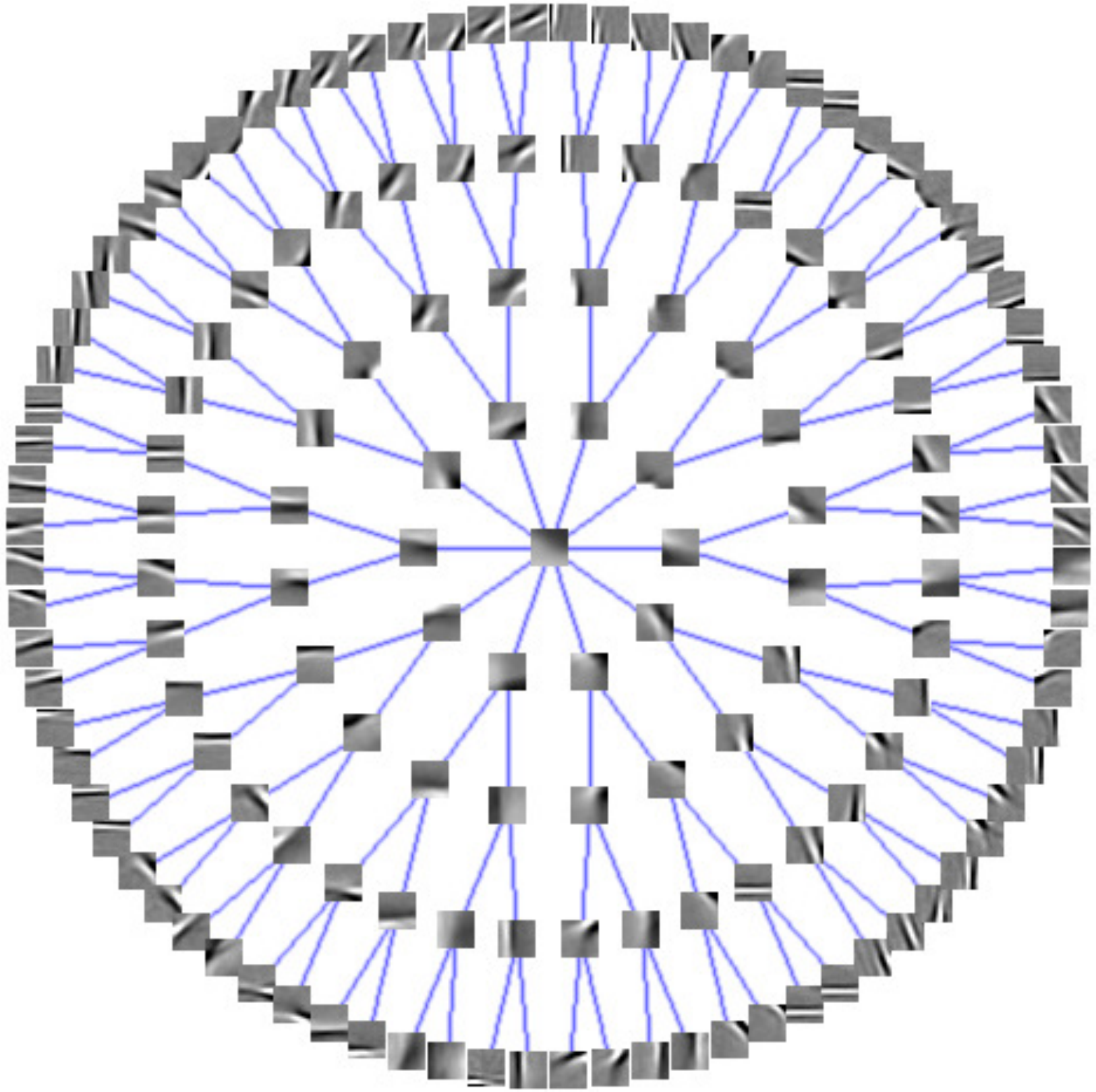}\label{subfig:tree}} \\
   \caption{Dictionaries obtained with the formulation~(\ref{eq:natstat_dict}) and a hierarchical structured-sparsity penalty~$\Psi$ for natural image patches of size~$16 \times 16$ pixels. The tree structures are pre-defined, and the dictionary elements naturally organize themselves in the tree during learning.
      For each tree, the root is represented in the middle of the figure.
      \subref{subfig:tree2}: the tree is of depth $4$ and the branching factors
      at depths $1,2,3$ are respectively $10$, $2$, $2$; \subref{subfig:tree}:
      the tree is slightly more complex; its depth is~$5$ and the branching
      factors at depths~$1,2,3,4$ are respectively $10$, $2$, $2$, $2$.
Figures borrowed from~\citet{Jenatton2010a,Jenatton2010b}. Best seen by zooming on a computer screen.}
   \label{fig:visudicttree} 
\end{figure}

\paragraph{Topographic dictionary learning.} A two-dimensional grid structure
has also been used for learning dictionaries~\citep{Kavukcuoglu2009,Mairal2011},
which was directly inspired from the topographic independent component analysis
formulation of \citet{hyvarinen2001}. The principle is similar to the
hierarchical case---that is, we can use a penalty~$\Psi$ as
in~(\ref{eq:dicttree}), but the set of groups~$\GG$ takes into account a
different structure.  We assume here that the dictionary elements are organized
on a grid, such that we can define neighborhood relations between them.

For instance, we can organize the $p$ dictionary elements on a $\sqrt{p} \times
\sqrt{p}$ grid, and consider $p$ overlapping groups that are $3 \times 3$ or $4
\times 4$ spatial neighborhoods on the grid (to avoid boundary effects, we
assume the grid to be cyclic). We represent some results obtained with such a
formulation in Figure~\ref{fig:visudicttopo}, where a function~$\Psi$ is
defined in~(\ref{eq:dicttree}) with~$q=2$. The regularization parameter~$\lambda$ 
is also manually tuned.  The dictionary elements automatically organize themselves
on the grid, and exhibit some intriguing spatial smoothness.  Another formulation
achieving a similar effect was also proposed by \citet{garrigues}, by mixing
sparse coding with a probabilistic model involving latent variables for groups
of neighbor variables. Another approach was also proposed by~\citet{gregor2011},
which models inhibition effects between dictionary elements.

\begin{figure}
   \centering
   \subfigure[With~$3 \times 3$ neighborhoods.]{\includegraphics[width=0.49\linewidth]{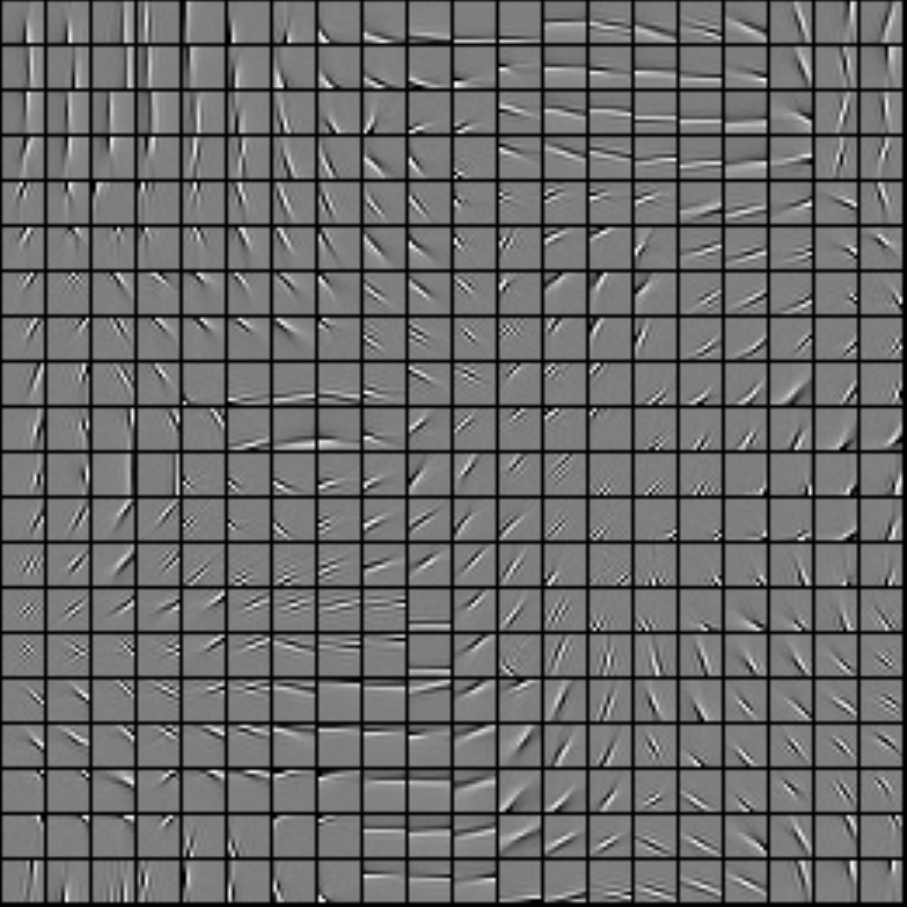}\label{subfig:topo}} \hfill
   \subfigure[With~$4 \times 4$ neighborhood.]{\includegraphics[width=0.49\linewidth]{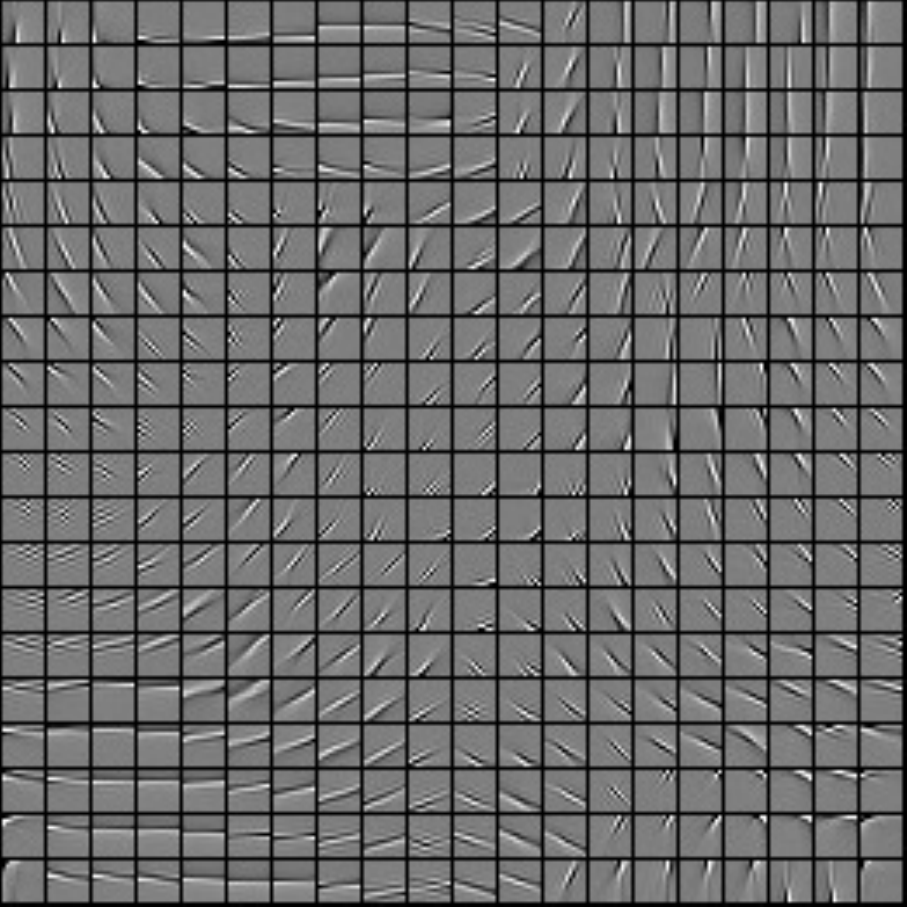}\label{subfig:topo2}} \\
   \caption{Dictionaries obtained with the formulation~(\ref{eq:natstat_dict}) and a structured sparsity penalty~$\Psi$ inducing a grid structure in the dictionary. The dictionaries are computed on whitened natural image patches of size~$12 \times 12$ pixels for two different group structures. Figure borrowed from~\citet{Mairal2011}. Best seen on a computer screen.}
   \label{fig:visudicttopo} 
\end{figure}

   \section{Other matrix factorization methods}\label{subsec:matrix_other}
   We have focused our study of learning methods for natural image patches on sparse
coding and clustering techniques so far.  Other matrix factorization
formulations have shown to be also effective for this task. In particular, 
independent component analysis (ICA) was applied successfully to image patches
around the same time as dictionary learning by~\citet{bell1997}, who also
obtained Gabor-like patterns from whitened data.

\paragraph{Independent component analysis.}
A significantly different point of view than dictionary learning consists of
factorizing \emph{whitened} data~$\X$ as a product~$\D\A$, where the sparsity
principle is replaced by an assumption of statistical independence. Therefore,
the ICA approach requires by nature a probabilistic interpretation of the data.
We assume that~$\x$ in~$\Real^m$ is a random variable representing a signal---here, a natural image patch---
and the columns of~$\X=[\x_1,\ldots,\x_n]$ are random realizations of the
variable~$\x$---that is, a set of natural image patches selected uniformly at
random. The principle of ICA assumes that there exists a
factorization~$\x=\D\alphab$, where $\D$ is an orthogonal matrix,
and~$\alphab= \D^\top \x$ is a random vector whose entries are statistically
independent~\citep[see][]{bell1997,hyvarinen2004,hyvarinen2009}.
Then, the columns of~$\D$ are called independent components.

Given data, the concept of ``statistical independence'' alone does not
immediately yield a precise cost function to optimize, and thus numerous
variants of ICA exist in the literature. Ultimately, most variants of ICA try to
approximate the statistical independence assumption, by finding an orthogonal
transformation~$\D^\top$ such that the probability density~$p(\alphab)$,
assuming it exists, factorizes into the product of its marginals $\prod_{j=1}^p
p(\alphab[j])$.  A popular cost function to compare probability distributions is
the Kullback-Leibler distance~\citep[see][]{cover2006}, denoted by~$KL$, and a natural goal of ICA
is to minimize over~$\D$ 
\begin{displaymath}
   KL\left(p(\alphab), \prod_{j=1}^p p(\alphab[j])\right) \defin \int_{\Real^p} p(\alphab) \log \left(\frac{p(\alphab)}{\prod_{j=1}^p p(\alphab[j])}\right) d\alphab,
\end{displaymath}
which is equal to zero if and only if the~$\alphab[j]$'s are independent.  The
quantity $KL$ above is also called the mutual information between the
variables~$\alphab[j]$, and can also be simply written in terms of the entropy
function~$H$:\footnote{The entropy is defined as $H(x) \defin \int p(x)\log p(x)
dx$~\citep[see][]{cover2006}. In general,~$H(x)$ is not computable exactly since the probability density~$p$ is unknown.}
\begin{displaymath}
   KL\left(p(\alphab), \prod_{j=1}^p p(\alphab[j])\right) = \sum_{j=1}^p H(\alphab[j]) - H(\alphab).
\end{displaymath}
Interestingly, the entropy $H(\alphab)$ can be shown to be independent of~$\D$
when~$\D$ is orthogonal and~$\x$ is whitened with identity covariance, such
that the goal of ICA can be expressed as the minimization of the sum of entropies
\begin{equation}
   \sum_{j=1}^p H(\alphab[j])  =   \sum_{j=1}^p H(\d_j^\top \x). \label{eq:ica}
\end{equation}
Unfortunately, entropy remains an abstract quantity that is not computable, and
thus~(\ref{eq:ica}) is not yet an objective function that is easy to minimize.
Turning~(\ref{eq:ica}) into a concrete formulation and algorithm can be
achieved by following different strategies. One of them consists of
parameterizing the densities~$p(\d_j^\top \x)$, assuming they admit a known parametric form,
and minimizing~(\ref{eq:ica}) becomes equivalent to performing maximum
likelihood estimation~\citep[see][]{hyvarinen2004}. Another approach consists
of using non-parametric estimators of the entropy~\citep{pham2004}. Finally,
a large class of methods consists of using approximations of the entropy cost
function~(\ref{eq:ica}) or encourage the distributions of the~$\alphab[j]$'s to
be ``non-Gaussian''~\citep{cardoso2003}. The non-gaussianity principle might
not seem intuitive at first sight, but it is in fact rather natural when
considering a minimization entropy problem.  Among all probability
distributions with same variance, the
Gaussian ones are known to maximize entropy~\citep{cover2006}. As such,
minimizing~(\ref{eq:ica}) implies that we are looking for some non-Gaussian
random variables~$\alphab[j]$.

Extensions where~$\D$ is not orthogonal and when the data~$\X$ is not whitened
exist, but we have omitted them in this brief presentation for
simplicity~\citep[see][for a better overview]{hyvarinen2004}.  A remarkable
fact is that ICA applied to natural image patches yields very similar results
to the sparse coding model of~\citet{field1996,olshausen1997}, as shown
by~\citet{bell1997}. In fact, an equivalence exists between the two
formulations under particular asymptotic conditions~\citep[see the appendix
of][]{olshausen1997}.

\paragraph{Non-negative matrix factorization.}
Another popular unsupervised learning technique is the non-negative matrix
factorization (NMF)~\citep{paatero1994}, which was shown to be able to automatically
discover some interpretable features on datasets of human faces~\citep{lee1999}.
When the data matrix~$\X$ is non-negative, the method consists of finding a
product~$\D\A$ that approximates~$\X$, and where each factor is also
non-negative. The corresponding formulation can be written as
\begin{displaymath}
   \min_{\D \in \Real^{m \times p}, \A \in \Real^{p \times n}} \|\X-\D\A\|_\fro^2 \st \D \geq 0~~\text{and}~~\A \geq 0.
\end{displaymath}
Even though this technique could be applied to natural image patches without
pre-processing, it has been empirically observed that NMF does not yield any
interpretable feature when applied to such data~\citep{mairal2010}.  Note that
NMF is often used with a different loss function than the square loss, in particular for audio applications~\citep{fevotte2009}.

\paragraph{Archetypal analysis.} 
Around the same time as dictionary learning and non-negative matrix
factorization, archetypal analysis was introduced by~\citet{cutler1994} for
discovering latent factors from high-dimensional data. The main motivation was
to propose an unsupervised learning technique that is more interpretable than
principal component analysis.  The method seeks a factorization~$\X=\D\A$,
with two symmetrical geometrical constraints. First, each column~$\d_j$ of~$\D$, called
``archetype'' is forced to be a convex combination of a few data points. In 
other words, for all~$j \in \{1,\ldots,p\}$, there exists a vector~$\betab_j$ such that
$\d_j = \X\betab_j$, and~$\betab_j$ is in the simplex~$\Delta_n$ defined as
\begin{equation}
   \Delta_n \defin \left\{  \betab \in \Real^n \st \betab \geq 0~~\text{and}~~ \sum_{i=1}^n \betab[i] = 1 \right\}. \label{eq:simplex}
\end{equation}
Second, each data point~$\x_i$ is encouraged to be close to the convex hull of
the archetypes, meaning that~$\x_i$ should be close to a product~$\D\alphab_i$,
where~$\alphab_i$ is in the simplex~$\Delta_p$, defined as
in~(\ref{eq:simplex}) when replacing~$n$ by~$p$.
The resulting formulation is the following
\begin{displaymath}
  \min_{\substack{
        \alphab_i \in \Delta_p~\text{for}~ 1\leq i \leq n \\
  \betab_j \in \Delta_n~\text{for}~ 1\leq j \leq p}}  \sum_{i=1}^n \|\x_i - \D \alphab_i\|_2^2  \st ~\forall j, \d_j = \X\betab_j, 
\end{displaymath}
which, in the matrix factorization form, is equivalent to
\begin{displaymath}
   \min_{\substack{
         \alphab_i \in \Delta_p~\text{for}~ 1\leq i \leq n \\
   \betab_j \in \Delta_n~\text{for}~ 1\leq j \leq p}}  \left\|\X-\X\B\A\right\|_{\text{F}}^2,
\end{displaymath}
where~$\A=[\alphab_1,\ldots,\alphab_n]$, $\B=[\betab_1,\ldots,\betab_p]$ and
the matrix of archetypes~$\D$ is equal to the product~$\X\B$.

Archetypal analysis is closely related to the dictionary learning formulation
of~\citet{field1996,olshausen1997} since the simplicial constraint $\alphab_i
\in \Delta_p$ can be equivalently rewritten as the~$\ell_1$-norm
constraint~$\|\alphab_i\|_1=1$ associated to a non-negativity one~$\alphab_i \geq 0$. As
a result, the coefficients~$\alphab_i$ are sparse in practice and archetypal analysis
provides a sparse decomposition of the data points~$\x_i$. 
The main difference with dictionary learning is the set of constraints~$\d_j =
\X\betab_j$. Because the vectors~$\betab_j$ are constrained to be in the
simplex~$\Delta_n$, they are encouraged to be sparse and each archetype is a convex
combination of a few data points only. Such a relation between latent factors~$\d_j$
and the data~$\X$ is useful whenever interpreting~$\D$ is important, \eg, in 
experimental sciences. For example, clustering techniques provide
such associations between data and centroids.
It is indeed common in genomics to cluster gene expression data
from several individuals, and to interpret each centroid by looking for some common
physiological traits among individuals of the same cluster~\citep{eisen98}.

Archetypal analysis is also related to non-negative matrix
factorization~\citep{paatero1994}. When the data~$\X$ is non-negative, it is
easy to see that both the matrix~$\A$ and the matrix~$\D$ of archetypes are
also non-negative. Unfortunately, archetypal analysis did not encountered as
much success as dictionary learning or NMF, despite the fact that it provides
an elegant methodology for interpreting its output. One of the reason for this
lack of popularity may be that no efficient software has been available for a
long time, which may have limited its application to important scientific
problems.  Based upon such observations, \citet{chen2014} have recently
revisited archetypal analysis for computer vision, with the goal of bringing
back this powerful unsupervised learning technique into favor. They made
publicly available an efficient implementation of archetypal analysis in the
SPAMS software, and demonstrated that it could perform as well as dictionary
learning for some classification tasks, while offering natural mechanisms for
visualizing large databases of images (see Section~\ref{sec:visual_other}).

\paragraph{Bayesian models for dictionary learning.}
Early models of dictionary learning were probabilistic~\citep[see, for
instance,][]{lewicki1999,lewicki2000}. Each image patch~$\x$ is considered to be a random variable
with a normal distribution given a dictionary~$\D$ and a latent variable~$\alphab$. More precisely,
we have the conditional law
\begin{equation}
   p(\x | \D, \alphab) \propto e^{-\frac{1}{2\sigma^2}\|\x-\D\alphab\|_2^2}, \label{eq:xDalpha}
\end{equation}
and the prior distribution on the coefficients $\alphab$ is Laplacian:\footnote{Note that choosing the Laplace distribution for the prior does not mean implicitly assuming that the true 
distribution of the latent variable~$\alphab$ is Laplace. In fact, \citet{gribonval2012} have shown that if this was the case, the choice~(\ref{eq:laplacian}) would be inappropriate in the context
of maximum a posteriori estimation.}
\begin{equation}
   p(\alphab) \propto e^{-\lambda\|\alphab\|_1}. \label{eq:laplacian}
\end{equation}
With this probabilistic model, maximizing the posterior $p(\alphab |
\x,\D)$ given some observation~$\x$ and a dictionary~$\D$, or equivalently
minimizing the negative log posterior, yields the Lasso formulation. Indeed, by using Bayes' rule, we have
\begin{equation}
   \begin{split}
      - \log p(\alphab | \x,\D) & = - \log  p( \x | \D,\alphab) - \log  p(\alphab) \\
         & = \frac{1}{2\sigma^2} \|\x-\D\alphab\|_2^2 + \lambda \|\alphab\|_1.
   \end{split} \label{eq:maplasso}
\end{equation}
When $n$ natural image patches~$\x_1,\ldots,\x_n$ are observed, learning a dictionary~$\D$ can
be achieved by (i) modeling each~$\x_i$ according to~(\ref{eq:xDalpha}) with a latent variable~$\alphab_i$ associated to~$\x_i$,
(ii) assuming the~$\alphab_i$'s to be statistically independent one from each other, (iii) choosing a prior
distribution~$p(\D)$, \eg, uniform on the set of matrices~$\CC$ defined
in~(\ref{eq:natstat_dict}), and (iv) computing the point estimate
\begin{displaymath}
      \min_{\D,\alphab_1,\ldots,\alphab_n} - \log p(\D,\alphab_1,\ldots,\alphab_n | \x_1,\ldots,\x_n).
\end{displaymath}
By using again Bayes' rule and by using the statistical independence assumption, we
recover the classical ``matrix factorization'' formulation presented earlier:
\begin{equation}
   \begin{split}
      - \log p(\D,\A | \X) & = - \log p(\alphab_1,\ldots,\alphab_n | \X, \D) - \log p(\D) \\ 
       & = \sum_{i=1}^n - \log p(\alphab_i | \x_i,\D) - \log p(\D) \\
       & = \sum_{i=1}^n \frac{1}{2\sigma^2} \|\x_i-\D\alphab_i\|_2^2 + \lambda \|\alphab_i\|_1 - \log p(\D),
   \end{split} \label{eq:dictprob}
\end{equation}
where~$\X=[\x_1,\ldots,\x_n]$ and~$\A=[\alphab_1,\ldots,\alphab_n]$.
Simply maximizing the posterior distribution is however not satisfactory 
from the point of view of a Bayesian statistician, who is usually not
interested in point estimates, but in the full posterior
distribution~\citep[see][]{bayarri2004}.\footnote{It seems that a common
misconception in the computer vision and image processing literature is to call
``Bayesian'' any probabilistic model that uses Bayes' rule such
as~(\ref{eq:maplasso}). Such a terminology for maximum a posteriori (MAP)
estimation is slightly misleading regarding the hundred-year-old debate among
frequentists and Bayesian statisticians. In general, the latter do not assume that
there exists a ``true'' parameter that can be obtained with MAP estimation;
instead, they treat all parameters as random variables and model their 
uncertainty~\citep[see][]{bayarri2004}.}
In fact, the formulation~(\ref{eq:dictprob}) discards any uncertainty regarding
the model parameters~$\D$ and~$\A$ and would be called a ``frequentist''
approach in statistics: assuming that one can repeatedly draw natural image
patches at random, one wishes to find a dictionary~$\D$ that is good on
average, which is nothing else than the (non-probabilistic) empirical risk
minimization point of view presented in Section~\ref{sec:sparsecoding}.

A first step towards a Bayesian dictionary learning formulation consists of
integrating instead of maximizing with respect to the latent
variables~$\alphab_i$, as typically done in Bayesian sparse linear
models~\citep{park2008,seeger2008bayesian}, and maximizing the
posterior~$p(\D|\X)$. As a result, we need to minimize
\begin{equation}
   \begin{split}
      - \log p(\D | \X) & = - \log p(\X | \D) - \log p(\D) \\ 
       & = \sum_{i=1}^n - \log p(\x_i | \D) - \log p(\D) \\
       & = \sum_{i=1}^n - \log \left( \int_{\alphab_i \in \Real^p} p(\x_i,\alphab_i | \D) d\alphab_i\right)  - \log p(\D) \\
       & = \sum_{i=1}^n - \log \left(\int_{\alphab_i \in \Real^p} p(\x_i |\alphab_i,\D)p(\alphab_i) d\alphab_i\right)  - \log p(\D). 
   \end{split} \label{eq:dictprob2}
\end{equation}
The formulation is again a pointwise estimator for the dictionary~$\D$ but it
can be argued to be more ``Bayesian'' due to the treatment of the latent
variables~$\alphab_i$. Another possibility is to maximize the data
likelihood~$p(\X|\D)$, resulting in the same formulation
as~(\ref{eq:dictprob2}) without the prior term~$-\log p(\D)$; this is the
strategy adopted for instance by \citet{lewicki1999}. Note that
optimizing~(\ref{eq:dictprob2}) is difficult since the integrals do not admit
an analytical closed form, and some approximations have to be
made~\citep{lewicki1999}.

Recently, a fully Bayesian dictionary learning formulation has been proposed
by~\citet{zhou2009,zhou2012}. By ``fully Bayesian'', we mean that all model
parameters such as dictionary~$\D$, coefficients~$\alphab$, but also
hyper-parameters such as~$\sigma$ are modeled with probability distributions,
and the full posterior density is estimated by using Gibbs sampling. The model
is in fact significantly different than the one described by~(\ref{eq:xDalpha})
and~(\ref{eq:laplacian}). In a nutshell, it involves a Beta-Bernoulli process
for selecting the non-zero coefficients in~$\alphab$, Gaussian priors with
Gamma hyperpriors for the value of these coefficients and for the dictionary
elements.  Compared to the traditional matrix factorization
formulation~(\ref{eq:natstat_dict}), a Bayesian treatment has both advantages
and drawbacks, which often appear in passionate discussions between
Bayesians and frequentists: on the one hand, the Bayesian formulation is robust
to model misspecification and can learn some hyper-parameters such as the
noise level~$\sigma$; on the other hand, Bayesian inference is significantly
more involved and computationally costly than matrix factorization.

\paragraph{Convolutional sparse coding.}
The dictionary learning formulation of~\citet{field1996,olshausen1997} is appropriate
for processing natural image patches. A natural extension to
\emph{full} images instead of patches is called ``convolutional sparse
coding''. It consists of linearly decomposing an image by using small
dictionary elements placed at all possible locations in the image.  Such a
principle was described early by \citet{zhu2005} without concrete application.
It has been revisited recently, and has been shown useful for various
recognition
tasks~\citep{zeiler2010,zeiler2011,rigamonti2011,rigamonti2013,kavukcuoglu2010}

Specifically, let us consider an image~$\x$ represented by a vector of size~$l$
and let the binary matrix~$\R_k$ in~$\{0,1\}^{m \times l}$ be the linear operator such
that~$\R_k \x$ is the patch of size~$\sqrt{m} \times \sqrt{m}$ from~$\x$
centered at the pixel indexed by~$k$. 
The purpose of $\R_k$ is to ``extract'' a patch at a specific location,
but it is also easy to show that its adjoint
operator~$\R_k^\top$ is such that~$\R_k^\top \z$,
for some vector~$\z$ in~$\Real^m$, is a vector of size~$l$ representing an image with zeroes
everywhere except the patch~$k$ that contains the~$m$ entries of~$\z$.
In other words, $\R_k^\top$ positions at pixel~$k$ a small patch in the larger image.  To simplify, we assume that there is no
boundary effect when the pixel~$k$ is
close to the image border, \eg, we use zero-padding to define patches that
overlap with the image boundary. Then, it becomes natural to decompose~$\x$ with the 
Lasso formulation:
\begin{equation}
   \min_{\A \in \Real^{p \times l}} \frac{1}{2}\left\|\x - \sum_{k=1}^l \R_k^\top \D \alphab_k\right\|_2^2 + \lambda\sum_{k=1}^l \|\alphab_k\|_1,\label{eq:convolutional}
\end{equation}
where~$\D$ is a dictionary in~$\Real^{m \times p}$, and, as usual,
$\A=[\alphab_1,\ldots,\alphab_l]$.  Solving~(\ref{eq:convolutional}) is easy
with any standard method adapted to large-scale $\ell_1$-regularized
convex problems~\citep[see][for a review]{bach2012}. For
instance,~\citet{rigamonti2011} uses a proximal gradient method,
whereas~\citet{kavukcuoglu2010} uses a coordinate descent scheme.  To conduct
the experiment of this paragraph, we have implemented the ISTA and FISTA
algorithms of~\citet{beck2009} for solving~(\ref{eq:convolutional}), and we have made it available in the SPAMS
toolbox.\footnote{\url{http://spams-devel.gforge.inria.fr/}.}

Given now a collection of~$n$ images~$\x_1,\ldots,\x_n$, which are assumed to be of the
same size for simplicity, the dictionary~$\D$ can be learned by minimizing 
\begin{equation}
   \min_{\substack{\{\A_i \in \Real^{p \times l}\}_{i=1\ldots n}\\ \D \in \CC}} \frac{1}{n}\sum_{i=1}^n \left(\frac{1}{2}\left\|\x_i - \sum_{k=1}^l \R_k^\top \D \alphab_{i,k}\right\|_2^2 + \lambda\sum_{k=1}^l \|\alphab_{i,k}\|_1\right),\label{eq:convolutionaldict}
\end{equation}
where~$\A_i=[\alphab_{i,1},\ldots,\alphab_{i,k}]$ for all~$i=1,\ldots,n$. A 
natural optimization scheme for addressing~(\ref{eq:convolutionaldict}) is
to alternate between the minimization of~$\D$ with~$\A$ fixed and vice versa,
as often done for the classical dictionary learning problem (see
Section~\ref{chapter:optim}). Even though finding the global optimum
of~(\ref{eq:convolutionaldict}) is not feasible because the objective function
is nonconvex, alternate minimization provides a stationary
point~\citep{bertsekas}. Updating the dictionary~$\D$ with fixed
coefficients~$\A$ can be achieved by projected gradient descent.

In Figure~\ref{fig:convolutional}, we visualize two dictionaries of
size~$p=100$ with elements of size~$m=16 \times 16$ pixels.
We learned them on~$30$ whitened natural images that are rescaled such
that each $\sqrt{m} \times \sqrt{m}$ patch has unit $\ell_2$-norm on average.
We perform $300$ steps of alternate minimization. For updating~$\A$ or~$\D$,
we always use $5$ iterations of the algorithm ISTA~\citep{beck2009}, yielding a
small decrease of the objective function at each step.
Interestingly, we observe that the learned features have the following properties:
(i) in general, they are well centered; (ii) some of them are more complex
than the ones obtained with the classical dictionary learning formulation.  In
other words, they go beyond simple Gabor features, with curvy patterns,
corners, and blobs.
\begin{figure}[hbtp]
   \centering
   \subfigure[With $\lambda=0.2$.]{\includegraphics[width=0.49\linewidth]{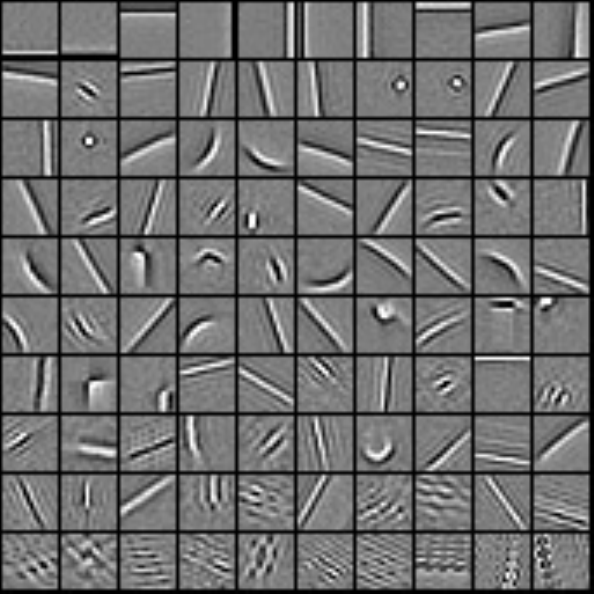}} \hfill
   \caption{Visualization of $p=100$ dictionary elements learned on~$30$
      whitened natural images. Dictionary elements are ordered from most to
      least used (from left to right, then top to bottom).}
   \label{fig:convolutional} 
\end{figure}

   \section{Discussion}\label{subsec:discussion}
   In this section, we have reviewed a large number of unsupervised learning
techniques that are able to discover underlying structures in natural images.
We have focused on matrix factorization and sparse coding, but other
techniques that are out of the scope of our study are known to achieve similar
results. We briefly mention a few of them.
 
A successful approach is the Gaussian mixture model (GMM). Even though it is
classical, it has been shown only recently to be well adapted to represent natural image
patches. GMMs for image patches were first investigated by~\citet{yu2012} and
then further developed by~\citet{zoran2011}, yielding very good results for several
image restoration tasks.  The probabilistic model consists of
representing the distribution of a patch~$\x$ as a convex combination of~$p$
Gaussian distributions with means~$\mub_k$ and (often diagonal) covariance
matrices~$\Sigmab_k$ for~$k=1,\ldots,p$. Given a database of
patches~$\x_1,\ldots,\x_n$, the EM-algorithm~\citep{dempster1977} is typically
used to learn means and covariances. When displaying the means, small
localized Gabor filters typically appear when the data is
whitened~\citep{coates2011}.

Finally, small oriented localized filters can also be learned from natural
image patches by using undirected graphical models such as restricted Boltzmann
machines (RBM)~\citep[see][]{hinton2002,bengio2009,coates2011}, and also by using
convolutional neural networks~\citep{lecun1998,zeiler2013}, which have recently
gained some popularity for solving large-scale visual recognition tasks.

\chapter{Sparse Models for Image Processing}\label{chapter:image}
   We have previously shown that natural image patches could be modeled by using
dictionary learning techniques and its variants, but we have not presented any
concrete application yet. We have indeed focused so far on the
\emph{interpretation} of the visual patterns obtained by various methods, but
one may wonder whether or not these interpretable structures can be useful for
prediction tasks. For quite a long time, the answer to this question was
uncertain, until the work of~\citet{elad2006} on image denoising that achieved
state-of-the-art results compared to other approaches at that time.

The current section is devoted to several applications of dictionary learning
in image processing. First, we present the simple and yet effective scheme for
image denoising introduced by~\citet{elad2006}, and then move to more complex
tasks. In general, we show that dictionary learning can be useful for
predicting missing visual information, leading to state-of-the-art results for
image inpainting and demosaicking~\citep{mairal2008,mairal2009}, 
super-resolution and
deblurring~\citep{yang2010,yang2012,couzinie2011,dong2011,zeyde2012,wang2012},
face compression~\citep{bryt2008}, or for inverting non-linear local
transformations~\citep{mairal2012}. We also present natural extensions of
dictionary learning to video processing~\citep{protter2009}, and finally, we
conclude this section with a presentation of other patch-modeling
approaches~\citep{buades2005,awate2006,dabov2,takeda2007,zoran2011,yu2012,chatterjee2012}.

A small part of the material of this section is borrowed from the PhD thesis of the
first author~\citep{mairal_thesis}.

   \section{Image denoising} \label{subsec:denoising}
   Let us consider the classical problem consisting of
restoring a noisy image $\y$ in~$\Real^n$ that has been corrupted by white
Gaussian noise with known standard deviation~$\sigma$.\footnote{The
formulations presented in this section can be also modified to handle other
types of noise, such as Poisson noise~\citep[see][]{giryes2013}.} In many
cases, image denoising is formulated with an \emph{energy} to
minimize~\citep[\eg,][]{rudin}:
\begin{equation}
   \min_{\x \in \Real^n} \frac{1}{2}\|\y-\x\|_2^2 + \lambda \psi(\x), \label{eq:denoise}
\end{equation}
where~$\psi: \Real^n \to \Real$ is a regularization function, and the quadratic
``data-fitting'' term ensures that the estimate~$\x$ is close to the noisy
observation~$\y$. When~(\ref{eq:denoise}) is derived from a probabilistic image
model, it is often interpreted from the point of view of maximum a posteriori
estimation, where~$\lambda\psi(\x)$ is related to some negative log prior
distribution on~$\x$. Even though~(\ref{eq:denoise}) looks simple at first
sight, finding a good regularization function~$\psi$ is difficult, and in fact,
it is probably one of the most important research topic in image processing
nowadays. In early work, various smoothness assumptions about~$\x$ have led
to different functions~$\psi$. Specifically, natural images were assumed to
have small total variation~\citep{rudin}, or the smoothness between adjacent
pixel values was modeled with Markov random fields (MRF)~\citep{zhu1997}.

As described in Section~\ref{sec:wavelets}, sparse image models based on
wavelets have also been popular for the denoising
task~\citep[see][]{mallat2008}. This line of work has inspired the method we
present in this section but the latter differs in two aspects: (i) the approach
of~\citet{elad2006} is patch-based like other ``modern'' image processing
methods~\citep{buades2005,dabov2,roth2009}; (ii) it adapts to the image patches
with dictionary learning, and thus it does not use any pre-defined wavelet
basis.

\paragraph{Patch denoising given a fixed dictionary.}
\citet{elad2006} have proposed a simple denoising procedure that treats every
patch independently.
Let us consider the $\sqrt{m} \times \sqrt{m}$ patch~$\y_i$ of the noisy image~$\y$,
centered at the pixel indexed by~$i$, and assume that 
an appropriate dictionary~$\D$ in~$\Real^{m \times p}$ is given.\footnote{For simplicity, we always assume in this monograph that the patches are square, but all approaches can be easily extended
to deal with other patch shapes.} 
Then, the patch~$\y_i$ is denoised by the following steps:
\begin{enumerate}
   \item center $\y_i$ (see
      Section~\ref{subsec:preprocess}),
      \begin{displaymath}
         \y_i^c \defin \y_i - \mu_i \ones_m  ~~~\text{with}~~~ \mu_i \defin \frac{1}{n}\ones_m^\top \y_i;
      \end{displaymath}
   \item find a sparse linear combination of dictionary elements that
      approximates~$\y_i^c$ up to the noise level:
      \begin{equation}
         \min_{\alphab_i \in \Real^p} \|\alphab_i\|_0 \st \|\y_i^c-\D\alphab_i\|_2^2
         \leq \varepsilon,  \label{eq:reconstruct_patch}
      \end{equation}
      where~$\varepsilon$ is proportional to the noise variance~$\sigma^2$;
   \item add back the mean component to obtain the clean estimate~$\hatx_i$:
      \begin{displaymath}
         \hatx_i \defin \D\alphab_i^\star + \mu_i \ones_m,
      \end{displaymath}
      where~$\alphab_i^\star$ is the solution, or approximate solution, obtained when
      addressing~(\ref{eq:reconstruct_patch}).
\end{enumerate}
The approach we have just described yields two difficulties. First, it is
important to choose a good value for~$\varepsilon$. Second, 
problem~(\ref{eq:reconstruct_patch}) is unfortunately NP-hard.
For patches of size~$m = 8 \times 8 = 64$ pixels, \citet{elad2006} recommend
the value~$\varepsilon=m(1.15 \sigma)^2$. Another effective heuristic proposed
by~\citet{mairal2009} assumes that the targeted residual
$\y_i-\D\alphab_i$ behaves as the Gaussian noise of (supposedly known)
variance~$\sigma^2$, such that the quantity $\|\y_i-\D\alphab_i\|_2^2/\sigma^2$
follows a $\chi$-square distribution with~$m$ degrees of freedom. Then, the
heuristic sets~$\varepsilon = \sigma^2 F_m^{-1}(\tau)$ where $F_m^{-1}$ is the
inverse cumulative distribution function of the~$\chi_m^2$ distribution.
Selecting the value $\tau=0.9$ leads to appropriate values of~$\varepsilon$ in
practice for different patch sizes~$m$.  For dealing
with~(\ref{eq:reconstruct_patch}), greedy algorithms such as orthogonal
matching pursuit~\citep{pati1993} are known to provide approximate solutions
that are good enough for the denoising task~\citep{elad2006}. Such algorithms
are presented in details in Section~\ref{sec:optiml0}.

After having presented a simple ``patch'' denoising procedure given a fixed
dictionary~$\D$, we study three questions that will explain the different steps
needed for denoising a full image:
\begin{enumerate}
   \item \emph{how do we reconstruct the full image from the local patch models?}
   \item \emph{which dictionary should we choose?}
   \item \emph{how does $\ell_1$ compares with~$\ell_0$ for image denoising?}
\end{enumerate}

\paragraph{From patches to full image estimation.}
The denoising scheme proposed by~\citet{elad2006} processes
\emph{independently} every patch~$\y_i$ from the noisy image~$\y$, before constructing
the denoised image by \emph{averaging} the patch estimates~$\hatx_i$. More
precisely, since the patches~$\y_i$ overlap, each pixel belongs to~$m$ different patches
and thus admits~$m$ estimates.\footnote{For simplicity, we choose here to neglect
boundary effects for pixels that are close to the image border, which technically belong to fewer
patches.} Then, the denoised image~$\hatx$ in~$\Real^n$ is obtained as follows:
\begin{equation}
   \hatx = \frac{1}{m}\sum_{i=1}^n \R_i^\top \hatx_i,\label{eq:patch_averaging}
\end{equation}
where~$\R_i$ in~$\Real^{n \times n}$ is a binary matrix defined as in
Section~\ref{subsec:matrix_other}---that is, $\R_i^\top \hatx_i$ is a vector of
size~$n$ corresponding to an image with zeroes everywhere except for the patch
indexed by~$i$ that contains~$\hatx_i$. In other words,~$\R_i^\top$ is a linear
operator that ``positions'' a patch of size~$m$ at the pixel~$i$ in a larger
image with~$n > m$ pixels.\footnote{Note that the original averaging scheme
of~\citet{elad2006} slightly differs from~(\ref{eq:patch_averaging}) since
their estimate $\hatx$ is defined as $\hatx = (1-\lambda)\y + \lambda
({1}/{m})\sum_{i=1}^n \R_i^\top \hatx_i$,
where~$\y$ is the noisy image. We have empirically found that the
setting~$\lambda=1$ leads in fact to very good results and we have thus omitted
the parameter~$\lambda$.}
As a result, Eq.~(\ref{eq:patch_averaging}) is nothing else than an averaging
procedure, which is the simplest way of aggregating estimators of the same
quantity.

We remark that this averaging strategy is related to the translation-invariant
wavelet denoising technique of \citet{coifman1995}, where a clean
image is reconstructed by averaging estimates obtained by denoising several
shifted versions of an input image.  Even though aggregating
estimators by straight averaging might look suboptimal, we are not aware of any
other technique, in the context of local sparse linear models, leading to
better results for reconstructing the final image from the estimation of overlapping patches.

\paragraph{Use of generic versus global versus image-adaptive dictionary.}
The first baseline considered by~\citet{elad2006} uses a fixed overcomplete
discrete cosine transform (DCT) dictionary, which was presented earlier in
Figure~\ref{fig:dct}. We call such an approach ``generic'' since the dictionary
is pre-defined. The second strategy, dubbed ``global'', consists of learning a
dictionary on an external database of clean patches, simply following the
methodology of Section~\ref{chapter:patches}.  Finally, we also present the
``adaptive'' strategy of~\citet{elad2006}, which learns the dictionary from the
pool of noisy patches:
\begin{equation}
   \min_{\D \in \CC, \A \in \Real^{p \times n}} \frac{1}{n}
   \sum_{i=1}^n\psi(\alphab_i) \st  \|\y_i-\D\alphab_i\|_2^2 \leq
   \varepsilon,\label{eq:dict_noisy}
\end{equation}
where~$\CC$ is defined in Section~\ref{chapter:patches}, and~$\psi$ is a
sparsity-inducing penalty. \citet{elad2006} originally use the~$\ell_0$-penalty
in place of~$\psi$, and optimize~(\ref{eq:dict_noisy}) with the K-SVD
algorithm~\citep{aharon2006}. Later, additional experiments have shown that
learning the dictionary with~$\ell_1$ could bring some
benefits~\citep{mairal2009}. In fact, we experimentally demonstrate in the
sequel that $\ell_1$ consistently provides better results for the denoising
task, \emph{when used for learning the dictionary only}. Indeed, we observe
that regardless of the way the dictionary is learned, $\ell_0$ should always be
used for the final reconstruction step in~(\ref{eq:reconstruct_patch}).

This conclusion is drawn from the following experiment. We consider~$12$
classical images presented in Figure~\ref{fig:dataset}, which were used in
other benchmarks about image denoising~\citep[\eg,][]{mairal2009}.  We corrupt
every image with white Gaussian noise of standard deviation~$\sigma$
in~$\{5,10,15,20,25,50,100\}$ for pixel values in the range~$[0;255]$, and we
measure the reconstruction quality obtained by the three approaches by using
the PSNR criterion.\footnote{Denoting by MSE the mean squared-error for images
whose intensities are between~$0$ and~$255$, the PSNR is defined as
$\text{PSNR}=10\log_{10}(255^2/\text{MSE})$ and is measured in dB. A gain
of~$1$ dB reduces the MSE by approximately~$20\%$.} We consider dictionaries of
size~$p=256$ elements, different patch sizes~$m= l \times l$, with $l$
in~$\{6,8,10,12,14,16\}$, and we use the parameter~$\tau=0.9$ suggested before for choosing
the reconstruction threshold~$\varepsilon$.  For every noise level, the
parameter~$l$ is selected such that it maximizes the average PSNR obtained on
the last~$5$ images of the dataset.

\begin{figure}[hbtp]
   \subfigure[\textsf{house}]{\includegraphics[width=0.24\linewidth]{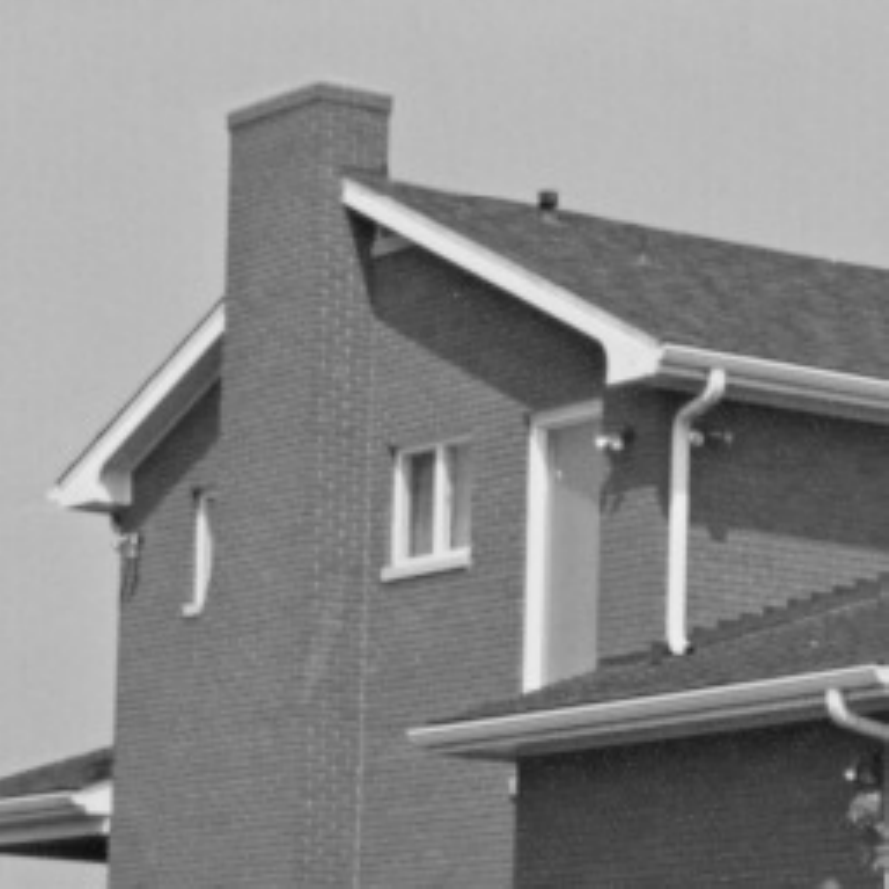}} \hfill
   \subfigure[\textsf{peppers}]{\includegraphics[width=0.24\linewidth]{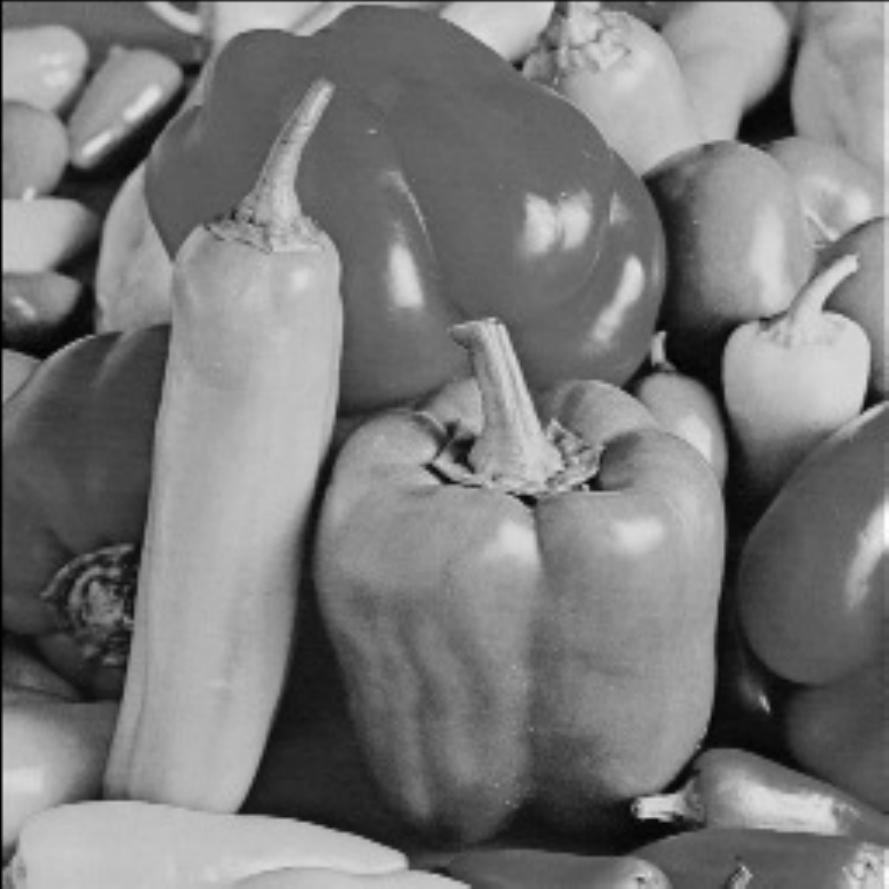}} \hfill
   \subfigure[\textsf{Cameraman}]{\includegraphics[width=0.24\linewidth]{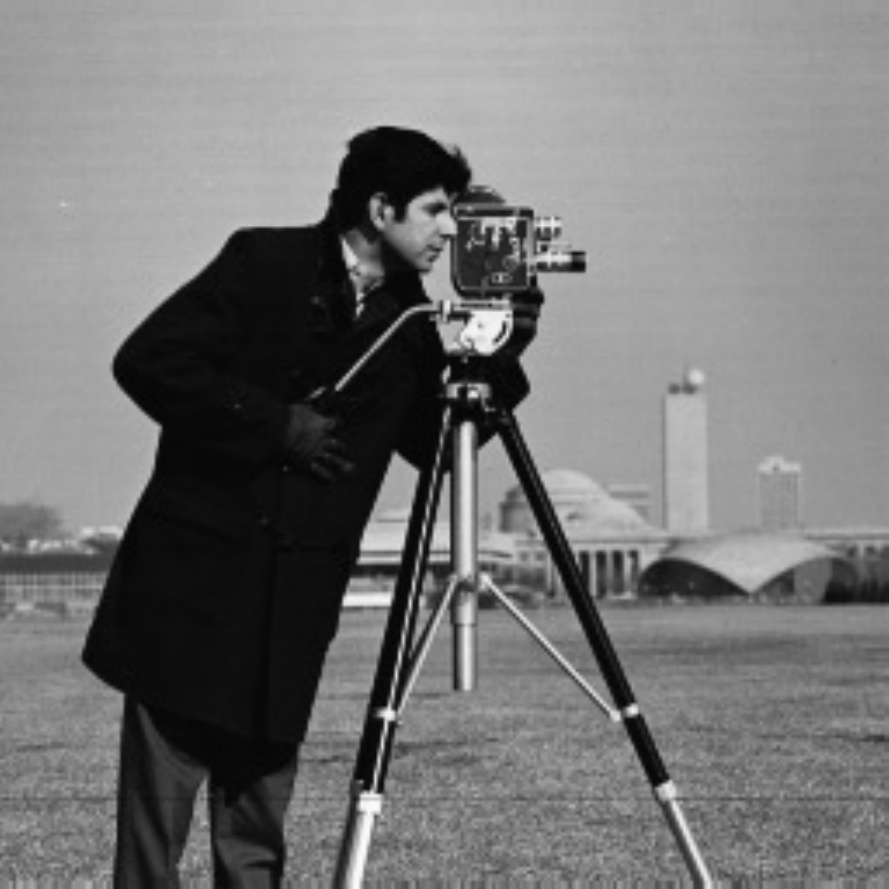}} \hfill
   \subfigure[\textsf{lena}]{\includegraphics[width=0.24\linewidth]{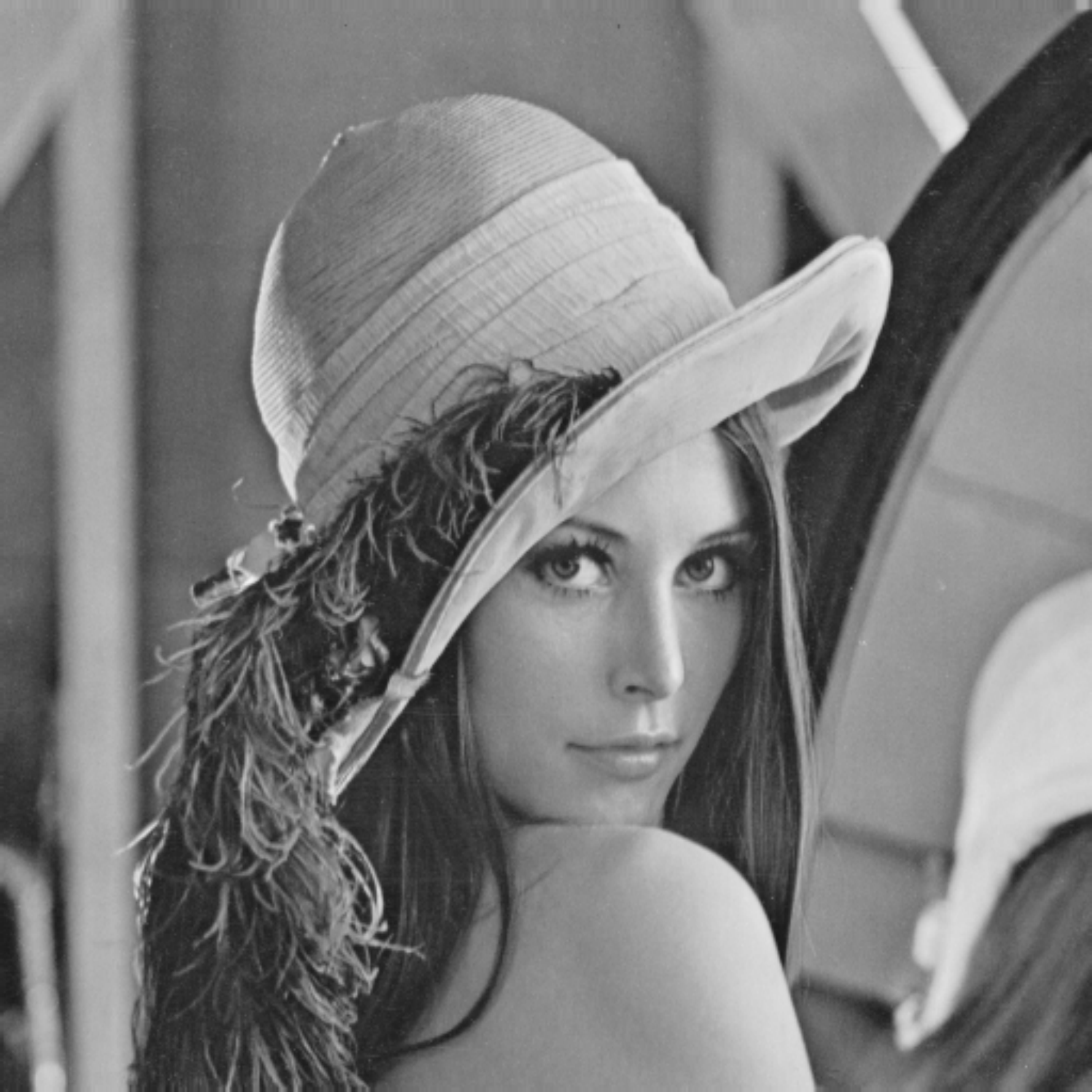}} \\
   \subfigure[\textsf{barbara}]{\includegraphics[width=0.24\linewidth]{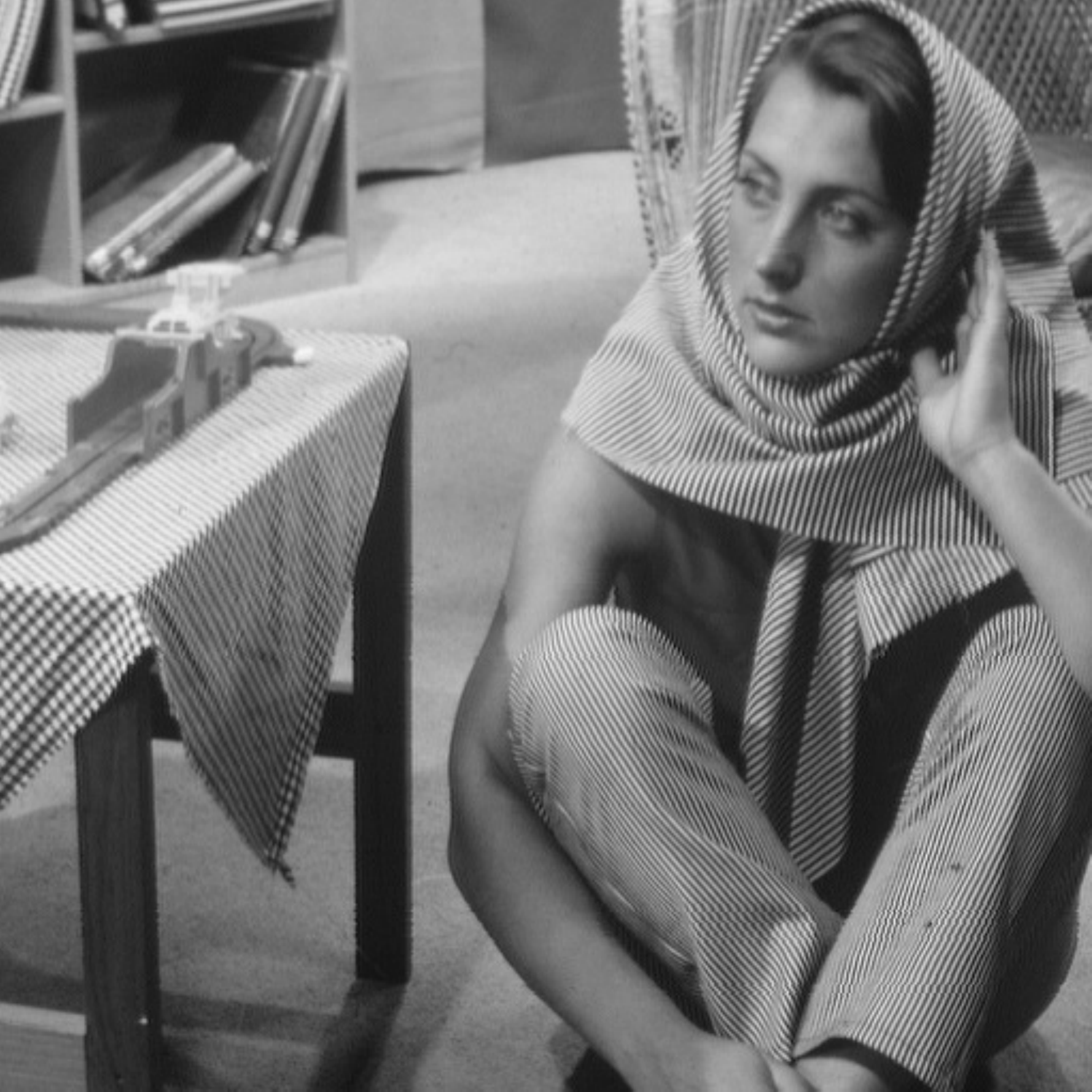}} \hfill
   \subfigure[\textsf{boat}]{\includegraphics[width=0.24\linewidth]{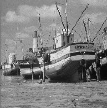}} \hfill
   \subfigure[\textsf{hill}]{\includegraphics[width=0.24\linewidth]{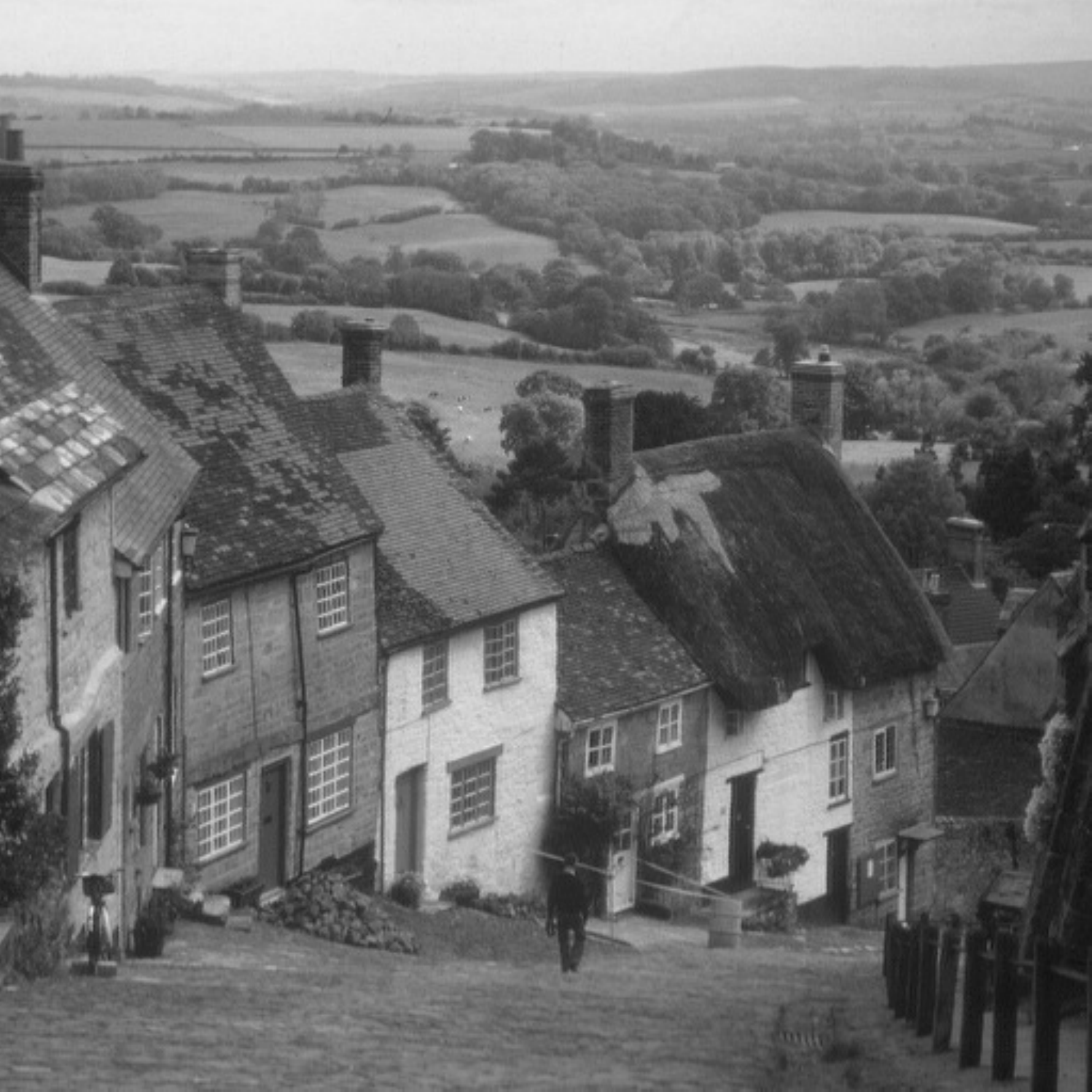}} \hfill
   \subfigure[\textsf{couple}]{\includegraphics[width=0.24\linewidth]{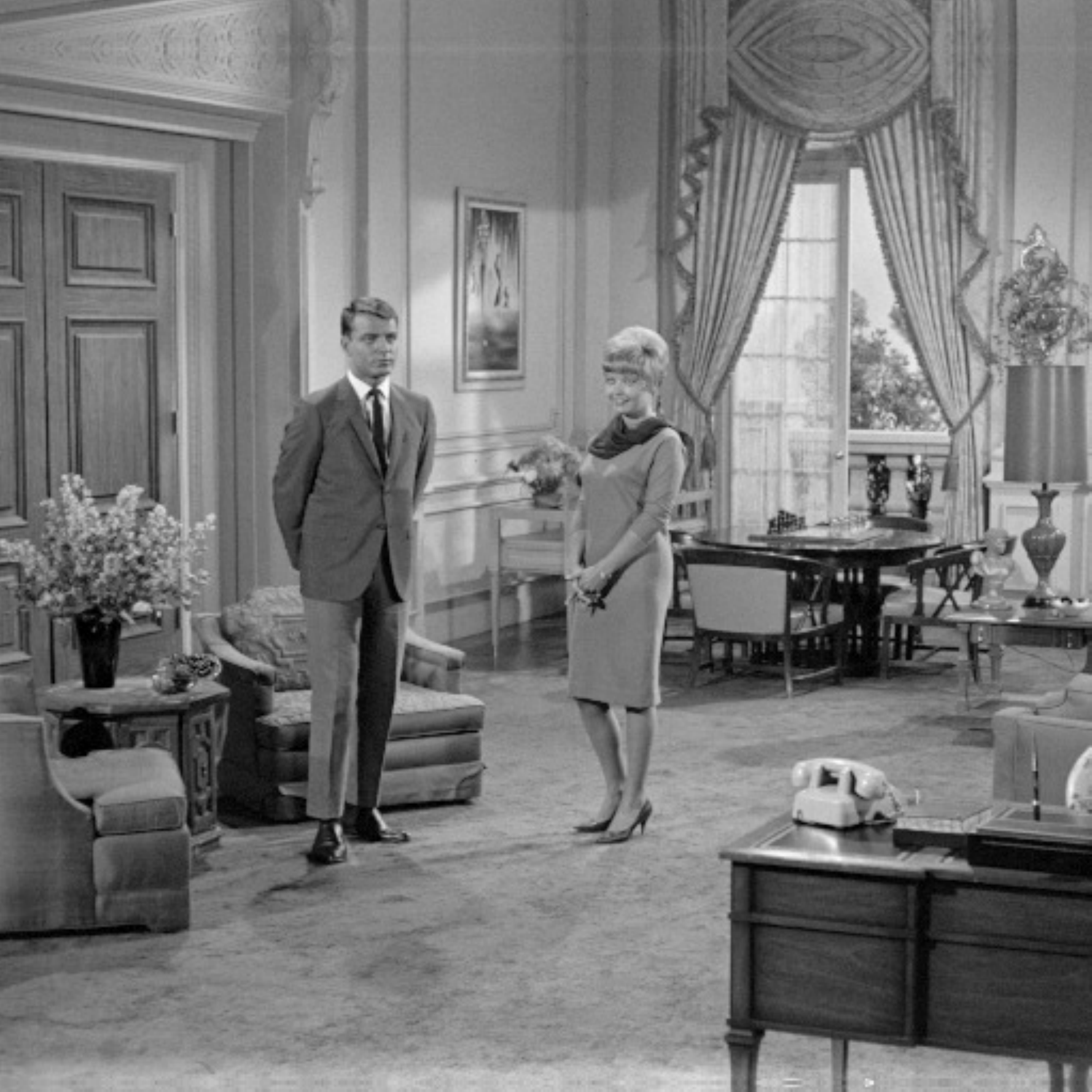}} \\
   \subfigure[\textsf{man}]{\includegraphics[width=0.24\linewidth]{images_arxiv/man.pdf}} \hfill
   \subfigure[\textsf{fingerprint}]{\includegraphics[width=0.24\linewidth]{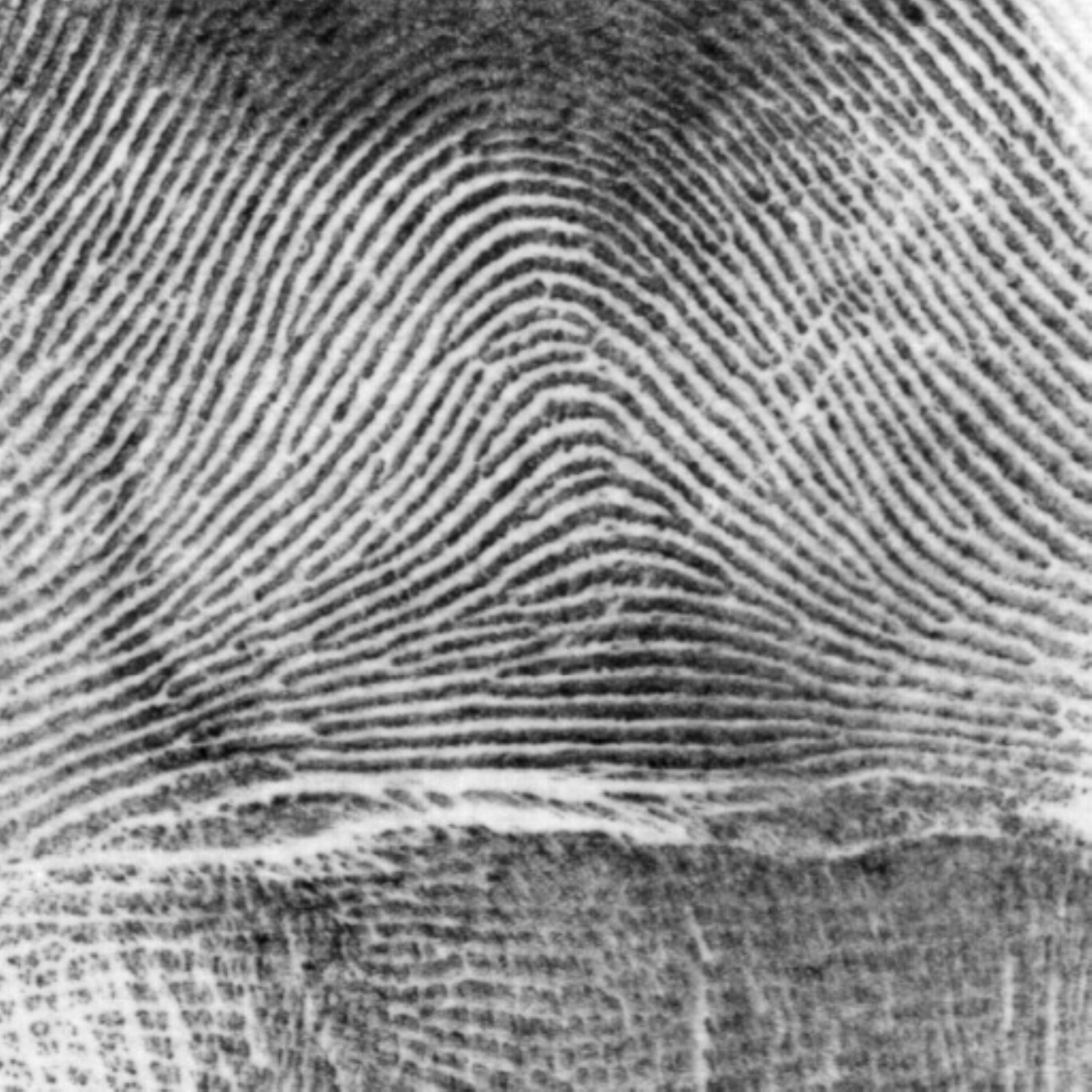}} \hfill
   \subfigure[\textsf{bridge}]{\includegraphics[width=0.24\linewidth]{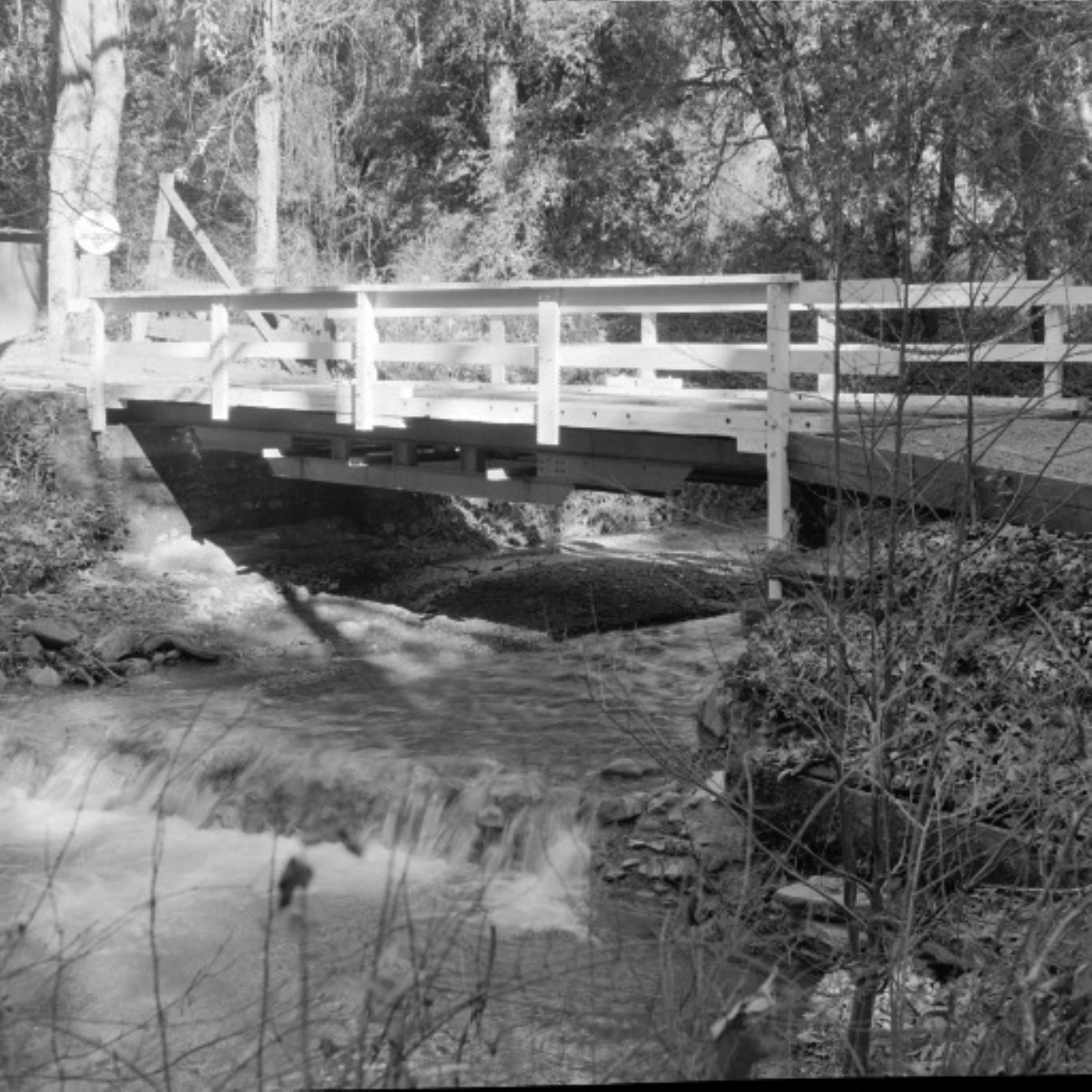}} \hfill
   \subfigure[\textsf{flintstones}]{\includegraphics[width=0.24\linewidth]{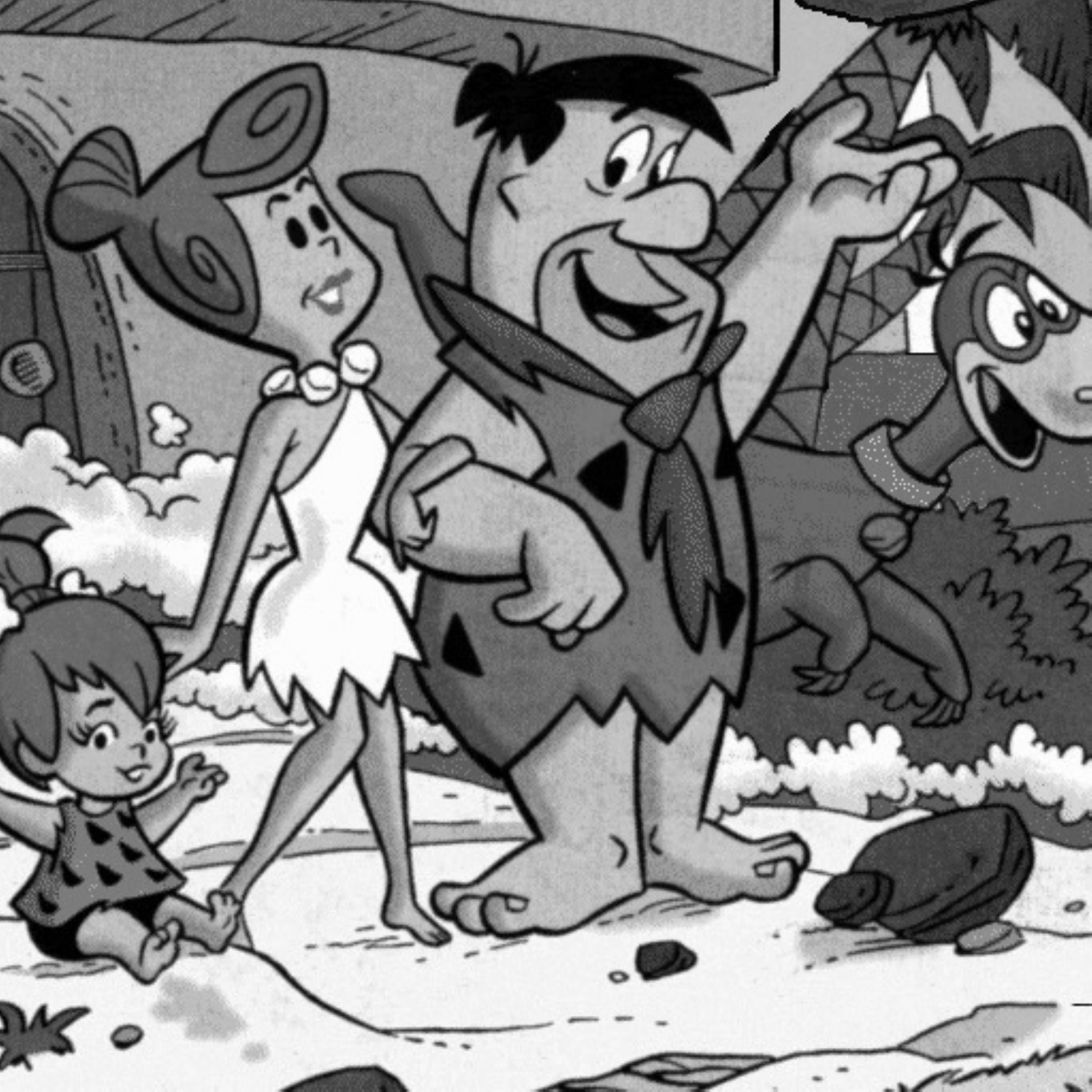}}
   \caption{Dataset of $12$ standard images used in the image denoising benchmarks.}
   \label{fig:dataset}
\end{figure}

We make a comparison between the six scenarios described below. In all cases,
the dictionaries are learned with the SPAMS software, by performing~$10$ passes
over the available training data.
\begin{enumerate}
   \item DCT: we use an overcomplete DCT dictionary; the patches are
      reconstructed by using the
      $\ell_0$-regularization~(\ref{eq:reconstruct_patch}).\label{item:dct}
   \item $\ell_0$-global/$\ell_0$: the dictionary is learned
      on~$400\,000$ natural image patches extracted from the Kodak PhotoCD
      images\footnote{The dataset is available here: \url{http://r0k.us/graphics/kodak/}.}.
      Denoting by~$\x_1,\ldots,\x_n$ these patches, we learn~$\D$ by minimizing
      \begin{displaymath}
         \min_{\D \in \CC, \A \in \Real^{p \times n}} \frac{1}{n} \sum_{i=1}^n \frac{1}{2}\|\x_i-\D\alphab_i\|_2^2 \st \|\alphab_i\|_0 \leq s,
      \end{displaymath}
      where~$s=10$ is known to approximate well clean natural image patches~\citep{elad2006}.\label{item:global}
   \item $\ell_0$-adapt/$\ell_0$: the dictionary is learned by
      minimizing~(\ref{eq:dict_noisy}) with~$\psi=\ell_0$; we initialize the
      learning procedure with the global dictionary obtained in the
      scenario~$\ell_0$-global/$\ell_0$.
   \item $\ell_1$-global/$\ell_0$: the scenario is the same as
      $\ell_0$-global/$\ell_0$, except that the dictionary is learned with
      the~$\ell_1$-penalty, following exactly the methodology of
      Section~\ref{subsec:dict}. The final reconstruction is obtained
      with~$\ell_0$ as in~(\ref{eq:reconstruct_patch}).\label{item:l1global}
   \item $\ell_1$-adapt/$\ell_0$: same as~$\ell_1$-adapt/$\ell_0$, except that the dictionary is
      learned by minimizing~(\ref{eq:dict_noisy}) with~$\psi=\ell_1$; we use
      the dictionary obtained in the scenario $\ell_1$-global for initializing the
      learning procedure. The final reconstruction is still obtained with the~$\ell_0$-penalty.\label{item:l1adaptive}
   \item $\ell_1$-adapt/$\ell_1$: same as~$\ell_1$-adapt/$\ell_0$, but the
      patches~$\y_i^c$ in the final reconstruction are denoised by replacing
      the~$\ell_0$-penalty by the $\ell_1$-norm
      in~(\ref{eq:reconstruct_patch}).
\end{enumerate}
We report the mean PSNR obtained by the different scenarios in
Table~\ref{table:denoising}, along with the performance achieved by other
state-of-the-art approaches of the literature. The conclusions are the
following:
\begin{itemize}
   \item the simple denoising scheme that we have presented performs slightly worse than more recent methods such
      as~\citet{dabov2,mairal2009,chatterjee2012} in terms of PSNR;
   \item adaptive dictionaries yield better results than global ones, as already
      observed by~\citet{elad2006};
   \item learning the dictionary with~$\ell_1$ consistently yields better
      results than~$\ell_0$, even though the $\ell_0$-penalty is used for the
      final image reconstruction step in~(\ref{eq:reconstruct_patch});
   \item $\ell_0$ yields better results than~$\ell_1$ in the final image
      reconstruction step, regardless of the way the dictionary was learned.
\end{itemize}

Even though the conclusion that one should ``first learn the dictionary
with~$\ell_1$, before using $\ell_0$ for denoising the image'' might seem
counterintuitive, we believe that the reason of the good performance of
$\ell_1$ for dictionary learning might be a better stability of the sparsity
patterns than the ones obtained with~$\ell_0$. This may subsequently yield a
better behavior in terms of optimization. Whereas the $\ell_1$-scheme
guarantees us to obtain a stationary point of the dictionary learning
formulation, the $\ell_0$-counterpart does not.  A similar experimented was
conducted by~\citet[][chapter 1.6]{mairal_thesis}, with similar conclusions.

\begin{table}
   \begin{tabular}{|c|c|c|c|c|c|c|c|}
      \hline
      & \multicolumn{7}{c|}{$\sigma$} \\
      \hline
       & 5 & 10& 15 & 20 & 25 & 50 & 100 \\
      \hline
      \hline
      DCT & 37.30 & 33.38 & 31.24 & 29.75 & 28.66 & 25.24 & 22.00 \\
      \hline
      $\ell_0$-global/$\ell_0$ & 37.21 & 33.37 & 31.41  & 30.01 & 28.95 & 25.65 & 22.44 \\
      \hline
      $\ell_1$-global/$\ell_0$ & 37.22 & 33.50 & 31.56 & 30.17 & 29.13 & 25.77  & 22.53 \\
      \hline
      $\ell_0$-adapt/$\ell_0$ & 37.49 & 33.75 & 31.70 & 30.40 & 29.33 & 26.04 & 22.64 \\
      \hline
      $\ell_1$-adapt/$\ell_0$ & 37.60 & 33.90 & 31.90  & 30.51 & 29.43 & 26.20 & 22.72 \\
      \hline
      $\ell_1$-adapt/$\ell_1$ & 36.58 & 32.85 & 30.77 & 29.29 & 28.14 & 24.47 & 21.83 \\
      \hline
      \hline
      GSM & 37.05  & 33.34 & 31.31 & 29.91 & 28.84 & 25.66 & 22.80 \\
      \hline
      K-SVD & 37.42 & 33.62 & 31.58 & 30.18 & 29.10 & 25.61 & 22.10 \\
      \hline
      BM3D & 37.62 & 34.00 & 32.05 & 30.73 & 29.72 & 26.38 & 23.25 \\
      \hline
      LSSC & 37.67 & 34.06 & 32.12 & 30.78 & 29.74 & 26.57 & 23.39 \\
      \hline
      Plow & 37.38 & 32.98 & 31.38 & 30.13 & 29.30 & 26.38 & 23.24 \\
      \hline
      EPLL & 37.36 & 33.64 & 31.67 & 30.32 & 29.29 & 26.12 & 23.03 \\
      \hline
      CSR & 37.61 & 34.00 & 32.05 & 30.72 & 29.70  & 26.53  & 23.45 \\
      \hline
      BM3D-PCA & 37.79 & 34.20 & 32.27 & 30.94 &  29.92 & 26.75 & 23.16 \\
      \hline
   \end{tabular}
   \caption{Denoising performance in PSNR for various methods on the $12$
      standard images of Figure~\ref{fig:dataset} for various levels of
      noise~$\sigma$. GSM refers to the Gaussian scale mixture model
      of~\citet{portilla2003}; K-SVD refers to the original method
      of~\citet{elad2006}; BM3D refers to the state-of-the-art denoising
      approach of~\citet{dabov2}; LSSC refers to~\citet{mairal2009}, Plow
      refers to~\citet{chatterjee2012}, EPLL to~\citet{zoran2011}, CSR
      to~\citet{dong2013}, and BM3D-SAPCA to an extension of BM3D with better
      results~\citep{dabov2009,katkovnik2010}. For all these approaches,
      publicly available software is available on the corresponding authors'
   web pages.}\label{table:denoising}
\end{table}

   \section{Image inpainting} \label{subsec:inpainting}
   Dictionary learning is well adapted to the presence of missing data---that is,
it is appropriate for \emph{inpainting}~\citep{bertalmio2000} when the missing
pixels form small holes that are smaller than the patch sizes.  To deal with
unobserved information, the dictionary learning formulation can be modified by
introducing
a binary mask~$\M_i$ for every patch indexed by~$i$~\citep{mairal2008,mairal2008b}.
Formally, we define $\M_i$ as a diagonal matrix in $\Real^{m \times m}$ whose
value on the $j$-th entry of the diagonal is $1$ if the pixel~$\y_i[j]$ is
observed and $0$ otherwise. Then, the dictionary learning formulation becomes
\begin{displaymath}
   \min_{\D \in \CC, \A \in \Real^{p \times n}} \frac{1}{n}\sum_{i=1}^n\frac{1}{2}\|\M_i(\y_i-\D\alphab_i)\|_2^2 + \lambda \psi(\alphab_i),
\end{displaymath}
where~$\psi$ is a sparsity-inducing penalty, $\M_i\y_i$ represents the
observed pixels from the $i$-th patch of the image~$\y$ and~$\D\alphab_i$ is the
estimate of the full patch~$i$.  In practice, the binary mask
does not drastically change the optimization procedure, and one can still use
classical optimization techniques for dictionary learning,
\eg, alternating between the optimization of $\D$ and $\A$, as described in
Section~\ref{chapter:optim}. When the image $\y$ is only corrupted by missing
pixels and not by other noise source, it is also possible to enforce hard
reconstruction constraints, leading to the formulation
\begin{equation}   
   \min_{\D \in \CC, \A \in \Real^{p \times n}} \sum_{i=1}^n \psi(\alphab_i) \st \M_i\y_i = \M_i\D\alphab_i. \label{eq:inpaint}
\end{equation}
Similarly to the denoising problem, once the dictionary is learned, we need to
sparsely encode all overlapping patches, \eg, by approximately
minimizing~(\ref{eq:inpaint}) with a greedy algorithm when~$\psi$ is
the~$\ell_0$-penalty. Then, the full image is obtained by averaging
using~(\ref{eq:patch_averaging}).

Before showing any inpainting result, we shall comment on \emph{when the
formulation described here is supposed to work}. Our first remark is that it can only
handle holes that are smaller than the patch size. Dealing with larger holes
might be possible, but with a different formulation that would be able to
synthesize new information in empty
patches~\citep[see, \eg][]{criminisi2004,peyre2009,roth2009}.  Second, one assumes that
\emph{the noise pattern is unstructured}, such that no noise patterns are
learned by the dictionary.  The demosaicking task from the next section is a
typical example with very structured missing patterns, and where an alternative
strategy needs to be used. 

Finally, we show inpainting results in Figure~\ref{fig:inpaint}. For both images,
the noise structure is relatively random when seen at the patch level, and the
dictionary learning approach performs well at recovering the missing pixels.
In particular, the brick texture in the image \textsf{house} is hardly visible
for the human eye, but it is well recovered by the algorithm.
\begin{figure}[hbtp]
   \centering
   \subfigure[Example A, Damaged]{\includegraphics[width=0.49\linewidth]{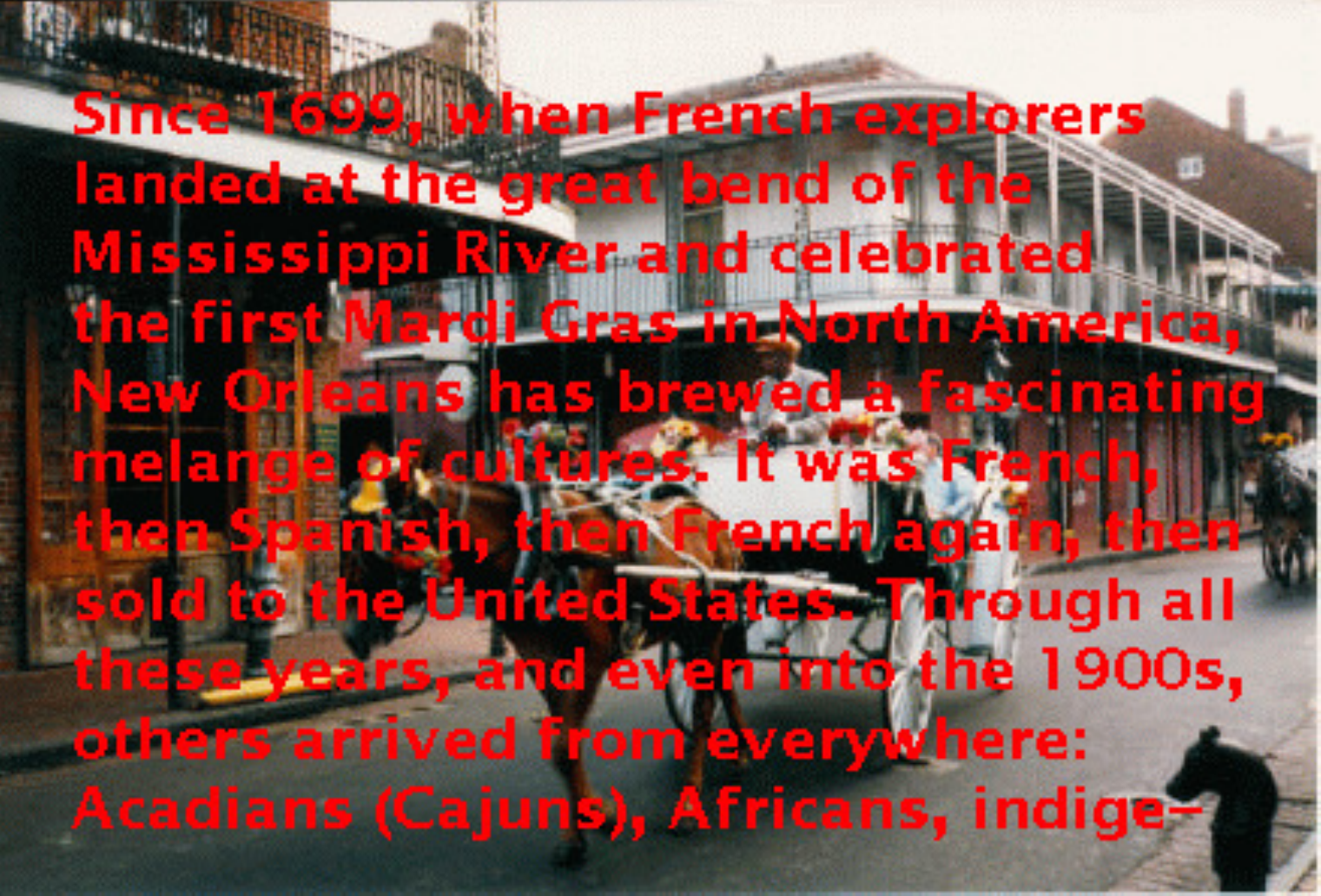}} \hfill
   \subfigure[Example A, Restored]{\includegraphics[width=0.49\linewidth]{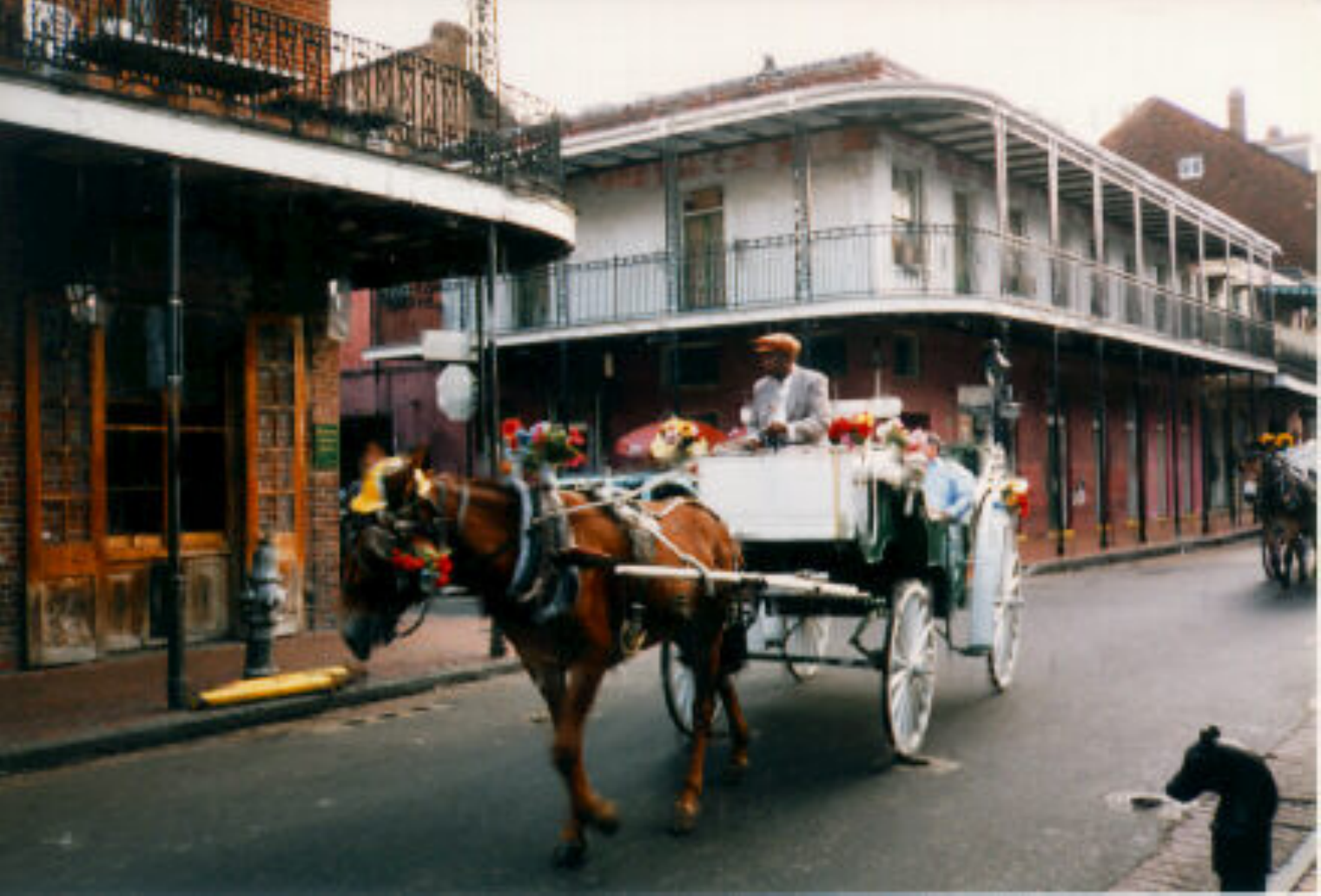}} \\
   \subfigure[Example B, Damaged]{\includegraphics[width=0.49\linewidth]{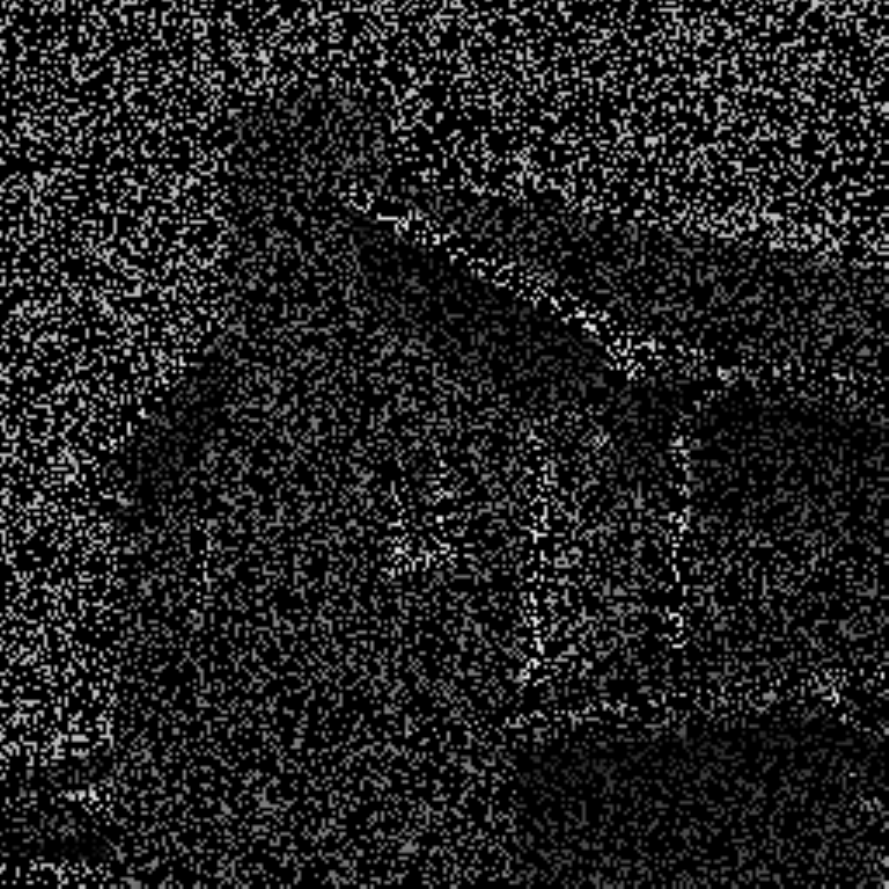}} \hfill
   \subfigure[Example B, Restored]{\includegraphics[width=0.49\linewidth]{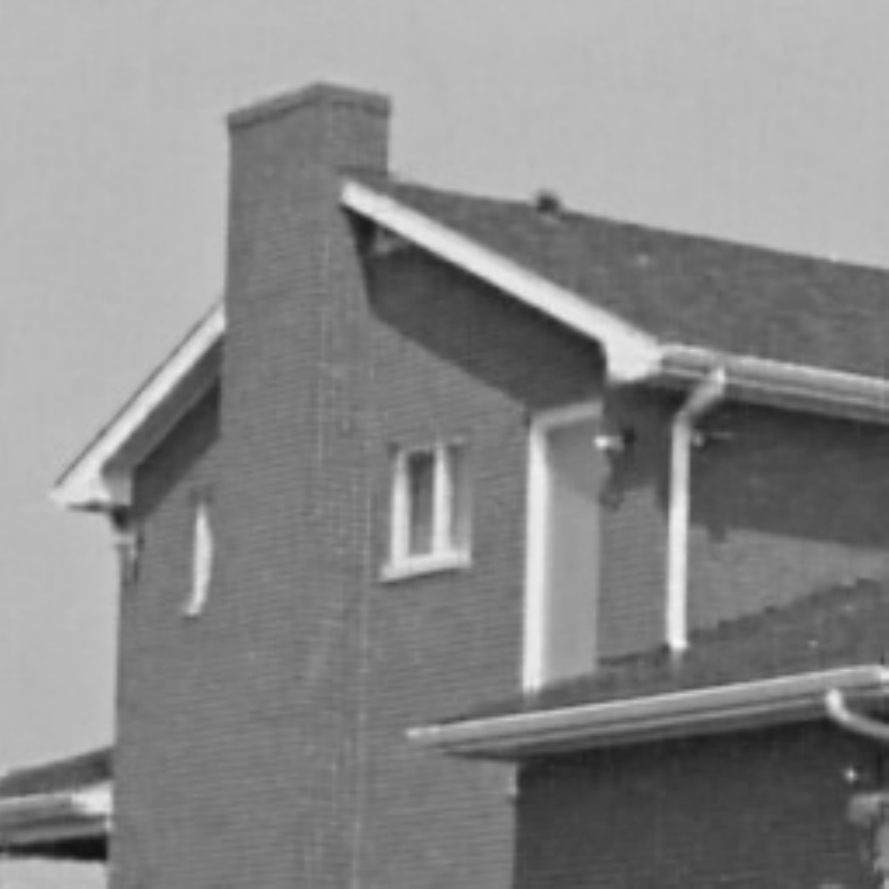}}
   \caption{Top: inpainting result obtained by using the source code
      of~\citet{mairal2008}, where the text is automatically removed on the
      restored image. Bottom: inpainting result from~\citet{mairal2008b}, where
      $80\%$ of the pixels are randomly missing from the original image. The
      algorithm is able to reconstruct the brick texture on the right, by adapting the dictionary
      to the available information displayed on the left. The bottom images are under ``copyright \copyright 2008 Society for Industrial and Applied Mathematics.  Reprinted with permission.  All rights reserved''. Best seen by zooming on a computer screen.}
\label{fig:inpaint}
\end{figure}

   \section{Image demosaicking}\label{subsec:demosaicking}
   The problem of demosaicking consists of reconstructing a color image from the
raw information provided by colored-filtered sensors from digital
cameras~\citep{kimmel1999}. Most consumer-grade cameras use a grid of sensors;
for every pixel, one sensor records the amount of light it receives for a short
period of time. A color filter, red, green, or blue, is placed in front of
the sensor such that only one color channel is measured for each pixel.  Reconstructing
the missing channels for the full image is a task called \emph{demosaicking}. It is 
usually performed on-the-fly by digital cameras, but can also be achieved with
a better quality by professional software on a desktop computer.

A typical pattern of filters, called the ``Bayer'' pattern, is displayed
in Figure~\ref{fig:bayer}. Every row alternates between red and green, or green
and blue filters. As a result, information captured by digital sensors in
the green channel is twice more important than in the red or blue channels,
even though the final post-processed color image observed by the user does not
reflect this fact. An example of a mosaicked image with the Bayer pattern is
presented in Figure~\ref{fig:bayera}. Reconstructing the color image without
producing any visual artifact is difficult since it requires recovering
frequencies in ever channel that are greater than the sampling rate.
Specifically, the resolution in the red and blue channels needs to be doubled.
\begin{figure}[hbtp]
   \centering
   \includegraphics[width=0.4\linewidth]{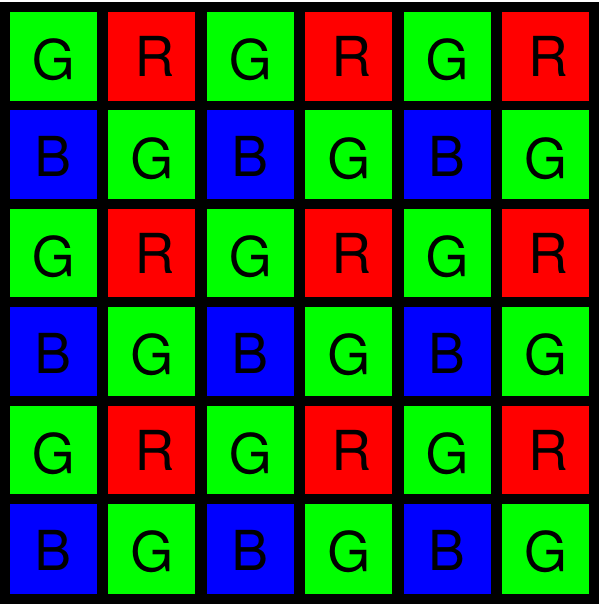}
   \caption{Common Bayer pattern used for digital camera sensors.} \label{fig:bayer}
\end{figure}

Since demosaicking consists of recovering missing information for every pixel,
it can be cast as an inpainting problem, but the inpainting procedure described
in the previous section cannot be directly applied because the noise pattern is
strongly structured. An effective two-step strategy proposed
by~\citet{mairal2008} to overcome the shortcoming of the original inpainting
formulation consists of the following steps:
\begin{enumerate}
   \item learn a dictionary~$\D_0$ on an external database of color
      patches; 
   \item use~$\D_0$ to obtain a first estimate of the color image, following
      the image reconstruction procedure of the previous
      section;\label{item:recon0}
   \item retrain the dictionary on the first image estimate, leading to a new dictionary~$\D_1$ adapted to the image content.
   \item use~$\D_1$ to obtain the final estimate of the color image as
      in~\ref{item:recon0}.
\end{enumerate}
In addition, the patches should also be centered according to
the recommendations of Section~\ref{subsec:preprocess}. The results achieved by
this method can be were shown by~\citet{mairal2008} to outperform the state of the art
in terms of PSNR on the~$24$ images from the Kodak PhotoCD benchmark, even
though some visual artifacts remain visible in areas that are particularly
difficult to reconstruct, as shown in Figure~\ref{fig:bayerb}.

The demosaicking approach based on dictionary learning has been improved later
by~\citet{mairal2009} by combining sparse estimation with image
self-similarities~\citep{buades2009}. The resulting method yields a higher
PSNR, but more importantly it seems to significantly reduce visual artifacts
(see Figure~\ref{fig:bayerc}), and achieves state-of-the-art results in terms
of PSNR~\citep{menon2011}.
\begin{figure}[hbtp]
   \centering
   \subfigure[Mosaicked image]{\label{fig:bayera}
   \includegraphics[width=0.31\linewidth]{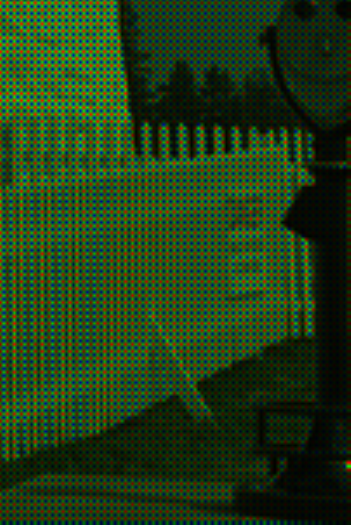}}
   \subfigure[Demosaicked image A]{\label{fig:bayerb}
   \includegraphics[width=0.31\linewidth]{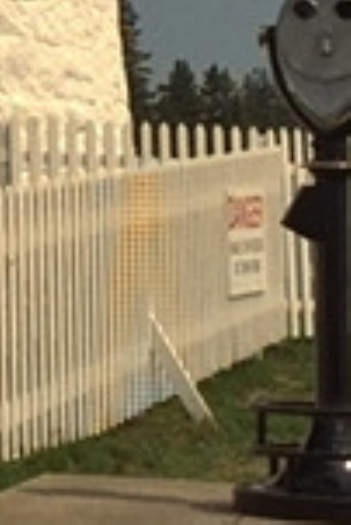}} 
   \subfigure[Demosaicked image B]{\label{fig:bayerc}
   \includegraphics[width=0.31\linewidth]{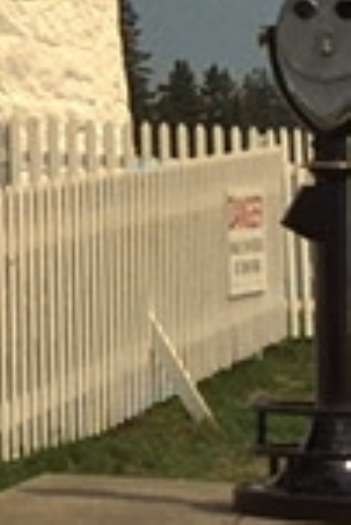}}\\
   \subfigure[Zoom on~\subref{fig:bayera}]{\label{fig:bayeraa}
   \includegraphics[width=0.31\linewidth]{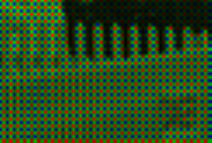}}
   \subfigure[Zoom on~\subref{fig:bayerb}]{\label{fig:bayerbb}
   \includegraphics[width=0.31\linewidth]{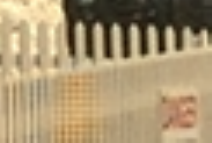}} 
   \subfigure[Zoom on~\subref{fig:bayerc}]{\label{fig:bayercc}
   \includegraphics[width=0.31\linewidth]{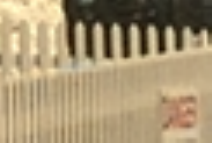}}\
   \caption{\subref{fig:bayera}: Mosaicked image; \subref{fig:bayerb}: image
   demosaicked with the approach of~\citet{mairal2008} presented in
Section~\ref{subsec:demosaicking}. \subref{fig:bayerc}: image demosaicked with
the non-local sparse image models of~\citet{mairal2009}. Note that the part of
the image displayed here in known to be particularly difficult to reconstruct.
Other reconstructions obtained from the database of~$24$ images is
artifact-free in general. Best seen in color by zooming on a computer screen.}
\label{fig:demos}
\end{figure}

   \section{Image up-scaling}\label{subsec:upscaling}
   Dictionary learning has also gained important success for reconstructing
high-resolution images from low-resolution ones, a problem called \emph{image
up-scaling} or \emph{digital zooming}.\footnote{The terminology of
super-resolution is sometimes used but leads to some confusion.
``Super-resolution'' traditionally means synthesizing high-resolution images
from a sequence of low-resolution ones~\citep[see the discussion in][page
341]{elad2010}. Here, we are dealing with a single input image.} The technique
originally proposed by~\citet{yang2010} consists of learning a pair of
dictionaries~$(\D_l,\D_h)$ that can respectively reconstruct low- and
high-resolution patches with the same sparse decomposition coefficients. Then, 
the relation between~$\D_l$ and~$\D_h$ is exploited for processing new
low-resolution images and turn them into high-resolution ones.

Similar to the ``example-based'' approach of~\citet{freeman2002}, the method
requires a database of pairs of training patches~$(\x_i^l,\x_i^h)_{i=1}^n$,
where~$\x_i^l$ in~$\Real^{m_l}$ is a low-resolution version of the
patch~$\x_i^h$ in~$\Real^{m_h}$, and~$m_h > m_l$. The database is built from
training images that are generic enough to represent well the variety of
natural images. Then, the pairs of patches are used offline to train the
dictionaries~$\D_l$ in~$\Real^{m_l \times p}$ and~$\D_h$ in~$\Real^{m_h \times
p}$, which are subsequently used for image up-scaling in a way that will
be described in the sequel. The methodology is related to the \emph{global}
approach for image denoising from Section~\ref{subsec:denoising}, where the
dictionary is not adapted to the image at hand that needs to be processed, but
is adapted instead to a generic database of images.

The image upscaling method of~\citet{yang2010} has received a significant amount of
attention because of the quality of its results, and has been improved recently in
various ways \citep{couzinie2011,zeyde2012,dong2011,wang2012,yang2012}. In this
section, we briefly review some of the ideas that have appeared in this line of
work, but refer to the corresponding papers for more details.  We start with the
original approach of~\citet{yang2010}, before moving to two natural variants.

\paragraph{The original approach of~\citet{yang2010}.}
The dictionary learning formulation of~\citet{yang2010} for image up-scaling consists of
learning~$\D_l$ and~$\D_h$ with the following joint minimization problem:
\begin{equation}
   \min_{\substack{\D_l \in \CC_l \\ \D_h \in \Real^{m_h \times p} \\ \A \in \Real^{p \times n} }} \frac{1}{n} \sum_{i=1}^n \frac{1}{2 m_l} \left\|\x_i^l - \D_l \alphab_i\right\|_2^2 + \frac{1}{2 m_h} \left\|\x_i^h - \D_h \alphab_i\right\|_2^2  + \lambda\left\|\alphab_i\right\|_1, \label{eq:upscaling}
\end{equation}
where~$\A=[\alphab_1,\ldots,\alphab_n]$ is the matrix of sparse coefficients, and $\CC_l$ is the set
of matrices in~$\Real^{m_l \times p}$ 
whose columns have less than unit~$\ell_2$-norm. The formulation encourages the
dictionaries to provide a common sparse representation~$\alphab_i$ for the pair
of patches~$\x_i^l$ and~$\x_i^h$. Note that, to simplify, we have omitted
pre-processing steps for the patches~$\x_i^l$, which lead to substantially
improved results in practice~\citep[see][]{yang2010,zeyde2012}, and we have
preferred to focus on the main principles of the method.

Once the dictionaries are trained, they can be used for turning 
new low-resolution images into high-resolution ones. 
Specifically, let us consider an image~$\y^l$ of size~$\sqrt{n_l} \times
\sqrt{n_l}$, which is independent from the training database, and assume that
the goal is to synthesize a high-resolution image~$\y^h$ of larger
size~$\sqrt{n_h} \times \sqrt{n_h}$---say, for instance,
$\sqrt{n_h}=2\sqrt{n_l}$ when one wishes to double the resolution. 

According to~(\ref{eq:upscaling}), the main underlying assumption is that
high-resolution and low-resolution variants of the same patch admit a common
sparse decomposition onto~$\D_h$ and~$\D_l$, respectively.
Then, given a patch~$\y_i^l$ in~$\Real^{m_l}$ centered at pixel~$i$ in
the low-resolution image~$\y^l$, the method of~\citet{yang2010} computes a
sparse vector~$\betab_i$ in~$\Real^p$ such that~$\y_i^l \approx \D_l
\betab_i$, and the high-resolution estimate of the patch is simply~$\D_h
\betab_i$. Such an approach is however not fully
satisfactory regarding the formulation~(\ref{eq:upscaling}), where each sparse
decomposition is obtained by using both the low- and high-resolution patches.
In other words, $\betab_i$ should be ideally computed by minimizing
\begin{equation}
   \min_{\betab_i \in \Real^{p} } \frac{1}{2 m_l} \left\|\y_i^l - \D_l \betab_i\right\|_2^2 + \frac{1}{2 m_h} \left\|\y_i^h - \D_h \betab_i\right\|_2^2  + \lambda\left\|\betab_i\right\|_1, \label{eq:upscaling2}
\end{equation}
but unfortunately only~$\y_i^l$ is available at test time since~$\y_i^h$ is
the unknown quantity to estimate. Such a discrepancy is addressed by~\citet{yang2010} with a heuristic,
where~$\y_i^h$ in~(\ref{eq:upscaling2}) is replaced by an auxiliary variable
representing the current prediction of the high-resolution patches given
previously processed neighbors. Finally, reconstructing the full image from the
local patch estimates can be done as usual by straight averaging, even though
other possibilities yielding good results have also been
proposed~\citep[see][]{zeyde2012}.

\paragraph{First variation with a regression formulation.}
A strategy for overcoming the discrepancy between training and
testing in the previous approach has been independently proposed by several
authors~\citep{zeyde2012,couzinie2011,yang2012}. In all these approaches,
the sparse coefficients~$\alphab_i$ and~$\betab_i$ are always
computed by using \emph{low-resolution patches only}. 
This motivates the following two-step training approach:
\begin{enumerate}
   \item compute~$\D_l$ and~$\A$ with a classical dictionary learning formulation, for instance:
\begin{displaymath}
   \min_{\D_l \in \CC_l, \A \in \Real^{p \times n}}  \frac{1}{n} \sum_{i=1}^n \frac{1}{2}\left\|\x_i^l - \D_l \alphab_i\right\|_2^2 + \lambda \|\alphab_i\|_1.
\end{displaymath}
   \item obtain $\D_h$ by solving a multivariate \emph{regression} problem:
      \begin{displaymath}
         \min_{\D_h \in \Real^{m_h \times p}}  \frac{1}{n} \sum_{i=1}^n \frac{1}{2}\left\|\x_i^h - \D_h \alphab_i\right\|_2^2,
      \end{displaymath}
      where the~$\alphab_i$'s are fixed after the first step. See
      also~\citet{zeyde2012} for other variants.
\end{enumerate}
Once the dictionaries are learned, a new low-resolution image~$\y_l$ is
processed by decomposing its patches~$\y_i^l$ onto~$\D_l$, yielding sparse
coefficients~$\betab_i$, and the estimate of the corresponding high resolution
patches are again the quantities~$\D_h\betab_i$. 

\paragraph{Second variation with bilevel optimization.}
The previous variant addresses one shortcoming of the original approach
of~\citet{yang2010}, but unfortunately loses the ability of jointly learning~$\D_l$
and~$\D_h$.  The variant presented in this paragraph is motivated by that issue.
It was independently proposed by~\citet{couzinie2011} and~\citet{yang2012} and
allows learning~$\D_l$ and~$\D_h$ at the same time. As we shall see, it raises
a challenging bilevel optimization problem.

Let us first introduce a notation describing the solution of the Lasso for any vector~$\x$ and dictionary~$\D$:
\begin{displaymath}
\alphab^\star(\x,\D) \defin \argmin_{\alphab \in \Real^p} \left[\frac{1}{2}\|\x-\D\alphab\|_2^2 + \lambda \|\alphab\|_1\right],
\end{displaymath}
where we assume the solution of the Lasso problem to be unique.\footnote{When
the solution is not unique, the elastic-net regularization~\citep{zou} can be
used instead of the~$\ell_1$-norm. It consists of replacing the
penalty~$\|\alphab\|_1$ by~$\|\alphab\|_1+(\mu/2)\|\alphab\|_2^2$. The
resulting penalty still enjoy sparsity-inducing properties, but is strongly
convex and thus yields a unique solution.}
Then, the joint dictionary learning formulation consists of minimizing
\begin{equation}
   \min_{\substack{\D_l \in \CC_l \\ \D_h \in \Real^{m_h \times p}}} \frac{1}{n} \sum_{i=1}^n \frac{1}{2}\left\|\x_i^h - \D_h \alphab^\star(\x_i^l,\D_l)\right\|_2^2. \label{eq:taskdriven}
\end{equation}
The problem is challenging since it is non-convex, non-differentiable, and the
dependency of the objective with respect to~$\D_l$ goes through the argmin of
the Lasso formulation.  Such a setting is sometimes referred to as ``bilevel
optimization'' problems and is known to be difficult.
Eq.~(\ref{eq:taskdriven}) appears under the name of ``coupled dictionary
learning'' in~\citep{yang2012}, and is in fact a particular case of the more
general ``task-driven dictionary learning'' formulation of~\citet{mairal2012},
which we will present in more details in Section~\ref{subsec:backprop} about
visual recognition. It is possible to show
that in the asymptotic regime where~$n$ grows to infinity, the cost function to
optimize becomes differentiable and that its gradient can be estimated.
Based upon this observation, a stochastic gradient descent algorithm can be
used as an effective heuristic for finding an approximate solution
\citep[see][]{couzinie2011,mairal2012}.
We conclude this section with a few visual results, which we present in
Figure~\ref{fig:upscale}.
\begin{figure}[hbtp]
   \centering
   \subfigure[Original]{\includegraphics[width=0.32\linewidth]{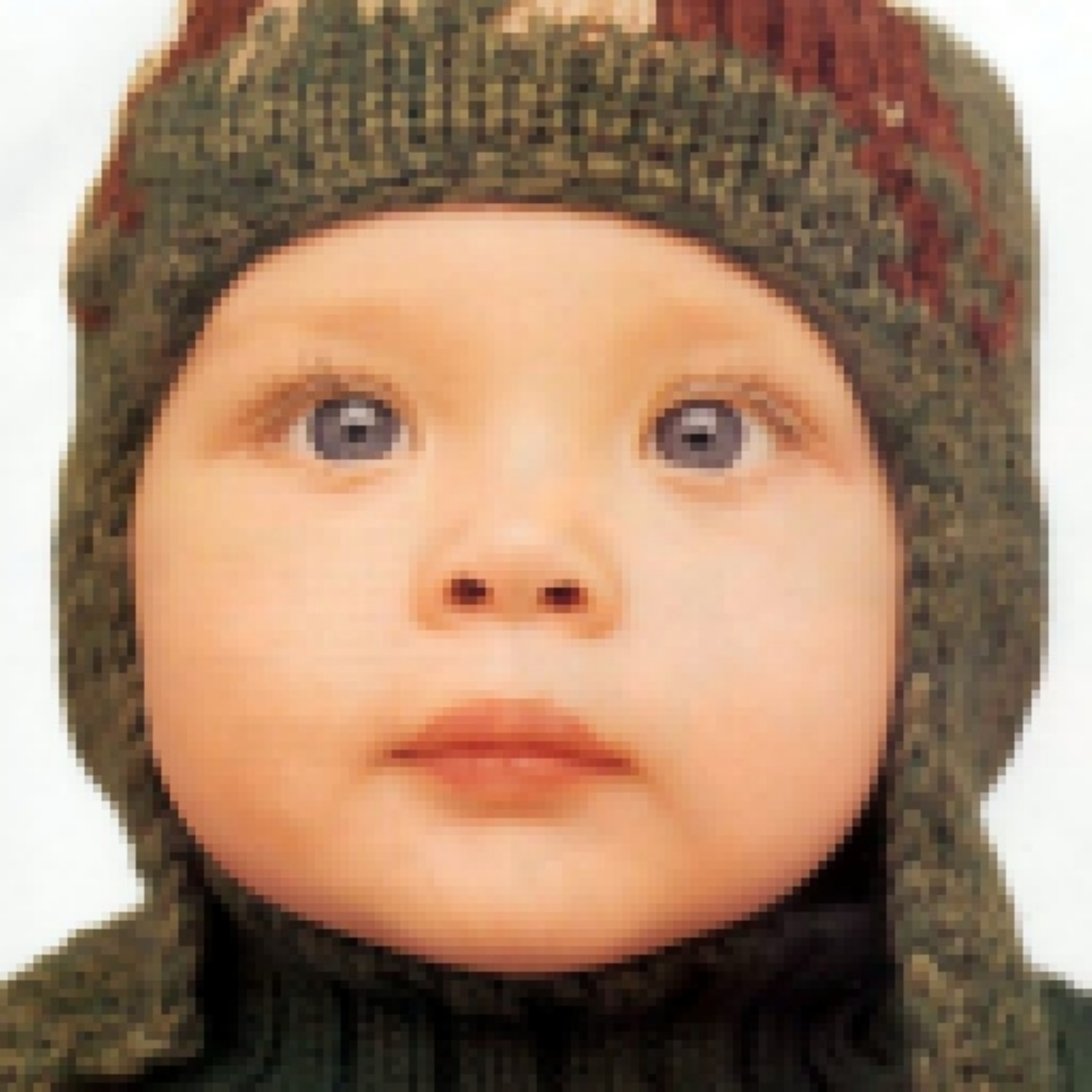}\label{fig:upscalea}} 
   \subfigure[Bicubic interpolation]{\includegraphics[width=0.32\linewidth]{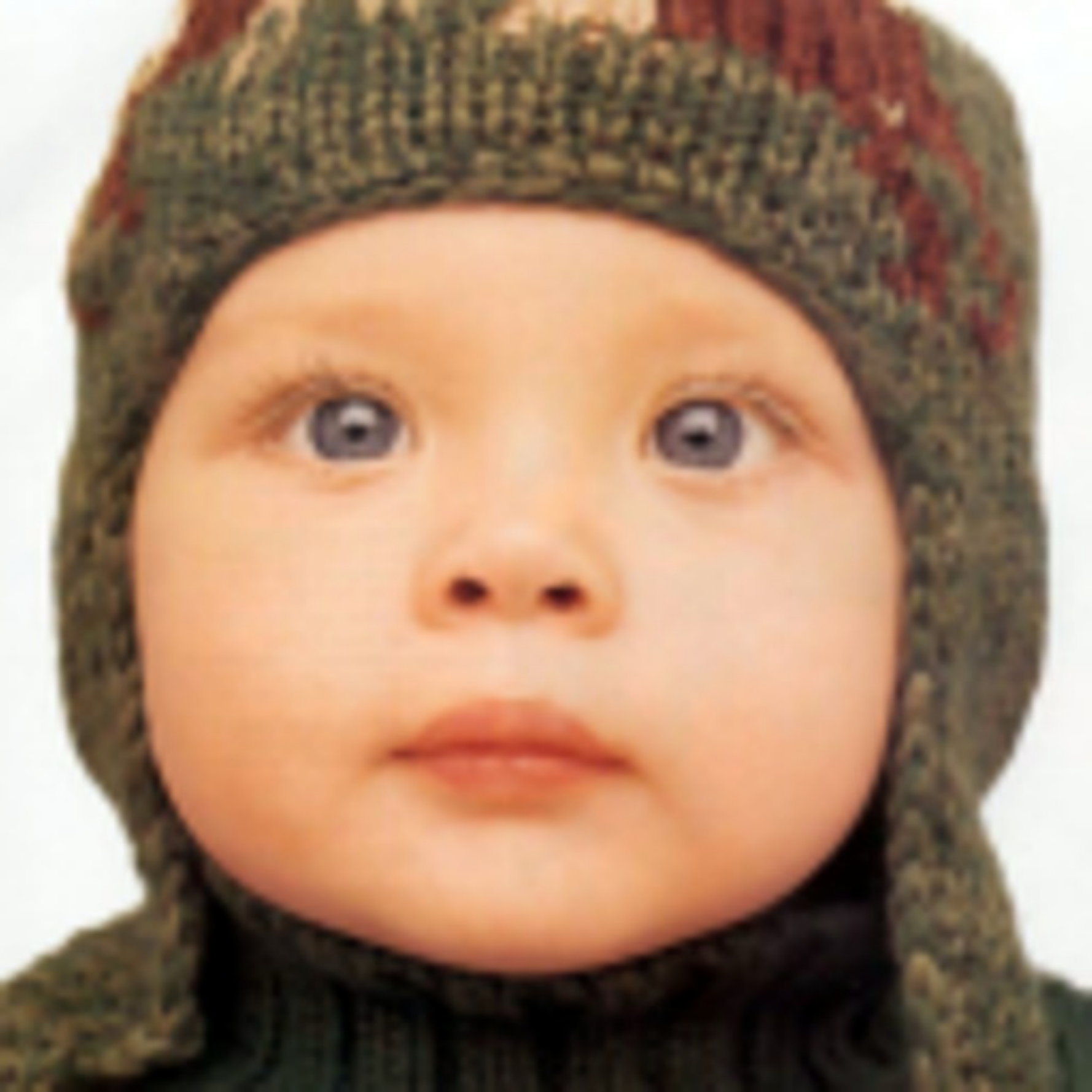}\label{fig:upscaleb}} 
   \subfigure[With sparse coding]{\includegraphics[width=0.32\linewidth]{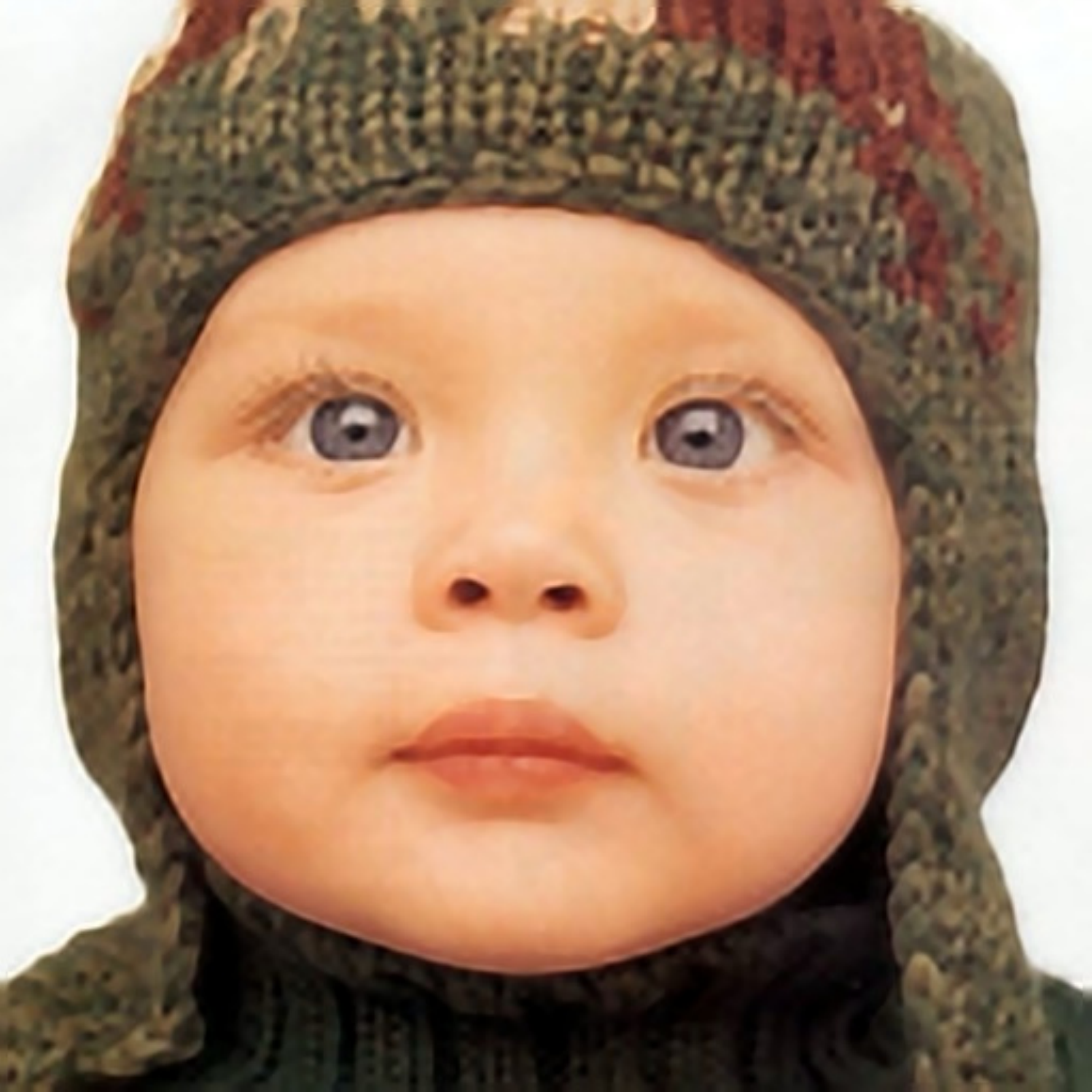}\label{fig:upscalec}} \\
   \subfigure[Zoom on~\subref{fig:upscalea}]{\includegraphics[width=0.32\linewidth,trim=0 20 0 5,clip=true]{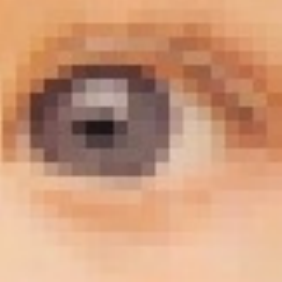}} 
   \subfigure[Zoom on~\subref{fig:upscaleb}]{\includegraphics[width=0.32\linewidth,trim=0 20 0 5,clip=true]{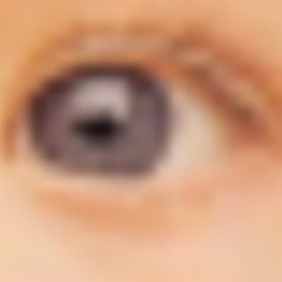}} 
   \subfigure[Zoom on~\subref{fig:upscalec}]{\includegraphics[width=0.32\linewidth,trim=0 20 0 5,clip=true]{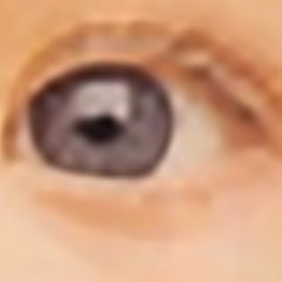}} \\
   \subfigure[Original]{\includegraphics[width=0.32\linewidth]{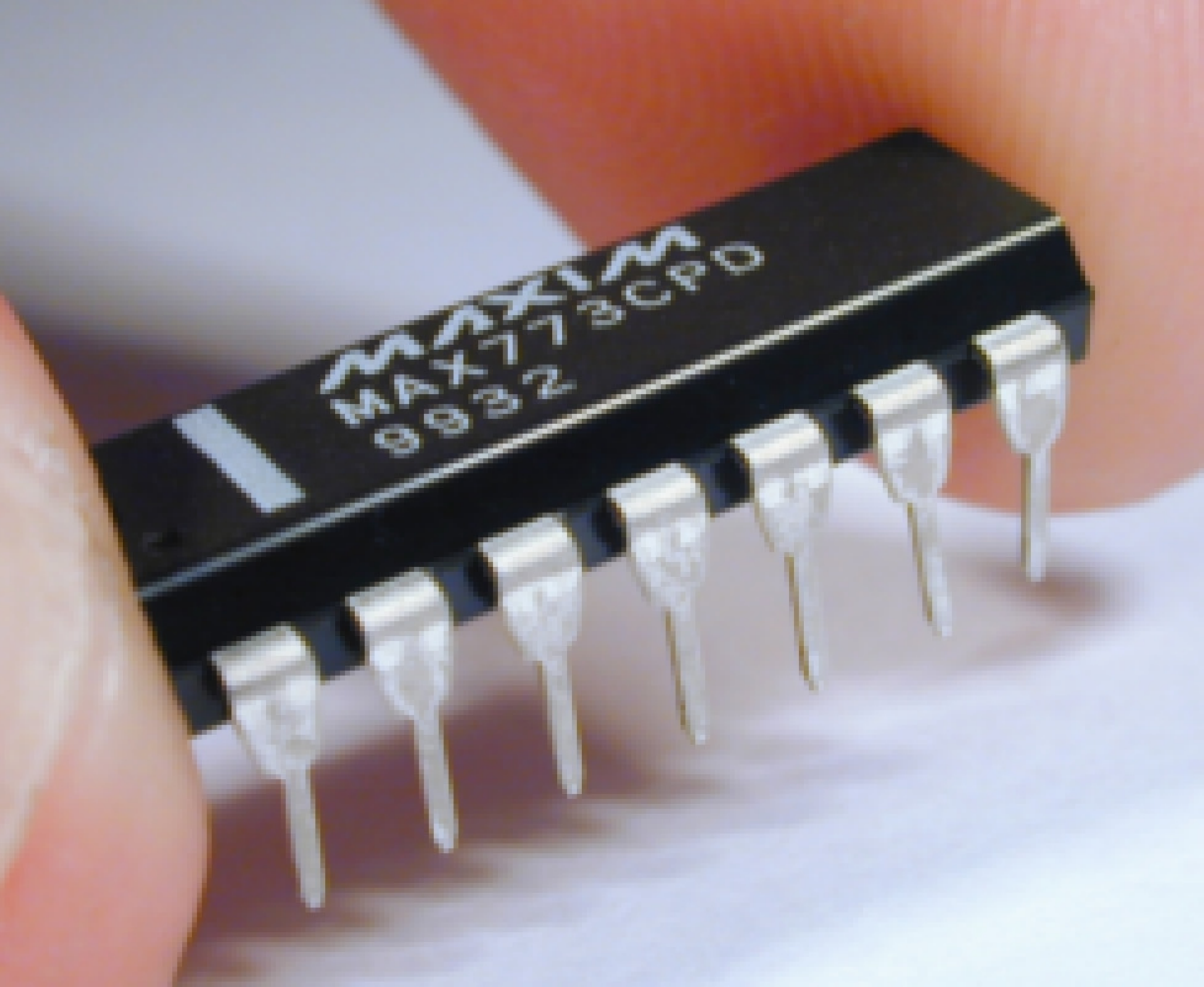}\label{fig:upscaled}} 
   \subfigure[Bicubic interpolation]{\includegraphics[width=0.32\linewidth]{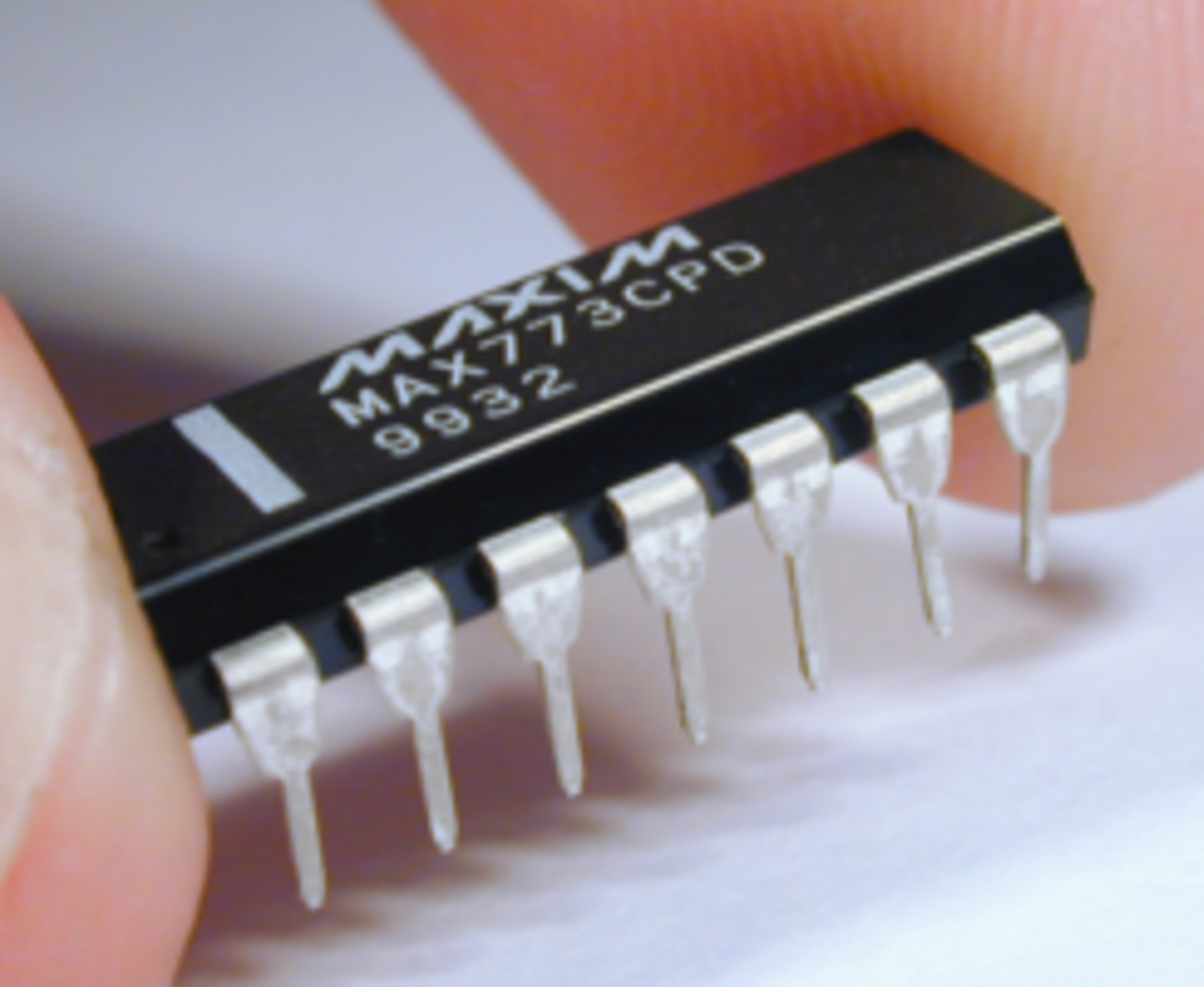}\label{fig:upscalee}} 
   \subfigure[With sparse coding]{\includegraphics[width=0.32\linewidth]{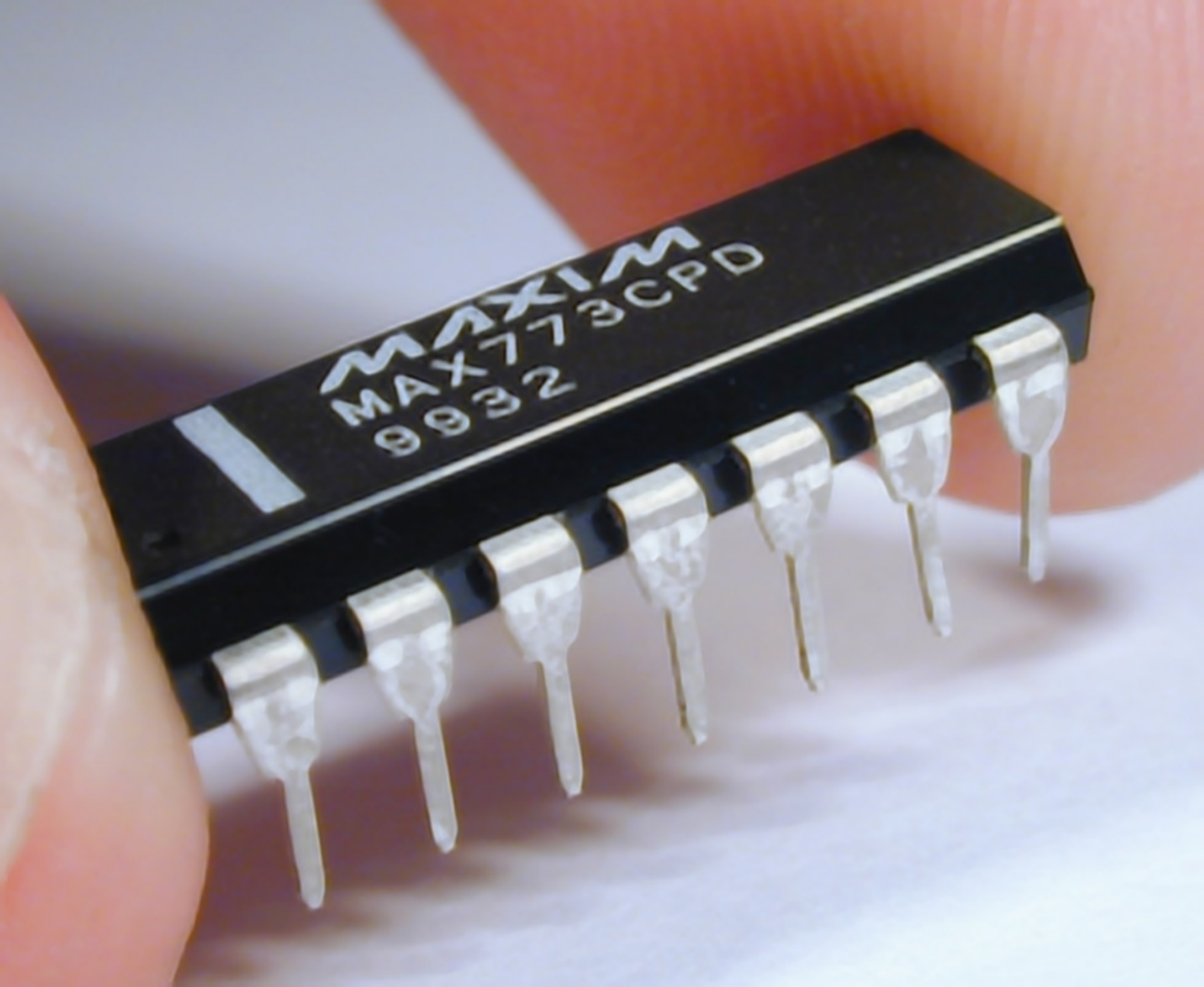}\label{fig:upscalef}} \\
   \subfigure[Zoom on~\subref{fig:upscaled}]{\includegraphics[width=0.32\linewidth,trim=0 20 0 5,clip=true]{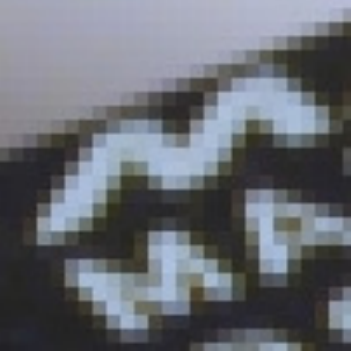}} 
   \subfigure[Zoom on~\subref{fig:upscalee}]{\includegraphics[width=0.32\linewidth,trim=0 20 0 5,clip=true]{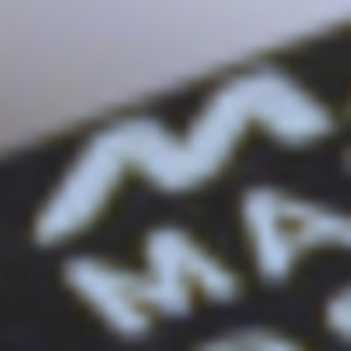}} 
   \subfigure[Zoom on~\subref{fig:upscalef}]{\includegraphics[width=0.32\linewidth,trim=0 20 0 5,clip=true]{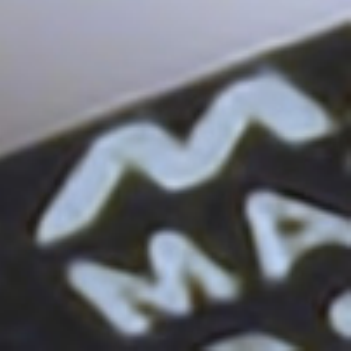}} 
   \caption{Image upscaling results obtained with the dictionary learning
   technique of~\citet{couzinie2011}. Images produced by Florent
Couzinie-Devy.}
   \label{fig:upscale}
\end{figure}

   \section{Inverting nonlinear local transformations}\label{subsec:invert}
   Interestingly, the image up-scaling formulations that we have
presented previously do not explicitly take into account the type of
transformation between the patches~$\x_i^h$ and~$\x_i^l$. In other words, they
only assume that the pairs of patches~$\x_i^h$ and~$\x_i^l$ admit a common
sparse representation onto two different dictionaries, but they do not make any
a priori assumption on the relation between the patches. For this reason, it is
appealing to try similar formulations for other problems than image
upscaling.

Several extensions have been developed along this line of work.
\citet{couzinie2011} and~\citet{dong2011} have for instance developed image
deblurring techniques that are effective when the blur is local. Other
less-standard applications also appear in the literature. \citet{dong2011}
propose indeed to learn a mapping between face photographs and hand-drawn
sketches. They use a database of images called CUFS~\citep{wang2009} of
about~$600$ subjects. For each subject, a grayscale photograph is provided,
along with a sketch drawn by an artist. The dataset can be used to create
pairs of photograph-sketch patches and a nonlinear mapping is learned between
the two patch spaces, providing convincing visual results.

Finally, we illustrate the ability of the previous formulations for learning
nonlinear mappings with the problem of reconstructing grayscale images from
binary ones, a task called ``inverse halftoning''. With the use of binary
display and printing technologies in the 70's, converting a grayscale
continuous-tone image into a binary one that looks perceptually similar to the
original one (``halftoning'') was of utmost importance. A classical algorithm
was developed for that purpose by~\citet{floyd1976}. The goal of ``inverse
halftoning'' is to restore these binary images and estimate the original ones.
Since building a large database of pairs of grayscale-halftoned image patches
is easy, the methodology of the previous section can be used for learning a
mapping between binary and grayscale patches. In Figure~\ref{fig:ht}, we
display a few visual results obtained with the approach of~\citet{mairal2012}.
The dictionaries were trained on an external database build with the
Floyd-Steinberg algorithm.

\begin{figure}[hbtp]
   \centering
   \subfigure[Original]{\includegraphics[width=0.32\linewidth]{images_arxiv/hill.pdf}\label{fig:hta}} 
   \subfigure[Halftoned image]{\includegraphics[width=0.32\linewidth]{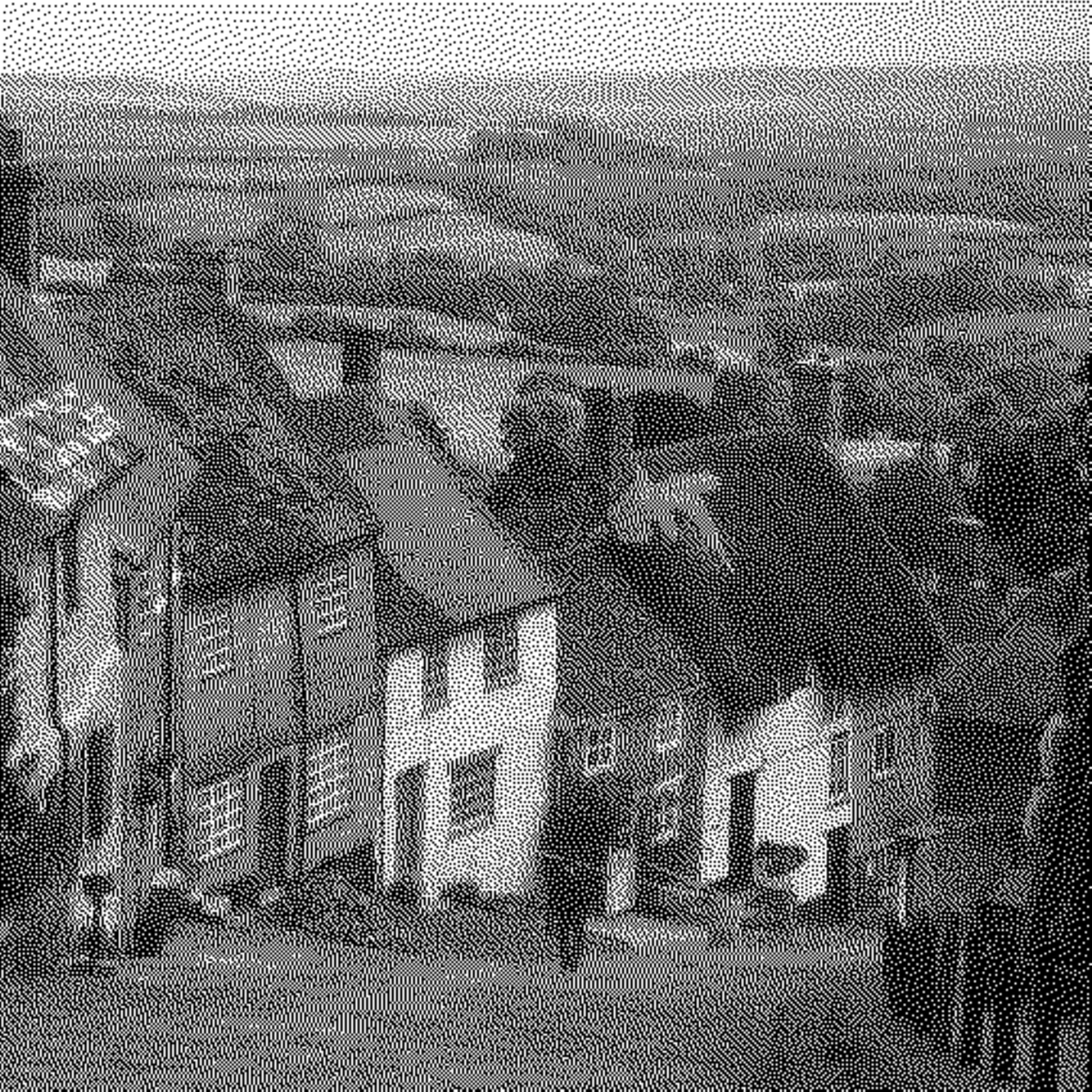}\label{fig:htb}} 
   \subfigure[Reconstructed image]{\includegraphics[width=0.32\linewidth]{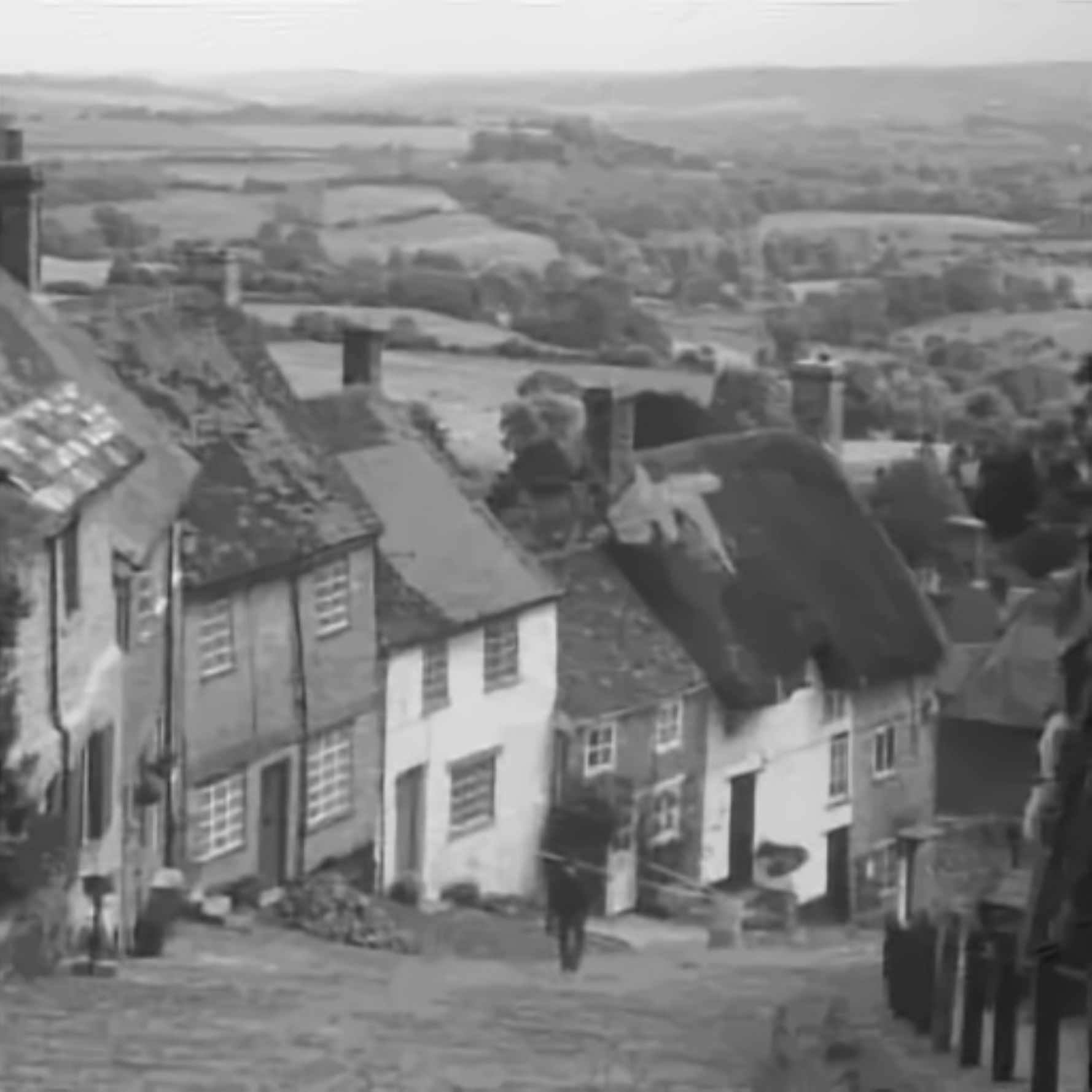}\label{fig:htc}} \\
   \subfigure[Zoom on~\subref{fig:hta}]{\includegraphics[width=0.32\linewidth]{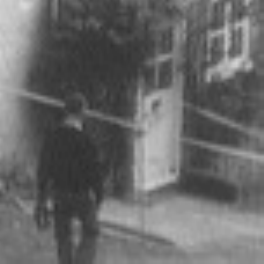}} 
   \subfigure[Zoom on~\subref{fig:htb}]{\includegraphics[width=0.32\linewidth]{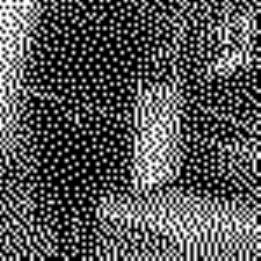}} 
   \subfigure[Zoom on~\subref{fig:htc}]{\includegraphics[width=0.32\linewidth]{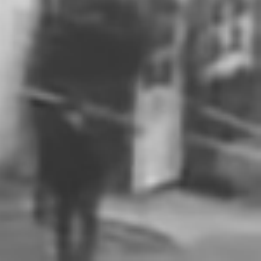}} \\
   \subfigure[Binary image]{\includegraphics[width=0.48\linewidth]{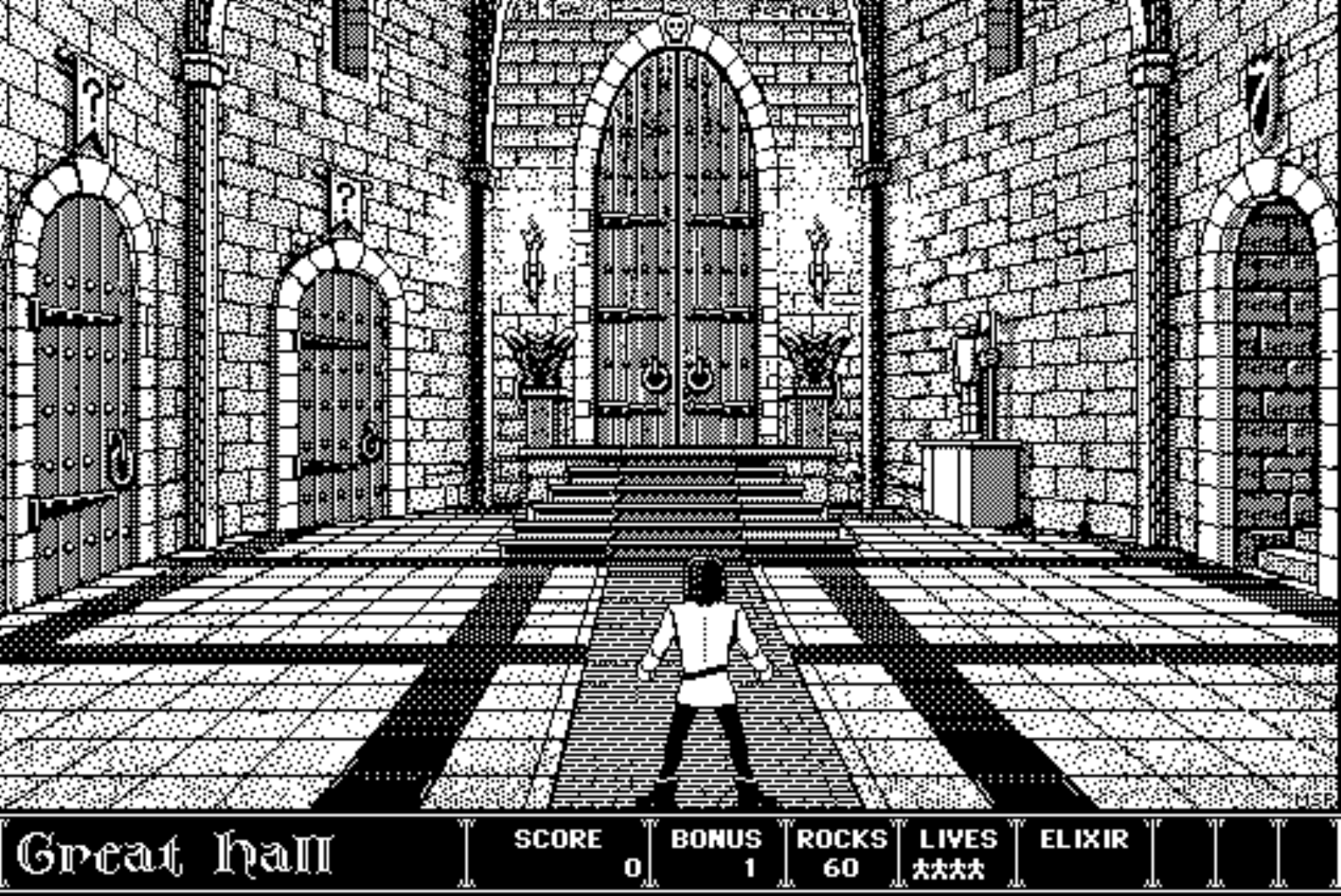}} 
   \subfigure[Restored image]{\includegraphics[width=0.48\linewidth]{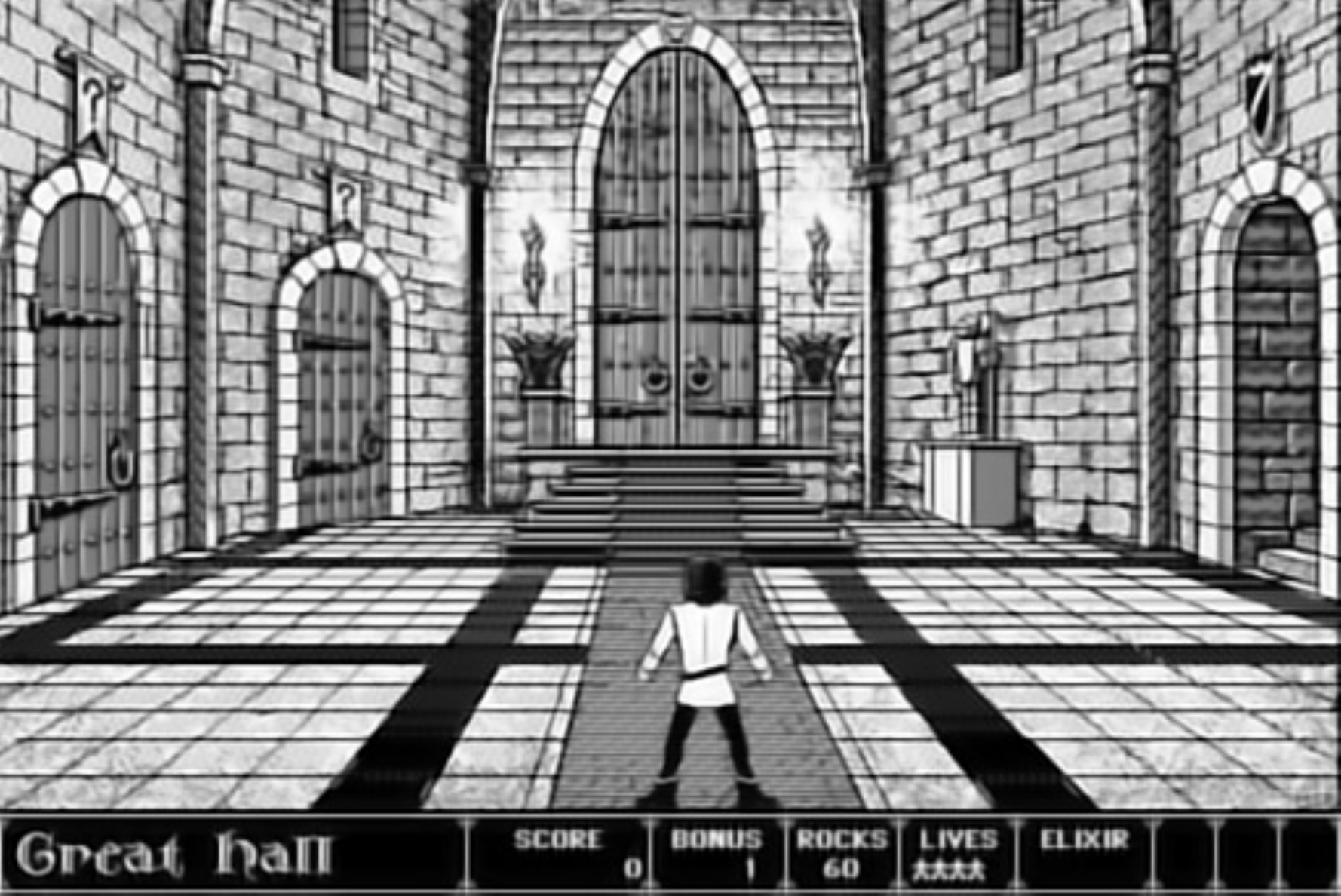}} 
   \caption{Inverse halftoning results obtained by~\citet{mairal2012}. The last
      row presents a result on a binary image publicly available on the
      Internet. No ground truth is available for this binary image from an old
   computer game. Despite the fact that the algorithm that has generated this
image is unknown and probably different than the Floyd-Steinberg algorithm, the
result is visually convincing. Best seen by zooming on a computer screen.}
\label{fig:ht}
\end{figure}

   \section{Video processing}
   Extensions of dictionary learning techniques for dealing with videos have
been proposed by~\citet{protter2009}. Given a noisy video sequence, the most naive
approach consists of processing each frame independently. However,
significantly better results can be obtained by exploiting the temporal
consistency that exists among consecutive frames. To do so, a few modifications
to the classical denoising formulation are sufficient:
\begin{itemize}
   \item several frames can be processed at the same time by considering
      three-dimensional patches that involve one temporal dimension.
      For instance, a video patch can be of size $m=e \times e \times T$,
      where~$T$ is the number of frames in the patch, \eg, 
      $e=10$ pixels and $T=5$ frames.
   \item After processing $T$ frames starting at time~$t$, we obtain a
      dictionary~$\D_t$ adapted to the set of three-dimensional patches at
      time~$t$. When moving to the next block of $T$ frames at time~$t+1$
      (considering that the blocks overlap in time), we can learn a
      dictionary~$\D_{t+1}$ adapted to the information available a time~$t+1$,
      using~$\D_t$ as an initialization to the learning algorithm.
\end{itemize}
In Figures~\ref{fig:dict_videoA} and~\ref{fig:dict_videoB}, we present two
video processing results from~\citep{mairal2008b}, where the original video
extension of~\citet{protter2009} has been adapted to the inpainting and color
video denoising tasks. The results of the ``video processing'' approach are 
significantly better than those obtained by processing independently each frame.
\begin{figure}[hbtp]
  \subfigure{\includegraphics[width=0.24\linewidth]{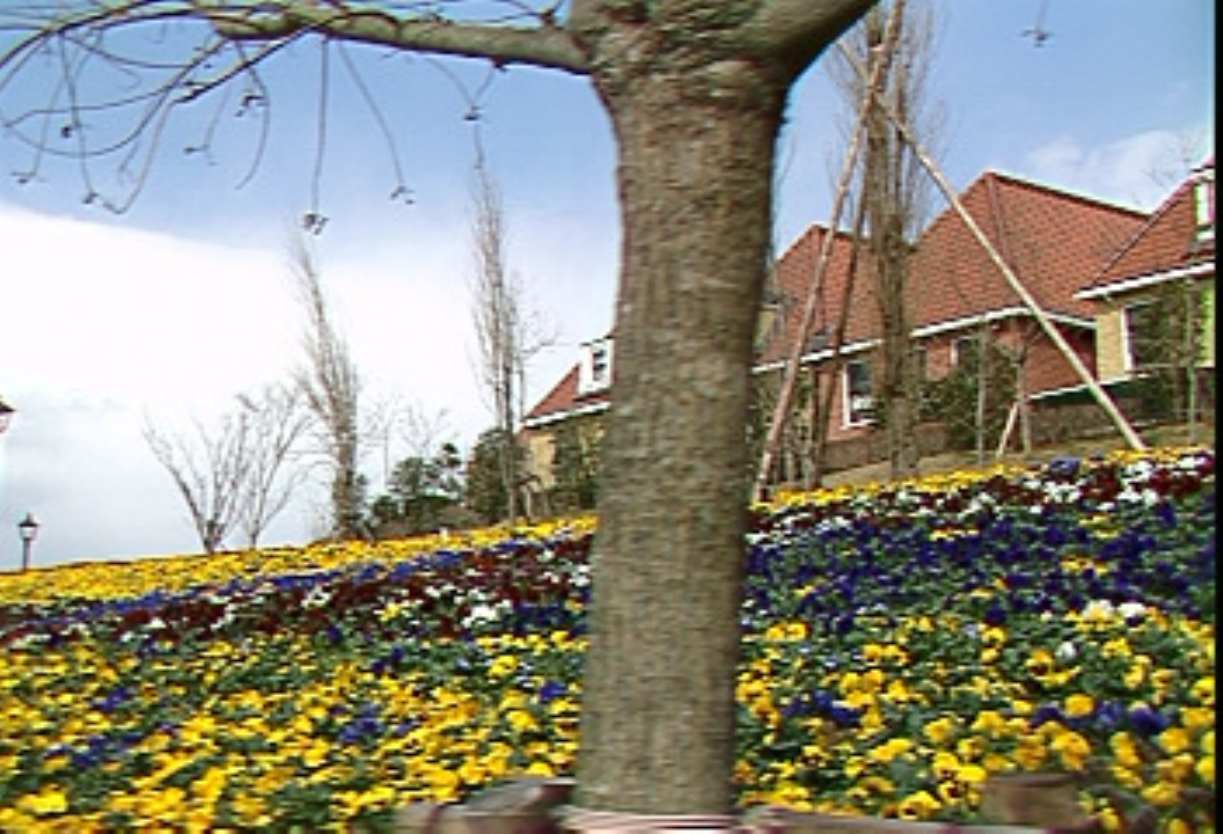}} \hfill
  \subfigure{\includegraphics[width=0.24\linewidth]{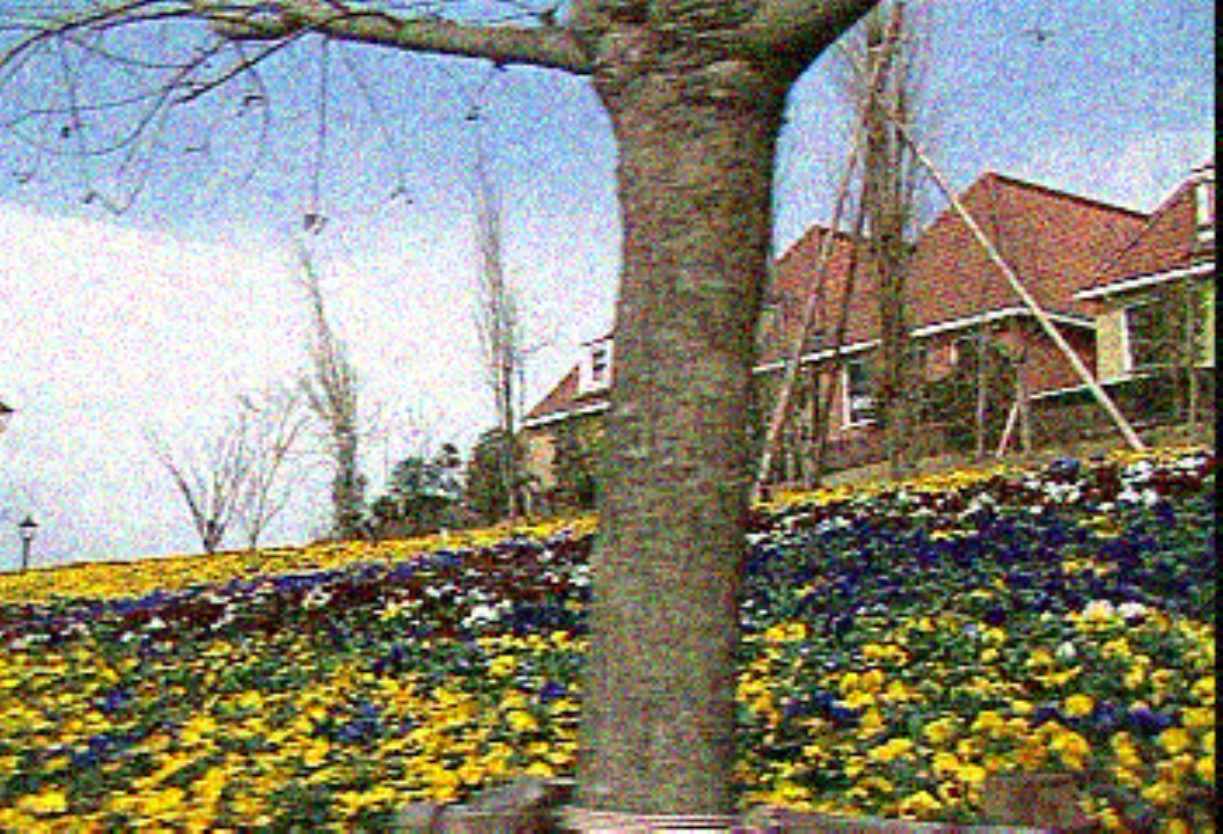}} \hfill
  \subfigure{\includegraphics[width=0.24\linewidth]{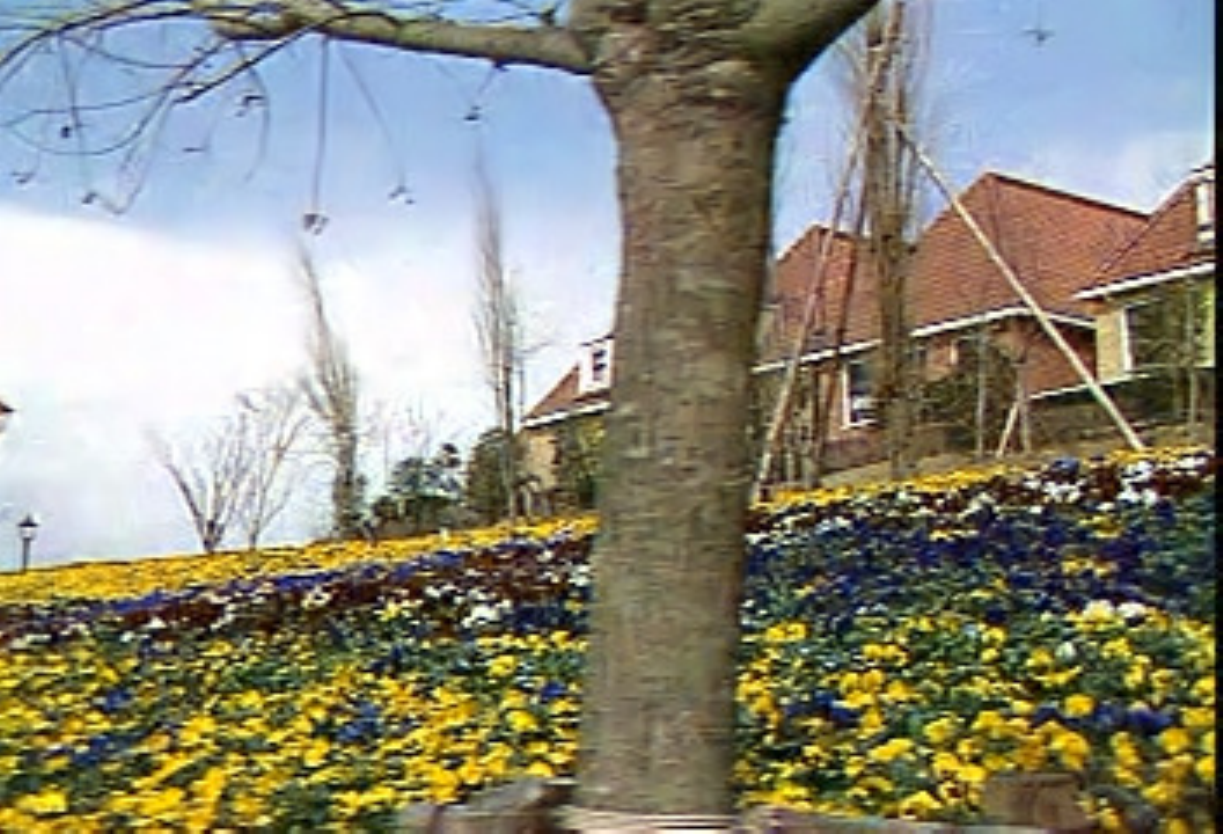}} \hfill
  \subfigure{\includegraphics[width=0.24\linewidth]{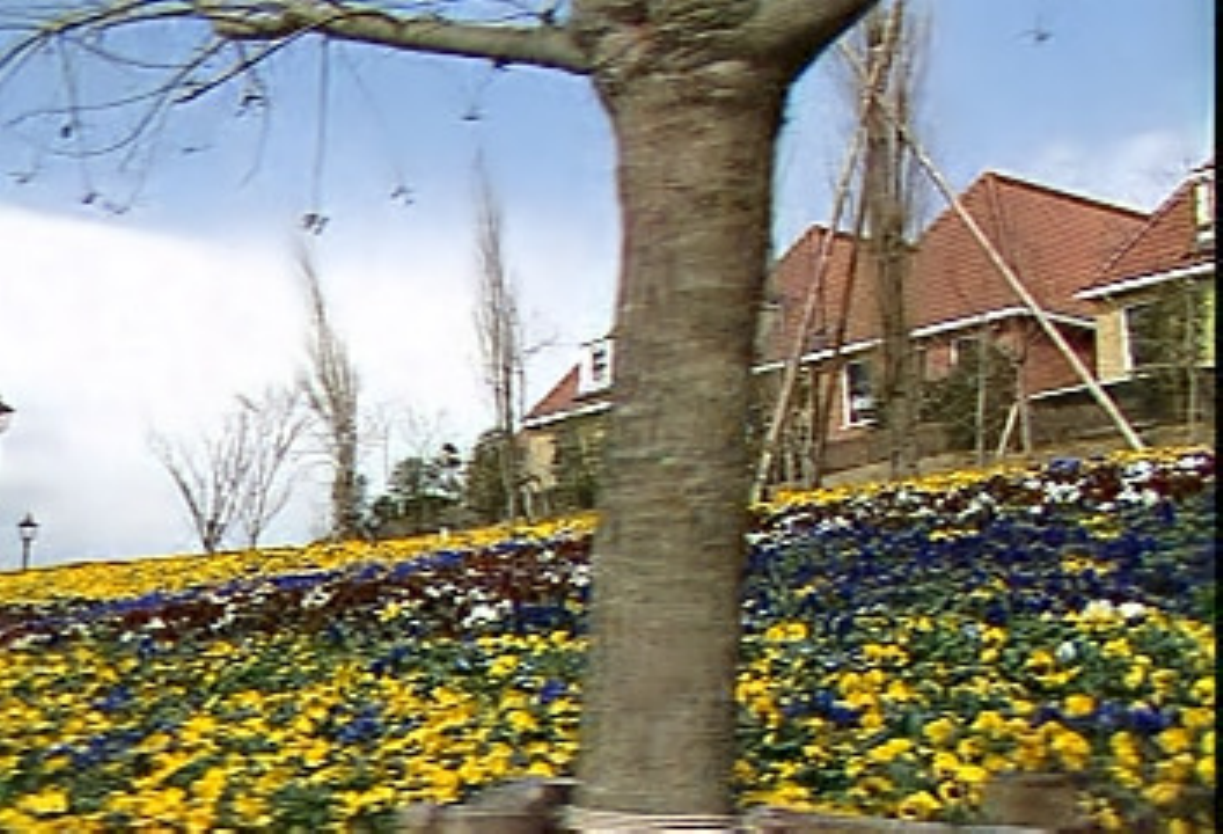}} \\
  \subfigure{\includegraphics[width=0.24\linewidth]{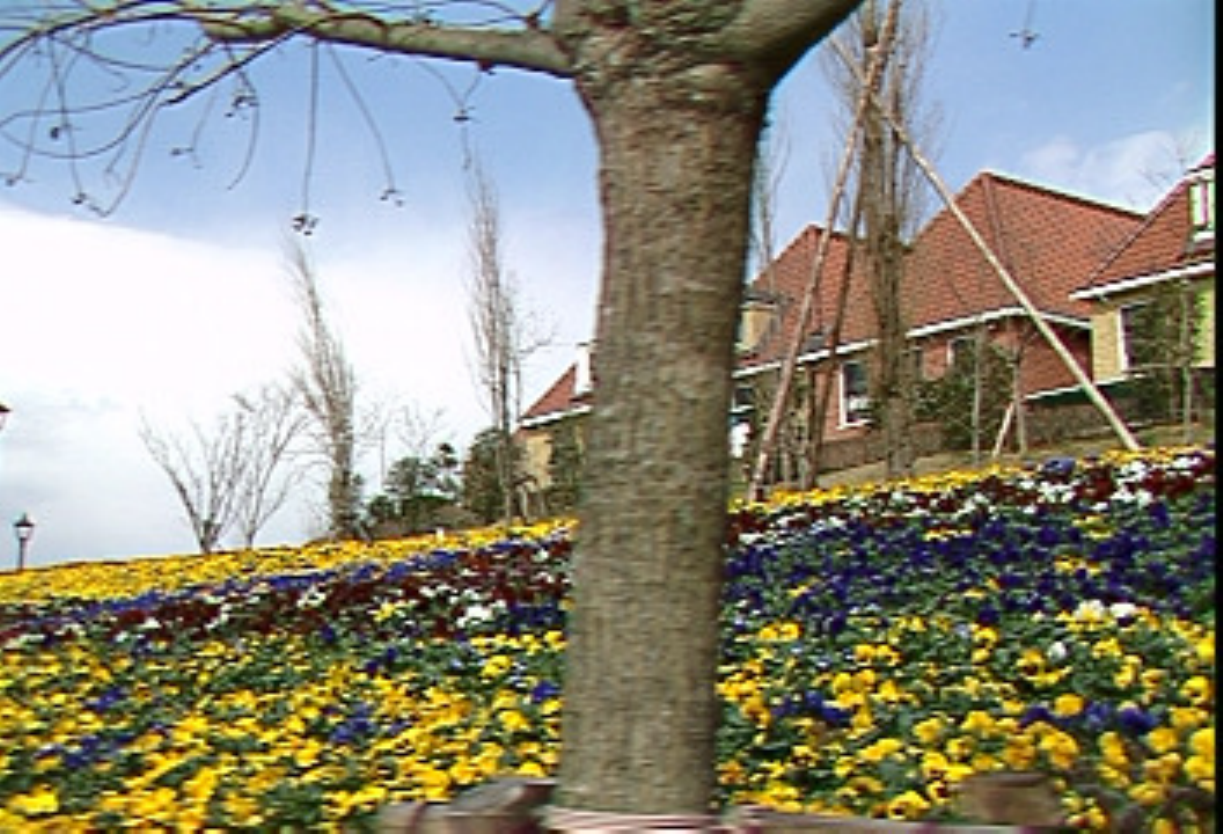}} \hfill
  \subfigure{\includegraphics[width=0.24\linewidth]{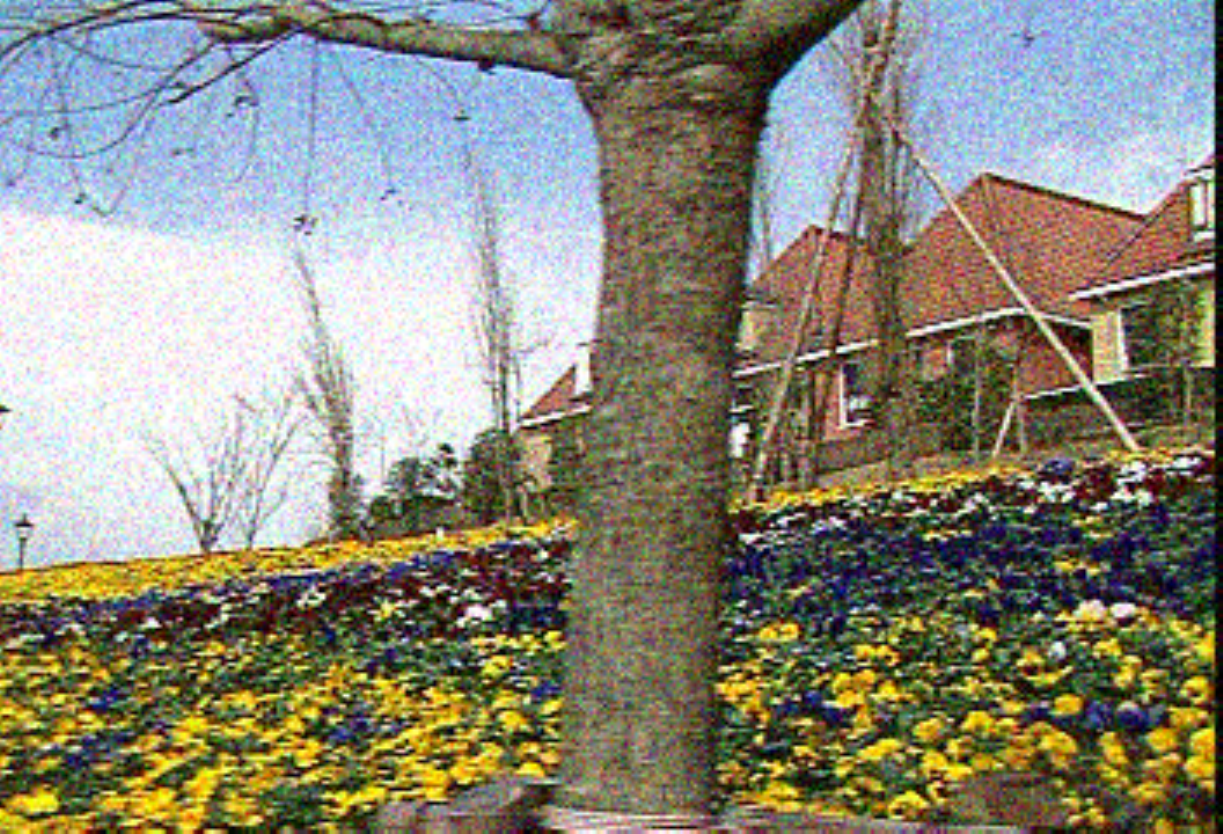}} \hfill
  \subfigure{\includegraphics[width=0.24\linewidth]{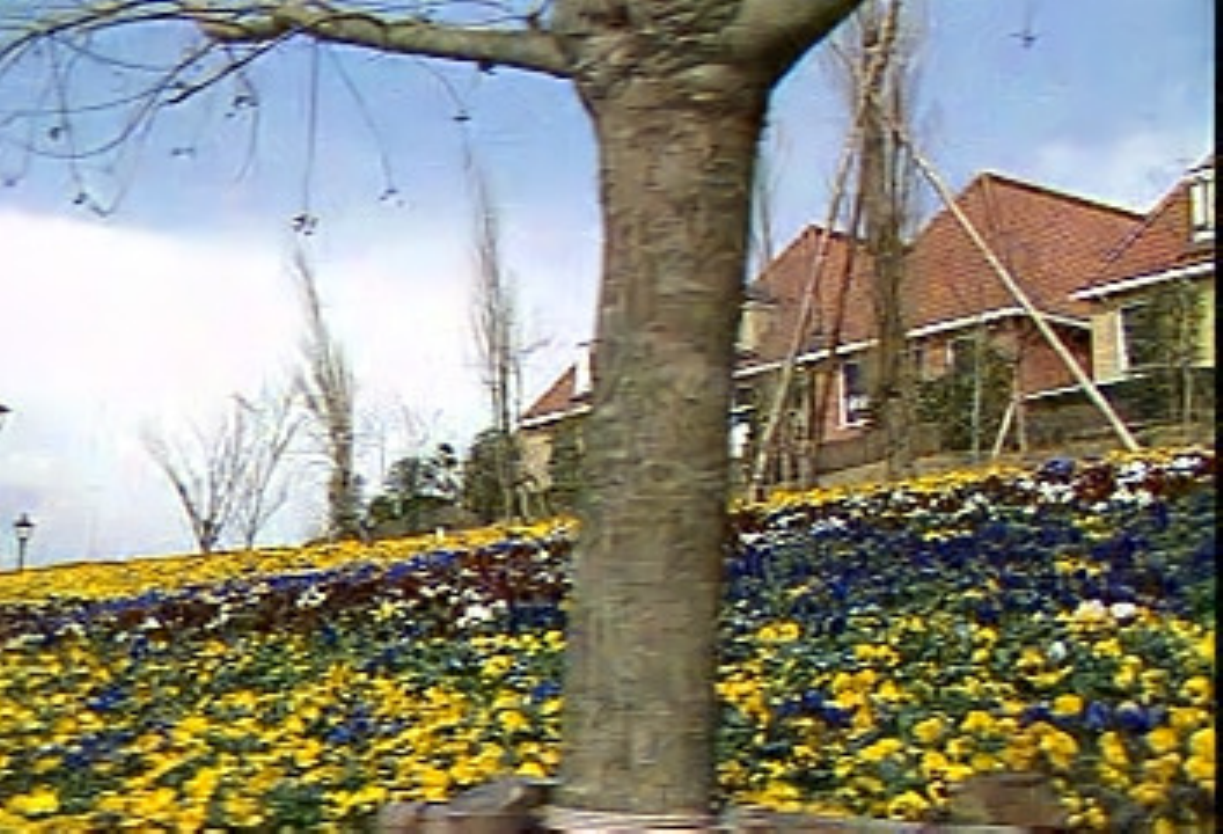}} \hfill
  \subfigure{\includegraphics[width=0.24\linewidth]{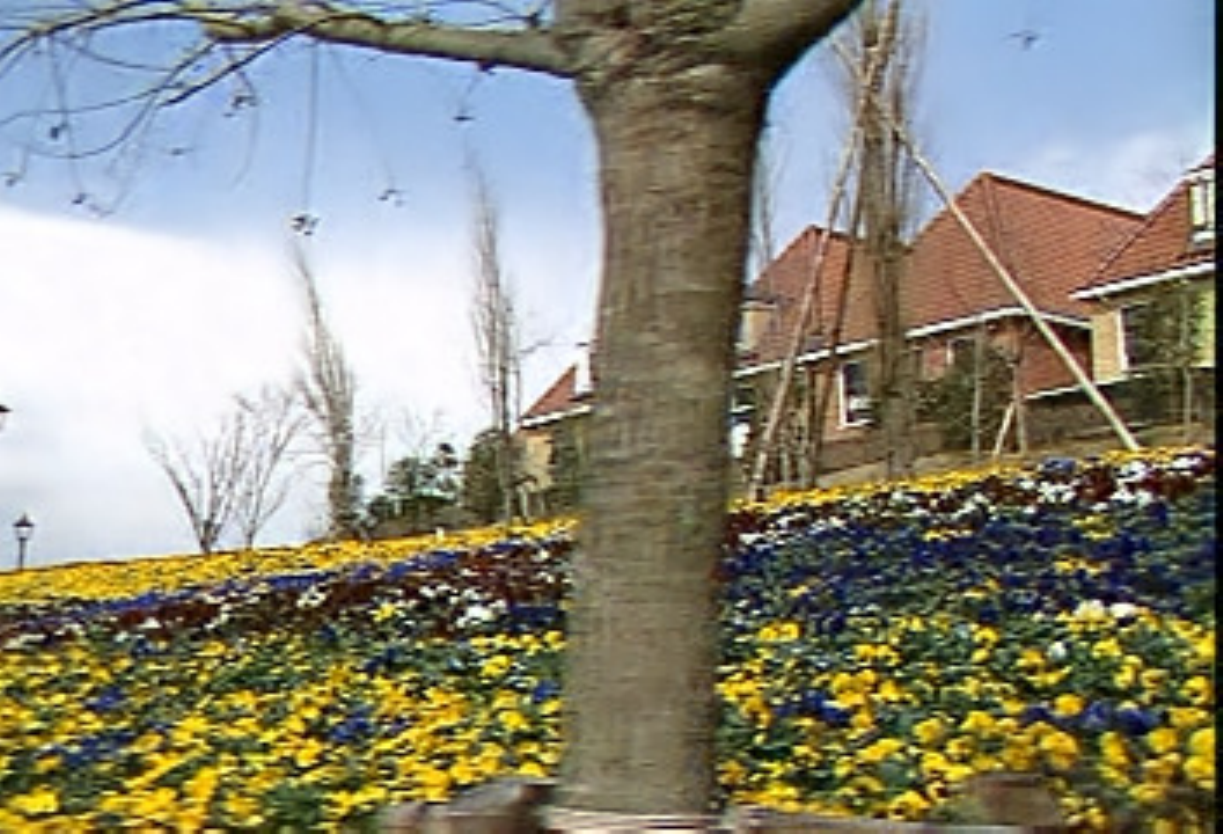}} \\
  \subfigure{\includegraphics[width=0.24\linewidth]{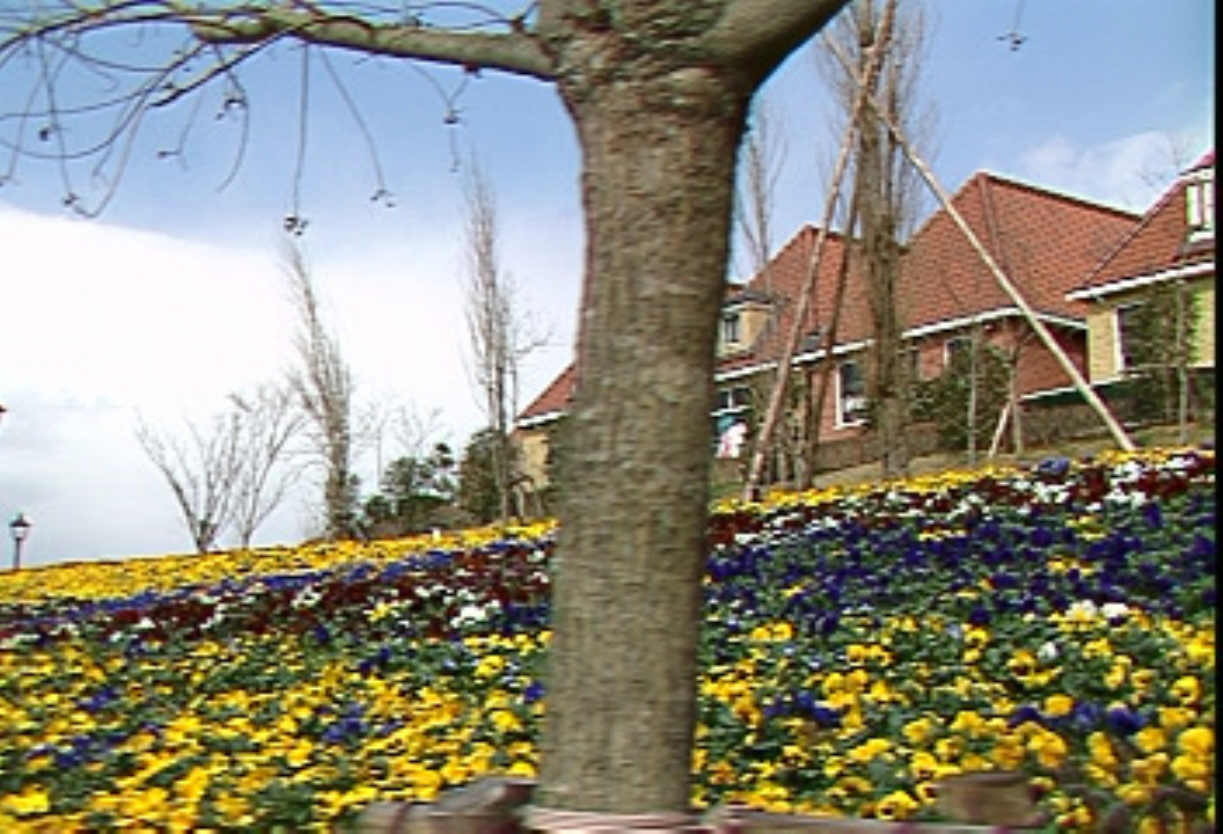}} \hfill
  \subfigure{\includegraphics[width=0.24\linewidth]{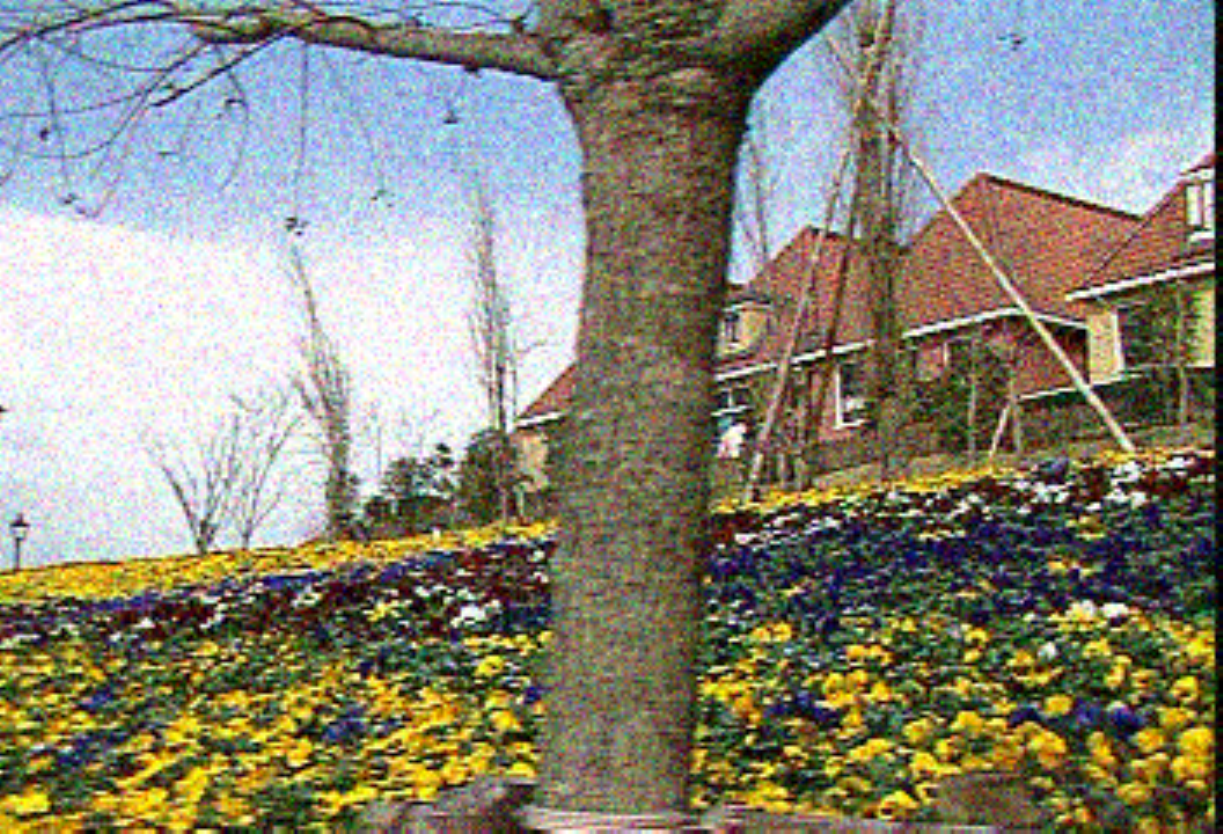}} \hfill
  \subfigure{\includegraphics[width=0.24\linewidth]{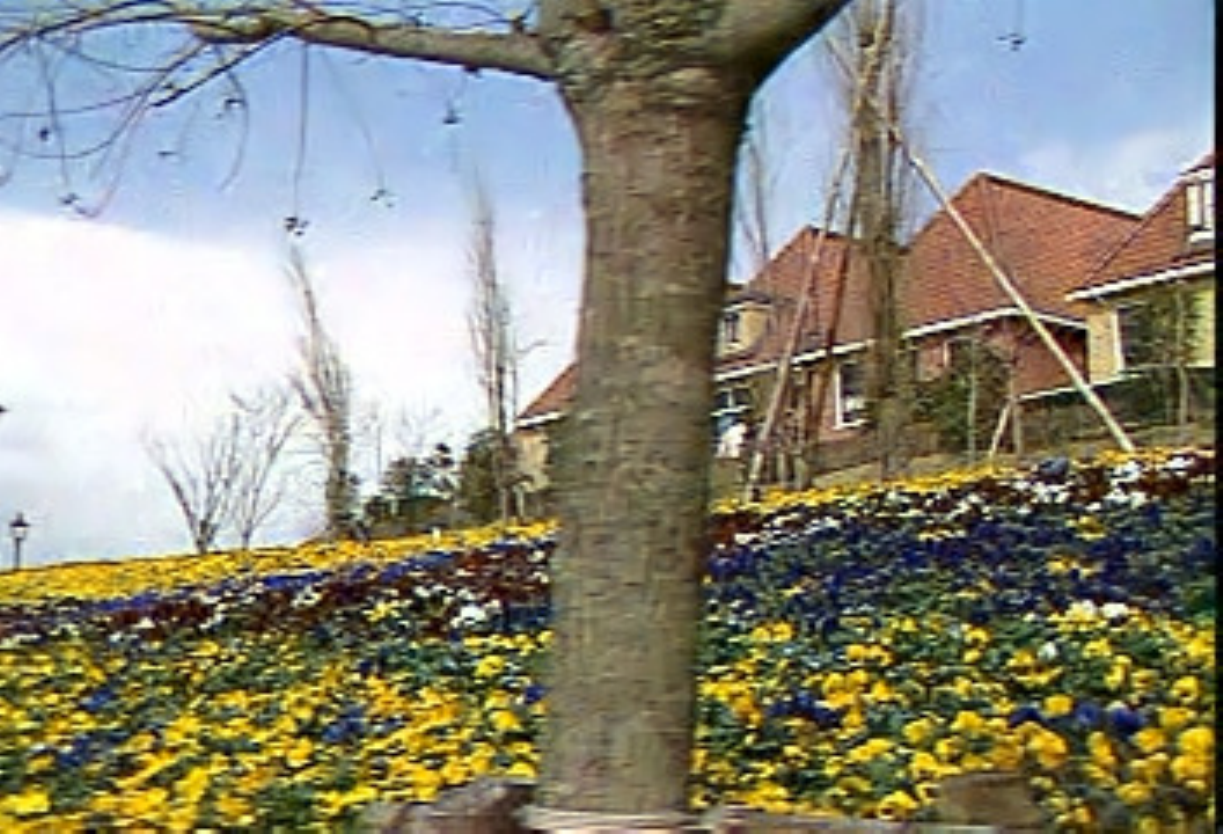}} \hfill
  \subfigure{\includegraphics[width=0.24\linewidth]{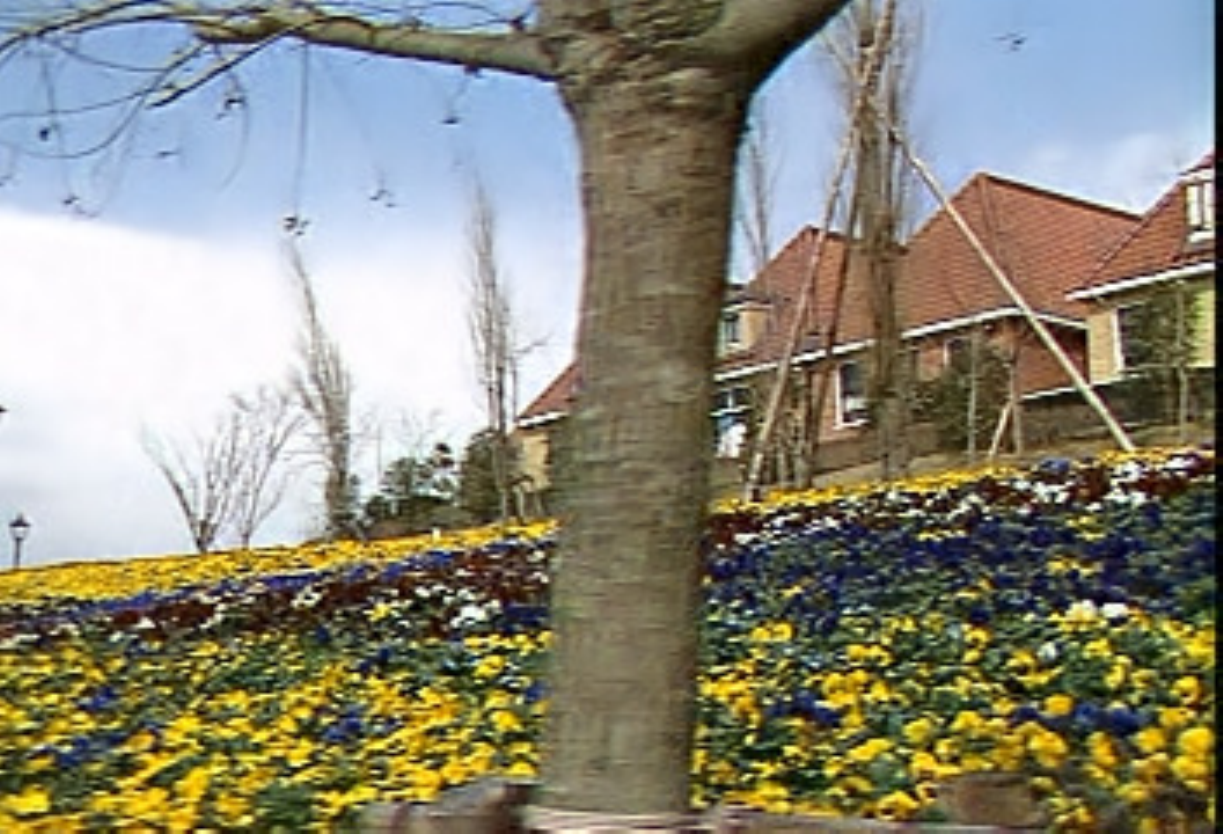}} \\
  \subfigure{\includegraphics[width=0.24\linewidth]{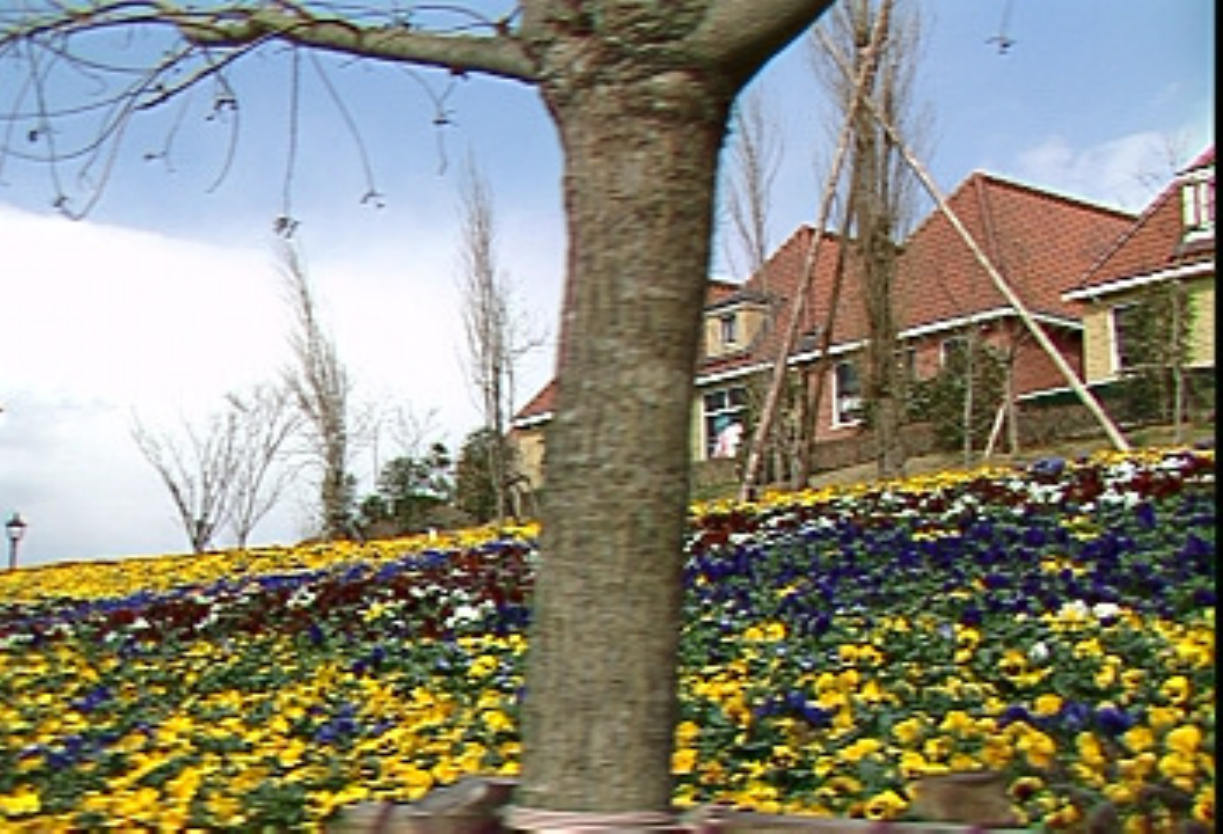}} \hfill
  \subfigure{\includegraphics[width=0.24\linewidth]{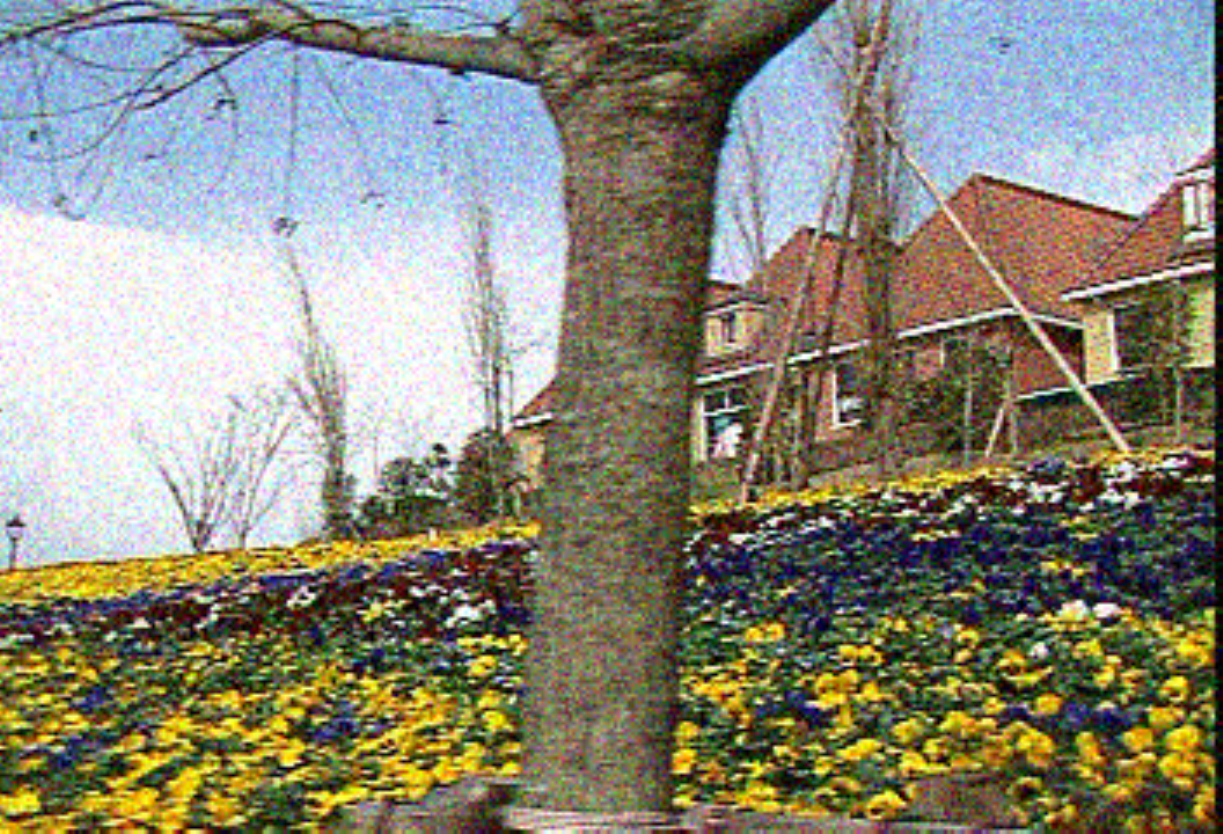}} \hfill
  \subfigure{\includegraphics[width=0.24\linewidth]{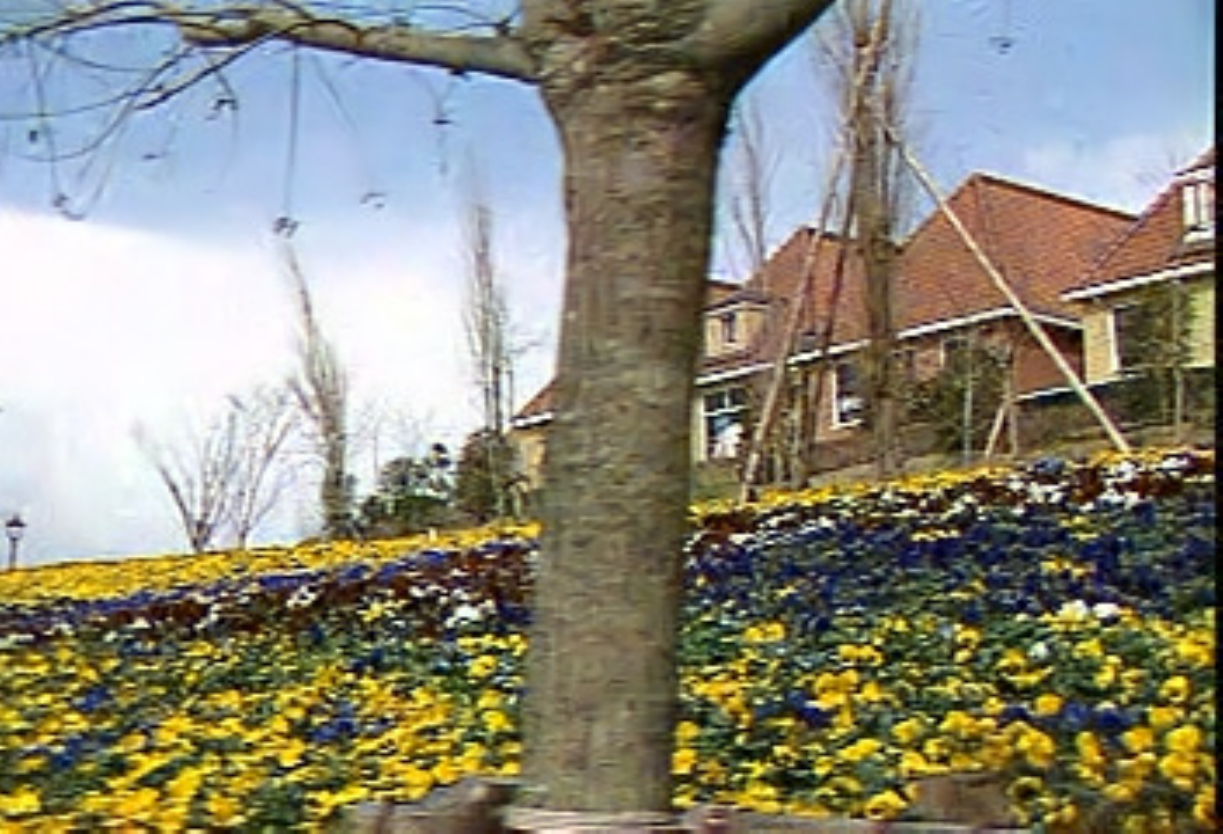}} \hfill
  \subfigure{\includegraphics[width=0.24\linewidth]{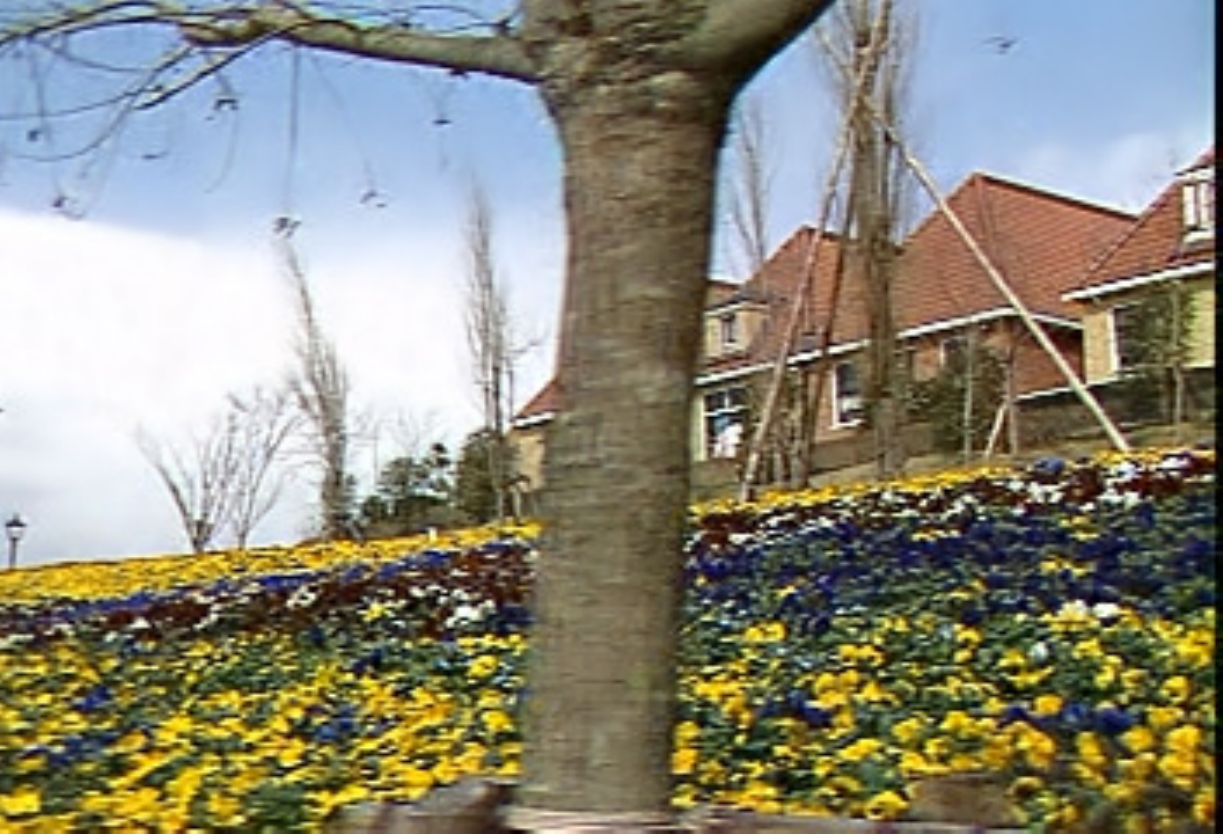}} \\
  \addtocounter{subfigure}{-16}
  \subfigure[{Original}]{\includegraphics[width=0.24\linewidth]{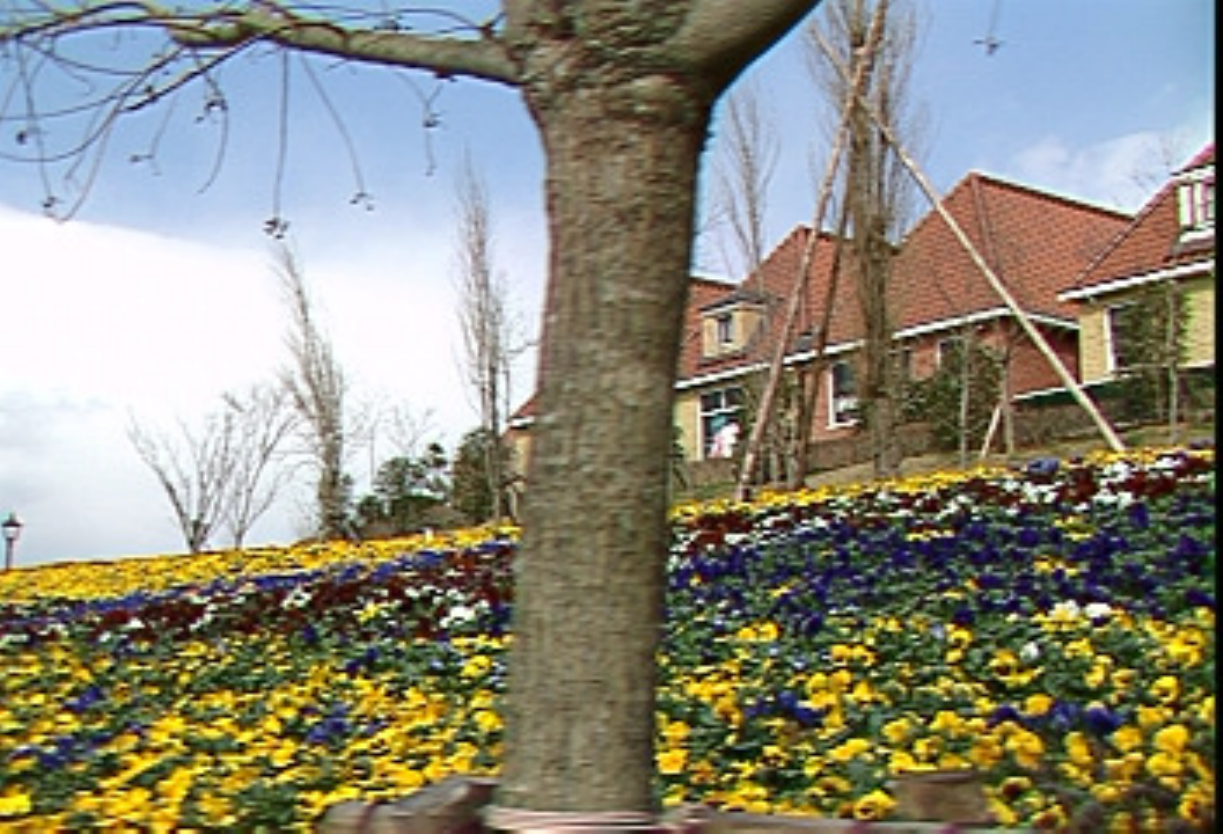}} \hfill
  \subfigure[{Damaged}]{\includegraphics[width=0.24\linewidth]{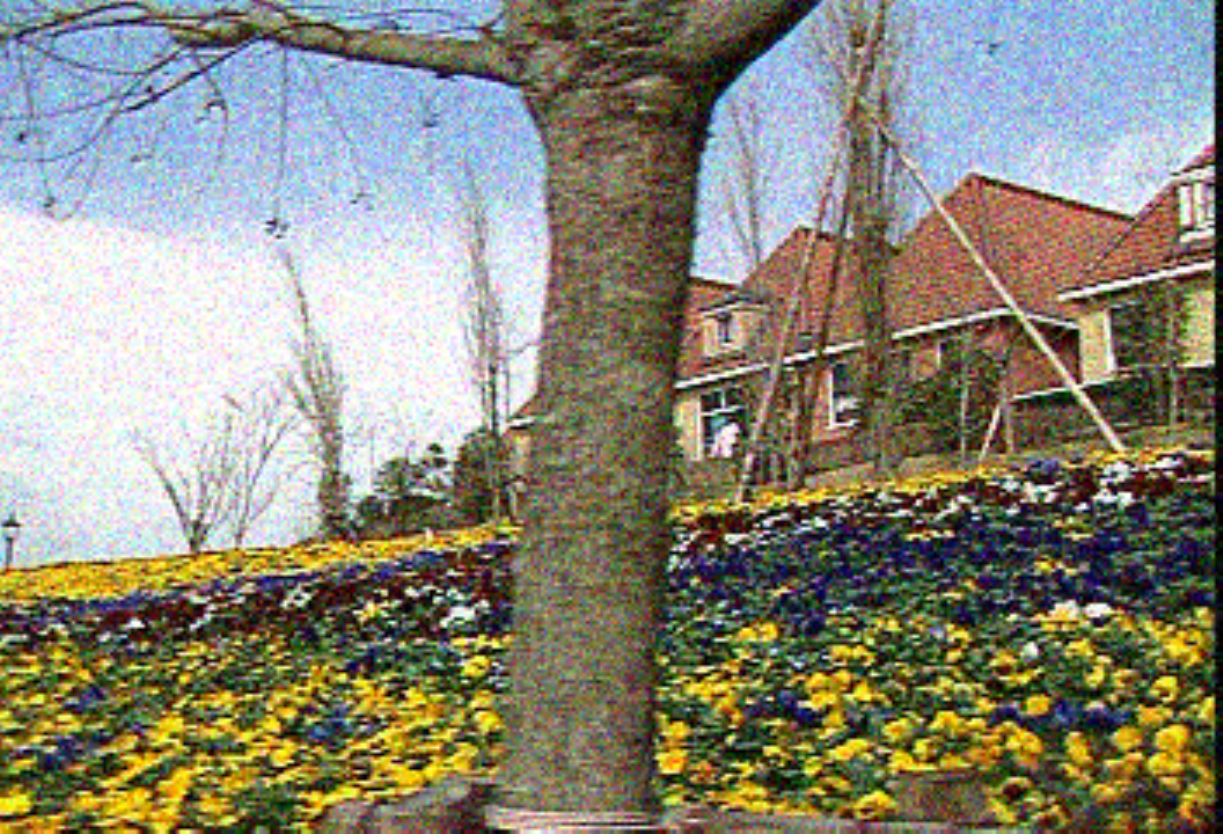}} \hfill
  \subfigure[{Image denoising}]{\includegraphics[width=0.24\linewidth]{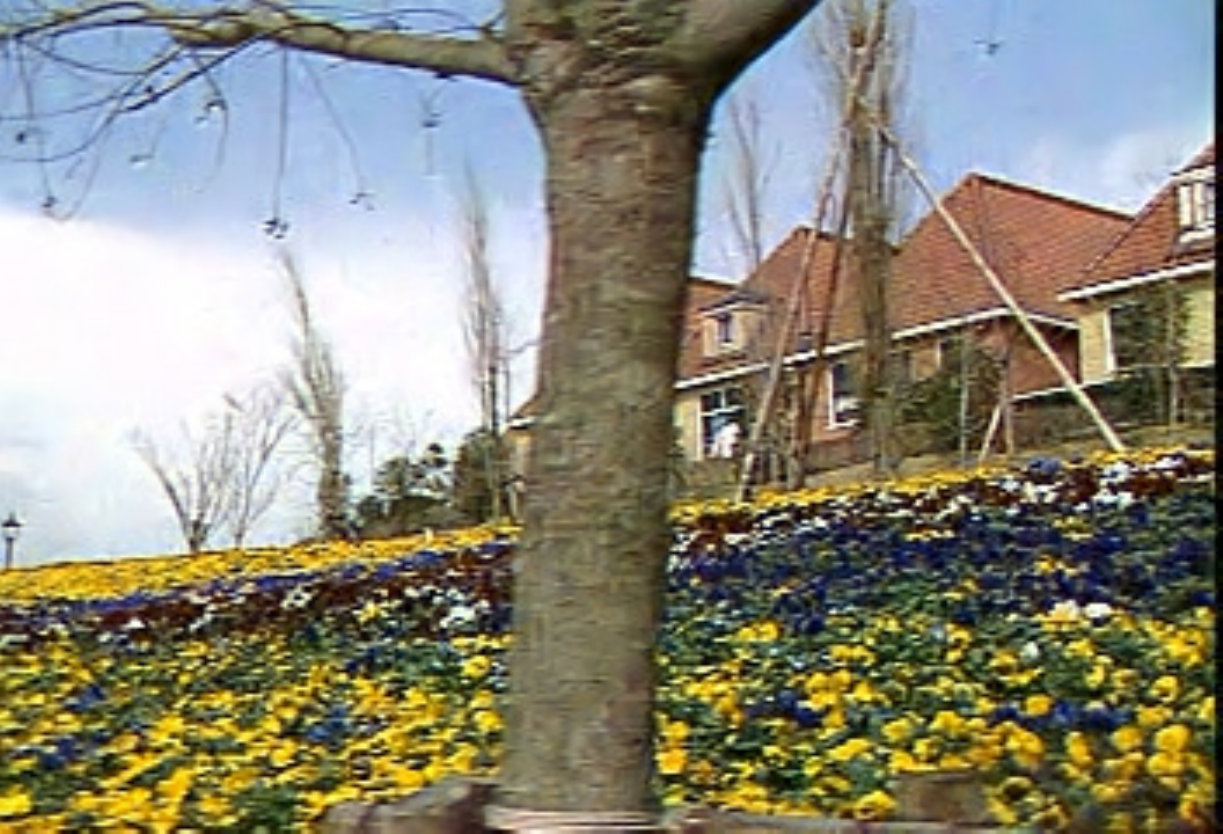}} \hfill
  \subfigure[{Video denoising}]{\includegraphics[width=0.24\linewidth]{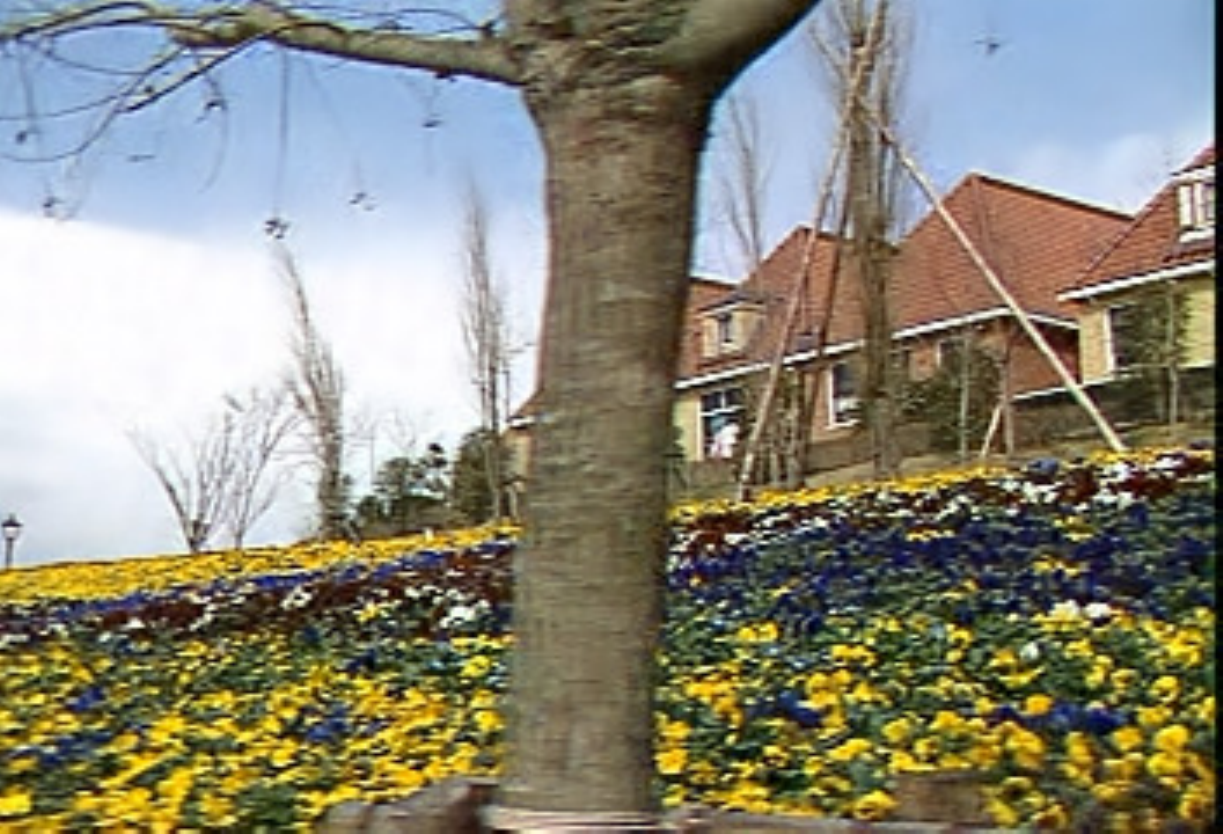}} \\
  \subfigure[{Zoom on (a)}]{\includegraphics[width=0.24\linewidth]{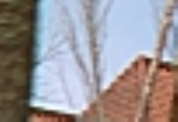}} \hfill
  \subfigure[{Zoom on (b)}]{\includegraphics[width=0.24\linewidth]{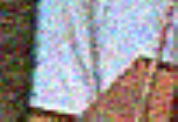}} \hfill
  \subfigure[{Zoom on (c)}]{\includegraphics[width=0.24\linewidth]{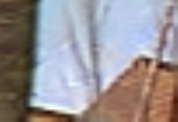}} \hfill
  \subfigure[{Zoom on (d)}]{\includegraphics[width=0.24\linewidth]{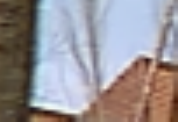}}
  \caption{Color video denoising result from~\citet{mairal2008b}. The third column shows the result when each frame is processed independently from the others. Last column shows the result of the video processing approach. Best seen by zooming on a computer screen. ``Copyright \copyright 2008 Society for Industrial and Applied Mathematics.  Reprinted with permission.  All rights reserved''.}
\label{fig:dict_videoA}
\end{figure}
\begin{figure}[hbtp]
  \subfigure{\includegraphics[width=0.24\linewidth]{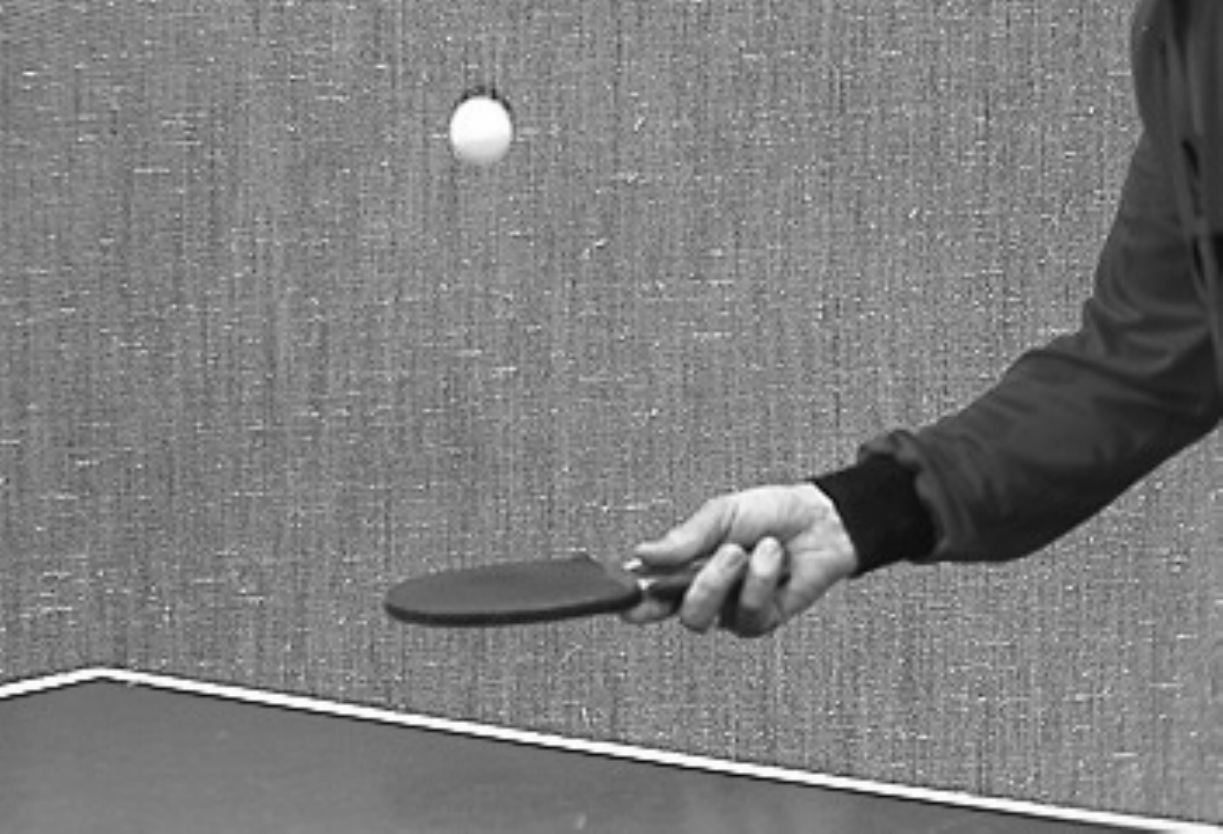}} \hfill
  \subfigure{\includegraphics[width=0.24\linewidth]{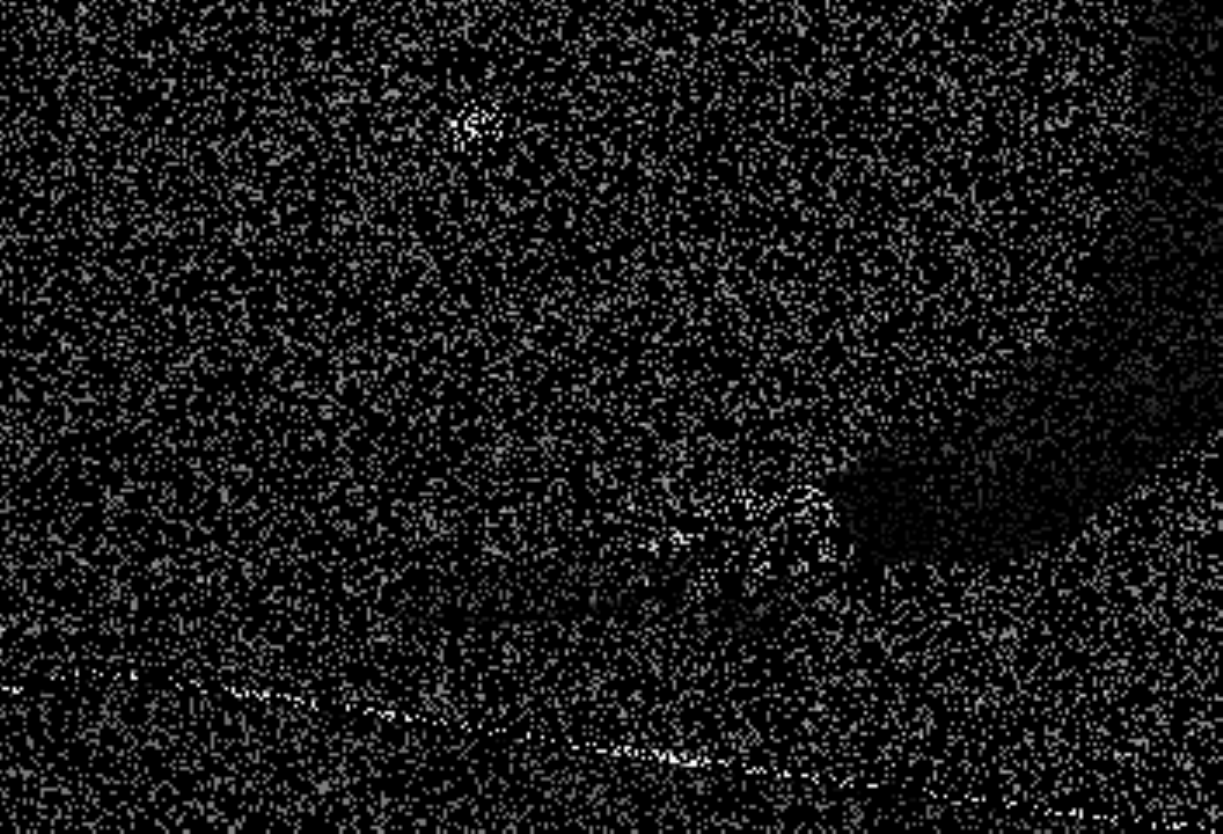}} \hfill
  \subfigure{\includegraphics[width=0.24\linewidth]{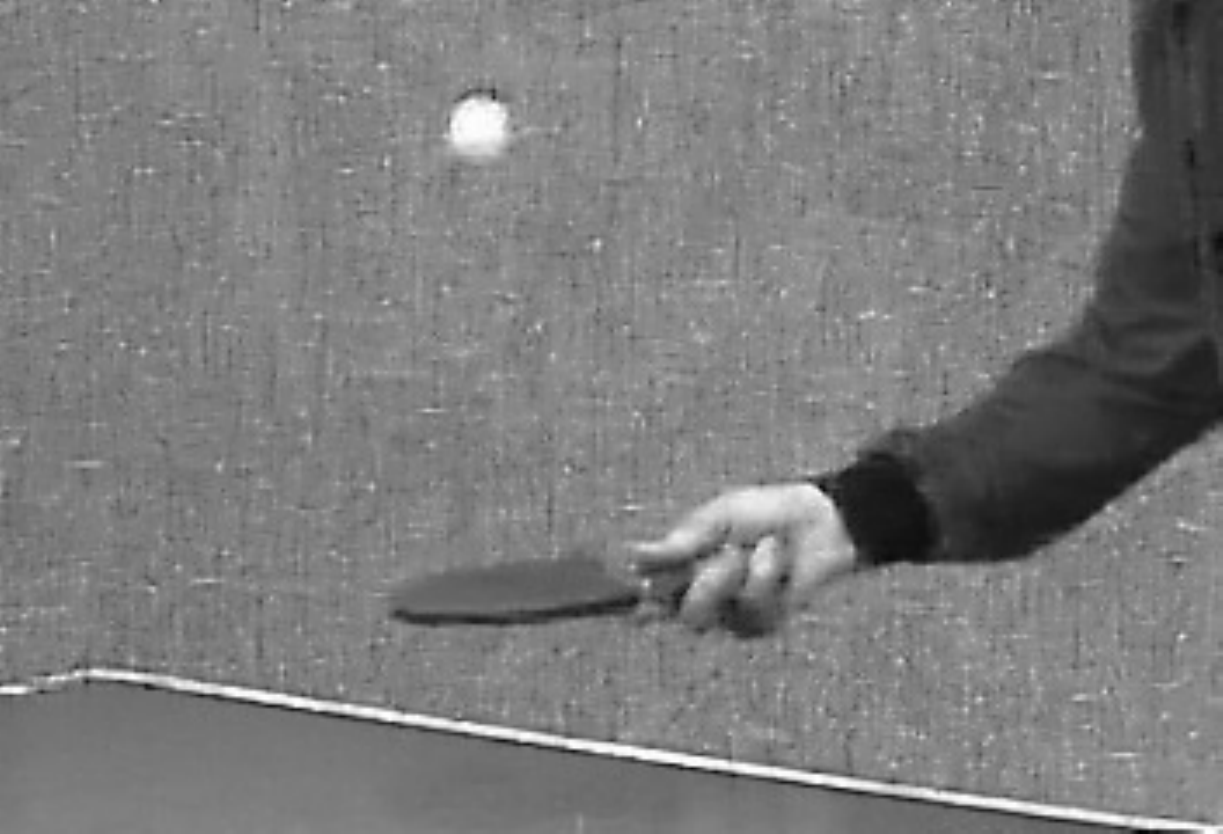}} \hfill
  \subfigure{\includegraphics[width=0.24\linewidth]{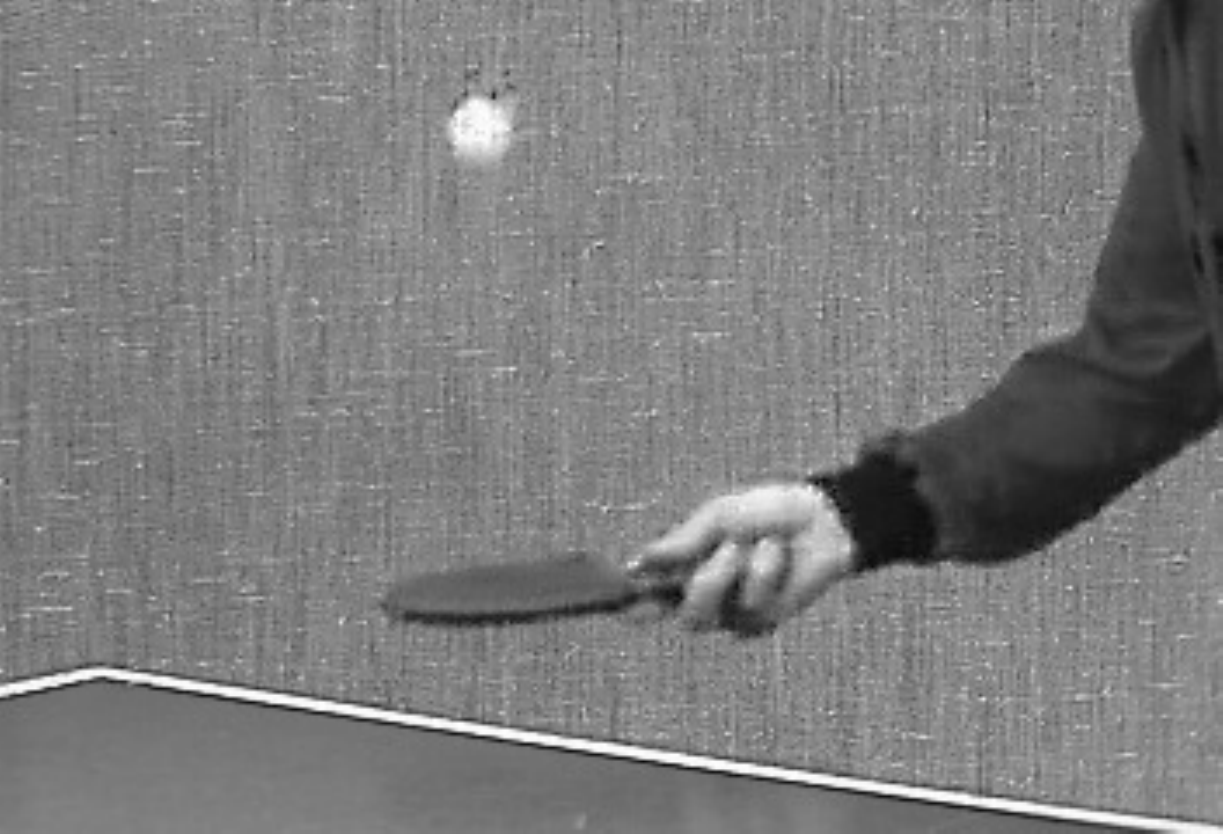}} \\
  \subfigure{\includegraphics[width=0.24\linewidth]{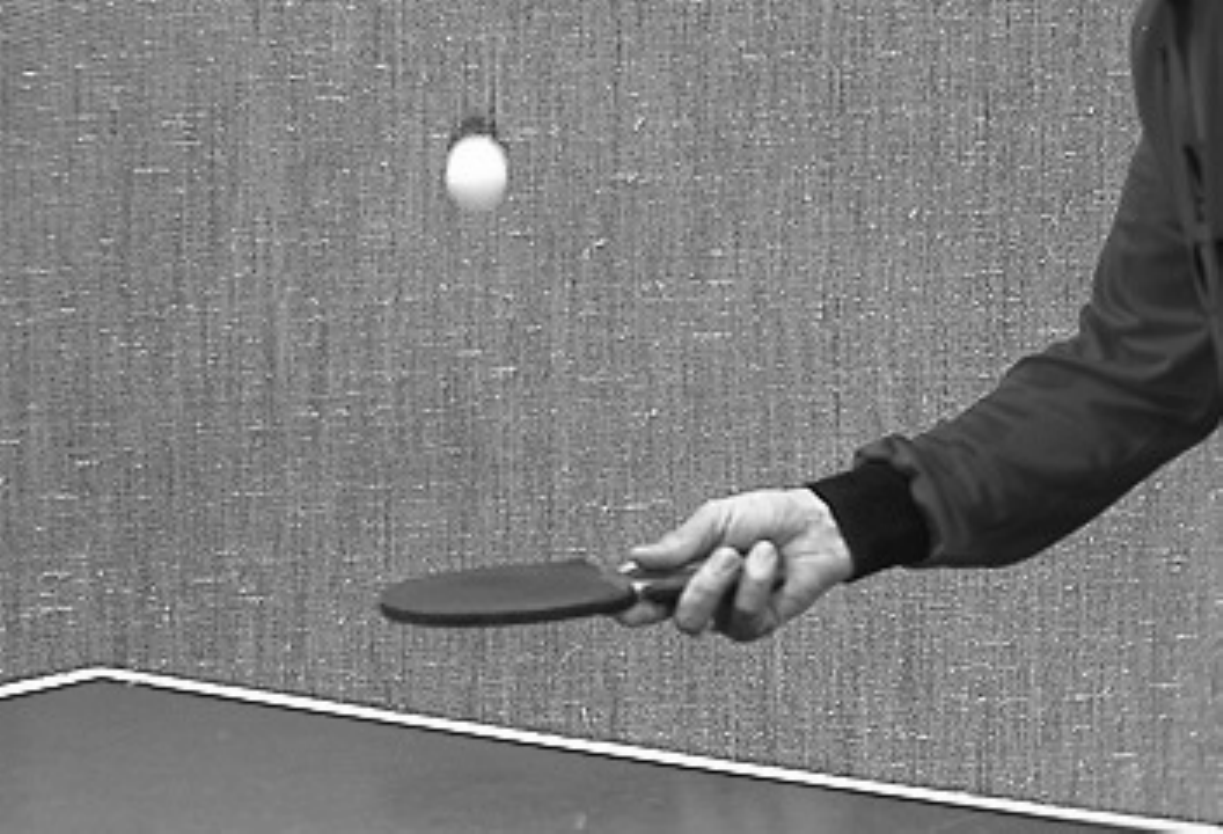}} \hfill
  \subfigure{\includegraphics[width=0.24\linewidth]{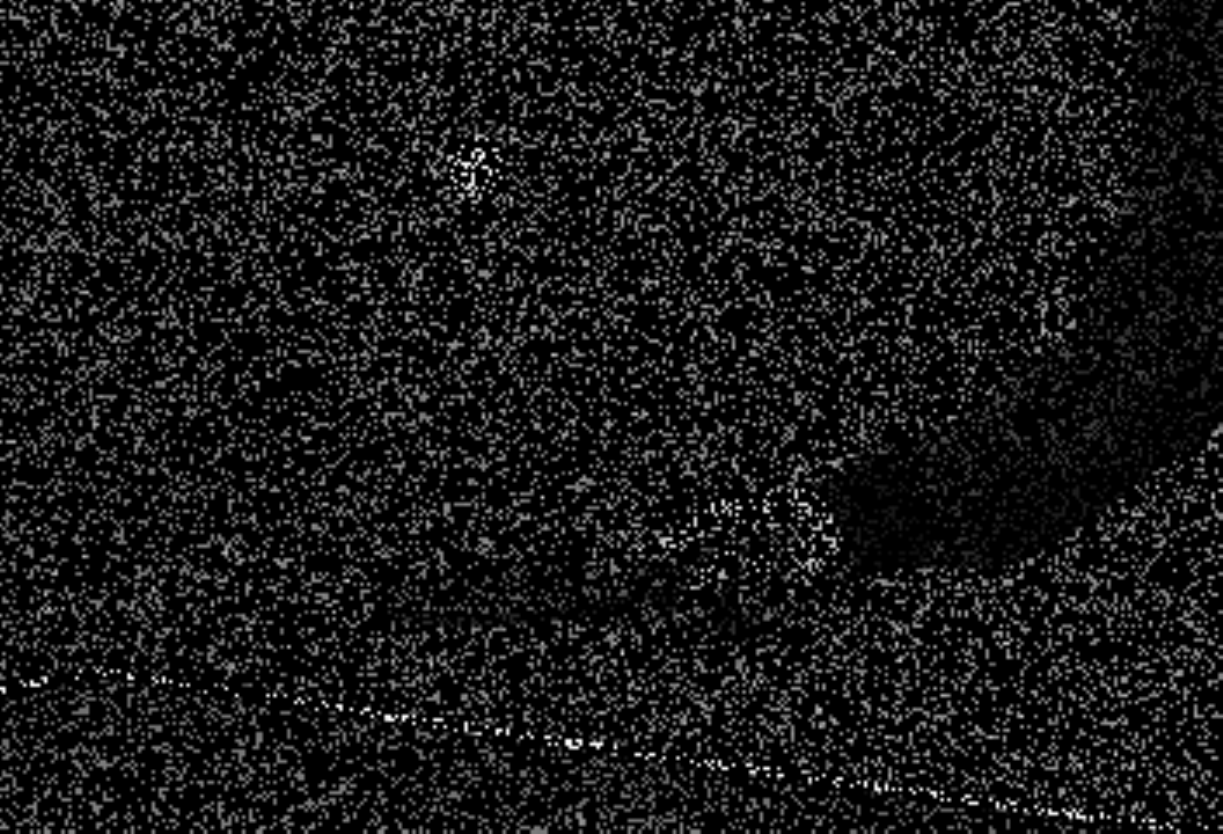}} \hfill
  \subfigure{\includegraphics[width=0.24\linewidth]{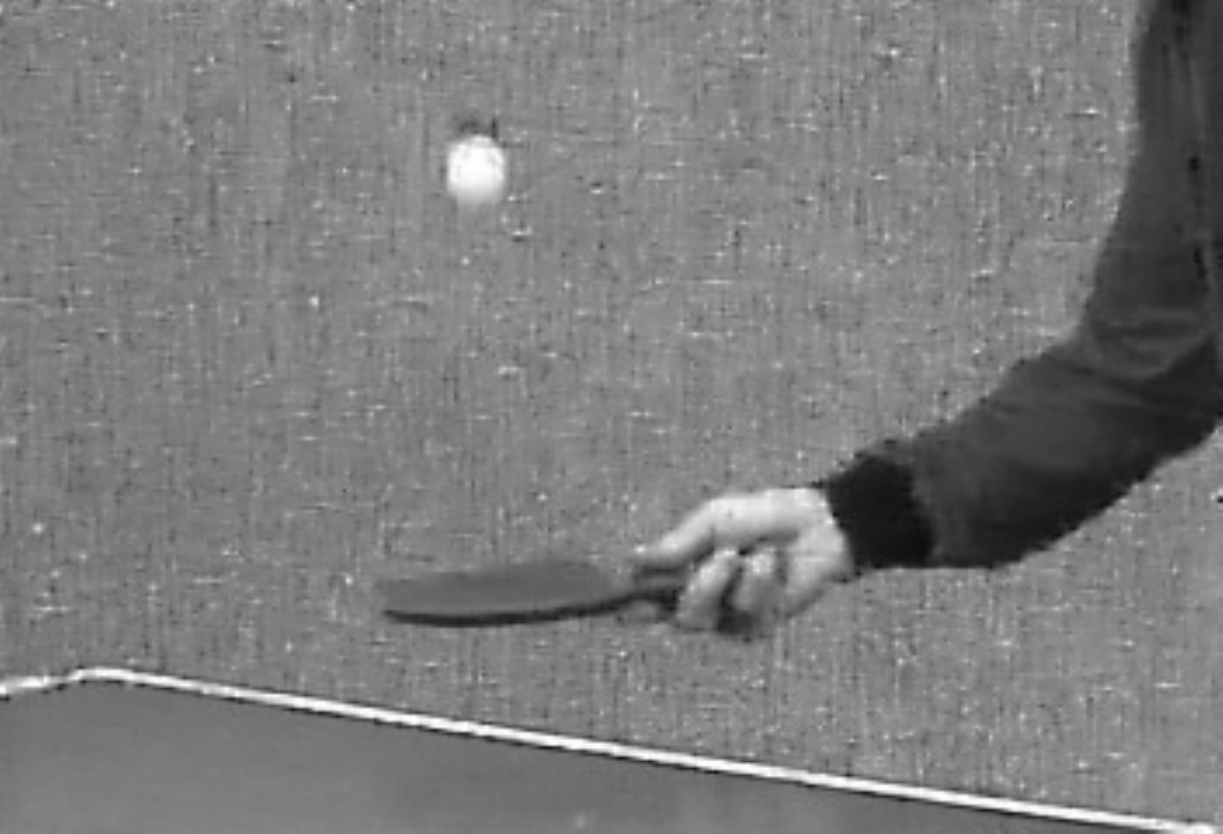}} \hfill
  \subfigure{\includegraphics[width=0.24\linewidth]{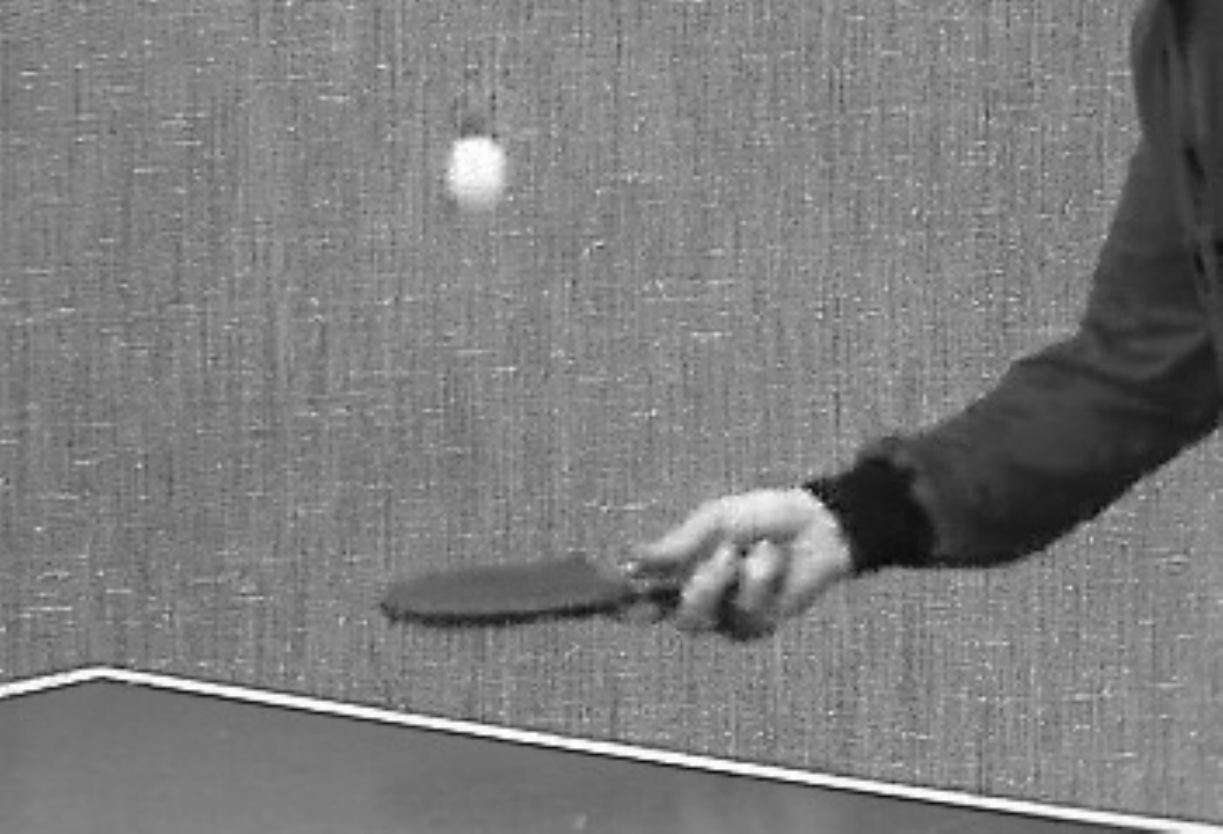}} \\
  \subfigure{\includegraphics[width=0.24\linewidth]{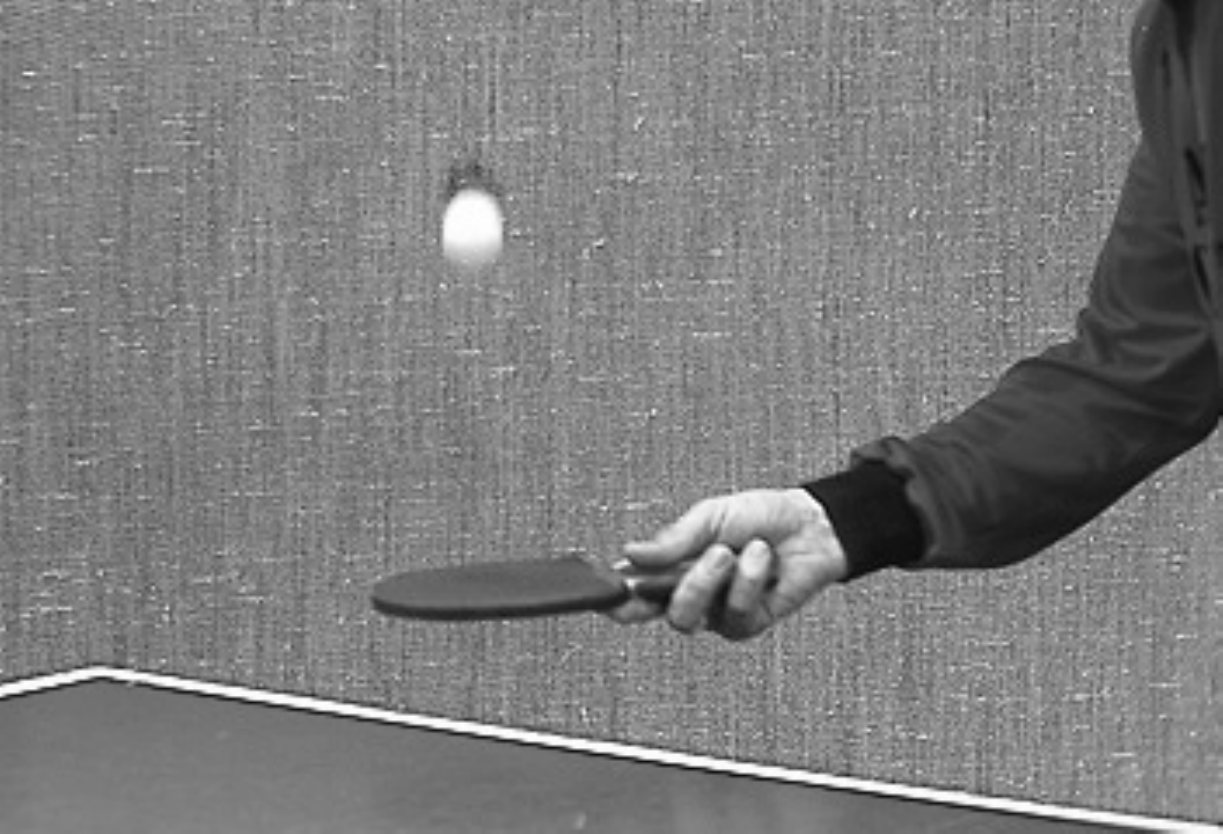}} \hfill
  \subfigure{\includegraphics[width=0.24\linewidth]{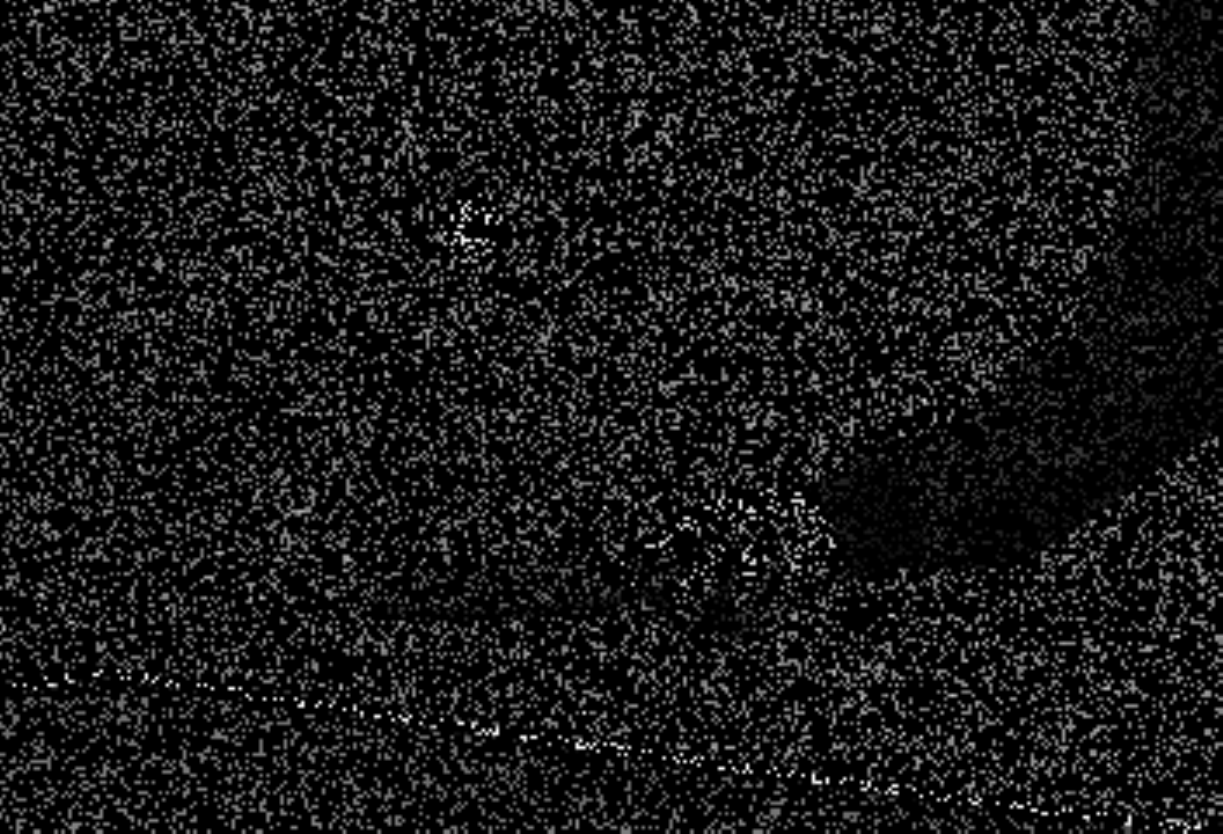}} \hfill
  \subfigure{\includegraphics[width=0.24\linewidth]{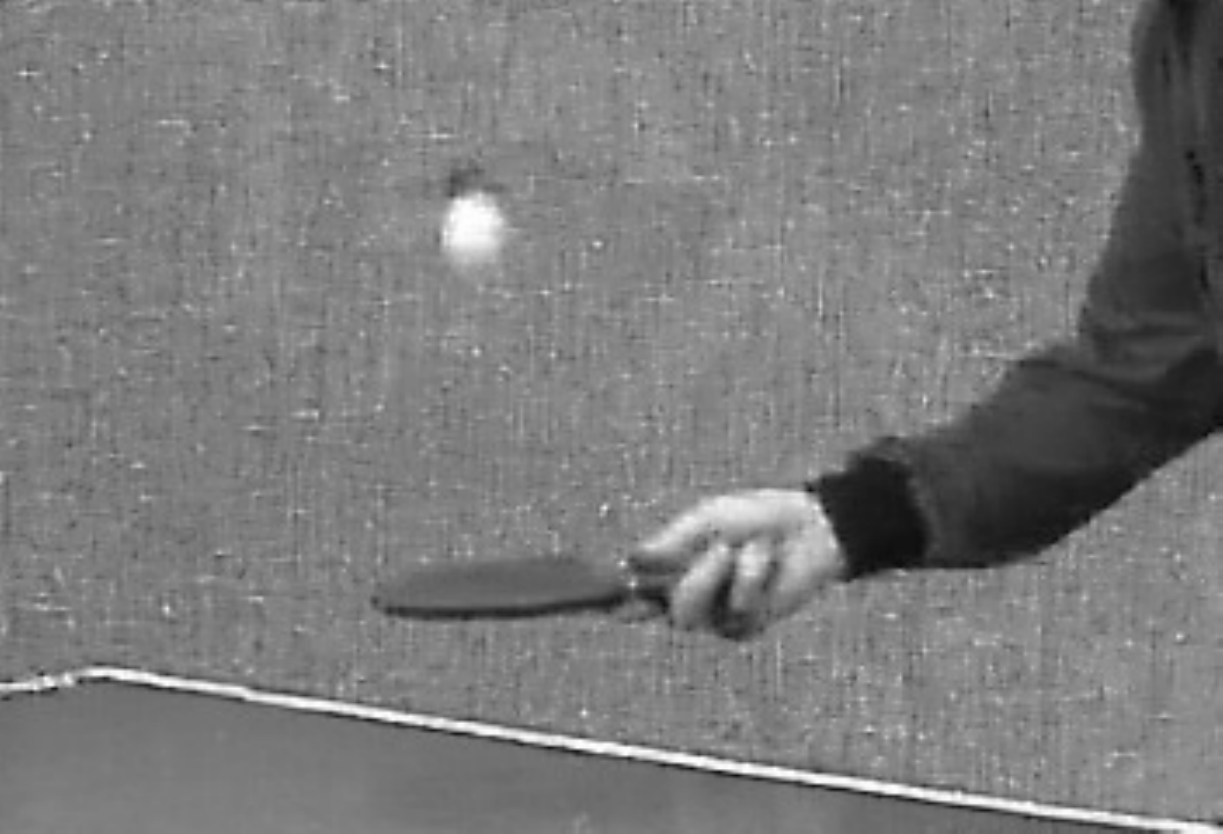}} \hfill
  \subfigure{\includegraphics[width=0.24\linewidth]{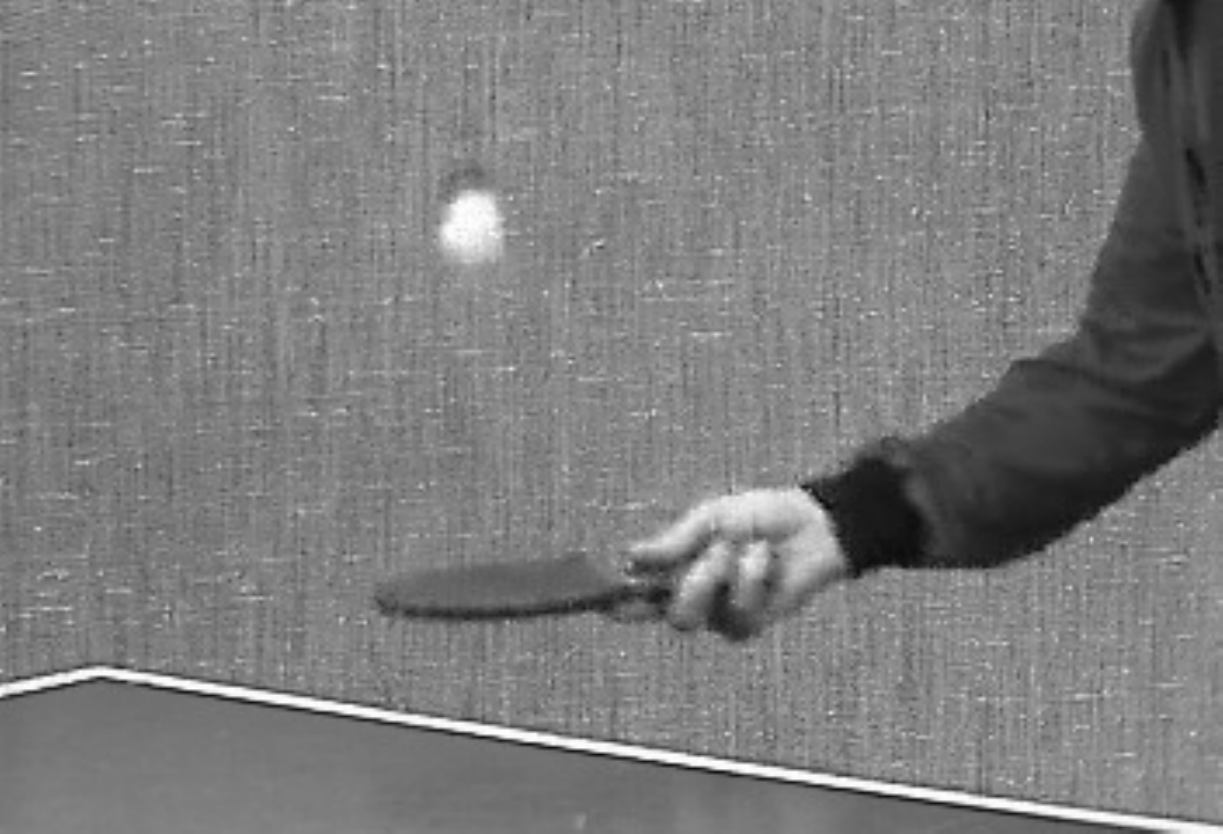}} \\
  \subfigure{\includegraphics[width=0.24\linewidth]{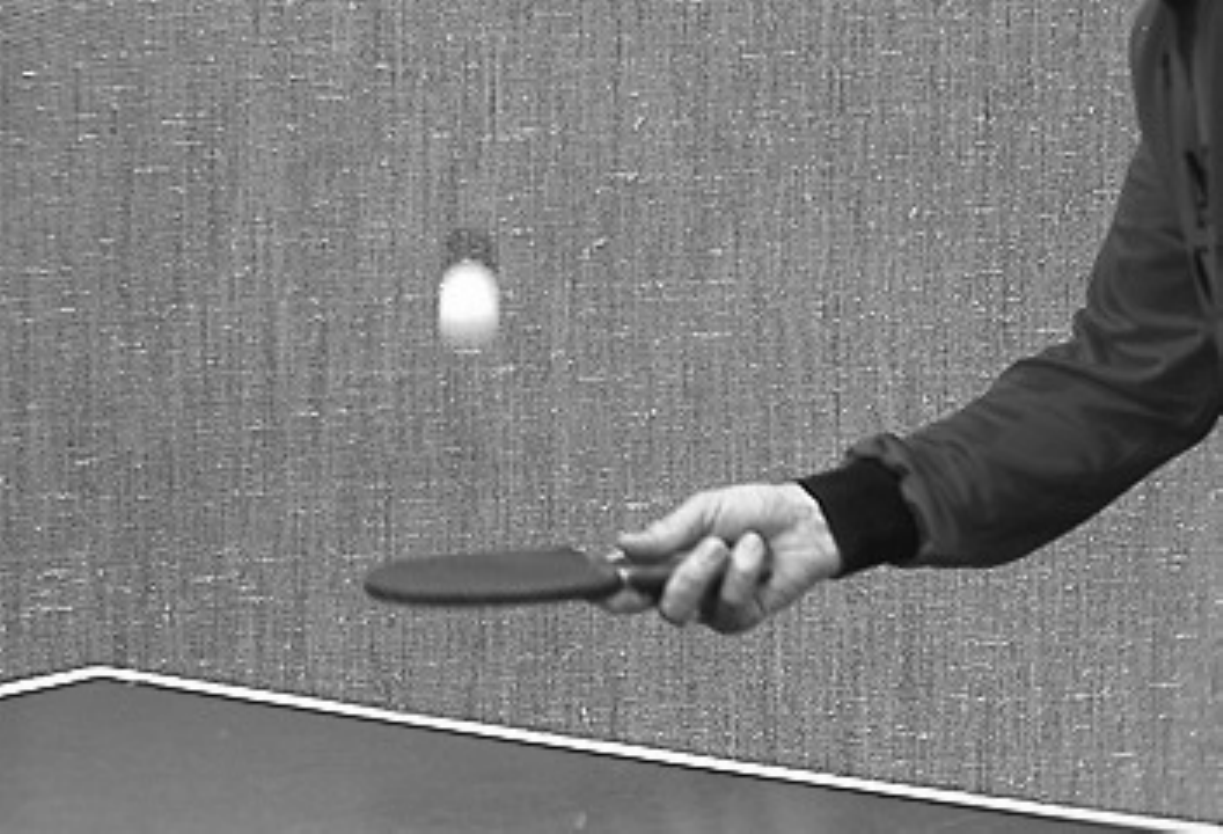}} \hfill
  \subfigure{\includegraphics[width=0.24\linewidth]{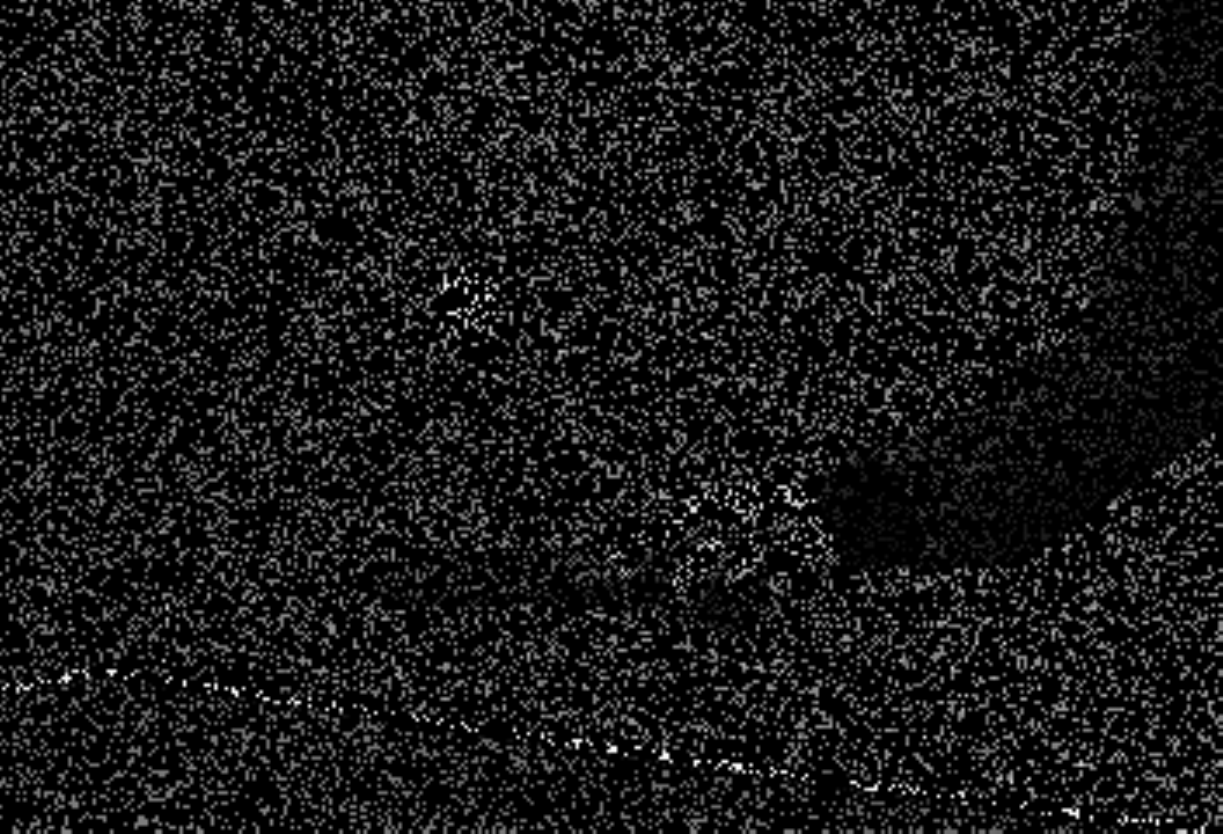}} \hfill
  \subfigure{\includegraphics[width=0.24\linewidth]{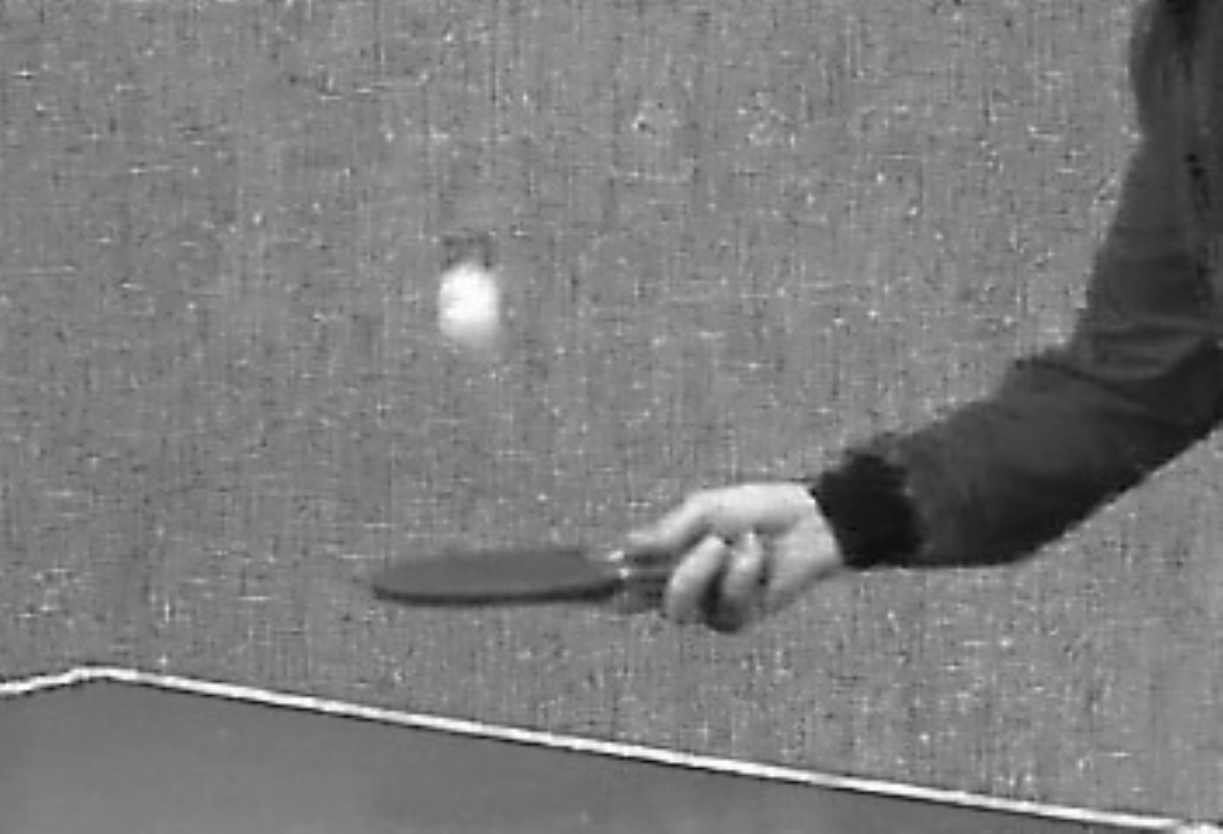}} \hfill
  \subfigure{\includegraphics[width=0.24\linewidth]{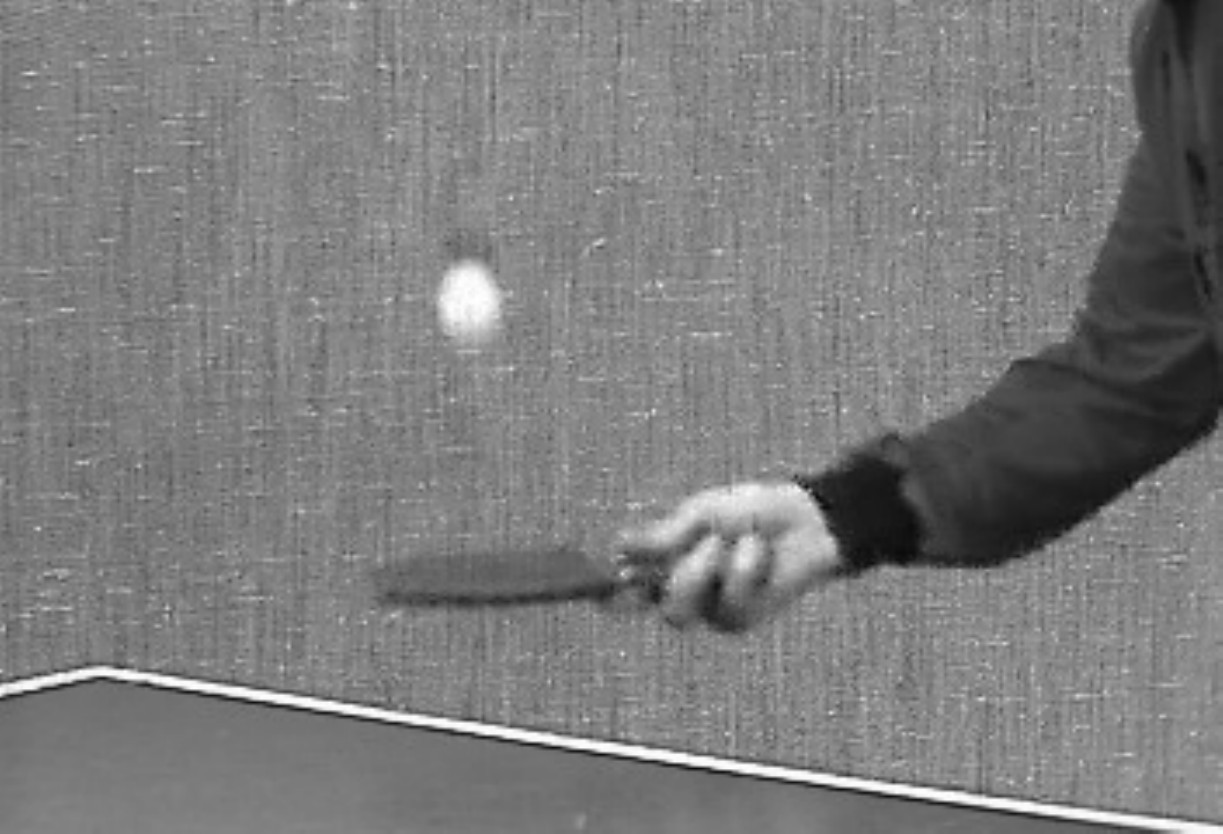}} \\
  \addtocounter{subfigure}{-16}
  \subfigure[{Original}]{\includegraphics[width=0.24\linewidth]{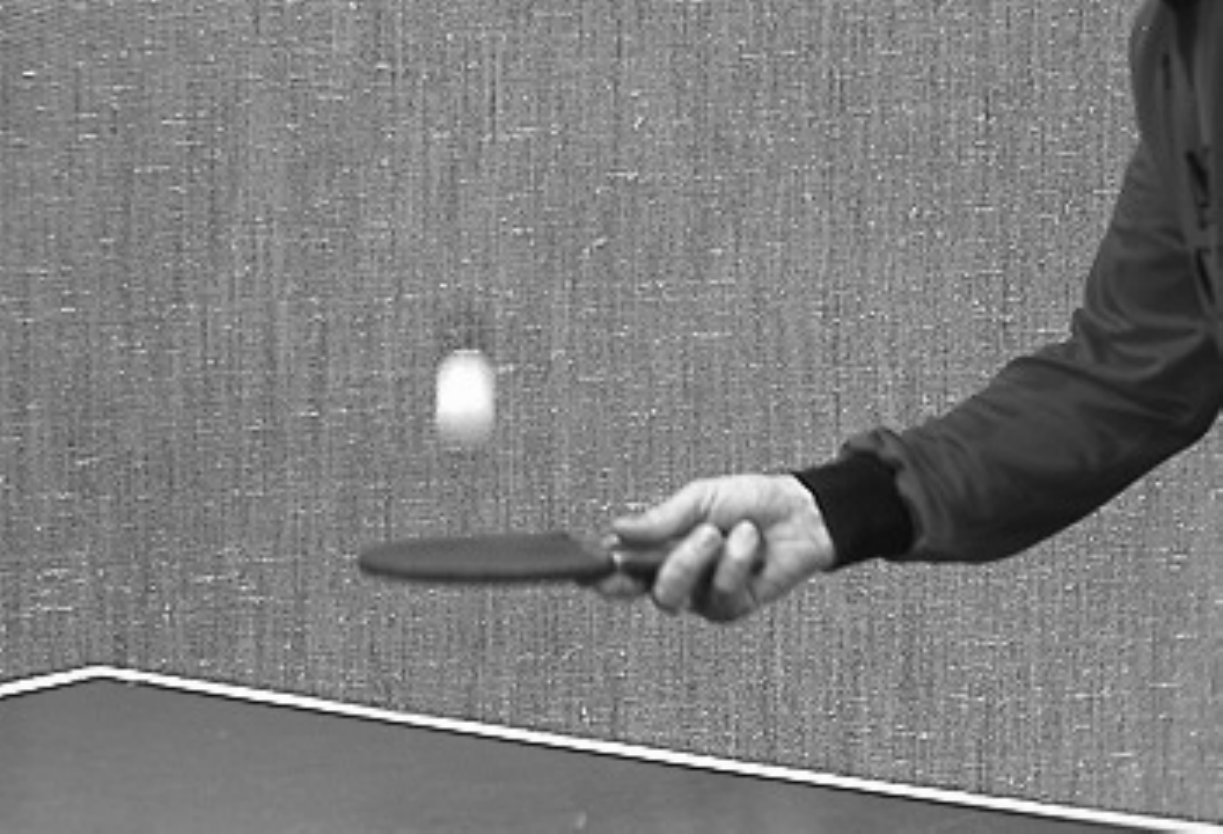}} \hfill
  \subfigure[{Damaged}]{\includegraphics[width=0.24\linewidth]{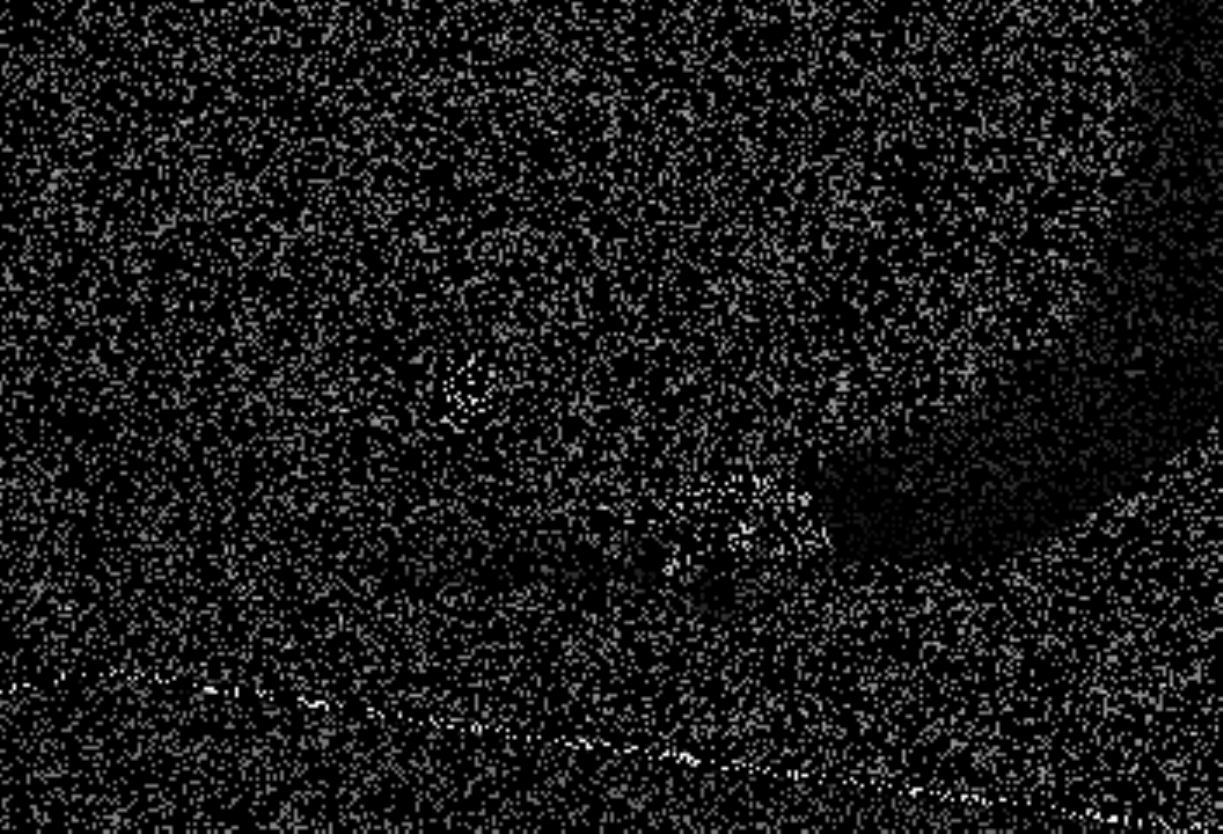}} \hfill
  \subfigure[{Image~inpaint.}]{\includegraphics[width=0.24\linewidth]{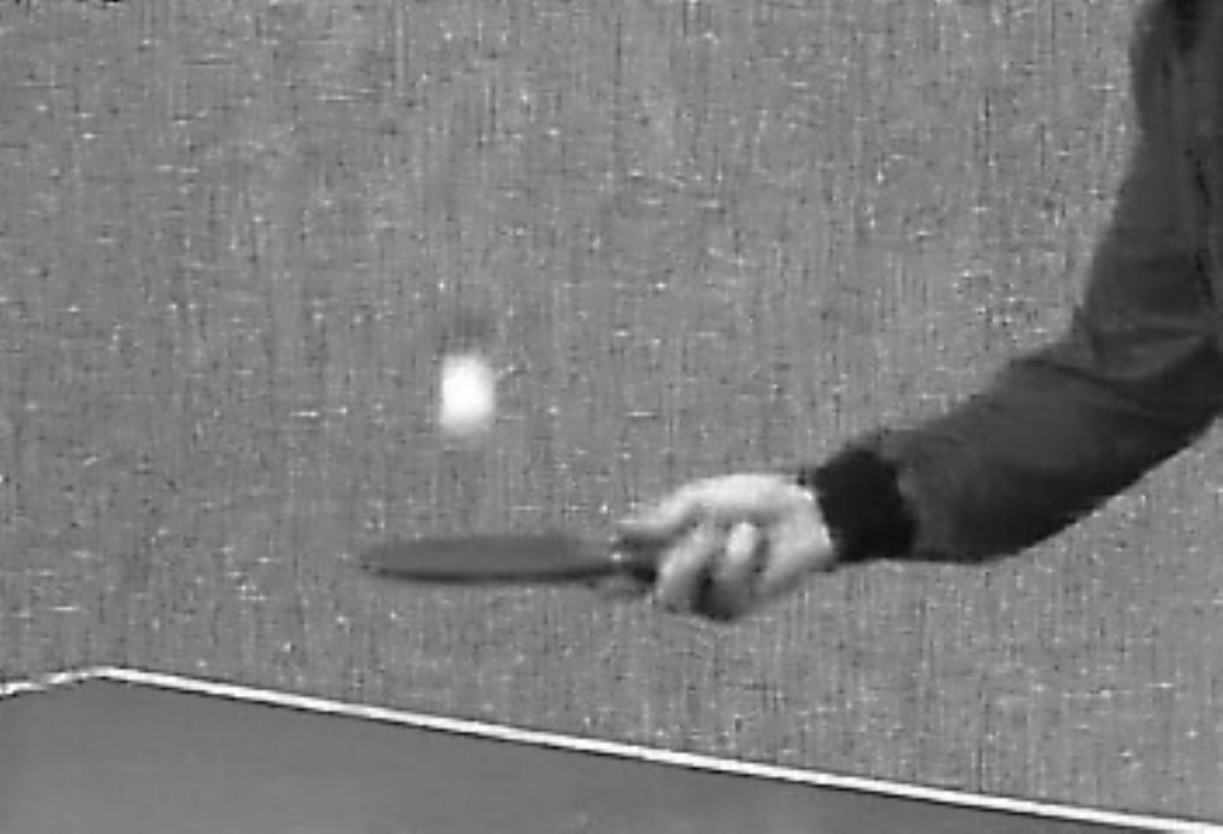}} \hfill
  \subfigure[{Video~inpaint.}]{\includegraphics[width=0.24\linewidth]{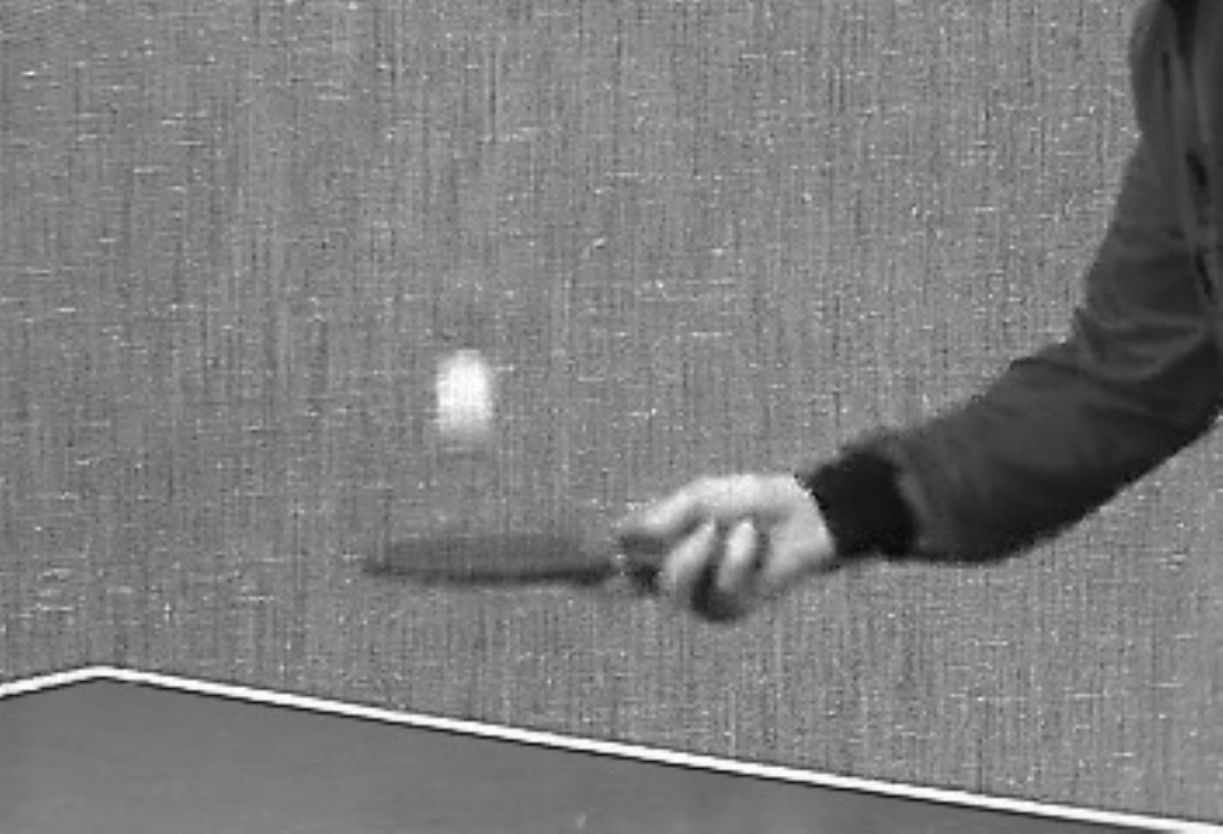}} \\
  \subfigure[{Zoom on (a)}]{\includegraphics[width=0.24\linewidth]{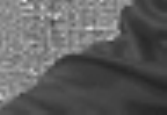}} \hfill
  \subfigure[{Zoom on (b)}]{\includegraphics[width=0.24\linewidth]{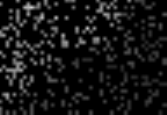}} \hfill
  \subfigure[{Zoom on (c)}]{\includegraphics[width=0.24\linewidth]{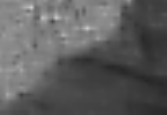}} \hfill
  \subfigure[{Zoom on (d)}]{\includegraphics[width=0.24\linewidth]{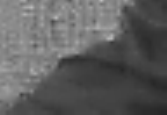}}
  \caption{Video inpainting result from~\citet{mairal2008b}. The third column shows the result when each frame is processed independently from the others. Last column shows the result of the video processing approach. Best seen by zooming on a computer screen. ``Copyright \copyright 2008 Society for Industrial and Applied Mathematics.  Reprinted with permission.  All rights reserved''.}\label{fig:dict_videoB}
\end{figure}

   \section{Face compression}
   We have shown so far that dictionary learning was appropriate for many
\emph{restoration} tasks, but an intuitive use of sparsity is for
\emph{compression}. Following this insight, \citet{bryt2008} have proposed 
a technique for encoding photographs of human faces. They achieve
impressive compression results, where each image is represented with less
than~$1\,000$ bytes while suffering from minor loss only in visual quality. The algorithm
requires a database of training face images that are processed as follows:
\begin{enumerate}
   \item input images are geometrically aligned and downsampled to the same size.
      As a result, all the main face features are at the same
      position across images. This step is automatically performed by first detecting
      $13$ face features and warping the images to achieve the desired alignment~\citep[see][]{bryt2008}; 
   \item the images are sliced into~$15 \times 15$ \emph{non-overlapping} patches;
   \item \emph{for each patch position}, a dictionary of size~$p=512$ elements
      is learned by using all patches available at the same position from the
      training images. The K-SVD algorithm is chosen by~\citet{bryt2008} for
      that purpose.
\end{enumerate}
After this training phase, a new test image can be encoded as follows:
\begin{enumerate}
   \item the test image goes through the same geometrical alignment process as
      for the training images. Encoding the inverse transformation requires
      $20$ bytes;
   \item the image is sliced into non-overlapping patches that encoded using orthogonal matching pursuit (see Section~\ref{sec:optiml0}) with a few number~$s$ of non-zero coefficients (typically $s=4$);
   \item the indices of the sparse coefficients are encoded with a Huffman table~\citep[see][]{mackay2003}; the weights are quantized into~$7$ bits.
\end{enumerate}
Once encoded, each patch can be decoded with a simple matrix vector
multiplication, and the inverse geometrical warping is applied to obtain an
approximation of the original image.  Visual results are presented in
Figure~\ref{fig:faces} for face images represented by $550$ bytes only. Unlike
generic approaches such as JPEG and JPEG2000, the approach of~\citet{bryt2008}
can exploit the fact that each patch represent a specific information, \eg,
eye, mouth, which can be learned by using adaptive dictionaries.
Since the patches do not overlap, the reconstructed images suffer from 
blocking artifacts, which can be significantly reduced by using a deblocking
post-processing algorithm.

\def\bblock{\hspace*{0.7cm}}
\begin{figure}[hbtp]
   \subfigure[Original]{\includegraphics[width=0.19\linewidth]{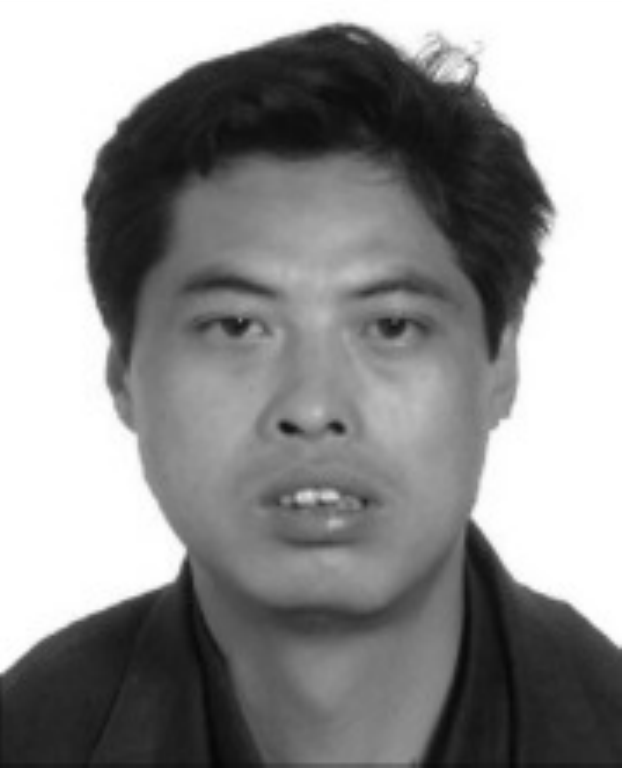}} \hfill
   \subfigure[JPEG \newline \bblock(14.15)]{\includegraphics[width=0.19\linewidth]{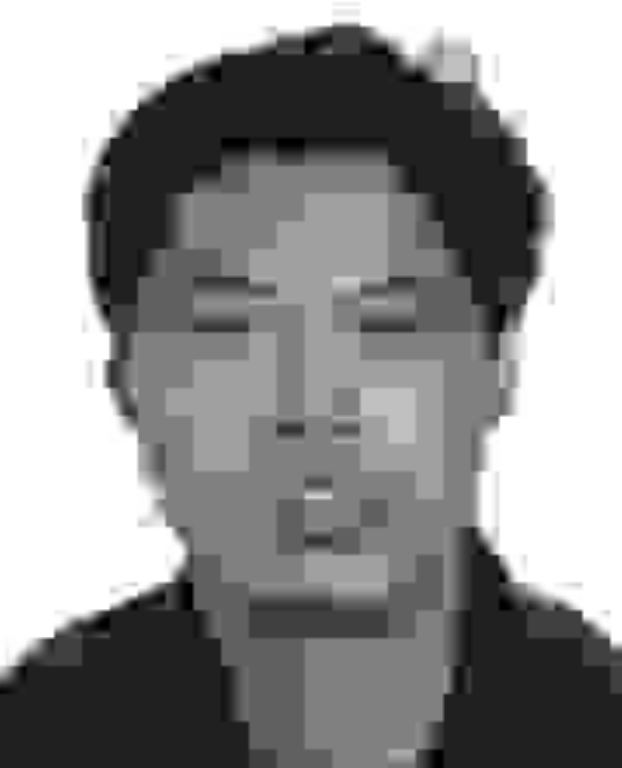}} \hfill
   \subfigure[JPEG2000 \newline \bblock(12.15)]{\includegraphics[width=0.19\linewidth]{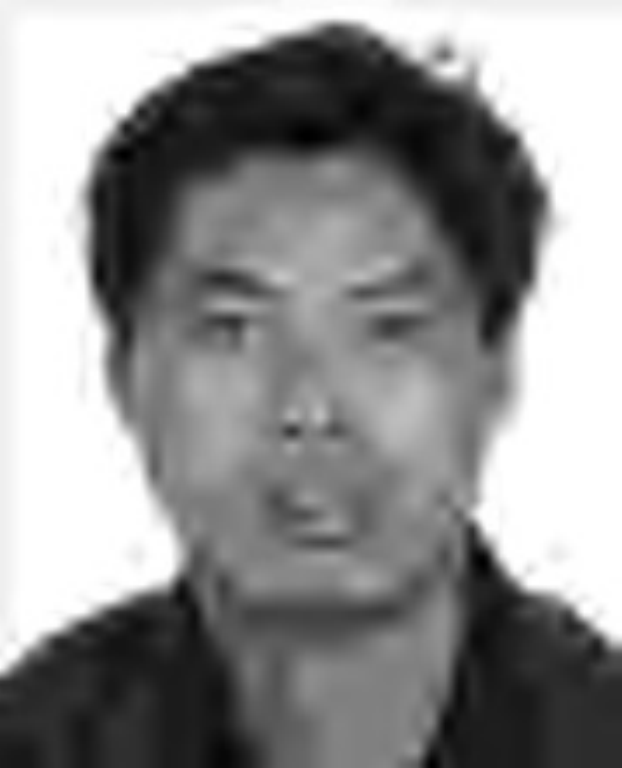}} \hfill
   \subfigure[KSVD \newline \bblock(6.89)]{\includegraphics[width=0.19\linewidth]{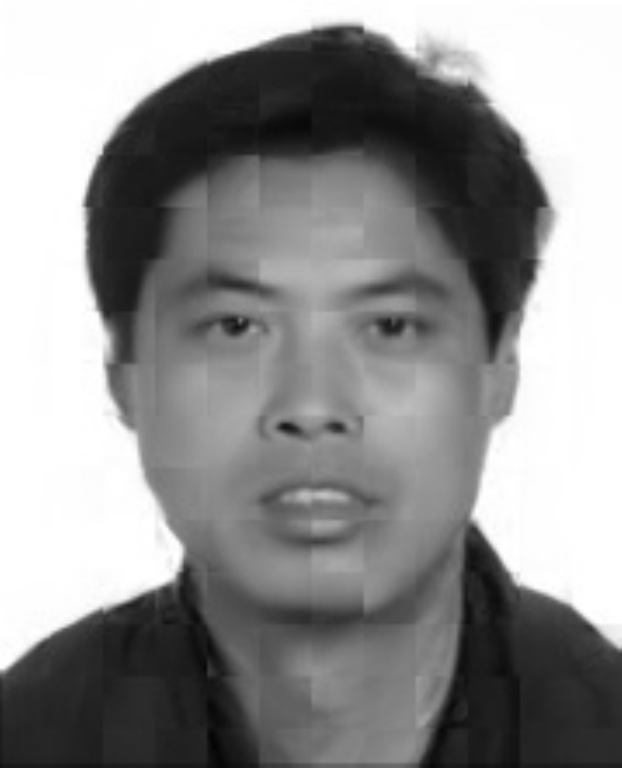}} \hfill
   \subfigure[KSVD+ \newline \bblock(6.55)]{\includegraphics[width=0.19\linewidth]{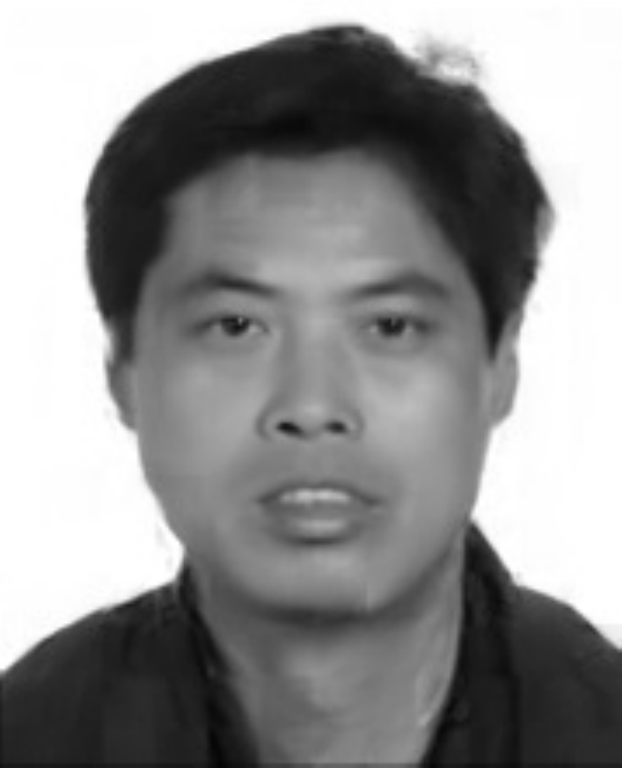}} \\
   \subfigure[Original]{\includegraphics[width=0.19\linewidth]{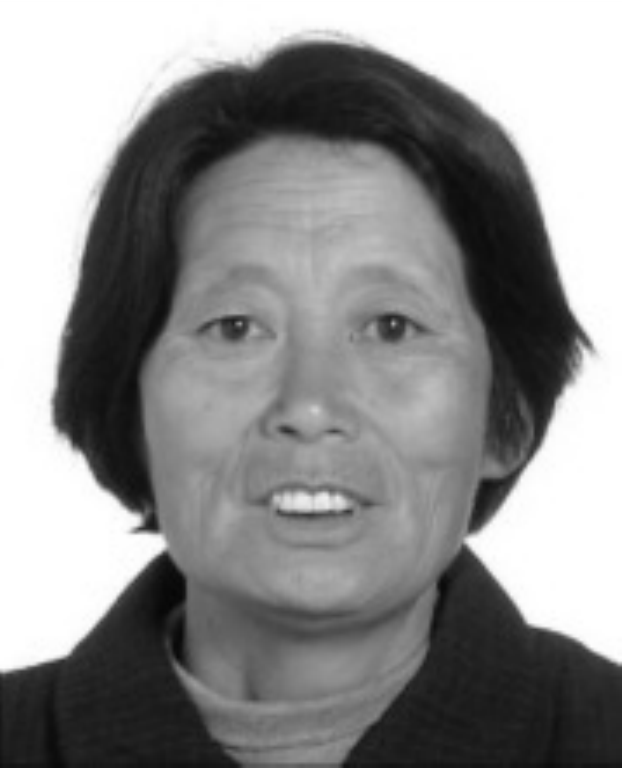}} \hfill
   \subfigure[JPEG \newline \bblock(15.17)]{\includegraphics[width=0.19\linewidth]{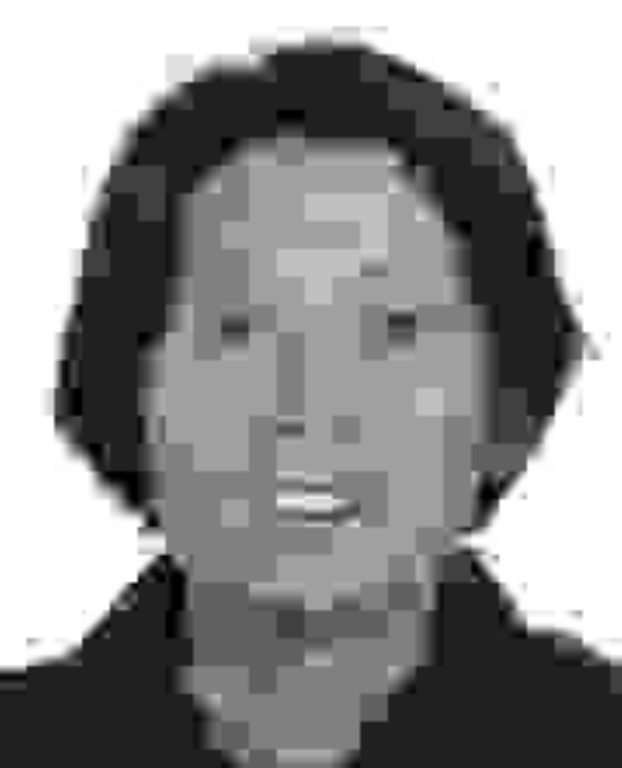}} \hfill
   \subfigure[JPEG2000 \newline \bblock(13.53)]{\includegraphics[width=0.19\linewidth]{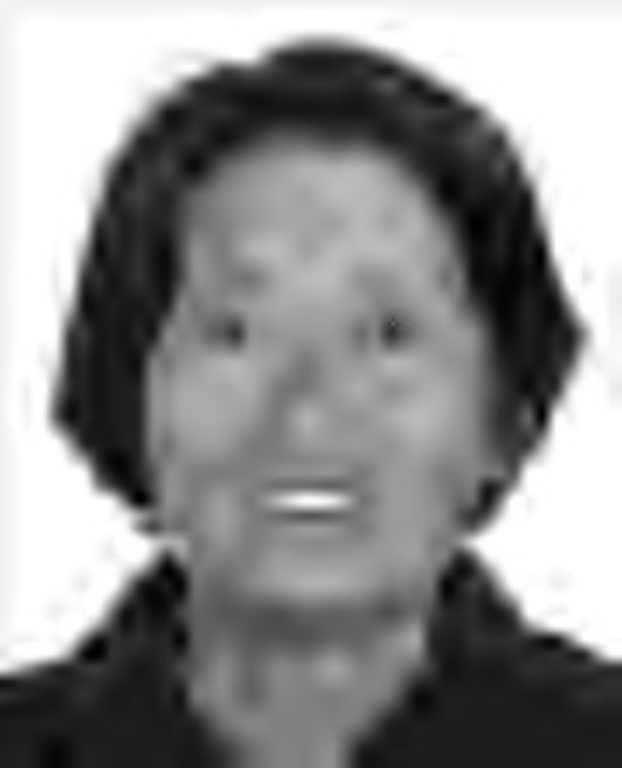}} \hfill
   \subfigure[KSVD \newline \bblock(8.23)]{\includegraphics[width=0.19\linewidth]{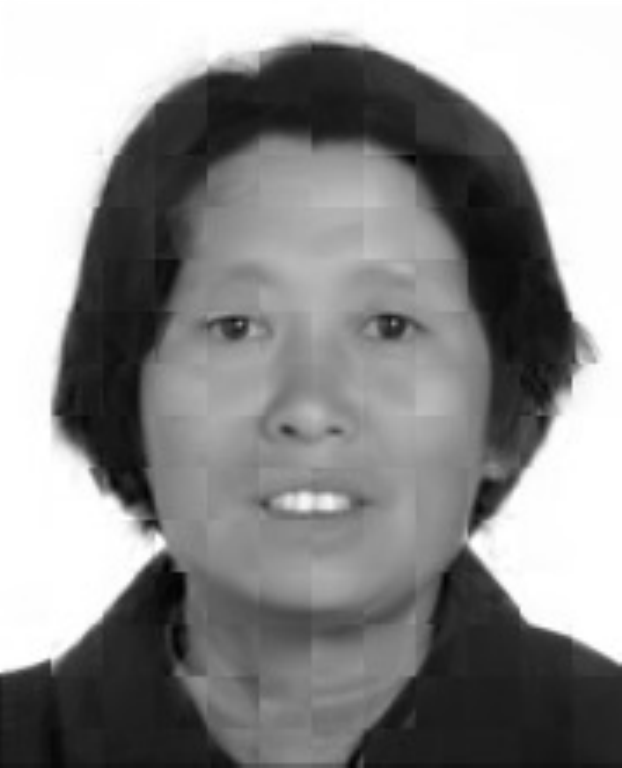}} \hfill
   \subfigure[KSVD+ \newline \bblock(7.83)]{\includegraphics[width=0.19\linewidth]{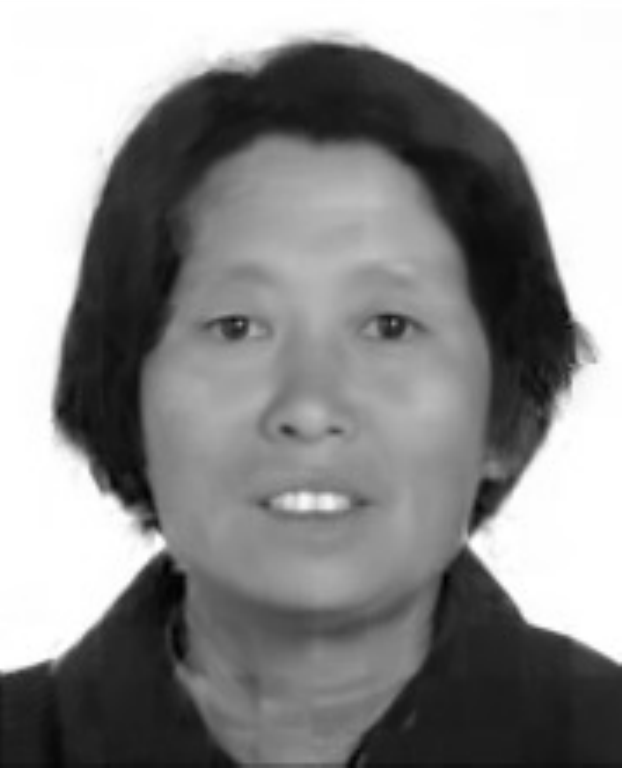}} \\
   \subfigure[Original]{\includegraphics[width=0.19\linewidth]{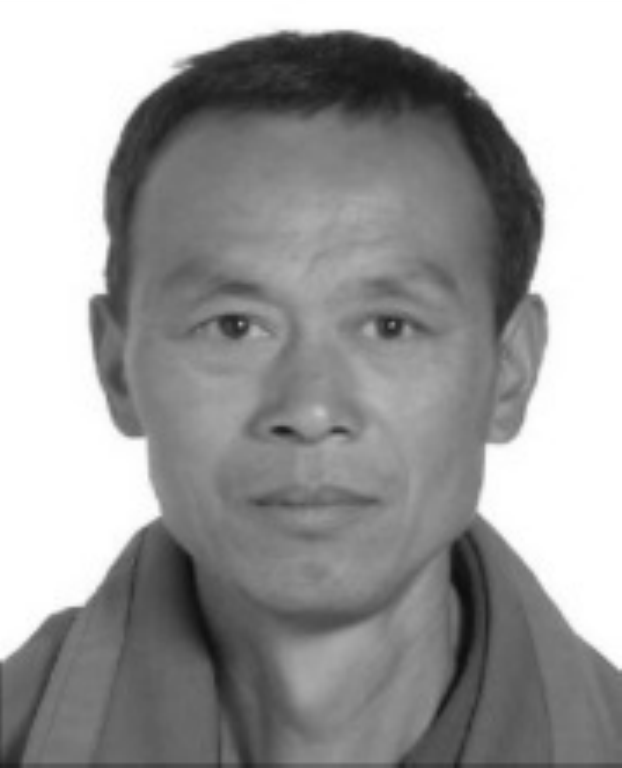}} \hfill
   \subfigure[JPEG \newline \bblock(14.54)]{\includegraphics[width=0.19\linewidth]{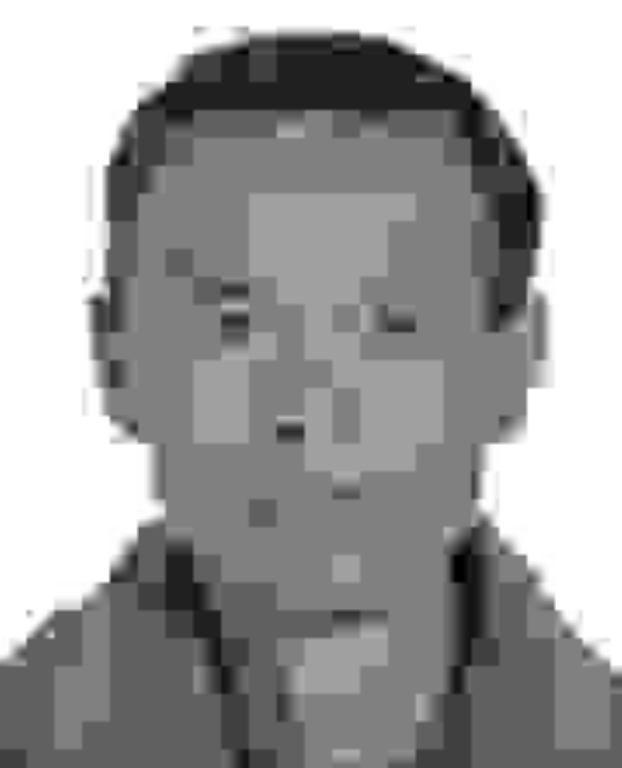}} \hfill
   \subfigure[JPEG2000 \newline \bblock(11.42)]{\includegraphics[width=0.19\linewidth]{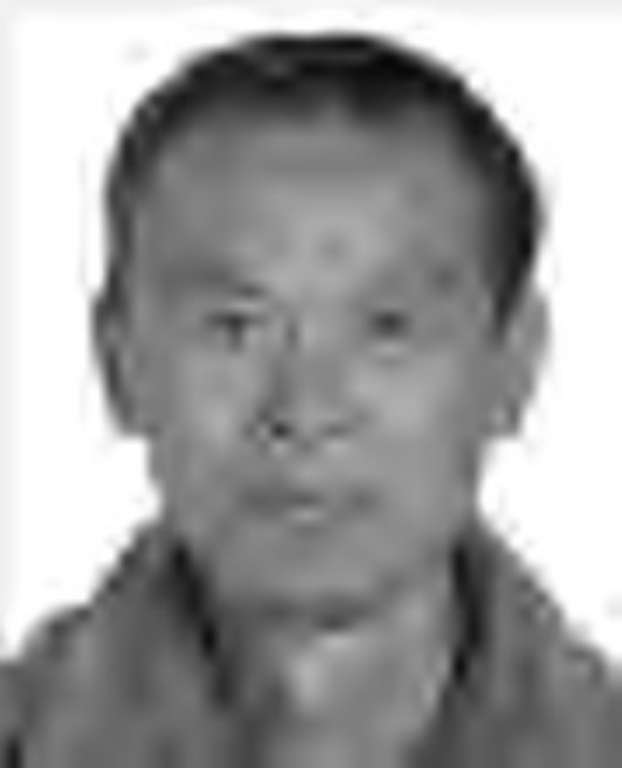}} \hfill
   \subfigure[KSVD \newline \bblock(6.69)]{\includegraphics[width=0.19\linewidth]{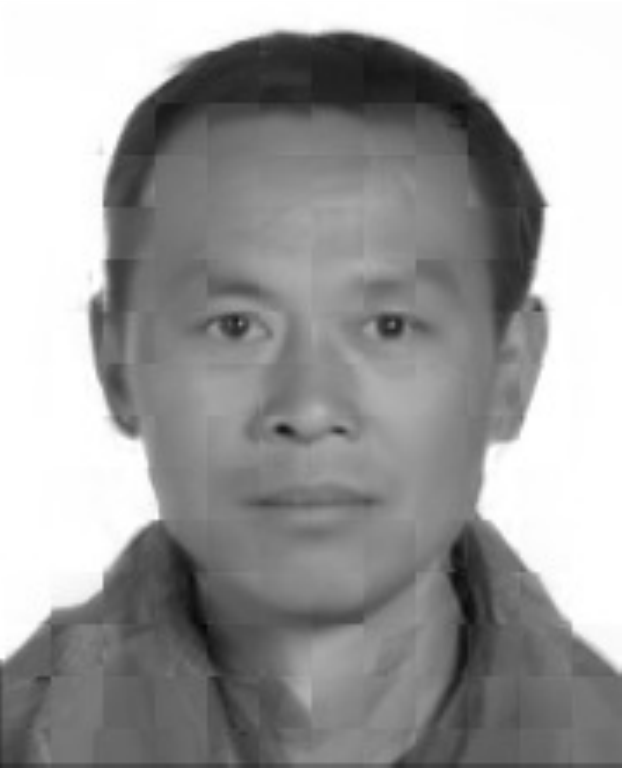}} \hfill
   \subfigure[KSVD+ \newline \bblock(6.30)]{\includegraphics[width=0.19\linewidth]{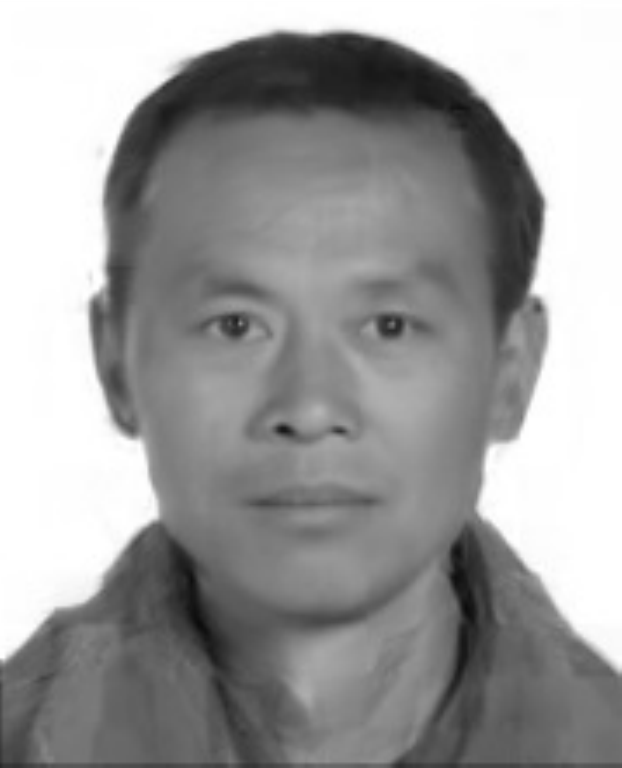}} \\
   \caption{Results of the face compression method of~\citet{bryt2008} compared
      to two classical image compression algorithms. All images are compressed
      into $550$ bytes representations. The raw output of the method, denoted by KSVD, is
      presented in the penultimate column. The last column KSVD+ is a
      post-processed result obtained by processing the output of KSVD with a
      deblocking algorithm that reduces visual artifacts.  The numbers in
      parenthesis are the square root of the mean squared errors (RMSE),
      obtained by comparing the compressed images with the original one (first
      column). Images kindly provided to us by Ori Bryt and Michael Elad. Best
   seen by zooming on a computer screen.}\label{fig:faces}
\end{figure}

   \section{Other patch modeling approaches}\label{sec:image_other}
   Dictionary learning was successful in many image processing tasks because of its
ability to model well natural image patches. Other patch-based methods can be 
used for that purpose, and have also shown impressive results for image
restoration. For the sake of completeness, we briefly present a few of them
that have gained some success, but we refer the reader to the corresponding
papers for more details.

\paragraph{Non-local means and non-parametric approaches.}
\citet{efros} showed that self-similarities inherent to natural images could
be used effectively in texture synthesis tasks. Following their insight,
\citet{buades2005} have introduced the {\em non-local means} approach to
image denoising, where the prominence of self-similarities is used as
a prior on natural images.\footnote{This idea has in fact appeared in
the literature in various guises and under different equivalent
interpretations, \eg, kernel density estimation,
mean-shift iterations~\citep{awate2006}, diffusion processes on graphs \citep{szlam2007},
and long-range random fields~\citep{li2008}.}  
Concretely, let us consider a noisy image
written as a column vector $\y$ in $\Real^n$, and denote by $\y[i]$
the $i$-th pixel and, as usual, by~$\y_i$ the patch of size $m$ centered on this
pixel for some appropriate size~$m$. This approach exploits the
simple but very effective idea that two pixels associated with
similar patches $\y_i$ and $\y_j$ should have similar values $\y[i]$
and $\y[j]$. Using $\y_i$ as an explanatory variable for $\y[i]$ leads
to the non-local means formulation, where the denoised pixel $\hatx[i]$
is obtained by a weighted average:
\begin{equation} 
   \hatx[i] = \sum_{j=1}^n
\frac{K_h(\y_i-\y_j)}{\sum_{l=1}^n K_h(\y_i-\y_l)}
\y[j], \label{eq:nl} 
\end{equation} 
and $K_h$ is a Gaussian kernel of bandwidth $h$. The estimator~(\ref{eq:nl})
is in fact classical in the non-parametric statistics literature, even though
it was never applied to image denoising with natural image patches before; it is
usually called Nadaraya-Watson
estimator~\citep[see][]{nadaraya,watson,wasserman2006}.\footnote{Interestingly,
the same class of non-parametric estimators have also been used
independently by~\citet{takeda2007}, where the explanatory variables are not
full image patches, but neighbor pixels. It results in Nadaraya-Watson
estimators that exploit local information, as opposed to the non-local one
of~\citet{buades2005}.}

Later, several extensions have been proposed along this line of work. First attempts
have focused on improving the computational speed \citep{mahmoudi2005}, or on automatically adapting
the kernel bandwidth~$h$ to the pixel of interest \citep{kervrann2006}.
Extensions of the non-local principle for other tasks than denoising have also been
developed, \eg, for image upscaling~\citep{glasner2009}, or
demosaicking~\citep{buades2009}. Later, other estimators exploiting image
self-similarities and based on nonparametric density estimation have been
proposed and analyzed by~\citet{levin2012,chatterjee2012}.

\paragraph{BM3D.}
\citet{dabov2} proposed a patch-based procedure called BM3D for image denoising.
The method exploits several ideas, including image self-similarities and sparse
wavelet estimation, and provides outstanding results at a reasonable
computational cost. Several years after it was developed, it remains considered
as the state of the art.  BM3D proceeds with the following pipeline for
denoising an image~$\y$:
\begin{itemize}
   \item {\bfseries block matching}: like non-local means, BM3D exploits
      self-similarity; it processes all patches by first matching them with similar
      ones in the noisy image~$\y$, stacking them together into a 3D signal block; 
   \item {\bfseries 3D filtering}: each block is denoised by using hard or
      soft-thresholding with a 3D orthogonal DCT dictionary, following the
      wavelet thresholding tradition;
   \item {\bfseries  patch averaging}: as in Section~\ref{subsec:denoising}, a
      denoised image~$\hatx_0$ is obtained by averaging the estimates of the
      overlapping patches;\footnote{More precisely, a weighted average is
         performed. We omit this detail here for simplicity, and refer
         to~\citep{dabov2} for more details.} 
   \item {\bfseries refinement with Wiener filtering}: the previous steps are refined
      by using the intermediate estimate~$\hatx_0$. First, block matching is
      applied to~$\hatx_0$ instead of~$\y$ in order to obtain a more reliable
      matching of noisy patches from~$\y$; weights are obtained for the
      3D-filtering by using a Wiener filter. After a new patch averaging step,
      a final estimate~$\hatx_1$ is obtained.
\end{itemize}
A few additional heuristics are also used to further boost the denoising
performance~\citep[see][]{dabov2}. Finally, the scheme has proven to be very efficient
and gives substantially better results than regular non-local means.  Later, it
was improved by~\citet{dabov2009} by adding other components: (i)
shape-adaptive patches that are non necessarily rectangular; (ii) 3D filtering
with adaptive dictionaries obtained with PCA inside each block instead of using
simple DCT.

\paragraph{Non-local sparse models.}
Since BM3D was successful in combining image self-similarities and wavelet/DCT
denoising on blocks of similar patches, \citet{mairal2009} have proposed to exploit
further this idea and combine the non-local means principle with dictionary
learning.

One motivation is that non-local means approach has proven to be
effective in general, but it fails in some cases. In the extreme, when a patch
does not look like any other one in the image, it is impossible to exploit
self-similarities to estimate the corresponding pixel value. To some extent, sparse image
models based on dictionary learning can handle such situations when the patch
at hand admits a sparse decomposition, but they suffer from another drawback:
similar patches sometimes admit very different estimates due to the potential
instability of sparsity patterns, which can result in practice in
noticeable reconstruction artifacts. The idea of the non-local sparse
model is that \emph{similar patches should admit
similar sparse decompositions}. By enforcing this principle, one hopes to
obtain more stable decompositions and subsequently a better estimation.

Concretely, let us define for each patch $\y_i$ the set $\SSS_i$:
\begin{displaymath}
\SSS_i \defin \left\{ j=1,\ldots,n \st \|\y_i-\y_j\|_2^2 \leq \xi  \right\},
\end{displaymath}
where $\xi$ is some threshold. The block~$\SSS_i$ is essentially constructed by
following the block matching step of BM3D.  What differs is the
way~$\SSS_i$ is subsequently processed. Whereas BM3D performs some wavelet filtering, the
non-local sparse model jointly decomposes the patches onto a previously learned
dictionary~$\D$ in $\Real^{m \times p}$. 

To encourage the decompositions to be similar, the patches from the set~$\SSS_i$
are forced to share a joint sparsity pattern---that is, they should share a
common set of nonzero coefficients. Fortunately, this can be achieved with a
{\em group-sparsity penalty}. Denoting by~$\A_i\defin [\alphab_{ij}]_{j \in
\SSS_i}$ the matrix of coefficient in~$\Real^{p \times |\SSS_i|}$ corresponding to the group~$\SSS_i$, the regularizer
encourages the matrix~$\A_i$ to have a small number of non-zero rows, yielding
a type of sparsity patterns that is illustrated in Figure~\ref{fig:sparse}. 

Formally, the penalty can be written for a matrix~$\A$ in~$\Real^{p \times k}$ as
\begin{equation*}
   \Psi_q(\A) \defin \sum_{j=1}^p \|\alphab^j\|_2^q,
\end{equation*}
where $\alphab^j$ denotes the $j$-th row of $\A$.  When~$q=1$, we obtain the
group-Lasso norm already presented in Section~\ref{sec:l1}.  When~$q=0$, the
penalty simply counts the number of non-zero rows in~$\A$ and thus plays the
same role as the~$\ell_0$-penalty in the context of group sparsity~\citep{tropp2}.
Then, decomposing the patch $\y_i$ with the penalty~$\Psi_q$ on the set
$\SSS_i$ amounts to solving
\begin{equation*}
   \min_{\A_i \in \Real^{p \times |\SSS_i|}} \Psi_q(\A_i) \st \sum_{j \in \SSS_i} \|\y_j-\D\alphab_{ij}\|_2^2 \leq \varepsilon_i.
   \label{eq:bp4}
\end{equation*}
As for the classical denoising approach based on dictionary learning presented
in Section~\ref{subsec:denoising}, the following ingredients are used to
ultimately reconstruct the denoised image: (i) $\varepsilon_i$ is chosen
according to some effective heuristics; (ii) the non-convex penalty~$\Psi_0$ with greedy algorithms~\citep{tropp} is
preferred to~$\Psi_1$ for the final image reconstruction task once the dictionary has been
learned; (iii) since the patches overlap, the
final step of the algorithm averages the estimates of every pixel.
A few additional heuristics are also used to improve the speed or quality of the algorithm~\citep[see][]{mairal2009}, including
a refinement step as in BM3D.

The non-local sparse principle was also successfully applied
by~\citet{mairal2009} to image demosaicking (see Figure~\ref{fig:demos}), and
more generally to image interpolation~\citep{romano2014}.

\begin{figure}[t]
   \centering
   \includegraphics[width=0.6\linewidth]{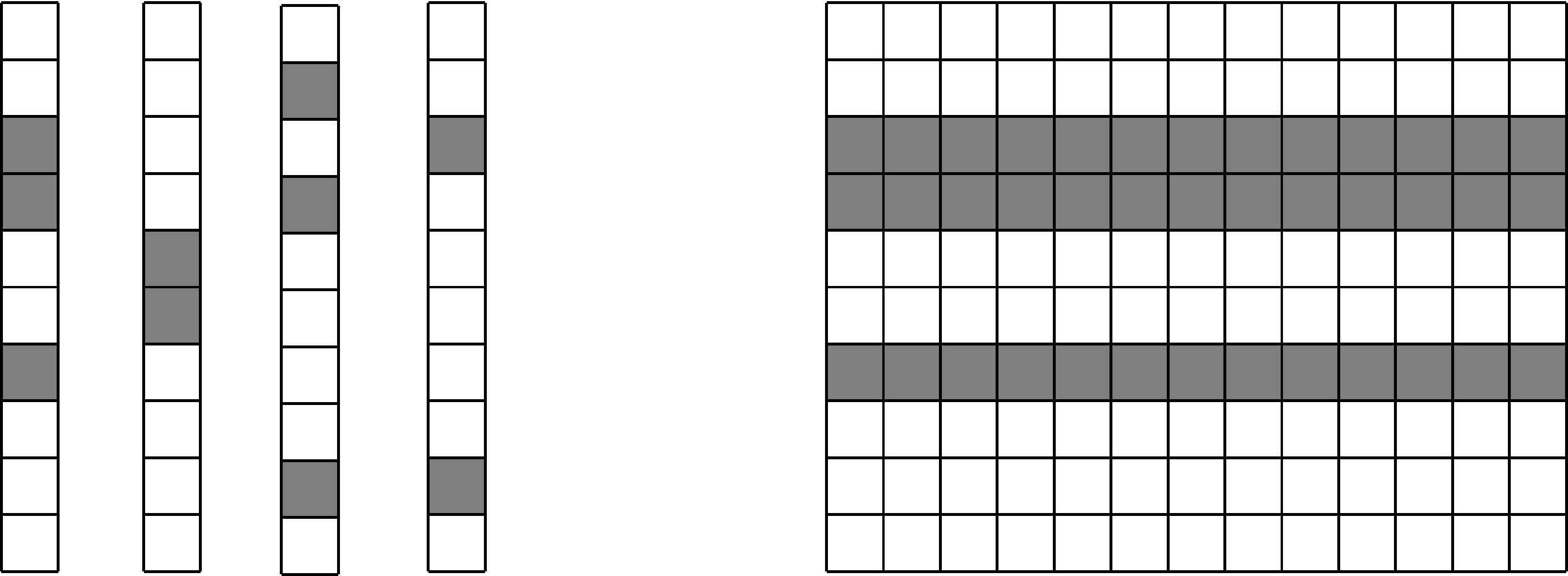}
   \caption{Sparsity vs. joint sparsity: Grey squares represents non-zeros
   values in vectors (left) or matrix (right).} \label{fig:sparse}
\end{figure}

Later, other variants of non-local sparse models have been proposed, notably
the \emph{centralized sparse representation} technique
of~\citet{dong2011b,dong2013}. There, input patches are clustered and an
orthogonal dictionary is learned for each cluster with PCA. Then, patches are
sparsely encoded on the PCA dictionaries, and the decompositions
coefficients inside each cluster are encouraged to be close to a per-cluster mean
representation. As shown in Sections~\ref{subsec:denoising}
and~\ref{subsec:demosaicking}, the performance achieved by non-local sparse
models is in general competitive with the state of the art for different tasks,
\eg, denoising or demosaicking.

\paragraph{Gaussian mixture models.}
We have already mentioned in Section~\ref{subsec:discussion} that Gaussian
mixture models have been successful for modeling natural image
patches~\citep{zoran2011,yu2012}. Not surprisingly, such approaches also lead
to impressive results for image denoising and other image reconstructions tasks
such as image upscaling~\citep{yu2012}. Given a database of image patches, the
model parameters are usually learned by using the EM-algorithm or one of its
variants, and denoising can be achieved with classical estimators for
probabilistic models, \eg, maximum a posteriori estimation.

\chapter{Sparse Coding for Visual Recognition}\label{chapter:vision}
   Because of its ability to model well natural image patches and automatically
discover interpretable visual patterns, one may wonder whether dictionary
learning can be useful for visual recognition or not. Patches are indeed often
used as lowest-level features in image analysis pipelines. For instance,
this is the case of popular approaches based on bags of
words~\citep{csurka2004,lazebnik2006}, which  exploit image
descriptors that are nothing else than pre-processed natural image
patches~\citep{lowe2004,dalal2005,mikolajczyk2005,tola2010}. This is also the case of
convolutional neural networks~\citep{lecun1998}, whose first layer performs
local convolutions on raw pixel values.

It is thus tempting to believe that modeling patches is an important step for
automatic image understanding and that dictionary learning can be a key
component of recognition architectures. The different approaches that we review
in this section seem to confirm this fact, even though recognition requires
other properties that are not originally provided by sparse coding models, such
as invariance or stability of image representations to local perturbations.

In Section~\ref{sec:visual_features}, we show how dictionary learning has
been used for image classification as an alternative to clustering techniques
in bag-of-words models, before reviewing in Section~\ref{sec:visual_botany} 
numerous variants that have been proposed in the literature. Then, we present
different approaches where sparse models are used as a classification tool
for face recognition (Section~\ref{sec:visual_faces}), patch classification, and 
edge detection (Section~\ref{sec:visual_edges}). Finally, we draw some links
between dictionary learning and neural networks in
Section~\ref{subsec:backprop} with backpropagation rules and networks for
approximating sparse models, before reviewing a few additional computer vision
applications in Section~\ref{sec:visual_other}.  

We show that dictionary learning has been successfully used in various guises
for different recognition tasks.

   \section{A coding and pooling approach to image modeling}\label{sec:visual_features}
   Following \citet{boureau2010}, we show in this
section that interleaved steps of (nonlinear) feature coding and
pooling are a very simple but common approach to image modeling in
visual recognition tasks such as retrieval and categorization. It is,
implicitly or explicitly, used in SIFT \citep{lowe2004}, bags of
features \citep{csurka2004,SiZi03}, HOG \citep{dalal2005}, the pyramid
match kernel \citep{GrDa05}, spatial pyramids \citep{lazebnik2006}, soft
quantization \citep{vGVSG10}, and most importantly for this
presentation, sparse coding approaches to
recognition \citep{yang2009,boureau2010,wang2010locality,yang2010b}.

We also show that the coding/pooling approach can often be intuitively
justified in terms of (approximate) feature matching.  We first consider
\emph{global pooling} approaches that discard all spatial information for every
feature, before moving to \emph{local pooling} that keeps part of it.

\subsection{Global pooling}
\paragraph{Bags of features and sparse coding.}
This paragraph largely follows~\citet{boureau2010}.  Let us consider an image
with its associated local image features, represented by (and identified with) a
finite set $\X=[\x_1,\ldots,\x_n]$ of~$n$ vectors in $\Real^m$. For instance,
each~$\x_i$ may represent a non-rotationally invariant SIFT descriptor computed
at location~$i$ in the image, but it may as well represent another feature type, \eg, local
binary pattern~\citep{ojala2002} or a color histogram.

Let us also assume that we are given a {\em coding operator}, that is a
function $\alphab: \Real^m \to \AA$ mapping features in $\Real^m$ onto the corresponding codes
in some space $\AA$, and a {\em pooling
operator} $\betab$ mapping finite subsets of $\AA$ onto
elements of $\Real^p$. The simplest version of the coding/pooling
approach is to model an image represented by a set of descriptors $\X$
in~$\Real^{m \times n}$ by the vector
$\gammab(\X)=\betab[\alphab(\X)]$, where
$\alphab(\X)$ denotes the set of points~$[\alphab(\x_i)]_{i=1}^n$ in~$\AA^n$. We dub this
approach {\em global} pooling because it summarizes the codes of {\em all}
features in the image by a single vector in~$\Real^p$.

The best known example of global pooling is the bag-of-features
model inherited from text processing and popularized in computer
vision by~\citet{SiZi03} and~\citet{csurka2004}: let us consider a quantizer
$\xi:\Real^m\rightarrow \{1,\ldots,p\}$, constructed (for example) using the
algorithm K-means on the elements of $\X$. We can use $\xi$ to encode each
feature $\x$ as the binary vector $\alphab(\x)$ of dimension $p$ with all zero
entries except for entry number $\xi(\x)$, which is
equal to one. Note that the code space $\AA$ is $\{0,1\}^p$ in this case.
 
We can now use {\em mean pooling} (also called {\em average
pooling}) to summarize the feature codes~$\alphab(\x_i)$ of the whole image by
their average--- that is, a single vector of dimension $p$:\footnote{Note
that the tf-idf renormalization common in image
retrieval \citep{SiZi03} can be achieved by reweighting the components
of $\gammab(\X)$: let $\gammab_i$ denote the $i$-th component of
$\gammab(\X)$, \ie, the frequency of the corresponding bin in
$\X$, replace $\gammab_i$ by $\gammab_i \log (N/n\gammab_i)$, where $N$ is
the total number of features in the database being queried.}
 \begin{equation*}
 \gammab(\X)=\betab[\alphab(\X)],\quad\text{where}\quad
 \betab[\alphab(\X)]=\frac{1}{n}\sum_{i=1}^n\alphab(\x_i).
 \end{equation*}
The vector $\gammab(\X)$ is called the bag of features associated with the
image.

Two images respectively represented by the local descriptors
$\X=[\x_1,\ldots,\x_n]$ in~$\Real^{m \times n}$ and $\Y=[\y_1,\ldots,\y_{n'}]$
in~$\Real^{m \times n'}$ can now be compared by computing the dot product of
their bags of features, that is, their similarity is computed as
$s(\X,\Y)=\gammab(\X)^\top \gammab(\Y)$. This
approach is commonly used in both image retrieval (where $s$ is used
for ranking)~\citep{SiZi03} and categorization (where $s$ is used as
a kernel) \citep{csurka2004}. In both cases, the similarity function
can be computed efficiently due to the fact that bags of features are
typically very sparse for large values of $p$.
 
When the quantizer $\xi$ is obtained using the K-means algorithm, the centroids
$\d_j$ ($j=1,\ldots,p$) of the corresponding clusters can be
seen as the elements (``atoms'', or ``words'') of some visual
 dictionary, and the codes $\alphab(\x)$  tell us what atom
a given feature is associated with.

This approach readily generalizes to other types of feature coding.  In
particular, it is possible to replace the vector-quantized binary codes of
the features $\x$ in $\X$ by their sparse code relative to some
(given or learned) dictionary $\D$~\citep{yang2009,boureau2010}, that is
\begin{equation*}
 \alphab(\x) \in \argmin_{\alphab' \in\Real^p}
 \frac{1}{2}\left\|\x-\D\alphab'\right\|_2^2
 +\lambda \|\alphab'\|_1,
\end{equation*}
then use mean pooling as before.
 
\paragraph{Feature matching interpretation.}
Here, we follow~\citet{JDS10}, and show that, 
in the case of bags of features constructed with a quantifier, the
similarity function $s(\X,\Y)$ can be justified intuitively by the fact
that it measures the proportion of pairs of features in~$\X$ and~$\Y$
that match each other~\citep{JDS10}. Maximizing $s(\X,\Y)$ can thus be
thought of as some (approximate) feature matching process.
Indeed, given a quantizer~$\xi$, let us define the functions $\delta_j:\Real^m\rightarrow
\{0,1\}$ ($j=1,\ldots,p$) and $\delta:\Real^m\times\Real^m\rightarrow
\{0,1\}$ by
\begin{equation}
\delta_j(\x)=\begin{cases}
 1 & \text{when}\,\, \xi(\x)=j,\\
 0 & \text{otherwise},
 \end{cases}\,\,
 \text{and}\,\,\,
 \delta(\x,\y)= 
 \begin{cases}
 1 & \text{when} \,\, \xi(\y)=\xi(\x),\\
 0 & \text{otherwise}.
 \end{cases}
 \label{eq:deltabag}
\end{equation}
We can now rewrite our similarity function as
\begin{equation*}
   s(\X,\Y)=\sum_{j=1}^p \left[\frac{1}{n}\sum_{i=1}^n \delta_j(\x_{i})\right]
   \left[\frac{1}{n'}\sum_{i'=1}^n \delta_j(\y_{i'})\right]
   =\frac{1}{n\, n'}\sum_{i=1}^n \sum_{i'=1}^{n'}
\delta(\x_{i},\y_{i'}),
\label{eq:simbag}
\end{equation*} 
which indeed measures the fraction of pairs of features in the two images
that are deemed to match when they admit the same code.

When the quantizer $\xi$ is obtained by running the K-means algorithm,
$\delta(\x,\y)$ is nonzero when the features $\x$ and $\y$ both lie
in the same cell of the Voronoi diagram of the $p$ centroids of the
corresponding clusters.
Related approaches explicitly phrased in term
of feature correspondences have been used in both image matching and
retrieval using some nearest-neighbor (or NN) decision rule such as
$\varepsilon$-search or $K$-NN. Concretely, one measures again the
fraction of matching features as
\begin{equation*}
   s(\X,\Y)=\frac{1}{n\, n'}\sum_{i=1}^n \sum_{i'=1}^{n'} \delta(\x_i,\y_{i'}),
\label{eq:sim}
\end{equation*}
but, this time the function $\delta$ is defined by
\begin{equation}
 \delta(\x,\y)= 
 \begin{cases}
 1 & \text{when} \,\, \|\y-\x\|_2 <\varepsilon,\\
 0 & \text{otherwise},
 \end{cases}
 \label{eq:deltaeps}
 \end{equation}
in the $\varepsilon$-search case, or, in the $K$-NN case, by
\begin{equation}
 \delta(\x,\y)= 
 \begin{cases}
 1 & \text{when}\,\, \y \,\, \text{is one of the} \,\, $K$ \,\,
 \text{nearest neighbors of} \,\, \x,\\
 1 & \text{when}\,\, \x \,\, \text{is one of the} \,\, $K$ \,\,
 \text{nearest neighbors of} \,\, \y,\\
 0 & \text{otherwise}.
 \end{cases}
 \label{eq:NNS}
 \end{equation}
 
This approach is the basis of the classical image retrieval and image matching
techniques  of~\citet{ScMo97} and~\citet{lowe2004}.  As in the case of bags of
features~\citep{SiZi03}, the matching step if often complemented by some
geometrical verification stage in practice. Similar schemes have also been used
in image categorization~\citep{WCG03}.
 
It should be noted that the coding/pooling models also admits
a feature-matching interpretation when images are encoded using sparse
coding instead of a quantifier: we can write in this case the
similarity $s(\X,\Y)$ of two images, respectively represented by the sets of descriptors $\X$ and $\Y$, as
\begin{equation*}
 s(\X,\Y)=\sum_{j=1}^p \left[\frac{1}{n}\sum_{i=1}^n \alpha_j(\x_i)\right]
 \left[\frac{1}{n'}\sum_{i'=1}^{n'} \alpha_j(\y_{i'})\right]
 =\frac{1}{n\,n'}\sum_{i=1}^n \sum_{i'=1}^{n'}
 \delta(\x_i,\y_{i'}),
 \end{equation*}
 where $\alphab(\x)=[\alpha_1(\x),\ldots,\alpha_p(\x)]$ and~$\delta(\x,\y)=\alphab(\x)^\top\alphab(\y)$. Note
 that by analogy,~$\delta(\x,\y)$ can be rewritten in a similar form to the previous cases~(\ref{eq:deltabag}), (\ref{eq:NNS}), or~(\ref{eq:deltaeps}):
 \begin{equation*}
 \delta(\x,\y)=
 \begin{cases}
 \alphab(\x)^\top\alphab(\y)
 & \text{when}\quad
 \exists j,\,\, \alpha_j(\x)\alpha_j(\y)\neq 0,\\
 0 & \text{otherwise}.
 \end{cases}
 \label{eq:simsparse}
 \end{equation*}
 The function $\delta$ encodes in this case the fact that two feature
 vectors are deemed to match when they both have at least one nonzero 
 code element for some dictionary atom, the contribution of each matching
 feature pair being proportional to the dot product of their codes.

 \paragraph{Max pooling.}
The construction of the vector $\gammab(\X)$ associated with an
image can be seen as pooling the code vectors
$[\alphab(\x_i)]_{i=1}^n$ associated with the image features and
summarizing them by a single vector $\gammab(\X)$ which is their
average. As noted before, this process is thus often dubbed {\em
mean}, or {\em average pooling}. A simple variant is {\em sum
pooling}, where the vectors $\gammab(\X)$ are not normalized and
simply sum up the vectors $\alphab(\x_i)$.
 
It has proven useful in several classification tasks to replace mean
pooling by {\em max pooling}~\citep{serre2005}, where
\begin{equation*}
\gamma_j(\X)=\max_{i=1,\ldots,n} \alpha_j(\x_i),
\end{equation*}
and~$\gammab(\X)=[\gamma_1(\X),\ldots,\gamma_p(\X)]$.
In the case of bags of features, the vectors $\alphab(\x_i)$ are binary, and
computing the similarity $s(\X,\Y)$ amounts to counting the number of bins in the
quantization associated with at least one feature in each
image. Intuitively, this may be justified in an application such as
Video Google~\cite{SiZi03}, where the query image (typically a box
drawn around an object of interest such as a tie or a plate) is
normally much smaller than the database images (typically depicting an
entire scene) so that features occurring often in the scene are not
given too much importance.
 
Max pooling is also used with sparse coding. This is a priori
problematic because of the sign ambiguity of dictionary learning
discussed in Section~\ref{subsec:theory}. Indeed, as noted
by~\citet{MuPe14}, max pooling should only be used (or perhaps more
accurately, is only intuitively justified) when the individual
vector entries encode a strength of association between a descriptor
and a codeword, and are thus all positive. This is the case for bags
of features but not for sparse coding, with max pooling resulting 
potentially in $2^p$ different pooled features. The
construction of Section~\ref{subsec:theory} can be used to
remove this ambiguity, but does not provide an intuitive
justification of max pooling in this setting.
It may then make sense to use a variant of dictionary learning that enforces
non-negativity constraints on the sparse codes, or to duplicate 
the entries of~$\alphab(\x)$ into negative and positive components, resulting
in a vector of size~$2p$, before applying max pooling. Specifically, each
entry~$\alpha_j(\x)$ will be duplicated into two
values~$\max(\alpha_j(\x),0)$ and~$\max(-\alpha_j(\x),0)$.

Finally, other nonlinear pooling technique have also been studied in the
literature~\citep[see][]{koniusz2013}; for simplicity, we restrict our
presentation to the average and max-pooling strategies, which are the most
popular ones.

\subsection{Local pooling}
\paragraph{Spatial pooling: spatial pyramids and their cousins}
 The global coding/pooling approach yields ``orderless'' image
 models \cite{KoVD99}, where all spatial information has been
discarded. While this affords a great deal of robustness (including a
 total invariance to image transformations {\em as long as} they leave
 the local features invariant),\footnote{One should keep in mind that
 being invariant to within-image transformations does not imply
 being invariant to, say, rotations in depth since (at least) some
 features will become occluded, or will be revealed, as the observed
 object rotates relative to the camera.} it seems wasteful to discard
 {\em all} spatial information.
 
 The spatial pyramid of~\citet{lazebnik2006}
 addresses this problem by overlaying a coarse pyramid with $L$ levels
 over the image (the HOG model of~\citet{dalal2005} is
 based on a similar idea using a coarse grid instead of a
 pyramid). There are $2^{2l}$ cells at level $l$ ($l=0,\ldots,L-1$) of
 the pyramid, and the features falling in each cell of the pyramid are
 binned separately. Let us define by $\SSS_{kl}$ the indices of features in~$\{1,\ldots,n\}$
 falling into cell number $k$ at level $l$; the spatial pyramid image
 descriptor $\gammab(\X)$ is obtained by concatenating the
 unnormalized histograms (sum pooling) $\gamma_{kl}(\X)=\sum_{i \in
 \SSS_{kl}}\alphab(\x_i)$ associated with all cells at all
 levels of the pyramid to form a vector in $\Real^{pd}$, where
 $d=1+\ldots+d_{L-1}=(2^{2L}-1)/3$, and $d_l=2^{2l}$ for
 $l=0,\ldots,L-1$.

 Note that the case $L=1$ corresponds to the global pooling model, as
 used in its bag of features or sparse coding instances, except for the
 fact that it uses {\em sum pooling} instead of mean or max pooling,
 that is, the sum of the features is used instead of their mean or max
 value. As explained in the next section, the spatial pyramids
 associated with two images can be compared using their dot product or
 the {\em pyramid matching kernel} of~\citet{GrDa05}, and they can be
 interpreted in terms of feature matching in both cases.
 
\paragraph{Feature matching interpretation.}
Given again two set of descriptors~$\X$ and~$\Y$ and a spatial pyramid
structure leading to respective sets of indices~$\SSS_{kl}$ and~$\SSS'_{kl}$ for the cell~$k$ at level~$l$, we
can compute the similarity 
 \begin{equation*}
 s(X,Y)=\gammab(\X)^\top\gammab(\Y)=
 \sum_{l=0}^{L-1}\sum_{k=1}^{d_l}
 \sum_{i \in \SSS_{kl}}\sum_{ i' \in \SSS'_{kl}}\alphab(\x_i)^\top \alphab(\y_{i'}).
 \end{equation*}
This similarity function measures the number of matches between the
two images, where matches are found  at different spatial scales as
pairs of features falling in the same cell.
 
As shown by~\citet{lazebnik2006}, an alternative is provided by the
{\em pyramid match kernel}, defined as follows. Let us first define the
{\em histogram intersection} function as
\begin{equation*}
I_l(\X,\Y)=\sum_{k=1}^{d_l} \min[\gamma_{kl}(\X),\gamma_{kl}(\Y)],
\end{equation*}
where $\gamma_{kl}(\Y)= \sum_{i \in \SSS'_{kl}}\alphab(\y_i) $, $\gamma_{kl}(\X)= \sum_{i \in \SSS_{kl}}\alphab(\x_i) $, and where the min operator is applied componentwise. The $j$-th
coordinate of $I_l(\X,\Y) $ measures the number of matches between $\X$
and $\Y$ corresponding to the atom $\d_j$ of the dictionary,
measured as the number of points from~$\X$ and~$\Y$ that have nonzero
codes, and fall in the same cell.  These points match in
the code space because they have a nonzero element in the same spot,
and they also match spatially, because they fall into the same cell.
 
By construction, the matches found at level $l$ of the pyramid also
include the matches found at level $l + 1$. The number of new matches
found at level $l$ is thus $I_l(\X,\Y)-I_{l+1}(\X,\Y)$ for $l =
 0,\ldots,L-1$. This suggests the following {\em pyramid match
kernel}~\citep{GrDa05} to compare two images:
 \begin{equation*}
    \begin{split}
       K(\X,\Y) & =I_L(\X,\Y)+\sum_{l=0}^{L-1}\frac{1}{2^{L-l}}[I_l(\X,\Y)-I_{l+1}(\X,\Y)] \\
                  & =\frac{1}{2^L}I_0(\X,\Y)+\sum_{l=0}^{L}\frac{1}{2^{L-l+1}}I_l(\X,\Y),
    \end{split}
 \end{equation*}
 where the weight $\frac{1}{2^{L-l}}$, which is inversely proportional to the cell width at
 level $l$ is used to penalize matches found in larger cells because they
 involve increasingly dissimilar features.  It is easily shown that $K$ is a
 positive-definite kernel, which allows using the machinery of kernel methods
 and reproducing kernel Hilbert spaces~\citep{Shawe-Taylor2004}.

 Like the histogram intersection function, the pyramid match kernel
 can be interpreted as measuring the number of matches between
 $\X$ and~$\Y$ in both feature space and the spatial domain. Contrary
 to the previous cases discussed in this presentation, where all pairs
 of matching features were counted, a  point of $\X$ may
 only match  a unique point of~$\Y$ in this case.
 
 \paragraph{Feature space pooling.}
 The pyramid match kernel was originally proposed by~\citet{GrDa05} as a method for matching images (or equivalently
 counting approximate correspondences) in {\em feature space}, without
 retaining any spatial information. With spatial pyramids, \citet{lazebnik2006} argued that it might be better suited
 to encoding spatial information in the two-dimensional image, using
 instead traditional vector quantization based on the K-means algorithm to
 handle matching in the high-dimensional feature space. 
 
 The two approaches can be thought of as performing a {\em local} form
 of pooling, over bins defined in feature space in~\citet{GrDa05}, or
 over cells defined in image space in~\citet{lazebnik2006}. In practice,
 the latter method usually gives better results for image
 categorization, but it may be interesting to combine the two when the
 feature codes are obtained using sparse coding.  This is precisely
 what~\citet{BRFPL11} have proposed to do \citep[see also][for related work that will be partly detailed  in the next section]{gao2010,gao2013,wang2010locality,YYH10,ZYZTJH10,koniusz2011}. In 
 this approach, a dictionary is learned on the training data, and the
 corresponding sparse codes are then clustered using K-means. 
 Let $\SSS_{klj}$ now denote the index set of features that fall in cell number $k$
 of the image at level $l$ of the pyramid, and in the Voronoi cell
 number $j$ of the sparse code space, the unnormalized histogram
 $\gamma_{kl}(\X)$ of spatial pyramid is simply replaced by 
 $\gamma_{klj}(\X)=\sum_{i \in \SSS_{klj}}
 \alphab(\x_i)$, resulting in a final descriptor
 $\gammab(\X)$ of size $pdK$ where $K$ is the number of centroids
 used by the K-means algorithm.
 Combined with max pooling, this method indeed gives very good results
 for image categorization on standard benchmarks such as Caltech 101.

   \section{The botany of sparse feature coding}\label{sec:visual_botany}
   To address the limitations of the original dictionary learning formulation
of~\citet{field1996,olshausen1997} for feature encoding, many different
variants have been proposed in the literature. In this section, we go through a
few of them that have gained a significant amount of attention and have been
reported to improve upon the original codebook learning approach presented in
the previous section.

\paragraph{Local coordinate coding.}
\citet{yu2009nonlinear} have proposed a significantly different view of the
dictionary learning problem than the one presented so far in this monograph,
where the learned dictionary elements are interpreted as anchor points on a
nonlinear smooth manifold.\footnote{Note that the definition of ``manifold'' from~\citet{yu2009nonlinear} slightly differs
   from the traditional one. More precisely, they define a ``manifold'' $\MM$ as a subset of~$\Real^m$ such there exists a constant $C$ and ``dimension'' $p$ such that for all~$\x$ in~$\MM$, there exists $p$ vectors~$\D(\x) = [\d_1(\x),\ldots,\d_p(\x)]$ in~$\Real^{m \times p}$ such that
   for all~$\x'$ in~$\Real^m$,
\begin{displaymath}
   \min_{\alphab \in \Real^p} \left\| \x'- \x - \D(\x)\alphab\right\|_2  \leq C \|\x'-\x\|_2^2.
\end{displaymath}
} More precisely, these anchor points define a local
coordinate system to represent data points, \eg, natural image patches or local
descriptors. To make this interpretation relevant,~\citet{yu2009nonlinear} have introduced the concept of~\emph{locality}
for the dictionary elements, encouraging them to be
close to data points. In practice, the resulting formulation, called ``local
coordinate coding'', consists of optimizing the following cost function:
\begin{equation}
   \min_{ \D \in \Real^{m \times p}, \A \in \Real^{p \times n}}  \frac{1}{n}\sum_{i=1}^n \left(\frac{1}{2}\|\x_i-\D\alphab_i\|_2^2 + \lambda\sum_{j=1}^p \|\x_i-\d_j\|_2^2 \, |\alphab_i[j]|\right), \label{eq:lcc}
\end{equation}
where, as usual, $\X=[\x_1,\ldots,\x_n]$ in~$\Real^{m \times n}$ is the
database of training signals, $\A=[\alphab_1,\ldots,\alphab_n]$ is the matrix
of sparse coefficients, and~$\D=[\d_1,\ldots,\d_p]$ is the dictionary. In contrast
to the classical dictionary learning formulation, the
regularization function is a weighted $\ell_1$-norm, where the quadratic weights 
$\|\x_i-\d_j\|_2^2$ encourage the selection of dictionary elements that are
close to the signal~$\x_i$ in Euclidean norm. Optimizing~(\ref{eq:lcc}),
meaning finding a stationary point since the problem is nonconvex, can be
achieved with alternate minimization between~$\D$ and~$\A$, which is a classical
strategy for dictionary learning (see Section~\ref{chapter:optim}).

Note that the link between~(\ref{eq:lcc}) and the manifold assumption is not
obvious at first sight. In fact, \citet{yu2009nonlinear} show
that~(\ref{eq:lcc}) can be interpreted as a practical heuristic for a slightly
different formulation that has precise theoretical guarantees, namely an
approximation bound for modeling the supposedly existing manifold. Later, the
locality principle was revisited by~\citet{yu2010improved} with an improved
approximation scheme, and by~\citet{wang2010locality} with a practical feature
encoding scheme.

The latter approach, dubbed ``locality-constrained linear coding (LLC)'',
exploits the locality principle in a simple way to select a pre-defined number
$K < p$ of non-zero coefficients for the codes~$\alphab_i$ instead of using
the~$\ell_1$-regularization.  Given a fixed dictionary~$\D$ and a
signal~$\x_i$, the LLC scheme forces the support of~$\alphab_i$ to correspond
to the $K$-nearest dictionary elements to~$\x_i$. The value of the non-zero
coefficients are then computed in analytical form without having to solve a
sparse regression problem.  Recent reviews and benchmarks have shown that such
a coding scheme is competitive for image classification
tasks~\citep{sanchez2013}.

\paragraph{Laplacian sparse coding.}
We have seen in Section~\ref{sec:image_other}  
with the non-local sparse models that a successful idea for image restoration
is to encourage similar signals to share similar sparse decomposition patterns. 
For visual recognition, \citet{gao2010,gao2013} have shown that such a principle can 
be also useful. Given a training set~$\X=[\x_1,\ldots,\x_n]$ of local descriptors,
their formulation called ``Laplacian sparse coding'' consists of minimizing
a regularized dictionary learning objective function
\begin{equation*}
   \min_{ \D \in \CC, \A \in \Real^{p \times n}}  \frac{1}{n}\sum_{i=1}^n \left(\frac{1}{2}\|\x_i-\D\alphab_i\|_2^2 + \lambda\|\alphab_i\|_1 \right) + \mu \sum_{i,j} W_{ij} \|\alphab_i-\alphab_j\|_2^2, \label{eq:laplaciansc}
\end{equation*}
where $\W = [W_{ij}]$ in $\Real^{n \times n}$ is a similarity matrix; $W_{ij}$
should be typically large when~$\x_i$ and~$\x_j$ are similar according to some
appropriate metric, encouraging therefore~$\alphab_i$ to be close to~$\alphab_j$
in Euclidean norm. The terminology of ``Laplacian'' comes from the machine
learning literature about semi-supervised learning where model
variables sit on a graph~\citep{belkin2003,belkin2004} and where
such regularization functions are used.
\citet{gao2010,gao2013} further discuss the
choice of a good similarity matrix~$\W$ when the signals~$\x_i$ are local
descriptors, and experimentally show that the additional regularization yields
better classification results than the traditional sparse coding approach of~\citet{yang2009}.

\paragraph{Convolutional sparse coding.}
In Section~\ref{subsec:matrix_other}, we have presented the convolutional
sparse coding model applied to natural images without any concrete
application. \citet{zeiler2011} have applied such a formulation for
visual recognition with a multilayer scheme inspired from convolutional neural
networks and from other hierarchical models~\citep{serre2005}.

For each layer of the network, the formulation leverages the concept of
``feature maps''---that is, a set of two-dimensional spatial maps that carry
coefficients corresponding to a particular feature type at different locations. A feature
map~$\x$ has typically three dimensions, \eg, $\x$ is in~$\Real^{\sqrt{l}
\times \sqrt{l} \times p}$ for a square feature map of spatial size~$\sqrt{l}
\times \sqrt{l}$ and~$p$ different feature types, but it is sometimes more
convenient to represent~$\x$ as a vector of size~$pl$. It is also convenient
to consider three-dimensional spatial patches in the feature maps, \eg, a patch
of size $\sqrt{e} \times \sqrt{e} \times p$ that contains all information
across feature maps from a spatial window of size $\sqrt{e} \times \sqrt{e}$
centered at a particular location.

Assuming that the parameters of the network have been already learned, one
layer of the hierarchical model processes input feature maps produced by the
previous layer (the first input feature map at the bottom of the hierarchy
being the image itself), and produces an output feature map.  Each layer
performs successively two operations, which are illustrated in
Figure~\ref{fig:hierarchy_convsc}:
\begin{enumerate}
   \item {\bfseries convolutional sparse coding}: the input feature map~$\x$ is
      encoded by using the convolutional sparse coding formulation already
      presented in Section~\ref{subsec:matrix_other}, which we recall now.
      Assuming that a dictionary~$\D$ in~$\Real^{m \times p}$ has been
      previously learned, the feature map $\x$---represented as a vector in~$\Real^l$
      here---is encoded as follows:
      \begin{equation*}
         \min_{\A \in \Real^{p \times l}} \frac{1}{2}\left\|\x - \sum_{k=1}^l \R_k^\top \D \alphab_k\right\|_2^2 + \lambda\sum_{k=1}^l \|\alphab_k\|_1,\label{eq:convolutional2}
      \end{equation*}
      where $\R_k$ is the linear operator that extracts the patch centered at
      the~$k$-th location from~$\x$ and~$\R_k^\top$ positions a small patch at a
      location~$k$ in a larger feature map, using a similar notation as
      in~(\ref{eq:patch_averaging}). This operation produces sparse codes~$\A =
      [\alphab_1,\ldots,\alphab_l]$ in~$\Real^{p \times l}$, with~$p$ different
      coefficients for each position~$k$. Then, the matrix~$\A$ can  
      be interpreted as an intermediate feature map of size~$\sqrt{l} \times
      \sqrt{l}$ with~$p$ channels, as shown in Figure~\ref{fig:hierarchy_convsc}.
   \item {\bfseries three-dimensional max pooling}: The spatial resolution and
      the number of channels of~$\A$ is typically reduced by~$2$ with a
      max-pooling operation already presented in
      Section~\ref{sec:visual_features}  with three-dimensional pooling
      regions.
\end{enumerate}

\pgfdeclarelayer{bottom}\pgfdeclarelayer{middle}\pgfdeclarelayer{top}
\pgfsetlayers{bottom,middle,top}  
\begin{figure}
   \centering
   \begin{tikzpicture}[scale=1.5,every node/.style={minimum size=1cm},on grid]
      \begin{pgfonlayer}{bottom}
         \newcount\mycount
         \foreach \i in {0,1,2,3,4} {
            \mycount=\i
            \multiply\mycount by 3
            \begin{scope}[  
                  yshift=\mycount,every node/.append style={
                  yslant=0.5,xslant=-1,rotate=-10},yslant=0.5,xslant=-1,rotate=-10
               ]
               \coordinate (X\i) at (0.15,0.75);
               \coordinate (G\i) at (1.5,0.45);
               \coordinate (ZA\i) at (1.05,0);
               \coordinate (ZB\i) at (1.35,0);
               \coordinate (ZC\i) at (0.75,0);
               \coordinate (AAA\i) at (0,0);
               \coordinate (AAB\i) at (0,2.4);
               \coordinate (AAC\i) at (2.4,0);
               \coordinate (AAD\i) at (2.4,2.4);
               \coordinate (AA\i) at (0.6,0);
               \coordinate (AB\i) at (0.6,0.9);
               \coordinate (AC\i) at (1.5,0);
               \coordinate (AD\i) at (1.5,0.9);
               \coordinate (EA\i) at (0,0.6);
               \coordinate (EB\i) at (0,0.9);
               \coordinate (EC\i) at (0.3,0.6);
               \coordinate (ED\i) at (0.3,0.9);
               \newcount\prevcount
               \prevcount=\i
               \advance\prevcount by -1
               \ifnum\i>0
               \draw[thick,blue!70] (AA\i) -- (AA\the\prevcount);
               \draw[thick,blue!70] (AB\i) -- (AB\the\prevcount);
               \draw[thick,blue!70] (AC\i) -- (AC\the\prevcount);
               \draw[thick,blue!70] (AD\i) -- (AD\the\prevcount);
               \draw[thick,black] (AAA\i) -- (AAA\the\prevcount);
               \draw[thick,black] (AAB\i) -- (AAB\the\prevcount);
               \draw[thick,black] (AAC\i) -- (AAC\the\prevcount);
               \draw[thick,black] (AAD\i) -- (AAD\the\prevcount);
               \fi
               \fill[white,fill,opacity=.7] (0,0) rectangle (2.4,2.4);
               \draw[step=3mm, gray!70] (0,0) grid (2.4,2.4);
               \draw[black] (0,0) rectangle (2.4,2.4);
               \draw[blue!20,fill] (0.6,0) rectangle (1.5,0.9);
               \draw[blue!90] (0.6,0) rectangle (1.5,0.9);
               \draw[step=3mm, blue!70] (0.6,0) grid (1.5,0.9);
            \end{scope}
         }
         \draw[-latex,thick] (-2.2,1.0) node[below,yshift=2mm]{{\color{black} input maps $\x$}}to[out=90,in=180] (AAB0);
         \draw[-latex,thick] (2.2,0.6) node[right]{{\color{blue!80!black!80} patch $\R_k \x$}}to[out=200,in=-90] (ZA4);
         \draw[-latex,thick] (1.8,2.4) node[right]{{\color{black!80} \small convolutional sparse coding}}to[out=190,in=0] (1.0,1.7);
      \end{pgfonlayer}
      \begin{pgfonlayer}{middle}
         \newcount\mycount
         \foreach \i in {0,1,2,3,4} {
            \mycount=\i
            \multiply\mycount by 3
            \advance\mycount by 55
            \begin{scope}[  
                  yshift=\mycount,every node/.append style={
                  yslant=0.5,xslant=-1,rotate=-10},yslant=0.5,xslant=-1,rotate=-10
               ]
               \coordinate (W\i) at (0.75,0.15);
               \coordinate (BAA\i) at (0,0);
               \coordinate (BAB\i) at (0,1.8);
               \coordinate (BAC\i) at (1.8,0);
               \coordinate (BAD\i) at (1.8,1.8);
               \coordinate (BA\i) at (0.6,0);
               \coordinate (BB\i) at (0.6,0.3);
               \coordinate (BC\i) at (0.9,0);
               \coordinate (BD\i) at (0.9,0.3);
               \coordinate (KA\i) at (1.2,0.6);
               \coordinate (KB\i) at (1.2,1.2);
               \coordinate (KC\i) at (1.8,0.6);
               \coordinate (KD\i) at (1.8,1.2);
               \newcount\prevcount
               \prevcount=\i
               \advance\prevcount by -1
               \ifnum\i>0
               \draw[thick,blue!70] (BA\i) -- (BA\the\prevcount);
               \draw[thick,blue!70] (BB\i) -- (BB\the\prevcount);
               \draw[thick,blue!70] (BC\i) -- (BC\the\prevcount);
               \draw[thick,blue!70] (BD\i) -- (BD\the\prevcount);
               \draw[thick,green!70] (KA\i) -- (KA\the\prevcount);
               \draw[thick,green!70] (KB\i) -- (KB\the\prevcount);
               \draw[thick,green!70] (KC\i) -- (KC\the\prevcount);
               \draw[thick,green!70] (KD\i) -- (KD\the\prevcount);
               \draw[thick,black] (BAA\i) -- (BAA\the\prevcount);
               \draw[thick,black] (BAB\i) -- (BAB\the\prevcount);
               \draw[thick,black] (BAC\i) -- (BAC\the\prevcount);
               \draw[thick,black] (BAD\i) -- (BAD\the\prevcount);
               \fi
               \fill[white,fill,opacity=.7] (0,0) rectangle (1.8,1.8);
               \draw[step=3mm, gray!70] (0,0) grid (1.8,1.8);
               \draw[black] (0,0) rectangle (1.8,1.8);
               \draw[blue!20,fill] (0.6,0) rectangle (0.9,0.3);
               \draw[blue!90] (0.6,0) rectangle (0.9,0.3);
               \draw[green!20,fill] (1.2,0.6) rectangle (1.8,1.2);
               \draw[green!70!black!100] (1.2,0.6) rectangle (1.8,1.2);
               \draw[step=3mm, green!70!black!100] (1.2,0.6) grid (1.8,1.2);
            \end{scope}
         }
         \draw[-latex,thick] (2.2,3.3) node[right]{{\color{blue!80!black!80} sparse vector $\alphab_{k}$}}to[out=180,in=60] (W4);
         \draw[-latex,thick] (-2,2.7) node[below,yshift=2mm]{{\color{black} sparse codes $\A$}}to[out=90,in=180] (BAB0);
         \draw[-latex,thick] (2.2,4.3) node[right]{{\color{black!80} \small 3D max pooling}}to[out=190,in=0] (1.3,4);
      \end{pgfonlayer}
      \begin{pgfonlayer}{bottom}
         \draw[thick,blue!70] (BA0) -- (AA4);
         \draw[thick,blue!70] (BB0) -- (AB4);
         \draw[thick,blue!70] (BC0) -- (AC4);
         \draw[thick,blue!70] (BD0) -- (AD4);
      \end{pgfonlayer}
      \begin{pgfonlayer}{top}
         \foreach \i in {0,1,2,3,4} {
            \mycount=\i
            \multiply\mycount by 3
            \advance\mycount by 105
            \begin{scope}[  
                  yshift=\mycount,every node/.append style={
                  yslant=0.5,xslant=-1,rotate=-10},yslant=0.5,xslant=-1,rotate=-10
               ]
               \coordinate (H\i) at (1.25,0.75);
               \coordinate (CAA\i) at (0,0);
               \coordinate (CAB\i) at (0,1.5);
               \coordinate (CAC\i) at (1.5,0);
               \coordinate (CAD\i) at (1.5,1.5);
               \coordinate (CA\i) at (1.0,0.5);
               \coordinate (CB\i) at (1.0,1.0);
               \coordinate (CC\i) at (1.5,0.5);
               \coordinate (CD\i) at (1.5,1.0);
               \newcount\prevcount
               \prevcount=\i
               \advance\prevcount by -1
               \ifnum\i>0
               \draw[thick,black] (CAA\i) -- (CAA\the\prevcount);
               \draw[thick,black] (CAB\i) -- (CAB\the\prevcount);
               \draw[thick,black] (CAC\i) -- (CAC\the\prevcount);
               \draw[thick,black] (CAD\i) -- (CAD\the\prevcount);
               \draw[thick,green!70!black!100] (CA\i) -- (CA\the\prevcount);
               \draw[thick,green!70!black!100] (CB\i) -- (CB\the\prevcount);
               \draw[thick,green!70!black!100] (CC\i) -- (CC\the\prevcount);
               \draw[thick,green!70!black!100] (CD\i) -- (CD\the\prevcount);
               \fi
               \fill[white,fill,opacity=.7] (0,0) rectangle (1.5,1.5);
               \draw[step=5mm, gray!70] (0,0) grid (1.5,1.5);
               \draw[black] (0,0) rectangle (1.5,1.5);
               \draw[green!20,fill] (1.0,0.5) rectangle (1.5,1.0);
               \draw[green!70!black!100] (1.0,0.5) rectangle (1.5,1.0);
            \end{scope}
         }
         \draw[-latex,thick] (-2,4.3) node[below,yshift=2mm]{{\color{black} output maps $\B$}}to[out=90,in=180] (CAB0);
         \draw[-latex,thick] (2.2,5.2) node[right]{{\color{green!70!black!100} pooled code $\betab_{j}$}}to[out=180,in=60] (H4);
      \end{pgfonlayer}
      \begin{pgfonlayer}{middle}
         \draw[thick,green!70!black!100] (KA4) -- (CA0);
         \draw[thick,green!70!black!100!green!70] (KB4) -- (CB0);
         \draw[thick,green!70!black!100!green!70] (KC4) -- (CC0);
         \draw[thick,green!70!black!100!green!70] (KD4) -- (CD0);
      \end{pgfonlayer}
   \end{tikzpicture}
   \caption{Illustration representing one layer of the hierarchical
   convolutional sparse coding model of~\citet{zeiler2011}. The vector~$\x$ denotes the
input feature maps, which are encoded into sparse codes
$\A=[\alphab_{k}]_{k=1}^l$ with the convolutional sparse coding formulation.
The output feature maps $\B=[\betab_j]_{j=1}^{l'}$ are obtained after a
three-dimensional max pooling step, reducing the spatial resolution and the
number of maps from~$p$ to~$p'$.}\label{fig:hierarchy_convsc}
\end{figure}
\pgfsetlayers{main}  

\paragraph{Separable sparse coding.}
A variant of the convolutional sparse coding model has also been investigated
by~\citet{rigamonti2013}, where the dictionary elements are
spatially separable, or partially separable. More precisely, since a dictionary
element~$\d_j$ in~$\Real^m$ represents a two-dimensional square patch, it is
possible to reorganize its entries as a matrix~$\mat(\d_j)$ in~$\Real^{\sqrt{m} \times
\sqrt{m}}$. Then, a dictionary element~$\d_j$ is said to be separable when there
exist some vectors $\u_j$ and~$\v_j$ in~$\Real^{\sqrt{m}}$ such that
$\mat(\d_j) = \u_j \v_j^\top$. In other words, separability simply means that the
matrix~$\mat(\d_j)$ is rank one. 

The motivation for looking for separable dictionary elements is to accelerate
the computation of the inner products~$\d_j^\top \R_k\x$ for all overlapping patches~$\R_k\x$
from an input image~$\x$ in~$\Real^l$. This step indeed dominates the computational cost
of reconstruction algorithms for the convolutional sparse coding model, and thus
it is critical to make it efficient.
The key observation is that computing all products~$\d_j^\top \R_k\x$ is equivalent
to performing a convolution of the two-dimensional filter~$\mat(\d_j)$ on the
input image~$\x$, which naively requires~$O(ml)$ operations.\footnote{Note that
the fast Fourier transform (FFT) could be used to reduce this theoretical
complexity. Unfortunately, the FFT induces a computational overhead that make
it inefficient in practice when convolving a large image with a small filter,
which is the case here.} In the separable
case, we remark that we have~$\d_j^\top \R_k\x = \trace(\mat(\d_j)^\top \mat(\R_k\x)) =
\trace(\v_j \u_j^\top \mat(\R_k\x)) = \u_j^\top \mat(\R_k\x)\v_j$. It is then easy
to see that the inner
products can be obtained by convolving twice the input image~$\x$: first with
the filter~$\u_j$ along the vertical direction, then with~$\v_j^\top$ along the
horizontal direction.  As a result, the complexity drops to $O(\sqrt{m}l)$
operations.

\citet{rigamonti2013} further study these filters for three-dimensional data,
where the gain in complexity is even more important than in the two-dimensional
case. They also show how to learn partially separable filters by using the following
regularization function for the dictionary:
\begin{displaymath}
   \varphi(\D) \defin \sum_{j=1}^p \|\mat(\d_j)\|_*,
\end{displaymath}
where $\|.\|_*$ is the trace norm presented in Section~\ref{sec:l1} As a
result, the penalty~$\varphi$ encourages the matrices~$\mat(\d_j)$ to be low-rank
instead of exactly rank one, meaning that they can be expressed by a sum of
a few rank-one matrices.

Another variant of ``separable dictionary learning'' has also been
proposed at the same time by~\citet{hawe2013} in a non-convolutional setting,
where the dictionary is assumed to be the Kronecker product of two smaller
dictionaries:
\begin{displaymath}
   \D = \B \otimes \C,
\end{displaymath}
where~$\B$ is in~$\Real^{m_b \times p_b}$ and~$\C$ is in~$\Real^{m_c \times
p_c}$.  In other words, the matrix~$\D$ is in~$\Real^{m_b m_c \times p_b p_c}$
and is a block matrix with~$m_b \times p_b$ blocks, such that the~$i,j$-th
block is $\B[i,j]\C$. Similar to~\citet{rigamonti2013}, the motivation
of~\citet{hawe2013} is to make the matrix-vector multiplications~$\D\alphab$
or~$\D^\top \x$ more efficient by exploiting the properties of Kronecker
products~\citep[see][]{golub2012}. However, it is worth noticing that the
concepts of ``separability'' used by~\citet{rigamonti2013} and~\citet{hawe2013}
are not equivalent to each other.

\paragraph{Multipath sparse coding and hierarchical matching pursuit.}
Finally, another model called hierarchical matching pursuit and introduced
by~\citet{bo2011,bo2013} has recently obtained state-of-the-art results in
various recognition tasks.  Similar to the convolutional approach
of~\citet{zeiler2011}, the scheme of~\citet{bo2011} produces for each image a
sequence of feature maps organized in a multilayer fashion, but the coding
scheme is somewhat simpler than in the convolutional sparse coding model.  In a
nutshell, the main differences are the following:
\begin{itemize}
   \item {\bfseries sparse coding of independent patches:} instead of using the
      convolutional model, all patches from an input feature map are coded
      independently;
   \item {\bfseries orthogonal matching pursuit:} instead of using
      the~$\ell_1$-regularization, the coding algorithm is a greedy method,
      which is presented in Section~\ref{sec:optiml0};
   \item {\bfseries contrast normalization:} after each pooling step, patches
      are contrast-normalized, following the procedure described in
      Section~\ref{subsec:preprocess}.
\end{itemize}
Later, the multilayer scheme was improved by~\citet{bo2013} by combining
several networks that have a different number of layers and thus that have
different invariant properties. A simpler version of this scheme with a single
layer has also been shown to be effective for replacing low-level features such
as histograms of gradients~\citep{dalal2005} for object detection
tasks~\citep{ren2013}.

   \section{Face recognition}\label{sec:visual_faces}
   One important success of sparse estimation for classification tasks is face
recognition. \citet{wright2008robust} have indeed proposed a state-of-the-art classifier for
this task; as we will see, however, even though it was first evaluated on face
datasets, the classifier is generic and can be applied to other modalities.
More specifically, let us first consider a set of training signals from~$k$ different
classes, represented by the matrix~$\D=[\D_1,\ldots,\D_k]$ in~$\Real^{m \times
p}$ where the columns of each sub-matrix~$\D_j$ are signals from the~$j$-th
class.  In order to classify a new test signal~$\x$ in~$\Real^m$,
\citet{wright2008robust} have proposed to sparsely decompose~$\x$ onto the
matrix~$\D$, and then measure which one of the sub-matrix~$\D_j$ is the most
``used'' in the decomposition. The precise procedure is given in
Algorithm~\ref{alg:src}. 

Interestingly, this approach called ``sparse-representation based classifier
(SRC)'' is related to other non-parametric machine learning techniques such as
nearest neighbor classifiers that look for the most similar training samples
to~$\x$ before choosing the class label with a voting scheme~\citep[see,
\eg,][]{hastie2009}, or methods looking for the nearest
subspace~\citep{naseem2010linear}, which project the data~$\x$ onto each span
of the submatrices~$\D_j$, before selecting the subspace that best reconstructs
the input data.

\begin{algorithm}[hbtp]
   \caption{Sparse-representation based Classifier (SRC).}\label{alg:src}
   \begin{algorithmic}[1]
      \REQUIRE training instances $\D=[\D_1,\ldots,\D_k]$ in~$\Real^{m \times
      p}$, each~$\D_j$ represents signals from class~$j$; One test signal~$\x$
      in~$\Real^m$.
      \STATE {\bfseries normalization:} make each column of~$\X$ of unit $\ell_2$-norm;
      \STATE {\bfseries sparse decomposition:}
      \begin{equation}
         \alphab^\star \in \argmin_{\alphab \in \Real^p} \left[ \|\alphab\|_1 \st \x=\D\alphab \right]; \label{eq:src}
      \end{equation}
      \STATE {\bfseries classification:}
      \begin{equation}
         \hatj \in \argmin_{j \in \{1,\ldots,k\}} \|\x-\D_j \delta_j(\alphab^\star)\|_2^2, \label{eq:src2}
      \end{equation}
      where~$\delta_j$ selects the entries of~$\alphab^\star$ corresponding to the class~$j$;
      in other words, $\D\alphab^\star = \sum_{j=1}^k \D_j \delta_j(\alphab^\star)$;
      \RETURN the estimated class $\hatj$ for the signal~$\x$.
   \end{algorithmic}
\end{algorithm}

For the problem of face recognition, each matrix~$\D_j$ contains face samples
of the same subject, and the method of~\citet{wright2008robust} has gained a
lot of popularity. However, this approach may suffer from several issues, which
have been addressed by various extensions:
\begin{itemize}
  \item {\bfseries occlusion:} face images in realistic environments often
     contain unwanted sources of variations. Subjects may indeed wear clothes,
     scarves, glasses, which occlude part of the face, and which are not
     necessarily present in the training data. To improve the robustness 
     of the formulation,~\citet{wright2008robust} have introduced an auxiliary
     variable~$\e$ in~$\Real^p$ whose purpose is to model occluded parts. 
     Then, Eq.~(\ref{eq:src}) is replaced by the following
     linear program:
     \begin{displaymath}
        (\alphab^\star,\e^\star) \in \argmin_{\alphab \in \Real^p, \e \in \Real^m} \left[ \|\alphab\|_1 + \|\e\|_1 \st \x=\D\alphab+\e\right],
     \end{displaymath}
     where the~$\ell_1$-norm encourages the variable~$\e$ to be sparse.  Thus,
     one hopes the variable~$\e$ to have non-zero coefficients corresponding to
     occluded areas that are difficult to reconstruct with the training data~$\D$. 
     The selection rule~(\ref{eq:src2}) needs also to be modified, and becomes
     \begin{displaymath}
        \hatj \in \argmin_{j \in \{1,\ldots,k\}} \|\x-\D_j \delta_j(\alphab^\star) - \e^\star\|_2^2.
     \end{displaymath}
  \item {\bfseries image misalignment:} the main assumption of the face
     recognition system of~\citet{wright2008robust} is that a new test image
     can be sparsely decomposed as a linear combination of training images.
     Of course, such an assumption does not hold when the training images are
     not perfectly aligned---that is, when the face features for a subject are
     not localized at the same positions across training samples. To deal with
     that issue, \citet{wagner2012} have proposed dedicated solutions, which
     jointly optimize the variables~$\alphab$ and~$\e$, along with a
     deformation variable whose purpose is to automatically ``realign images''.
\end{itemize}
Among other variants, it is worth mentioning some work using dictionary
learning to build the matrices~$\D_j$ \citep[see,\eg,][]{zhang2010}, and also
the use of random projection features~\citep{wright2008robust}.

   \section{Patch classification and edge detection}\label{sec:visual_edges}
   The capability of dictionary learning for modeling particular types of signals
has been exploited for classification tasks in several ways. Specifically, there
are two main successful strategies; the first one learns one dictionary per
class of signal and compare reconstruction errors obtained by the dictionaries
in the same vein as the SRC classifier of the previous section does.  The other
direction uses sparse codes produced by a single dictionary as a high-level
representation of signals for a subsequent classification procedure.
We successively present these two lines of research, along with some successful
applications.

\paragraph{Finding the dictionary that best represents a class of signals.}
Let us consider a training set of signals~$\X=[\X_1,\ldots,\X_k]$ in~$\Real^{m
\times n}$ where the columns of each sub-matrix~$\X_j$ in~$\Real^{m \times
n_j}$ correspond to signals of the~$j$-th class. Even though dictionary
learning is an unsupervised learning technique, it can be used to model each
class independently by learning a dictionary~$\D_j$ ``adapted'' to the
data~$\X_j$. For instance, it is possible to define~$\D_j$ as the output
of a learning algorithm for optimizing
\begin{displaymath}
   \min_{\D_j \in \CC, \A_j \in \Real^{p \times {n_j}}} \frac{1}{2}\|\X_j - \D_j \A_j\|_{\text{F}}^2 + \lambda \sum_{i=1}^{n_j}\|\alphab_i\|_1,
\end{displaymath}
where $\A_j=[\alphab_1,\ldots,\alphab_{n_j}]$.
Then, a new test signal~$\x$ in~$\Real^m$ is classified according to the rule
\begin{equation}
   \hatj = \argmin_{j \in \{1,\ldots,k\}} \left[\min_{\alphab \in \Real^p}
   \frac{1}{2}\|\x-\D_j\alphab\|_2^2+\lambda\|\alphab\|_1 \right].
   \label{eq:class_aa}
\end{equation}
This simple rule was proposed by~\citet{ramirez2010}, following earlier work
of~\citet{mairal2008c} that focused on the~$\ell_0$-penalty and discriminative
formulations.  Even though the dictionaries are learned in an unsupervised
manner,~(\ref{eq:class_aa}) performs surprisingly well for data that is well
modeled by dictionary learning. For instance, the approach
of~\citet{ramirez2010} essentially builds upon~(\ref{eq:class_aa}) while also
encouraging the dictionaries~$\D_j$ to be mutually incoherent, and obtains
competitive results for digit recognition, namely about~$1.2\%$ error rate
on the MNIST dataset. In contrast, a nonlinear support vector machine (SVM) with
a Gaussian kernel achieves~$1.4\%$ and a $K$-nearest neighbor approach
$3.9\%$~\citep{lecun1998}. Of course, better results have been obtained 
by other techniques on the MNIST dataset, but they are in general not based on
generic classifiers. More precisely, state-of-the-art algorithms typically
encode invariance to image deformations~\citep{ranzato2007,bruna2013}.

Since dictionary learning is supposed to be well adapted to natural image
patches, the classification rule~(\ref{eq:class_aa}) can naturally be applied
to model the local appearance of particular object classes.  For instance, we
show in Figure~\ref{fig:patch_classif} some results obtained with a formulation
that is closely related to~(\ref{eq:class_aa}) for a weakly supervised
learning task~\citep[see][for more details]{mairal2008c}. Because 
the local appearance of bicycles is highly discriminative, the method performs
well at detecting patches that overlap with bicycles in the test images.
\begin{figure}[hbtp]
   \includegraphics[width=0.24\linewidth]{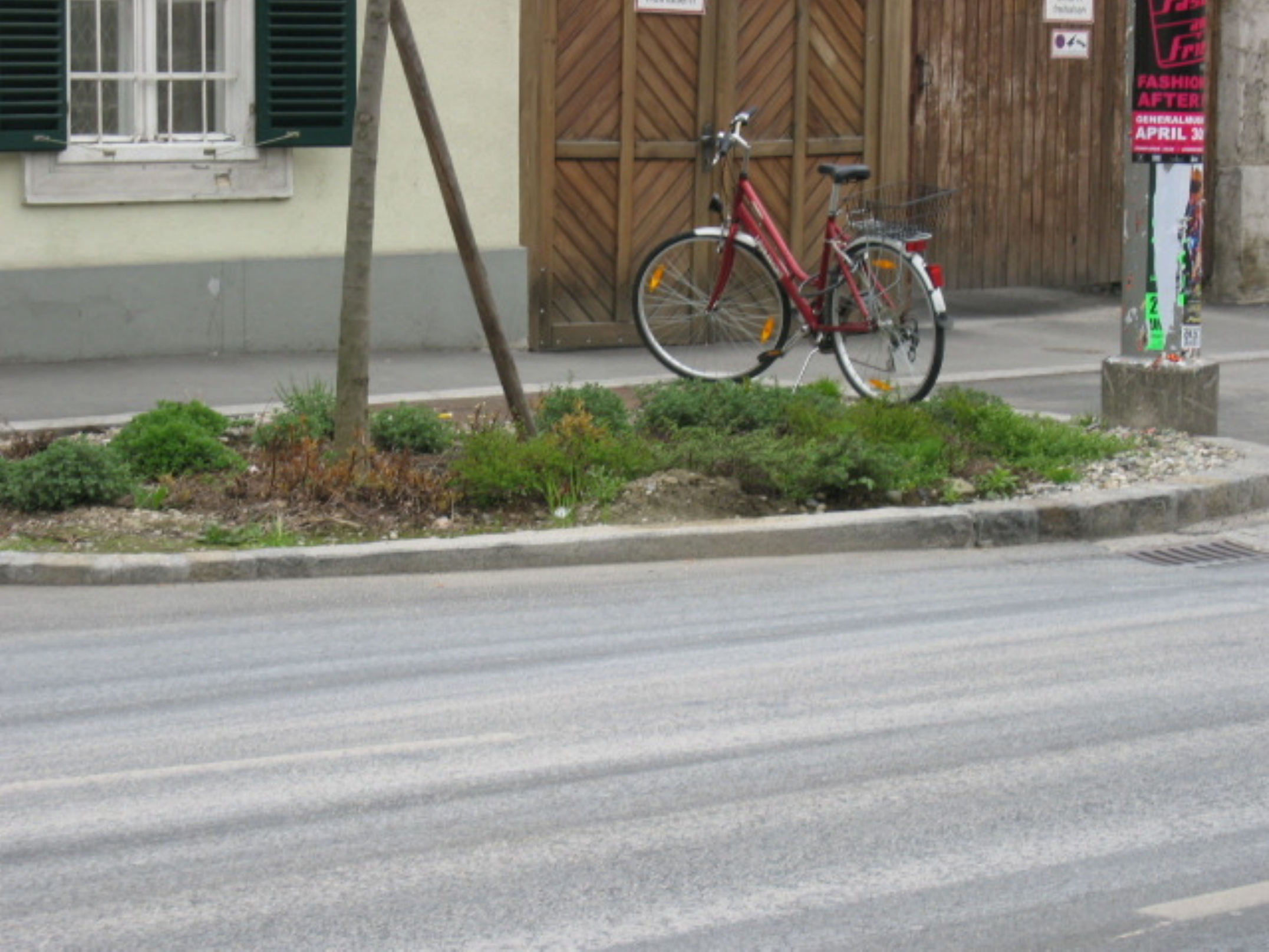} 
   \includegraphics[width=0.24\linewidth]{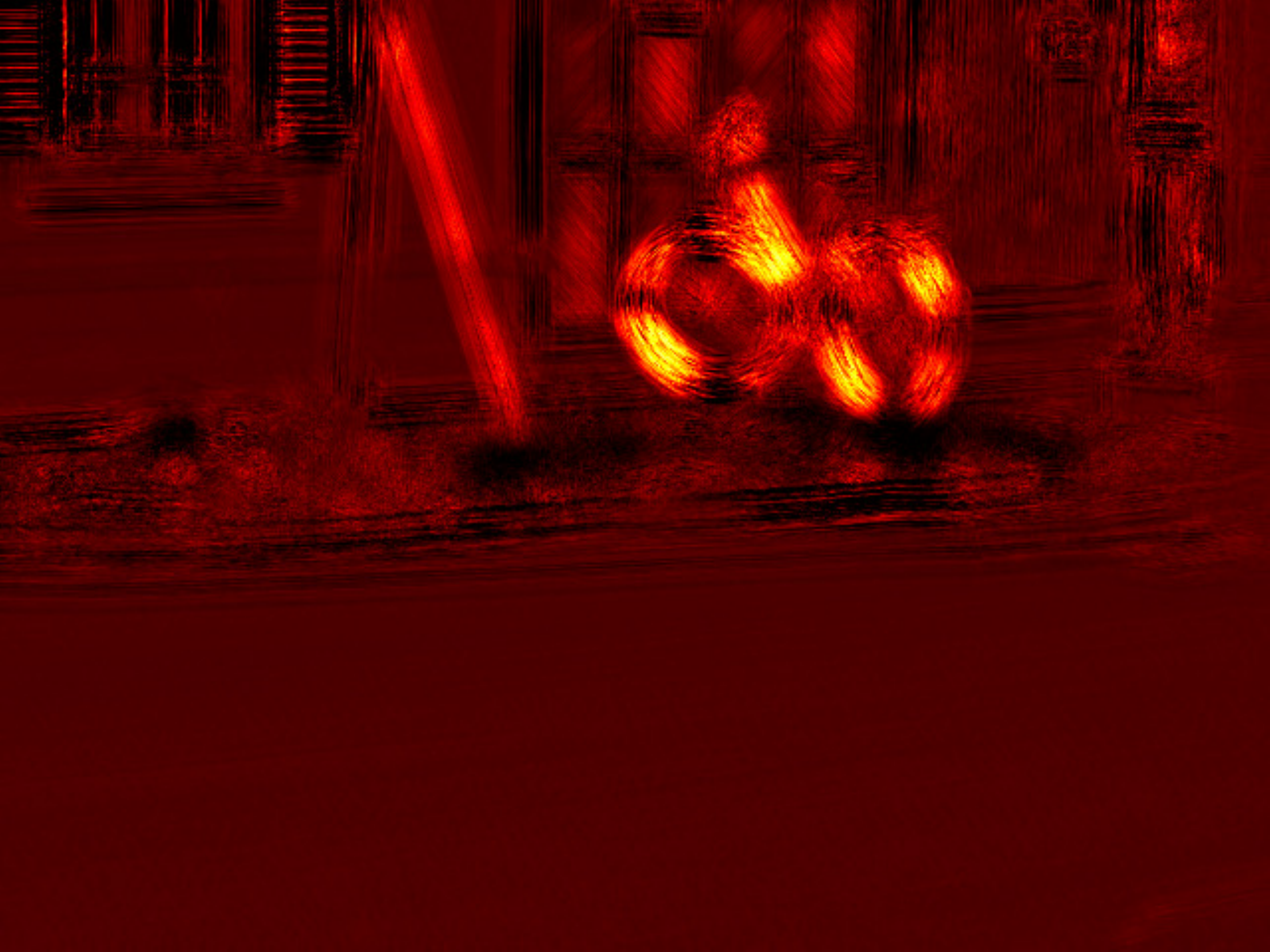} \hfill
   \includegraphics[width=0.24\linewidth]{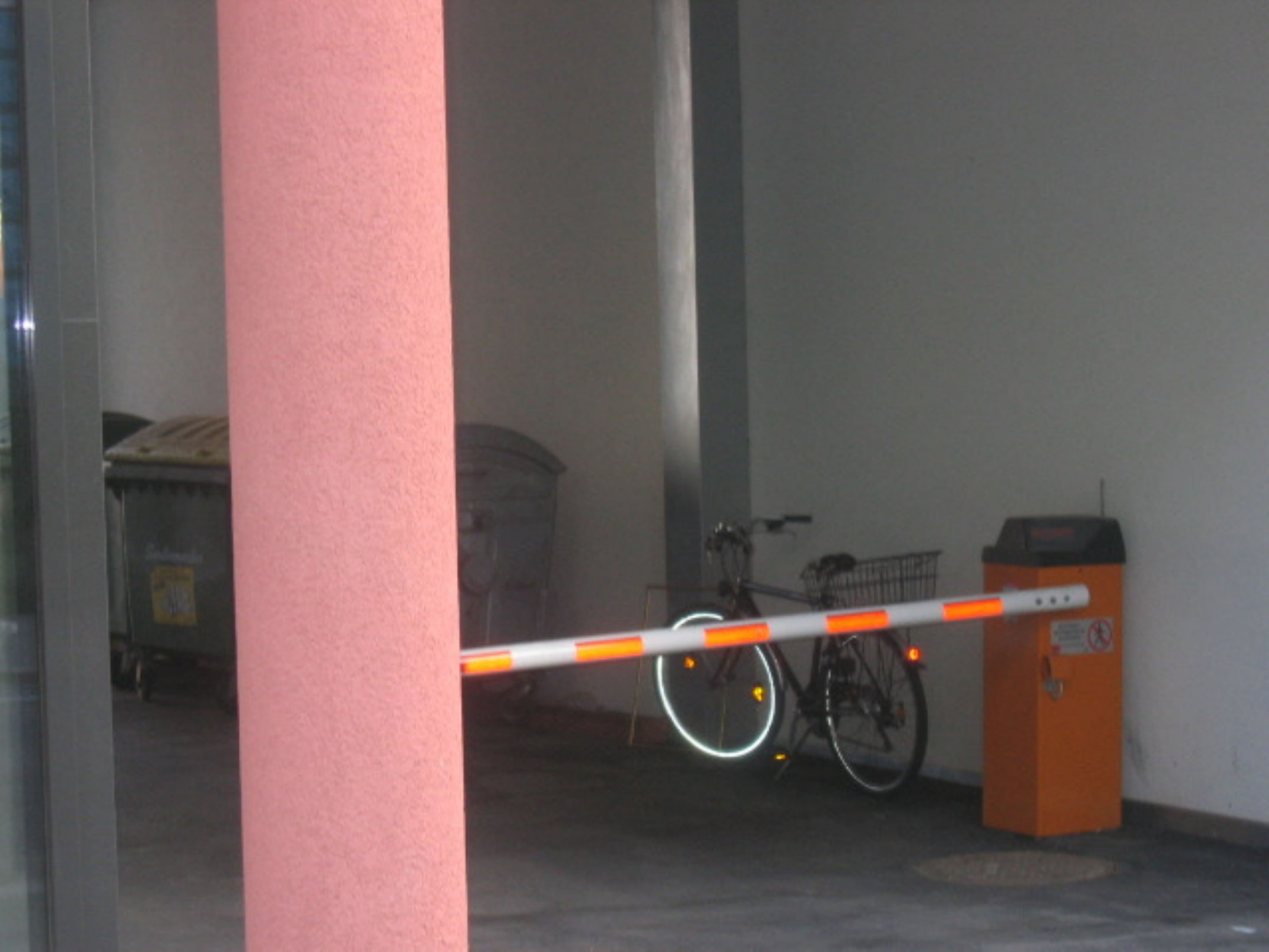} 
   \includegraphics[width=0.24\linewidth]{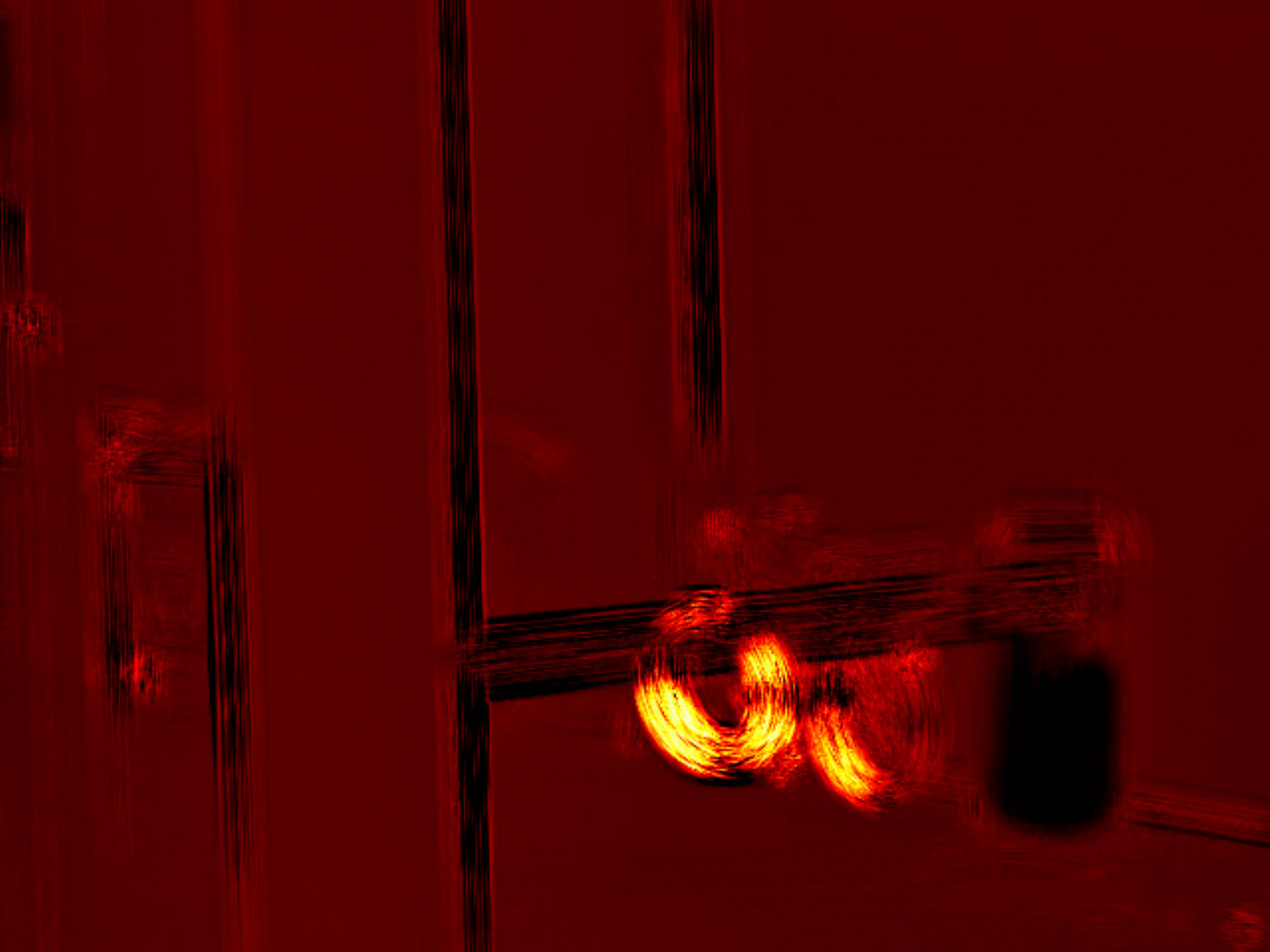}

   \vspace*{0.2cm}
   \includegraphics[width=0.24\linewidth]{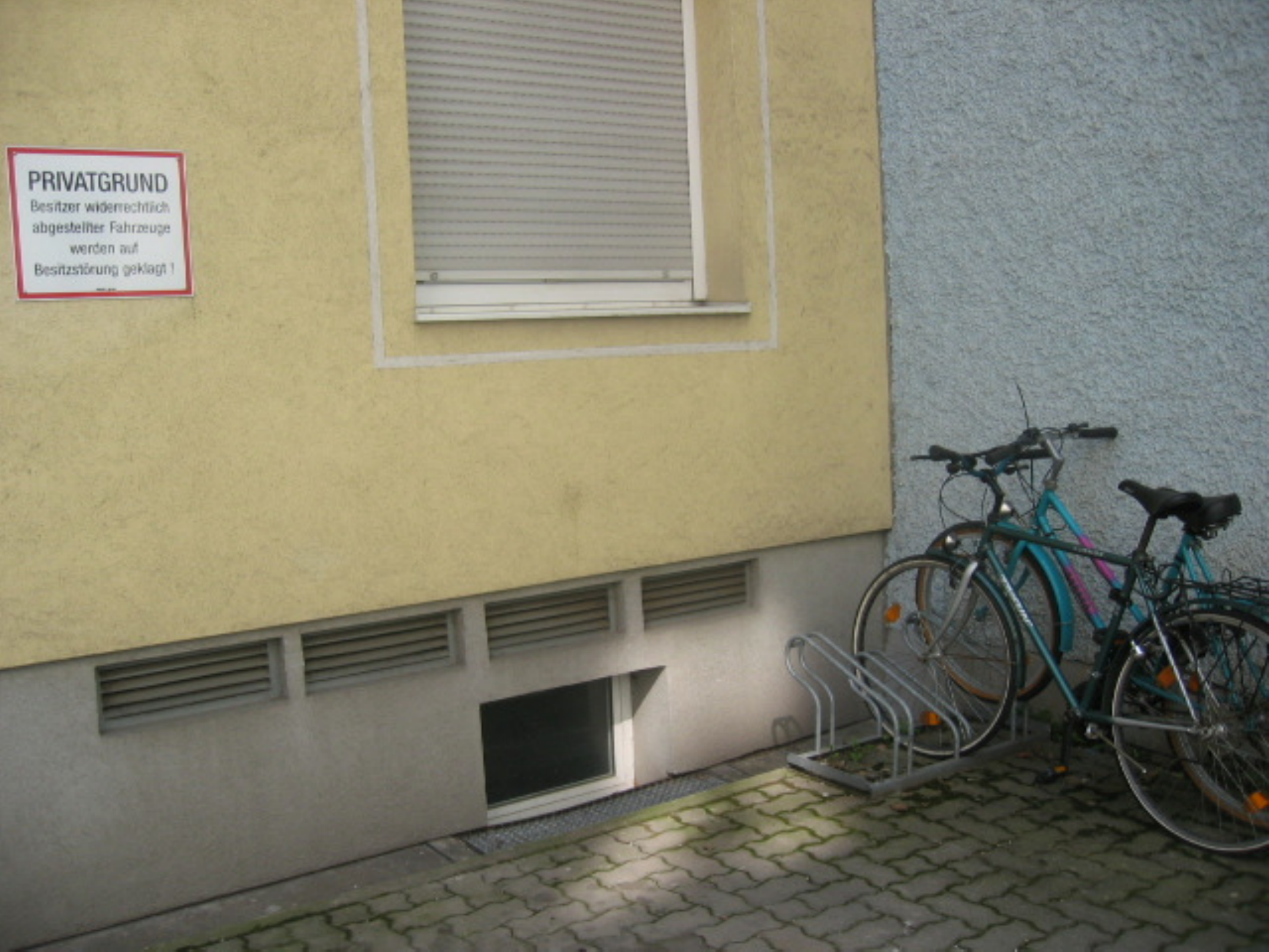} 
   \includegraphics[width=0.24\linewidth]{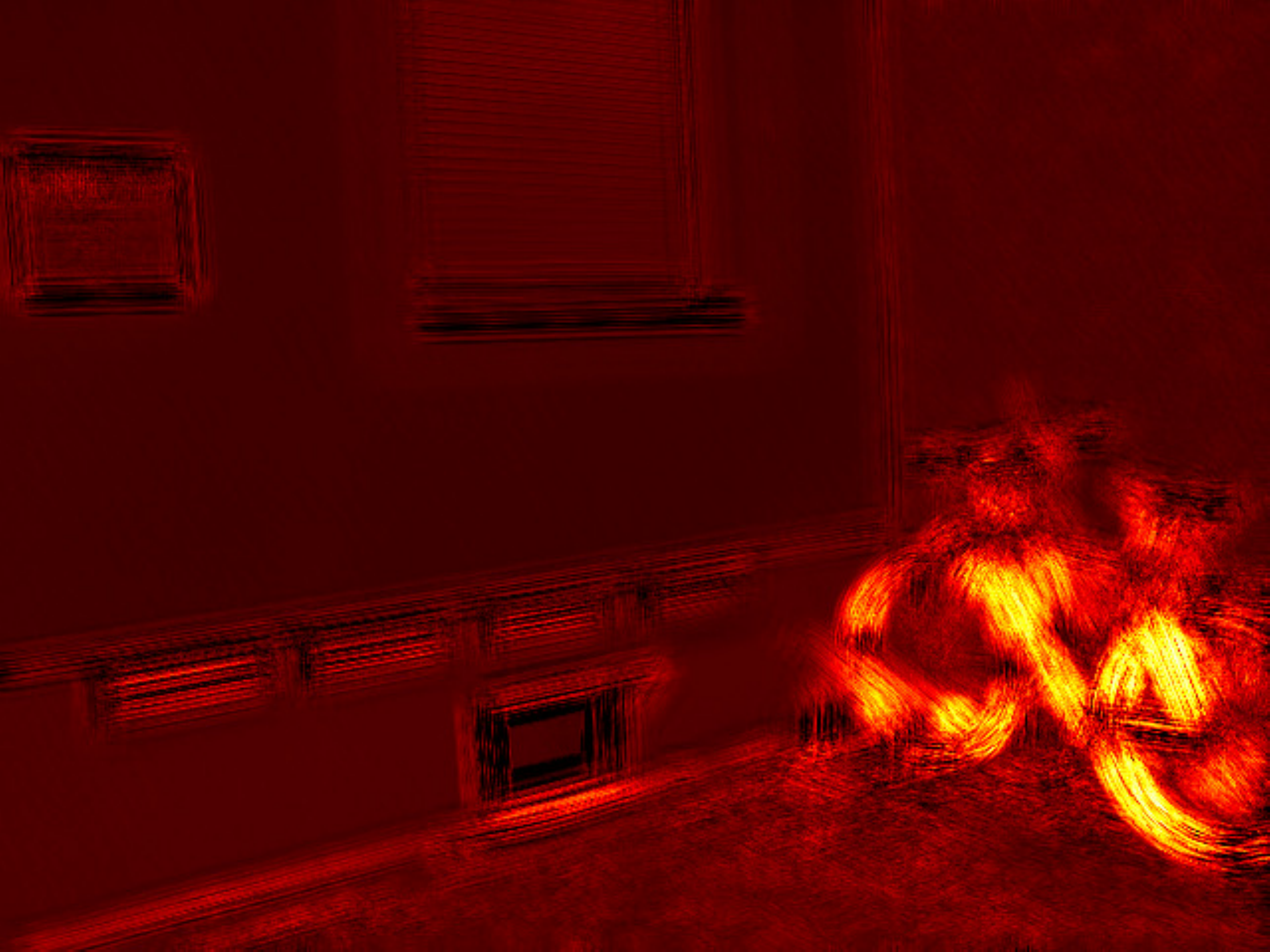} \hfill
   \includegraphics[width=0.24\linewidth]{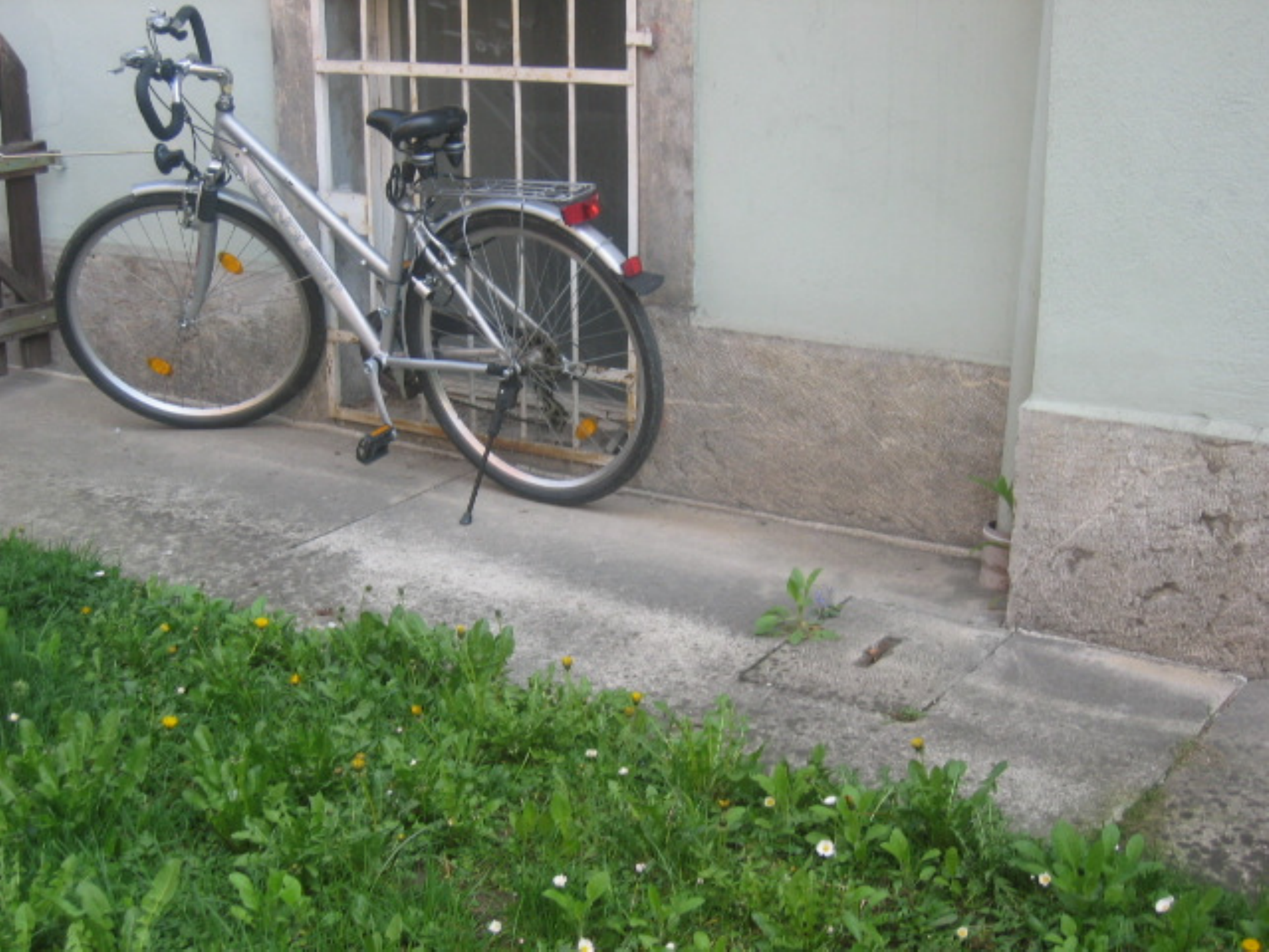} 
   \includegraphics[width=0.24\linewidth]{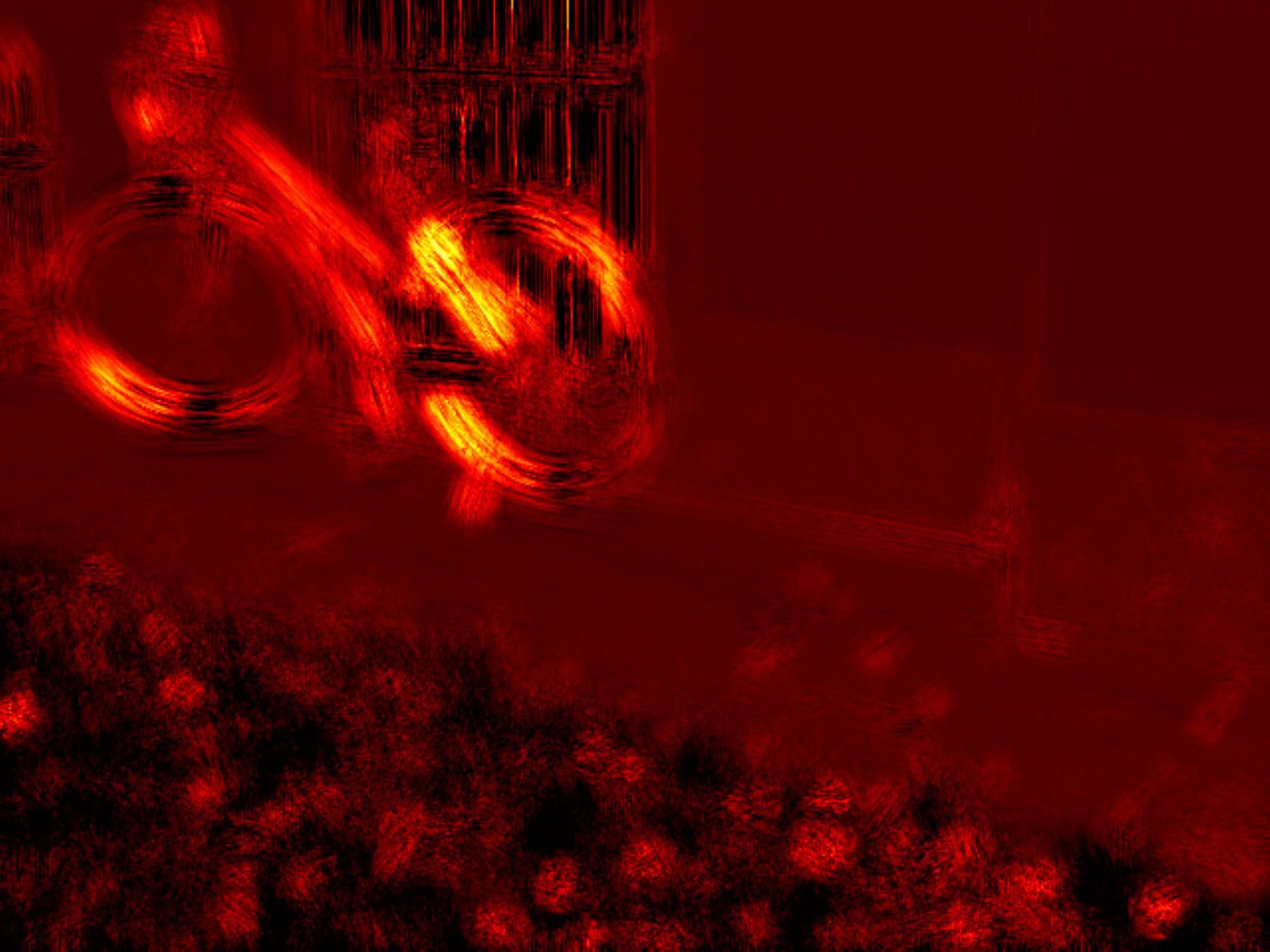} 
   \caption{Results of pixelwise classification of test images
      from~\citet{mairal2008c}. Two dictionaries are learned beforehand, one
      from images containing a bicycle, and one from images corresponding to
      background only. For each of the four test images displayed here, we also
      display a confidence map for the presence of a bicycle. For each pixel, 
      a confidence value is computed for a patch centered at the pixel, by
      comparing the residual of the reconstruction errors obtained with the two
      dictionaries; yellow bright pixel values represent high confidence,
      whereas dark red pixel values represent low confidence. 
   }
\label{fig:patch_classif}
\end{figure}

Another successful application of dictionary learning is edge detection. A
first attempt by~\citet{mairal2008d} is based on the principles described in
this paragraph, and consists of learning two dictionaries, one on patches
centered on edges, and one centered on background. The method was trained on
the manually annotated Berkeley segmentation dataset~\citep{martin2001} and was
shown to perform as well as the dedicated approach called
``Pb''~\citep{martin2004}. Later, state-of-the-art results have been obtained
by~\citet{xiaofeng2012} by essentially combining two main ideas: (i) learning a
single dictionary and exploiting the sparse codes for classification, a
paradigm that will be the focus of the next paragraph, (ii) a dedicated way of
performing multiscale feature pooling, giving more stability and some
invariance to the sparse representations. We refer to~\citet{xiaofeng2012} for all 
details and we present some results obtained with their software package in
Figure~\ref{fig:edges}.\footnote{The software package is available
here: \url{http://homes.cs.washington.edu/~xren}.}
In the next paragraph, we now move to the second paradigm, namely exploiting
the sparse codes as a nonlinear transformation of the input data for a
subsequent linear classifier.

\begin{figure}[hbtp]
   \includegraphics[width=0.24\linewidth]{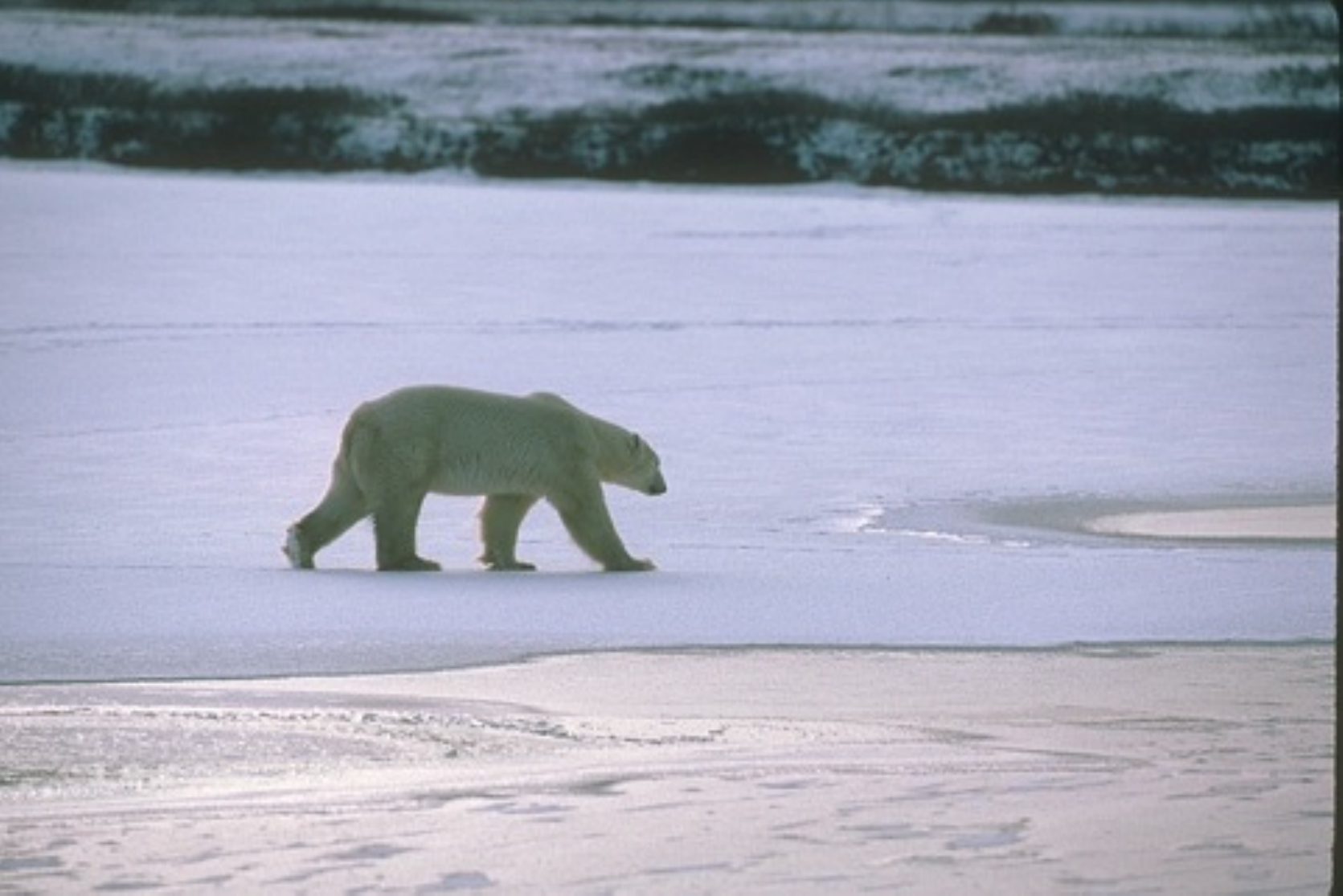} 
   \includegraphics[width=0.24\linewidth]{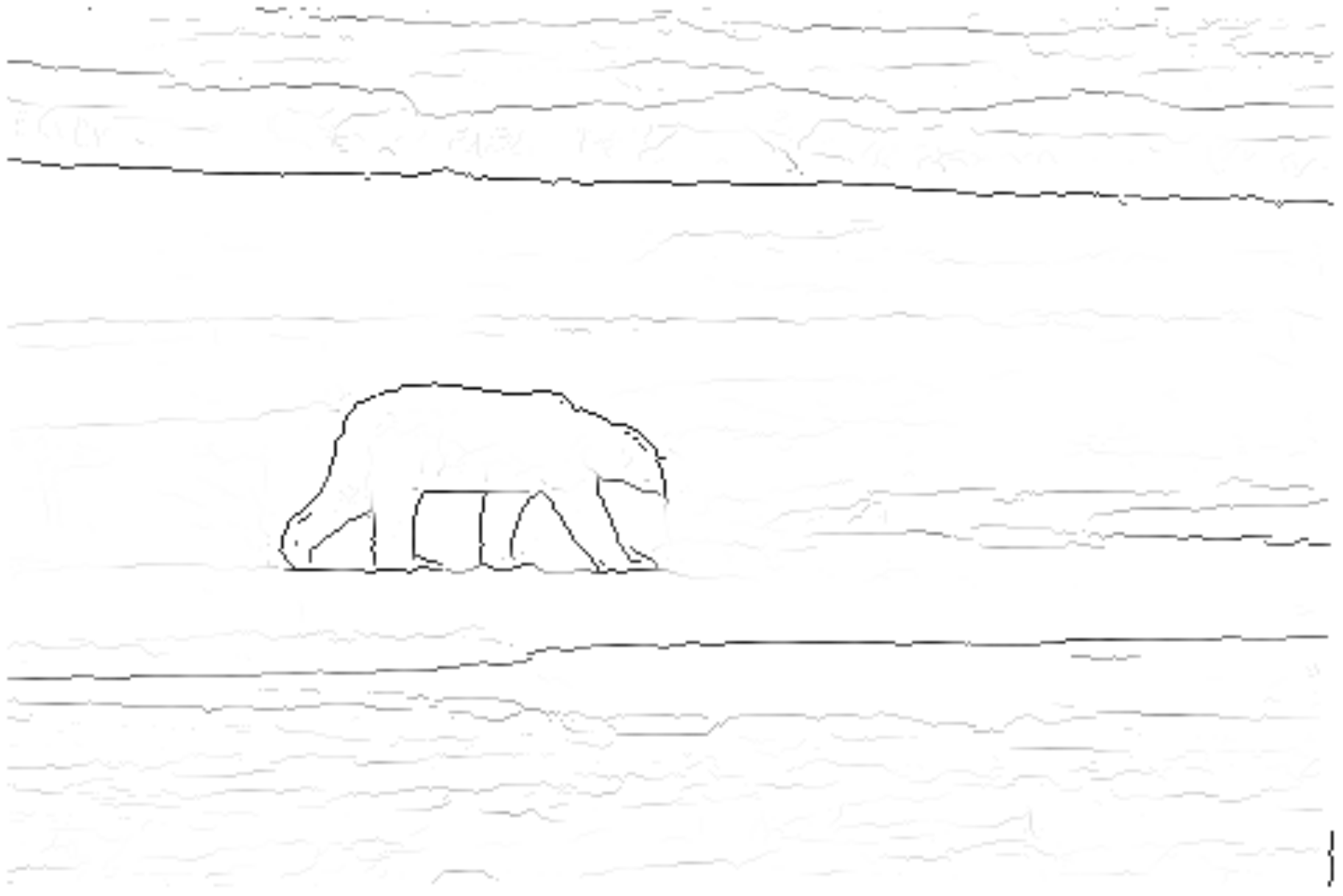} \hfill
   \includegraphics[width=0.24\linewidth]{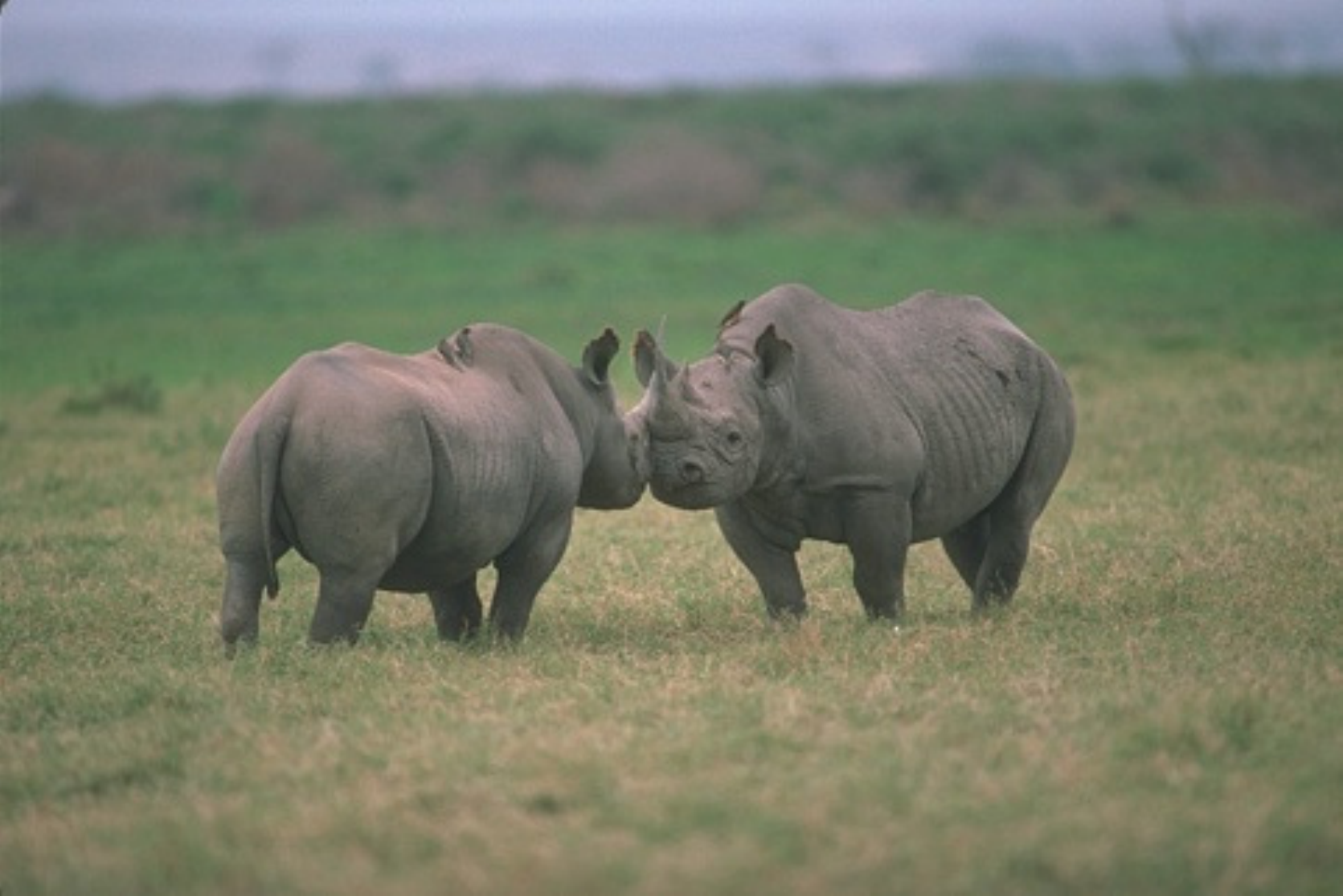} 
   \includegraphics[width=0.24\linewidth]{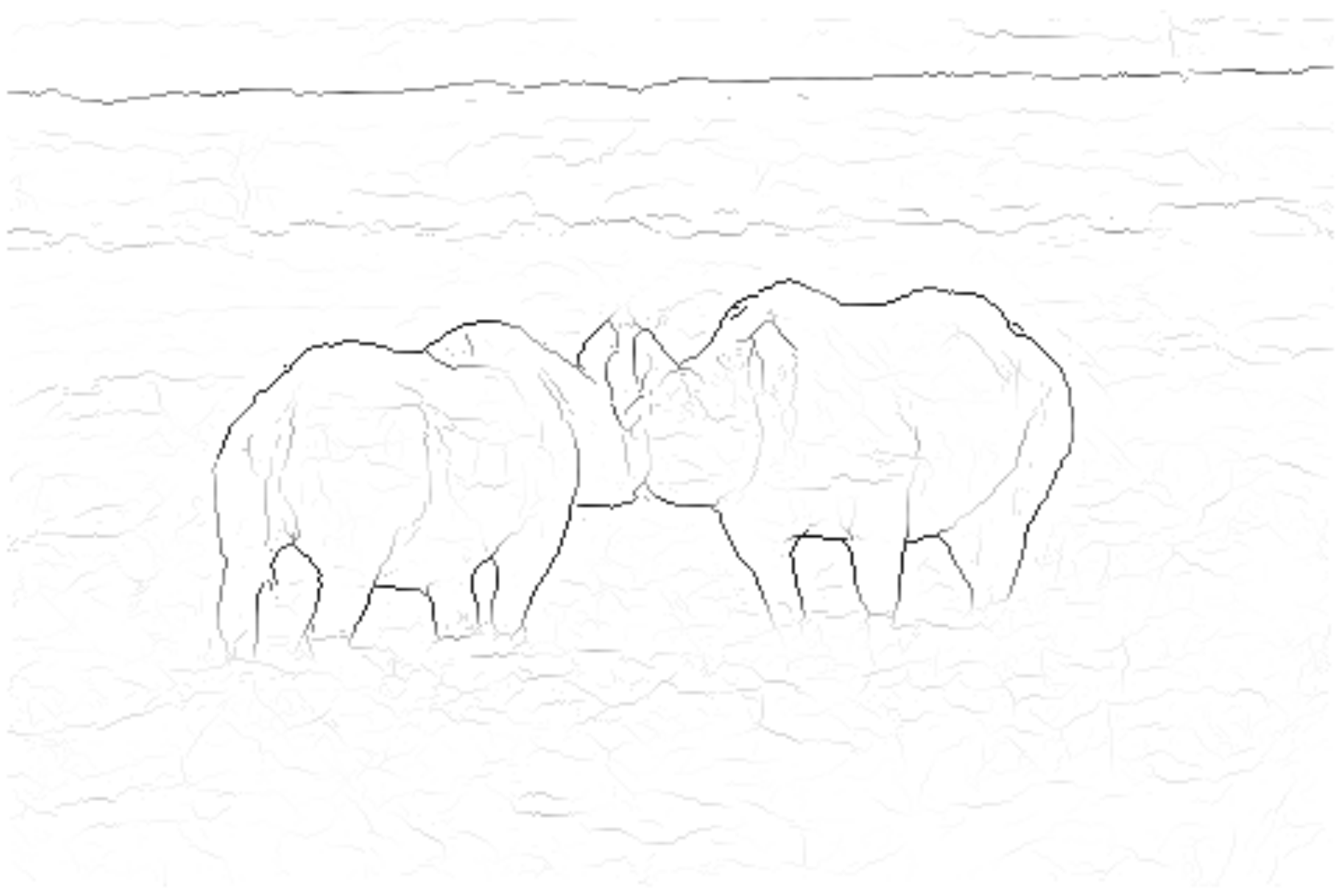}

   \vspace*{0.2cm}
   \includegraphics[width=0.24\linewidth]{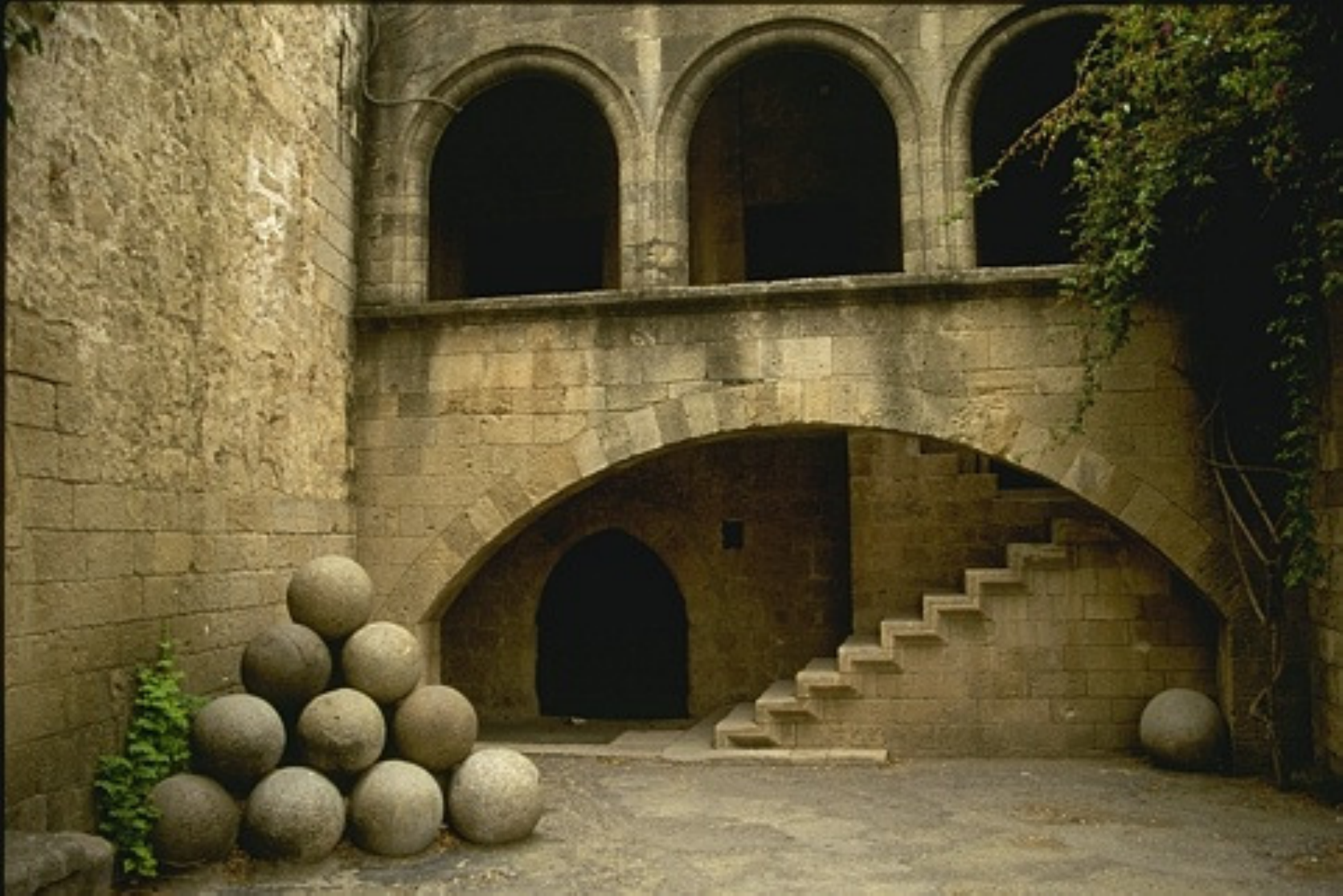} 
   \includegraphics[width=0.24\linewidth]{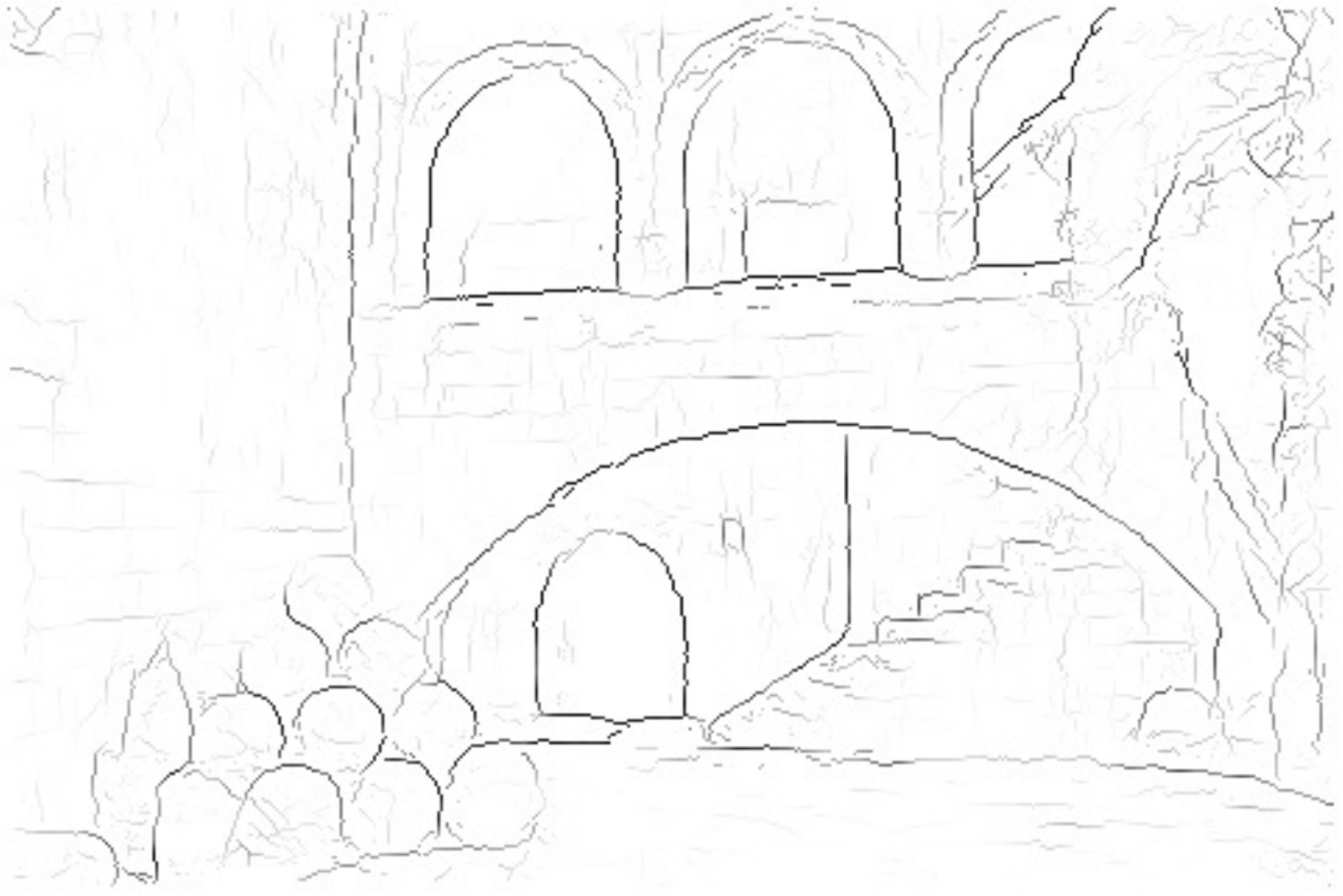} \hfill
   \includegraphics[width=0.24\linewidth]{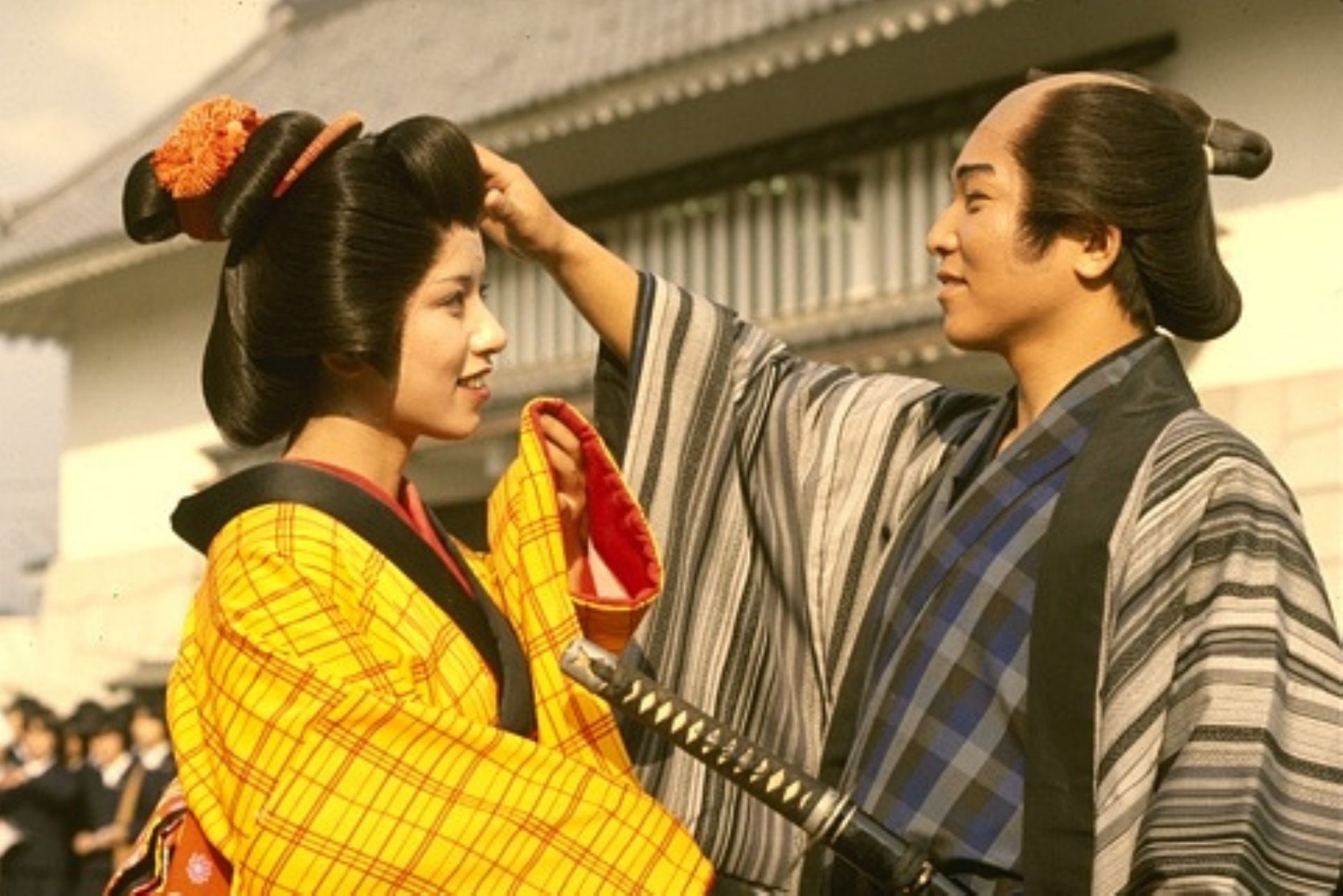} 
   \includegraphics[width=0.24\linewidth]{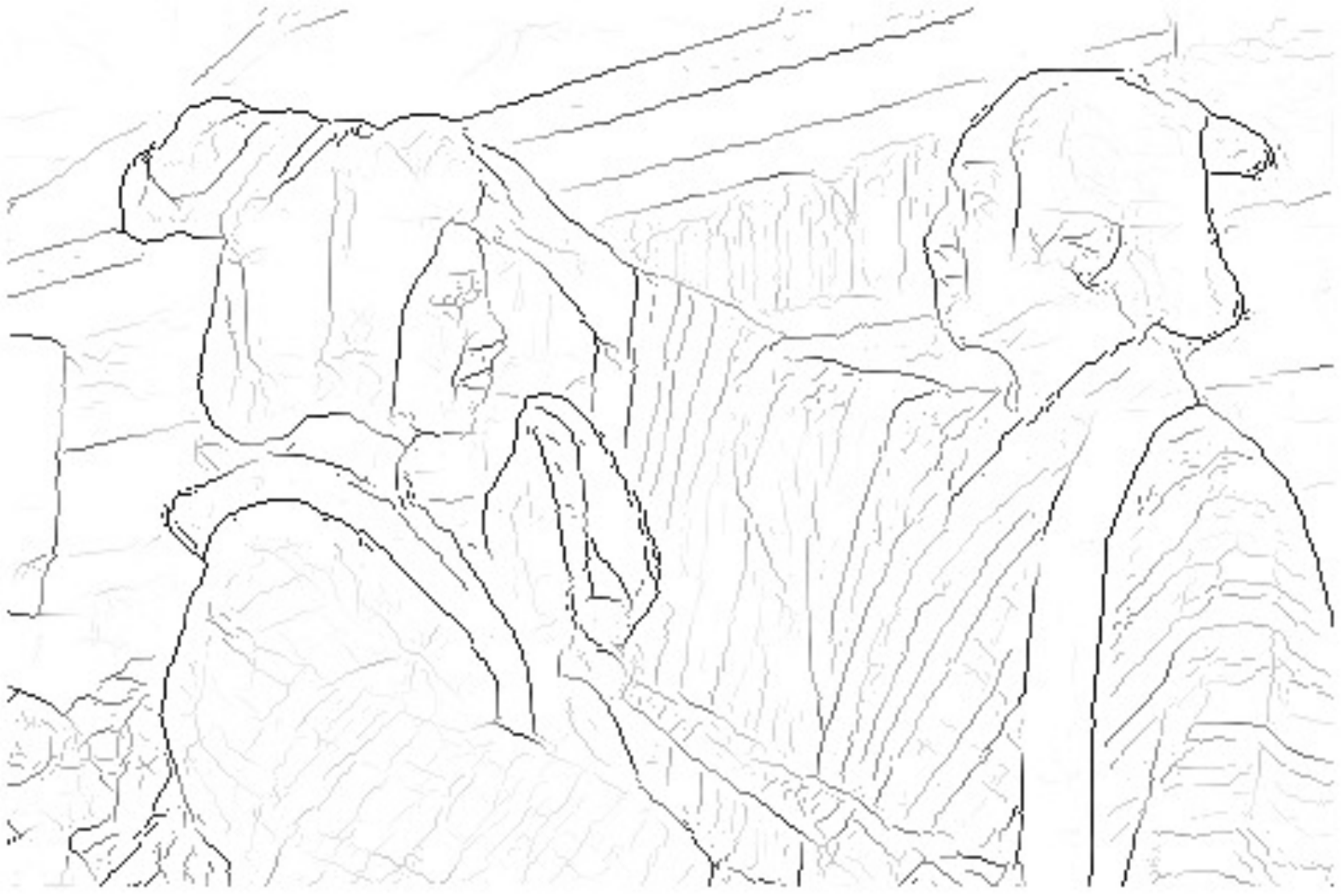} 
   \caption{Edge detection results obtained with the software package
      of \citet{xiaofeng2012} on the Berkeley segmentation dataset
      BSD500 \citep{martin2001}. Four pairs of images are presented. The left
      ones represent input images and the right ones confidence maps for the
      presence of edges. Dark values represent high confidence.
      Best seen by zooming on a computer screen.
   }
\label{fig:edges}
\end{figure}

\paragraph{Using sparse codes for classification.}
Given a signal~$\x$ in~$\Real^m$, the sparse coding principle produces a
sparse vector~$\alphab^\star$ such that $\x \approx \D\alphab^\star$ for some
dictionary~$\D$ in~$\Real^{m \times p}$. The process transforming~$\x$
into~$\alphab^\star$ is highly nonlinear, and the entries of the
vector~$\alphab^\star$ can be interpreted as the ``contributions'' of the
corresponding columns of~$\D$ to the reconstruction of~$\x$.

When the dictionary atoms represent discriminative features for a
classification task, it becomes appealing to use the vectors~$\alphab^\star$ as
new inputs to a classifier, \eg, linear SVM or logistic regression. The
strategy is thus significantly different than the one presented in the previous
paragraph.  The proof of concept was first demonstrated by~\citet{huang2006} with a
pre-defined fixed dictionary and a cost function inspired from linear
discriminant analysis~\citep[see][]{hastie2009}. Later, dictionary
learning was successfully used by~\citet{raina2007} for unsupervised feature
learning as an effective way of exploiting unlabeled data for various
classification tasks with image and text modalities. 

Then, joint cost functions that involve both a discriminative term and a
classical dictionary learning formulation have been proposed several times in
the literature~\citep{rodriguez2008,mairal2008e,pham2008,yang2011fisher}.  Let us
consider a training set~$(\x_i,y_i)_{i=1}^n$, where the vectors $\x_i$
in~$\Real^m$ are input signals and the scalars $y_i$ are associated labels.
The simplest cost functions that combine sparse coding and classification have
the following form
\begin{equation}
   \min_{\substack{\D \in \CC, \W \in \Real^{p \times k} \\  \A \in \Real^{p \times n}}}  \frac{1}{n}\sum_{i=1}^n \left[ \frac{1}{2}\|\x_i-\D\alphab_i\|_2^2 \!+\! \lambda \|\alphab_i\|_1 \!+ \!\gamma L( y_i , \W,\alphab_i) \right]  + \frac{\mu}{2}\|\W\|_{\text{F}}^2, \label{eq:discr_dict}
\end{equation}
where the loss function~$L$ measures how far a prediction based on
$\alphab_i$ is from the correct label~$y_i$. The matrix~$\W$ represents model
parameters for the classifier and the quadratic term
$({\mu}/{2})\|\W\|_{\text{F}}^2$ is a regularizer.  Typically,~$L$ is a convex
function and the prediction model is linear. For example, when the labels~$y_i$
are in~$\{-1,+1\}$ for a binary classification problem, we may want to learn a
linear decision rule parameterized by a vector~$\w$ (meaning the matrix~$\W$ has
a single column), and use one of the following loss functions
\begin{equation}
   L(y,\w,\alphab) \defin \left\{ 
      \begin{array}{ll} 
         \log\left(1+ e^{ -y \w^\top \alphab}\right)  & ~\text{(logistic loss)}, \\
         \max\left(0,1-y \w^\top \alphab\right)  & ~\text{(hinge loss)}, \\
         \frac{1}{2}\left(y - \w^\top \alphab\right)^2  & ~\text{(square loss)}. \\
      \end{array}
      \right. \label{eq:loss}
\end{equation}
For simplicity, we omit the fact that the loss function may involve an
intercept---that is a constant term~$b$ in~$\Real$ such that the linear model
is $y \approx \sign( \w^\top \alphab + b)$ instead of simply~$y \approx
\sign(\w^\top \alphab)$.
Depending on the chosen loss function~$L$, the
formulation~(\ref{eq:discr_dict}) can combine dictionary learning with a
linear SVM, or a logistic regression classifier. 
We present the three loss functions~(\ref{eq:loss}) in Figure~\ref{fig:loss}.
The logistic and hinge loss are very similar; both of them are asymptotically linear and encourage the signs
of the products~$\w^\top \alphab_i$ to be the same as~$y_i$.
In all cases, it is possible
to iteratively decrease the value of the cost function~(\ref{eq:discr_dict})
and obtain a stationary point by alternate minimization between the
variables~$\D,\A,\W$. 
As for the classical dictionary learning formulation, it is in fact
impossible to find the global optimum;
the problem is indeed convex with respect to each variable when
keeping the other variables fixed, but it is not jointly convex. 
\begin{figure}
   \centering
    \includegraphics[page=4]{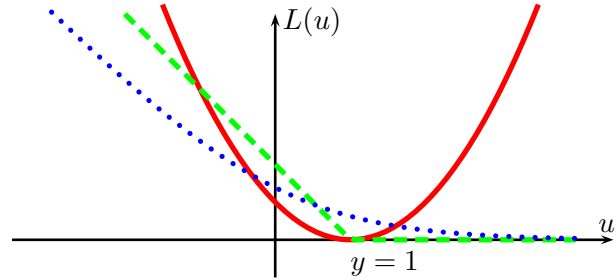}
\caption{Visualization of three loss functions for a positive label~$y=1$. The
   plain red curve represents the square loss $L(u)=(1/2)(y-u)^2$, the dotted blue
one represents the logistic loss $L(u)=\log(1-e^{-y u})$, and the green dotted
one represents the hinge loss $L(u)=\max(0,1-y u)$.}\label{fig:loss}
\end{figure}

The motivation of the formulation~(\ref{eq:discr_dict}) is to encourage the
dictionary to provide sparse decomposition vectors~$\alphab$ that are good for
discrimination. It does so by encouraging them to have a small classification
error on the training set measured by a convex loss function from machine
learning. There are however two possible caveats with such an approach, which
we discuss now.

\subparagraph{The sparse decomposition patterns may be unstable.}
The first caveat is the lack of stability of the estimator provided by sparse
decomposition algorithms. For instance, the solution of the Lasso can
significantly change after a small perturbation of the input data---that is,
the vector 
\begin{equation}
   \alphab^\star(\x,\D) \defin \argmin_{\alphab \in \Real^p} \frac{1}{2}\|\x-\D\alphab\|_2^2 + \lambda\|\alphab\|_1, \label{eq:alphastar}
\end{equation}
assuming the optimization problem has a unique solution, can drastically change
for small variations of~$\x$, or even small variations of~$\lambda$ in pathological
cases~\citep{meinshausen2010,mairal2012b}. For classification tasks, it is
unfortunately often desirable to have stable representations, and thus
improving this aspect may be important~\citep{bruna2013}.
Several strategies have been proposed to deal with such an issue:
\begin{itemize}
   \item {\bfseries promoting incoherent dictionaries:} when~$\D^\top\D$ is close
      to the identity matrix, the dictionary is called ``incoherent''. When it
      is the case, sparse reconstruction algorithms may benefit from support recovery
      guarantees~(see Section~\ref{sec:compressedsensing}).  Following this
      insight, \citet{ramirez2009sparse} have shown that promoting incoherent
      dictionaries may lead to better generalization properties. Later, this
      idea was also found useful in other classification
      schemes~\citep{bo2013}.
   \item {\bfseries pooling features:} when a set of signals that are close to 
      each other are available, \eg, a set patches that significantly overlap with each other,
      feature pooling on the set of corresponding sparse codes
      can lead to more stable, but less sparse representations. For
      instance,~\citet{ren2013,bo2013} adopts such a strategy in an
      effective image representation for object detection.
   \item {\bfseries replacing~$\ell_1$ by the elastic-net:} when replacing
      the~$\ell_1$-norm in~(\ref{eq:alphastar}) by the Elastic-net penalty
      $\|\alphab\|_1+(\gamma/2)\|\alphab\|_2^2$, it is possible to show that
      $\alphab^\star$ is Lipschitz continuous~\citep{mairal2012}, with a
      Lipschitz constant proportional to~$(1/\gamma)^2$. Here, the
      parameter~$\gamma$ controls a trade-off between sparsity and
      stability. As a side effect, it also leads to a unique solution of the
      sparse reconstruction problem due to the strong convexity of the penalty.
\end{itemize}

\subparagraph{The label~$y$ is not available at test time.}
Interestingly, the formulation~(\ref{eq:discr_dict}) is related to 
the super-resolution approach described in Section~\ref{subsec:upscaling}. even though
the relation is only clear from hindsight.
Indeed, the goal of classification is to map an input signal~$\x$ to 
an output label~$y$. In contrast, super-resolution can be seen as a 
multivariate regression problem, whose goal is to learn a mapping between~$\x$
and a high-resolution signal---say, a signal~$\y$. It is then not surprising
that the classification formulation~(\ref{eq:discr_dict}) is very similar to
the regression one~(\ref{eq:upscaling}). In both cases, learning the mapping
involves a dictionary~$\D$ for representing the input signals~$\x$ and a model
that maps sparse codes~$\alphab$ to the variable to predict.
In~(\ref{eq:discr_dict}), the parameters~$\W$ play for instance the same role
as the parameters $\D_h$ in~(\ref{eq:upscaling}).

In Section~\ref{subsec:upscaling}, we have discussed some limitations of the
super-resolution formulation~(\ref{eq:upscaling}), which are in fact also
relevant for the classification task. Given some fixed~$\D$ and~$\W$, we remark
that the sparse codes~$\alphab_i$ for the training data may be obtained as
follows
\begin{equation}
   \alphab_i \in \argmin_{\alphab \in \Real^p} \frac{1}{2}\|\x_i-\D\alphab\|_2^2 \!+\! \lambda \|\alphab\|_1 \!+ \!\gamma L( y_i , \W,\alphab). \label{eq:alphai}
\end{equation}
Unfortunately, given a new test signal~$\x$, the label~$y$ is not available and
the formulation~(\ref{eq:alphastar}) has to be used instead to obtain the
corresponding sparse code~$\alphab$. Therefore, there is a discrepancy in the
way the signals are encoded during the training and test phases.

In Section~\ref{subsec:upscaling}, we have addressed that issue by presenting
an extension involving a bilevel optimization problem. It is also possible to
follow the same strategy for the classification task, but we develop this
method in the next section since it draws a strong connection between
dictionary learning and neural networks.

   \section{Connections with neural networks}\label{subsec:backprop}
   There are obvious connections between dictionary learning and neural
networks. Some of them have already appeared in this monograph with
multilayer schemes involving feature maps, which are typical architectures of
convolutional neural networks and related work~\citep{lecun1998,serre2005}. In
this section, we present two other important connections. First, we introduce
the concept of ``backpropagation'' for dictionary learning, which is directly
borrowed from neural networks~\citep{lecun2}, and which addresses an issue that
was left opened at the end of the previous section.  Then, we focus on particular
networks whose purpose is to approximate efficiently the solution of sparse
coding problems~\citep{gregor2010}.

\paragraph{Backpropagation rules for dictionary learning.}
In Figure~\ref{fig:backprop}, we present two paradigms for dictionary learning
coupled with a prediction task, \eg, regression or classification. In the
first one, the dictionary is obtained without supervision and the prediction model is
learned in a subsequent step. In the latter paradigm, all parameters, including the
dictionary, are learned jointly for the prediction task. This requires a
similar concept as ``backpropagation''~\citep{lecun2}, which consists of using
the chain rule in neural networks.

Formally, let us consider a training set of
signals~$\X=[\x_1,\ldots,\x_n]$ in~$\Real^{m \times n}$ associated to 
measurements~$\Y=[\y_1,\ldots,\y_n]$ in~$\Real^{k \times n}$ representing
the variables to predict at test time. Such a setting is quite general on
purpose: the vectors~$\y_i$ may represent for instance the high-resolution
patches from Section~\ref{subsec:upscaling} or the half-toned patches from
Section~\ref{subsec:invert}, but they may also represent a label for the classification
tasks of Section~\ref{sec:visual_edges}. In this last case,~$k=1$ and
the~$\y_i$'s are simply scalars.

In the formulation of interest, the signals~$\x_i$ are encoded with~$\D$:
\begin{equation}
   \alphab^\star(\x,\D) = \argmin_{\alphab \in \Real^p} \frac{1}{2}\|\x-\D\alphab\|_2^2+\lambda\|\alphab\|_1 + \frac{\mu}{2}\|\alphab\|_2^2, \label{eq:alphastar2}
\end{equation}
where the use of the elastic-net penalty instead of~$\ell_1$ will be made clear in
the sequel. Then, we consider a similar prediction model as in the previous section;
following the same methodology, we introduce a loss function~$L$, and the
corresponding empirical risk minimization problem 
\begin{equation}
   \min_{\D \in \CC, \W \in \Real^{p \times k}} \frac{1}{n}\sum_{i=1}^n L\left( \y_i, \W, \alphab^\star(\x_i,\D)\right)  + \frac{\gamma}{2}\|\W\|_{\text{F}}^2, \label{eq:taskdriven2}
\end{equation}
Specifically, any loss from the previous section is considered to be
appropriate here.  The advantage of~(\ref{eq:alphastar2})
over~(\ref{eq:alphai}) is that the
encoding scheme remains the same between training and testing, and yet~$\D$
and~$\W$ can be learned jointly for the final prediction task
in~(\ref{eq:taskdriven2}). Moreover~(\ref{eq:taskdriven2}) extends the
super-resolution formulation of~(\ref{eq:taskdriven}) to more general problems
such as classification.

Even though~(\ref{eq:taskdriven2}) seems appealing, it is unfortunately 
particularly difficult to solve. In addition to being nonconvex,
the relation between~$\D$ and the objective function goes through the argmin of
a nonsmooth optimization problem. As a consequence, the cost function is also
nonsmooth with respect to~$\D$ and it is not clear how to obtain a descent direction.

In an asymptotic regime where enough training data is available,
\citet{mairal2012} show that the cost function becomes differentiable, and they
derive a stochastic gradient descent algorithm from this theoretical result.
They call this approach ``task-driven dictionary learning''.
More precisely, consider the expected cost
\begin{displaymath}
   f(\D,\W) \defin \EE_{(\y,\x)}\left[ L\left( \y, \W, \alphab^\star(\x,\D)\right) \right], 
\end{displaymath}
where the expectation is taken to the unknown probability distribution of the
data pairs~$(\y,\x)$. Under mild assumptions on the data distribution and when~$\mu > 0$, it is
possible to show that~$f$ is differentiable and that its gradient admits a
closed form:
\begin{equation}
   \left\{
      \begin{array}{rcl}
         \nabla_{\W} f(\D,\W)  & = & \EE_{(\y,\x)}\left[ \nabla L(\y,\W, \alphab^\star)^\top  \right], \\
         \nabla_{\D} f(\D,\W)  & = & \EE_{(\y,\x)}\left[ -\D \betab^\star \alphab^{\star \top} + (\x-\D\alphab^\star)\betab^{\star \top}  \right], \\
      \end{array}
      \right. \label{eq:backprop}
\end{equation}
where~$\alphab^\star$ is short for~$\alphab^\star(\x,\D)$,
\begin{displaymath}
   \betab^\star[\Gamma^\complement] = 0,~\text{and}~ \betab^\star[\Gamma] \defin \left(\D_\Gamma^\top \D_\Gamma + \mu \I  \right)^{-1}\nabla_{\alphab[\Gamma]} L(\y,\W,\alphab^\star),
\end{displaymath}
where~$\Gamma$ is the set of nonzero entries of~$\alphab^\star$. 
Here,~the role of parameter~$\mu$ is to ensure that the matrix to be inverted is well conditioned.
Since the gradient has the form of an expectation, it is natural to use a
stochastic gradient descent algorithm to optimize the cost
function~\citep{bottou2008tradeoffs}. Even though the nonconvexity makes it
impossible to find the global optimum, it was shown by~\citet{mairal2012}
that following heuristics from the neural network literature~\citep{lecun2},
this strategy yields sufficiently good results for many prediction tasks.

The concept of learning~$\D$ to be good for prediction is similar to the
backpropagation principle in neural networks where all the parameters of the
network are learned for a prediction task using the chain rule~\citep{lecun2}. To the best 
of our knowledge, \citet{bradley2008} were the first to use such a 
terminology for dictionary learning; they proposed indeed a
a different supervised dictionary learning formulation with smoothed
approximations of sparsity-inducing penalties. In a heuristic fashion,
similar rules as~(\ref{eq:backprop}) have also been derived
by~\citet{boureau2010} and~\citet{yang2010b} in the context of classification
and by~\citet{yang2012} for super-resolution (see
Section~\ref{subsec:upscaling}).

\tikzstyle{block_red} = [draw=black!0, fill=red!40, rectangle, rounded corners,text width=2.0cm, text badly centered,node distance=2.5cm]
\tikzstyle{block_green} = [draw=black!0, fill=green!40, rectangle, rounded corners,text width=4cm, text badly centered,node distance=1.5cm]
\tikzstyle{block_blue} = [draw=black!0, fill=blue!40, rectangle, rounded corners,text width=4cm, text badly centered,node distance=1.5cm]
\tikzstyle{block_neutral} = [rectangle, rounded corners,text width=4cm, text badly centered,node distance=1.5cm,text height=0.39cm]
\tikzstyle{image} = [text centered, minimum height=1cm]
\tikzstyle{arrow} = [draw,line width=0.12cm, ->]
\tikzstyle{arrowb} = [draw,line width=0.12cm,color=red, ->]

\begin{figure}
   \subfigure[Without backprogagation.]{\label{subfig:nobackprop}
      \begin{tikzpicture}[node distance=0.6cm]
         \node[block_green](output){\bf Output $\y$};
         \node[block_green,below of=output,node distance=2.1cm](alphas){\bf Sparse codes $\alphab$};
         \node[block_green,left of=alphas,node distance=6cm](dict){\bf Dictionary $\D$};
         \node[block_green,left of=output,node distance=6cm](weights){\bf Weights $\W$};
         \node[block_green,below of=alphas,node distance=2.1cm](input){\bf Input vector $\x$};
         \path[arrow](input) -- (alphas);
         \path[arrow](dict) -- (alphas);
         \path[arrow](alphas) -- (output);
         \path[arrow](weights) -- (output);
         \draw [very thick,color=blue] (-8.3,-0.5) -- (-8.3,0.5) -- (2.3,0.5) -- (2.3,-2.6) -- (-2.4,-2.6) -- (-2.4,-0.5) -- (-8.3,-0.5);
         \draw [very thick,color=magenta,yshift=-1mm] (-8.3,-2.5) -- (-8.3,-1.5) -- (2.4,-1.5) -- (2.4,-4.6) -- (-2.6,-4.6) -- (-2.6,-2.5) -- (-8.3,-2.5);
         \node[block_neutral,below of=dict,node distance=1cm](layer1){\bf \color{magenta} Stage 1: sparse coding layer};
         \node[block_neutral,below of=weights,node distance=1cm](layer1){\bf \color{blue} Stage 2: supervised learning layer};
      \end{tikzpicture}
   }
   \subfigure[With backpropagation.]{\label{subfig:backprop}
      \begin{tikzpicture}[node distance=0.6cm]
         \node[block_green](output){\bf Output $\y$};
         \node[block_neutral,node distance=0.3cm,left of=output](neutral1){~};
         \node[block_green,below of=output,node distance=2.1cm](alphas){\bf Sparse codes $\alphab$};
         \node[block_neutral,node distance=0.3cm,left of=alphas](neutral2){~};
         \node[block_green,left of=alphas,node distance=6cm](dict){\bf Dictionary $\D$};
         \node[block_green,left of=output,node distance=6cm](weights){\bf Weights $\W$};
         \node[block_green,below of=alphas,node distance=2.1cm](input){\bf Input vector $\x$};
         \path[arrow](input) -- (alphas);
         \path[arrow](dict) -- (alphas);
         \path[arrow](alphas) -- (output);
         \path[arrow](weights) -- (output);
         \path[arrowb](neutral1) -- (neutral2);
         \node[block_neutral,below of=weights,node distance=1cm](layer1){\bf \color{red} Supervised Dictionary Learning};
         \draw [very thick,color=red] (-8.3,-2.6) -- (-8.3,0.5) -- (2.4,0.5) -- (2.4,-4.7) -- (-2.4,-4.7) -- (-2.4,-2.6) -- (-8.3,-2.6);
      \end{tikzpicture}
   }
   \caption{Two paradigms for dictionary learning are illustrated in the figure. An input signal~$\x$ is sparsely
      encoded by using a dictionary~$\D$, producing sparse codes~$\alphab$.
      Then, a second model with parameters~$\W$ makes a prediction~$\y$ based
      on~$\alphab$. In~\subref{subfig:nobackprop}, the
      dictionary~$\D$ is learned unsupervised in a first layer---that is,
      without exploiting output information~$\y$. Then, the parameters~$\W$ of
      the prediction model are learned afterwards in the second layer.
      In~\subref{subfig:backprop}, we illustrate the concept of supervised
      dictionary learning, where~$\D$ and~$\W$ are jointly learned for the
      prediction task; this requires ``backpropagating'' information,
   represented by the red arrow, from the top layer to the bottom one.}\label{fig:backprop}
\end{figure}
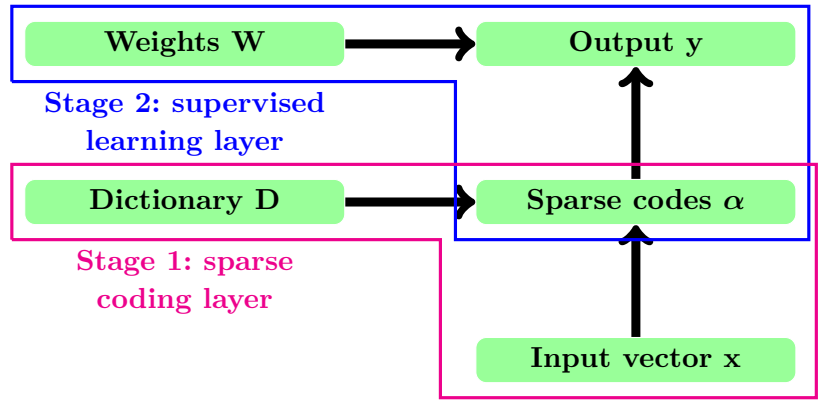
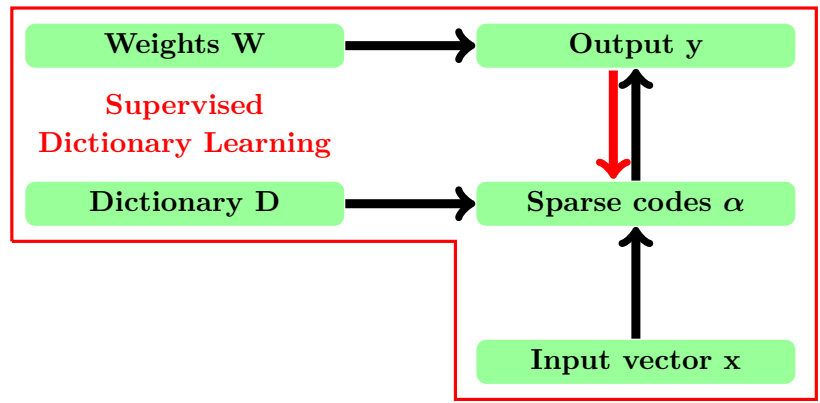

   \paragraph{Fast approximations of sparse coding.}
Another link between neural networks and dictionary learning has been
established by~\citet{kavukcuoglu2010b} and~\citet{gregor2010}, who have trained
neural networks to approximate sparse codes.  Given some fixed dictionary~$\D$
and a training set~$\X$, their idea is to learn a simple non-linear
function $g(\W,\x)$ that approximates~$\alphab^\star(\x,\D)$ for any
signal~$\x$ of interest.
By using similar notation as in the previous paragraphs, the general
formulation proposed by~\citet{gregor2010} is the following
\begin{displaymath}
   \min_{ \W \in \WW }  \frac{1}{n}\sum_{i=1}^n \|\alphab^\star(\x_i,\D)- g(\W,\x_i)\|_2^2,
\end{displaymath}
where~$\W$ is a set of parameters for the prediction function~$g$ optimized with
stochastic gradient descent and~$\WW$ is an optimization domain. The basic prediction function~$g$ introduced
by~\citet{kavukcuoglu2010b} is the composition of a linear transformation and a
pointwise non-linearity, which can be interpreted as a one-layer neural network:
\begin{equation}
   g(\W,\x)[j] \defin \gamma_j \tanh(\w_j^\top \x + b_j), \label{eq:fastsc}
\end{equation}
where the scalars~$\gamma_j$ and~$b_j$ are some weights that can be also
optimized. The motivation for such an approach is to reduce the cost of computing
sparse codes at test time. Whereas obtaining~$\alphab^\star(\x,\D)$ requires
solving an optimization problem that can be costly (see Section~\ref{chapter:optim}), the complexity of computing $g(\W,\w)$ is the
same as a matrix-vector multiplication, making it possible to develop real-time
applications.

However, there are two major shortcomings to the original approach
of~\citet{kavukcuoglu2010b}, as noted by~\citet{gregor2010}. First, the choice
of the non-linearity $\tanh$ does not yield exactly sparse vectors, but it can
be replaced by more appropriate shrinkage functions. Second, the
scheme~(\ref{eq:fastsc}) encodes each entry of the output vector~$g(\W,\x)$
independently and cannot possibly take into account the fact that there exists
correlation among the dictionary elements. To address this
issue,~\citet{gregor2010} have proposed an iterative approach with a higher
complexity than~(\ref{eq:fastsc}), but with a better approximation (see their
paper for more details).

   \section{Other applications} \label{sec:visual_other}
   Because the research topic has been very active in the last decade, sparse
estimation and dictionary learning have been used in numerous ways in computer
vision; unfortunately, writing an exhaustive review would be an endless
exercise, and we have instead focused so far on a few approaches related to
visual recognition in images. In the remaining paragraphs, we aim to be
slightly more exhaustive and briefly mention a few other successful
applications.

\paragraph{Action recognition in videos.} We start with natural extensions to
videos of visual recognition pipelines originally developed for images.
\citet{castrodad2012} and~\citet{guha2012} have proposed two related approaches
for action classification in movies. We present here the main principles from the
method of~\citet{castrodad2012}, which has shown competitive results in several
standard evaluation benchmarks.

Let us consider~$n$ training video sequences represented by vectors $\x_1,
\x_2, \ldots, \x_n$ in~$\Real^{lT}$, where~$T$ is the number of frames in the
sequence and~$l$ is the number of pixels in each frame. Each sequence contains
a subject performing a particular action and the goal is to predict the
type of action in subsequent test videos. We assume that there are~$k$
different possible actions.

To address this classification problem, \citet{castrodad2012} have
introduced a two-layer dictionary learning scheme, after pre-processing
the videos to replace the raw pixels by temporal gradients. Then, the
two layers of the pipeline can be described as follows:
\begin{itemize}
   \item {\bfseries local model of actions:} it is common in videos to work
      with three-dimensional patches with a time dimension to describe the
      local appearance of actions~\citep{laptev2005}. For each class~$j$, one
      dictionary~$\D_j$ is learned on the training data to represent the
      appearance of such patches within the action class~$j$. This step is particularly 
      useful when some actions are locally discriminative.
   \item {\bfseries encoding of full sequences:}
      once the dictionaries~$\D_j$ are learned, a train or test video is
      processed by extracting its overlapping three-dimensional patches,
      and by encoding them with the concatenated dictionary
      $\D=[\D_1,\ldots,\D_k]$, say with~$p$ dictionary elements.
      This produces a representation~$\alphab_i$ for every patch~$i$ of the
      sequence. These patches are pooled into a single vector~$\betab$
      in~$\Real^p$ for the sequence.
   \item {\bfseries global model of actions:} 
      the first layer provides a high-dimensional vector for each video. The
      second layer is exploiting these nonlinear representations of the
      sequences as inputs to the classification scheme~(\ref{eq:class_aa}).
      As a result, the layer is learning a global model for each action class,
      by learning again one dictionary per class.
      The difference with the first layer is that the dictionaries model the
      appearance of full sequences instead of local three-dimensional
      patches.
\end{itemize}
Note that, to simplify, we have also omitted some useful heuristics such as the
removing of three-dimensional patches with low energy, corresponding to
motionless regions. We have also not provided all mathematical details such as
the chosen dictionary learning formulations and algorithms. We refer
to~\citet{castrodad2012} for a more exhaustive description of the method.

\paragraph{Visual tracking.}
We now move to a technique for visual tracking introduced by~\citet{mei2009},
which relies on the sparsity-inducing effect of the~$\ell_1$-norm and on some
ideas from the robust face recognition system of~\citet{wright2008robust}.
Given a stream of images $\x_1,\x_2,\ldots$, the goal of tracking is to find a
bounding box around a moving object in the sequence. Typically, the bounding
box is annotated for the first image, and the task is to find the boxes in the
subsequent frames by exploiting temporally consistency.

The approach of~\citet{mei2009} is inspired by template matching techniques
\citep{lucas1981,matthews2004}. It assumes that at time~$t$, a set of~$p$
templates~$\D=[\d_1,\ldots,\d_p]$ in~$\Real^{m \times p}$ is available, where
each column~$\d_j$ is a small image filled by the object of interest---in other
words, a bounding box. Typically, each~$\d_j$ has been either obtained by a
ground truth annotation at the beginning of the sequence, or is one of the
estimated boxes in the previous frames.  Then, estimating the position of the object in
frame~$t$ consists of scanning all possible bounding boxes in that frame, and
comparing how well they can be reconstructed by the templates~$\D$. 

Concretely, let us assume that there are~$l$ possible boxes, and let us denote by~$\R_k$ the
linear operator that extracts the $k$-th box from the image and subsamples it to
make a representation with~$m$ parameters. Then, the optimization problem is simply
\begin{equation}
   {\hat k} \in \argmin_{k \in \{1,\ldots,l\}} \left[ \min_{\alphab \in
      \Real_+^p, \e \in \Real^m} \frac{1}{2}\|\R_k \x_t - \D\alphab - \e\|_2^2 +
   \lambda \|\e\|_1 + \sum_{j=1}^p \eta_j | \alphab[j]| \right],
   \label{eq:tracking}
\end{equation}
where~$\e$ models the occlusion as in the face recognition formulation
of~\citet{wright2008robust} presented in Section~\ref{sec:visual_faces}.
The~$\ell_1$-norm encourages~$\e$ to be sparse, whereas the
weighted-$\ell_1$-norm also promotes the selection of a few templates
from~$\D$. The role of the weights~$\eta_j$ is to give more importance to some
templates, especially the ones that have been selected recently in the previous
frames.

Overall, the tracking scheme performs two successive operations for each frame.
It selects the new bounding box with~(\ref{eq:tracking}), and then updates the
dictionary~$\D$ and the weights~$\eta_j$, according to an ad-hoc
procedure~\citep[see][for the details]{mei2009}.  According to a recent
survey~\citep{wu2013}, trackers based on sparse representations, which are
essentially improved versions of~\citet{mei2009}, perform well compared to the
state of the art, and are able to deal with occlusion effectively.

\paragraph{Data visualization.}
Finally, we deviates from visual recognition by presenting
applications of sparse estimation to data visualization in computer
vision~\citep{elhamifar2012,chen2014}.  The first approach proposed
by~\citet{elhamifar2012} can be viewed as an extension of the sparse subspace
clustering method of~\citet{elhamifar2009}, and was successfully applied to the
problem of video summarization.

More precisely, consider a video sequence~$\X=[\x_1,\ldots,\x_T]$ in~$\Real^{l
\times T}$ where each~$\x_t$ in~$\Real^l$ is the~$t$-th frame. The goal of
video summarization is to find a set of~$s \ll T$ frames that best
``represents'' the video. The selection problem is formulated as
follows
\begin{equation}
   \min_{\A \in \Real^{T \times T}}  \|\X-\X\A\|_{\text{F}}^2 \st  \Psi_0(\A) \leq s ~~\text{and}~~\forall i,~ \sum_{j=1}^T \alphab_i[j] = 1,\label{eq:video_summary}
\end{equation}
where~$\A=[\alphab_1,\ldots,\alphab_T]$ and~$\Psi_0(\A)$ counts the number of
rows of~$\A$ that are non-zero. The constraints have two simultaneous effects:
first, they encourage every frame~$\x_t$ to be close to a linear
combination~$\X\alphab_t$ of other frames with coefficients~$\alphab_t$ that
sum to one; second, since~$\A$ has at most~$s$ non-zero rows, only~$s$ frames
from~$\X$ can be used in the approximations~$\X\alphab_t$. The video sequence
is consequently ``summarized'' by these~$s$ frames.

To visualize large image collections,~\citet{chen2014} has used related ideas by 
exploiting an older unsupervised learning technique called archetypal analysis, which
is presented in Section~\ref{subsec:matrix_other}. Specifically, let us
consider a collection of images, denoted again by~$\X=[\x_1,\ldots,\x_n]$,
but which do not necessarily form a video sequence. Archetypal analysis looks
for a dictionary~$\D$ in~$\Real^{m \times p}$ that ``represents'' well the data,
\begin{displaymath}
  \min_{\substack{
        \alphab_i \in \Delta_p~\text{for}~ 1\leq i \leq n \\
  \betab_j \in \Delta_n~\text{for}~ 1\leq j \leq p}}  \sum_{i=1}^n \|\x_i - \D \alphab_i\|_2^2  \st ~\forall j, \d_j = \X\betab_j, 
\end{displaymath}
where~$\Delta_p$ and~$\Delta_n$ are simplex sets---that is, sets of vectors
with non-negative entries that sum to one.  In contrast to the original
dictionary learning formulation, the dictionary elements, called archetypes,
are convex combinations of data points and admit a simple interpretation.

In Figure~\ref{fig:paris_n}, we present a result obtained by~\citet{chen2014}
on a database of images downloaded on the Flickr website with the request
``Paris''. Each image is encoded using the Fisher vector
representation~\citep{perronnin2010} and~$p=256$ archetypes are learned. Each
archetype~$\d_j$ is a convex combination~$\X\betab_j$ of a few input images
with a sparse vector~$\betab_j$ due to the simplicial constraint. For each
archetype, it is the natural to visualize images that are used in the sparse
combination, as well as the corresponding non-zero coefficients~$\betab_j[i]$.
To some extent, the resulting figure ``summarizes'' the image collection.

\begin{figure}[hbtp]
   \subfigure[Expected landmarks.]{\label{subfig:paris1}
      \begin{minipage}{0.305\textwidth}
         \includegraphics[width=\textwidth]{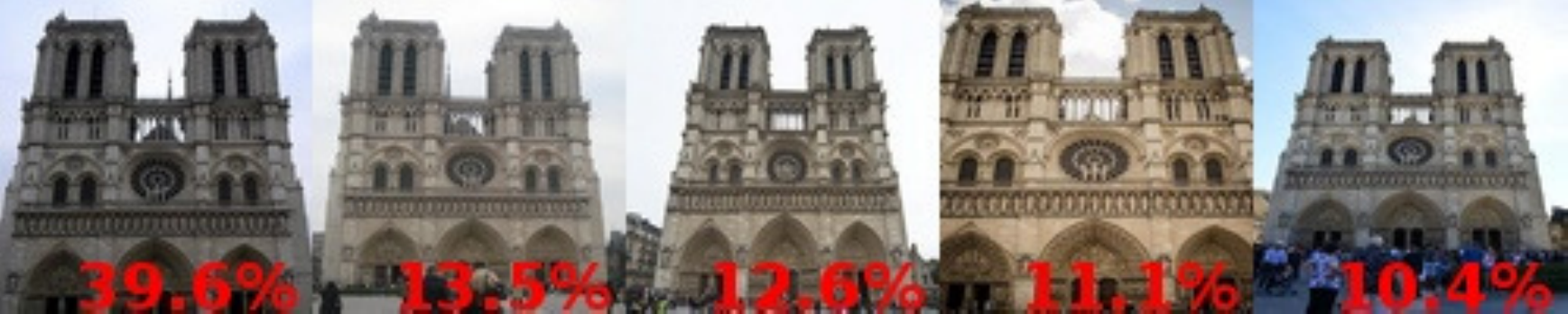} \newline
         \includegraphics[width=\textwidth]{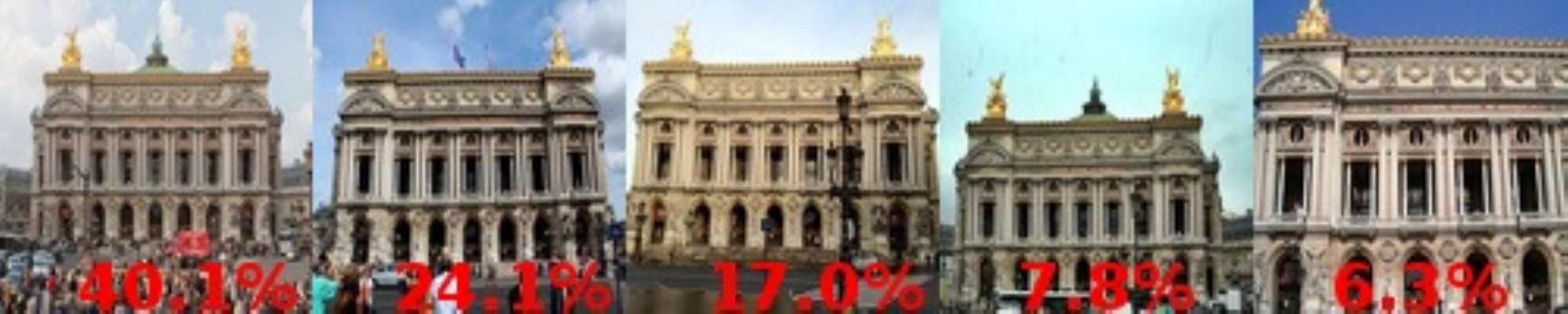} \\
         \includegraphics[width=\textwidth]{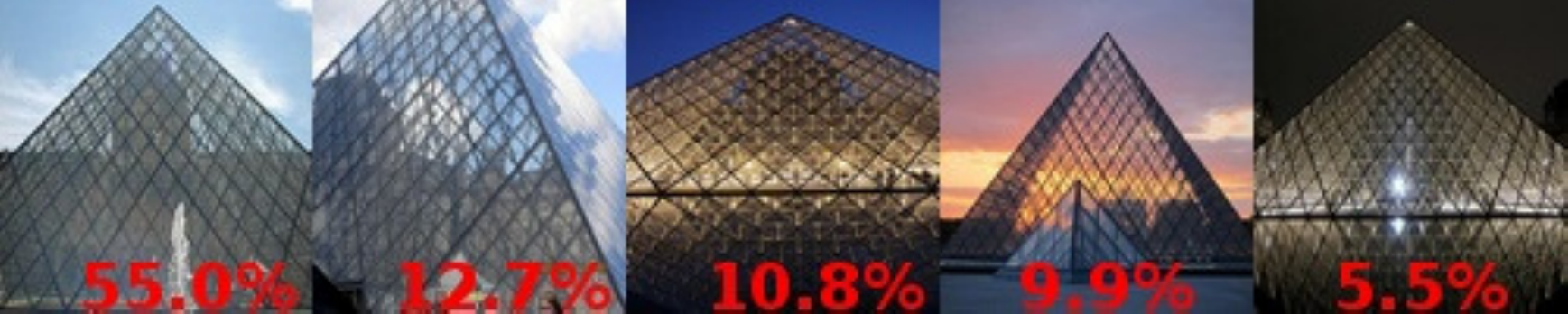} \\
         \includegraphics[width=\textwidth]{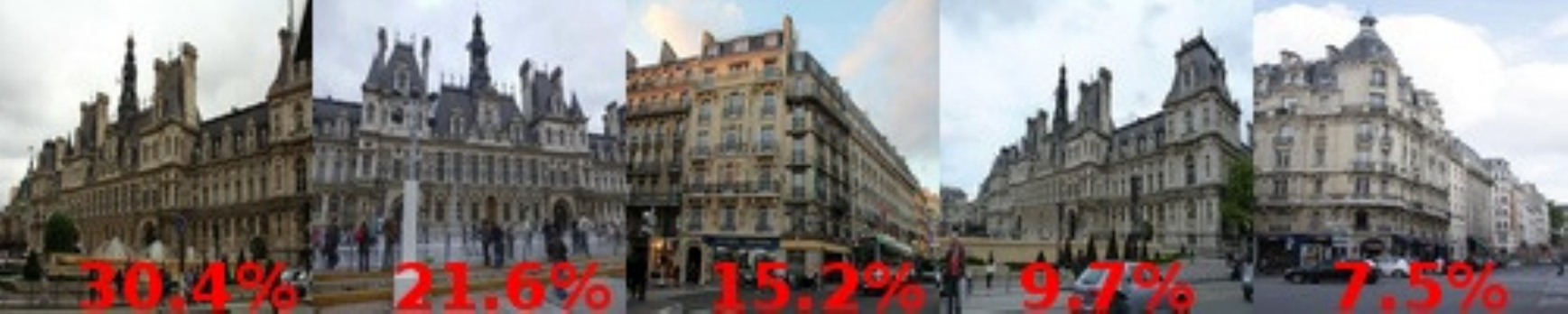} \\
         \includegraphics[width=\textwidth]{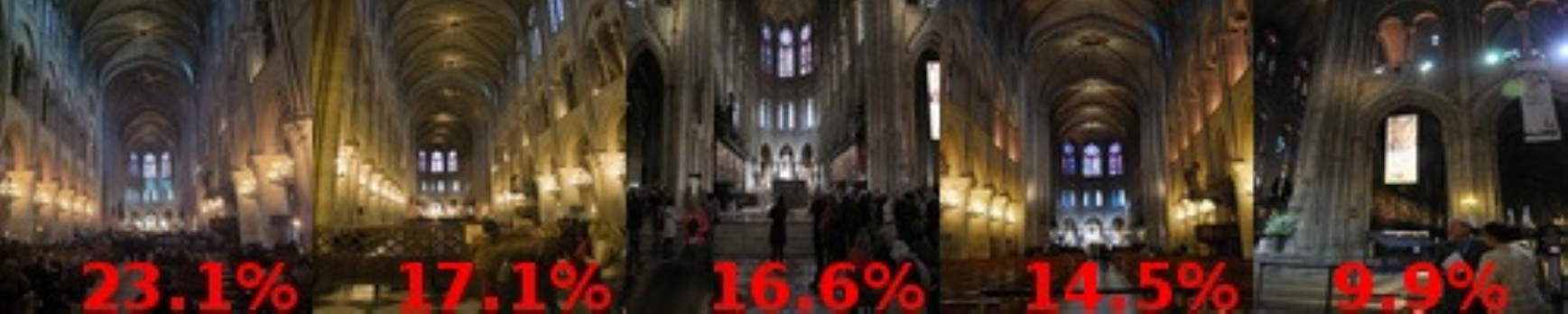} \\
         \includegraphics[width=\textwidth]{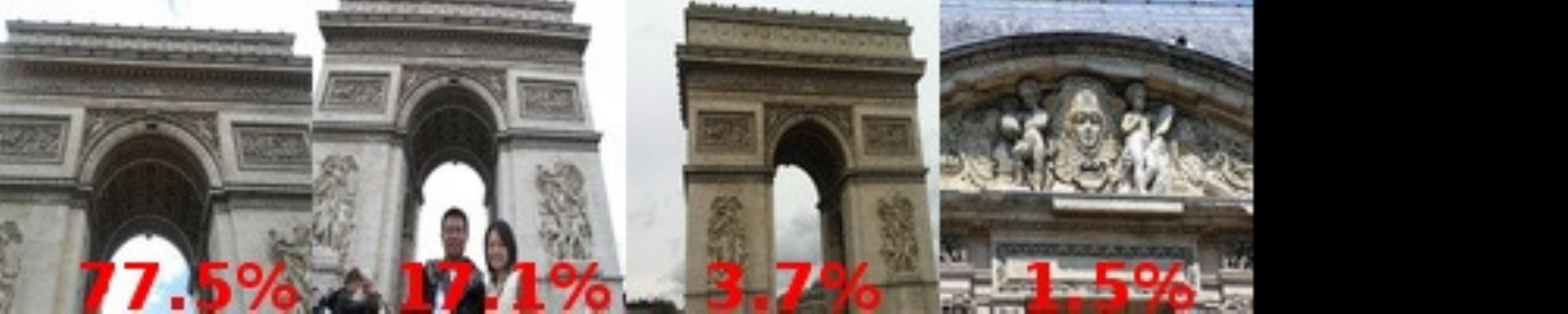} \\
         \includegraphics[width=\textwidth]{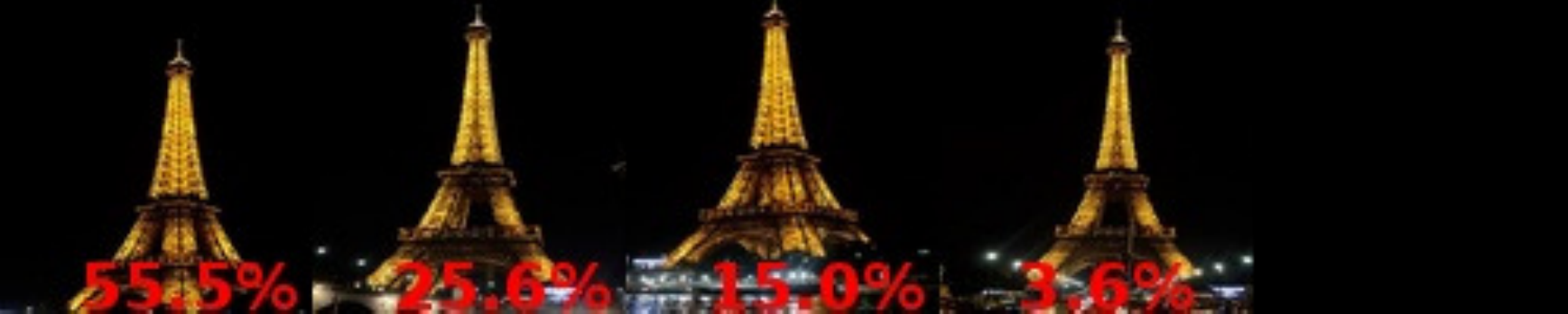}
      \end{minipage}
   }
   \subfigure[Unexpected ones.]{\label{subfig:paris2}
      \begin{minipage}{0.305\textwidth}
         \includegraphics[width=\textwidth]{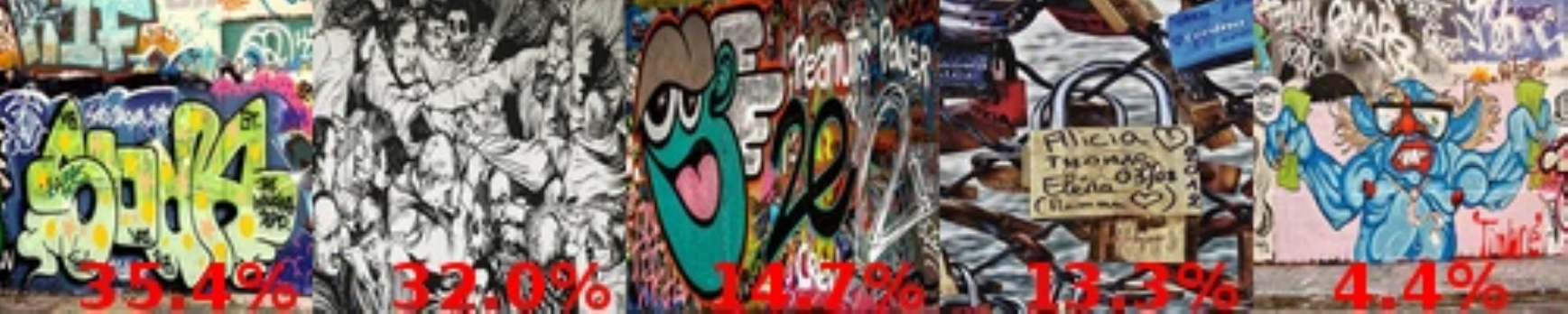} \newline
         \includegraphics[width=\textwidth]{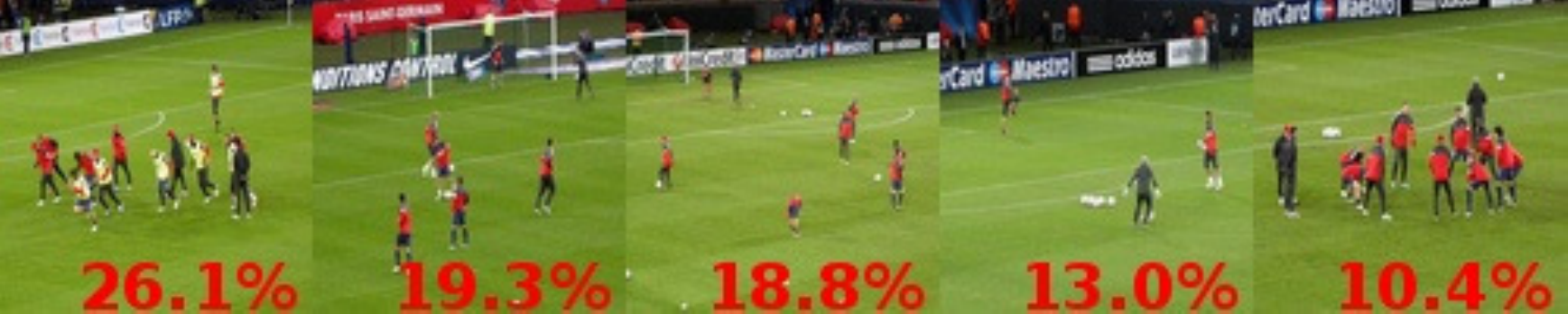} \\
         \includegraphics[width=\textwidth]{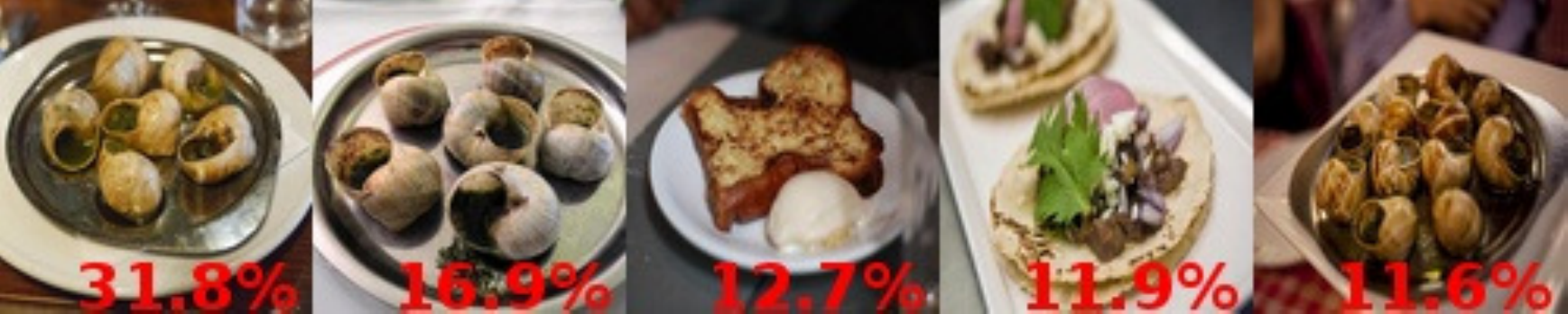} \\
         \includegraphics[width=\textwidth]{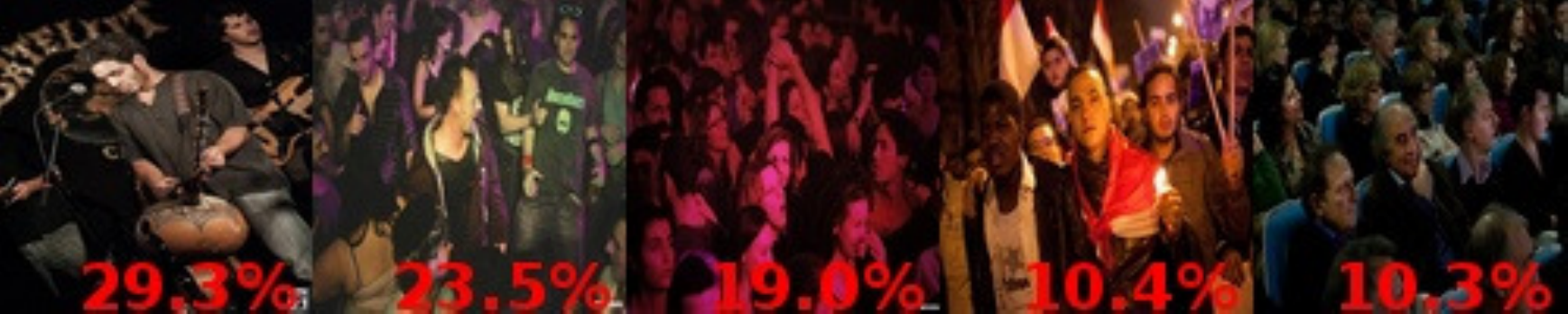} \\
         \includegraphics[width=\textwidth]{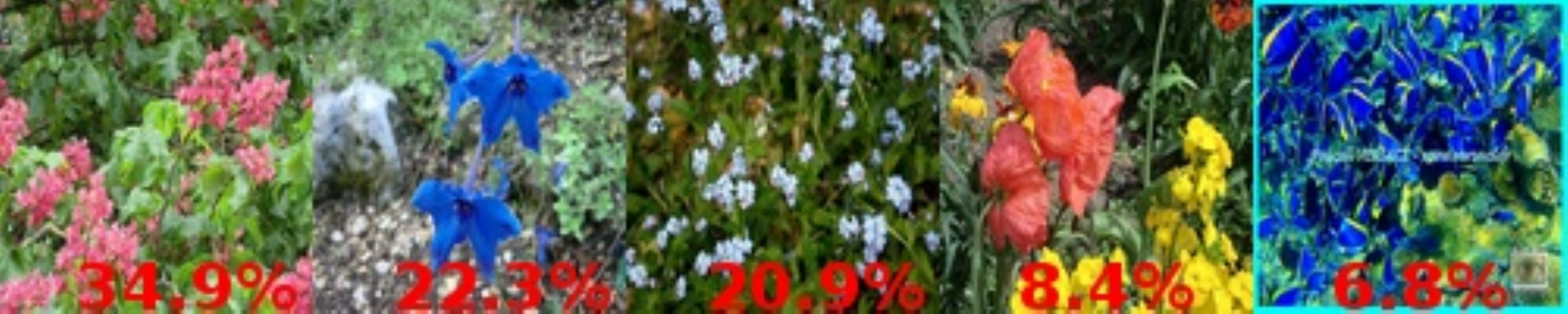} \\
         \includegraphics[width=\textwidth]{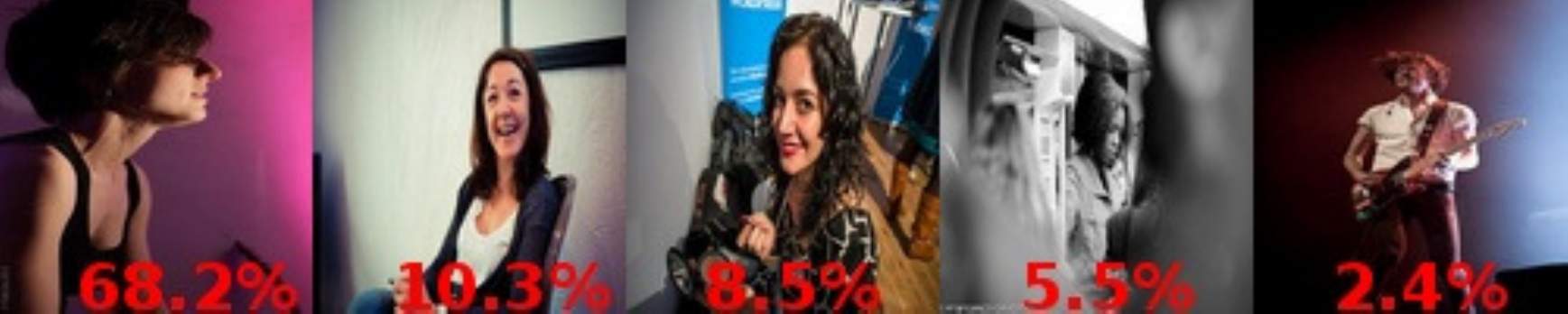} \\
         \includegraphics[width=\textwidth]{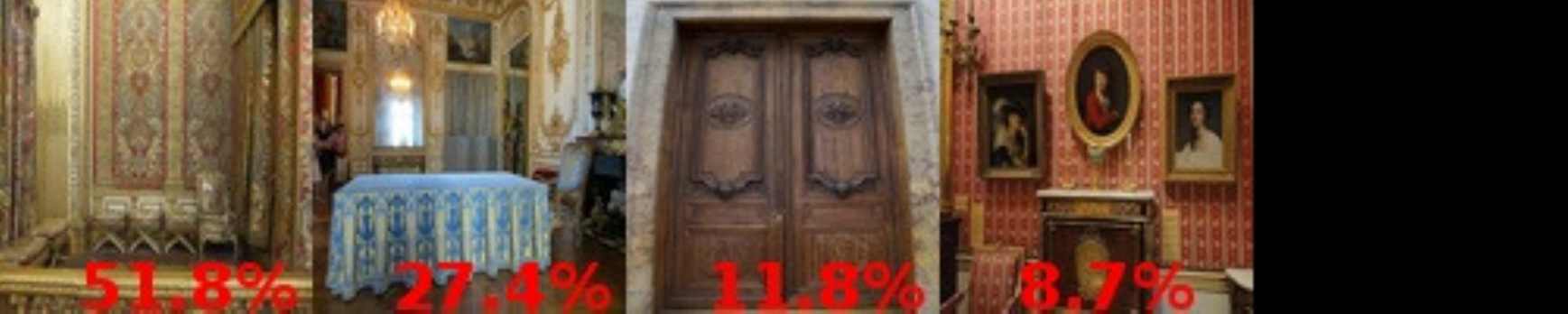}
      \end{minipage}
   }
   \subfigure[Scene composition.]{\label{subfig:paris3}
      \begin{minipage}{0.305\textwidth}
         \includegraphics[width=\textwidth]{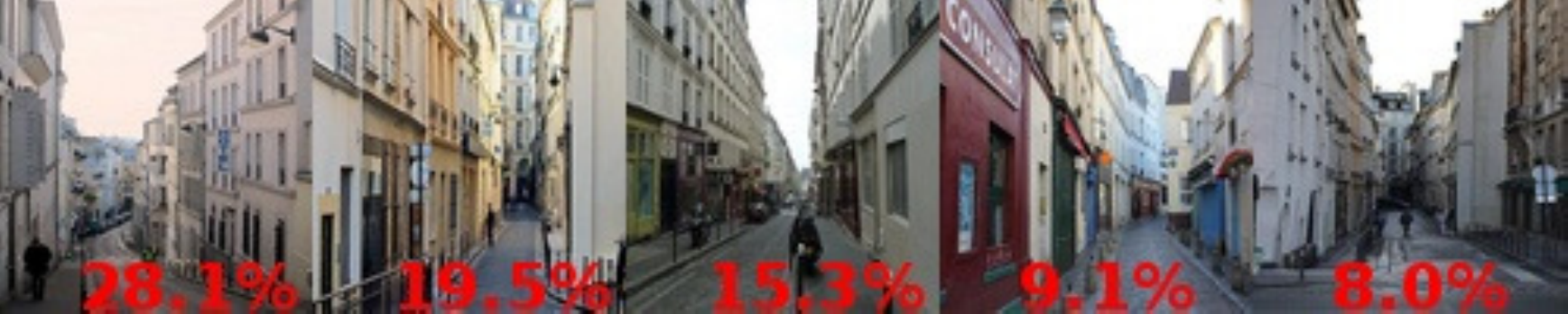} \newline
         \includegraphics[width=\textwidth]{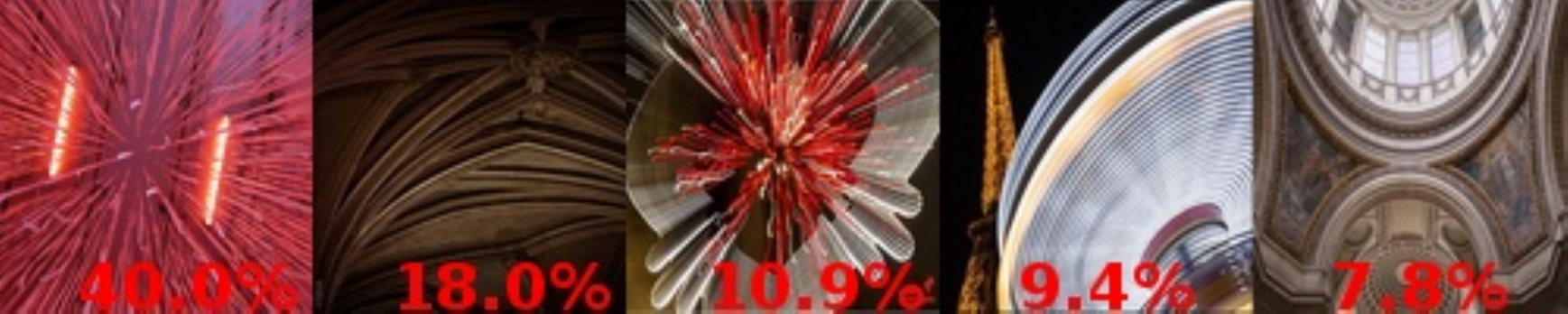} \\
         \includegraphics[width=\textwidth]{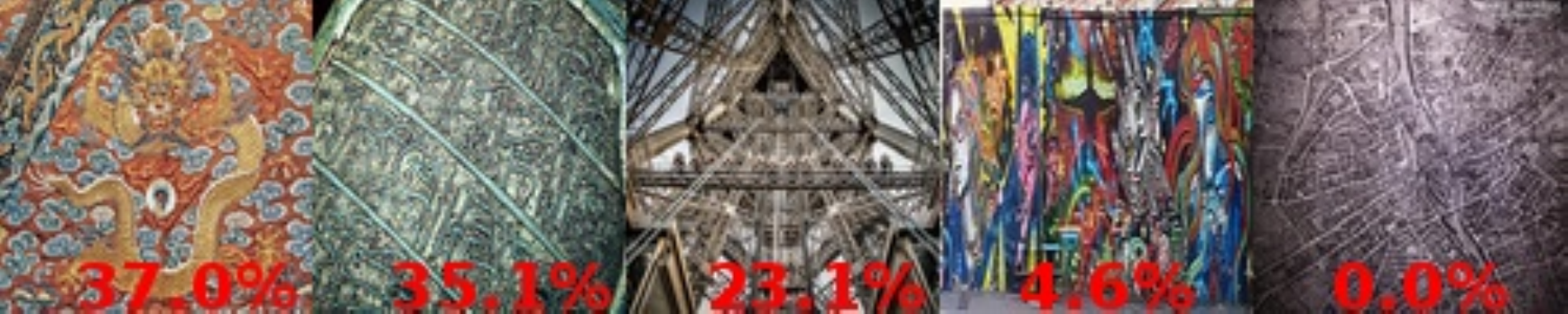} \\
         \includegraphics[width=\textwidth]{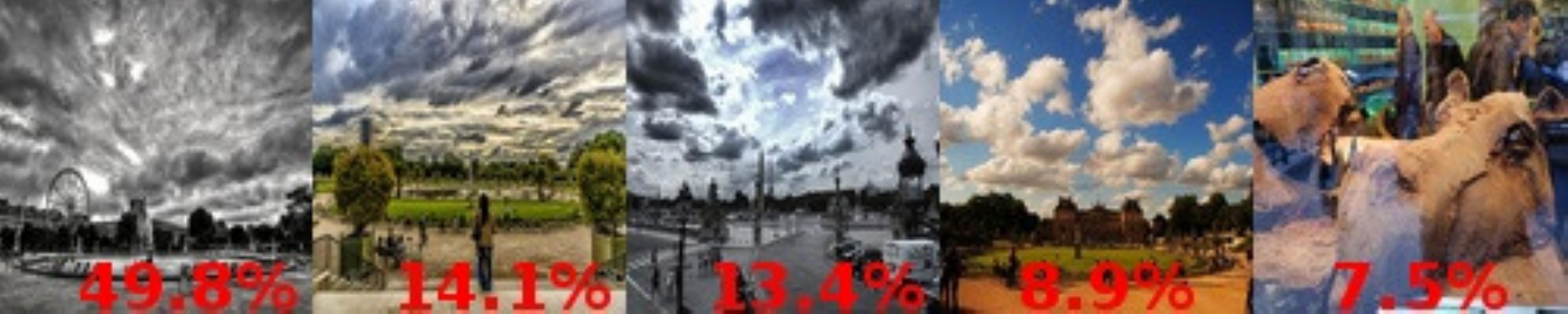} \\
         \includegraphics[width=\textwidth]{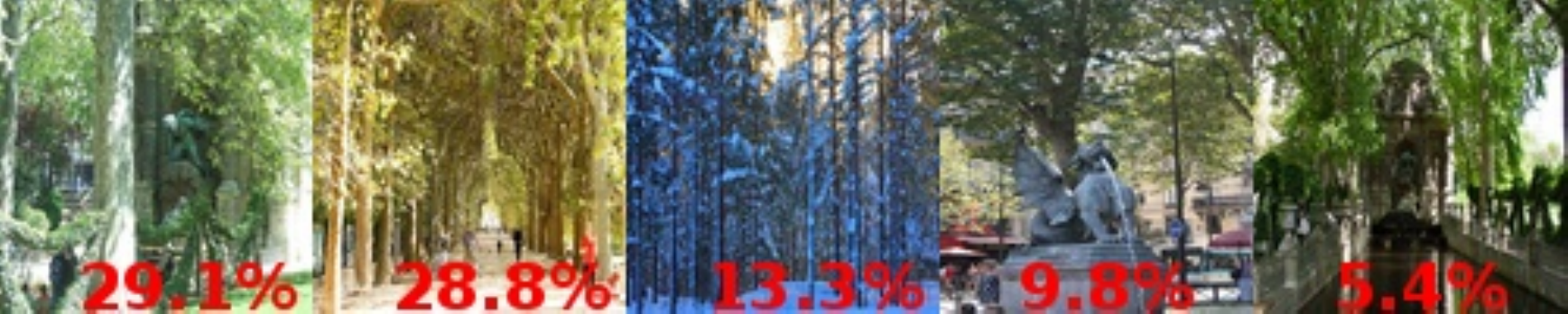} \\
         \includegraphics[width=\textwidth]{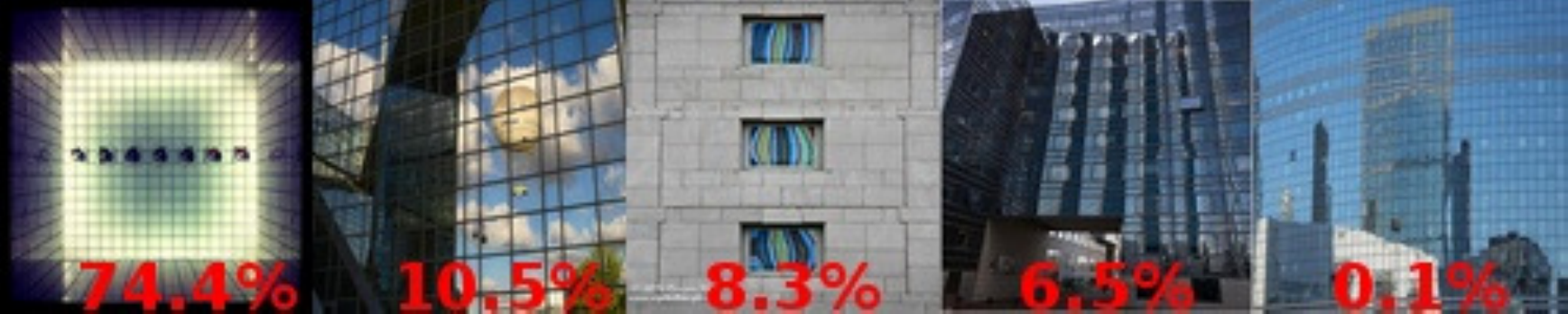}  \\
         \includegraphics[width=\textwidth]{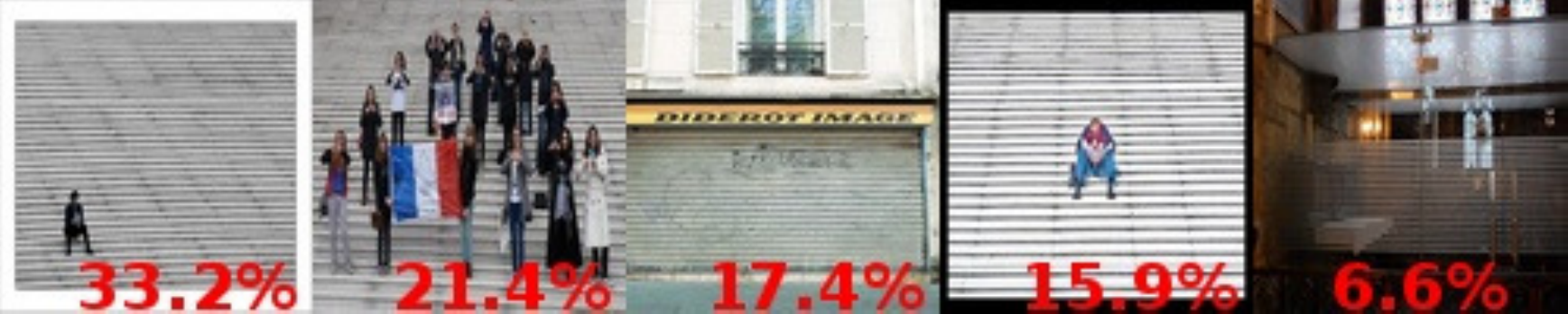}
      \end{minipage}
   }
    \caption{A few archetypes learned from $36\,600$ pictures corresponding to
       the request ``Paris'' downloaded from Flickr and sorted by ``relevance''. The figure is produced by~\citet{chen2014} with the
       software package SPAMS available at \url{http://spams-devel.gforge.inria.fr/} The numbers in red represent
       the values of the non-zero coefficients~$\betab_j[i]$ for
       each image.  In~\subref{subfig:paris1}, we obtain expected landmarks such
       as the Eiffel tower or Notre Dame; in~\subref{subfig:paris2}, we display some
       unexpected archetypes corresponding to soccer, graffitis, food, flowers
       and social gatherings; in~\subref{subfig:paris3}, we see some archetypes
       that do not seem to have some semantic meaning, but that capture some
       scene compositions or texture present in the dataset.  Best seen by
    zooming on a computer screen.} \label{fig:paris_n}
\end{figure}

\chapter{Optimization Algorithms}\label{chapter:optim}
   In the previous parts of the monograph, many formulations for dealing with
image processing and computer vision tasks have been introduced, leading in
general to cost functions to optimize. However, we have not presented yet in great
details how the corresponding algorithms should be implemented in practice.
The purpose of this section is to introduce basic optimization tools to
address sparse estimation and dictionary learning problems. Because the
literature is vast~\citep[see][for a review]{bach2012}, we focus on a few
techniques that we consider to be easy to use and efficient enough.

Specifically, we mostly present algorithms that are parameter-free. This means
that in addition to~\emph{model parameters} that are inherent to the
formulation, they do not require tuning any input value to converge;
they should at most require choosing a number of iterations or a stopping
criterion. We also put the emphasis on algorithms that are easy to implement: a
few lines of a high-level language, \eg, Matlab or Python, are in general
sufficient to obtain an effective prototype, even though cutting-edge
implementations may be more involved to develop---that is, they  may require
a low-level language such as C/C++ and optimized third-party libraries for linear
algebra such as BLAS and LAPACK.\footnote{see \url{http://www.netlib.org/blas/}.}

First, we address classical $\ell_0$-regularized problems, before treating
the case of the~$\ell_1$-penalty. Then, we present iterative
reweighted $\ell_1$ and~$\ell_2$ techniques, and conclude with dictionary
learning algorithms. Many of these methods have been used for the experiments
conducted in this monograph, and are available in the SPAMS
toolbox.\footnote{\url{http://spams-devel.gforge.inria.fr/}.}

   \section{Sparse reconstruction with the $\ell_0$-penalty}\label{sec:optiml0}
   We start our review of optimization techniques with least square problems
that are regularized with the~$\ell_0$-penalty. More precisely, we are
interested in finding approximate solutions of
\begin{equation}
   \min_{\alphab \in \Real^p}  \|\x-\D\alphab\|_2^2 \st \|\alphab\|_0 \leq k,  \label{eq:l0a}
\end{equation}
or
\begin{equation}
   \min_{\alphab \in \Real^p} \|\alphab\|_0 \st \|\x-\D\alphab\|_2^2 \leq \varepsilon,  \label{eq:l0c}
\end{equation}
where~$\x$ is an input signal in~$\Real^m$ and~$\D$ is a dictionary
in~$\Real^{m \times p}$. There are mainly two classes of algorithms for dealing
with such NP-hard problems: greedy algorithms and iterative hard-thresholding methods.
We start with algorithms of the first kind, which are designed for
dealing with both formulations. They are commonly known as \emph{matching pursuit} and
\emph{orthogonal matching pursuit} in signal processing, and \emph{forward selection} techniques in statistics.

\paragraph{Coordinate descent algorithm: matching pursuit (MP).} Introduced by~\citet{mallat}, matching
pursuit is a coordinate descent algorithm that iteratively selects one entry of
the current estimate~$\alphab$ at a time and updates it. It is closely related to projection pursuit
algorithms from the statistics literature~\citep{friedman1981}, and is presented in
Algorithm~\ref{alg:mp}, where~$\D=[\d_1,\ldots,\d_p]$ is a dictionary with unit-norm columns.
\begin{algorithm}[hbtp]
   \caption{Matching pursuit algorithm.}\label{alg:mp}
   \begin{algorithmic}[1]
      \REQUIRE Signal $\x$ in~$\Real^m$, dictionary $\D$ in~$\Real^{m \times p}$ with unit-norm columns, and stopping criterion $k$ or $\varepsilon$. 
      \STATE Initialize $\alphab \leftarrow 0$;
      \WHILE{$\|\alphab\|_0 < k$ if~$k$ is provided or $\|\x-\D\alphab\|_2^2 > \varepsilon$ if~$\varepsilon$ is provided}
      \STATE select the coordinate with maximum partial derivative:
      $$\hatj \in \argmax_{j=1\ldots p} | \d_j^\top (\x-\D\alphab)|;$$
      \STATE update the coordinate
      \begin{equation}
         \alphab[\hatj] \leftarrow \alphab[\hatj] + \d_{\hatj}^\top(\x-\D\alphab);  \label{eq:updatemp}
      \end{equation}
      \ENDWHILE
      \RETURN the sparse decomposition $\alphab$ in~$\Real^p$.
   \end{algorithmic}
\end{algorithm}
Each iteration performs in fact a one dimensional minimization of the residual
with respect to the~$\hatj$-th entry of~$\alphab$ when keeping the other
entries fixed. The update~(\ref{eq:updatemp}) can indeed be interpreted as
\begin{equation}
   \alphab[\hatj] \leftarrow \argmin_{\alpha \in \Real} \left\| \x- \sum_{l \neq \hatj}
   \alphab[l]\d_l  -  \alpha \d_{\hatj}\right\|_2^2. \label{eq:updatemp2}
\end{equation}
Therefore, the residual monotonically decreases during the optimization,
whereas the model sparsity~$\|\alphab\|_0$ increases or remains constant until
the stopping criterion is satisfied. One drawback of matching pursuit is that
the same entry~$\hatj$ can be selected several times during the optimization,
possibly leading to a large number of iterations.  The next algorithm, called
``orthogonal matching pursuit'', does not suffer from this issue, and
usually provides a better sparse approximation with additional computational cost.

\paragraph{Active-set algorithm: orthogonal matching pursuit (OMP).} 
The algorithm is similar in spirit to matching pursuit, but enforces the
residual to be always orthogonal to all previously selected variables,
which is equivalent to saying that the algorithm reoptimizes the value of all
non-zero coefficients once it selects a new variable.  As a result, each
iteration of OMP is more costly than those of MP, but the procedure is
guaranteed to converge in a finite number of iterations. The algorithm appears
in the signal processing literature~\citep{pati1993}, and is called ``forward
selection'' in statistics (see Section~\ref{sec:early}).

Several variants of OMP have been proposed that essentially differ in the way
a new variable is selected at each iteration. In this monograph, we focus on an
effective variant introduced by~\citet{gharavi1998,cotter}, which is called
``order recursive orthogonal matching pursuit''. To simplify, we simply
use the terminology ``OMP'' to denote this variant and omit the term ``order recursive'' in the rest of the monograph. The
procedure is presented in Algorithm~\ref{alg:omp}, where~$\D_\Gamma$ represents
the matrix of size~$\Real^{m \times |\Gamma|}$ whose columns are those of~$\D$
indexed by~$\Gamma$, and $\Gamma^\complement$ denotes the complement set
of~$\Gamma$ in~$\{1,\ldots,p\}$. The classical OMP algorithm (not the order
recursive variant) chooses $\hatj$ exactly as in the MP algorithm.  Through our
experience, we have found that the order recursive variant provides a slightly
better sparse approximation in practice, even though it may seem more
complicated to implement at first sight.
\begin{algorithm}[hbtp]
   \caption{Orthogonal matching pursuit - order recursive variant.}\label{alg:omp}
   \begin{algorithmic}[1]
      \REQUIRE Signal $\x$ in~$\Real^m$, dictionary $\D$ in~$\Real^{m \times p}$, and stopping criterion $k$ or $\varepsilon$. 
      \STATE Initialize the active set $\Gamma = \emptyset$ and $\alphab \leftarrow 0$;
      \WHILE{$|\Gamma| < k$ if~$k$ is provided or $\|\x-\D\alphab\|_2^2 > \varepsilon$ if~$\varepsilon$ is provided}
      \STATE select a new coordinate~$\hatj$ that leads to the smallest residual:
      \begin{equation} 
         (\hatj, \hatbetab) \in \argmin_{j \in \Gamma^\complement, \betab \in \Real^{|\Gamma|+1}} \left\|\x-\D_{\Gamma \cup \{j \}}\betab\right\|_2^2; \label{eq:omp}
   \end{equation}
      \STATE update the active set and the solution~$\alphab$:
      \begin{displaymath}
         \begin{split}
            \Gamma & \leftarrow \Gamma \cup \{\hatj\}; \\
            \alphab_\Gamma & \leftarrow \hatbetab  ~~~\text{and}~~~ \alphab_{\Gamma^\complement} \leftarrow 0;
         \end{split}
      \end{displaymath}
      \ENDWHILE
      \RETURN the sparse decomposition $\alphab$ in~$\Real^p$.
   \end{algorithmic}
\end{algorithm}

Unlike MP, the set~$\Gamma$ of non-zero variables of~$\alphab$ strictly
increases by one unit after each iteration, and it is easy to show that the
residual is always orthogonal to the matrix~$\D_\Gamma$
of previously selected dictionary elements. Indeed, we have at each step 
of the algorithm:
\begin{displaymath}
   \alphab_\Gamma = \left(\D_\Gamma^\top \D_\Gamma\right)^{-1} \D_\Gamma^\top \x,
\end{displaymath}
(assuming $\D_\Gamma$ to be full-rank), and thus
\begin{displaymath}
   \D_\Gamma^\top (\x-\D \alphab) =   \D_\Gamma^\top (\x-\D_\Gamma \alphab_\Gamma) = 0.
\end{displaymath}
It is interesting to note that efficient implementations of
Algorithm~\ref{alg:omp} exist, despite the fact that the algorithm naively requires
to minimize~$|\Gamma^\complement|$ quadratic functions---equivalently
solve~$|\Gamma^\complement|$ linear systems---at each iteration.  The main
ingredients for a fast implementation have been discussed by~\citet{cotter}, and are summarized
below:
\begin{enumerate}
   \item if the Cholesky factorization of~$(\D_\Gamma^\top\D_\Gamma)$ is
      already computed at the beginning of an iteration, finding the
      index~$\hatj$ to select in the update~(\ref{eq:omp}) can be found with
      very few operations by using simple linear algebra relations~\citep{cotter}.
   \item given some index~$\hatj$, we have
      \begin{displaymath}
         \hatbetab =  (\D_\Gamma^{\prime\top} \D_\Gamma')^{-1} \D_\Gamma^{\prime\top} \x,
      \end{displaymath}
      where $\Gamma' = \Gamma \cup \{\hatj\}$. Whereas the
      matrix inversion naively costs~$O(|\Gamma'|^3)$ operations, it can
      be obtained in~$O(|\Gamma|^2)$ operations if the Cholesky factorization $(\D_\Gamma^\top
      \D_\Gamma)^{-1}$ is already computed.
   \item if $\Q \defin \D^\top\D$ is precomputed, a significant gain in terms
      of speed can be obtained by exploiting the fact that the quadratic
      function~$\|\x-\D\alphab\|_2^2$ expands as $\|\x\|_2^2 - 2 \q^\top\alphab +
      \alphab^\top\Q\alphab$. This trivial reformulation leads to an
      implementation that never explicitly computes the
      residual~$\r=\x-\D\alphab$, but instead works with the
      quantity~$\D^\top\r$, initialized with~$\q$, and updated at every
      iteration.  This simple strategy may be useful when 
      large set of signals~$\x_1,\ldots,\x_n$ needs to be encoded in parallel with the same
      dictionary. Then, it is worth computing~$\Q$ once for all, before
      processing all signals. It is then possible to obtain an implementation
      of Algorithm~\ref{alg:omp} whose complexity for encoding these~$n$
      signals is 
      \begin{displaymath}
         \underbrace{O(mp^2)}_{\text{compute}~\D^\top \D~\text{once}} + \underbrace{n O(k^3)}_{\text{Cholesky factorizations}}+ \underbrace{nO(pm+pk^2)}_{\text{update of}~\D^\top \r}.
      \end{displaymath}
      A similar strategy for the classical OMP algorithm that involves a
      simpler selection rule of the variable~$\hatj$ is discussed
      by~\citet{rubinstein2008}.
\end{enumerate}
Even though the algorithm only provides an approximate solution of
the~$\ell_0$-regularized least square problem in general, OMP was shown
by~\citet{tropp2007} to be able to reliably recover a sparse signal from random
measurements. Interestingly, the conditions that are required on the dictionary
are similar to the ones for sparse recovery in the compressed sensing
literature (see Section~\ref{sec:compressedsensing}).

\paragraph{Gradient-descent technique: iterative hard-thresholding.}
A significantly different approach than the previous greedy algorithms consists of using a gradient
descent principle, an approach called \emph{iterative
hard-thresholding}~\citep{starck2003,herrity,blumensath2009}, designed to find an
approximate solution of either~(\ref{eq:l0a}) or 
\begin{equation}
   \min_{\alphab \in \Real^p} \frac{1}{2}\|\x-\D\alphab\|_2^2 + \lambda \|\alphab\|_0, \label{eq:l0b}
\end{equation}
for some penalty parameter~$\lambda$. The method is presented in
Algorithm~\ref{alg:ht} and is an instance of the more general \emph{proximal
gradient descent} algorithms~\citep[see][for a review]{bach2012}. 

\begin{algorithm}[hbtp]
   \caption{Iterative hard thresholding.}\label{alg:ht}
   \begin{algorithmic}[1]
      \REQUIRE Signal $\x$ in~$\Real^m$, dictionary $\D$ in~$\Real^{m \times
      p}$, target sparsity $k$ or penalty parameter~$\lambda$, step size~$\eta$, number of iterations~$T$, initial
      solution~$\alphab_0$ in~$\Real^p$ (with $\|\alphab_0\|_0 \leq k$ if $k$ is provided).
      \STATE Initialize $\alphab \leftarrow \alphab_0$;
      \FOR{$t=1,\ldots,T$}
      \STATE perform one step of gradient descent:
      \begin{displaymath}
         \alphab \leftarrow \alphab + \eta \D^\top (\x-\D\alphab);
      \end{displaymath}
      \STATE choose threshold $\tau$ to be the~$k$-th largest magnitude among the entries of~$\alphab$ if~$k$ is provided, or $\tau=\sqrt{2\lambda}$ if~$\lambda$ is provided;
      \FOR{$j=1,\ldots,p$}
      \STATE hard-thresholding:
      \begin{displaymath}
         \alphab[j] \leftarrow \left\{\begin{array}{cc}
               \alphab[j] & \text{if}~|\alphab[j]| \geq \tau, \\
               0 & \text{otherwise}.
            \end{array}\right.
      \end{displaymath}
      \ENDFOR
      \ENDFOR
      \RETURN the sparse decomposition $\alphab$ in~$\Real^p$.
   \end{algorithmic}
\end{algorithm}

As we will see, when~$\eta$ is smaller than the inverse of the
largest eigenvalue of~$\D^\top\D$, the algorithm monotonically decreases the
value of the objective function. To see this, we can interpret the
iterative hard-thresholding algorithm as a majorization-minimization
scheme~\citep{lange2000}, which consists of minimizing at each iteration a
locally tight upper-bound of the objective. This principle is illustrated in
Figure~\ref{fig:mm}.
\begin{figure}[hbtp]
   \centering
   \includegraphics[page=5]{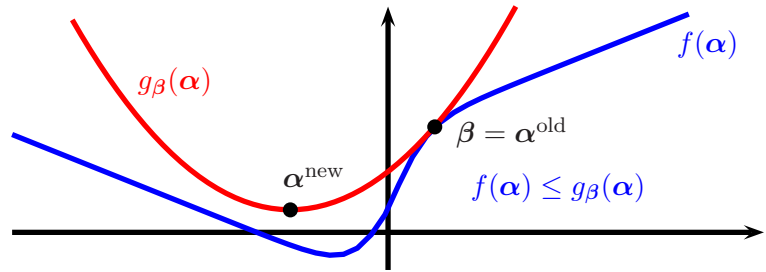}
   \caption{Illustration of the basic majorization-minimization principle. The
   objective function~$f$ (in blue) needs to be minimized. At the current
   iteration, a majorizing surrogate~$g_\betab$ (in red) is computed that is locally tight
at the previous estimate~$\alphab^\text{old}$. After minimizing the surrogate,
we obtain a new estimate~$\alphab^\text{new}$ such that $f(\alphab^{\text{new}}) \leq f(\alphab^{\text{old}})$.}\label{fig:mm}
\end{figure}

The majorizing surrogate can be obtained from the following 
inequality that holds for all~$\alphab$ and~$\betab$ in~$\Real^p$:
\begin{equation}
   \begin{split}
      \frac{1}{2}\|\x-\D\alphab\|_2^2 & = \frac{1}{2}\|\x-\D\betab- \D(\alphab-\betab)\|_2^2  \\
                                      & = \frac{1}{2}\|\x-\D\betab\|_2^2 - (\alphab\!-\!\betab)^\top\D^\top(\x\!-\!\D\betab) +  \frac{1}{2}\|\D(\alphab-\betab)\|_2^2 \\
                                      & \leq \frac{1}{2}\|\x-\D\betab\|_2^2 - (\alphab\!-\!\betab)^\top\D^\top(\x\!-\!\D\betab) +  \frac{1}{2\eta}\|\alphab-\betab\|_2^2 \\
                                      & = \underbrace{C_\betab + \frac{1}{2\eta}\left\| \alphab- \left(\betab + \eta \D^\top(\x\!-\!\D\betab)\right) \right\|_2^2}_{g_\betab(\alphab)}, \label{eq:gbeta}
\end{split}
\end{equation}
where~$C_\betab$ is a term independent of~$\alphab$. All equalities are simple
linear algebra relations and the inequality comes from the fact that~$\z^\top
\D^\top \D\z \leq \frac{1}{\eta}\z^\top\z$ for all~$\z$ in~$\Real^p$,
since~$1/\eta$ is assumed to be larger than the largest eigenvalue
of~$\D^\top\D$. Under this assumption, we have the relation
$\frac{1}{2}\|\x-\D\alphab\|_2^2 \leq g_\betab(\alphab)$ for all~$\alphab$
in~$\Real^p$, with an equality for~$\alphab=\betab$.
Then, we have two possible cases:
\begin{itemize}
   \item if the problem of interest is~(\ref{eq:l0a}), $k$ is provided to
      Algorithm~\ref{alg:ht}, and it is easy to show that the value of~$\alphab$ at
      the end of one iteration is the solution of
      \begin{displaymath}
         \min_{\alphab \in \Real^p} g_\betab(\alphab) \st \|\alphab\|_0 \leq k,
      \end{displaymath}
      where~$\betab$ is the value of~$\alphab$ at the beginning of the iteration.
   \item if one is interested in the penalized problem~(\ref{eq:l0b}), the
      value of~$\alphab$ after each iteration of Algorithm~\ref{alg:ht} is
      instead the solution of
      \begin{displaymath}
         \min_{\alphab \in \Real^p} g_\betab(\alphab) +\lambda \|\alphab\|_0,
      \end{displaymath}
      and the function $\alphab \mapsto g_\betab(\alphab) +\lambda
      \|\alphab\|_0$ is a majorizing surrogate of the objective 
$\frac{1}{2}\|\x-\D\alphab\|_2^2 + \lambda \|\alphab\|_0$.
\end{itemize}
In both cases, Algorithm~\ref{alg:ht} can be interpreted as a
majorization-minimization scheme and the value of the objective function
monotonically decreases.

   \section{Sparse reconstruction with the $\ell_1$-norm}\label{sec:optiml1}
   In the previous section, we have studied three optimization
problems~(\ref{eq:l0a}), (\ref{eq:l0c}) and~(\ref{eq:l0b}). Similarly, we
consider three formulations of interest involving the~$\ell_1$-norm:
\begin{equation}
   \min_{\alphab \in \Real^p} \frac{1}{2}\|\x-\D\alphab\|_2^2 + \lambda\|\alphab\|_1,\label{eq:l1a}
\end{equation}
and
\begin{equation}
   \min_{\alphab \in \Real^p} \|\alphab\|_1 \st \|\x-\D\alphab\|_2^2 \leq \varepsilon,\label{eq:l1b}
\end{equation}
and
\begin{equation}
   \min_{\alphab \in \Real^p}  \|\x-\D\alphab\|_2^2 \st \|\alphab\|_1 \leq \mu.\label{eq:l1c}
\end{equation}
We have also presented three optimization strategies to deal with
the~$\ell_0$-penalty, namely: matching pursuit, orthogonal matching pursuit, and
iterative hard-thresholding algorithms. Interestingly, it is possible to 
deal with the~$\ell_1$-penalty by following similar techniques, even though
the links between $\ell_1$-approaches and~$\ell_0$-ones are only clear
from hindsight.  

\paragraph{Coordinate descent.} 
The counterpart of matching pursuit for $\ell_1$-regularized problems is
probably the coordinate descent method. Even though the general scheme is
classical in optimization~\citep[see][]{bertsekas}, it was first proposed
by~\citet{fu_penalized_1998} for minimizing the Lasso~(\ref{eq:l1a}).  Its
convergence to the exact solution of the convex non-smooth problem such as~(\ref{eq:l1a}) was shown
by~\citet{tseng2009coordinate}, and its empirical efficiency was demonstrated 
by~\citet{wu2008coordinate}.  The coordinate descent approach is presented in
Algorithm~\ref{alg:cd}, where~$S_\lambda$ denotes the soft-thresholding
operator already introduced in Section~\ref{sec:wavelets}:
\begin{displaymath}
   S_\lambda: \theta \mapsto \sign(\theta)\max( |\theta|-\lambda,0),
\end{displaymath}
and the selection rule for choosing~$\hatj$ can be on the following strategies:
\begin{itemize}
   \item cycle in~$\{1,\ldots,p\}$ (the most typical choice);
   \item pick up~$\hatj$ uniformly at random in~$\{1,\ldots,p\}$;
   \item choose the entry~$\hatj$ exactly as in the matching pursuit algorithm.
\end{itemize}
\begin{algorithm}[hbtp]
   \caption{Coordinate descent algorithm for the Lasso~(\ref{eq:l1a}).}\label{alg:cd}
   \begin{algorithmic}[1]
      \REQUIRE Signal $\x$ in~$\Real^m$, dictionary $\D$ in~$\Real^{m \times p}$, penalty parameter~$\lambda$,
      number of iterations~$T$, initial solution~$\alphab_0$ in~$\Real^p$.
      \STATE Initialize $\alphab \leftarrow \alphab_0$;
      \FOR{$t=1,\ldots,T$}
      \STATE select one coordinate~$\hatj$, \eg, by cycling or at random;
      \STATE update the coordinate by soft-thresholding:
      \begin{equation}
         \alphab[\hatj] \leftarrow S_\lambda\left(\alphab[\hatj] + \frac{\d_{\hatj}^\top(\x-\D\alphab)}{\|\d_{\hatj}\|_2^2}\right); \label{eq:updatecd}
      \end{equation}
      \ENDFOR
      \RETURN the sparse decomposition $\alphab$ in~$\Real^p$.
   \end{algorithmic}
\end{algorithm}

It is relatively easy to show that the update~(\ref{eq:updatecd}) is in fact
minimizing the objective~(\ref{eq:l1a}) with respect to the~$\hatj$-th entry
of~$\alphab$ when keeping the others fixed. More precisely,~(\ref{eq:updatecd}) is equivalent to
\begin{displaymath}
   \alphab[\hatj] \leftarrow \argmin_{\alpha \in \Real} \left\| \x- \sum_{l \neq \hatj}
   \alphab[l]\d_l  -  \alpha \d_{\hatj}\right\|_2^2 + \lambda|\alpha|,
\end{displaymath}
which is very similar to the matching pursuit update~(\ref{eq:updatemp2}) but
with the~$\ell_1$-penalty in the formulation.  The benefits of the coordinate
descent scheme are three-fold: (i) it is extremely simple to implement; (ii) it
has no parameter; (iii) it can be very efficient in
practice~\citep[see][Section 8, for a benchmark]{bach2012}.

Assuming that one chooses the cycling selection rule, the complexity of a naive
update of~$\alphab[j]$ is $O(mp)$ for computing the matrix-vector
multiplication~$\D\alphab$.  Fortunately, better complexities can be obtained
by adopting one of the two following alternative strategies:
\begin{enumerate}
   \item during the algorithm, we choose to update an auxiliary variable representing the residual~$\r$ such that we
      always have~$\r=\x-\D\alphab$. Given~$\r$, updating~$\alphab[\hatj]$ only cost~$O(m)$
      operations because of the inner-product~$\d_{\hatj}^\top\r$.
      Then, updating~$\r$ either has zero cost if~$\alphab[\hatj]$ does not
      change, or cost $O(m)$ otherwise. Finally, the per-iteration complexity of this strategy
      is always $O(m)$;
   \item assuming that the matrix~$\D^\top\D$ is pre-computed, it is possible to obtain an even
      better per-iteration complexity by updating an auxiliary
      variable~$\z$ instead of~$\r$ such that we always
      have~$\z=\D^\top(\x-\D\alphab)$.  Given~$\z$, updating~$\alphab[j]$ only
      cost $O(1)$ operations. Updating~$\z$ has either zero cost
      if~$\alphab[j]$ does not change, or cost $O(p)$ operations.  As a result,
      the per-iteration complexity is at most~$O(p)$, but more interestingly,
      each time a variable~$\alphab[j]$ is equal to zero before its update and
      remains equal to zero after the update (a typical scenario for very
      sparse solutions), the cost per iteration is $O(1)$.
\end{enumerate}

Extensions of coordinate descent to deal with more general smooth loss
functions and group-Lasso penalties have also been
proposed~\citep{tseng2009coordinate}. These variants are also easy to
implement, and exhibit good performance in practice~\citep[see][]{bach2012}.

\paragraph{Proximal gradient algorithm - iterative soft-thresholding.} The
second approach that we consider is the $\ell_1$-counterpart
of the iterative hard-thresholding algorithm called ``iterative soft-thresholding''~\citep{nowak2001,figueiredo2003,starck2003,daubechies_iterative_2004}. Originally designed to address~(\ref{eq:l1a}),
it can be easily modified to deal with~(\ref{eq:l1c}).  It is similar to the
hard-thresholding technique of the previous section and is presented in
Algorithm~\ref{alg:prox}, where~$\eta$ satisfies the same property as in the
previous section---that is, $1/\eta$ is larger than the largest eigenvalue
of~$\D^\top\D$. Note that the orthogonal projection~(\ref{eq:l1project}) onto
the~$\ell_1$-ball can be done efficiently in linear
time~\citep{brucker1984,duchi2008efficient}.

\begin{algorithm}[hbtp]
   \caption{Proximal gradient algorithm for~(\ref{eq:l1a}) or~(\ref{eq:l1c}).}\label{alg:prox}
   \begin{algorithmic}[1]
      \REQUIRE Signal $\x$ in~$\Real^m$, dictionary $\D$ in~$\Real^{m \times
      p}$, regularization parameter~$\lambda$ or $\mu$, gradient descent step size~$\eta$, number of iterations~$T$, initial
      solution~$\alphab_0$ in~$\Real^p$ (with $\|\alphab_0\|_1 \leq T$ if $T$ is provided).
      \STATE Initialize $\alphab \leftarrow \alphab_0$;
      \FOR{$t=1,\ldots,T$}
      \STATE perform one step of gradient descent:
      \begin{displaymath}
         \alphab \leftarrow \alphab + \eta \D^\top (\x-\D\alphab);
      \end{displaymath}
      \IF{$\lambda$ is provided}
      \STATE soft-thresholding: for all~$j=1,\ldots,p$,
      \begin{displaymath}
         \alphab[j] \leftarrow S_\lambda(\alphab[j]);
      \end{displaymath}
      \ELSE
         \STATE $\mu$ is provided and one performs the orthogonal projection
         \begin{equation}
            \alphab \leftarrow \argmin_{\betab \in \Real^p} \left[\frac{1}{2}\|\alphab-\betab\|_2^2 \st \|\betab\|_1 \leq \mu\right]; \label{eq:l1project}
         \end{equation}
      \ENDIF
      \ENDFOR
      \RETURN the sparse decomposition $\alphab$ in~$\Real^p$.
   \end{algorithmic}
\end{algorithm}

Then, the method can also be interpreted under the majorization-minimization
point of view illustrated in Figure~\ref{fig:mm}.
Indeed, let us consider the two cases:
\begin{itemize}
   \item if the problem of interest is~(\ref{eq:l1a}), the user provides the
      parameter~$\lambda$, and it is easy to show that the value of~$\alphab$
      at the end of one iteration is the solution of
      \begin{displaymath}
         \min_{\alphab \in \Real^p} g_\betab(\alphab) +\lambda \|\alphab\|_1,
      \end{displaymath}
      where~$g_\betab$ is defined in~(\ref{eq:gbeta}), 
      and the function $\alphab \mapsto g_\betab(\alphab) +\lambda
      \|\alphab\|_1$ is a majorizing surrogate of the objective
      $({1}/{2})\|\x-\D\alphab\|_2^2 + \lambda \|\alphab\|_1$.
   \item if one is interested instead in the penalized problem~(\ref{eq:l1c}), the
      value of~$\alphab$ after each iteration of Algorithm~\ref{alg:ht} is
      the solution of
      \begin{displaymath}
         \min_{\alphab \in \Real^p} g_\betab(\alphab) \st \|\alphab\|_1 \leq \mu,
      \end{displaymath}
      where~$\betab$ is the value of~$\alphab$ at the beginning of the
      iteration.
\end{itemize}
The majorization-minimization principle explains why the algorithm
monotonically decreases the value of the objective function, but a more
refined analysis can provide convergence rates. In general, we know
that the objective function value converges to the optimal one with the
convergence rate~$O(1/t)$, and that some variants such as FISTA admit a better
rate~$O(1/t^2)$, as shown by~\citet{beck2009,nesterov2007gradient}.  These
methods are relatively easy to implement, and choosing~$\eta$ is not an issue
in practice.  When an appropriate value for~$\eta$ is not available beforehand, 
it is possible to automatically find a good one with a line-search scheme.

Similarly as coordinate descent, extensions to more general problems than
$\ell_1$-regularized least squares are possible. In general proximal gradient
methods are indeed adapted to solving convex problems of the form
\begin{displaymath}
   \min_{ \alphab \in \AA} f(\alphab) + \psi(\alphab),
\end{displaymath}
where~$\AA$ is a convex set in~$\Real^p$, $f$ is differentiable and its
gradient is uniformly Lipschitz continuous, and~$\psi$ is a convex nonsmooth
function such that the so-called \emph{proximal operator} of~$\psi$
\begin{displaymath}
   \argmin_{\alphab \in \AA} \frac{1}{2}\|\alphab-\betab\|_2^2+\psi(\alphab),
\end{displaymath}
can be efficiently computed for all~$\betab$ in~$\Real^p$.  As a result,
proximal gradient methods have been successfully used with the Group Lasso
penalty including the hierarchical and structured variants of
Section~\ref{sec:l1}~\citep[see][]{Jenatton2010b,mairal10}.

\paragraph{Homotopy and LARS algorithms}
Finally, the homotopy method \citep{osborne2000,efron} is related to orthogonal
matching pursuit, but for solving the~$\ell_1$-regularized
problems~(\ref{eq:l1a}), (\ref{eq:l1b}), and (\ref{eq:l1c}). Similarly to OMP,
the algorithm maintains an active-set of variables~$\Gamma$, initialized with
the empty set, and iteratively updates it by one variable at a time. 

The homotopy method follows in fact the regularization path of the Lasso---that
is, it finds the set of all solutions of~(\ref{eq:l1a}) for all values of the
regularization parameter~$\lambda$, formally defined as
\begin{displaymath}
   \PP \defin \{ \alphab^\star(\lambda) : \lambda > 0 \},
\end{displaymath}
where  $\alphab^\star(\lambda)$ is the solution of~(\ref{eq:l1a}), which is
assumed to be unique under a small technical
assumption~\citep[see][]{osborne2000,mairal2012b}. A remarkable property of the path
is that it can be shown to be piecewise linear. In other words, it can be
described by a finite number~$k$ of linear segments, as illustrated earlier in
Figure~\ref{fig:path}:
\begin{equation}
   \PP = \bigcup_{l=1}^k ~\left\{ \theta \alphab^\star(\lambda_l) + (1-\theta)\alphab^\star(\lambda_{l+1}) :  \theta \in [0,1] \right\} . \label{eq:pathlin}
\end{equation}
The homotopy method finds a point on the path, computes the direction of the
current linear segment, and follows it until the direction of the path changes.
The algorithm either returns the full regularization path---that is, the
sequence of~$\alphab^\star(\lambda_l)$ from Eq.~(\ref{eq:pathlin}), or stops if it
reaches a solution for a target~$\lambda$, a target residual~$\epsilon$,
or~$\ell_1$-norm~$\mu$. Thus, it can be used for solving the three
variants~(\ref{eq:l1a}), (\ref{eq:l1b}), or (\ref{eq:l1c}).

The piecewise linearity of the path was discovered by~\citet{markowitz} in the
context of portfolio selection, and was turned into an effective algorithm
for solving the Lasso by~\citet{osborne2000} and~\citet{efron}. It should be also noted that
solving~(\ref{eq:l1a}) for all values of~$\lambda$ is in fact an instance of
\emph{parametric quadratic programming}, for which path following algorithm
encompassing the homotopy method appear early in the optimization
literature~\citep{ritter}.  The homotopy turns out to be extremely efficient
for solving very sparse
medium-scale problems~\citep[see][]{bach2012}, even though its worst-case
complexity has been shown to be exponential~\citep{mairal2012b}.
Similar to the simplex algorithm for solving linear programs, the empirical
complexity is therefore extremely bad in the worst-case scenario, but it is most
often very good in practical ones.

Following the presentation of~\citet{mairal2012b}, we describe the method in
Algorithm~\ref{alg:homotopy}. The formula for the direction of the path is
given in Eq.~(\ref{eq:path}), where the quantity~$\etab$ is defined as $\etab
\defin \sign(\D^\top(\x-\D\alphab^\star))$ in~$\{-1,0,1\}^p$. This relation can
be derived from optimality conditions of the Lasso presented in Section~\ref{sec:l1},
but the derivation requires
technical developments that are beyond the scope of the monograph. We refer instead
to~\citet{mairal2012b} for more details.  Interestingly, implementing
efficiently Algorithm~\ref{alg:homotopy} raises similar difficulties
as for implementing orthogonal matching pursuit. We need indeed to be able to
update the inverse matrices~$(\D_\Gamma^\top \D_\Gamma)^{-1}$ at
each iteration when the active-set~$\Gamma$ changes. The Cholesky decomposition
or the Woodbury formula~\citep[see][]{golub2012} can be used for that purpose.

For example, let us give a few details on how the Woodbury formula can be useful.
Denoting by~$\Gamma' \defin \Gamma \cup \{ \hatj \}$, we have the classical relation
\begin{displaymath}
   \left(\D_{\Gamma'}^\top \D_{\Gamma'}\right)^{-1} = \left[ \begin{array}{cc}
         (\D_{\Gamma}^\top \D_{\Gamma})^{-1} + \frac{1}{s} \z \z^\top &  -\frac{1}{s} \z \\
         -\frac{1}{s} \z^\top  & \frac{1}{s} 
   \end{array} \right],
\end{displaymath}
where~$s \defin \d_\hatj^\top \d_\hatj - \d_\hatj^\top (\D_{\Gamma}^\top
\D_{\Gamma})^{-1}  \d_\hatj$ is called the \emph{Schur complement}, and~$\z
\defin (\D_{\Gamma}^\top \D_{\Gamma})^{-1}  \d_\hatj$.  Thus, computing
$(\D_{\Gamma'}^\top \D_{\Gamma'})^{-1}$ given $(\D_{\Gamma}^\top
\D_{\Gamma})^{-1}$ from the previous iteration can be done by a few matrix-vector multiplications only.  In the
same vein, obtaining $(\D_{\Gamma}^\top
\D_{\Gamma})^{-1}$ from $(\D_{\Gamma'}^\top \D_{\Gamma'})^{-1}$ is also a
matter of manipulating a few linear algebra relations.

We also remark that the implementation of the homotopy can benefit from the
pre-computation of the matrix~$\D^\top\D$ in the exact same way as for
orthogonal matching pursuit, such that both implementations have in fact the
same complexity (up to a constant factor that is slightly larger for the
homotopy method).

\begin{algorithm}[hbtp]
\caption{Homotopy algorithm for the Lasso.}\label{alg:homotopy}
\begin{algorithmic}[1]
   \REQUIRE Signal $\x$ in~$\Real^m$, dictionary $\D$ in~$\Real^{m \times p}$.
   \STATE $\lambda \leftarrow \|\D^\top\x\|_\infty$; 
   \STATE $\Gamma \leftarrow \{ j \in \{ 1,\ldots,p \} : |\d_j^\top\x|
    = \lambda\}$;
     \WHILE{$\lambda > 0$ or according to another stopping criterion} 
     \STATE compute the current direction of the regularization path: 
     \begin{equation}
        \alphab_\Gamma^\star(\lambda) = (\D_\Gamma^\top\D_\Gamma)^{-1}(\D_\Gamma^\top\x-\lambda \etab_\Gamma)~~\text{and}~~\alphab^\star_{\Gamma^\complement}(\lambda)=0.\label{eq:path}
     \end{equation}
     \STATE find the smallest~$\tau > 0$ such that one of the following events occurs:
         \begin{itemize}
            \item there exists $j$ in $\Gamma^\complement$ such that $|\d_j^\top(\x-\D\alphab^\star(\lambda-\tau))| =\lambda-\tau$; then, add $j$ to~$\Gamma$;
            \item there exists $j$ in $\Gamma$ such that $\alphab^\star(\lambda-\tau)[j]$ hits zero; then, remove $j$ from $\Gamma$.
         \end{itemize}
         It is also easy to show that the value of $\tau$ can be obtained in closed form such that one can ``jump'' from a kink to another;
     \STATE update $\lambda \leftarrow \lambda-\tau$; record the pair $(\lambda,\alphab^\star(\lambda))$; 
     \ENDWHILE
\STATE {\bf{Return:}} sequence of recorded values~$(\lambda,\alphab^\star(\lambda))$.
\end{algorithmic}
\end{algorithm}

   \section{Iterative reweighted-$\ell_1$ methods} \label{sec:reweighted}
   We have mentioned in the introduction of this monograph that non-convex
sparsity-inducing penalties are often used in sparse estimation. These
regularization functions typically have the form
\begin{equation}
   \psi(\alphab)  = \sum_{j=1}^p \varphi\left(|\alphab[j]|\right), \label{eq:dc}
\end{equation}
where the functions~$\varphi$ are concave, non-decreasing and differentiable on~$\Real_+$.  A
natural way of dealing with such penalties when they appear in the objective
function is to use an approach called DC-programming, where DC stands for
``difference of convex''~\citep[see][for a review]{gasso2009}.
Even though the resulting optimization problems are non-convex and impossible
to solve exactly in a reasonable amount of time, DC-programming is a simple
technique to obtain a stationary point by using the majorization-minimization
principle already illustrated in Figure~\ref{fig:mm}.
It consists of exploiting a majorizing surrogate of~$\psi$ obtained by
linearizing the function~$\varphi$ in~(\ref{eq:dc}), which is upper-bounded by
its linear approximations because of concavity, as shown in
Figure~\ref{fig:dc}.
\begin{figure}[hbtp]
   \centering
    \includegraphics[page=6]{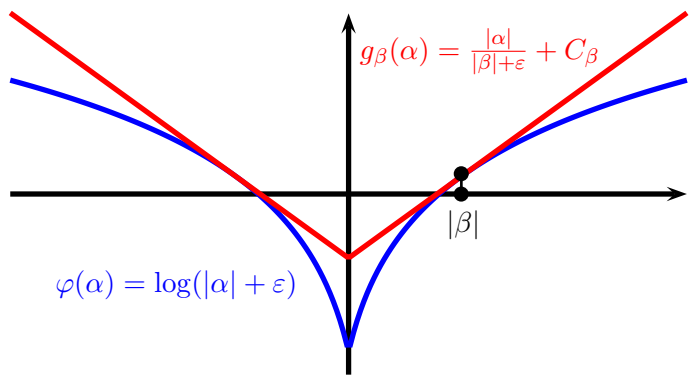}
\caption{Illustration of the DC-programming approach. The function $\varphi$ (in blue) is upper-bounded by its linear approximation (in red), which is tight at the point~$\beta$. The resulting majorizing surrogate is a weighted $\ell_1$-norm plus a constant~$C_\beta$.}
\label{fig:dc}
\end{figure}

For example, for minimizing
\begin{displaymath}
   \min_{\alphab \in \Real^p} \frac{1}{2}\|\x-\D\alphab\|_2^2 + \lambda \sum_{j=1}^p \log(|\alphab[j]|+\varepsilon),
\end{displaymath}
the resulting majorization-minimization algorithm solves a sequence of weighted Lasso problems at each iteration
\begin{equation}
   \alphab^{\text{new}} \leftarrow \argmin_{\alphab \in \Real^p} \left[\frac{1}{2}\|\x-\D\alphab\|_2^2 + \lambda \sum_{j=1}^p \frac{|\alphab[j]|}{|\alphab^\text{old}[j]| + \varepsilon} \right], \label{eq:reweighted}
\end{equation}
where~$\alphab^\text{old}$ denotes the current estimate at the beginning of the
iteration.  The update~(\ref{eq:reweighted}) is a reweighted-$\ell_1$
algorithm, which is known to provide sparser solutions than the regular Lasso.
Such majorization-minimization approaches for non-convex sparse estimation have
been introduced in various contexts. For example, \citet{fazel2002,fazel2003} and
then later~\citet{candes4} use such a principle for the regularization
function~$\varphi: x \mapsto \log(x+\varepsilon)$,
whereas~\citet{figueiredo2005,figueiredo2007} propose majorization-minimization
algorithms for dealing with the~$\ell_q$-penalty with~$q < 1$.

   \section{Iterative reweighted-$\ell_2$ methods}
   We now focus on another type of algorithm for dealing with convex regularizers
such as the~$\ell_1$-norm. Similar to the reweighted-$\ell_1$-algorithms of the
previous section, the approaches we present here consist of solving a sequence
of sub-problems, each one involving a simple penalty.  The key idea is that
many non-smooth convex regularizers can be seen as minima of quadratic
functions, leading to reweighted $\ell_2$-algorithms.
 
\paragraph{Variational formulation of the $\ell_1$-norm.}
We start with the simplest case of the $\ell_1$-norm, which admits the 
following variational form~\citep{grandvalet1999,ingrid2010iteratively}:
$$
\| \alphab \|_1 = \inf_ { \boldsymbol{\eta} \in \Real^p_+ } \frac{1}{2}
\sum_{j=1}^p \left\{ \frac{\alphab[j] ^2}{ \boldsymbol{\eta}[j]} +
\boldsymbol{\eta}[j] \right\},
$$
with the optimal $ \boldsymbol{\eta} \in \Real^p_+$ obtained as $ \boldsymbol{\eta}[j] = | \alphab[j]|$. 
The Lasso optimization problem then becomes:
$$
\min_{\alphab \in \Real^p }  \min_{  \boldsymbol{\eta} \in \Real_+^p} \frac{1}{2} \| \x  - \D  \alphab \|_2^2 + \frac{\lambda}{2} \sum_{j=1}^p \left\{ \frac{\alphab[j] ^2}{ \boldsymbol{\eta}[j]} +  \boldsymbol{\eta}[j] \right\} .
$$ 
Fortunately, this problem is jointly convex in $(\alphab,\boldsymbol{\eta})$. The
minimization with respect to $ \boldsymbol{\eta}$ given $\alphab$ can be done
in closed form (by taking absolute values of the components of $\alphab$); the
minimization with respect to~$\alphab$ given $ \boldsymbol{\eta}$ is a
weighted least-squares problem, which can be solved by the solution of a
linear system as:
$$
 \alphab = \big[ \D ^\top \D + \lambda {\rm Diag} (  \boldsymbol{\eta})^{-1} \big]^{-1} \D^\top \x.
$$
Note that since the linear system above is ill-conditioned when some entries of~$\etab$ are very small, it is better to equivalently reformulate it as
$$ \alphab  =  {\rm Diag} (  \boldsymbol{\eta})^{1/2} \big[ 
 {\rm Diag} (  \boldsymbol{\eta})^{1/2}   \D ^\top \D  {\rm Diag} (  \boldsymbol{\eta})^{1/2}  + \lambda \mathbf{I} \big]^{-1}  {\rm Diag} (  \boldsymbol{\eta})^{1/2}  \D^\top \x. $$
 This naturally leads to   alternating minimization procedures where one alternates between minimizing with respect to $ \boldsymbol{\eta}$ as above in closed form, and minimizing with respect to $\alphab$, which is now easier because the regularizer has become a quadratic function. With quadratic losses, this leads to a linear system for which many existing algorithms exist. However, because the function $ (\alphab[j],\boldsymbol{\eta}[j] ) \mapsto  {\alphab[j] ^2}/{ \boldsymbol{\eta}[j]}$ cannot be extended to be continuous at $(0,0)$, alternating minimization does not converge and it is customary to add a penalty term $ \sum_{j=1}^p \frac{\varepsilon}{2 \boldsymbol{\eta}[j]  }$ with $\varepsilon >0$ small, which leads to an update for $\boldsymbol{\eta}[j] $ of the form $\boldsymbol{\eta}[j]  = \sqrt{ | \alphab[j]|^2 + \varepsilon}$ that avoids instabilities. See more details in~\citet[][Section 5]{bach2012}.

\paragraph{Extensions to $\ell_q$-quasi-norms.}
More generally, for any $q \in (0,2)$, and $\| \alphab \|_q = \big( \sum_{j=1}^q |\alphab[j]|^q \big)^{1/q}$ and $r = q/(2-q)$, we have
\citep{jenatton2010sspca}:
$$
\| \alphab \|_q = \inf_ { \boldsymbol{\eta} \in \Real_+^p } \frac{1}{2} \sum_{j=1}^p \frac{\alphab[j] ^2}{ \boldsymbol{\eta}[j]} 
    + \frac{1}{2} \| \boldsymbol{\eta} \|_r, $$
    with the optimal $ \boldsymbol{\eta}$ in $\Real_+^p$ obtained as $ \boldsymbol{\eta}[j] = | \alphab[j]|^{2-q} \| \alphab\|_q^{q-1}$. Note that similar variational formulations may be obtained for the squared quasi-norms~\citep{micchelli2005learning}. This gives an alternative to reweighted $\ell_1$-formulations presented in Section~\ref{sec:reweighted} for $q <1 $ (the non-convex case).
 
\paragraph{Extensions to group norms.}
The various formulations above can be extended to structured sparsity-inducing
norms, such as the ones presented in Section~\ref{sec:l1}. For example, for the
 group-lasso norm in Eq.~(\ref{eq:grouplasso}), we have:
$$
 \sum_{g \in \GG} \|\alphab[g]\|_2
 = \inf_ { \boldsymbol{\eta} \in \Real_+^p } \frac{1}{2} \sum_{g \in \GG} \bigg\{ \frac{\|\alphab[g]\|_2^2}{ \boldsymbol{\eta}[g]} +  \boldsymbol{\eta}[g]
 \bigg\},
$$
which also leads to a quadratic function of $\alphab$ and to simple algorithms
based on solving linear systems when the loss is quadratic. Note that these
quadratic variational formulations  extend to all norms and  lead to multiple
kernel learning formulations when used with positive-definite
kernels~\citep[see more details in][]{bach2012}.

   \section{Optimization for dictionary learning}\label{sec:optimdict}
   Finally, we review a few dictionary learning algorithms that leverage the
optimization algorithms for sparse estimation presented in the
previous sections. We start with the stochastic gradient descent
method~\citep[see][and references therein]{kushner2003,bottou2008tradeoffs}
since it is very close to the original algorithm
of~\citet{olshausen1997}.  Then, we move to other classical approaches such as
alternate minimization~\citep{engan1999,ng-sparsecoding}, block coordinate
descent, K-SVD~\citep{aharon2006}, and then present efficient methods based on
stochastic approximations~\citep{mairal2010,skretting2010}.

We consider two dictionary learning formulations. For both of them, the goal is
to learn a dictionary~$\D$ in~$\CC$, where~$\CC$ is the set of matrices
in~$\Real^{m \times p}$ whose columns are constrained to have less than unit
$\ell_2$-norm. Then, we use a sparsity-inducing penalty~$\psi$ either as a
penalty
\begin{equation}
   \min_{\D \in \CC, \A \in \Real^{p \times n}} \frac{1}{n} \sum_{i=1}^n \frac{1}{2}\|\x_i-\D\alphab_i\|_2^2 + \lambda\psi(\alphab_i), \label{eq:dictpenalty}
\end{equation}
or as a constraint
\begin{equation}
   \min_{\D \in \CC, \A \in \Real^{p \times n}} \frac{1}{n} \sum_{i=1}^n \frac{1}{2}\|\x_i-\D\alphab_i\|_2^2  \st \psi(\alphab_i) \leq \mu,\label{eq:dictconstraint}
\end{equation}
and~$\X=[\x_1,\ldots,\x_n]$ in~$\Real^{m \times n}$ is the training set of
signals. Even though these problems are non-convex, many algorithms have been
developed with empirical good performance for various tasks, \eg,
image denoising. In general, these methods have no other guarantee than
providing a stationary point (in the best case), but they have been used
successfully in practical contexts, as shown in other parts of this
monograph.

\paragraph{Stochastic gradient descent.}
The first algorithm proposed by \citet{field1996,olshausen1997} addresses
the formulation~(\ref{eq:dictpenalty}), where~$\psi$ is
the~$\ell_1$-norm or a smooth approximate sparsity-inducing penalty such as
$\psi(\alphab) = \sum_{j=1}^p \log( \alphab[j]^2+ \varepsilon)$.  The algorithm
can be described as a heuristic stochastic gradient descent, which alternates
between two stages: (i) gradient steps with a fixed step size computed on
mini-batches of the training set and (ii) rescaling heuristic that prevents the
norm of the columns~$\d_j$ to grow out of bounds---thus, taking implicitly the
constraint~$\D \in \CC$ into account.

A less heuristic but very related procedure is the projected stochastic gradient
descent method, which is presented in Algorithm~\ref{alg:sgddict}. This
approach relies on the following equivalent formulation to~(\ref{eq:dictpenalty}):
\begin{equation}
   \min_{\D \in \CC} \frac{1}{n} \sum_{i=1}^n L(\x_i,\D), \label{eq:expectn}
\end{equation}
where
\begin{equation}
   L(\x_i,\D)  \defin \min_{\alphab \in \Real^p} \frac{1}{2}\|\x_i - \D\alphab\|_2^2 + \lambda \|\alphab\|_1. \label{eq:functionL}
\end{equation}
At each iteration, the algorithm selects one signal~$\hati$ and performs one
gradient step $\D \leftarrow \D - \eta_t \nabla_\D L(\x_\hati,\D)$ with a step size~$\eta_t$. 
It can indeed be shown that
the function~$\D \to L(\x_\hati,\D)$ is differentiable according to Danskin's
theorem~\citep[see][Proposition B.25]{bertsekas} and that its gradient admits a closed form $\nabla_\D
L(\x_\hati,\D) = -(\x_\hati -\D\alphab_\hati)\alphab_\hati^\top$,
where~$\alphab_\hati$ is defined in~(\ref{eq:dictalphai}).
Then, the algorithm performs a Euclidean projection~$\Pi_\CC$ onto the convex
set~$\CC$, which corresponds to the renormalization update $\d_j \to
(1/\max(\|\d_j\|_2,1))\d_j$ for all~$j$ in~$\{1,\ldots,p\}$.
A practical variant also consists of using mini-batches instead of drawing a
single data point at each iteration. For simplicity, we omit this variant in
our description of Algorithm~\ref{alg:sgddict}.

The stochastic gradient descent approach is effective \citep[see][for a
benchmark]{mairal2010}, but it raises a few practical difficulties such as
choosing well the step sizes~$\eta_t$ in a data-independent manner. Classical
strategies use for instance step sizes of the
form~$\eta_t=\eta/(t+t_0)^\gamma$, but finding an appropriate set of
hyper-parameters~$\eta,t_0$, and~$\gamma$ may be challenging in practice.

\begin{algorithm}[hbtp]
   \caption{Stochastic gradient descent for~(\ref{eq:dictpenalty}).}\label{alg:sgddict}
   \begin{algorithmic}[1]
      \REQUIRE Signals $\X=[\x_1,\ldots,\x_n]$ in~$\Real^{m \times n}$, initial dictionary $\D_0$ in~$\CC$, penalty parameter~$\lambda$,
      number of iterations~$T$.
      \STATE Initialize $\D \leftarrow \D_0$;
      \FOR{$t=1,\ldots,T$}
      \STATE select one signal~$\x_{\hati}$ at random;
      \STATE compute the sparse code:
      \begin{equation}
         \alphab_\hati \in \argmin_{\alphab \in \Real^p} \frac{1}{2}\|\x_i-\D\alphab\|_2^2 + \lambda\psi(\alphab); \label{eq:dictalphai}
      \end{equation}
      \STATE perform one projected gradient descent step:
      \begin{equation*}
         \D \leftarrow \Pi_{\CC}\left[\D - \eta_t(\D \alphab_\hati - \x_i) \alphab_\hati^\top \right];
      \end{equation*}
      \ENDFOR
      \RETURN the sparse decomposition $\alphab$ in~$\Real^p$.
   \end{algorithmic}
\end{algorithm}

\paragraph{Alternate minimization.}
The most classical approach for dictionary learning is the alternate
minimization scheme, which consists of optimizing~(\ref{eq:dictpenalty})
or~(\ref{eq:dictconstraint}) by alternating between two minimization steps: one
with respect to~$\D$ with the sparse codes~$\A$ fixed, and one with respect
to~$\A$ with~$\D$ fixed. In practice, this approach is not as fast as a
well-tuned stochastic gradient descent algorithm, but it is parameter-free and
we subjectively find it very reliable according to our experience for
image processing and computer vision tasks. 

Alternate minimization was first proposed by~\citet{engan1999} for dealing with the~$\ell_0$-penalty
with the method of optimal directions (MOD), and was revisited later
by \citet{ng-sparsecoding} with the~$\ell_1$-regularization. We present the
MOD approach in Algorithm~\ref{alg:mod} and its~$\ell_1$-counterpart in
Algorithm~\ref{alg:alt}. As we shall see, they slightly differ in the
dictionary update step, which can be simplified for the~$\ell_0$-penalty.

The sparse codes~$\alphab_i$ for MOD are obtained with any algorithm for dealing
approximately with~$\ell_0$, \eg, any technique from Section~\ref{sec:optiml0}.
Since all the vectors~$\alphab_i$ can be computed in parallel, it is worth mentioning
that this sparse decomposition step can benefit from an efficient
implementation of OMP including the heuristics of Section~\ref{sec:optiml0},
notably those consisting of pre-computing~$\D^\top\D$ before updating all the~$\alphab_i$'s in parallel.
For dealing with~$\ell_1$, the homotopy method can be used similarly, with the
same heuristics that apply to OMP. 
\begin{algorithm}[hbtp]
   \caption{Method of optimal directions for~$\psi=\ell_0$.}\label{alg:mod}
   \begin{algorithmic}[1]
      \REQUIRE Signals $\X=[\x_1,\ldots,\x_n]$ in~$\Real^{m \times n}$, initial dictionary $\D_0$ in~$\CC$, regularization parameter~$\mu$,
      number of iterations~$T$.
      \STATE Initialize $\D \leftarrow \D_0$;
      \FOR{$t=1,\ldots,T$}
      \STATE compute the sparse codes, \eg, with OMP:
      \FOR{$i=1,\ldots,n$}
      \STATE
      \begin{equation*}
         \alphab_i \approx \argmin_{\alphab \in \Real^p} \left[\frac{1}{2}\|\x_i-\D\alphab\|_2^2 \st  \|\alphab\|_0 \leq \mu \right]; \label{eq:dictalphaimod}
      \end{equation*}
      \ENDFOR
      \STATE update the dictionary~$\D$:
      \begin{equation}
         \D \leftarrow   \Pi_{\CC}[\X\A^\top (\A\A^\top)^{-1}]; \label{eq:updateDmod}
      \end{equation}
      \ENDFOR
      \RETURN the dictionary $\D$ in~$\CC$.
   \end{algorithmic}
\end{algorithm}
\begin{algorithm}[hbtp]
   \caption{Alternate minimization for~$\psi=\ell_1$.}\label{alg:alt}
   \begin{algorithmic}[1]
      \REQUIRE Signals $\X=[\x_1,\ldots,\x_n]$ in~$\Real^{m \times n}$, initial
      dictionary $\D_0$ in~$\CC$, regularization parameter~$\lambda$ or~$\mu$,
      number of iterations~$T$.
      \STATE Initialize $\D \leftarrow \D_0$;
      \FOR{$t=1,\ldots,T$}
      \STATE compute the sparse codes, \eg, with the homotopy method:
      \FOR{$i=1,\ldots,n$}
      \STATE
      update~$\alphab_i$ by solving~(\ref{eq:l1a}) with~$\x=\x_i$ if~$\lambda$ is provided, or~(\ref{eq:l1c}) if~$\mu$ is provided.
      \ENDFOR
      \STATE update the dictionary~$\D$:
      \begin{equation}
         \D \in \argmin_{\D \in \CC} \frac{1}{n}\sum_{i=1}^n\frac{1}{2}\|\x_i-\D\alphab_i\|_2^2; \label{eq:updateDl1}
      \end{equation}
      \ENDFOR
      \RETURN the dictionary $\D$ in~$\CC$.
   \end{algorithmic}
\end{algorithm}

Moving on now to the dictionary update step, we remark that~(\ref{eq:updateDmod})
and~(\ref{eq:updateDl1}) are not the same and are not equivalent to each other,
even though, at first sight, the optimization problem with respect to~$\D$ when~$\A$ is fixed
seems to be independent of the regularization~$\psi$.  The MOD
update indeed minimizes the least-square function~(\ref{eq:updateDl1}) by
discarding the constraint~$\D$ in~$\CC$, yielding a solution~$\X\A^\top
(\A\A^\top)^{-1}$ (assuming $\A\A^\top$ to be invertible), before projecting it
onto the constraint set~$\CC$. In constrained
optimization~\citep[see][]{bertsekas}, it is usually not appropriate to discard
a constraint, solve the resulting unconstrained formulation, before projecting
back the solution onto the constraint set. In general, such a procedure can be
indeed arbitrarily bad regarding the original constrained problem.  In the
context of~$\ell_0$-regularized dictionary learning, however, such an approach
does fortunately make sense. The objective function~(\ref{eq:dictconstraint})
when~$\psi=\ell_0$ is indeed invariant to a rescaling operation consisting of
multiplying each column of~$\d_j$ by a scalar~$\gamma_j$ and the
corresponding~$j$-th row of~$\A$ by~$(1/\gamma_j)$. As a consequence, the
minimum value of the objective function is independent of the scale of the
columns of~$\D$; thus the constraint~$\D \in \CC$ can always be safely
ignored and the normalization~$\D \leftarrow \Pi_\CC[\D]$ can be applied at any
moment without changing the quality of the dictionary. 

In the case of~$\ell_1$, the objective function is not invariant to the
previous rescaling operation and the MOD update for the dictionary
should not be used anymore. Instead, the objective
function~(\ref{eq:updateDl1}) needs to be minimized with another approach.
Since it is convex,~\citet{ng-sparsecoding} propose to use a Newton method in
the dual of~(\ref{eq:updateDl1}), which requires solving~$p \times p$ linear
systems at every Newton iteration. Another easy-to-implement choice is the
block coordinate descent update of~\citet{mairal2010}, which we present
in Algorithm~\ref{alg:updateD}. Each step of the block coordinate descent
approach is guaranteed to decrease the value of the objective function, while
keeping~$\D$ in the constraint set~$\CC$, and ultimately
solve~(\ref{eq:updateDl1}) since the problem is convex~\citep[see][for
convergence results of block coordinate descent methods]{bertsekas}.
In practice,~(\ref{eq:updateDl1}) does not need to be solved exactly and
performing a few passes, say 10, over the columns of~$\D$ seems to perform well
in practice.

The update~(\ref{eq:updated}) can be justified as follows.
Minimizing~(\ref{eq:updateDl1}) with respect to one column~$\d_j$ when keeping
the other columns fixed can be formulated as
\begin{displaymath}
   \d_j \leftarrow \argmin_{\d \in \Real^m, \|\d\|_2 \leq 1} \left[ \sum_{i=1}^n
      \frac{1}{2}\left\|\x_i - \sum_{\substack{l \neq j}} \alphab_i[l]\d_l - \alphab_i[j]\d\right\|_2^2
\right].
\end{displaymath}
Then, this formulation can be rewritten in a matrix form
\begin{displaymath}
   \d_j \leftarrow \argmin_{\d \in \Real^m, \|\d\|_2 \leq 1} \left[ \frac{1}{2}\left\|\X - \D\A + \d_j\alphab^j -\d \alphab^j\right\|_\text{F}^2
\right],
\end{displaymath}
where~$\d_j$ on the right side is the value of the variable~$\d_j$ before the
update, and~$\alphab^j$ in~$\Real^{1 \times n}$ is the $j$-th row of the
matrix~$\A$. After expanding the Frobenius norm and removing the constant term, we obtain
\begin{displaymath}
   \begin{split}
      \d_j & \leftarrow \argmin_{\d \in \Real^m, \|\d\|_2 \leq 1} \left[- \d^\top(\X - \D\A + \d_j\alphab^j)\alphab^{j\top} +  \frac{1}{2}\left\|\d\alphab^j\right\|_\text{F}^2 \right] \\
& = \argmin_{\d \in \Real^m, \|\d\|_2 \leq 1} \left[- \d^\top (\b_j - \D \c_j + \d_j \C[j,j]) +  \frac{1}{2}\left\|\d\right\|_2^2\C[j,j] 
\right] \\
& = \argmin_{\d \in \Real^m, \|\d\|_2 \leq 1} \left[ \frac{1}{2} \left\| \frac{1}{\C[j,j]} (\b_j - \D \c_j) + \d_j -\d \right\|_2^2 
\right], 
   \end{split}
\end{displaymath}
where the quantities~$\B=[\b_1,\ldots,\b_p]$ in~$\Real^{m \times p}$
and~$\C=[\c_1,\ldots,\c_p]$ in~$\Real^{p \times p}$ are defined in
Algorithm~\ref{alg:updateD}. Finally, we see from the last equation that
updating~$\d_j$ amounts to performing one orthogonal projection of the vector
$({1}/{\C[j,j]})(\b_j - \D \c_j) + \d_j$ onto the unit Euclidean ball, leading to the
update~(\ref{eq:updated}).

\begin{algorithm}[hbtp]
   \caption{Dictionary update with block coordinate descent.}
   \label{alg:updateD}
   \begin{algorithmic}[1]
      \REQUIRE $\D_0 \in \CC$ (input dictionary); $\X \in \Real^{m \times n}$ (dataset); $\A \in \Real^{p \times n}$ (sparse codes); 
      \STATE Initialization: $\D \leftarrow \D_0$; $\B \leftarrow \X\A^\top$; $\C \leftarrow \A\A^\top$;
      \REPEAT
      \FOR {$j = 1,\ldots,p$}
      \STATE update the $j$-th column to optimize for (\ref{eq:updateDl1}):
      \begin{equation} 
         \begin{split}
            \d_j &\leftarrow \frac{1}{\C[j,j]}(\b_j-\D\c_j) + \d_j, \\
            \d_j &\leftarrow \frac{1}{\max(\|\d_j\|_2,1)}\d_j. \label{eq:updated}
         \end{split}
      \end{equation}
      \ENDFOR
      \UNTIL convergence;
      \RETURN $\D$ (updated dictionary).
   \end{algorithmic}
\end{algorithm}

\paragraph{Block coordinate descent.}
We have presented in the previous paragraph the alternate minimization method
where the dictionary is updated via block coordinate descent. We have also
introduced in Section~\ref{sec:optiml1} a coordinate descent algorithm for
dealing with~$\ell_1$-penalized problems. It is then natural to combine the two
approaches into a block coordinate descent dictionary learning method
for~(\ref{eq:dictpenalty}) with~$\psi=\ell_1$. We present it in
Algorithm~\ref{alg:dictbcd}, where~$S_\lambda$ is the soft-thresholding
operator applied element-wise to a vector.  The main asset of the
block-coordinate descent approach is its simplicity, and ease of
implementation. In practice, it can also be very effective.

\begin{algorithm}[hbtp]
   \caption{Block coordinate descent for~(\ref{eq:dictpenalty}) with $\psi=\ell_1$.}\label{alg:dictbcd}
   \begin{algorithmic}[1]
      \REQUIRE Signals $\X=[\x_1,\ldots,\x_n]$ in~$\Real^{m \times n}$, initial
      dictionary $\D_0$ in~$\CC$, initial sparse coefficients~$\A_0$ in~$\Real^{p \times n}$ (possibly equal to zero), regularization parameter~$\lambda$,
      number of iterations~$T$.
      \STATE Initialize $\D \leftarrow \D_0$; $\A \leftarrow \A_0$;
      \FOR{$t=1,\ldots,T$}
      \STATE update the sparse codes, one row of~$\A$ at a time; perform at least one pass (possibly several) of the following iterations:
      \FOR{$j=1,\ldots,p$}
      \STATE 
         \begin{equation}
            \alphab^j \leftarrow S_\lambda\left(\alphab^j + \frac{1}{\|\d_{j}\|_2^2}\d_{j}^\top(\X-\D\A)\right); \label{eq:updatecddict}
         \end{equation}
      \ENDFOR
      \STATE update the dictionary~$\D$: perform one pass or more of the inner loop of Algorithm~\ref{alg:updateD}.
      \ENDFOR
      \RETURN the dictionary $\D$ in~$\CC$.
   \end{algorithmic}
\end{algorithm}

\paragraph{K-SVD.}
For $\ell_0$-regularized dictionary learning problems, one of the most 
popular approach is the K-SVD algorithm of \citet{aharon2006}. The algorithm is related to the
alternate minimization strategy MOD~\citep{engan1999}, but the dictionary
update step updates the non-zero coefficients of the matrix~$\A$ at the
same time. The full approach is detailed in Algorithm~\ref{alg:dictksvd}.  The
sparse encoding approach is the same as in MOD, but the dictionary elements are
updated in a block coordinate fashion, one at a time.

More precisely, when updating the column~$\d_j$, the algorithm finds the subset
of signals~$\Omega$ that use~$\d_j$ in their current sparse decomposition.
Then, the non-zero coefficients of the~$j$-th row of~$\A$, denoted
by~$\alphab_\Omega^j$, are updated at the same time as~$\d_j$
in~(\ref{eq:updatedksvd}). Such an update is in fact equivalent to performing a
rank-one singular value decomposition, which has inspired the name of the
algorithm.  A typical technique for solving~(\ref{eq:updatedksvd}) is the power
method~\citep[see][]{golub2012}, which alternates between the optimization
of~$\d$ with~$\betab$ fixed, and of~$\betab$ with~$\d$ fixed.

\begin{algorithm}[hbtp]
   \caption{K-SVD for~(\ref{eq:dictconstraint}) with $\psi=\ell_0$.}\label{alg:dictksvd}
   \begin{algorithmic}[1]
      \REQUIRE Signals $\X=[\x_1,\ldots,\x_n]$ in~$\Real^{m \times n}$, initial
      dictionary $\D_0$ in~$\CC$, regularization parameter~$\mu$,
      number of iterations~$T$.
      \STATE Initialize $\D \leftarrow \D_0$; 
      \FOR{$t=1,\ldots,T$}
      \STATE compute the sparse codes, \eg, with OMP:
      \FOR{$i=1,\ldots,n$}
      \STATE
      \begin{equation*}
         \alphab_i \approx \argmin_{\alphab \in \Real^p} \frac{1}{2}\|\x_i-\D\alphab\|_2^2 \st  \|\alphab\|_0 \leq \mu; 
      \end{equation*}
      \ENDFOR
      \STATE update the dictionary~$\D$:
      \FOR{$j=1,\ldots,p$}
         \STATE define the set 
           $ \Omega \defin \{ i \in \{1,\ldots,n\} :  \alphab_i[j] \neq 0  \}$;
         \STATE update~$\d_j$ and the non-zero coefficients of~$\alphab^j$:
         \begin{equation}
            (\d_j,\alphab^j_\Omega) \in \argmin_{\substack{\d \in \Real^m, \|\d\|_2 \leq 1 \\ \betab \in \Real^{|\Omega|}}} \left\|\X_\Omega -\sum_{\substack{l \neq j}}\d_l \alphab_\Omega^l - \d\betab^\top \right\|_{\text{F}}^2; \label{eq:updatedksvd}
         \end{equation}
      \ENDFOR
      \ENDFOR
      \RETURN the dictionary $\D$ in~$\CC$.
   \end{algorithmic}
\end{algorithm}

\paragraph{Online dictionary learning.}
Finally, we describe the online dictionary learning method
of~\citet{mairal2010}, which we have used in the experiments of this monograph.
The approach is based on stochastic approximations, where the goal is to minimize the
expectation
\begin{equation}
   \min_{\D \in \CC} \EE_\x[L(\x,\D)], \label{eq:expect2}
\end{equation}
where~$L$ is defined in~(\ref{eq:functionL}), and the expectation is taken with
respect to the unknown probability distribution of the data~$\x$. With a
discrete probability distribution with a finite sample,~(\ref{eq:expect2})
simply amounts to~(\ref{eq:expectn}).

In a nutshell, the algorithm works by using a stochastic version of the
majorization-minimization principle~\citep{mairal2013stochastic}.  It sequentially builds a quadratic
surrogate of the expected cost~(\ref{eq:expect2}), which is minimized at every
iteration. It can be shown that the quality of approximation of the surrogate  
near the current estimate improves during the algorithm iterates, and that the 
approach provides asymptotically a stationary point of~(\ref{eq:expect2}).  The
basic procedure is summarized in Algorithm \ref{alg:odl} and some variants that
significantly improves its practical efficiency are presented
in~\citep{mairal2010}.

Assuming that the training set is composed of i.i.d.~samples
of a data distribution, the inner loop draws one element $\x_t$
at a time, as in the stochastic gradient descent algorithm, and alternates
between classical sparse coding steps for computing the decomposition vector~$\alphab_t$
of~$\x_t$ over the current dictionary $\D_{t-1}$, and dictionary update steps.
The new dictionary $\D_t$ is computed by minimizing over $\CC$ the
quadratic surrogate~(\ref{eq:updateDodl}) where the vectors $\alphab_i$ for $i
< t$ have been computed during the previous steps of the algorithm. 
The motivation of the approach
is based on two ideas:
\begin{itemize}
   \item the quadratic surrogate aggregates the past information with a few
      sufficient statistics, namely the matrices $\B$ and~$\C$. One key
      component of the convergence analysis shows that the difference between
      the quadratic surrogate and the objective function at the current
      estimate converges to zero almost surely.
   \item with warm restart, the dictionary update is very efficient. In
      practice, one pass over the dictionary elements seems to perform
      empirically well.
\end{itemize}
Variants to improve the efficiency of the online dictionary learning algorithm
include: (i) the use of mini-batches instead of drawing a single point at every
iteration, (ii) rescaling the past data by updating the matrices~$\B \leftarrow
\gamma_t \B + \x_t\alphab_t^\top$, with a sequence of parameters~$\gamma_t$
that converges to one.  The implementation of the SPAMS software contains such
variants, which have proven to be important in practice~\citep[see][for
additional discussions]{mairal2010}.
 
\begin{algorithm}[hbtp]
   \caption{Online dictionary learning for~(\ref{eq:dictpenalty} with~$\psi=\ell_1$.}
   \label{alg:odl}
   \begin{algorithmic}[1]
      \REQUIRE i.i.d signals $\x_1,\x_2,\ldots \in \Real^m$, regularization parameter $\lambda$, initial dictionary $\D_0 \in \CC$, number of iterations $T$.
      \STATE Initialization: $\B \in \Real^{m \times p} \leftarrow 0$,
      $\C \in \Real^{p \times p} \leftarrow 0$, $\D \leftarrow \D_0$;
      \FOR {$t= 1,\ldots,T$}
      \STATE draw $\x_t$ and sparsely encode it using homotopy:
      \begin{equation*} 
         \alphab_t  \in  \argmin_{\alphab \in
         \Real^p} \frac{1}{2}\|\x_t-\D\alphab\|_2^2 +
         \lambda\|\alphab\|_1; 
      \end{equation*}
      \STATE $\B \leftarrow \B + \x_t \alphab_t^\top$;
      \STATE $\C \leftarrow \C + \alphab_t \alphab_t^\top$;
      \STATE update $\D$ using Algorithm~\ref{alg:updateD} (without recomputing the matrices~$\B$ and~$\C$) such that
      \begin{eqnarray}
         \!\!\!\!\D & \leftarrow &\argmin_{\D \in \CC} \frac{1}{t}\sum_{i=1}^t
         \Big(\frac{1}{2}\|\x_i-\D\alphab_i\|_2^2 + \lambda\|\alphab_i\|_1\Big),\nonumber \\
         & = & \argmin_{\D \in \CC} \frac{1}{t}\Big( \frac{1}{2}\trace(\D^\top\D\C) - \trace(\D^\top\B)\Big); \label{eq:updateDodl}
      \end{eqnarray}
      \ENDFOR
      \RETURN learned dictionary $\D$ in~$\CC$.
   \end{algorithmic}
\end{algorithm}

Note that along the same lines, a similar algorithm called ``recursive least square
dictionary learning'' was independently developed by \citet{skretting2010}
focusing on the~$\ell_0$-penalty.

\paragraph{Other extensions.}
Because the literature on dictionary learning algorithms is vast, we have
presented a few ones that address the formulations~(\ref{eq:dictpenalty})
or~(\ref{eq:dictconstraint}). Extending these approaches to other settings,
such as other loss functions than the square loss, or other regularization
functions, is easy in general by replacing the sparse encoding and dictionary
update steps by ad hoc optimization techniques.

   \section{Other optimization techniques}
   For simplicity, we have focused on sparse estimation problems involving
least-square cost functions, and have left aside structured sparsity penalties
such as the group Lasso. Some of the methods we have described can be easily
extended to these settings, in particular the proximal gradient and coordinate
descent algorithms are relatively flexible. We leave these extensions to more
exhaustive reviews~\citep[see][for instance]{bach2012}.

Other classes of algorithms are also popular for solving convex sparse
estimation problems. For example, proximal splitting
techniques~\citep{combettes2010proximal,chambolle2011,condat2013,raguet2013}, such as the alternating
direction method of multipliers~\citep[see][]{boyd2011}, are often used in signal
processing. These methods are very flexible, simple to implement, and can be applied to
a large range of problems. However, they require tuning (at least) one parameter
to obtain fast convergence in practice, which, to the best of our knowledge,
cannot be done automatically.  In the context of machine learning, stochastic
optimization techniques have also been
developed~\citep[\eg][]{Duchi2010,xiao2010dual}. These methods are adapted to
large-scale problems; even though, they may seem simple at first sight, they
require a careful choice of a step size parameter and thus remain a bit
difficult to use.

Finally, we have also omitted quasi-Newton methods such as variants of
L-BFGS~\citep{schmidt2012}, which can exploit the curvature of the objective
function to achieve fast convergence.

\chapter{Conclusions}\label{chapter:ccl}
In this monograph, we have reviewed a large number of applications of
dictionary learning in image processing and computer vision and presented basic
sparse estimation tools. We started with a historical tour of sparse estimation
in signal processing and statistics, before moving to more recent concepts such
as sparse recovery and dictionary learning. Subsequently, we have shown that
dictionary learning is related to matrix factorization techniques, and
that it is particularly effective for modeling natural image patches. As a
consequence, it has been used for tackling several image processing problems
and is a key component of many state-of-the-art methods in visual recognition.
We have concluded the monograph with a presentation of optimization techniques
that should make dictionary learning easy to use for researchers that are not
experts of the field.

With efficient software available, it is remarkable to see that the
interest for dictionary learning and sparse estimation is still acute today,
and seems to increase in other scientific communities than image processing or
computer vision. In particular, new applications of sparse matrix factorization
are found in neuroscience~\citep{varoquaux2011}, bioinformatics for processing
gene expression data~\citep[see][Section 5.4]{barillot2012}, audio
processing~\citep{baby2014}, and in astrophysics~\citep{vinci2014}.  We have
also put the emphasis on natural images, and we have omitted the application of
dictionary learning to other domains, such as hyperspectral data, where
significant success have been obtained~\citep{charles2011,xing2012}, or to
disparity maps for stereo imaging~\citep{tosic2011}.

Exploring the possible use of dictionary learning in these fields would be of
great interest and high importance, but unfortunately this goal falls beyond
the scope of this monograph.

\begin{acknowledgements}
\addcontentsline{toc}{chapter}{Acknowledgments} 
This monograph was partially supported by the European Research Council (SIERRA
and VideoWorld projects), the project Gargantua funded by the program
Mastodons-CNRS, the Microsoft Research-Inria joint centre, and a French grant
from the Agence Nationale de la Recherche (MACARON project, ANR-14-CE23-0003-01).

The authors would like to thank Ori Bryt and Michael Elad for providing the
output of their face compression algorithm (see Figure~\ref{fig:faces}).
Julien Mairal would also like to thank Zaid Harchaoui, Florent Perronnin and
Adrien Gaidon for useful discussions leading to improvements of this monograph.

\end{acknowledgements}

\backmatter  

\bibliographystyle{plainnat}
\bibliography{review_sparse}

\end{document}